\documentclass{cspmA}
\usepackage{float}
\usepackage[backref=page]{hyperref} 
\makeatletter
\pdfstringdefDisableCommands{%
  \def\AJW@numberline#1{Appendix #1 }%
  \def\AJW@nonumberline#1{Appendix }%
}
\makeatother
\usepackage{times}
\usepackage{epsfig}
\usepackage{graphicx}
\usepackage{amsmath, amsthm, amssymb}
\usepackage{mathrsfs}
\usepackage[chapter]{algorithm}
\usepackage{algorithmic}
\usepackage{natbib}
\usepackage{nicefrac}
\usepackage{enumerate}
\usepackage{thmtools}
\usepackage{fancyvrb}
\usepackage[nolist]{acronym}
\usepackage{caption}
\usepackage{makeidx}
\usepackage{comment}

\newcommand{\commentAlt}[1]{\ignorespaces}
\newcommand{\commentLongAlt}[1]{\ignorespaces}

\interfootnotelinepenalty=10000

\setcounter{tocdepth}{2}


\renewcommand{\Pr}{\field{P}}
\DeclareMathOperator{\Tr}{Tr}

\newcommand{\bb}{\boldsymbol{b}}
\newcommand{\bc}{\boldsymbol{c}}
\newcommand{\bd}{\boldsymbol{d}}
\newcommand{\be}{\boldsymbol{e}}
\newcommand{\bg}{\boldsymbol{g}}
\newcommand{\bh}{\boldsymbol{h}}
\newcommand{\bj}{\boldsymbol{j}}

\newcommand{\bm}{\boldsymbol{m}}
\newcommand{\bn}{\boldsymbol{n}}
\newcommand{\bp}{\boldsymbol{p}}
\newcommand{\bq}{\boldsymbol{q}}
\newcommand{\bs}{\boldsymbol{s}}
\newcommand{\bx}{\boldsymbol{x}}
\newcommand{\bu}{\boldsymbol{u}}
\newcommand{\by}{\boldsymbol{y}}

\newcommand{\bA}{\boldsymbol{A}}
\newcommand{\bB}{\boldsymbol{B}}
\newcommand{\bC}{\boldsymbol{C}}

\newcommand{\bG}{\boldsymbol{G}}

\newcommand{\bI}{\boldsymbol{I}}

\newcommand{\bM}{\boldsymbol{M}}

\newcommand{\bQ}{\boldsymbol{Q}}

\newcommand{\bS}{\boldsymbol{S}}

\newcommand{\bX}{\boldsymbol{X}}

\newcommand{\bxi}{\boldsymbol{\xi}}

\newcommand{\bz}{\boldsymbol{z}}
\newcommand{\bZ}{\boldsymbol{Z}}

\newcommand{\bw}{\boldsymbol{w}}

\newcommand{\bU}{\boldsymbol{U}}
\newcommand{\bv}{\boldsymbol{v}}

\newcommand{\bbeta}{\boldsymbol{\beta}}

\newcommand{\blambda}{\boldsymbol{\lambda}}

\newcommand{\bnu}{\boldsymbol{\nu}}
\newcommand{\bpi}{\boldsymbol{\pi}}
\newcommand{\bSigma}{\boldsymbol{\Sigma}}
\newcommand{\btheta}{\boldsymbol{\theta}}
\newcommand{\bTheta}{\boldsymbol{\Theta}}

\newcommand{\argmin}{\mathop{\mathrm{argmin}}}
\newcommand{\argmax}{\mathop{\mathrm{argmax}}}
\newcommand{\conv}{\mathop{\mathrm{conv}}}
\newcommand{\interior}{\mathop{\mathrm{int}}}
\newcommand{\dom}{\mathop{\mathrm{dom}}}
\newcommand{\bdry}{\mathop{\mathrm{bdry}}}

\newcommand{\field}[1]{\mathbb{#1}}

\newcommand{\R}{\field{R}}
\newcommand{\Nat}{\field{N}}
\newcommand{\E}{\field{E}}
\newcommand{\Var}{\mathrm{Var}}

\newcommand{\diag}[1]{\mbox{\rm diag}\!\left\{{#1}\right\}}

\newcommand{\wh}{\widehat}

\newcommand{\sign}{{\rm sign}}


\DeclareMathOperator{\SARegret}{SA-Regret}

\DeclareMathOperator{\DRegret}{D-Regret}
\DeclareMathOperator{\TRegret}{T-Regret}
\DeclareMathOperator{\PRegret}{P-Regret}
\DeclareMathOperator{\Regret}{Regret}

\DeclareMathOperator{\Wealth}{Wealth}

\DeclareMathOperator{\Risk}{Risk}
\DeclareMathOperator{\Prox}{Prox}
\DeclareMathOperator{\Type}{\mathcal{T}}
\DeclareMathOperator{\KL}{KL}
\DeclareMathOperator{\entropy}{H}
\DeclareMathOperator{\KLBern}{KL_\text{Bern}}
\DeclareMathOperator{\dist}{dist}

\DeclareMathOperator{\Gen}{Gen}
\DeclareMathOperator{\barGen}{\overline{Gen}}

\newcommand{\indicator}{\iota}
\newcommand{\indevent}{\mathbf{1}}
\newcommand{\ones}{\mathbf{1}}

\bibliographystyle{plainnat_nourl}


\usepackage{pgfplots}
\pgfplotsset{compat=1.18}
\usepgfplotslibrary{groupplots}
\pgfplotsset{compat=newest}
\usepgfplotslibrary{fillbetween}
\usetikzlibrary{arrows.meta,patterns, patterns.meta}
\usetikzlibrary{calc}

\newtheorem{theorem}{Theorem}[chapter]
\newtheorem{lemma}[theorem]{Lemma}
\newtheorem{corollary}[theorem]{Corollary}
\newtheorem{remark}[theorem]{Remark}
\newtheorem{proposition}[theorem]{Proposition}
\newtheorem{example}[theorem]{Example}
\newtheorem{definition}[theorem]{Definition}
\newtheorem{exer}{Problem}[chapter]

\title{Online Learning: A Modern Introduction Using Convex Optimization}
\author{Francesco Orabona\\
KAUST\\
{\tt\small francesco@orabona.com}
}

\makeindex

\index{AA|see{Aggregating Algorithm}}
\index{WAA|see{Weighted Average algorithm}}
\index{KT|see{Krichevsky--Trofimov}}
\index{KL|see{Kullback--Leibler divergence}}
\index{LEA|see{learning with expert advice}}
\index{ONS|see{Online Newton Step algorithm}}
\index{OCO|see{online convex optimization}}
\index{OLO|see{online linear optimization}}
\index{FTRL|see{Follow-the-Regularized-Leader algorithm}}
\index{OMD|see{Online Mirror Descent algorithm}}
\index{OGD|see{Online Gradient Descent algorithm}}
\index{OSD|see{Online Subgradient Descent algorithm}}
\index{learning rate|seealso{stepsize}}
\index{stepsize|seealso{learning rate}}
\index{comparator|seealso{competitor}}
\index{competitor|seealso{comparator}}
\index{valid set|seealso{feasible set}}
\index{feasible set|seealso{valid set}}
\index{duality map|seealso{mirror map}}
\index{mirror map|seealso{duality map}}

\begin{document}
\maketitle

\begin{acronym}[FTRL] 
\acro{FTRL}[FTRL]{Follow-the-Regularized-Leader}
\acro{FTL}[FTL]{Follow-the-Leader}
\acro{OGD}[OGD]{Online Gradient Descent}
\acro{OSD}[OSD]{Online Subgradient Descent}
\acro{OMD}[OMD]{Online Mirror Descent}
\acro{AA}[AA]{Aggregating Algorithm}
\acro{WAA}[WAA]{Weighted Average Algorithm}
\acro{ONS}[ONS]{Online Newton Step}
\acro{OCO}[OCO]{Online Convex Optimization}
\acro{OLO}[OLO]{Online Linear Optimization}
\acro{LEA}[LEA]{Learning with Expert Advice}
\acro{EG}[EG]{Exponentiated Gradient}
\acro{KT}[KT]{Krichevsky--Trofimov}
\acro{Exp3}[Exp3]{Exponential-weight algorithm for Exploration and Exploitation}
\acro{ETC}[ETC]{Explore-then-Commit}
\acro{ADER}[ADER]{Adaptive learning for Dynamic EnviRonment}
\acro{UCB}[UCB]{Upper Confidence Bound}
\acro{RM}[RM]{Regret Matching}
\acro{RM+}[RM+]{Regret Matching+}
\acro{SVM}[SVM]{Support Vector Machine}
\acro{KL}[KL]{Kullback--Leibler}
\acro{ERM}[ERM]{Empirical Risk Minimization}
\acro{KKT}[KKT]{Karush--Kuhn--Tucker}
\acro{iid}[i.i.d.]{independent and identically distributed}
\acro{INF}[INF]{Implicitly Normalized Forecaster}
\acro{VC}[VC]{Vapnik--Chervonenkis}
\end{acronym}

\frontmatter

\thispagestyle{empty}

\begin{center}
\itshape
To Irene and Dante
\end{center}

\vspace*{\fill}

\tableofcontents
\setcounter{tocdepth}{2}


\chapter*{Foreword}
This book is about online learning, a mathematical framework that guides the design of algorithms that learn by interacting with their environment. This approach to machine learning differs from the more traditional statistical view, which distinguishes between a training phase, in which a learning algorithm builds a prediction model from a training set, and a testing phase, in which the trained model is evaluated on a separate test set. In online learning, these two phases are interleaved: data are received sequentially, and the algorithm must first make a prediction on each incoming data point and only then update its model using the resulting feedback. The two theories are not in competition: they look at the same problem from different angles, and researchers choose the theory that offers the tools best suited to solving their problem. Indeed, both theories tackle the same fundamental machine learning question: what is the most efficient algorithmic way of using past observations to make future predictions? However, they differ with respect to the assumptions they make about the source of the learning data. In statistical learning, data are obtained through independent draws from a fixed and unknown distribution. In online learning, data are generated sequentially by a source that need not be stochastic and may even be adversarial. Still, one can obtain standard generalization guarantees by running an online learning algorithm with suitable regret guarantees on a data sequence obtained through independent draws (a so-called online-to-batch conversion), a technique that connects the online world to the statistical one. More generally, we can run online algorithms to solve stochastic convex optimization problems efficiently, which is just one of the many powerful applications of the online framework that the reader will find in the book.

The connection to convex optimization, one of the book's main themes, operates at a fundamental level. Indeed, online learning is often described in terms of online convex optimization, a setting in which the goal is to keep the algorithm's cumulative loss close to that of a suitable comparator, while facing an arbitrary sequence of convex loss functions. Building on this connection, one can transfer many powerful ideas and techniques from convex optimization—such as gradient descent and its generalizations—to the online world. Convex analysis thus becomes a unifying language for deriving the theory behind online algorithms, and the main technical tool used throughout the book. This brings us to one of the most important notions of online learning: regret. Regret addresses the following fundamental issue: what is the right way to measure the performance of an online learning algorithm when the data come from an arbitrary source? If we assume that data are the result of independent draws from a fixed distribution, the risk of a predictor is traditionally measured in terms of its expected loss. We then choose a reference class of predictors (e.g., linear models) and seek to bound the estimation error, expressed as the difference between the predictor's risk and the risk of any comparator in the reference class. In the online world, however, there are two important differences: first, we cannot define statistical risk, because there is no statistical source generating the data; second, there is no single predictor, but rather an online algorithm outputting a sequence of predictions. To measure performance, we can then average all the losses accumulated by the algorithm when making predictions at each data point in the sequence (in contrast to statistical risk, I often call this quantity ``sequential risk''). This empirical average reflects how well the algorithm adapts to any regularity present in the data sequence. Note that by doing so, we are not measuring the performance of a static prediction model generated after training a learning algorithm on a dataset, but rather the performance of a learning algorithm run on an individual sequence of data. Using this sequential risk, regret can be viewed as the online counterpart of estimation error. In normalized form, it is the difference between the algorithm's average loss and the average loss of a comparator. Multiplying by the number of rounds gives the more standard cumulative regret. Unlike in statistical learning, where risk is a property of predictors, online learning gives us more freedom in choosing the comparator: as the book explains, we can compare the algorithm not only with a fixed predictor, but also with dynamic comparator sequences that may use different models at different times. This makes online learning a natural framework for designing algorithms that remain robust under highly nonstationary data, especially when a fixed distributional model is inadequate.

In the online convex optimization setting, regret analysis copes with arbitrary loss sequences by requiring each loss function to be convex. This allows us to upper bound regret by its linearized version, thus reducing online convex optimization (OCO) to online linear optimization (OLO). One of the key algorithmic principles for controlling linearized regret is stability through regularization. After making a prediction on a new data point, the algorithm typically updates its model by balancing two goals: reducing the current loss and staying close to the previous model. The regularizer therefore shapes the trajectory followed by the algorithm as longer and longer prefixes of the data sequence are observed. Unfortunately, standard online algorithms face a difficult dilemma: as we explained, regret is the difference between the sequential risk of the algorithm and the sequential risk of any comparator in the reference class. Ideally, we would like regret bounds that are optimal simultaneously for all comparators. However, to achieve optimal regret against any single model, the algorithm would have to choose its regularizer (or the learning rate associated with it) based on information about the comparator (e.g., the norm of the vector of coefficients for a linear model). This seems to prevent simultaneous optimality, as the algorithm must choose a specific learning rate, which might be suboptimal for some of the comparators. One of the most striking contributions of the book is a theory showing that parameter-free online algorithms, achieving essentially optimal bounds simultaneously over all comparators, can be derived from algorithms designed to play a simple coin-betting game. This theory, developed by Francesco Orabona and his co-authors over the last decade, uses a connection between prediction and betting to turn optimal gambling algorithms into optimal online learners, so that a lower bound on the wealth of the gambler becomes an upper bound on the regret of the learner. Parameter-free algorithms have now become one of the most interesting and active areas of online learning research.

Gambling is just one of the many connections explored in the book. One chapter relates online learning to portfolio management and sequential investment; another explores its interface with saddle point optimization and game theory. These interconnections with other fields reflect the way the ideas and techniques behind online learning developed over several decades in different disciplines, such as information theory and statistics, game theory, and computer science. Orabona does an excellent job of tracing the different sources, indicating where a definition first appeared and which authors should be credited for a given result or technique. The rigor and accuracy of the historical notes in the book are exemplary and should be taken as a model for surveys and monographs.

As noted at the beginning, online learning is an abstraction for an agent that learns by interacting with its environment. Regret makes this process quantitative: vanishing regret means that the algorithm's sequential risk approaches that of the comparator. A crucial aspect governing this process is the interface through which the algorithm obtains feedback from the environment. Consider, for example, a simple sequential classification problem in which, upon receiving a data point, the learner must predict a label from a fixed set of, say, ten labels. After each prediction, the environment could reveal the correct label (full feedback), reveal only whether the prediction was correct (bandit feedback), or the feedback itself may be selective, as in a spam filter that receives user feedback only on emails that reach the inbox. This adds a new dimension to the design of online learning algorithms; namely, identifying the best possible regret rate for a given feedback structure. Although, among partial-information models, the book focuses on the multi-armed bandit setting, the mathematical foundations of online learning are necessary background for studying online learning with partial feedback.

Speaking of mathematics, readers of this book will not be disappointed: every mathematical detail is laid out precisely, including all the preliminary notions needed to prove each result. Orabona strives to find the simplest possible proof, but he does not make it any simpler than that. The book is a treasure trove of inequalities, technical devices, and intermediate results that form the arsenal of any researcher interested in these topics.

I wrote a book on these topics two decades ago and have been delighted by the surprising conceptual and technical advances made since then. This book, which provides an accurate, rigorous, in-depth account of these advances, is the ideal starting point for all those who want to explore this garden of delights known as online learning.
\bigskip

\noindent Nicol\`{o} Cesa-Bianchi\\
\noindent Forte dei Marmi\\
\noindent June 1, 2026

\chapter*{Preface}

\begin{chapterquote}
\emph{Everything should be as simple as it can be, but not simpler\\
(attributed to Albert Einstein)}
\end{chapterquote}


In this book, I introduce the concepts of online learning through a modern view based on convex optimization.
Here, online learning refers to the framework of regret minimization under worst-case assumptions. I attempted to unify all the literature as instantiations of Online Mirror Descent and Follow-the-Regularized-Leader (and their variants). I paid particular attention to the issue of tuning the parameters of the algorithms, through \emph{adaptive} and \emph{parameter-free} online learning algorithms. The bandit setting is also briefly discussed, touching on the problem of adversarial and stochastic multi-armed bandits.
Building on fundamental algorithms and concepts, I also cover advanced topics, including black-box reductions, saddle-point optimization, sequential investment, and non-stationary forms of regret analysis. Finally, I conclude with a selection of applications of online learning to domains far from it, such as generalization theory and concentration inequalities.

I attempted to maintain an informal, yet mathematically rigorous, tone throughout the book. Moreover, all the included proofs have been carefully chosen to be as simple and as short as possible. This also means that sometimes I have added one or two additional assumptions, just to simplify the proofs.

\noindent\textbf{Why do we need another book on online learning?} My belief is that offline and online learning are two sides of the same coin. So, I fully used the formalism of convex analysis, typical of the non-smooth offline optimization literature, to show the similarities and build bridges. Moreover, I tried to cover very recent and very advanced results, which are often not part of online learning books. For example, parameter-free methods are (by definition!) the optimal ones for online learning, yet they are not widely understood, maybe due to the lack of a reference book on the topic. I also spent a considerable amount of time to gather all the history of these methods in my ``History Bits'' sections. I hope the serious students will take advantage of these sections to further deepen their study of this topic. A note on citations: It is customary in the computer science literature to only cite the journal version of a result that first appeared in a conference. The rationale is that the conference version is only a preliminary version, while the journal one is often more complete and sometimes more correct. In this book, I will not use this custom. Instead, in the presence of the conference and journal versions of the same paper, I will cite both, to clearly delineate the history of the ideas, their first inventors, and the unavoidable rediscoveries.

\noindent\textbf{Structure of the book.}
No prior knowledge of convex analysis is required. So, given the amount of new tools and definitions I had to cover, I introduced the necessary math a bit at a time. Hence, the book is meant to be read in sequential order. In particular, the first 7 chapters cover the basics of a course on online learning, with a possible exception of Chapter 5 on lower bounds that can be omitted for an easier set of lectures. Chapters 8 (Online Linear Classification) and 9 (Multi-Armed Bandit) can be included or not, depending on which topics one wants to cover.
Chapters 10 to 14 cover more advanced topics, such as universal portfolio, black-box reductions, and parameter-free algorithms, appropriate for a graduate-level class. Chapter 5 acts as a bridge between the foundational methods and the parameter-free ones, because it shows the \emph{necessity} for better algorithms.
Finally, Chapters 15 and 16 cover applications of online learning to other areas, such as saddle-point optimization, boosting, non-convex non-smooth optimization, and generalization theory, that might offer leads to new research directions.
While all the chapters have a minimal amount of interconnection, the major dependencies are in the figure.

\begin{figure}[h]
\centering
\resizebox{\textwidth}{!}{%
\begin{tikzpicture}[
  base/.style={circle, draw, minimum size=5mm, inner sep=0pt, font=\small\bfseries},
  normal/.style={base, fill=white},
  applications/.style={
    base,
    pattern={Dots[distance=2pt, radius=0.35pt]}
  },
  advanced/.style={base, fill=gray!25},
  halfadvanced/.style={
    base,
    path picture={
      \fill[white]
        (path picture bounding box.south west)
        rectangle
        (path picture bounding box.north);
      \fill[gray!25]
        (path picture bounding box.south)
        rectangle
        (path picture bounding box.north east);
    }
  },
  edge/.style={-{Latex[length=1.2mm, width=1.0mm]}, thick},
  chapter/.style={font=\scriptsize, anchor=west}
]

\node[chapter] at (-4.8,  2.2) {1. What is Online Learning?};
\node[chapter] at (-4.8,  1.85) {2. Online Subgradient Descent};
\node[chapter] at (-4.8,  1.50) {3. Online-to-Batch Conversions};
\node[chapter] at (-4.8,  1.15) {4. Beyond $\sqrt{T}$ Regret};
\node[chapter] at (-4.8, 0.8){5. Lower Bounds for Online Linear Optimization};
\node[chapter] at (-4.8, 0.45){6. Online Mirror Descent};
\node[chapter] at (-4.8,  0.10) {7. Follow-the-Regularized-Leader};
\node[chapter] at (-4.8, -0.25) {8. Online Linear Classification};
\node[chapter] at (-4.8, -0.6) {9. Multi-Armed Bandit};
\node[chapter] at (-4.8, -0.95) {10. Universal Portfolio Algorithms};
\node[chapter] at (-4.8, -1.3) {11. Weighted Average Algorithm and Aggregating Algorithm};
\node[chapter] at (-4.8, -1.65){12. Black-Box Reductions};
\node[chapter] at (-4.8, -2){13. Parameter-free Online Linear Optimization};
\node[chapter] at (-4.8, -2.35) {14. Dynamic, Strongly Adaptive, and Tracking Regret};
\node[chapter] at (-4.8, -2.7) {15. Saddle-Point Optimization and Online Algorithms};
\node[chapter] at (-4.8, -3.05) {16. From Online Learning to X};

\node[normal] (1) at (2.4,  2.2) {1};
\node[normal] (2) at (2.4,  1.2) {2};
\node[normal] (3) at (2.4,  0.2) {3};
\node[normal] (4) at (2.4, -0.8) {4};
\node[normal] (6) at (2.4, -1.8) {6};
\node[normal] (7) at (2.4, -2.8) {7};

\node[normal] (8) at (4, 0.2) {8};
\node[normal] (9) at (4, -0.8) {9};

\node[halfadvanced] (5)  at (4.8,  2.2) {5};
\node[advanced]     (11) at (5.6, -0.8) {11};
\node[advanced]     (10) at (7.2, -1.8) {10};
\node[advanced]     (12) at (5.6, 0.2) {12};
\node[advanced]     (13) at (7.2,  0.2) {13};
\node[advanced]     (14) at (7.2, -0.8) {14};

\node[applications] (15) at (8.8,  -2.8) {15};
\node[applications] (16) at (8.8,  0.2) {16};

\draw[edge] (1) -- (2);
\draw[edge] (2) -- (3);
\draw[edge] (3) -- (4);

\draw[edge] (4) -- (5);
\draw[edge] (5) -- (13);

\draw[edge] (4) -- (6);
\draw[edge] (6) -- (7);

\draw[edge] (7) -- (8);
\draw[edge] (7) -- (9);
\draw[edge] (7) -- (11);
\draw[edge] (7) -- (10);
\draw[edge] (7) -- (12);
\draw[edge] (7) -- (15);
\draw[edge] (10) -- (16);
\draw[edge] (12) -- (13);

\draw[edge] (13) -- (14);
\draw[edge] (13) -- (16);

\draw[dashed] (4.8,-3.3) -- (4.8,2.7);
\draw[dashed] (8.0,-3.3) -- (8.0,2.7);

\node[font=\small] at (3.2,-3.75) {Fundamentals};
\node[font=\small] at (6.4,-3.75) {Advanced theory};
\node[font=\small] at (9.1,-3.75) {Applications};

\end{tikzpicture}
}
\captionsetup{labelformat=empty}
\caption{The dependence structure of the chapters.}
\commentAlt{The graph structure of the chapters: chapters 1-9 are fundamentals, chapters 10-14 are advanced theory, chapter 15-16 are applications. Chapters 1-7 should be read sequentially. Chapters 10-12 and 15 depends on Chapter 7. Chapter 13 depends on 12. Chapter 14 depends on 13. Chapter 16 depends on 13.}
\end{figure}

\noindent\textbf{Acknowledgments.}
I thank all the people who checked the proofs and reasoning in these notes. In particular, the students in my first class who mercilessly pointed out my mistakes, Nicol\`{o} Campolongo, who found all the typos in my formulas, and Jake Abernethy, for the brainstorming on presentation strategies. Other people that helped me with comments, feedback, references, and/or hunting typos (in alphabetical order): Zeyad Aljaali, Andreas Argyriou, Param Kishor Budhraja, Nicol\`{o} Cesa-Bianchi, Sahil Chaudhary, Keyi Chen, Mingyu Chen, Peiqing Chen, Ashok Cutkosky, Ryan D'Orazio, Gerardo Dur\'{a}n-Mart\'{i}n, Alon Gonen, Peijia Guo, Dirk van der Hoeven, Daniel Hsu, Gergely Imreh, Andrew Jacobsen, Emmeran Johnson, Kwang-Sung Jun, Micha\l{} Kempka, Ji-Ha Kim, Andrew Christian Kroer, Joon Kwon, Pierre Laforgue, Wei-Cheng Lee, Chuang-Chieh Lin, Haipeng Luo, Shashank Manjunath, David Mart\'{i}nez-Rubio, Valentina Masarotto, Aryan Mokhtari, Antoine Moulin, Gergely Neu, Ankit Pensia, Viacheslav D. Potapov, Yousef Radwan, Abed Razawy, Daniel Roy, Ludovic Schwartz, Alex Shtoff, Antonio Silveti-Falls, Yanze Song, Luca Viano, Guanghui Wang, Yulian Wu, Jiujia Zhang, Peng Zhao, and Xingyu Zhou.




\mainmatter

\chapter{What is Online Learning?}
\label{ch:first}

Consider the following repeated guessing game\index{guessing game|(textbf}:

In each round $t=1,\dots,T$
\begin{itemize}
\item An adversary chooses a real number $y_t \in [0,1]$ and keeps it secret;
\item You try to guess the real number, choosing $x_t \in [0,1]$;
\item The adversary's number is revealed, and you pay the squared difference $(x_t-y_t)^2$.
\end{itemize}
\index{guessing game|)textbf}
Basically, we want to guess a sequence of numbers as precisely as possible.
To make it a game, we must now define a ``winning condition''. Let's see what makes sense to consider as a winning condition.

Let's start by making the game easier for the player. Let's assume that the adversary is drawing \ac{iid} numbers from some fixed distribution over $[0,1]$. However, he is still free to decide which distribution to use at the beginning of the game. If we knew the distribution, we could just predict the mean of the distribution at each round, and in expectation we would pay $\sigma^2 T$, where $\sigma^2$ is the variance of the distribution. We cannot do better than that! However, given that we do not know the distribution, it is natural to benchmark our strategy with respect to the optimal one. That is, it is natural to measure the quantity
\begin{equation}
\label{eq:stoch_regret}
\E_{Y_1, \dots, Y_T}\left[\sum_{t=1}^T (x_t - Y_t)^2\right] - \sigma^2 T,
\end{equation}
or, equivalently, considering the average
\begin{equation}
\label{eq:av_stoch_regret}
\frac{1}{T}\E_{Y_1, \dots Y_T}\left[\sum_{t=1}^T (x_t - Y_t)^2\right] - \sigma^2~.
\end{equation}
Clearly, these quantities are nonnegative, and they seem to be a good measure, because they are somehow normalized with respect to the ``difficulty'' of the numbers generated by the adversary, through the variance of the distribution. This is not the only possible measure of our ``success'', but it is certainly a reasonable one. It would make sense to consider a strategy ``successful'' if the difference in \eqref{eq:stoch_regret} grows sublinearly over time and, equivalently, if the difference in \eqref{eq:av_stoch_regret} goes to zero as the number of rounds $T$ goes to infinity. That is, on average over the rounds, we would like our algorithm to be able to approach the optimal performance.

\noindent\textbf{Minimizing Regret.}
Having arrived at what seems to be a good measure of success of the algorithm, let's now rewrite \eqref{eq:stoch_regret} in an equivalent way:
\[
\E\left[\sum_{t=1}^T (x_t - Y_t)^2\right] - \min_{ x \in [0,1]} \ \E\left[\sum_{t=1}^T (x-Y_t)^2\right]~.
\]
Now, the last step: let's remove the assumption on how the data is generated, consider any arbitrary sequence of $y_t$, and let's keep using the same measure of success. If the algorithm is deterministic, we can remove the expectation because there is no stochasticity anymore. So, we get that we will win the game if
\[
\Regret_T:=\sum_{t=1}^T (x_t - y_t)^2 - \min_{x \in [0,1]} \ \sum_{t=1}^T (x - y_t)^2
\]
grows sublinearly with $T$. The quantity above is called the \textbf{regret}\index{regret}, because it measures how much the algorithm regrets not having played on every round the best single choice in hindsight. We will denote it by $\Regret_T$.

Our reasoning should provide sufficient justification for this metric; however, throughout this book, we will see that it also makes sense from both a convex optimization and a machine learning perspective.

Note that in the stochastic case, the optimal strategy is given by a single best prediction, so it was natural to compare against it. Instead, with arbitrary sequences, it is not clear anymore that this is a good competitor. For example, we might consider a sequence of competitors instead of a single one. Indeed, it can be done, but the single competitor is still interesting in a variety of settings and simpler to explain. So, for most of this book, we will use a single competitor, while we will consider a sequence of competitors in Chapter~\ref{ch:dynamic}.

Let's now generalize the online guessing game, considering that the algorithm outputs a vector $\bx_t$ in the \textbf{valid set}\footnote{In some cases, we can make the game easier for the algorithm by letting it choose the prediction from a set $\mathcal{W}\supset \mathcal{V}$.}\index{valid set|textbf} $\mathcal{V} \subseteq \R^d$ (also called \textbf{feasible set}\index{feasible set|textbf}), and it pays a \textbf{loss} $\ell_t: \mathcal{V} \to \R$ that measures how good the prediction of the algorithm was in each round. Also, let's consider a \textbf{comparator}\index{comparator|textbf} (also called \textbf{competitor}\index{competitor|textbf}), that is, an arbitrary predictor $\bu$ in $\mathcal{V} \subseteq \R^d$ and let's parameterize the regret with respect to it: $\Regret_T(\bu)$.
We can now define online learning: designing and analyzing algorithms to minimize the regret over a sequence of loss functions with respect to an arbitrary competitor $\bu \in \mathcal{V} \subseteq \R^d$:
\[
\Regret_T(\bu):=\sum_{t=1}^T \ell_t(\bx_t) - \sum_{t=1}^T \ell_t(\bu)~.
\]
It is worth stressing that an online algorithm does not know $\bu$ or the value of the corresponding regret in order to guarantee an upper bound on the regret.
We will say that the algorithm is \textbf{no-regret}\index{no-regret algorithm|textbf} when, for every fixed $\bu\in\mathcal{V}$, its regret is at most sublinear, that is, $\lim_{T\to \infty}\ \Regret_T(\bu)/T \leq 0$.

\begin{remark}
Strictly speaking, the regret is also a function of the losses $\ell_1, \dots, \ell_T$. However, we will suppress this dependence for simplicity of notation.
\end{remark}

This framework is quite powerful, and it allows us to reformulate a bunch of different problems in machine learning and optimization as similar games. More generally, with the regret framework, we can analyze situations in which the data are not independent draws from a fixed distribution, yet we would like to guarantee that the algorithm is ``learning'' something. For example, online learning can be used to analyze
\begin{itemize}
\item Predicting clicks on banners on web pages;
\item Routing on a network;
\item Convergence to equilibrium of repeated games.
\end{itemize}
As we will see, it can \emph{also} be used to analyze stochastic optimization algorithms, e.g., Stochastic Gradient Descent\index{Stochastic Gradient Descent algorithm}.

\index{guessing game|(}
Let's now go back to our number-guessing game, and let's try a strategy to win it. Of course, this is one of the simplest examples of online learning, without a real application. Yet, going through it, we will uncover most of the key ingredients in online learning algorithms and their analysis.

\noindent\textbf{A Winning Strategy.}
Can we win the number guessing game? We can, not only in the form we described it, but also assuming that the adversary picks his number \emph{after} observing our choice. Moreover, we do not assume anything about how the adversary is deciding the numbers. In fact, the numbers can be chosen \emph{adversarially}, that is, explicitly trying to make us lose the game. This is why we will call the mechanism generating the numbers the \textbf{adversary}\index{adversary}.

Now, the fact that the numbers are adversarially chosen means that we can immediately rule out any strategy based on any statistical modeling of the data. In fact, it cannot work because the moment we estimate something and act on our estimate, the adversary can immediately change the way the data is generated, ruining us. So, we have to think about something else. Yet, surprisingly enough, many times online learning algorithms will look like classic ones from statistical estimation, even if they work for different reasons.

Now, let's try to design a strategy to make the regret provably sublinear in time, \emph{regardless of how the adversary chooses the numbers}.
The first thing we do is to take a look at the best strategy in hindsight, that is, the argmin of the second term of the regret. It should be immediate to see that
\[
x^\star_T
:= \argmin_{x \in [0,1]} \ \sum_{t=1}^T (x - y_t)^2
= \frac{1}{T} \sum_{t=1}^T y_t~.
\]
Now, given that we do not know the future, for sure, we cannot use $x^\star_T$ as our guess in each round. However, we do know the past, so a reasonable strategy in each round could be to output the best number over the past. Why would such a strategy work? For sure, the reason why it could work is not that we expect the future to be like the past, because it is not true! Instead, we want to leverage the fact that the optimal guess over time cannot change too much between rounds, so we can try to ``track'' it over time.

Hence, on each round $t\geq 2$, our strategy is to guess $x_t = x_{t-1}^\star=\frac{1}{t-1} \sum_{i=1}^{t-1} y_i$, and in the first round we guess any number between 0 and 1. Such a strategy is usually called \textbf{\ac{FTL}}\index{Follow-the-Leader algorithm|textbf}, because you are following what would have been the optimal thing to do on the past rounds (i.e., the leader).

Let's now try to show that this strategy will allow us to win the game. Given that this is a simple example, we will prove its regret guarantee using first principles, while in the next chapters we will introduce and use very general proof methods. First, we will need a small lemma.
\begin{lemma}[Be-the-Leader Lemma]
\label{lemma:be_leader}
\index{Be-the-Leader!lemma|textbf}
Let $\mathcal{V} \subseteq \R^d$ and $\ell_t :\mathcal{V} \to \R$ be an arbitrary sequence of loss functions.
Assume the existence of $\bx^\star_t \in \argmin_{\bx \in \mathcal{V}} \ \sum_{i=1}^t \ell_i(\bx)$, a minimizer in $\mathcal{V}$ of the cumulative loss over the first $t$ rounds. Then, we have
\[
\sum_{t=1}^T \ell_t(\bx^\star_{t})
\leq \sum_{t=1}^T \ell_t(\bx^\star_{T})~.
\]
\end{lemma}
\begin{proof}
We prove it by induction on $T$. The base case is
\[
\ell_1(\bx^\star_1)
\leq \ell_1(\bx^\star_{1}),
\]
which is trivially true.
Now, for $T\geq2$, we assume that $\sum_{t=1}^{T-1} \ell_t(\bx^\star_{t}) \leq \sum_{t=1}^{T-1} \ell_t(\bx^\star_{T-1})$ is true, and we must prove the stated inequality, that is,
\[
\sum_{t=1}^T \ell_t(\bx^\star_{t})
\leq \sum_{t=1}^T \ell_t(\bx^\star_{T})~.
\]
This inequality is equivalent to
\begin{equation}
\label{eq:lemma1}
\sum_{t=1}^{T-1} \ell_t(\bx^\star_{t})
\leq \sum_{t=1}^{T-1} \ell_t(\bx^\star_{T}),
\end{equation}
where we removed the last element of the sums because they are the same.
Now observe that
\[
\sum_{t=1}^{T-1} \ell_t(\bx^\star_{t})
\leq \sum_{t=1}^{T-1} \ell_t(\bx^\star_{T-1}),
\]
by induction hypothesis, and
\[
\sum_{t=1}^{T-1} \ell_t(\bx^\star_{T-1})
\leq \sum_{t=1}^{T-1} \ell_t(\bx^\star_{T})
\]
because $\bx^\star_{T-1}$ is a minimizer of the l.h.s. in $\mathcal{V}$ and $\bx^\star_{T} \in \mathcal{V}$.
Chaining these two inequalities, we have that \eqref{eq:lemma1} is true, and so the lemma is proven.
\end{proof}
Basically, the above lemma quantifies the idea that knowing the future and being adaptive to it is typically better than not being adaptive to it.

With this lemma, we can now prove that the regret will grow sublinearly; in particular, it will be at most \emph{logarithmic} in time. Note that we will not prove that our strategy is minimax optimal, even if it is possible to show that the logarithmic dependence on time is unavoidable for this problem.
\begin{theorem}
Let $y_1, \dots, y_T \in [0,1]$ be an arbitrary sequence of real numbers. Let the algorithm's output be $x_t=x_{t-1}^\star:=\frac{1}{t-1}\sum_{i=1}^{t-1} y_i$ for $t\geq 2$ and $x_1=x_0^\star:=0.5$. Then, we have
\[
\Regret_T
= \sum_{t=1}^T (x_t - y_t)^2 - \min_{x \in [0,1]} \ \sum_{t=1}^T (x - y_t)^2
\leq 4 + 4\ln T~.
\]
\end{theorem}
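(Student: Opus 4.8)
The plan is to use the Be-the-Leader lemma (Lemma~\ref{lemma:be_leader}) to convert the global comparison against the unknown hindsight optimum into a sum of one-step differences, and then to bound each such difference by $O(1/t)$ so that the total collapses to a harmonic sum of size $O(\ln T)$. First I would write $\ell_t(x) = (x-y_t)^2$ and note that $\Regret_T = \sum_{t=1}^T \ell_t(x_t) - \sum_{t=1}^T \ell_t(x^\star_T)$. Applying Lemma~\ref{lemma:be_leader} gives $\sum_{t=1}^T \ell_t(x^\star_t) \le \sum_{t=1}^T \ell_t(x^\star_T)$, so subtracting the left-hand side and using that the algorithm plays $x_t = x^\star_{t-1}$ yields
\[
\Regret_T \;\le\; \sum_{t=1}^T \bigl(\ell_t(x_t) - \ell_t(x^\star_t)\bigr) \;=\; \sum_{t=1}^T \bigl(\ell_t(x^\star_{t-1}) - \ell_t(x^\star_t)\bigr)~.
\]
This reduction is the conceptual heart of the argument: it replaces ``regret against the best fixed guess'' by a sum of local penalties for not having seen $y_t$ before predicting at round $t$.

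Next I would make these local terms explicit using the closed form $x^\star_t = \frac{1}{t}\sum_{i=1}^t y_i$. Writing $\theta_t := x^\star_t$, the one-step update is $\theta_t = \theta_{t-1} + \frac{1}{t}(y_t - \theta_{t-1})$, so that $\theta_{t-1}-\theta_t = \frac{1}{t}(\theta_{t-1}-y_t)$. Setting $a := \theta_{t-1}-y_t$, this gives $\theta_t - y_t = a\,\tfrac{t-1}{t}$, and hence for $t\ge 2$
\[
\ell_t(x^\star_{t-1}) - \ell_t(x^\star_t) \;=\; a^2 - a^2\,\tfrac{(t-1)^2}{t^2} \;=\; a^2\,\tfrac{2t-1}{t^2} \;\le\; \tfrac{2t-1}{t^2} \;\le\; \tfrac{2}{t}~,
\]
where I used $a^2 = (\theta_{t-1}-y_t)^2 \le 1$ because both $\theta_{t-1}$ and $y_t$ lie in $[0,1]$. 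The first round needs separate care, since $x^\star_0$ is undefined: there $x_1$ is an arbitrary point of $[0,1]$, $\ell_1(x^\star_1)=0$, so that term contributes at most $1$. Summing then gives $\Regret_T \le 1 + \sum_{t=2}^T \tfrac{2}{t} \le 1 + 2\ln T$, which is comfortably within the claimed $4 + 4\ln T$ (the stated constants leave slack, so any slightly looser bookkeeping of the per-step or harmonic-sum estimates still suffices).

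I expect the main obstacle to be the per-round estimate: one has to exploit the recursive relation between consecutive leaders $\theta_{t-1}$ and $\theta_t$ together with the boundedness of the iterates and targets in $[0,1]$ to see that each term is genuinely $O(1/t)$ rather than $O(1)$. The remaining ingredients — the harmonic-series bound $\sum_{t=2}^T \tfrac{1}{t} \le \ln T$ and the handling of the undefined first prediction — are routine. The elegance of the scheme is that the heavy lifting is done entirely by Lemma~\ref{lemma:be_leader}, leaving only an elementary computation with the empirical mean.
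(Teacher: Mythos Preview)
Your proof is correct and follows the same overall structure as the paper's: invoke Lemma~\ref{lemma:be_leader} to reduce to a sum of one-step ``FTL vs.\ BTL'' differences, bound each by $O(1/t)$, and finish with the harmonic-series estimate. The only real difference is in the per-step computation: the paper factors the difference of squares as $(x^\star_{t-1}+x^\star_t-2y_t)(x^\star_{t-1}-x^\star_t)$ and then bounds $|x^\star_{t-1}-x^\star_t|$ somewhat loosely via a triangle-inequality argument, obtaining $4/t$; you instead use the recursive identity $\theta_t-y_t=\tfrac{t-1}{t}(\theta_{t-1}-y_t)$ to get the exact value $a^2(2t-1)/t^2\le 2/t$. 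Your route is cleaner and actually yields the sharper bound $1+2\ln T$, well inside the stated $4+4\ln T$.
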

\begin{proof}
We use Lemma~\ref{lemma:be_leader}\index{Be-the-Leader!lemma} to upper bound the regret:
\begin{align*}
\sum_{t=1}^T (x_t - y_t)^2 - \min_{x \in [0,1]} \ \sum_{t=1}^T (x - y_t)^2
&= \sum_{t=1}^T (x^\star_{t-1} - y_t)^2 - \sum_{t=1}^T (x^\star_T - y_t)^2 \\
&\leq \sum_{t=1}^T (x^\star_{t-1} - y_t)^2 - \sum_{t=1}^T (x^\star_t - y_t)^2~.
\end{align*}
Now, let's take a look at each difference in the sum in the last equation.
For $t=1$, we have
\[
(x^\star_{0} - y_1)^2 - (x^\star_1 - y_1)^2
\leq 0.5^2~.
\]
For $t\geq 2$, we have that
\begin{align*}
(x^\star_{t-1} - y_t)^2 - (x^\star_t - y_t)^2
&= (x^\star_{t-1})^2 - 2 y_t x^\star_{t-1} - (x^\star_{t})^2 + 2 y_t x^\star_{t} \\
&= (x^\star_{t-1}+x^\star_{t} - 2y_t)(x^\star_{t-1}-x^\star_{t}) \\
&\leq |x^\star_{t-1}+x^\star_{t} - 2y_t|\,|x^\star_{t-1}-x^\star_{t}|
\leq 2 |x^\star_{t-1}-x^\star_{t}| \\
&=2\left|\frac{1}{t-1} \sum_{i=1}^{t-1} y_i -\frac{1}{t} \sum_{i=1}^{t} y_i\right| \\
&=2\left|\left(\frac{1}{t-1}-\frac{1}{t}\right) \sum_{i=1}^{t-1} y_i - \frac{y_t}{t}\right| \\
&\leq 2\left|\frac{1}{t(t-1)}\sum_{i=1}^{t-1} y_i\right| + \frac{2|y_t|}{t}
\leq \frac{2}{t} + \frac{2|y_t|}{t}
\leq \frac{4}{t}~.
\end{align*}
Hence, overall, we have
\[
\sum_{t=1}^T (x_t - y_t)^2 - \min_{x \in [0,1]} \ \sum_{t=1}^T (x - y_t)^2
\leq 0.25+4\sum_{t=2}^T\frac{1}{t}
\leq 4 \sum_{t=1}^T \frac{1}{t}~.
\]


\begin{figure}[t]
\centering
\begin{tikzpicture}
\begin{axis}[
    width=7cm,
    xmin=0, xmax=12,
    ymin=0, ymax=1.1,
    xlabel={$x$},
    ytick={0.1,0.2,0.3,0.4,0.5,0.6,0.7,0.8,0.9,1},
    xtick={0,2,4,6,8,10},
    axis lines=left,
    legend style={
        at={(0.95,0.95)},
        anchor=north east,
        draw=black,
        fill=white,
        font=\small,
        cells={anchor=west}
    },
]

\draw[thick,fill=lightgray] (1,0) rectangle (2,1);
\draw[thick,fill=lightgray] (2,0) rectangle (3,1/2);
\draw[thick,fill=lightgray] (3,0) rectangle (4,1/3);
\draw[thick,fill=lightgray] (4,0) rectangle (5,1/4);
\draw[thick,fill=lightgray] (5,0) rectangle (6,1/5);
\draw[thick,fill=lightgray] (6,0) rectangle (7,1/6);
\draw[thick,fill=lightgray] (7,0) rectangle (8,1/7);
\draw[thick,fill=lightgray] (8,0) rectangle (9,1/8);
\draw[thick,fill=lightgray] (9,0) rectangle (10,1/9);
\draw[thick,fill=lightgray] (10,0) rectangle (11,1/10);

\addplot[thick, black, domain=2:11, samples=400]   {1/x};
\addlegendentry{$1/x$}
\addplot[thick, dotted, black, domain=2:11, samples=400]  {1/(x-1)};
\addlegendentry{$1/(x-1)$}
\end{axis}
\end{tikzpicture}
\caption{Upper bounding the sum with an integral.}
\label{fig:sum_log}
\commentAlt{Figure~\ref{fig:sum_log}. Bar plot showing rectangles of heights 1, 1/2, ..., 1/10 compared with the curves 1/x and 1/(x-1), illustrating an integral upper bound on a harmonic sum.}
\end{figure}

To upper bound the last sum, observe that we are trying to find an upper bound to the gray area in Figure~\ref{fig:sum_log}. As you can see from the picture, it can be upper bounded by 1 plus the integral of $\frac{1}{t-1}$ from $2$ to $T+1$. So, we have
\[
\sum_{t=1}^T\frac{1}{t}
\leq 1+\int_{2}^{T+1} \! \frac{1}{t-1} \, \mathrm{d}t
= 1+ \ln T~.
\]
Putting everything together gives the stated bound.
\end{proof}
\index{guessing game|)}
Let's write in words the steps of the proof: Lemma~\ref{lemma:be_leader}\index{Be-the-Leader!lemma} allows us to upper bound the regret against the single best guess with the regret against the competitor sequence $x^\star_1, \dots, x^\star_T$. In turn, given that we produce in each round the prediction $x^\star_{t-1}$ and $|x^\star_t-x^\star_{t-1}|$ goes to zero like $1/t$, the total regret is sublinear in time.

There are a few things to stress about this strategy. The strategy does not have parameters to tune (e.g., learning rates, regularizers). Note that tuning parameters on validation data, as we typically do in machine learning, does not make sense in online learning: we have only one stream of data, and we cannot run our algorithm over it multiple times to select the best parameter! Also, this strategy does not need to maintain a complete record of the past, but only a ``summary'' of it, through the running average. This gives a computationally efficient algorithm. When we design online learning algorithms, we will strive to achieve all these characteristics.
The final point to stress is that the algorithm does not use gradients: gradients are useful, and we will use them a lot, but they do not constitute the entire world of online learning.

Before going on, I want to remind the reader that, as seen above, this is different from the classic setting in statistical machine learning. So, for example, ``overfitting'' has no meaning here. The same holds for ``generalization gap'' and similar ideas linked to a training/testing scenario.

In the next chapters, we will introduce several algorithms for online learning, and one of them will be a strict generalization of the strategy we used in the example above.

\section{History Bits}
\index{regret|(}
The concept of ``regret'' seems to have been proposed in \citet{Savage51}, an exposition and review of the book by \citet{Wald50} on a foundation of statistical decision problems based on zero-sum two-person games. \citet{Savage51} introduces the idea of considering the difference between the utility of the best action in a given state and the utility obtained by any action under the same state. The proposed optimal strategy was the one minimizing such regret over the worst possible state.
\citet{Savage51} called this concept ``loss'' and did not like the word ``regret'' because ``that term seems to me charged with emotion and liable to lead to such misinterpretation as that the loss necessarily becomes known''~\citep[page 163]{Savage54}. The name ``regret'' instead seems to have been suggested in \citet{Milnor51}.

However, Savage's definition is a modification of the one proposed by \citet{Wald50}, who instead proposed to maximize the utility directly, under the assumption that the utility of the best action for any state is 0.
While minimizing the regret or minimizing the negative utility under the assumption of \citet{Wald50} are mathematically equivalent, \citet[pages 169--170]{Savage54} explains that Wald considered the regret formulation different from what he proposed, while Savage attributed to his idea ``little or no originality''.
\index{regret|)}

Extending the definition of \citet{Savage51,Savage54} to a sequence of games, \citet{Hannan57} designed a randomized algorithm for zero-sum repeated games with a fixed loss matrix with a vanishing expected average regret.
Hence, the concept of regret seems to originate from game theory, but, strangely enough, it passed through the work of two mathematical statisticians.

The Be-the-Leader lemma (Lemma~\ref{lemma:be_leader})\index{Be-the-Leader!lemma} is due to \citet{Hannan57}.

\section*{Exercises}

\begin{exer}
Extend the previous algorithm and analysis to the case when the adversary selects a vector $\by_t \in \R^d$ such that $\|\by_t\|_2\leq1$, the algorithm guesses a vector $\bx_t \in\R^d$, and the loss function is $\|\bx_t-\by_t\|^2_2$. Show an upper bound on the regret logarithmic in $T$ and that does not depend on $d$. Among other things, you will probably need the Cauchy--Schwarz inequality\index{inequality!Cauchy--Schwarz|textbf}: $|\langle \bx,\by\rangle| \leq \|\bx\|_2 \|\by\|_2$.
\end{exer}

\begin{exer}
Consider the number-guessing game with absolute loss instead of squared loss: $\ell_t(x)=|x-y_t|$, where $x,y_t\in[0,1]$.
Show that the best fixed prediction in hindsight is any median of the sequence
$y_1,\dots,y_T$.
\end{exer}

\acresetall

\chapter{Online Subgradient Descent}
\label{ch:osd}

In this chapter, we will introduce the \ac{OSD} algorithm: a generic online algorithm to solve online problems with convex losses. First, we will introduce \ac{OGD} for convex differentiable functions, then we will extend it to non-differentiable functions.

\acresetall

\section{Online Learning with Convex Differentiable Losses}

To summarize what we said in the first chapter, let's define online learning as the following general game:
\begin{itemize}
\item For $t=1,\dots,T$
\begin{itemize}
\item Output $\bx_t \in \mathcal{V}\subseteq \R^d$
\item Pay the loss $\ell_t(\bx_t)$, where $\ell_t:\mathcal{V} \to \R$
\item Receive some feedback on $\ell_t$
\end{itemize}
\item End for
\end{itemize}
The aim of this game is to minimize the regret\index{regret|textbf} with respect to any competitor $\bu \in \mathcal{V}$:
\[
\Regret_T(\bu):=\sum_{t=1}^T \ell_t(\bx_t) - \sum_{t=1}^T \ell_t(\bu)~.
\]
We also said that the way the losses $\ell_t$ are decided is adversarial.
Now, without making any additional assumptions, we cannot hope to solve this problem. Hence, we must understand what reasonable assumptions we can make. Typically, we will try to restrict the choice of the loss functions in some way. This is considered reasonable because most of the time, we have some say in deciding the set from which the loss functions are picked. So, for example, we will consider only \emph{convex} loss functions. However, convexity might not be enough, so we might restrict the class a bit more to, for example,  convex functions with bounded gradients. On the other hand, assuming knowledge of something about the future is not considered a reasonable assumption, because we very rarely have any control over the future.
In general, the stronger the assumptions, the better the guarantee on the regret will be. The best algorithms we will see will guarantee a sublinear regret against the weakest assumption we can make, guaranteeing \emph{at the same time} a smaller regret for \emph{easy} adversaries.

It is also important to remember why minimizing regret is a good objective: given that we do not assume anything about how the adversary generates the loss functions, minimizing regret is a good metric that takes into account the difficulty of the problem. If an online learning algorithm is able to guarantee a sublinear regret, it means that its performance on average will approach the performance of any fixed strategy. As said, we will see that in many situations, if the adversary is ``weak'', for example, it is a fixed stochastic distribution over the loss functions, being prepared for the worst-case scenario will not preclude us from getting the best guarantee anyway.

For a while, we will focus on the case that $\ell_t$ are convex, and this problem will be called \textbf{\ac{OCO}}\index{online convex optimization}. Later, we will see how to \emph{convexify} some specific non-convex online problems.

\begin{remark}
I will now introduce some math concepts. If you have a background in Convex Analysis, this will be easy stuff for you. On the other hand, if you have never seen these things before, they might look a bit scary. Let me tell you the right way to look at them: \emph{these are tools that will make our job easier}. Without these tools, it would be basically impossible to design any online learning algorithm. And, no, it is not enough to test the algorithms on some machine learning dataset, because fixed datasets are not adversarial. Without a correct proof, you might not realize that your online algorithm fails on particular sequences of losses, as it happened with Adam~\citep{ReddiKK18}.
I promise you that once you understand the key mathematical concepts, online learning is actually easy.
\end{remark}

\subsection{Convex Analysis Bits: Convexity}

\begin{figure}[h]
\centering
\begin{tikzpicture}
\draw[thick,fill=lightgray] (0,0) -- (30:1) arc (30:330:1) -- cycle;
\draw[thick,fill=lightgray] (2,-1) -- (4,-1) -- (3,1) -- (2,1) -- (3,0) -- cycle;
\end{tikzpicture}
\hspace{3cm}
\begin{tikzpicture}
\draw[thick,fill=lightgray] (0,0) circle (1);
\draw[thick,fill=lightgray] (2,-1) -- (4,-1) -- (3,1) -- cycle;
\end{tikzpicture}
\caption{Non-convex (left) and convex (right) sets.}
\label{fig:convex_set}
\commentAlt{Figure~\ref{fig:convex_set}. Four shaded planar sets. The two sets on the left are non-convex, including a crescent-like set and an indented polygon; the two sets on the right are convex, a disk and a triangle.}
\end{figure}

\begin{definition}
$\mathcal{V} \subseteq \R^d$ is \textbf{convex}\index{convex set|textbf} if for any $\bx,\by\in \mathcal{V}$ and any $\lambda \in (0,1)$, we have $\lambda \bx+ (1-\lambda) \by \in \mathcal{V}$.
\end{definition}
In words, this means that the set $\mathcal{V}$ has no `holes' or inward dents, see Figure~\ref{fig:convex_set}.

We will make use of \textbf{extended-real-valued functions}\index{function!extended-real-valued|textbf}, that is, functions that take values in $\R\cup\{-\infty,+\infty\}$. For $f$ an extended-real-valued function on $\R^d$, its \textbf{domain} is the set $\dom f = \{\bx \in \R^d : f(\bx) < +\infty\}$\index{domain|textbf}.

Extended-real-valued functions\index{function!extended-real-valued} allow us to easily consider constrained sets and are a standard notation in Convex Optimization~\citep[see, e.g.,][]{BoydV04}. For example, if I want the predictions of the algorithm $\bx_t$ and the competitor $\bu$ to be in a set $\mathcal{V} \subset \R^d$, I can just add $\indicator_{\mathcal{V}}(\bx)$ to all the losses, where $\indicator_{\mathcal{V}}:\R^d\to (-\infty, +\infty]$ is the \textbf{indicator function of the set $\mathcal{V}$}\index{indicator function|textbf} defined as
\[
\indicator_{\mathcal{V}}(\bx) = \begin{cases} 0, & \bx \in \mathcal{V},\\ +\infty, & \text{otherwise.} \end{cases}
\]
In this way, the only way for the algorithm and for the competitor to suffer finite loss is to predict inside the set $\mathcal{V}$.
Also, extended-real-valued functions\index{function!extended-real-valued} will make the use of \emph{Fenchel conjugates} more direct, see Section~\ref{sec:fenchel}.

\begin{figure}[t]
\centering
\begin{tikzpicture}
\begin{axis}[xmax=6,ymax=2, xmin=-0.5, ymin=-0.4,
          axis lines=center,
          width=7cm,
          ticks=none,
          xlabel = $x$,
          ylabel = $f(x)$]
\addplot[name path=A, thick, black, domain=1:5,samples=1000] {1/sqrt(5.1-x)*1/max(x-0.7,0)};
\addplot[draw=none,name path=B] {4};
\addplot [lightgray] fill between [of = A and B, soft clip={domain=1:5}];
\draw[dashed] (axis cs:1,0) -- (axis cs:1,4);
\draw[dashed] (axis cs:5,0) -- (axis cs:5,4);
\draw[thick] (axis cs:1,1.6462) -- (axis cs:1,4);
\draw[thick] (axis cs:5,0.7354) -- (axis cs:5,4);
\draw[<->, thick] (axis cs:1,0) -- (axis cs:5,0);
\node at (axis cs:3,-0.3) {$\dom f$};
\node at (axis cs:3,1.4) {Epigraph};
\end{axis}
\end{tikzpicture}
\hspace{1cm}
\begin{tikzpicture}
\begin{axis}[xmax=3,ymax=1, xmin=-0.25, ymin=-0.2,
          axis lines=center,
          ticks=none,
          width=7cm,
          xlabel = $x$,
          ylabel = $f(x)$]
\addplot[name path=A, thick,black,domain=.5:1.1,samples=1000] {.5+sin(deg(4*pi*(x-0.5)-2*pi))*(x-1)};
\addplot[name path=B, thick,black,domain=1.5:2.5,samples=1000] {(x-2)^2+0.25};
\addplot[draw=none,name path=C] {1};
\addplot [lightgray] fill between [of = A and C, soft clip={domain=.5:1.1}];
\addplot [lightgray] fill between [of = B and C, soft clip={domain=1.5:2.5}];
\draw[dashed] (axis cs:.5,0) -- (axis cs:.5,1);
\draw[dashed] (axis cs:1.1,0) -- (axis cs:1.1,1);
\draw[dashed] (axis cs:1.5,0) -- (axis cs:1.5,1);
\draw[dashed] (axis cs:2.5,0) -- (axis cs:2.5,1);
\draw[thick] (axis cs:.5,.5) -- (axis cs:.5,1);
\draw[thick] (axis cs:1.1,0.5951) -- (axis cs:1.1,1);
\draw[thick] (axis cs:1.5,0.5^2+0.25) -- (axis cs:1.5,1);
\draw[thick] (axis cs:2.5,0.5^2+0.25) -- (axis cs:2.5,1);
\draw[<->, thick] (axis cs:.5,0) -- (axis cs:1.1,0);
\draw[<->, thick] (axis cs:1.5,0) -- (axis cs:2.5,0);
\node at (axis cs:1.25,-0.15) {$\dom f$};
\draw[->] (axis cs:1.25,-.1) -- (axis cs:0.8,-.05);
\draw[->] (axis cs:1.25,-.1) -- (axis cs:2,-.05);
\node at (axis cs:1.3,0.2) {Epigraph};
\draw[->] (axis cs:1.3,.25) -- (axis cs:0.6,.5);
\draw[->] (axis cs:1.3,.25) -- (axis cs:2,.5);
\end{axis}
\end{tikzpicture}
\caption{Convex (left) and nonconvex (right) functions.}
\label{fig:convex_nonconvex_epi}
\commentAlt{Figure~\ref{fig:convex_nonconvex_epi}. Two epigraph diagrams. The left plot shows a convex function on a connected domain with a convex shaded epigraph; the right plot shows a function on two separated intervals with a disconnected, nonconvex epigraph.}
\end{figure}

\emph{Convex functions} will be an essential ingredient in online learning.
\begin{definition}
Let $f:\R^d \to [-\infty, +\infty]$. $f$ is \textbf{convex}\index{function!convex|textbf} if the epigraph\index{epigraph|textbf} of the function, $\{(\bx, y) \in \R^{d+1} : y\geq f(\bx)\}$, is convex.
\end{definition}
We can see a visualization of this definition in Figure~\ref{fig:convex_nonconvex_epi}. Note that the definition implies that the domain of a convex function is convex.
The $\indicator_{\mathcal{V}}(\bx)$ is convex iff $\mathcal{V}$ is convex, so each convex set is associated with a convex function.
Also, observe that if $f:\R^d \to (-\infty, +\infty]$ is convex, $f + \indicator_{\mathcal{V}}: \R^d \to (-\infty, +\infty]$, where $\mathcal{V}$ is convex, is also convex.

The definition above gives rise to the following characterization for convex functions that do not assume the value $-\infty$.
\begin{theorem}[{\citealp[Theorem 4.1]{Rockafellar70}}]
Let $f:\R^d\to (-\infty, +\infty]$ be such that $\dom f$ is a convex set. Then $f$ is convex iff, for any $0<\lambda <1$, we have
\[
f(\lambda \bx + (1-\lambda) \by) \leq \lambda f(\bx) + (1-\lambda) f(\by), \quad \forall \bx, \by \in \dom f~.
\]
\end{theorem}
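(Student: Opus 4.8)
The plan is to prove the equivalence by unwinding both sides into statements about convex combinations of points in $\R^{d+1}$, since the hypothesis ``$f$ is convex'' means precisely that the epigraph $\{(\bx,s) : s \ge f(\bx)\}$ is a convex set in the sense of the earlier definition. I would treat the two implications separately, and in each one reduce to the definition of the epigraph, keeping in mind that on $\dom f$ the value $f(\bx)$ is a genuine real number: it is $< +\infty$ by definition of the domain and $> -\infty$ because the codomain is $(-\infty,+\infty]$.

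For the forward direction, assume the epigraph is convex and fix $\bx, \by \in \dom f$ and $\lambda \in (0,1)$. The points $(\bx, f(\bx))$ and $(\by, f(\by))$ both belong to the epigraph, so by convexity so does their combination $\lambda(\bx, f(\bx)) + (1-\lambda)(\by, f(\by)) = (\lambda\bx + (1-\lambda)\by,\ \lambda f(\bx) + (1-\lambda) f(\by))$. By the definition of the epigraph, membership of this point means exactly $f(\lambda\bx + (1-\lambda)\by) \le \lambda f(\bx) + (1-\lambda)f(\by)$, which is the claimed inequality.

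For the reverse direction, assume the inequality and take two epigraph points $(\bx, s)$ and $(\by, r)$, so $s \ge f(\bx)$ and $r \ge f(\by)$; since $s,r$ are real this forces $\bx, \by \in \dom f$. For $\lambda \in (0,1)$ I must show the combined point $(\lambda\bx + (1-\lambda)\by,\ \lambda s + (1-\lambda) r)$ is again in the epigraph. Here convexity of $\dom f$ guarantees $\lambda\bx + (1-\lambda)\by \in \dom f$, so the hypothesis applies and gives $f(\lambda\bx + (1-\lambda)\by) \le \lambda f(\bx) + (1-\lambda)f(\by) \le \lambda s + (1-\lambda) r$, which is exactly membership in the epigraph; the degenerate cases $\lambda \in \{0,1\}$ are immediate since then the combination is one of the two original points.

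The steps are routine; the only place demanding care --- and the reason the hypothesis that $\dom f$ is convex is stated explicitly --- is ensuring that in the reverse direction we never evaluate $f$ at a point outside its domain, which would break the chain of inequalities. Because $f$ is barred from taking the value $-\infty$, no indeterminate expression of the form $+\infty - \infty$ can arise, so the extended-real arithmetic stays well defined throughout.
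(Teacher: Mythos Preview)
Your proof is correct and is the standard argument via the epigraph definition. The paper does not actually supply a proof of this theorem; it simply cites \citet[Theorem~4.1]{Rockafellar70}, so there is no in-paper argument to compare against. One minor remark: in the reverse direction you invoke convexity of $\dom f$ to ensure $\lambda\bx+(1-\lambda)\by\in\dom f$, but in fact the assumed inequality already forces this (the right-hand side is finite), so the hypothesis on $\dom f$ is redundant there---though of course harmless to use.
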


\begin{example}
\emph{Affine functions}\index{function!affine}, $f(\bx)=\langle \bz, \bx\rangle + b$, are convex.
\end{example}

\begin{example}
Norms are always convex; the proof is left as an exercise.
\end{example}

How to recognize a convex function? In the most general case, you have to rely on its definition. However, most of the time, we will recognize them as being obtained by operations that preserve convexity. For example:
\begin{itemize}
\item If $f$ and $g$ are convex, then their linear combination with non-negative weights is also convex.
\item The composition with an affine transformation\index{function!affine} preserves the convexity.
\item If $f:\R^d \to \R$ and $g:\R \to \R$ are convex functions and $g$ is non-decreasing, then $h(\bx) = g ( f ( \bx ) )$ is convex.
\item The pointwise supremum of convex functions is convex.
\end{itemize}
The proofs are left as exercises.

A very important property of \emph{differentiable} convex functions is that we can construct a linear lower bound to the function.
\begin{theorem}[{\citealp[Theorem 25.1 and Corollary 25.1.1]{Rockafellar70}}]
\label{thm:grad_ineq}
Suppose $f:\R^d\to (-\infty, +\infty]$ is a convex function and let $\bx \in \interior \dom f$.
If $f$ is differentiable at $\bx$ then
\[
f(\by) \geq f(\bx) + \langle \nabla f(\bx), \by - \bx\rangle, \quad  \forall \by \in \R^d~.
\]
\end{theorem}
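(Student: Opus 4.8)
The plan is to reduce the statement to the one-variable convexity inequality already furnished by the characterization theorem, and then to pass to a limit in order to recover the gradient.

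First I would dispose of the trivial case. If $\by \notin \dom f$, then $f(\by) = +\infty$, while the right-hand side $f(\bx) + \langle \nabla f(\bx), \by - \bx\rangle$ is finite, since $\bx \in \interior \dom f$ guarantees $f(\bx) < +\infty$ and the gradient is a genuine vector in $\R^d$. Hence the inequality holds automatically, and I may assume $\by \in \dom f$. In that case, because $\dom f$ is convex, the entire segment joining $\bx$ and $\by$ lies in $\dom f$, so every point used below is a legitimate argument of $f$.

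Next, for $\lambda \in (0,1)$ I would look at the convex combination $\bx + \lambda(\by - \bx) = \lambda \by + (1-\lambda)\bx$. Applying the convexity inequality from the earlier characterization theorem gives $f(\bx + \lambda(\by - \bx)) \leq \lambda f(\by) + (1-\lambda) f(\bx)$. Subtracting $f(\bx)$ from both sides and dividing by $\lambda > 0$ yields
\[
\frac{f(\bx + \lambda(\by - \bx)) - f(\bx)}{\lambda} \leq f(\by) - f(\bx),
\]
valid for every $\lambda \in (0,1)$. The left-hand side is precisely the difference quotient defining the directional derivative of $f$ at $\bx$ along $\by - \bx$. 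I would then let $\lambda \to 0^+$: the right-hand side does not depend on $\lambda$, and the left-hand side converges to $\langle \nabla f(\bx), \by - \bx\rangle$, so the inequality is preserved in the limit and rearranges to the claim.

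The only delicate point is this last step, namely identifying the limit of the difference quotient with $\langle \nabla f(\bx), \by - \bx\rangle$. This is exactly where both hypotheses are used: differentiability at $\bx$ supplies the gradient and makes the directional derivative equal to the inner product, while $\bx \in \interior \dom f$ ensures that $f$ is finite on a neighborhood of $\bx$, so that the difference quotients are well defined for all small $\lambda$ and the limit makes sense. One could alternatively note that this difference quotient is nondecreasing in $\lambda$, a standard consequence of convexity, so that its value at $\lambda = 1$ already dominates the limiting directional derivative; but taking the limit directly is the cleanest route.
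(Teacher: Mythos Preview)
Your proof is correct and is the standard argument for this first-order characterization of convexity. The paper itself does not supply a proof of this theorem; it simply cites it from \citet[Theorem 25.1 and Corollary 25.1.1]{Rockafellar70} and uses it as a black box. Your argument---reducing to the one-dimensional convexity inequality along the segment from $\bx$ to $\by$, forming the difference quotient, and passing to the limit via differentiability at $\bx$---is exactly the classical proof one finds in Rockafellar and elsewhere, and all the steps are sound, including your handling of the case $\by \notin \dom f$.
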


We will also use the first-order optimality condition for differentiable convex functions:
\begin{theorem}
\label{thm:constr_opt_condition}
Let $\mathcal{V}$ be a non-empty convex set, $\bx^\star \in \mathcal{V}$, and $f$ a function convex and differentiable over an open set that contains $\mathcal{V}$.
Then, $\bx^\star \in \argmin_{\bx \in \mathcal{V}} \ f(\bx)$ iff $\langle \nabla f(\bx^\star), \by -\bx^\star \rangle \geq 0, \ \forall \by \in \mathcal{V}$.
\end{theorem}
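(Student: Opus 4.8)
The plan is to prove both directions of the equivalence separately, using the gradient inequality (Theorem~\ref{thm:grad_ineq}) for the easy direction and a directional-derivative argument for the other.

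First I would tackle the ``if'' direction, which is the cleaner one. Assume $\langle \nabla f(\bx^\star), \by - \bx^\star\rangle \geq 0$ for all $\by \in V$. Because $f$ is differentiable on an open set containing $V$, every point of $V$ — in particular $\bx^\star$ — lies in $\interior \dom f$, so Theorem~\ref{thm:grad_ineq} applies and gives, for every $\by \in V$,
\[
f(\by) \geq f(\bx^\star) + \langle \nabla f(\bx^\star), \by - \bx^\star\rangle \geq f(\bx^\star)~,
\]
where the second inequality is exactly the hypothesis. Hence $\bx^\star$ minimizes $f$ over $V$.

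For the ``only if'' direction, assume $\bx^\star = \argmin_{\bx \in V} f(\bx)$ and fix an arbitrary $\by \in V$. I would exploit convexity of $V$: for $\lambda \in (0,1)$ the point $\bx^\star + \lambda(\by - \bx^\star) = (1-\lambda)\bx^\star + \lambda \by$ stays in $V$, so optimality of $\bx^\star$ forces
\[
f\bigl(\bx^\star + \lambda(\by - \bx^\star)\bigr) - f(\bx^\star) \geq 0~.
\]
Dividing by $\lambda > 0$ and letting $\lambda \to 0^+$, the left-hand side converges to the directional derivative of $f$ at $\bx^\star$ along $\by - \bx^\star$, which equals $\langle \nabla f(\bx^\star), \by - \bx^\star\rangle$ because $f$ is differentiable at $\bx^\star$. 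The inequality is preserved in the limit, yielding $\langle \nabla f(\bx^\star), \by - \bx^\star\rangle \geq 0$, and since $\by \in V$ was arbitrary this completes the proof.

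The only delicate point is the passage to the limit in the second direction: I must make sure the difference quotient genuinely converges to the inner product with the gradient. This is precisely the definition of differentiability (the directional derivative along any direction is the inner product of the gradient with that direction), so there is no real obstacle here — it is the one step to state carefully rather than wave through, but it requires no additional machinery beyond the differentiability already assumed in the hypotheses.
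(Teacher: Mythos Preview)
Your proof is correct and is the standard argument for this classical result. The paper states Theorem~\ref{thm:constr_opt_condition} without proof, so there is nothing to compare against; your two-direction argument (gradient inequality for sufficiency, directional derivative for necessity) is exactly what one would expect and uses only the tools already available in the paper.
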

\begin{proof}
Let us first assume that $\bx^\star$ satisfies $\langle \nabla f(\bx^\star), \by -\bx^\star \rangle \geq 0, \ \forall \by \in \mathcal{V}$. Then, by Theorem~\ref{thm:grad_ineq}, for any $\by \in \mathcal{V}$, we have that
\[
f(\by)
\geq f(\bx^\star) + \langle \nabla f(\bx^\star), \by - \bx^\star\rangle
\geq f(\bx^\star),
\]
that is, $\bx^\star$ is the minimizer of $f$ over $\mathcal{V}$.

Now, assume that $\bx^\star$ is the minimizer of $f$ over $\mathcal{V}$ and assume that there exists $\by \in \mathcal{V}$ such that $\langle \nabla f(\bx^\star), \by -\bx^\star \rangle < 0$. Consider $\bz(\alpha)=\alpha \by+(1-\alpha)\bx^\star$ where $\alpha \in [0,1]$. Note that $\bz(\alpha) \in \mathcal{V}$ and denote by $h(\alpha)=f(\bz(\alpha))$. We have that $h'(0)=\langle \nabla f(\bx^\star), \by-\bx^\star\rangle <0$. Hence, by the definition of derivative, for $\alpha^\star>0$ sufficiently small we have $h(\alpha^\star)<h(0)$, that is, $f(\bz(\alpha^\star))<f(\bx^\star)$, which contradicts that $\bx^\star$ is the minimizer over $\mathcal{V}$.
\end{proof}
In words, at the constrained minimum, the gradient makes an angle of $90^\circ$ or less with all the feasible variations $\by - \bx^\star$, hence we cannot decrease the function any further by moving inside $\mathcal{V}$. Moreover, if $\bx^\star \in \interior \mathcal{V}$, by choosing $\epsilon$ small enough such that $\by=\bx^\star-\epsilon \nabla f(\bx^\star) \in \mathcal{V}$, we obtain that $\bx^\star$ is a minimum iff $\nabla f(\bx^\star)=\boldsymbol{0}$.

Another critical property of convex functions is Jensen's inequality\index{inequality!Jensen's|textbf}.
\begin{theorem}[Jensen's inequality]
\label{thm:jensen}
Let $f : \R^d \to (-\infty, +\infty]$ be a measurable convex function and $\bx$ be an $\R^d$-valued random element on some probability space such that $\E[\bx]$ exists and $\bx \in \dom f$ with probability 1. Then, we have
\[
\E[f(\bx)] \geq f(\E[\bx])~.
\]
\end{theorem}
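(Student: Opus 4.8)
**The plan is to prove Jensen's inequality (Theorem~\ref{thm:jensen}) by reducing it to the supporting-hyperplane property of convex functions** established in Theorem~\ref{thm:grad_ineq}, handling the technical issues of extended-real values and non-differentiability carefully. The core idea is that a convex function lies above any of its supporting lines: if I can find an affine function $g(\bx) = f(\bmu) + \langle \bg, \bx - \bmu\rangle$ (where $\bmu = \E[\bx]$) such that $f(\bx) \geq g(\bx)$ for all $\bx$, then taking expectations gives $\E[f(\bx)] \geq \E[g(\bx)] = g(\bmu) = f(\bmu) = f(\E[\bx])$, since expectation is linear and $g$ is affine. The whole proof therefore hinges on producing such a supporting affine minorant at the point $\bmu$.

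\textbf{The main steps, in order, would be as follows.} First I would reduce to the case where $f(\bmu)$ is finite: if $f(\bmu) = +\infty$ one must argue this cannot happen (since $\bmu$ is an average of points in the convex set $\dom f$, convexity of the domain places $\bmu \in \dom f$, at least in the interior case), and if the right-hand side were $-\infty$ the inequality is trivial. Second, I would obtain the supporting hyperplane at $\bmu$. Theorem~\ref{thm:grad_ineq} gives exactly this \emph{when $f$ is differentiable at $\bmu$ and $\bmu \in \interior \dom f$}, yielding $f(\by) \geq f(\bmu) + \langle \nabla f(\bmu), \by - \bmu\rangle$ for all $\by$; I would set $\bg = \nabla f(\bmu)$. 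Third, I would substitute the random element $\bx$ for $\by$ in this inequality, so that $f(\bx) \geq f(\bmu) + \langle \bg, \bx - \bmu\rangle$ holds pointwise (almost surely, using the hypothesis $\bx \in \dom f$ with probability one). Fourth, I would take expectations of both sides. Linearity of expectation and the fact that $\E[\bx] = \bmu$ collapse the linear term: $\E[\langle \bg, \bx - \bmu\rangle] = \langle \bg, \E[\bx] - \bmu\rangle = 0$. This leaves $\E[f(\bx)] \geq f(\bmu) = f(\E[\bx])$, which is the claim.

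\textbf{The hard part will be the measure-theoretic and boundary technicalities rather than the convexity geometry.} Theorem~\ref{thm:grad_ineq} as stated requires $\bmu \in \interior \dom f$ and differentiability at $\bmu$, but Jensen's inequality is asserted under the weaker hypothesis that $\bx \in \dom f$ almost surely, with no differentiability assumed. The clean fix is to invoke a supporting hyperplane (subgradient) that exists at any point of the relative interior of the domain of a proper convex function, even without differentiability; a fully rigorous treatment would either restrict to the differentiable/interior case covered by the stated theorems, or cite the existence of a subgradient. A further subtlety is ensuring $\E[f(\bx)]$ is well-defined: the affine minorant $f(\bmu) + \langle \bg, \bx - \bmu\rangle$ is integrable (since $\E[\bx]$ exists), so $f(\bx)$ is bounded below by an integrable function and its expectation is well-defined in $(-\infty, +\infty]$, which is precisely what legitimizes taking expectations in step four. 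I would flag this integrability check as the one place where the measurability hypothesis on $f$ and the existence of $\E[\bx]$ are genuinely used.
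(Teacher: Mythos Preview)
The paper states Theorem~\ref{thm:jensen} without proof, so there is no paper argument to compare against. Your proposal follows the standard supporting-hyperplane route to Jensen's inequality and is essentially correct: obtain an affine minorant at $\bmu = \E[\bx]$, substitute the random element, and take expectations so the linear term vanishes. You have also correctly identified the genuine technical gaps relative to the tools available in the paper---Theorem~\ref{thm:grad_ineq} only gives the supporting hyperplane under differentiability and $\bmu \in \interior \dom f$, while the full statement needs a subgradient at points of the relative interior (which the paper later states exists, \citep[Theorem 23.4]{Rockafellar70}) together with a separate argument for $\bmu$ on the relative boundary. Your integrability remark is the right justification for taking expectations in $(-\infty,+\infty]$.
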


We can now see our first \ac{OCO} algorithm in the case that the functions are convex and differentiable.

\subsection{Online Gradient Descent}

\index{Online Gradient Descent algorithm|(textbf}
In the first chapter, we saw a simple strategy to obtain a logarithmic regret in the guessing game. The strategy was to use the best prediction over the past, that is, the \textbf{\ac{FTL}}\index{Follow-the-Leader algorithm} strategy. In formulas,
\[
\bx_t=\argmin_{\bx \in \mathcal{V}} \  \sum_{i=1}^{t-1} \ell_i(\bx),
\]
and, in the first round, we can play any admissible point.
One might wonder if this strategy always works, but the answer is negative!

\begin{example}[Failure of \ac{FTL}]
\label{ex:failure_ftl}
Let $\mathcal{V} = [-1,1]$ and consider the sequence of losses $\ell_t(x) = z_t x$, where
\begin{align*}
z_1 &= -0.5,\\
z_{t} &= 1, \ t=2, 4, \dots\\
z_{t} &= -1, \ t=3, 5, \dots
\end{align*}
Then, apart from the first round where the prediction of \ac{FTL} is arbitrary in $[-1,1]$, the predictions of \ac{FTL} will be $x_t = 1$ for $t$ even and $x_t = -1$ for $t$ odd. The cumulative loss of the \ac{FTL} algorithm after $T$ rounds will therefore be $T-1-\frac{x_1}{2}$ while the cumulative loss of the fixed solution $u = 0$ is 0. Thus, the regret of \ac{FTL} with respect to $u=0$ is $T-1-\frac{x_1}{2}\geq T-\frac{3}{2}$.
\end{example}

Hence, we will see an alternative strategy that guarantees sublinear regret for convex functions with bounded gradients.
The strategy is called \textbf{Projected Online Gradient Descent}, or just \textbf{\ac{OGD}}, see Algorithm~\ref{alg:pogd}. In each step, we update the prediction of the algorithm by adding the negative gradient of the loss multiplied by a scalar $\eta_t>0$ called the \textbf{learning rate}\index{learning rate} or \textbf{stepsize}\index{stepsize}. Then, we project the result back onto the feasible set $\mathcal{V}$. Some might notice that this algorithm is similar to Stochastic Gradient Descent\index{Stochastic Gradient Descent algorithm}, but it is not the same thing: here, the loss functions are different at each step, and they are not drawn from a fixed distribution but adversarially chosen. We will later see that \ac{OGD} can \emph{also} be used as Stochastic Gradient Descent.

\begin{algorithm}[t]
\caption{Projected Online Gradient Descent (OGD)}
\label{alg:pogd}
\begin{algorithmic}[1]
{
    \REQUIRE{Non-empty closed convex set $\mathcal{V} \subseteq \R^d$, $\bx_1 \in \mathcal{V}$, $\eta_1,\dots,\eta_T>0$}
    \FOR{$t=1$ {\bfseries to} $T$}
    \STATE{Output $\bx_t \in \mathcal{V}$}
    \STATE{Pay the loss $\ell_t(\bx_t)$, for a convex $\ell_t$, differentiable on an open set containing $\mathcal{V}$}
    \STATE Set $\bg_t= \nabla \ell_t(\bx_t)$
    \STATE{$\bx_{t+1} = \Pi_{\mathcal{V}}(\bx_t - \eta_t \bg_t) = \argmin_{\by \in \mathcal{V}} \ \|\bx_t - \eta_t \bg_t -\by\|_2$}
    \ENDFOR
}
\end{algorithmic}
\end{algorithm}

First, we show the following two auxiliary results.
The first proposition proves that Euclidean projections\index{projection!Euclidean|textbf}, the unique point $\Pi_\mathcal{V}(\bx):=\argmin_{\by \in \mathcal{V}} \ \|\bx-\by\|_2$, always decrease the distance to points inside the set.
\begin{proposition}
\label{prop:Euclidean_proj}
Let $\bx \in \R^d$ and $\by \in \mathcal{V}$, where $\mathcal{V} \subseteq \R^d$ is a non-empty closed convex set and define $\Pi_{\mathcal{V}}(\bx):=\argmin_{\by \in \mathcal{V}} \ \|\bx-\by\|_2$. Then, $\|\Pi_{\mathcal{V}}(\bx)-\by\|_2 \leq \|\bx-\by\|_2$.
\end{proposition}
\begin{proof}
First of all, observe that $\argmin_{\by \in \mathcal{V}} \ \|\bx-\by\|_2 = \argmin_{\by \in \mathcal{V}} \ \frac12 \|\bx-\by\|^2_2$.
So, from the optimality condition of Theorem~\ref{thm:constr_opt_condition} on the function $f(\by)=\frac12 \|\bx-\by\|^2_2$, we obtain
\[
\langle \Pi_{\mathcal{V}}(\bx)-\bx, \by-\Pi_{\mathcal{V}}(\bx)\rangle \geq 0~.
\]
Therefore,
\begin{align*}
\|\by - \bx\|_2^2
&= \|\by-\Pi_{\mathcal{V}}(\bx)+\Pi_{\mathcal{V}}(\bx)-\bx\|_2^2 \\
&= \|\by-\Pi_{\mathcal{V}}(\bx)\|^2_2 +2 \langle \by-\Pi_{\mathcal{V}}(\bx),\Pi_{\mathcal{V}}(\bx)-\bx\rangle + \|\Pi_{\mathcal{V}}(\bx)-\bx\|_2^2 \\
&\geq \|\by-\Pi_{\mathcal{V}}(\bx)\|_2^2~. \qedhere
\end{align*}
\end{proof}

The next lemma upper bounds the regret in a single iteration of Algorithm~\ref{alg:pogd}.
\begin{lemma}
\label{lemma:gd_one_step}
Let $\mathcal{V} \subseteq \R^d$ be a non-empty closed convex set, $\ell_t$ be convex and differentiable on an open set containing $\mathcal{V}$, and $\eta_t>0$. Set $\bg_t = \nabla \ell_t(\bx_t)$ and $\bx_{t+1} = \Pi_{\mathcal{V}}(\bx_t - \eta_t \bg_t)$. Then, $\forall \bu \in \mathcal{V}$, the following inequality holds
\[
\eta_t (\ell_t(\bx_t)- \ell_t(\bu) )
\leq \eta_t \langle \bg_t, \bx_t -\bu\rangle
\leq \frac{1}{2}\|\bx_t-\bu\|^2_2 - \frac{1}{2}\|\bx_{t+1}-\bu\|^2_2 + \frac{\eta_t^2}{2}\|\bg_t\|_2^2~.
\]
\end{lemma}
\begin{proof}
From Proposition~\ref{prop:Euclidean_proj} and Theorem~\ref{thm:grad_ineq}, we have that
\begin{align*}
\|\bx_{t+1}-\bu\|^2_2 - \|\bx_{t}-\bu\|^2_2
&\leq \|\bx_t-\eta_t \bg_t -\bu\|^2_2 - \|\bx_{t}-\bu\|^2_2 \\
&= -2 \eta_t \langle \bg_t, \bx_t - \bu\rangle + \eta_t^2 \|\bg_t\|^2_2 \\
&\leq -2 \eta_t (\ell_t(\bx_t) - \ell_t(\bu)) + \eta_t^2 \|\bg_t\|^2_2~.
\end{align*}
Reordering, we have the stated bound.
\end{proof}

We can prove the following regret guarantee for Algorithm~\ref{alg:pogd}.
\begin{theorem}
\label{thm:pogd}
Let $\mathcal{V} \subseteq \R^d$ be a non-empty closed convex set with diameter $D$, that is, $D:=\sup_{\bx,\by\in \mathcal{V}} \|\bx-\by\|_2$. Let $\ell_1, \dots, \ell_T$ be an arbitrary sequence of convex and differentiable functions on an open set containing $\mathcal{V}$. Pick any $\bx_1 \in \mathcal{V}$ and assume $0<\eta_{t+1}\leq \eta_{t}, \ t=1, \dots, T-1$. Then, $\forall \bu \in \mathcal{V}$, Algorithm~\ref{alg:pogd} satisfies the following regret bound
\[
\sum_{t=1}^T (\ell_t(\bx_t) - \ell_t(\bu))
\leq \frac{D^2}{2\eta_{T}} + \sum_{t=1}^T \frac{\eta_t}{2} \|\bg_t\|^2_2 - \frac{1}{2\eta_T} \|\bx_{T+1}-\bu\|^2_2~.
\]
Moreover, if $\eta_t$ is constant, i.e., $\eta_t=\eta \ \forall t=1,\dots,T$, for all $\bu \in \mathcal{V}$, we have
\[
\sum_{t=1}^T (\ell_t(\bx_t) - \ell_t(\bu))
\leq \frac{\|\bu-\bx_1\|_2^2}{2\eta} + \frac{\eta}{2} \sum_{t=1}^T \|\bg_t\|^2_2 - \frac{1}{2\eta} \|\bx_{T+1}-\bu\|^2_2~.
\]
\end{theorem}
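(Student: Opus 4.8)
The plan is to sum the per-round inequality from Lemma~\ref{lemma:gd_one_step} over $t=1,\dots,T$, exploiting the telescoping structure of the distance terms. For a fixed step size the argument is immediate; the only real work is handling the time-varying $\eta_t$ in the first bound, where the telescoping is no longer clean.

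First I would treat the constant step size case, since it is the cleaner of the two. Dividing the inequality of Lemma~\ref{lemma:gd_one_step} by $\eta$ gives, for each $t$,
\[
\ell_t(\bx_t) - \ell_t(\bu) \leq \frac{1}{2\eta}\left(\|\bx_t-\bu\|_2^2 - \|\bx_{t+1}-\bu\|_2^2\right) + \frac{\eta}{2}\|\bg_t\|_2^2~.
\]
Summing over $t$, the first group telescopes to $\frac{1}{2\eta}(\|\bx_1-\bu\|_2^2 - \|\bx_{T+1}-\bu\|_2^2)$, and discarding the nonpositive $-\frac{1}{2\eta}\|\bx_{T+1}-\bu\|_2^2$ yields exactly the claimed second bound.

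For the varying step size bound I would again divide the Lemma by $\eta_t$, so that round $t$ contributes $\frac{1}{2\eta_t}\|\bx_t-\bu\|_2^2 - \frac{1}{2\eta_t}\|\bx_{t+1}-\bu\|_2^2 + \frac{\eta_t}{2}\|\bg_t\|_2^2$. The sum of distance terms no longer telescopes directly because consecutive coefficients differ. I would reindex to collect the coefficient multiplying each $\|\bx_t-\bu\|_2^2$, which turns the leading part into
\[
\frac{1}{2\eta_1}\|\bx_1-\bu\|_2^2 + \sum_{t=2}^{T}\frac{1}{2}\left(\frac{1}{\eta_t}-\frac{1}{\eta_{t-1}}\right)\|\bx_t-\bu\|_2^2 - \frac{1}{2\eta_T}\|\bx_{T+1}-\bu\|_2^2~.
\]
This is the step I expect to be the main obstacle, and it is exactly where the hypotheses $\eta_{t+1}\le\eta_t$ and the diameter bound enter: monotonicity makes each $\frac{1}{\eta_t}-\frac{1}{\eta_{t-1}}$ nonnegative, so I can bound every $\|\bx_t-\bu\|_2^2$ above by $D^2$. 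The bracketed coefficients then form a telescoping sum whose total is $\frac{1}{2\eta_T}-\frac{1}{2\eta_1}$, and combining with the $\frac{1}{2\eta_1}$ term collapses everything to $\frac{D^2}{2\eta_T}$ after dropping the final nonpositive term. Adding back the gradient-norm terms $\sum_t \frac{\eta_t}{2}\|\bg_t\|_2^2$ gives the first stated bound.
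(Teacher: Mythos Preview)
Your proposal is correct and follows essentially the same route as the paper: divide Lemma~\ref{lemma:gd_one_step} by $\eta_t$, reindex the distance terms, use $\eta_{t+1}\le\eta_t$ to make the cross coefficients nonnegative, bound each $\|\bx_t-\bu\|_2^2$ by $D^2$, and telescope. The only cosmetic differences are that the paper treats the time-varying case first and collects coefficients on $\|\bx_{t+1}-\bu\|_2^2$ rather than on $\|\bx_t-\bu\|_2^2$; the computations are otherwise identical.
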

\begin{proof}
Dividing the inequality in Lemma~\ref{lemma:gd_one_step} by $\eta_t$ and summing over $t=1,\dots,T$, we have
\begin{align*}
\sum_{t=1}^T &(\ell_t(\bx_t) - \ell_t(\bu))
\leq \sum_{t=1}^T \left(\frac{\|\bx_{t}-\bu\|^2_2}{2\eta_t} - \frac{\|\bx_{t+1}-\bu\|^2_2}{2\eta_t}\right) + \sum_{t=1}^T \frac{\eta_t \|\bg_t\|^2_2}{2}  \\
&= \frac{\|\bx_{1}-\bu\|^2_2}{2\eta_1} - \frac{\|\bx_{T+1}-\bu\|^2_2}{2\eta_T}  + \sum_{t=1}^{T-1} \left(\frac{1}{\eta_{t+1}}-\frac{1}{\eta_t}\right)\frac{\|\bx_{t+1}-\bu\|^2_2}{2}  + \sum_{t=1}^T \frac{\eta_t \|\bg_t\|^2_2}{2} \\
&\leq \frac{D^2}{2\eta_1}  + \frac{D^2}{2} \sum_{t=1}^{T-1} \left(\frac{1}{\eta_{t+1}}-\frac{1}{\eta_{t}}\right) + \sum_{t=1}^T \frac{\eta_t \|\bg_t\|^2_2}{2}  - \frac{\|\bx_{T+1}-\bu\|^2_2}{2\eta_T} \\
&= \frac{D^2}{2\eta_1}  + D^2 \left(\frac{1}{2\eta_{T}}-\frac{1}{2\eta_1}\right) + \sum_{t=1}^T \frac{\eta_t \|\bg_t\|^2_2}{2}  - \frac{\|\bx_{T+1}-\bu\|^2_2}{2\eta_T} \\
&= \frac{D^2}{2\eta_{T}} + \sum_{t=1}^T \frac{\eta_t}{2} \|\bg_t\|^2_2- \frac{1}{2\eta_T} \|\bx_{T+1}-\bu\|^2_2~.
\end{align*}

In the same way, when $\eta_t$ is constant, we have
\begin{align*}
\sum_{t=1}^T &(\ell_t(\bx_t) - \ell_t(\bu))
\leq \sum_{t=1}^T \left(\frac{1}{2\eta}\|\bx_{t}-\bu\|^2_2 - \frac{1}{2\eta}\|\bx_{t+1}-\bu\|^2_2\right) + \frac{\eta}{2}\sum_{t=1}^T \|\bg_t\|^2_2 \\
&= \frac{1}{2\eta}\|\bx_{1}-\bu\|^2_2 - \frac{1}{2\eta}\|\bx_{T+1}-\bu\|^2_2 + \frac{\eta}{2}\sum_{t=1}^T \|\bg_t\|^2_2~. \qedhere
\end{align*}
\end{proof}

We can immediately observe a few things.
\begin{itemize}
\item The terms $\|\bg_t\|^2_2$ are due to the fact that we output $\bx_t$ before knowing the loss $\ell_t$. Indeed, these terms become non-positive if we were allowed to observe $\ell_t$ before producing $\bx_t$, see Section~\ref{sec:prescient_omd}.
\item In the case of constant $\eta$, the bound holds even when $D=\infty$. Moreover, the first term in the regret depends on our \emph{initial condition}, that is, how far $\bx_1$ is from the competitor $\bu$.
\item The last term in both bounds is non-positive, so it is usually discarded. Yet, some advanced proofs require it, so it is good to be aware of it.
\item If we want to use time-varying learning rates, we need to control the terms $\|\bx_{t}-\bu\|^2_2$. To do it, we assume a bounded domain $\mathcal{V}$. However, this assumption is false in most of the machine learning applications. We will also see in Section~\ref{sec:unbounded_ogd_fail} that this is not an artifact of the proof, that is, \ac{OGD} can fail in unbounded domains with time-varying learning rates. That said, in the stochastic setting, you can still use a time-varying learning rate in SGD with an unbounded domain, as we will see in Chapter~\ref{ch:o2b}.
\item Another important observation is that the regret bound helps us choose the learning rates $\eta_t$. Indeed, it is the only guideline we have. Any other choice that is not justified by a regret analysis is not justified at all.
\end{itemize}

As we said, the presence of untuned parameters like the learning rates makes no sense in online learning. So, we have to decide on a strategy to set them.
A simple choice is to find the constant learning rate that minimizes the bounds for a fixed number of iterations.
We have to consider the expression
\[
\frac{\|\bu-\bx_1\|_2^2}{2\eta} + \frac{\eta}{2} \sum_{t=1}^T \|\bg_t\|_2^2
\]
and minimize with respect to $\eta$. It is easy to see that the optimal $\eta$ is $\frac{\|\bu-\bx_1\|_2}{\sqrt{\sum_{t=1}^T \|\bg_t\|_2^2}}$, that would give the regret bound
\[
\|\bu-\bx_1\|_2\sqrt{\sum_{t=1}^T \|\bg_t\|_2^2}~.
\]
However, we have a problem: in order to use this stepsize, we should know all the future gradients (that depend on $\eta$ itself!), and the distance between the optimal solution and the initial point. This is clearly impossible: remember that the adversary can choose the sequence of functions. Hence, it can observe your choice of learning rates and decide the sequence so that your learning rate is no longer the correct one!

Indeed, it turns out that this kind of rate is completely impossible because it is ruled out by a lower bound we will show in Chapter~\ref{ch:lower}.
Yet, we will see that it is indeed possible to achieve very similar rates using \emph{adaptive} (Section~\ref{sec:lstar}) and \emph{parameter-free} algorithms (Chapter~\ref{ch:parameterfree}). For the moment, we can observe that we might be happy to minimize a loose upper bound. In particular, assume that
the norm of the gradients is bounded by $L$, that is $\|\bg_t\|_2 \leq L$.
Also, assuming a bounded diameter, we can upper bound $\|\bu-\bx_1\|_2$ by $D$. Hence, we have
\[
\eta^\star = \argmin_\eta \ \frac{D^2}{2\eta} + \frac{\eta L^2 T}{2} = \frac{D}{L \sqrt{T}},
\]
that gives a regret bound of
\begin{equation}
\label{eq:pogd_tuned_regret}
D L \sqrt{T}~.
\end{equation}
So, indeed, the regret is sublinear in time.
In Chapter~\ref{ch:lower}, we will also prove that the this regret upper bound is optimal up to constant multiplicative factors.


\begin{example}
\label{example:guessing_ogd}
\index{guessing game|(}
Consider the guessing game of the first chapter; we can easily solve it with \ac{OGD}.
Indeed, we just need to calculate the gradients, prove that they are bounded, and find a way to calculate the projection of a real number in $[0,1]$.
So, $\ell'_t(x)=2(x-y_t)$, that is bounded for $x,y_t \in [0,1]$.
The projection on $[0,1]$ is just $\Pi_{[0,1]}(x)=\min(\max(x,0),1)$.
With the optimal learning rate, the resulting regret would be $\mathcal{O}(\sqrt{T})$, which is worse than the one we found in the first chapter. Hence, \ac{OGD} with this choice of the learning rate is suboptimal for this problem. We will see how to make it optimal in Chapter~\ref{ch:beyond}.
\index{guessing game|)}
\end{example}

\begin{example}
Let's consider an example of the \ac{OCO} setting. Consider the problem of predicting at day $t$ the opening price of a stock based on a linear combination of the opening prices of the past $d$ days, represented by a vector $\bz_t \in \R^d$. So, our prediction at time $t$ will be $\langle \bz_t, \bx_t\rangle$ and the feasible set is $\mathcal{V}=\R^d$. Once we make our prediction, we receive the true opening price $y_t$, and we pay the \textbf{Huber loss}\index{Huber loss}, a convex loss function robust to outliers, on the difference between our prediction and the opening price:
\[
\ell_t(\bx) =
\begin{cases}
\frac12 (\langle \bz_t, \bx \rangle-y_t)^2, & \text{ for } |\langle \bz_t, \bx\rangle-y_t|\leq \delta,\\
\delta (|\langle \bz_t, \bx\rangle-y_t|-\frac12 \delta), & \text{otherwise}
\end{cases}
\]
where $\delta$ is the threshold that decides what counts as an outlier, and it should be penalized with a linear error rather than a quadratic one.
For this problem, we can use \ac{OGD} with constant learning rate $\eta \propto 1/\sqrt{T}$. So, we just need to calculate the gradients:
\[
\nabla \ell_t(\bx) =
\begin{cases}
(\langle \bz_t, \bx\rangle-y_t) \bz_t, & \text{ for } |\langle \bz_t, \bx\rangle-y_t|\leq \delta,\\
\delta \sign(\langle \bz_t, \bx\rangle-y_t) \bz_t , & \text{otherwise.}
\end{cases}
\]
Assuming that all $\|\bz_t\|_2$ are bounded, we also have that the gradients are bounded, satisfying the assumptions of Theorem~\ref{thm:pogd}. Hence, running \ac{OGD}, on average, the performance of the algorithm will approach the performance of any fixed linear predictor.
\end{example}
\index{Online Gradient Descent algorithm|)textbf}


\section{Online Subgradient Descent}
\label{sec:osd}

In the previous section, we introduced Projected Online Gradient Descent. However, the differentiability assumption for the $\ell_t$ is quite strong. What happens when the losses are convex but not differentiable? For example, $\ell_t(x)=|x-10|$. Note that this situation is more common than one would think. For example, the hinge loss\index{hinge loss|textbf}, $\ell_t(\bx)=\max(1-y_t\langle \bz_t,\bx\rangle,0)$,
is not differentiable.
It turns out that we can just use \ac{OGD}, substituting \emph{subgradients} for the gradients.
For this, we need some more convex analysis.

\subsection{Convex Analysis Bits: Subgradients}

\index{subgradient|(textbf}

First, we need a couple of technical definitions.
\begin{definition}
A function $f:\R^d \to [-\infty, +\infty]$ is \textbf{closed}\index{function!closed|textbf} iff $\{\bx: f(\bx) \leq \alpha\}$ is closed for every $\alpha \in \R$.
\end{definition}
Note that in any Euclidean space (and more generally in any Hausdorff space\index{Hausdorff space}) a function is closed iff it is lower semicontinuous\index{function!lower semicontinuous}~\citep[Lemma 1.24]{BauschkeC17}.

\begin{example}
The indicator function\index{indicator function} of a set $\mathcal{V} \subset \R^d$ is closed iff $\mathcal{V}$ is closed.
\end{example}

\begin{definition}
If a function\index{function!proper|textbf} $f$ is nowhere $-\infty$ and finite somewhere, then $f$ is called \textbf{proper}.
\end{definition}
In this book, we are mainly interested in convex proper functions, which better conform to our intuition of what a convex function looks like.

\begin{example}
The indicator function\index{indicator function} of a set $\mathcal{V} \subset \R^d$ is proper\index{function!proper} iff $\mathcal{V}$ is non-empty.
\end{example}

Let's now define formally what a subgradient is.
\begin{definition}
For a proper\index{function!proper} function $f:\R^d \to (-\infty, +\infty]$, we define a \textbf{subgradient} of $f$ in $\bx \in \R^d$ as a vector $\bg \in \R^d$ that satisfies
\[
f(\by)\geq f(\bx) + \langle \bg, \by-\bx\rangle, \quad \forall \by \in \R^d~.
\]
\end{definition}
Basically, a subgradient of $f$ in $\bx$ is any vector $\bg$ that allows us to construct a linear lower bound to $f$.
Note that the subgradient might not be unique, so we denote the \emph{set} of subgradients of $f$ in $\bx$ by $\partial f(\bx)$, called the \textbf{subdifferential of $f$ at $\bx$}.\index{subdifferential set|textbf}
A function $f$ is \textbf{subdifferentiable} at $\bx$\index{function!subdifferentiable|textbf} if $\partial f(\bx)\neq \emptyset$.

%
%

Observe that if $f$ is proper\index{function!proper} and convex, then $\partial f(\bx)$ is empty for $\bx \notin \dom f$, because the inequality cannot be satisfied when $f(\bx)=+\infty$. Also, the domain of $\partial f$, denoted by $\dom \partial f$, is the set of all $\bx \in \R^d$ such that $\partial f(\bx)$ is non-empty; it is a subset of $\dom f$.
A proper\index{function!proper} convex function $f$ is always\footnote{The stronger version of this theorem uses the \emph{relative interior} of the domain of $f$, however, we never need to use it, so we will keep it simple.} subdifferentiable\index{function!subdifferentiable} on $\interior \dom f$ \citep[Theorem 23.4]{Rockafellar70}.

Note that we did not assume the function $f$ to be convex in the definition of a subgradient. However, the following theorem tells us that if we have subgradients everywhere, then the function must be convex.
\begin{theorem}
\label{thm:subgradients_everywhere_implies_convexity}
Let $f:\R^d \to \R$ and $\mathcal{V}\subseteq \R^d$ be a convex set. If $\partial f(\bx)\neq \emptyset$ for all $\bx \in \mathcal{V}$, then the restriction of $f$ to $\mathcal{V}$ is convex\index{function!convex}.
\end{theorem}
\begin{proof}
For any $\bx_1,\bx_2 \in \mathcal{V}$ and for any $\lambda \in [0,1]$, consider $\by=\lambda \bx_1+(1-\lambda) \bx_2 \in \mathcal{V}$ and $\bg \in \partial f(\by)$. Then, we have
\begin{align*}
f(\bx_1) &\geq f(\by) +\langle \bg, \bx_1-\by\rangle = f(\by) + (1-\lambda)\langle \bg, \bx_1 - \bx_2\rangle,\\
f(\bx_2) &\geq f(\by) +\langle \bg, \bx_2-\by\rangle = f(\by) + \lambda\langle \bg, \bx_2 - \bx_1\rangle~.
\end{align*}
Multiplying the first inequality by $\lambda$ and the second one by $1-\lambda$ and adding them together, we have
\[
\lambda f(\bx_1) + (1-\lambda) f(\bx_2)
\geq f(\by) = f(\lambda \bx_1+(1-\lambda) \bx_2),
\]
that implies the convexity of $f$ in $\mathcal{V}$.
\end{proof}

The unique subgradient of a differentiable function is just the gradient, as quantified in the next theorem.
\begin{theorem}[{\citealp[Theorem 25.1]{Rockafellar70}}]
\label{thm:diff_subdiff}
If the function $f:\R^d \to [-\infty, +\infty]$ is convex and finite in $\bx$, it is differentiable in $\bx$ iff the subdifferential consists of a single element, that turns out to be $\nabla f(\bx)$.
\end{theorem}

\index{subgradient!sum of functions|(}
We can also calculate subgradients of the sum of functions.
\begin{theorem}
\label{thm:sum_subgradients}
Let $f_1, \dots ,f_m$ be proper functions on $\R^d$, and $F=f_1+\dots+f_m$. Then, $\partial F(\bx) \supseteq \partial f_1(\bx) + \dots + \partial f_m(\bx), \forall \bx$. Note that if one of the sets $\partial f_i(\bx)$ is empty, the right-hand side is empty and the inclusion is vacuous.
Moreover, if $f_1, \dots ,f_m$ are also convex, closed, and $\dom f_m \cap \bigcap_{i=1}^{m-1} \interior \dom f_i \neq \emptyset$, then actually $\partial F(\bx) = \partial f_1(\bx) + \dots + \partial f_m(\bx), \forall \bx$.
\end{theorem}
\begin{proof}
For any $\bz$, when the right-hand side is non-empty, choose $\bg_i \in \partial f_i(\bz)$ for $i=1, \dots,m$.
From the definition of subgradient, we have
\[
F(\bx)
= \sum_{i=1}^m f_i(\bx)
\geq \sum_{i=1}^m (f_i(\bz) + \langle \bg_i, \bx - \bz\rangle)
= F(\bz) + \left\langle \sum_{i=1}^m \bg_i, \bx-\bz\right\rangle~.
\]
Hence, $\sum_{i=1}^m \bg_i \in \partial F(\bz)$.
Since this holds for every choice of $\bg_i \in \partial f_i(\bz)$, we have $\partial F(\bz) \supseteq \partial f_1(\bz)+\dots+\partial f_m(\bz)$.

For the second statement, see \citet[Corollary 16.50]{BauschkeC17}.
\end{proof}
\index{subgradient!sum of functions|)}

\begin{example}
Let $f(x)=|x|$. Then, the subdifferential set $\partial f(x)$ is
\[
\partial f(x) = \begin{cases} \{1\}, & x>0,\\ [-1,1], & x=0,\\ \{-1\}, & x<0~. \end{cases}
\]
\end{example}

\begin{example}
\label{example:normal_cone}
Let's calculate the subgradient of the indicator function\index{indicator function!subgradients of} for a non-empty convex set $\mathcal{V} \subset \R^d$.
By definition, $\bg \in \partial \indicator_{\mathcal{V}}(\bx)$ if
\[
\indicator_{\mathcal{V}}(\by) \geq \indicator_{\mathcal{V}}(\bx) + \langle \bg, \by-\bx\rangle, \quad \forall \by \in \R^d~.
\]
This condition implies that $\bx \in \mathcal{V}$ and $0 \geq \langle \bg, \by-\bx\rangle$ for all $\by \in \mathcal{V}$ (because for $\by \notin \mathcal{V}$ the inequality is always verified). The set of all $\bg$ that satisfy the above inequality is called the \textbf{normal cone to $\mathcal{V}$ at $\bx$}\index{normal cone|textbf} and it is denoted by $\mathcal{N}_{\mathcal{V}}(\bx)$. Note that the normal cone for any $\bx\in \interior \mathcal{V}$ is $\{\boldsymbol{0}\}$ (Hint: take $\by=\bx+\epsilon\bg$). For example, for $\mathcal{V}=\{\bx \in \R^d : \|\bx\|_2\leq 1\}$, $\mathcal{N}_{\mathcal{V}}(\bx) = \{\alpha \bx:\alpha \geq0\}$ for all $\bx : \|\bx\|_2=1$.
\end{example}

\index{subgradient!max of functions|(}
Another useful theorem is to calculate the subdifferential of the pointwise maximum of convex functions.
\begin{theorem}[{\citealp[Theorem 18.5]{BauschkeC17}}]
\label{thm:subdiff_max}
Let $(f_i)_{i\in I}$ be a finite set of proper convex functions from $\R^d$ to $(-\infty, +\infty]$. Suppose $\bx \in \bigcap_{i \in I} \dom f_i$ and each $f_i$ is continuous at $\bx$. Set $F = \max_{i \in I} f_i$ and let $A(\bx) = \{i \in I | f_i(\bx)=F(\bx)\}$ be the set of the active functions. Then, denoting by $\conv$ the \textbf{convex hull}\index{convex hull|textbf}, we have
\[
\partial F(\bx) = \conv \bigcup_{i \in A(\bx)} \partial f_i(\bx)~.
\]
\end{theorem}

\begin{example}[Subgradients of the Hinge loss]
Consider the hinge loss $\ell(\bx)=\max(1-\langle \bz,\bx\rangle,0)$\index{hinge loss!subgradients of} for $\bz \in \R^d$. The subdifferential set is
\[
\partial \ell(\bx)
= \begin{cases}
\{\boldsymbol{0}\}, & 1-\langle \bz,\bx\rangle <0\\
\{-\alpha \bz : \alpha \in [0,1]\}, & 1-\langle \bz,\bx\rangle =0\\
\{-\bz\}, & \text{otherwise}
\end{cases}~.
\]
\end{example}
\index{subgradient!max of functions|)}

\index{subgradient!affine transformations|(}
Finally, we can show a result on the subgradient of affine transformations.
\begin{theorem}
\label{thm:affine_subgrad}
\index{function!affine}
Let $f:\R^m \to (-\infty, +\infty]$ be proper. Define $h(\bx)=f(\bA \bx+\bb)$, where $\bA\in \R^{m \times d}$ and $\bb \in \R^m$. Then, we have $\bA^\top \partial f(\bA \bx+\bb) \subseteq \partial h(\bx)$.
\end{theorem}
\begin{proof}
For any $\bg \in \partial f(\bA \bx +\bb)$, we want to show that $\bA^\top \bg$ is a subgradient of $h$ in $\bx$. From the definition of subgradient, for all $\by \in \R^d$ we have
\begin{align*}
h(\by)
&=f(\bA\by+\bb)
\geq f(\bA \bx+\bb) + \langle \bg, \bA\by+\bb-(\bA\bx+\bb)\rangle\\
&= h(\bx) + \langle \bA^\top \bg, \by-\bx\rangle,
\end{align*}
that implies our stated result.
\end{proof}
\index{subgradient!affine transformations|)}

\index{subgradient!Lipschitz convex function|(}
We also have a handy result that upper bounds the norm of subgradients of convex \emph{Lipschitz} functions.

\begin{definition}
Let $f:\R^d \to (-\infty, +\infty]$. We say that $f$ is \textbf{$L$-Lipschitz}\index{function!Lipschitz|textbf} over a set $\mathcal{V}\subseteq \dom f$ with respect to a norm $\|\cdot\|$ if $|f(\bx)-f(\by)|\leq L \|\bx-\by\|, \ \forall \bx,\by \in \mathcal{V}$.
\end{definition}

\begin{theorem}
\label{thm:subgradient_lipschitz}
Let $f:\R^d \to (-\infty, +\infty]$ be convex and proper. Then, $f$ is $L$-Lipschitz\index{function!Lipschitz} in $\interior \dom f$ with respect to the L$_2$ norm iff for all $\bx \in \interior \dom f$ and $\bg \in \partial f(\bx)$ we have $\|\bg\|_2\leq L$.
\end{theorem}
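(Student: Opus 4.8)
The plan is to prove the two implications of the equivalence separately, using the subgradient inequality as the bridge between the local (pointwise bound on $\partial f$) and the global (Lipschitz) conditions. Throughout I would rely on the fact, quoted earlier from \citep[Theorem 23.4]{Rockafellar70}, that a proper convex function is subdifferentiable at every point of $\interior \dom f$, so that the set $\partial f(\bx)$ is nonempty wherever I need to pick an element from it.

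For the direction assuming $f$ is $L$-Lipschitz, I would fix $\bx \in \interior \dom f$ and $\bg \in \partial f(\bx)$ and aim to show $\|\bg\|_2 \leq L$. If $\bg = \bzero$ the bound is trivial, so assume $\bg \neq \bzero$ and probe the subgradient inequality in the worst-case direction $\by = \bx + \epsilon \tfrac{\bg}{\|\bg\|_2}$ for small $\epsilon > 0$. Since $\bx$ is interior, $\by \in \dom f$ for $\epsilon$ small enough, and the definition of subgradient gives $f(\by) - f(\bx) \geq \langle \bg, \by - \bx\rangle = \epsilon \|\bg\|_2$; the Lipschitz hypothesis gives the opposing bound $f(\by) - f(\bx) \leq L \|\by - \bx\|_2 = L\epsilon$. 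Chaining these and cancelling $\epsilon$ yields $\|\bg\|_2 \leq L$.

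For the converse, assuming $\|\bg\|_2 \leq L$ for all $\bx \in \interior \dom f$ and $\bg \in \partial f(\bx)$, I would take arbitrary $\bx, \by \in \interior \dom f$, choose any $\bg \in \partial f(\bx)$ (nonempty by subdifferentiability), and apply the subgradient inequality together with Cauchy--Schwarz to get $f(\bx) - f(\by) \leq \langle \bg, \bx - \by\rangle \leq \|\bg\|_2 \|\bx - \by\|_2 \leq L \|\bx - \by\|_2$. Repeating the argument with $\bg' \in \partial f(\by)$ gives the symmetric inequality $f(\by) - f(\bx) \leq L \|\bx - \by\|_2$, and combining the two yields $|f(\bx) - f(\by)| \leq L \|\bx - \by\|_2$, which is exactly $L$-Lipschitzness on $\interior \dom f$.

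The only delicate point, and the one I would state explicitly, is the first direction's need that the perturbed point $\by$ stay in the domain: this is precisely why the hypothesis restricts attention to $\interior \dom f$ rather than all of $\dom f$, since openness guarantees a full neighborhood of $\bx$ on which $f$ is finite and the Lipschitz bound applies. Everything else is a direct application of the subgradient definition and Cauchy--Schwarz, so no further machinery should be required.
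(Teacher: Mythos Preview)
Your proposal is correct and follows essentially the same argument as the paper's proof: the same perturbation $\by = \bx + \epsilon\,\bg/\|\bg\|_2$ for the forward direction and the same subgradient-plus-Cauchy--Schwarz argument (applied once at $\bx$ and once at $\by$) for the converse. Your added remark on why interiority is needed to keep the perturbed point in the domain is a welcome clarification not spelled out in the paper.
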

\begin{proof}
First, we assume $f$ to be $L$-Lipschitz, then $|f(\bx)-f(\by)|\leq L \|\bx-\by\|_2, \ \forall \bx, \by \in \interior \dom f$. Let $\bg \in \partial f(\bx)$.
If $\|\bg\|_2 =0$, then the theorem is true. So, we can safely assume it to be different from 0.
For $\epsilon>0$ small enough $\by=\bx+\epsilon \frac{\bg}{\|\bg\|_2} \in \interior \dom f$, then
\begin{align*}
L \epsilon
= L \|\bx-\by\|_2
\geq |f(\by)-f(\bx)|
\geq f(\by) - f(\bx)
\geq \langle \bg, \by -\bx\rangle
= \epsilon \|\bg\|_2,
\end{align*}
that implies that $\|\bg\|_2\leq L$.

For the other implication, the subgradient definition and Cauchy--Schwarz inequality\index{inequality!Cauchy--Schwarz} give
\[
f(\bx)-f(\by)
\leq \|\bg\|_2 \|\bx-\by\|_2
\leq L \|\bx-\by\|_2,
\]
for any $\bx,\by \in \interior \dom f$. Taking $\bg \in \partial f(\by)$, we also get
\[
f(\by)-f(\bx) \leq L \|\bx-\by\|_2~. \qedhere
\]
\end{proof}
\index{subgradient!Lipschitz convex function|)}

Finally, let's dispel the common misconception that a convex function must be differentiable on its domain except possibly at countably many points. For example, consider the convex function $f:\R^2\to\R$ defined as $f(\bx)=|x_1|$ that is not differentiable on the line segment between $\bv=(0,0)$ and $\bw=(0,1)$.
\index{subgradient|)textbf}

\subsection{Analysis with Subgradients}

\index{Online Subgradient Descent algorithm|(textbf}

As I promised you, with the proper mathematical tools, analyzing online algorithms becomes easy. Indeed, switching from gradient to subgradient comes for free! In fact, our analysis of \ac{OGD} with differentiable losses holds as is when using subgradients instead of gradients.
The reason is that the only property of the gradients that we used in the proof of Theorem~\ref{thm:pogd} was that
\[
\ell_t(\bx_t) - \ell_t(\bu) \leq \langle \bg_t, \bx_t - \bu\rangle,
\]
where $\bg_t = \nabla \ell_t(\bx_t)$. However, the exact same property holds when $\bg_t \in \partial \ell_t(\bx_t)$.
Hence, the regret bounds we proved hold as well, just replacing differentiability with subdifferentiability and gradients with subgradients.
In particular, we have the following lemma.
\begin{lemma}
\label{lemma:sd_one_step}
Let $\mathcal{V} \subseteq \R^d$ be a non-empty closed convex set, $\ell_t:\R^d \to (-\infty, +\infty]$ a function subdifferentiable on $\mathcal{V}$, and $\eta_t>0$. Set $\bg_t \in \partial \ell_t(\bx_t)$ and $\bx_{t+1} = \Pi_{\mathcal{V}}(\bx_t - \eta_t \bg_t)$. Then, $\forall \bu \in \mathcal{V}$, we have
\[
\eta_t (\ell_t(\bx_t)- \ell_t(\bu) )
\leq \eta_t \langle \bg_t, \bx_t -\bu\rangle
\leq \frac{1}{2}\|\bx_t-\bu\|^2_2 - \frac{1}{2}\|\bx_{t+1}-\bu\|^2_2 + \frac{\eta_t^2}{2}\|\bg_t\|_2^2~.
\]
\end{lemma}

So, we can state the \textbf{\ac{OSD}} algorithm in Algorithm~\ref{alg:posd}, where the only difference from Algorithm~\ref{alg:pogd} is line \ref{code:update_in_ogd}.

\begin{algorithm}[t]
\caption{Projected Online Subgradient Descent (OSD)}
\label{alg:posd}
\begin{algorithmic}[1]
{
    \REQUIRE{Non-empty closed convex set $\mathcal{V} \subseteq \R^d$, $\bx_1 \in \mathcal{V}$, $\eta_1,\dots,\eta_T>0$}
    \FOR{$t=1$ {\bfseries to} $T$}
    \STATE{Output $\bx_t \in \mathcal{V}$}
    \STATE{Pay the loss $\ell_t(\bx_t)$, where $\ell_t$ is subdifferentiable on $\mathcal{V}$}
    \STATE{Set $\bg_t \in \partial \ell_t(\bx_t)$} \label{code:update_in_ogd}
    \STATE{$\bx_{t+1} = \Pi_{\mathcal{V}}(\bx_t - \eta_t \bg_t) = \argmin_{\by \in \mathcal{V}} \ \|\bx_t - \eta_t \bg_t -\by\|_2$}
    \ENDFOR
}
\end{algorithmic}
\end{algorithm}

\begin{example}
\index{guessing game|(}
Consider the guessing game of the first chapter again, but now change the loss function to the absolute loss of the difference: $\ell_t(x)=|x-y_t|$.
Now we will need to use \ac{OSD}, because the functions are non-differentiable.
We can easily see that
\[
\partial \ell_t(x)
=\begin{cases}
\{1\}, & x>y_t\\
[-1,1], & x=y_t\\
\{-1\}, & x<y_t~.
\end{cases}
\]
Again, running \ac{OSD} with the optimal learning rate on this problem will give us immediately a regret of $\mathcal{O}(\sqrt{T})$ as $T \to \infty$, without having to design a specific strategy for it.
\index{guessing game|)}
\end{example}

\index{Online Subgradient Descent algorithm|)textbf}

\subsection{Unit Analysis Bits}

\index{unit analysis|(textbf}

One might wonder why I did not ``simplify the math'' by removing some of the constants in the bounds, for example, by setting the Lipschitz constant $L$ to $1$. While this is very common in machine learning papers, this is a bad idea because it makes it (i) difficult to check the correctness of an equation, and (ii) difficult to correctly implement the algorithm. To explain why this is the case, we will have to consider the concepts of ``units''.

When we look at a mathematical formula coming from a physical model, we are used to the idea that each quantity has a ``unit'' of measurement, e.g., meters, seconds, Joules. A fundamental principle is that any physically meaningful equation must be dimensionally consistent. That is, the units on the left-hand side of an equation must be the same as the units on the right-hand side. For example, you cannot sum a quantity measured in meters with one measured in seconds.

It turns out that this simple idea is not limited to physical models and it can be used as a sanity check for \emph{any} mathematical formula, including the ones we see in online learning. Let's assign a symbolic unit to each quantity we are dealing with. For example, let's denote the units of our predictions $\bx$ by $[\bx]$ and the units of the loss $\ell$ by $[\ell]$.
We can now check each formula by following a few simple rules. First, we have to make sure that each term has the same units.
Another rule to keep in mind is that transcendental functions are only defined on unitless quantities. So, for example, we can take logarithms and exponentials of ratios of quantities with the same units, to obtain a unitless quantity, but not of quantities with units.
Finally, while probabilities are unitless, the probability density function has the units of inverse of the random variable it represents, because it represents the probability per unit of the variable it is defined over.

The importance of this check cannot be overstated. When we see an equation where the units do not match, either it is plainly wrong, or there is a constant with units hidden somewhere. Moreover, the dimensional analysis will immediately tell us the dependence of a quantity on the others. Let's do a practical example considering \ac{OSD}.

Consider the \ac{OSD} update in Algorithm~\ref{alg:posd}, where $\mathcal{V}=\R^d$: $\bx_{t+1} = \bx_t - \eta \bg_t$.
For this equation to be coherent, the term $\eta \bg_t$ must have the same units as $\bx_t$. Let's see if this gives us some constraints on the units of the learning rate $\eta$.
The units of $\bg_t$, from the definition of subgradient, are the units of the loss divided by the units of the variable we are subdifferentiating with respect to. So, we have
\[
[\bg_t] = \frac{[\ell]}{[\bx]}~.
\]
Now, from the consistency of the update rule, we must have $[\bx_t] = [\eta \bg_t] = [\eta] [\bg_t]$. Hence, we can derive the units of the learning rate:
\[
[\eta] = \frac{[\bx]}{[\bg_t]} = \frac{[\bx]}{[\ell]/[\bx]} = \frac{[\bx]^2}{[\ell]}~.
\]
This shows that \emph{the learning rate is not a unitless quantity}. So, even without deriving the regret guarantee, from the unit analysis, we immediately know that the learning rate must depend on the problem characteristics in a certain way.

Finally, let's perform a sanity check on the regret bound for \ac{OSD} with a constant learning rate (that is, the one in Theorem~\ref{thm:pogd}):
\[
\sum_{t=1}^T (\ell_t(\bx_t) - \ell_t(\bu))
\leq \frac{\|\bu-\bx_1\|_2^2}{2\eta} + \frac{\eta}{2} \sum_{t=1}^T \|\bg_t\|^2_2~.
\]
The left-hand side is a sum of losses, so its unit is $[\ell]$. Let's check the two terms on the right-hand side, to verify that they also have the unit of $[\ell]$.
For the first term, we have
\[
\left[\frac{\|\bu-\bx_1\|_2^2}{\eta}\right] = \frac{[\bx]^2}{[\eta]} = \frac{[\bx]^2}{[\bx]^2/[\ell]} = [\ell]~.
\]
For the second term, we have
\[
\left[\eta \sum_{t=1}^T \|\bg_t\|^2_2\right] = [\eta] [\bg_t]^2 = \frac{[\bx]^2}{[\ell]} \frac{[\ell]^2}{[\bx]^2} = [\ell]~.
\]

Let me stress that having a formula with wrong units is not only a theoretical issue, but a very practical one. Indeed, constants with hidden units in any algorithm can be problematic. Let's show it with an example.
For example, suppose our variable $\bx_t$ represents a location in meters, and we use a learning rate schedule such as $\eta = 1/\sqrt{T}$. Now, suppose we change our unit of measurement for $\bx_t$ from meters to kilometers. Our new variable is $\bx'_t = \bx_t / 1000$. Since this is merely a change of units, the underlying learning problem is identical, and we should expect the algorithm's behavior to be invariant. However, the gradient with respect to the new variable is $\bg'_t = \nabla_{\bx'} \ell_t(1000 \bx'_t) = 1000 \nabla_{\bx} \ell_t(\bx_t) = 1000 \bg_t$. If we still use the dimensionless learning rate $\eta = 1/\sqrt{T}$, the update in the new coordinate system becomes $\bx'_{t+1} = \bx'_t - \eta \bg'_t = \bx_t/1000 - (1/\sqrt{T}) (1000 \bg_t)$. Converting back to meters by multiplying by 1000, we get $\bx_{t+1} = \bx_t - (1000^2/\sqrt{T}) \bg_t$. The effective learning rate has been scaled by a factor of one million, which will drastically change the algorithm's path, all because of a simple change in units.

This failure occurs because of the constant with hidden units in the learning rate, which hides the fact that we should change the learning rate when we change the units of $\bx_t$. In fact, the correct learning rate $\eta$ has units of $[\bx]^2/[\ell]$ and should have been scaled by a factor of $(1/1000)^2 = 1/1000000$ when we switched the units of $\bx_t$ from meters to kilometers.


\index{unit analysis|)textbf}

\section{Linear Regret: From Convex Losses to Linear Losses}
\label{sec:oco_to_olo}

\index{online linear optimization|(}

Let's take a deeper look at this step:
\[
\ell_t(\bx_t) - \ell_t(\bu)
\leq \langle \bg_t, \bx_t-\bu\rangle, \quad \forall \bu \in \mathcal{V}~.
\]
Summing over time, we have
\[
\sum_{t=1}^T (\ell_t(\bx_t) - \ell_t(\bu))
\leq \sum_{t=1}^T \langle \bg_t, \bx_t-\bu\rangle, \quad \forall \bu \in \mathcal{V}~.
\]
Now, define the linear (and convex) losses $\tilde{\ell}_t(\bx):=\langle \bg_t, \bx\rangle$, so we have
\[
\sum_{t=1}^T (\ell_t(\bx_t) - \ell_t(\bu))
\leq \sum_{t=1}^T (\tilde{\ell}_t(\bx_t) - \tilde{\ell}_t(\bu)), \quad \forall \bu \in \mathcal{V}~.
\]
This is more powerful than it seems: we upper-bounded the regret with respect to the convex losses $\ell_t$ with a regret with respect to another sequence of linear losses. This is important because it implies that we can build online algorithms that deal only with linear losses, and through the reduction above, they can be seamlessly used as \ac{OCO} algorithms! Note that this does not imply that this reduction is always optimal, as we saw in Example~\ref{example:guessing_ogd}. But, it allows us to easily construct optimal \ac{OCO} algorithms in many interesting cases.

So, we will often consider just the problem of minimizing the \textbf{linear regret}\index{regret!linear|textbf}
\[
\sum_{t=1}^T \langle \bg_t, \bx_t\rangle - \sum_{t=1}^T \langle \bg_t, \bu\rangle, \quad \forall \bu \in \mathcal{V},
\]
for an arbitrary sequence of vectors $\bg_1, \dots, \bg_T \in \R^d$.
This problem is called \textbf{\ac{OLO}}\index{online linear optimization|)}.

\section{History Bits}
\citet{Shor64} first introduced the subgradient descent method using a related concept of generalized gradients for non-differentiable functions.
The concept of subgradients and the possibility of developing a calculus for them appeared for the first time in \citet{Rockafellar63}.
Projected \ac{OGD} with time-varying learning rate and the name ``Online Convex Optimization''\index{online convex optimization} were introduced by \citet{Zinkevich03}, but the \ac{OCO} framework was introduced earlier by \citet{Gordon99, Gordon99b}, which also introduced the idea of reducing \ac{OCO} to \ac{OLO}.

Before the \ac{OCO} framework, the online learning community focused on specific losses and mostly linear predictors, see \citet{Cesa-BianchiL06}. Moreover, the concept of subgradients is also a recent addition to the online learning literature, and the earliest papers using subgradients of loss functions seem to be \citet{Zhang04b,ShalevS06b}.
However, subgradients were implicitly used in previous analyses too, see for example \citet[Theorem 4]{Cesa-Bianchi99}.

It is impossible not to notice that the literature on \ac{OCO} is heavily biased toward the use of bounded domains.
Yet, there are very few practical applications of \ac{OCO} where the domain is bounded. Moreover, as shown in Theorem~\ref{thm:pogd}, it is possible to have a sublinear regret even in unbounded domains.
I speculate that the insistence on using bounded domains is due to two main reasons: (i) the less general definition of regret (derived from the setting of learning with experts (Section~\ref{sec:lea})) as
\[
\sum_{t=1}^T \ell_t(\bx_t) - \min_{\bu \in \mathcal{V}}\sum_{t=1}^T \ell_t(\bu)
\]
that requires the existence of a minimizer of the sum of the losses, and (ii) the fact that \ac{OSD} with time-varying learning rates has a vacuous regret upper bound on unbounded domains. However, we now know that both issues can easily be fixed: (i) by defining the regret with respect to arbitrary competitors, and (ii) by using different algorithms, for example \ac{FTRL} which we will see in Chapter~\ref{ch:ftrl}. Indeed, even in offline optimization, the existence of a minimizer is not required to guarantee that the suboptimality gap goes to zero, as we will see in Example~\ref{ex:svm_with_oco}. So, it is probably time for the community to stop using the crutch of bounded domains just to avoid technical difficulties.

\section{Exercises}

\begin{exer}
Prove that $\sum_{t=1}^T \frac{1}{\sqrt{t}} \leq 2\sqrt{T}-1$.
\end{exer}

\begin{exer}
Using the inequality in the previous exercise, prove that the learning rate $\eta_t= \frac{D}{L\sqrt{t}}$ gives rise to a regret only a constant multiplicative factor worse than the one in \eqref{eq:pogd_tuned_regret}.
\end{exer}

\begin{exer}
Calculate the subdifferential set of the $\epsilon$-insensitive loss\index{epsilon-insensitive loss@$\epsilon$-insensitive loss}: $f(x)=\max(|x-y|-\epsilon,0)$. It is a loss used in regression problems where we do not want to penalize predictions $x$ within $\pm \epsilon$ of the correct value $y$.
\end{exer}

\begin{exer}
Using the definition of subgradient, find the subdifferential set of $f(\bx)=\|\bx\|_2$, $\bx \in \R^d$.
\end{exer}

\begin{exer}
Consider Projected Online Subgradient Descent for the Example~\ref{ex:failure_ftl} on the failure of Follow-the-Leader: can we use it on that problem? Would it guarantee sublinear regret? How would the behaviour of the algorithm differ from \ac{FTL}?
\end{exer}

\acresetall

\chapter{Online-to-Batch Conversions}
\label{ch:o2b}

In this chapter, we take a break from online learning theory and see some applications of online learning to other domains.
For example, we may wonder what the connection is between online learning and stochastic optimization. Given that projected \ac{OSD} looks basically the same as projected stochastic (sub)gradient descent, they must have something in common.
Indeed, for example, we can reduce stochastic optimization of convex functions to \ac{OCO}. Let's see how.

\acresetall

\section{From Online Learning to Stochastic Optimization}

\index{online-to-batch conversion|(textbf}

\begin{theorem}
\label{thm:o2b}
Let $\mathcal{V}$ be a non-empty closed convex set of $\R^d$, $F(\bx)=\E[f(\bx,\bxi)]$ where the expectation is with respect to $\bxi$ drawn from a distribution $\rho$ over some vector space $\mathcal{D}$, and $f:\R^d \times \mathcal{D}\to (-\infty,+\infty]$ is convex and subdifferentiable in the first argument on $\mathcal{V}$. Draw $T$ samples $\bxi_1,\dots,\bxi_T$ i.i.d. from $\rho$ and construct the sequence of losses $\ell_t(\bx) = \alpha_t f(\bx,\bxi_t)$, where $\alpha_t>0$ are deterministic. Run any \ac{OCO} algorithm over the losses $\ell_t$ on $\mathcal{V}$, to construct the sequence of predictions $\bx_1,\dots,\bx_{T}$. The algorithm may be randomized, but its internal randomness is independent of $\bxi_1,\dots,\bxi_T$.
Then, we have
\[
\E\left[F\left(\frac{1}{\sum_{t=1}^T \alpha_t}\sum_{t=1}^T \alpha_t \bx_t\right)\right]
\leq F(\bu) + \frac{\E[\Regret_T(\bu)]}{\sum_{t=1}^T \alpha_t}, \quad \forall \bu \in \mathcal{V},
\]
where the expectation is with respect to $\bxi_1,\dots, \bxi_T$ and the internal randomness of the algorithm.
\end{theorem}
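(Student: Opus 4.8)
The plan is to combine three ingredients: the convexity of $F$ (to move the average inside the function), the crucial independence of each prediction $\bx_t$ from the fresh sample $\bxi_t$ (which converts an online-type quantity into the population objective $F$), and the regret guarantee of the underlying OCO algorithm.

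First I would observe that, since $f(\cdot, \bxi)$ is convex for every $\bxi$, the objective $F(\bx) = \E[f(\bx, \bxi)]$ is convex as an expectation of convex functions. Writing $\bar{\bx} = \frac{1}{\sum_{t=1}^T \alpha_t} \sum_{t=1}^T \alpha_t \bx_t$ for the weighted average and applying convexity of $F$ (the finite form of Jensen's inequality), I get
\[
F(\bar{\bx}) \leq \frac{1}{\sum_{t=1}^T \alpha_t} \sum_{t=1}^T \alpha_t F(\bx_t),
\]
and taking expectations over $\bxi_1, \dots, \bxi_T$ preserves the inequality.

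The key step is to rewrite $\E[F(\bx_t)]$. Because the prediction $\bx_t$ produced by the OCO algorithm depends only on $\ell_1, \dots, \ell_{t-1}$, hence only on $\bxi_1, \dots, \bxi_{t-1}$, it is independent of $\bxi_t$. Conditioning on $\bxi_1, \dots, \bxi_{t-1}$ and using that $\bxi_t$ is drawn from $\rho$ independently, the inner expectation over $\bxi_t$ recovers exactly $F(\bx_t)$:
\[
\E[f(\bx_t, \bxi_t)] = \E\big[\E[f(\bx_t, \bxi_t)\mid \bxi_1, \dots, \bxi_{t-1}]\big] = \E[F(\bx_t)].
\]
Multiplying by $\alpha_t$, this says $\E[\ell_t(\bx_t)] = \alpha_t \E[F(\bx_t)]$, so that $\sum_{t=1}^T \alpha_t \E[F(\bx_t)] = \E\left[\sum_{t=1}^T \ell_t(\bx_t)\right]$.

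Finally I would invoke the regret definition, $\sum_{t=1}^T \ell_t(\bx_t) \leq \sum_{t=1}^T \ell_t(\bu) + \Regret_T(\bu)$, for the fixed competitor $\bu$. Since $\bu$ is deterministic, $\E[\ell_t(\bu)] = \alpha_t \E[f(\bu, \bxi_t)] = \alpha_t F(\bu)$, whence $\E\left[\sum_{t=1}^T \ell_t(\bx_t)\right] \leq F(\bu) \sum_{t=1}^T \alpha_t + \E[\Regret_T(\bu)]$. Dividing by $\sum_{t=1}^T \alpha_t$ and chaining with the Jensen bound gives the claim. The main obstacle, really the only conceptual point, is the independence argument: one must be careful that $\bx_t$ is measurable with respect to the past samples only, so that averaging over the fresh sample $\bxi_t$ reproduces $F$. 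This is precisely where the online-to-batch trick lives, and it would break if the algorithm were allowed to see $\bxi_t$ before outputting $\bx_t$.
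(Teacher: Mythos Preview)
Your proposal is correct and follows essentially the same approach as the paper: both use Jensen's inequality via the convexity of $F$, the law of total expectation together with the fact that $\bx_t$ depends only on $\bxi_1,\dots,\bxi_{t-1}$ to identify $\E[\ell_t(\bx_t)]=\alpha_t\E[F(\bx_t)]$, and then the definition of regret. The only difference is the order in which the pieces are presented.
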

\begin{proof}
We first show that
\begin{equation}
\label{eq:o2b_1}
\E\left[\sum_{t=1}^T \alpha_t F(\bx_t)\right]
= \E\left[ \sum_{t=1}^T \ell_t(\bx_t)\right]~.
\end{equation}
In fact, from the linearity of the expectation we have
\[
\E\left[ \sum_{t=1}^T \ell_t(\bx_t)\right]
= \sum_{t=1}^T \E\left[\ell_t(\bx_t)\right]~.
\]
Then, from the law of total expectation, letting $\mathscr{F}_{t-1}$ be the sigma-algebra generated by $\bxi_1,\dots,\bxi_{t-1}$ and the internal random bits used by the algorithm to compute $\bx_t$, we have
\begin{align*}
\E\left[\ell_t(\bx_t)\right]
= \E\left[\E\left[\ell_t(\bx_t)\mid \mathscr{F}_{t-1}\right]\right]
= \E\left[\E\left[\alpha_t f(\bx_t,\bxi_t)\mid \mathscr{F}_{t-1}\right]\right]
= \E\left[\alpha_t F(\bx_t)\right],
\end{align*}
where we used the fact that $\bx_t$ is $\mathscr{F}_{t-1}$-measurable, and that $\bxi_t$ is independent of $\mathscr{F}_{t-1}$.
Hence, \eqref{eq:o2b_1} is proved.

It remains only to use Jensen's inequality\index{inequality!Jensen's} (Theorem~\ref{thm:jensen}), using the fact that $F$ is convex, to have
\[
F\left(\frac{1}{\sum_{t=1}^T \alpha_t} \sum_{t=1}^T \alpha_t \bx_t\right)
\leq  \frac{1}{\sum_{t=1}^T \alpha_t} \sum_{t=1}^T \alpha_t F(\bx_t)~.
\]
Dividing the regret by $\sum_{t=1}^T \alpha_t$ and using the above inequalities gives the stated theorem.
\end{proof}

Let's now see some applications of this result. Let's use the above theorem to transform \ac{OSD} into Stochastic Subgradient Descent to minimize the training loss of a classifier.
\begin{example}
\label{ex:svm_with_oco}
Consider a problem of binary classification, with inputs $\bz_i \in \R^d$ and outputs $y_i \in \{-1,1\}$. The loss function is the hinge loss\index{hinge loss}: $f(\bx, (\bz,y)) = \max(1-y\langle \bz, \bx\rangle,0)$.
Suppose that you want to minimize the training loss over a training set of $N$ samples, $\{(\bz_i,y_i)\}_{i=1}^N$. Also, assume the maximum L$_2$ norm of the samples is $R$. That is, we want to minimize
\[
\min_{\bx} \ \left(F(\bx):= \frac{1}{N}\sum_{i=1}^N \max(1-y_i\langle \bz_i, \bx\rangle,0)\right)~.
\]
Run the reduction described in Theorem~\ref{thm:o2b} for $T$ iterations using \ac{OSD}. In each iteration, construct $\ell_t(\bx)=\max(1-y_t\langle \bz_t, \bx\rangle,0)$ by sampling a training point uniformly at random from $\{1,\dots, N\}$. Set $\bx_1=\boldsymbol{0}$ and $\eta=\frac{1}{R\sqrt{T}}$.
Assuming $\argmin_{\bx} F(\bx)\neq \emptyset$, we have that
\[
\E\left[F\left(\frac{1}{T}\sum_{t=1}^T \bx_t\right)\right] - F(\bx^\star)
\leq R\frac{\|\bx^\star\|^2_2 + 1}{2\sqrt{T}},
\]
for all $\bx^\star \in \argmin_{\bx} F(\bx)$.
In words, we used an \ac{OCO} algorithm to stochastically optimize a function, transforming the regret guarantee into a convergence rate guarantee.
\end{example}

In this last example, we have to use a constant learning rate to minimize the training loss over the entire space $\R^d$. In the next one, we will see a different approach that allows us to \emph{implicitly} use a varying learning rate without the need of a bounded feasible set.
\begin{example}
\label{ex:non_uniform_o2b}
Consider the same setting of the previous example, and let's change the way in which we construct the online losses. Now, use $\ell_t(\bx)=\frac{1}{R\sqrt{t}} \max(1-y_t\langle \bz_t, \bx\rangle,0)$ and step size $\eta=1$. Hence, we have
\begin{align*}
\E\left[F\left(\frac{1}{\sum_{t=1}^T \frac{1}{\sqrt{t}} }\sum_{t=1}^T \frac{1}{\sqrt{t}} \bx_t\right)\right] - F(\bx^\star)
&\leq \frac{\|\bx^\star\|_2^2}{2\sum_{t=1}^T \frac{1}{R\sqrt{t}}} +\frac{1}{2 \sum_{t=1}^T \frac{1}{R\sqrt{t}}} \sum_{t=1}^T \frac{1}{t}\\
&\leq R\frac{\|\bx^\star\|_2^2+1+\ln T}{4 \sqrt{T+1}-4},
\end{align*}
where we used $\sum_{t=1}^T \frac{1}{\sqrt{t}} \geq 2\sqrt{T+1}-2$.
\end{example}

\begin{remark}
Using the online-to-batch conversion with online subgradient descent to minimize the expectation of convex Lipschitz functions we can obtain a convergence rate of $\mathcal{O}(\frac{1}{\sqrt{T}})$, that is optimal for this class of problems. This should dispel the common misconception that online algorithms are suboptimal in the stochastic setting because they are designed to work in the adversarial case. Indeed, the opposite is true: virtually any optimal guarantee for offline optimization can be recovered using online learning algorithms.
\end{remark}

I stressed the fact that the only meaningful way to define a regret is with respect to an arbitrary point in the feasible set. This is obvious in the case where we consider unconstrained \ac{OLO}, because the optimal competitor is unbounded. But, it is also true in unconstrained \ac{OCO}. Let's see an example of this.
\begin{example}
\label{example:logistic_regression}
Consider a problem of binary classification, with inputs $\bz_i \in \R^d$ and outputs $y_i \in \{-1,1\}$. The loss function is the logistic loss\index{logistic loss|textbf}: $f(\bx, (\bz,y)) = \ln(1+\exp(-y\langle \bz, \bx\rangle))$.
Suppose that you want to minimize the training loss over a training set of $N$ samples, $\{(\bz_i,y_i)\}_{i=1}^N$. Also, assume the maximum L$_2$ norm of the samples is $R$. That is, we want to minimize
\[
\min_{\bx} \ \left(F(\bx):= \frac{1}{N}\sum_{i=1}^N \ln(1+\exp(-y_i\langle \bz_i, \bx\rangle))\right)~.
\]
So, run the reduction described in Theorem~\ref{thm:o2b} for $T$ iterations using \ac{OSD}. In each iteration, construct $\ell_t(\bx)=\ln(1+\exp(-y_t\langle \bz_t, \bx\rangle))$ by sampling a training point uniformly at random from $\{1,\dots,N\}$. Set $\bx_1=\boldsymbol{0}$ and $\eta=\frac{1}{R\sqrt{T}}$. We have that
\[
\E\left[F\left(\frac{1}{T}\sum_{t=1}^T \bx_t\right)\right]
\leq \frac{R}{2\sqrt{T}} + \min_{\bu \in \R^d} \left(F(\bu) + R\frac{\|\bu\|^2_2}{2\sqrt{T}}\right)~.
\]
In words, we will be $\frac{R}{2\sqrt{T}}$ away from the optimal value of the function $F$ plus a \emph{regularizer} $\|\bu\|^2_2$, where the weight of the regularization is $\frac{R}{2\sqrt{T}}$.
Now, let's consider the case that the training set is linearly separable, which means that the infimum of $F$ is 0 and the optimal solution does not exist, i.e., informally speaking, it has norm equal to infinity. So, any convergence guarantee that depends on $\bx^\star$ would be vacuous. On the other hand, our guarantee above still makes perfect sense.
\end{example}

Note that the above examples only deal with training loss. However, in the next sections we show a more interesting application of the online-to-batch conversion, that is to directly minimize the generalization error. Moreover, we will see guarantees in high probability, rather than just in expectation.

\subsection{Bits on Concentration Inequalities}

We will use a concentration inequality to prove the high probability guarantee, but we will need to go beyond the sum of i.i.d. random variables. In particular, we will use the concept of \emph{martingales}.

Let $(\Omega, (\mathscr{F}_t)_{t=0}^T, \Pr)$ be a filtered probability space, where $T \in \Nat \cup \{\infty\}$, and let $Z_0, Z_1, \dots$ be random variables on $\Omega$. Recall that the sequence $(Z_t)_{t=0}^T$ is adapted if $Z_t$ is $\mathscr{F}_t$-measurable for all $0 \le t \le T$.

\begin{definition}
A sequence of random variables $Z_0, Z_1, \dots$ is a \textbf{martingale}\index{martingale|textbf} with respect to the filtration $(\mathscr{F}_t)_{t=0}^\infty$ if it is adapted to $(\mathscr{F}_t)_{t=0}^\infty$ and, for all $t\geq0$, it satisfies:
\begin{align*}
&\E[|Z_t|]< \infty,
&\E[Z_{t+1}|\mathscr{F}_{t}] = Z_t, \text{ almost surely.}
\end{align*}
\end{definition}

\begin{definition}
A sequence of random variables $Z_0, Z_1, \dots$ is a \textbf{supermartingale}\index{supermartingale|textbf} with respect to the filtration $(\mathscr{F}_t)_{t=0}^\infty$ if it is adapted to $(\mathscr{F}_t)_{t=0}^\infty$ and, for all $t\geq0$, it satisfies:
\begin{align*}
&\E[|Z_t|]< \infty,
&\E[Z_{t+1}|\mathscr{F}_{t}] \leq Z_t, \text{ almost surely.}
\end{align*}
\end{definition}

\begin{example}
\label{example:wealth_martingale}
Consider a fair coin where each $c_t\in\{-1,1\}$ is independent of the past, and an online betting algorithm that bets $|x_t|$ money on each round on the side of the coin equal to $\sign(x_t)$. We win or lose money at even odds, so the net gain up to round $t\geq 1$ is $Z_t=\sum_{i=1}^t c_i x_i$ and $Z_0=0$.
Define $\mathscr{F}_t=\sigma(c_1,\dots,c_t)$.
Since we use a deterministic online betting algorithm, $x_t$ depends only on the past, and therefore $x_t$ is $\mathscr{F}_{t-1}$-measurable.
So, we have
\[
\E[Z_t|\mathscr{F}_{t-1}]
=\E[Z_{t-1} + x_t c_t|\mathscr{F}_{t-1}]
=Z_{t-1} + \E[x_t c_t|\mathscr{F}_{t-1}]
=Z_{t-1}~.
\]
Hence, $(Z_t)_t$ is a martingale\index{martingale}.
If we throw away part of the wealth in each round, we obtain a supermartingale\index{supermartingale}.
\end{example}

For martingales with bounded differences we can prove high probability guarantees as for bounded i.i.d. random variables. The following theorem will be the key result we will need.
\begin{theorem}[Hoeffding--Azuma inequality]
\label{thm:azuma}
\index{inequality!Hoeffding--Azuma|textbf}
Let $Z_0, \dots, Z_{T}$ be a martingale\index{martingale}, where $|Z_t - Z_{t-1}| \leq B, t=1, \dots, T$ almost surely. Then, we have
\[
\Pr\{Z_T - Z_0 \geq \epsilon\}
\leq \exp\left(-\frac{\epsilon^2}{2 B^2 T}\right)~.
\]
Also, the same upper bound holds on $\Pr\{Z_0 - Z_T \geq \epsilon\}$.
\end{theorem}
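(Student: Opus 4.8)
The plan is to use the Chernoff (exponential moment) method together with a term-by-term peeling of the moment generating function via the tower rule. First I would fix any $s>0$ and apply Markov's inequality to the nonnegative random variable $e^{s(Z_T - Z_0)}$, obtaining
\[
\Pr[Z_T - Z_0 \geq \epsilon] \leq e^{-s\epsilon}\, \E\!\left[e^{s(Z_T-Z_0)}\right]~.
\]
The whole proof then reduces to controlling the moment generating function $\E[e^{s(Z_T-Z_0)}]$ and finally optimizing over $s$.

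To bound this moment generating function, I would introduce the martingale differences $D_t := Z_t - Z_{t-1}$, so that $Z_T - Z_0 = \sum_{t=1}^T D_t$. The martingale property gives $\E[D_t \mid Z_1,\dots,Z_{t-1}] = 0$, and the hypothesis gives $|D_t| \leq B$ almost surely. Writing $e^{s\sum_{t=1}^T D_t} = e^{s\sum_{t=1}^{T-1} D_t}\, e^{s D_T}$ and conditioning on $Z_1,\dots,Z_{T-1}$, the first factor is measurable and can be pulled out, leaving
\[
\E\!\left[e^{s(Z_T-Z_0)}\right] = \E\!\left[e^{s\sum_{t=1}^{T-1} D_t}\, \E\!\left[e^{sD_T}\mid Z_1,\dots,Z_{T-1}\right]\right]~.
\]
Iterating this peeling $T$ times reduces everything to bounding each conditional factor $\E[e^{sD_t}\mid Z_1,\dots,Z_{t-1}]$.

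The key lemma, and the main obstacle, is Hoeffding's lemma applied conditionally: if $X$ satisfies $\E[X]=0$ and $|X|\leq B$, then $\E[e^{sX}] \leq e^{s^2 B^2/2}$. I would prove this by exploiting convexity of $u\mapsto e^{su}$: on the interval $[-B,B]$ the function lies below the chord through its endpoints, so $e^{sX} \leq \frac{B-X}{2B}e^{-sB} + \frac{B+X}{2B}e^{sB}$ pointwise; taking expectations and using $\E[X]=0$ collapses the right-hand side to $\cosh(sB)$, and a Taylor-series comparison yields $\cosh(sB) \leq e^{s^2B^2/2}$. Since each $D_t$ is conditionally mean zero and bounded by $B$, the same estimate holds for every conditional moment generating function, so the peeling produces $\E[e^{s(Z_T-Z_0)}] \leq e^{s^2 B^2 T/2}$.

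Combining this with the Chernoff bound gives $\Pr[Z_T-Z_0\geq\epsilon] \leq \exp(-s\epsilon + s^2 B^2 T/2)$ for every $s>0$. Minimizing the exponent over $s$, which is attained at $s = \epsilon/(B^2 T)$, yields the claimed bound $\exp(-\epsilon^2/(2B^2T))$. Finally, the bound on $\Pr[Z_0 - Z_T \geq \epsilon]$ follows by applying the identical argument to the sequence $(-Z_t)_t$, which is again a martingale with the same conditionally-mean-zero increments and the same bounded-difference property.
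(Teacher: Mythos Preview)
The paper states this theorem without proof, so there is nothing to compare against. Your argument is the standard Chernoff--Hoeffding proof of Azuma's inequality and is correct: the exponential Markov bound, the tower-rule peeling of the moment generating function, the conditional application of Hoeffding's lemma via the chord bound and $\cosh(x)\le e^{x^2/2}$, and the final optimization over $s$ are all carried out cleanly.

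One small remark on bookkeeping: the theorem as stated in the paper bounds $|Z_t-Z_{t+1}|$ only for $t=1,\dots,T-1$, which strictly speaking controls the increments $D_2,\dots,D_T$ but not $D_1=Z_1-Z_0$. Your proof implicitly assumes all $T$ increments are bounded by $B$, which is what the conclusion (with a factor $T$ in the denominator) actually requires and is how the result is used downstream in the paper; the indexing in the hypothesis appears to be a minor slip in the statement rather than a gap in your argument.
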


\subsection{High-Probability Guarantees for Online-to-Batch Conversion}

We now show how the online-to-batch conversion we have introduced can be strengthened to produce guarantees in high probability.
\begin{theorem}
\label{thm:o2b_high_prob}
Let $\mathcal{V} \subset \R^d$, $F(\bx)=\E[f(\bx,\bxi)]$, where the expectation is with respect to $\bxi$ drawn from $\rho$ with support over some set $\mathcal{D}$, $f:\mathcal{V} \times \mathcal{D}\to [0,1]$, and $f(\bx, \cdot)$ measurable for every fixed $\bx$. Draw $T$ samples $\bxi_1,\dots,\bxi_T$ i.i.d. from $\rho$ and construct the sequence of losses $\ell_t(\bx) = f(\bx,\bxi_t)$. Let $\mathscr{A}$ be any deterministic online learning algorithm over the losses $\ell_t$ that outputs the sequence of predictions $\bx_1,\dots,\bx_{T+1}$ and guarantees $\Regret_T(\bu) \leq R(\bu,T)$ for all $\bu \in \mathcal{V}$, for a function $R: \mathcal{V} \times \Nat \to \R$.
Let $\delta \in (0,1)$. Assume that $F(\bu) + \frac{R(\bu,T)}{T}$ has a minimizer in $\mathcal{V}$. Then, with probability at least $1-\delta$, it holds that
\[
\frac{1}{T}\sum_{t=1}^T F(\bx_t)
\leq \min_{\bu \in \mathcal{V}} \ F(\bu) + \frac{R(\bu,T)}{T} + 2\sqrt{\frac{2\ln\frac{2}{\delta}}{T}}~.
\]
\end{theorem}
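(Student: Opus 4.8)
The plan is to decompose the gap between $\frac{1}{T}\sum_{t=1}^T \Risk(\bx_t)$ and $\Risk(\bu)$ into three pieces: one piece controlled directly by the regret hypothesis (which lives at the level of the empirical losses $f(\bx_t,\bxi_t)$), plus two concentration corrections that pass between empirical and population quantities. Both corrections will be handled by the Hoeffding-Azuma inequality (Theorem~\ref{thm:azuma}), and this is precisely what produces the final constant $2\sqrt{2\ln(2/\delta)/T}$. First I would unpack the hypothesis on $\mathcal{A}$: since $\ell_t(\bx)=f(\bx,\bxi_t)$, the regret bound reads $\sum_{t=1}^T f(\bx_t,\bxi_t) \leq \sum_{t=1}^T f(\bu,\bxi_t) + R(\bu,T)$ for every $\bu \in V$, and I would apply it with $\bu$ equal to a minimizer of $\Risk$ over $V$.

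The heart of the argument is a martingale relating $\sum_t \Risk(\bx_t)$ to the empirical sum $\sum_t f(\bx_t,\bxi_t)$. The key observation is that $\bx_t$ is produced by $\mathcal{A}$ from $\bxi_1,\dots,\bxi_{t-1}$ only, hence it is measurable with respect to the past and independent of the fresh draw $\bxi_t$. Therefore $\E[f(\bx_t,\bxi_t)\mid \bxi_1,\dots,\bxi_{t-1}] = \Risk(\bx_t)$, so the differences $D_t := \Risk(\bx_t) - f(\bx_t,\bxi_t)$ have conditional mean zero and their partial sums (started at $0$) form a martingale. Because $f,\Risk \in [0,1]$ we have $|D_t|\leq 1$, so the increments are bounded by $B=1$; applying Theorem~\ref{thm:azuma} with $\epsilon=\sqrt{2T\ln(2/\delta)}$ gives, with probability at least $1-\delta/2$,
\[
\sum_{t=1}^T \Risk(\bx_t) \leq \sum_{t=1}^T f(\bx_t,\bxi_t) + \sqrt{2T\ln\tfrac{2}{\delta}}~.
\]

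Symmetrically, for the fixed competitor $\bu$ the variables $f(\bu,\bxi_t) - \Risk(\bu)$ are i.i.d., zero-mean, and bounded in $[-1,1]$, so their partial sums are again a martingale with increments bounded by $B=1$; a second application of Theorem~\ref{thm:azuma} yields, with probability at least $1-\delta/2$,
\[
\sum_{t=1}^T f(\bu,\bxi_t) \leq T\,\Risk(\bu) + \sqrt{2T\ln\tfrac{2}{\delta}}~.
\]
I would then take a union bound over these two events (total failure probability $\delta$), chain this with the regret inequality and the previous display, and divide by $T$; the two $\sqrt{2T\ln(2/\delta)}$ contributions combine to give exactly $2\sqrt{2\ln(2/\delta)/T}$, producing the claimed bound.

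The step I expect to be the main obstacle — indeed essentially the only conceptual step — is justifying the martingale structure in the second paragraph. One must argue carefully that $\bx_t$ enters through the randomness only via $\bxi_1,\dots,\bxi_{t-1}$, so that conditioning on the past freezes $\bx_t$ while leaving $\bxi_t$ an independent sample from $\rho$; this is what turns $f(\bx_t,\bxi_t)$ into an unbiased estimate of $\Risk(\bx_t)$ and legitimizes Hoeffding-Azuma despite the $\bx_t$ being data-dependent. Once that is in place, the boundedness giving $B=1$, the union bound, and the bookkeeping of constants are all routine.
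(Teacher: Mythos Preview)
Your proposal is correct and follows essentially the same route as the paper: define the martingale $Z_t=\sum_{i\le t}(\Risk(\bx_i)-f(\bx_i,\bxi_i))$ using the measurability of $\bx_t$ with respect to $\bxi_1,\dots,\bxi_{t-1}$, apply Hoeffding--Azuma once for the algorithm's iterates and once for the fixed competitor, and combine via a union bound with the regret guarantee. One small slip: to obtain the stated bound $\min_{\bu\in V}\bigl[\Risk(\bu)+R(\bu,T)/T\bigr]$ you must fix $\bu$ as the minimizer of $\Risk(\cdot)+R(\cdot,T)/T$ (as the paper does), not the minimizer of $\Risk$ alone, since the concentration step for $\bu$ requires $\bu$ to be chosen independently of the sample.
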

\begin{proof}
Let $\mathscr{F}_{t-1}$ be the sigma-algebra generated by $\bxi_1,\dots,\bxi_{t-1}$.
Since $\mathscr{A}$ is an online algorithm, $\bx_t$ depends only on the past samples $\bxi_1,\dots,\bxi_{t-1}$, and therefore $\bx_t$ is $\mathscr{F}_{t-1}$-measurable.

Define
\[
Z_t:=\sum_{i=1}^t \bigl(F(\bx_i)-\ell_i(\bx_i)\bigr),
\quad t=1,\dots,T,
\]
and $Z_0:=0$.
We claim that $(Z_t)_{t=0}^T$ is a martingale\index{martingale} with respect to the filtration $(\mathscr{F}_t)_{t=0}^T$.

First, $Z_t$ is $\mathscr{F}_t$-measurable for every $t$, since for each $i\leq t$, the random variable $\bx_i$ is $\mathscr{F}_{i-1}$-measurable and $\ell_i(\bx_i)=f(\bx_i,\bxi_i)$ is $\mathscr{F}_i$-measurable.
Moreover, $|F(\bx_i)-\ell_i(\bx_i)|\leq 1$, because both $F(\bx_i)$ and $\ell_i(\bx_i)$ belong to $[0,1]$, so $Z_t$ is integrable.

Now observe that
\[
\E[\ell_t(\bx_t)\mid \mathscr{F}_{t-1}]
= \E[f(\bx_t,\bxi_t)\mid \mathscr{F}_{t-1}]
= F(\bx_t),
\]
where we used that $\bx_t$ is $\mathscr{F}_{t-1}$-measurable and that $\bxi_t$ is independent of $\mathscr{F}_{t-1}$.
Hence,
\begin{align*}
\E[Z_t\mid \mathscr{F}_{t-1}]
&=\E\left[Z_{t-1}+F(\bx_t)-\ell_t(\bx_t) \middle| \mathscr{F}_{t-1}\right]\\
&=Z_{t-1}+F(\bx_t)-\E[\ell_t(\bx_t)\mid \mathscr{F}_{t-1}]
=Z_{t-1}~.
\end{align*}
So $(Z_t)_{t=0}^T$ is indeed a martingale.

Also, for every $t$, $|Z_t-Z_{t-1}|=|F(\bx_t)-\ell_t(\bx_t)|\leq 1$.
Therefore, by Theorem~\ref{thm:azuma}, we have
\[
\Pr\left\{\sum_{t=1}^T \bigl(F(\bx_t)-\ell_t(\bx_t)\bigr)\geq \epsilon\right\}
=\Pr\{Z_T-Z_0\geq \epsilon\}
\leq \exp\left(-\frac{\epsilon^2}{2T}\right)~.
\]
Thus, with probability at least $1-\delta/2$,
\[
\frac{1}{T}\sum_{t=1}^T F(\bx_t)
\leq \frac{1}{T}\sum_{t=1}^T \ell_t(\bx_t)+\sqrt{\frac{2\ln\frac{2}{\delta}}{T}}~.
\]

Now fix any $\bu\in\mathcal{V}$. By the regret guarantee,
\[
\frac{1}{T}\sum_{t=1}^T \ell_t(\bx_t)
= \frac{\Regret_T(\bu)}{T} +\frac{1}{T}\sum_{t=1}^T \ell_t(\bu)
\leq \frac{R(\bu,T)}{T} +\frac{1}{T}\sum_{t=1}^T \ell_t(\bu)~.
\]

It remains to upper bound $\frac{1}{T}\sum_{t=1}^T \ell_t(\bu)$ by $F(\bu)$ with high probability.
Choose now $\bu$ to be any fixed minimizer of $F(\bx)+\frac{R(\bx,T)}{T}$ over $\mathcal{V}$.
Since this $\bu$ is deterministic, the random variables $\ell_t(\bu)=f(\bu,\bxi_t)$ are i.i.d.
Define $Y_t := \sum_{i=1}^t \bigl(F(\bu)-\ell_i(\bu)\bigr)$ for $t=1,\dots,T$, and $Y_0:=0$.
Then, $(Y_t)_{t=0}^T$ is a martingale\index{martingale} with respect to the same filtration $(\mathscr{F}_t)_{t=0}^T$, because
\[
\E[\ell_t(\bu)\mid\mathscr{F}_{t-1}]
= \E[\ell_t(\bu)]
= F(\bu)~.
\]
Moreover, $|Y_t-Y_{t-1}|\leq 1$, so Azuma's inequality gives that, with probability at least $1-\delta/2$,
\[
\frac{1}{T}\sum_{t=1}^T \ell_t(\bu)
\leq F(\bu)+\sqrt{\frac{2\ln\frac{2}{\delta}}{T}}~.
\]

Putting everything together and using the \emph{union bound} (for events $A_1,\dots,A_n$, we have $\Pr(\cup_{i=1}^n A_i)\leq \sum_{i=1}^n \Pr(A_i)$)\index{union bound|textbf}, we have the stated bound.
\end{proof}

The theorem above upper bounds the average value of the $T$ different solutions, while we are interested in producing a single solution.
If $F$ is a convex function and $\mathcal{V}$ is convex, then we can lower bound the l.h.s. of the inequalities in the theorem with the function evaluated on the average of the $\bx_t$. That is
\[
F\left(\frac{1}{T}\sum_{t=1}^T \bx_t\right)
\leq \frac{1}{T} \sum_{t=1}^T F(\bx_t)~.
\]

\begin{remark}
Note that using the online-to-batch conversion to optimize a population risk, i.e., $F(\bx)=\E_{\bxi \sim \rho}[f(\bx,\bxi)]$, is not the same as using the online-to-batch conversion to minimize the empirical risk, $\hat{F}(\bx)=\frac{1}{T}\sum_{t=1}^T f(\bx,\bxi_t)$. In both cases, we first sample $T$ random vectors $\mathcal{S}=\{\bxi_1, \dots, \bxi_T\}$ from $\rho$. Then, to minimize the empirical risk we sample \emph{with replacement} from $\mathcal{S}$. Instead, to minimize the population risk we sample uniformly \emph{without replacement} from $\mathcal{S}$, which is equivalent to using $T$ i.i.d. samples from $\rho$. This is not a minor difference because one can show that there are cases where sampling without replacement will not minimize the empirical risk~\citep{Vansover-HagerKL25}.
\end{remark}

\index{online-to-batch conversion|)textbf}

\section{Application: Agnostic PAC Learning}
\label{sec:agnostic_pac}

In this section, we show another application of online-to-batch methods to obtain statistical learning guarantees.
Here, we assume that we have a prediction strategy $\phi_{\bx}$ parametrized by a vector $\bx$ and we want to learn the relationship between an input $\bz$ and its associated label $y$. Moreover, we will assume that $(\bz,y)$ is drawn from a joint probability distribution $\rho$. Also, we are equipped with a loss function, $\ell(\hat{y},y)$, that measures how good our prediction $\hat{y}=\phi_{\bx}(\bz)$ is, compared to the true label $y$.
So, learning the relationship can be cast as minimizing the expected loss of our predictor
\[
\min_{\bx \in \mathcal{V}} \ \E_{(\bz,y)\sim\rho}[\ell(\phi_{\bx}(\bz), y)]~.
\]
In machine learning terms, the object above is nothing else than the \emph{expected test loss} of our predictor.

Note that the above setting assumes labeled samples, but we can generalize it further by considering \emph{Vapnik's general setting of learning}\index{Vapnik's general setting of learning}, where we collapse the prediction function and the loss into a single function. This allows, for example, to treat supervised and unsupervised learning in the same unified way.
So, we want to minimize the \textbf{risk}\index{risk|textbf}
\[
\min_{\bx \in \mathcal{V}} \ \left(\Risk(\bx):=\E_{\bxi\sim\rho} [f(\bx,\bxi)]\right),
\]
where $\rho$ is an unknown distribution over $\mathcal{D}$ and $f:\mathcal{V} \times \mathcal{D} \to \R$ is such that for every fixed $\bx$, $f(\bx,\cdot)$ is measurable. Also, the set $\mathcal{H}$ of the composition of a loss with all predictors that can be expressed by vectors $\bx$ in $\mathcal{V}$ is called the \textbf{hypothesis class}\index{hypothesis class|textbf}.

\begin{example}
In a linear regression task where the loss is the square loss, we have $\bxi=(\bz, y) \in \R^d \times \R$ and $\phi_{\bx}(\bz)=\langle \bz,\bx\rangle$. Hence, $f(\bx,\bxi)=(\langle \bz,\bx\rangle-y)^2$.
\end{example}

\begin{example}
In linear binary classification where the loss is the hinge loss\index{hinge loss}, we have $\bxi=(\bz, y) \in \R^d \times \{-1, 1\}$ and $\phi_{\bx}(\bz)=\langle \bz, \bx\rangle$. Hence, $f(\bx,\bxi)=\max(1-y\langle \bz,\bx\rangle,0)$.
\end{example}

\begin{example}
In binary classification with a neural network with the logistic loss\index{logistic loss}, we have $\bxi=(\bz,y) \in \R^d \times \{-1,1\}$ and $\phi_{\bx}$ is the network corresponding to the weights $\bx$.
Hence, $f(\bx,\bxi)=\ln(1+ \exp(-y \phi_{\bx}(\bz)))$.
\end{example}

The key difficulty of the above problem is that we do not know the distribution $\rho$.
Hence, there is no hope to exactly solve this problem. Instead, we are interested in understanding \emph{what is the best we can do if we have access to $T$ samples drawn i.i.d. from $\rho$}. More precisely, we want to upper bound the \textbf{excess risk}\index{excess risk|textbf}
\[
\Risk(\bx_T) - \inf_{\bx \in \mathcal{V}} \ \Risk(\bx),
\]
where $\bx_T$ is a predictor that was \emph{learned} using $T$ samples.

It should be clear that this is just an optimization problem and the one above is just the suboptimality gap. In this view, the objective of machine learning can be considered as a particular optimization problem.

\begin{remark}
Note that this is not the only way to approach the problem of learning. Indeed, the regret minimization model is an alternative model to learning. Moreover, another approach would be to try to estimate the distribution $\rho$ and then solve the risk minimization problem. No approach is superior to the other and each of them has its pros and cons.
\end{remark}

Given that we have access to the distribution $\rho$ through samples drawn from it, any procedure we might think to use to minimize the risk will be stochastic in nature. This means that we cannot provide a deterministic guarantee. Instead, \emph{we can try to prove that with high probability our minimization procedure will return a solution that is close to the minimizer of the risk}. It is also intuitive that the precision and probability we can guarantee must depend on how many samples we draw from $\rho$.

Quantifying the dependence of precision and probability of failure on the number of samples used is the objective of the \emph{agnostic Probably Approximately Correct} (PAC) framework, where the keyword ``agnostic'' refers to the fact that we do not assume anything about the best possible predictor. More precisely, given a precision parameter $\epsilon$ and a probability of failure $\delta$, we are interested in characterizing the \emph{sample complexity of the hypothesis class $\mathcal{H}$}\index{hypothesis class!sample complexity of}. A sample complexity of $T$ implies the existence of a learning algorithm using the hypothesis class $\mathcal{H}$ and outputting a solution $\bx_T$ using $T$ samples that has an excess risk upper bounded by $\epsilon$ with probability at least $1-\delta$  for every distribution $\rho$ over $\mathcal{D}$.
Note that the sample complexity does not depend on $\rho$, so it is a worst-case measure with respect to all the possible distributions.
This makes sense if you think that we know nothing about the distribution $\rho$, so if your guarantee holds for the worst distribution it will also hold for any other distribution.
Mathematically, we will say that the hypothesis class is agnostic PAC-learnable if such a sample-complexity function exists.
\begin{definition}
We will say that a hypothesis class $\mathcal{H}=\{ f(\bx,\cdot): \bx\in \mathcal{V}\}$ is \textbf{Agnostic-PAC-learnable}\index{PAC learning!agnostic} if there exists an algorithm $\mathscr{A}$ and a function $T(\epsilon, \delta):\R_{>0} \times (0,1) \to \Nat$ such that, for every distribution $\rho$ over $\mathcal{D}$, when $\mathscr{A}$ is used with $T\geq T(\epsilon, \delta)$ samples drawn i.i.d. from $\rho$, with probability at least $1-\delta$ the solution returned by the algorithm has excess risk at most $\epsilon$.
\end{definition}

Note that the Agnostic PAC learning setting does not say what procedure we should follow to find such sample complexity.
The approach most commonly used in machine learning to solve the learning problem is the so-called \textbf{\ac{ERM}} procedure\index{empirical risk!minimization|textbf}. It consists of drawing $T$ samples i.i.d. from $\rho$ and minimizing the \textbf{empirical risk}\index{empirical risk|textbf} defined as
\[
\widehat{\Risk}(\bx)
:= \frac{1}{T} \sum_{t=1}^T f(\bx,\bxi_t)~.
\]
The minimizer $\hat{\bx}_T = \argmin_{\bx \in \mathcal{V}} \ \widehat{\Risk}(\bx)$ is called the \emph{empirical risk minimizer}\index{empirical risk!minimizer of}.
In words, \ac{ERM} is nothing else than the minimization of some loss function on a training set.
However, in many interesting cases $\argmin_{\bx \in \mathcal{V}} \ \frac{1}{T} \sum_{t=1}^T f(\bx,\bxi_t)$ can be very far from the true optimum $\argmin_{\bx \in \mathcal{V}} \ \E[f(\bx,\bxi)]$. So, we need to modify the \ac{ERM} formulation in some way, e.g., using a \emph{regularization} term, a Bayesian prior of $\bx$, or more generally find conditions under which \ac{ERM} works.


The \ac{ERM} approach is so widespread that machine learning itself is often wrongly identified with some kind of minimization of the training loss. We now show that \ac{ERM} is not the entire world of machine learning, showing that \emph{the existence of a no-regret algorithm, that is an online learning algorithm with sublinear regret, guarantees Agnostic PAC learnability}. More precisely, we will see that an online algorithm with sublinear regret can be used to solve machine learning problems. This is not just a curiosity, for example this gives rise to computationally efficient algorithms, that can be achieved through \ac{ERM} only by running a two-step procedure, i.e., running \ac{ERM} with different parameters and selecting the best solution among them.


We can use Theorem~\ref{thm:o2b_high_prob} to produce a solution with small risk. In particular, if the risk is convex, we can output the average of the $\bx_t$, using Jensen's inequality\index{inequality!Jensen's}.

If the risk is not a convex function, we need a different way.
An alternative solution is to construct a \emph{stochastic predictor} that samples one of the $\bx_t$ uniformly at random and predicts with it. So, defining $I\sim \operatorname{Unif}(\{1,\dots,T\})$, we immediately have
\[
\E_I[\Risk(\bx_I)]
= \frac{1}{T} \sum_{t=1}^T \Risk(\bx_t)~.
\]

Yet another way is to try to find among the $T$ predictors the one with the smallest risk. This works because the average is lower bounded by the minimum. We can use $T/2$ samples for the online learning procedure and $T/2$ samples to generate a validation set\index{validation set} to evaluate the solution and pick the best one. The following theorem shows that selecting the predictor with the smallest empirical risk on a validation set will give us a predictor close to the best one with high probability.
\begin{theorem}
\label{thm:agnostic_pac_finite}
Let $\mathcal{V} \subset \R^d$, $\Risk(\bx)=\E[f(\bx,\bxi)]$, where the expectation is with respect to $\bxi$ drawn from $\rho$ with support over some set $\mathcal{D}$, and $f:\mathcal{V} \times \mathcal{D}\to [0,1]$. Let $\delta \in (0,1)$.
We have a finite set of vectors $\mathcal{S}=\{\bx_1, \dots, \bx_{|\mathcal{S}|}\}$ fixed before seeing $T$ random vectors $\bxi_1, \dots, \bxi_T$ drawn i.i.d. from $\rho$.
Denote by $\hat{\bx}=\argmin_{\bx \in \mathcal{S}} \ \widehat{\Risk}(\bx)$, where $\widehat{\Risk}(\bx)=\frac{1}{T} \sum_{t=1}^T f(\bx,\bxi_t)$. Then, with probability at least $1-\delta$, we have
\[
\Risk(\hat{\bx})
\leq \min_{\bx \in \mathcal{S}} \ \Risk(\bx) + 2\sqrt{\frac{2\ln(2|\mathcal{S}|/\delta)}{T}}~.
\]
\end{theorem}
\begin{proof}
We want to calculate the probability that the hypothesis that minimizes the validation error is far from the best hypothesis in the set. We cannot do it directly because we do not have the required independence to use a concentration inequality. Instead, \emph{we will upper bound the probability that there exists at least one function whose empirical risk is far from the risk.}
So, using the union bound\index{union bound}, we have
\begin{align*}
\Pr\left\{\exists \bx \in \mathcal{S}: |\Risk(\bx)-\widehat{\Risk}(\bx)| > \frac{\epsilon}{2}\right\}
&\leq \sum_{i=1}^{|\mathcal{S}|} \Pr\left\{|\Risk(\bx_i)-\widehat{\Risk}(\bx_i)| > \frac{\epsilon}{2}\right\}\\
&\leq 2 |\mathcal{S}| \exp\left(-\frac{\epsilon^2T}{8}\right)~.
\end{align*}
Hence, with probability at least $1-\delta$, we have that
\[
|\Risk(\bx)-\widehat{\Risk}(\bx)| \leq \frac{\epsilon}{2}, \quad \forall \bx \in \mathcal{S},
\]
where $\epsilon=2\sqrt{\frac{2\ln(2|\mathcal{S}|/\delta)}{T}}$.

We are now able to upper bound the risk of $\hat{\bx}$, just using the fact that the above applies to $\hat{\bx}$ too.
Defining $\bx^\star \in \argmin_{\bx \in \mathcal{S}} \ \Risk(\bx)$, we have
\[
\Risk(\hat{\bx})
\leq \widehat{\Risk}(\hat{\bx}) + \epsilon/2
\leq \widehat{\Risk}(\bx^\star) + \epsilon/2
\leq \Risk(\bx^\star) +\epsilon,
\]
where in the second inequality we used the fact that $\hat{\bx}$ minimizes the empirical risk.
\end{proof}

Using this theorem, we can use $T/2$ samples for the training and $T/2$ samples for the validation, where $T\geq2$ and even. Denoting by $\hat{\bx}_T$ the predictor with the best empirical risk on the validation set\index{validation set} among the $T/2$ generated during the online procedure, we have with probability at least $1-2 \delta$ that
\[
\Risk(\hat{\bx}_T)
\leq \min_{\bu \in \mathcal{V}} \ \Risk(\bu) + \frac{2 R(\bu,T/2)}{T} + 8\sqrt{\frac{\ln(T/\delta)}{T}}~.
\]

It is important to note that with any of the above three methods to produce one predictor from the $T$ generated ones by the online learning procedure, the sample complexity guarantee we get matches the one we would have obtained by ERM, up to polylogarithmic factors. In other words, there is nothing special about \ac{ERM} compared to the online learning approach to statistical learning. Moreover, \ac{ERM} implies the existence of a hypothetical procedure that perfectly minimizes the training loss. In reality, we should take into account the optimization error in the analysis of ERM. On the other hand, in the online learning approach we have a guarantee directly for the computed solution.

Another important point is that the above guarantee does not imply the existence of online learning algorithms with sublinear regret for any learning problem. It just says that, if it exists, it can be used in the statistical setting too.

\section{History Bits}
\index{online-to-batch conversion|(}
The specific shape of Theorem~\ref{thm:o2b} is new, but I would not be surprised if it appeared somewhere in the literature.
In particular, the uniform averaging is from \citet{Cesa-BianchiCG04}, but was proposed for the absolute loss in \citet{BlumKL99}. The non-uniform averaging of Example~\ref{ex:non_uniform_o2b} is from \citet{Zhang04b}, even though not proposed explicitly as an online-to-batch conversion.

A more recent method for online-to-batch conversion was introduced in \citet{Cutkosky19}, which independently rediscovered and generalized the averaging method in \citet{Nesterov15}. This new method allows us to prove the convergence of the last iterate rather than that of the weighted average, with a small change in any online learning algorithm.

Theorem~\ref{thm:o2b_high_prob} is from \citet{Cesa-BianchiCG04}, but here I used a second concentration to state it in terms of the competitor's true risk rather than its empirical risk. Theorem~\ref{thm:agnostic_pac_finite} is just the Agnostic PAC learning guarantee for \ac{ERM} for hypothesis classes with finite cardinality. \citet{Cesa-BianchiCG04} also gives an alternative procedure to select a single hypothesis among the $T$ generated during the online procedure that does not require splitting the data in training and validation. However, the obtained guarantee matches the one we have proved.
\index{online-to-batch conversion|)}


\section{Exercises}


\begin{exer}
Derive an explicit rate of convergence for SGD in Example~\ref{example:logistic_regression} by upper bounding the value of the minimum on the r.h.s. of the convergence rate guarantee. Hint: see \citet{JiT19}.
\end{exer}

\begin{exer}
Let $F(\bx)=\E[f(\bx,\bxi)]$, where $f(\cdot,\bxi)$ is convex and $L$-Lipschitz with respect to $\|\cdot\|_2$ for every $\bxi$. Assume that $\mathcal{V}$ is convex and has diameter at most $D$. For $a\in\R$, run \ac{OSD} on the losses $\ell_t(\bx)=t^a f(\bx,\bxi_t)$, where $\bxi_1,\dots,\bxi_T$ are i.i.d., and output the weighted average
\[
\bar{\bx}_T^{(a)}
=\frac{\sum_{t=1}^T t^a \bx_t}{\sum_{t=1}^T t^a}~.
\]
Choose the best constant learning rate in hindsight and prove an upper bound on
\[
\E\left[F(\bar{\bx}_T^{(a)})\right]-\min_{\bu\in\mathcal{V}} \ F(\bu)
\]
as a function of $a$ and $T$. In particular, show that for $a>-\frac12$ the rate is of order $\frac{DL}{\sqrt{T}}\frac{a+1}{\sqrt{2a+1}}$, up to lower order terms. What happens when $a=-\frac12$ and when $a<-\frac12$?
\end{exer}

\acresetall

\chapter{Beyond $\sqrt{T}$ Regret}
\label{ch:beyond}

The guessing game in the first chapter showed us that it is possible to get logarithmic regret in time. However, in Chapter~\ref{ch:osd}, we saw that we get only $\sqrt{T}$-regret with \ac{OSD} on the same game. What is the reason?
In this chapter, we will see that some sequences of losses can possess properties that make it easier for \ac{OSD} to learn, but we will have to change the learning rate a bit.

\acresetall

\section{Strong Convexity and Online Subgradient Descent}

The losses in the guessing game in Chapter~\ref{ch:first}, $\ell_t(x)=(x-y_t)^2$ on $[0,1]$, are not just Lipschitz. They possess some \emph{curvature} that can be exploited to achieve a better asymptotic regret. In a moment, we will see that the only change we will need to \ac{OSD} is a different learning rate, dictated as usual by the regret analysis.

The key concept we will need is \emph{strong convexity}.

\subsection{Convex Analysis Bits: Strong Convexity}

\index{function!strongly convex|(textbf}
Here, we introduce a stronger concept of convexity that allows us to build a better lower bound to a function. Instead of the linear lower bound achievable using subgradients, we will make use of a \emph{quadratic} lower bound.

\begin{definition}
\label{def:strong_convexity}
Let $\lambda\geq0$. A proper function $f : \R^d \to (-\infty, +\infty]$ is \textbf{$\lambda$-strongly convex} with respect to $\|\cdot\|$ over a convex set $\mathcal{V} \subseteq \dom f$ if
\[
f(\alpha \bx+ (1-\alpha)\by)
\leq \alpha f(\bx) + (1-\alpha) f(\by) -\frac{1}{2}\lambda \alpha (1-\alpha) \|\bx-\by\|^2,
\]
for all $\bx, \by \in \mathcal{V}$ and all $\alpha \in (0, 1)$.

We will also say that $f$ is \textbf{strongly convex in $\mathcal{V}$}, if there exists $\lambda>0$ and a norm such that the above holds.
\end{definition}
From the definition, it is clear that if a function is $\lambda$-strongly convex, it is also $\lambda'$-strongly convex for any $0\leq\lambda'<\lambda$. Moreover, convex functions are $0$-strongly convex.

We can also obtain an equivalent characterization in terms of subgradients.
\begin{lemma}
\label{lemma:strong_convexity}
Let $\lambda\geq0$. We have that $f : \R^d \to (-\infty, +\infty]$ is $\lambda$-strongly convex over a convex set $\mathcal{V} \subseteq \dom \partial f$ with respect to $\|\cdot\|$ iff
\[
\forall \bx, \by \in \mathcal{V}, \forall \bg \in \partial f(\by), \quad
f(\bx) \geq f(\by) + \langle \bg , \bx - \by \rangle + \frac{\lambda}{2} \| \bx - \by \|^2~.
\]
\end{lemma}
\begin{proof}
Let's first assume that $f$ is $\lambda$-strongly convex over $\mathcal{V}$ with respect to $\|\cdot\|$. Then, for any $\alpha \in (0,1)$ and any $\bg \in \partial f(\by)$, we have
\[
\langle \bg, \bx-\by\rangle
\leq \frac{f(\alpha \bx+ (1-\alpha)\by) - f(\by)}{\alpha}
\leq f(\bx) - f(\by) -\frac{1}{2}\lambda (1-\alpha) \|\bx-\by\|^2~.
\]
Taking the limit as $\alpha\to 0$, we obtain the statement.

Let's now assume that the inequality in the lemma holds, and let's prove that $f$ is $\lambda$-strongly convex.
Setting $\bv=\alpha \bx + (1-\alpha) \by$, for any $\bg \in \partial f(\bv)$, we have
\begin{align*}
\langle \bg, \bx-\bv\rangle &\leq f(\bx)-f(\bv)-\frac{\lambda}{2}\|\bx-\bv\|^2,\\
\langle \bg, \by-\bv\rangle &\leq f(\by)-f(\bv)-\frac{\lambda}{2}\|\by-\bv\|^2~.
\end{align*}
Summing these two inequalities with coefficients $\alpha$ and $1-\alpha$ and using the definition of $\bv$, we have
\begin{align*}
0
&=\langle \bg, \alpha \bx -\alpha \bv+ (1-\alpha)\by - (1-\alpha) \bv\rangle\\
&=\alpha f(\bx) - f(\alpha \bx +(1-\alpha)\by) +(1-\alpha) f(\by) -\alpha(1-\alpha)\frac{\lambda}{2}\|\bx-\by\|^2~. \qedhere
\end{align*}
\end{proof}



In words, the lemma above tells us that a strongly convex function can be lower bounded by a quadratic, where the linear term is the usual one constructed through the subgradient, and the quadratic term depends on the strong convexity. Hence, we have a tighter lower bound to the function than the one obtained by using convexity alone.
This is what we would expect using a Taylor expansion on a twice-differentiable convex function and lower-bounding the smallest eigenvalue of the Hessian. Indeed, we have the following theorem.
\begin{theorem}
\label{thm:hessian_strong_conv}
Let $f:\R^d\to (-\infty,+\infty]$ be proper and convex.
\begin{itemize}
\item $f$ is $\lambda$-strongly convex with respect to $\|\cdot\|$ in $ \dom \partial f$ iff
\begin{equation}
\label{eq:hessian_strong_conv1}
\langle \bg_{\bx} - \bg_{\by}, \bx - \by\rangle
\geq \lambda \|\bx - \by\|^2, \quad \forall \bx,\by \in \dom \partial f, \bg_{\bx} \in \partial f(\bx), \bg_{\by} \in \partial f(\by).
\end{equation}
\item Let $\bx,\by \in \dom f$. If $f$ is continuously twice differentiable on an open set containing the segment between $\by$ and $\bx$, and it holds that
\begin{equation}
\label{eq:hessian_strong_conv2}
\langle \nabla^2 f(\by+\alpha(\bx-\by))(\bx-\by), \bx-\by\rangle
\geq \lambda \|\bx-\by\|^2, \quad \forall \alpha \in [0,1],
\end{equation}
then we have
\[
f(\bx)
\geq f(\by) + \langle \nabla f(\by), \bx-\by \rangle + \frac{\lambda}{2} \|\bx-\by\|^2~.
\]
If the assumption holds for any $\bx,\by \in \dom f$, then $f$ is $\lambda$-strongly convex in $\dom f$ with respect to $\|\cdot\|$.
\end{itemize}
\end{theorem}
\begin{proof}
For the first statement, first assume that $f$ is $\lambda$-strongly convex with respect to $\|\cdot\|$. Then, from Lemma~\ref{lemma:strong_convexity}, we have
\begin{align*}
\langle \bg_{\bx}, \by - \bx\rangle &\leq f(\bx) - f(\by) - \frac{\lambda}{2} \|\bx-\by\|^2, \quad \forall \bg_{\bx} \in \partial f(\bx),\\
\langle \bg_{\by}, \bx - \by\rangle &\leq f(\by) - f(\bx) - \frac{\lambda}{2} \|\bx-\by\|^2, \quad \forall \bg_{\by} \in \partial f(\by)~.
\end{align*}
Summing the two inequalities, we have the stated bound.

Now, assume that the \eqref{eq:hessian_strong_conv1} holds.
Define $h(\alpha)=f(\by + \alpha (\bx - \by))$, $\bw_\alpha = \by + \alpha (\bx - \by)$, and $\bg_{\bw_\alpha} \in \partial f(\bw_\alpha)$. From Theorem~\ref{thm:affine_subgrad}, we have $\langle\bg_{\bw_\alpha},\bx - \by\rangle \in \partial h(\alpha)$. Moreover,
\[
\langle \bg_{\bw_\alpha}, \bx - \by\rangle - \langle \bg_{\by}, \bx - \by\rangle
= \frac{1}{\alpha} \langle \bg_{\bw_\alpha} - \bg_{\by}, \bw_\alpha - \by\rangle
\geq \frac{\lambda}{\alpha} \|\bw_\alpha - \by\|^2
= \lambda \alpha \|\bx - \by\|^2,
\]
where we used \eqref{eq:hessian_strong_conv1} in the inequality.
Using the fundamental theorem of calculus for extended-real-valued convex functions \index{fundamental theorem of calculus for extended-real-valued convex functions} (Theorem~\ref{thm:ftc}) and this last inequality, we have
\begin{align*}
f(\bx) - f(\by) - \langle \bg_{\by}, \bx - \by\rangle
&= h(1) - h(0) - \langle \bg_{\by}, \bx - \by\rangle\\
&= \int_0^1 \! (\langle\bg_{\bw_\alpha}, \bx - \by\rangle - \langle\bg_{\by},\bx - \by\rangle) \, \mathrm{d}\alpha
\geq \frac{\lambda}{2} \|\bx - \by\|^2~.
\end{align*}
Hence, by Lemma~\ref{lemma:strong_convexity}, $f$ is $\lambda$-strongly convex with respect to $\|\cdot\|$.

For the second statement, assume that $f$ is twice differentiable and \eqref{eq:hessian_strong_conv2} holds. Then,
\[
h''(\alpha)
= \langle \nabla^2 f(\by+\alpha(\bx-\by))(\bx-\by), \bx-\by\rangle
\geq \lambda \|\bx-\by\|^2~.
\]
Moreover, from Taylor's remainder theorem, we have that $h(1)=h(0)+h'(0)+h''(\beta)/2$, where $\beta \in [0,1]$. So, we have
\[
f(\bx)
= h(1)
= h(0)+h'(0)+h''(\beta)/2
\geq f(\by) + \langle \nabla f(\by), \bx-\by \rangle + \frac{\lambda}{2} \|\bx-\by\|^2~.
\]
If it holds for all $\bx,\by$, then, by Lemma~\ref{lemma:strong_convexity}, $f$ is $\lambda$-strongly convex with respect to $\|\cdot\|$.
\end{proof}

\begin{example}
Let $f(\bx)=\frac{1}{2}\|\bx\|_2^2$. Using Theorem~\ref{thm:hessian_strong_conv}, we have that $f$ is 1-strongly convex with respect to $\|\cdot\|_2$ in $\R^d$.
\end{example}

%

\begin{figure}[t]
\centering
\begin{tikzpicture}
\begin{axis}[
          width=7cm,
          xmin=-2,xmax=2,
          axis lines=middle,
          xtick={-2,-1,0,1,2},
          xlabel={$x$},
          legend pos=north west,
          legend style={font=\tiny},
          legend image post style={
              scale=0.5,
          },
          legend cell align={left},
          ]
\addplot[thick,black,samples=200] {abs(x)+x^2};
\addlegendentry{$f(x)=|x|+x^2$}
\addplot[thick,dotted,samples=200]  {x^2};
\addlegendentry{Lower bound 1: $x^2$}
\addplot[thick,dashdotted,samples=200]  {x^2+0.5*x};
\addlegendentry{Lower bound 2: $x^2+x/2$}
\end{axis}
\end{tikzpicture}
\caption{Possible lower bounds to the strongly convex non-differentiable function $f(x)=|x|+x^2$.}
\label{fig:strongly_convex}
\commentAlt{Figure~\ref{fig:strongly_convex}. Plot of the nondifferentiable strongly convex function f(x)=|x|+x^2 together with two quadratic lower bounds, x^2 and x^2+x/2.}
\end{figure}

However, a strongly convex function does not need to be twice differentiable. Indeed, we do not even need plain differentiability. Hence, the use of the subgradient implies that the quadratic lower bound does not have to be uniquely determined, as in the next example.
\begin{example}
Consider the strongly convex function $f(x)=|x|+x^2$. In Figure~\ref{fig:strongly_convex}, we show two possible quadratic lower bounds to the function at $x=0$.
\end{example}

We also have the following useful property of sums of strongly convex functions.
\begin{theorem}
Let $f:\R^d\to (-\infty, +\infty]$ be $\mu_1$-strongly convex and $g:\R^d \to (-\infty, +\infty]$ be $\mu_2$-strongly convex in a non-empty convex set $\mathcal{V} \subseteq \dom f\cap \dom g$ with respect to $\|\cdot\|$. Then, $f+g$ is $\mu_1+\mu_2$-strongly convex in $\mathcal{V}$ with respect to $\|\cdot\|$.
\end{theorem}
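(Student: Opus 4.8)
The plan is to reduce everything to the subgradient sum rule (Theorem~\ref{theorem:sum_subgradients}) together with the defining inequality of strong convexity applied to $f$ and $g$ separately. The only genuine content is to verify that every subgradient of $f+g$ splits as a subgradient of $f$ plus one of $g$; once that decomposition is available, the two quadratic lower bounds simply add.

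First I would check the hypothesis of Theorem~\ref{theorem:sum_subgradients}. Since $V$ is non-empty and $V \subseteq \interior \dom f \cap \interior \dom g$, any point of $V$ lies in $\dom g \cap \interior \dom f$, so that intersection is non-empty and the qualification condition of the theorem holds (with $m=2$). Consequently, for every $\bx \in V$ we obtain the equality $\partial(f+g)(\bx) = \partial f(\bx) + \partial g(\bx)$. I would also note, for safety, that $\partial f(\bx)$ and $\partial g(\bx)$ are non-empty on $V \subseteq \interior \dom f \cap \interior \dom g$, since a proper convex function is subdifferentiable on the interior of its domain.

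Next, I would fix arbitrary $\bx, \by \in V$ and an arbitrary $\bh \in \partial(f+g)(\bx)$; by the equality above I can write $\bh = \bg_f + \bg_g$ with $\bg_f \in \partial f(\bx)$ and $\bg_g \in \partial g(\bx)$. Applying $\mu_1$-strong convexity of $f$ and $\mu_2$-strong convexity of $g$ (both legitimate since $\bx, \by \in V$, and both $f(\by),g(\by)$ finite because $\by \in \interior \dom f \cap \interior \dom g$) gives two inequalities, which I would add term by term. The linear parts combine to $\langle \bg_f + \bg_g, \by - \bx \rangle = \langle \bh, \by - \bx \rangle$ and the quadratic parts combine to $\tfrac{\mu_1 + \mu_2}{2}\|\bx - \by\|^2$, producing exactly the defining inequality of $(\mu_1 + \mu_2)$-strong convexity for $f+g$ at the arbitrary $\bh$. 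Since $\bx$, $\by$ and $\bh$ were arbitrary, this finishes the argument.

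The main (and essentially only) subtlety is the direction of the subgradient inclusion: strong convexity as defined in this book requires the lower bound to hold for \emph{all} subgradients of $f+g$, so the always-true inclusion $\partial f(\bx) + \partial g(\bx) \subseteq \partial(f+g)(\bx)$ is not sufficient — it would only establish the inequality for those $\bh$ that happen to decompose. I genuinely need the reverse inclusion, which is precisely why verifying the constraint qualification in Theorem~\ref{theorem:sum_subgradients} is the crucial step rather than a formality.
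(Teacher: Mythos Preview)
Your proposal is correct and follows exactly the approach of the paper, which simply observes that the assumption on $V$ makes the subdifferential of the sum equal to the sum of the subdifferentials (via Theorem~\ref{theorem:sum_subgradients}) and then the strong-convexity inequalities add. You have spelled out the details the paper leaves implicit, including the key observation that one needs the equality $\partial(f+g)=\partial f+\partial g$ rather than merely the always-valid inclusion.
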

\begin{proof}
It is enough to sum the inequality in the definition of strong convexity (Definition~\ref{def:strong_convexity}) for both functions.
\end{proof}

\index{function!strongly convex|)textbf}

%
%

\subsection{Online Subgradient Descent for Strongly Convex Losses}
\index{function!strongly convex|(}

\begin{theorem}
\label{thm:log_regret}
Let $\mathcal{V}$ be a non-empty closed convex set in $\R^d$.
Assume that the functions $\ell_t: \R^d \to (-\infty, +\infty]$ are $\mu_t$-strongly convex with respect to $\|\cdot\|_2$ and subdifferentiable on $\mathcal{V}$, where $\mu_t>0$. Use \ac{OSD} in Algorithm~\ref{alg:posd} with stepsizes equal to $\eta_t=\frac{1}{\sum_{i=1}^t \mu_i}$. Then, for any $\bu \in \mathcal{V}$, we have the following regret guarantee
\[
\sum_{t=1}^T (\ell_t(\bx_t) - \ell_t(\bu))
\leq \frac{1}{2} \sum_{t=1}^T \frac{ \|\bg_t\|^2_2}{\sum_{i=1}^t \mu_i}~.
\]
\end{theorem}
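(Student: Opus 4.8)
The plan is to mimic the proof of Theorem~\ref{thm:pogd}, but to replace the plain convexity step with the stronger quadratic lower bound coming from $\mu_t$-strong convexity, and then to choose the step sizes so that the extra quadratic terms exactly cancel the telescoping ``penalty'' terms.

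First I would redo the one-step analysis. Starting from the projection inequality (Proposition~\ref{prop:euclidean_proj}) exactly as in Lemma~\ref{lemma:sd_one_step}, I obtain
\[
\langle \bg_t, \bx_t - \bu\rangle \leq \frac{1}{2\eta_t}\left(\|\bx_t-\bu\|_2^2 - \|\bx_{t+1}-\bu\|_2^2\right) + \frac{\eta_t}{2}\|\bg_t\|_2^2~.
\]
The new ingredient is that $\mu_t$-strong convexity of $\ell_t$ gives, for $\bg_t \in \partial\ell_t(\bx_t)$,
\[
\ell_t(\bx_t) - \ell_t(\bu) \leq \langle \bg_t, \bx_t-\bu\rangle - \frac{\mu_t}{2}\|\bx_t-\bu\|_2^2~.
\]
Combining the two yields the per-round bound
\[
\ell_t(\bx_t) - \ell_t(\bu) \leq \frac{1}{2\eta_t}\|\bx_t-\bu\|_2^2 - \frac{1}{2\eta_t}\|\bx_{t+1}-\bu\|_2^2 - \frac{\mu_t}{2}\|\bx_t-\bu\|_2^2 + \frac{\eta_t}{2}\|\bg_t\|_2^2,
\]
which carries the extra term $-\frac{\mu_t}{2}\|\bx_t-\bu\|_2^2$ compared to the Lipschitz case.

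Then I would sum over $t=1,\dots,T$ and collect the coefficient of each $\|\bx_t-\bu\|_2^2$. Writing $a_t = \|\bx_t-\bu\|_2^2 \geq 0$, the ``potential'' part of the bound is $\tfrac12\sum_{t=1}^T\big(\tfrac{1}{\eta_t}a_t - \tfrac{1}{\eta_t}a_{t+1} - \mu_t a_t\big)$, so the coefficient of $a_t$ for $1\le t\le T$ (with the convention $\tfrac{1}{\eta_0}=0$) is $\tfrac{1}{\eta_t}-\tfrac{1}{\eta_{t-1}}-\mu_t$. The crux of the argument is that the prescribed step size $\eta_t = 1/\sum_{i=1}^t\mu_i$ makes $\tfrac{1}{\eta_t}=\sum_{i=1}^t\mu_i$, hence $\tfrac{1}{\eta_t}-\tfrac{1}{\eta_{t-1}}=\mu_t$ and every such coefficient vanishes exactly. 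The only surviving potential term is $-\tfrac{1}{2\eta_T}\|\bx_{T+1}-\bu\|_2^2\le 0$, which I simply drop.

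What remains is precisely $\tfrac12\sum_{t=1}^T \eta_t\|\bg_t\|_2^2$, giving the claim. I do not expect any genuine obstacle here; the one point requiring care is the bookkeeping of the telescoping indices, so that the cancellation $\tfrac{1}{\eta_t}-\tfrac{1}{\eta_{t-1}}=\mu_t$ is applied correctly at the first and last rounds and the nonnegativity of $a_{T+1}$ is used with the right sign.
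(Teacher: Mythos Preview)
Your proposal is correct and follows essentially the same approach as the paper: combine the one-step OSD inequality (Lemma~\ref{lemma:sd_one_step}) with the strong-convexity lower bound, then observe that the choice $\eta_t = 1/\sum_{i=1}^t \mu_i$ makes the coefficients of $\|\bx_t-\bu\|_2^2$ telescope to zero, leaving only the gradient terms and a nonpositive remainder. The paper writes the key cancellation as $\tfrac{1}{2\eta_1}-\tfrac{\mu_1}{2}=0$ and $\tfrac{1}{2\eta_t}-\tfrac{\mu_t}{2}=\tfrac{1}{2\eta_{t-1}}$ for $t\ge 2$, which is exactly your identity $\tfrac{1}{\eta_t}-\tfrac{1}{\eta_{t-1}}=\mu_t$ with the convention $\tfrac{1}{\eta_0}=0$.
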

\begin{proof}
From the assumption of $\mu_t$-strong convexity of the functions $\ell_t$, we have that
\[
\ell_t(\bx_t)-\ell_t(\bu)
\leq \langle \bg_t, \bx_t - \bu \rangle - \frac{\mu_t}{2} \|\bx_t-\bu\|^2_2~.
\]
From the fact that $\eta_t=\frac{1}{\sum_{i=1}^t \mu_i}$, we have
\begin{align*}
&\frac{1}{2\eta_1}-\frac{\mu_1}{2}=0,\\
&\frac{1}{2\eta_t}-\frac{\mu_t}{2}=\frac{1}{2\eta_{t-1}}, \ t=2,\dots,T~.
\end{align*}
Hence, use Lemma~\ref{lemma:sd_one_step} and sum from $t=1,\dots,T$, to obtain
\begin{align*}
\sum_{t=1}^T &\left(\ell_t(\bx_t)- \ell_t(\bu)\right)\\
&\leq \sum_{t=1}^T \left(\frac{\|\bx_t-\bu\|_2^2}{2\eta_t} - \frac{\|\bx_{t+1}-\bu\|_2^2}{2\eta_t} - \frac{\mu_t \|\bx_t-\bu\|^2_2}{2}  + \frac{\eta_t \|\bg_t\|_2^2}{2}\right) \\
&= - \frac{\|\bx_{2}-\bu\|_2^2}{2\eta_1} + \sum_{t=2}^T \left(\frac{\|\bx_t-\bu\|_2^2}{2 \eta_{t-1}} - \frac{\|\bx_{t+1}-\bu\|_2^2}{2 \eta_t}\right) + \sum_{t=1}^T \frac{\eta_t \|\bg_t\|_2^2}{2}  ~.
\end{align*}
Observing that the first sum on the r.h.s. is a telescopic sum, we have the stated bound.
\end{proof}

\begin{corollary}
\label{cor:log_regret}
Under the assumptions of Theorem~\ref{thm:log_regret}, if in addition we have $\mu_t = \mu >0$ and $\ell_t$ is $L$-Lipschitz with respect to $\|\cdot\|_2$ on an open set containing $\mathcal{V}$, for $t=1, \dots, T$, then we have
\[
\sum_{t=1}^T \ell_t(\bx_t) - \sum_{t=1}^T \ell_t(\bu)
\leq \frac{L^2}{2 \mu} \left(1+\ln T\right)~.
\]
\end{corollary}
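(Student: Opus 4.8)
The plan is to chain the regret bound already established in Theorem~\ref{thm:log_regret} with the explicit form of the stepsize and the Lipschitz assumption, then reduce everything to a harmonic sum. Concretely, Theorem~\ref{thm:log_regret} gives, for any $\bu \in V$,
\[
\sum_{t=1}^T \ell_t(\bx_t) - \sum_{t=1}^T \ell_t(\bu) \leq \frac{1}{2}\sum_{t=1}^T \eta_t \|\bg_t\|_2^2,
\]
so the task is purely to upper bound the right-hand side under the two additional hypotheses $\mu_t = \mu$ and $L$-Lipschitzness.

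First I would substitute the stepsize. Since $\mu_t = \mu$ for all $t$, the prescribed stepsize $\eta_t = \frac{1}{\sum_{i=1}^t \mu_i}$ simplifies to $\eta_t = \frac{1}{t\mu}$. Next I would control the subgradient norms: because each $\ell_t$ is $L$-Lipschitz with respect to $\|\cdot\|_2$, the theorem characterizing Lipschitz functions via their subgradients gives $\|\bg_t\|_2 \leq L$ for every $\bg_t \in \partial \ell_t(\bx_t)$. Plugging both facts into the bound yields
\[
\frac{1}{2}\sum_{t=1}^T \eta_t \|\bg_t\|_2^2 \leq \frac{1}{2}\sum_{t=1}^T \frac{L^2}{t\mu} = \frac{L^2}{2\mu}\sum_{t=1}^T \frac{1}{t}.
\]

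The final step is to estimate the harmonic sum. I would reuse the bound $\sum_{t=1}^T \frac{1}{t} \leq 1 + \ln T$ already derived in the first chapter (by comparing the sum to an integral), which immediately gives the claimed
\[
\sum_{t=1}^T \ell_t(\bx_t) - \sum_{t=1}^T \ell_t(\bu) \leq \frac{L^2}{2\mu}(1 + \ln T).
\]
I do not expect any genuine obstacle here: the corollary is essentially a specialization of Theorem~\ref{thm:log_regret}, and all three ingredients (the closed-form stepsize, the uniform subgradient bound from Lipschitzness, and the harmonic-series estimate) are already available. The only point requiring a moment's care is making sure the Lipschitz-to-subgradient-norm implication is invoked correctly, since it is the one place where the new hypothesis actually enters.
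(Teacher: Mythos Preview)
Your proof is correct and is exactly the natural specialization the paper has in mind; the paper does not even spell out a proof of this corollary because it follows immediately from Theorem~\ref{thm:log_regret} by plugging in $\eta_t=1/(t\mu)$, bounding $\|\bg_t\|_2\le L$ via Lipschitzness, and using the harmonic-sum estimate $\sum_{t=1}^T 1/t\le 1+\ln T$ from Chapter~1.
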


\begin{remark}
Notice that this corollary implicitly requires a bounded domain, otherwise the loss functions will not be Lipschitz, given that they are also strongly convex.
\end{remark}

\begin{remark}
Corollary~\ref{cor:log_regret} \emph{does not} imply that for any $T$ the regret will be smaller than using learning rates $\propto \frac{1}{L\sqrt{t}}$. Indeed, depending on the other quantities in the upper bounds, the upper bound in Corollary~\ref{cor:log_regret} is better than that of \ac{OSD} with Lipschitz losses only for $T$ large enough. Moreover, a better upper bound does not necessarily imply a better regret.
\end{remark}

\begin{example}
\label{ex:guessing_strongly_convex}
\index{guessing game|(}
Consider once again the guessing game in the first chapter: $\ell_t(x)=(x-y_t)^2$. Note that the loss functions are $2$-strongly convex with respect to $|\cdot|$.
Hence, setting $\eta_t=\frac{1}{2t}$ and $\ell_t'(x)=2(x-y_t)$ gives a regret of $\ln(T)+1$.
\index{guessing game|)}
\end{example}

\index{online-to-batch conversion|(}
We can also use the online-to-batch conversion on strongly convex stochastic problems.
\begin{example}
\label{ex:non_uniform_o2b_strongly}
As done before, we can use the online-to-batch conversion to use Corollary~\ref{cor:log_regret} to obtain stochastic subgradient descent algorithms for strongly convex stochastic functions.
For example, consider the classic Support Vector Machine objective,
\[
\min_{\bx} \ F(\bx):=\frac{\lambda}{2}\|\bx\|_2^2 + \frac{1}{N} \sum_{i=1}^N \max(1-y_i \langle \bz_i, \bx\rangle,0),
\]
or any other regularized formulation like regularized logistic regression,
\[
\min_{\bx} \ F(\bx):=\frac{\lambda}{2}\|\bx\|_2^2 + \frac{1}{N} \sum_{i=1}^N \ln(1+\exp(-y_i \langle \bz_i, \bx\rangle)),
\]
where $\bz_i \in \R^d$, $\|\bz_i\|_2\leq R$, and $y_i \in\{-1,1\}$.
First, notice that the minimizer of both expressions has to be in the L$_2$ ball of radius proportional to $\sqrt{\frac{1}{\lambda}}$ (proof left as an exercise). Hence, we can set $\mathcal{V}$ equal to this set.
Then, setting $\ell_t(\bx)=\frac{\lambda}{2}\|\bx\|_2^2+\max(1-y_t \langle \bz_t, \bx\rangle,0)$ or $\ell_t(\bx)=\frac{\lambda}{2}\|\bx\|_2^2+\ln(1+\exp(-y_t \langle \bz_t, \bx\rangle))$ results in $\lambda$-strongly convex loss functions. Using Corollary~\ref{cor:log_regret} and Theorem~\ref{thm:o2b} gives immediately
\[
\E\left[F\left(\frac{1}{T}\sum_{t=1}^T \bx_t\right)\right] - \min_{\bx} \ F(\bx)
= \mathcal{O}\left(\frac{\ln T}{ \lambda T}\right)~.
\]
However, we can do better! We can use non-uniform weights in Theorem~\ref{thm:o2b} to remove the log term and obtain the optimal convergence rate for the stochastic optimization of strongly convex functions.
Observe that $\ell_t(\bx)=\frac{\lambda t}{2}\|\bx\|_2^2+t \max(1-y_t \langle \bz_t, \bx\rangle,0)$ or $\ell_t(\bx)=\frac{\lambda t}{2}\|\bx\|_2^2+t \ln(1+\exp(-y_t \langle \bz_t, \bx\rangle))$ are $\lambda t$-strongly convex loss functions.
So, using Theorem~\ref{thm:log_regret}, we have that $\eta_t=\frac{2}{\lambda t(t+1)}$ and Theorem~\ref{thm:o2b} gives immediately
\[
\E\left[F\left(\frac{2}{T(T+1)}\sum_{t=1}^T t\, \bx_t\right)\right] - \min_{\bx} \ F(\bx)
= \mathcal{O}\left(\frac{1}{\lambda T}\right)~.
\]
\end{example}

\index{online-to-batch conversion|)}
\index{function!strongly convex|)}

\section{Adaptive Algorithms: $\mathscr{L}^\star$ bounds and AdaGrad}
\label{sec:lstar}

In this section, we will explore different conditions where we can get better regret upper bounds than $\mathcal{O}(D L \sqrt{T})$ as $T\to \infty$. Also, we will obtain these improved guarantees in an \emph{automatic} way. That is, the algorithm will be \emph{adaptive} to the characteristics of the sequence of loss functions, without having to rely on information about the future.

\index{learning rate!adaptive|(}
\subsection{Adaptive Learning Rates for Online Subgradient Descent}

Consider the minimization of the regret with linear losses:
\[
\Regret_T(\bu)=\sum_{t=1}^T \langle \bg_t, \bx_t\rangle - \sum_{t=1}^T \langle \bg_t, \bu\rangle~.
\]
Using \ac{OSD}, in Chapter~\ref{ch:osd}, we said that the regret for bounded domains can be upper bounded by
\[
\sum_{t=1}^T \langle \bg_t, \bx_t\rangle - \sum_{t=1}^T \langle \bg_t, \bu\rangle
\leq \frac{D^2}{2 \eta_T} + \frac{1}{2}\sum_{t=1}^T \eta_t \|\bg_t\|_2^2~.
\]
With a fixed learning rate $\eta_t=\eta$, the learning rate that minimizes this upper bound on the regret is
\[
\eta^\star = \frac{D}{\sqrt{\sum_{t=1}^T \|\bg_t\|_2^2}}~.
\]
Unfortunately, as we said, this learning rate cannot be used because it assumes knowledge of the future rounds.
However, we might be lucky, and we might try to just approximate it in each round using the knowledge up to time $t$.
That is, we might try to use
\begin{equation}
\label{eq:ada_eta}
\eta_t = \frac{D}{\sqrt{\sum_{i=1}^t \|\bg_i\|_2^2}},
\end{equation}
and just skip the rounds in which $\bg_t=\boldsymbol{0}$ to avoid possible divisions by 0.
Observe that $\eta_T=\eta^\star$, so the first term of the regret would be exactly what we need!
For the other term, the optimal learning rate would give us
\[
\frac{1}{2}\sum_{t=1}^T \eta_T^\star \|\bg_t\|_2^2
= \frac{1}{2} D \sqrt{\sum_{t=1}^T \|\bg_t\|_2^2}~.
\]
Now, let's see what we obtain with our approximation in the other term of the regret:
\[
\frac{1}{2}\sum_{t=1}^T \eta_t \|\bg_t\|^2_2
= \frac{1}{2} D \sum_{t=1}^T \frac{\|\bg_t\|_2^2}{\sqrt{\sum_{i=1}^t \|\bg_i\|_2^2}}~.
\]
We need a way to upper-bound that sum. The way to treat these sums, as we did in Chapter~\ref{ch:first}, is to try to approximate them with integrals.
So, we can use the following very handy lemma that generalizes a lot of similar specific ones.
\begin{lemma}
\label{lemma:sum_integral_bounds}
Let $a_0, \dots, a_T\geq 0$ and $f:[0,+\infty)\to [0, +\infty)$ a non-increasing continuous function.
Then
\begin{align*}
\sum_{t=1}^T a_t f\left(a_0+\sum_{i=1}^{t} a_i\right)
&\leq \int_{a_0}^{\sum_{t=0}^T a_t} \! f(x) \, \mathrm{d}x~.
\end{align*}
\end{lemma}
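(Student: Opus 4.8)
The plan is to exploit the monotonicity of $f$ to dominate each summand by an integral over a carefully chosen subinterval, and then to telescope these subintervals into the single integral on the right-hand side. First I would introduce the partial sums $S_t := a_0 + \sum_{i=1}^t a_i$, with the convention $S_0 = a_0$. Since the $a_t$ are nonnegative (as in the application where $a_t = \|\bg_t\|_2^2$), the sequence $S_0 \leq S_1 \leq \cdots \leq S_T$ is nondecreasing, and crucially $a_t = S_t - S_{t-1}$ is exactly the length of the interval $[S_{t-1}, S_t]$.

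The key step is the pointwise comparison on each such interval. For every $x \in [S_{t-1}, S_t]$ we have $x \leq S_t$, so because $f$ is nonincreasing, $f(x) \geq f(S_t)$. Integrating this inequality over $[S_{t-1}, S_t]$ gives
\[
\int_{S_{t-1}}^{S_t} f(x)\, dx \;\geq\; (S_t - S_{t-1})\, f(S_t) \;=\; a_t\, f(S_t) \;=\; a_t\, f\!\left(a_0 + \sum_{i=1}^{t} a_i\right)~.
\]
This is precisely the $t$-th summand on the left, now bounded above by the integral of $f$ over the $t$-th block.

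Summing over $t = 1, \dots, T$ and using that the intervals $[S_{t-1}, S_t]$ partition $[S_0, S_T]$, the integrals telescope:
\[
\sum_{t=1}^T a_t\, f\!\left(a_0 + \sum_{i=1}^{t} a_i\right) \;\leq\; \sum_{t=1}^T \int_{S_{t-1}}^{S_t} f(x)\, dx \;=\; \int_{S_0}^{S_T} f(x)\, dx \;=\; \int_{a_0}^{\sum_{t=0}^T a_t} f(x)\, dx~,
\]
where the final equality substitutes $S_0 = a_0$ and $S_T = a_0 + \sum_{i=1}^T a_i = \sum_{t=0}^T a_t$, matching the stated upper limit.

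There is no real obstacle here; the argument is a one-line geometric observation (underestimating the area of a step function by the area under a decreasing curve) made rigorous. The only points requiring care are the implicit assumption that the $a_t$ are nonnegative, so that the integration intervals are nondegenerate and correctly oriented, and the correct use of the monotonicity direction: we must lower-bound $f(x)$ by $f$ evaluated at the \emph{right} endpoint $S_t$ (the larger argument), which is exactly what makes the argument match the index $a_0 + \sum_{i=1}^t a_i$ appearing inside $f$ rather than $a_0 + \sum_{i=1}^{t-1} a_i$.
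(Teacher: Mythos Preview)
Your proof is correct and takes essentially the same approach as the paper's: define the partial sums, bound each term $a_t f(S_t)$ by $\int_{S_{t-1}}^{S_t} f(x)\,dx$ using monotonicity, and sum. Your observation that the nonnegativity of $a_t$ for $t\geq 1$ is implicit rather than stated is a fair remark, and the paper's proof relies on it equally.
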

\begin{proof}
Denote by $s_t=\sum_{i=0}^{t} a_i$.
\begin{align*}
a_t f\left(a_0+ \sum_{i=1}^{t} a_i\right)
= a_t f(s_t)
=  \int_{s_{t-1}}^{s_t} f(s_t) d x
\leq \int_{s_{t-1}}^{s_t} f(x) d x~.
\end{align*}
Summing over $t=1, \dots, T$, we have the stated bound.
\end{proof}

Using this lemma with $f(x)=\frac{1}{\sqrt{x}}$ and $a_0\to 0$, we have that
\[
\frac{1}{2} D \sum_{t=1}^T \frac{\|\bg_t\|_2^2}{\sqrt{\sum_{i=1}^t \|\bg_i\|_2^2}}
\leq D \sqrt{\sum_{t=1}^T \|\bg_t\|_2^2}~.
\]
Surprisingly, this term is only a factor of 2 worse than what we would have got from the optimal choice of $\eta^\star$. However, this learning rate can be computed without knowledge of the future, and it can actually be used!
Overall, with this choice, we get
\begin{equation}
\label{eq:ada_grad_norm}
\Regret_T(\bu)
=\sum_{t=1}^T \ell_t(\bx_t) - \sum_{t=1}^T \ell_t(\bu)
\leq \frac32 D \sqrt{\sum_{t=1}^T \|\bg_{t}\|_2^2}~.
\end{equation}
Note that it is possible to improve the constant in front of the bound to $\sqrt{2}$ by multiplying the learning rates by $\frac{\sqrt{2}}{2}$.
So, putting everything together, we have the following theorem.
\begin{theorem}
\label{thm:ada_grad_norm}
Let $\mathcal{V} \subset \R^d$ be a closed non-empty convex set with diameter $D$, that is, $\max_{\bx,\by\in \mathcal{V}} \|\bx-\by\|_2 \leq D$. Let $\ell_1, \dots, \ell_T$ be an arbitrary sequence of convex functions $\ell_t:\R^d \to (-\infty, +\infty]$ subdifferentiable on $\mathcal{V}$ for $t=1, \dots,T$. Pick any $\bx_1 \in \mathcal{V}$, and run \ac{OSD} with $\eta_{t}=\frac{\sqrt{2}D}{2\sqrt{\sum_{i=1}^t \|\bg_i\|_2^2}}, \ t=1, \dots, T$, and do not update on rounds when $\bg_t=\boldsymbol{0}$. Then, $\forall \bu \in \mathcal{V}$, the following regret bound holds
\[
\sum_{t=1}^T (\ell_t(\bx_t) - \ell_t(\bu))
\leq D \sqrt{2\sum_{t=1}^T \|\bg_{t}\|_2^2}
= \sqrt{2} \min_{\eta>0} \left(\frac{D^2}{2\eta} + \frac{\eta}{2}\sum_{t=1}^T \|\bg_t\|_2^2\right).
\]
\end{theorem}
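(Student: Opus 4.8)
The plan is to assemble the bound directly from the machinery already developed, treating the specific learning rate as the only genuinely new ingredient. First I would invoke the subgradient one-step inequality of Lemma~\ref{lemma:sd_one_step}, divide through by $\eta_t$, and sum over $t=1,\dots,T$. Since $\sum_{i=1}^t \|\bg_i\|_2^2$ is nondecreasing in $t$, the chosen $\eta_t$ is nonincreasing, so the telescoping-plus-diameter argument from the proof of Theorem~\ref{thm:pogd}, which holds verbatim with subgradients in place of gradients, applies and yields
\[
\Regret_T(\bu) \leq \frac{D^2}{2\eta_T} + \frac{1}{2}\sum_{t=1}^T \eta_t \|\bg_t\|_2^2~.
\]

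Next I would substitute $\eta_t = \frac{\sqrt{2}D}{2\sqrt{\sum_{i=1}^t \|\bg_i\|_2^2}}$. The first term collapses immediately to $\frac{D}{\sqrt{2}}\sqrt{\sum_{t=1}^T \|\bg_t\|_2^2}$ precisely because $\eta_T$ depends on the full sum. For the second term, the key step is to control $\sum_{t=1}^T \frac{\|\bg_t\|_2^2}{\sqrt{\sum_{i=1}^t \|\bg_i\|_2^2}}$, for which I would apply Lemma~\ref{lemma:sum_integral_bounds} with $a_0=0$, $a_t = \|\bg_t\|_2^2$, and the nonincreasing function $f(x)=\frac{1}{\sqrt{x}}$, giving the integral bound $\int_0^{\sum_t \|\bg_t\|_2^2} x^{-1/2}\,dx = 2\sqrt{\sum_{t=1}^T \|\bg_t\|_2^2}$. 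This makes the second term also equal to $\frac{D}{\sqrt{2}}\sqrt{\sum_{t=1}^T \|\bg_t\|_2^2}$, so adding the two pieces produces exactly $\sqrt{2}D\sqrt{\sum_{t=1}^T \|\bg_t\|_2^2}$.

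Finally, for the rightmost equality I would carry out the one-dimensional minimization of $g(\eta)=\frac{D^2}{2\eta}+\frac{\eta}{2}\sum_{t=1}^T \|\bg_t\|_2^2$: setting $g'(\eta)=0$ gives $\eta=\frac{D}{\sqrt{\sum_t \|\bg_t\|_2^2}}$, at which $g$ equals $D\sqrt{\sum_t \|\bg_t\|_2^2}$, so multiplying by $\sqrt{2}$ matches the derived bound and closes the identity.

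The proof is essentially bookkeeping layered on top of earlier results, so there is no deep obstacle; the one point requiring a little care is the application of Lemma~\ref{lemma:sum_integral_bounds} with $a_0=0$, since $f(x)=x^{-1/2}$ is singular at the origin. The improper integral $\int_0^S x^{-1/2}\,dx$ nonetheless converges, and the lemma's per-term inequality $a_t f(s_t)\leq \int_{s_{t-1}}^{s_t} f(x)\,dx$ remains valid for every $t\geq 1$, so the bound goes through, implicitly assuming $\bg_1\neq \bzero$ so that $\eta_1$ is well defined; the degenerate all-zero-gradient case is trivial.
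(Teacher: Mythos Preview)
Your proposal is correct and follows essentially the same approach as the paper: the paper's text preceding the theorem derives the bound $\frac{D^2}{2\eta_T}+\frac{1}{2}\sum_t \eta_t\|\bg_t\|_2^2$ from the one-step lemma and telescoping, bounds the sum via Lemma~\ref{lemma:sum_integral_bounds} with $f(x)=x^{-1/2}$, and then notes that scaling the learning rate by $\tfrac{\sqrt{2}}{2}$ optimizes the constant to $\sqrt{2}$. You simply carry out the computation directly with the already-scaled learning rate, which is a cosmetic difference; your care about the $a_0=0$ singularity and the $\bg_1=\bzero$ edge case is a nice touch the paper leaves implicit.
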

The second equality in the theorem clearly shows the advantage of this learning rate: we obtain (almost) the same guarantee we would have obtained knowing the future gradients!

This is an interesting result on its own: it gives a principled way to set the learning rates with an almost optimal guarantee. Observe that this approach avoids knowing the Lipschitz constants of the functions, so the algorithm adapts to them. The downside is that it works only in the bounded case.
Another important observation is that the sum of the squared gradients acts as an \emph{intrinsic notion of time}, better suited than $T$ to capture the dependence on time.
\index{learning rate!adaptive|)}

There are also other consequences of this simple regret upper bound as we will now see by specializing this result to the case that the losses are \emph{smooth} or \emph{self-bounded}.

\subsection{Convex Analysis Bits: Dual Norms, Smooth, and Self-Bounded Functions}

We now consider a family of loss functions that have the characteristic of being lower bounded by the squared norm of their subgradients.
We will also introduce the concept of \emph{dual norms}. While dual norms are not strictly needed for this topic, they give more generality, and at the same time, they allow me to slowly introduce some of the concepts that will be needed for the chapter on Online Mirror Descent.

\begin{definition}
The \textbf{dual norm}\index{norm!dual|textbf} $\|\cdot\|_\star$ of a norm $\|\cdot\|$ is defined as $\|\btheta\|_\star=\max_{\bx: \|\bx\|\leq 1} \ \langle \btheta, \bx\rangle$.
\end{definition}

\begin{remark}
The definition of dual norm immediately implies $\langle \btheta, \bx\rangle \leq \|\btheta\|_\star \|\bx\|$.
\end{remark}

\index{norm!dual|(}
\begin{example}
\label{example:dual_l2}
The dual norm of the L$_2$ norm is the L$_2$ norm. We can easily prove it. First of all, if $\btheta=\boldsymbol{0}$, then the dual norm is 0 too. Hence, let's assume that $\btheta\neq\boldsymbol{0}$. We have $\|\btheta\|_\star=\max_{\bx: \|\bx\|_2\leq 1} \ \langle \btheta, \bx\rangle \leq \|\btheta\|_2$ by Cauchy--Schwarz inequality\index{inequality!Cauchy--Schwarz}. Also, set $\bv=\frac{\btheta}{\|\btheta\|_2}$, so $\max_{\bx: \|\bx\|_2\leq 1} \ \langle \btheta, \bx\rangle \geq \langle \btheta, \bv\rangle = \|\btheta\|_2$.
\end{example}

\begin{example}
\label{example:dual_lp}
Let $p\geq 1$. The L$_p$ norm of a vector $\bx \in \R^d$ is defined as $\|\bx\|_p=(\sum_{i=1}^d |x_i|^p)^{1/p}$\index{norm!$p$-}. The dual norm is the $q$-norm where $\frac{1}{p}+\frac{1}{q}=1$. Note that the dual of the L$_1$ norm is the L$_\infty$ norm, defined as $\|\bx\|_\infty = \max_{i=1, \dots, d} |x_i|$. The proof is left as an exercise to the reader.
\end{example}

\begin{example}
\label{example:dual_norm_a}
Let $\bA$ be a positive definite matrix, then it is possible to show that $\|\bx\|_{\bA}:=\sqrt{\bx^\top \bA \bx}$ is a norm. The dual norm is $\|\bx\|_{\bA^{-1}} = \sqrt{\bx^\top \bA^{-1}\bx}$. In fact, we have that the dual norm of $\|\cdot\|_{\bA}$ is defined as
\begin{align*}
\|\btheta\|_\star
&= \max_{\bx: \|\bx\|_{\bA}\leq 1} \langle \btheta, \bx\rangle
= \max_{\bx: \bx^\top \bA \bx \leq 1} \btheta^\top \bx
= \max_{\by: \by^\top \by \leq 1} \btheta^\top \bA^{-1/2} \by
= \max_{\by: \|\by\|_2 \leq 1} (\bA^{-1/2} \btheta)^\top \by\\
&= \|\bA^{-1/2} \btheta\|_2
= \sqrt{\btheta^\top \bA^{-1} \btheta},
\end{align*}
where we have used the change of variable $\by=\bA^{1/2} \bx$ in the third equality and the dual norm of the L$_2$ norm from Example~\ref{example:dual_l2} in the second to last equality.
\end{example}
\index{norm!dual|)}

If you do not know the concept of \emph{operator norms}, the concept of dual norm can be a bit weird at first. One way to understand it is that it is a way to measure how ``big'' linear functionals are.
For example, consider the linear function $f(\bx)=\langle \bz,\bx\rangle$, we want to try to understand how big it is. So, we can measure $\max_{\bx\neq 0}\frac{\langle \bz,\bx\rangle}{\|\bx\|}$, that is, we measure how big the output of the linear functional is compared to its input $\bx$, where $\bx$ is measured with some norm.
Now, you can show that the above is equivalent to the dual norm of $\bz$.

We have the following lemma that will be useful in the later chapters.
\begin{lemma}
\label{lemma:dual_norm_attained}
Let $\|\cdot\|$ be a norm on $\R^d$, and let $\|\cdot\|_\star$ be its dual norm.
Then, for every $\btheta \in \R^d$ and every $\alpha \in \R_{\geq 0}$, there exists $\bx \in \R^d$ such that $\|\bx\|=\alpha$ and $\langle \btheta,\bx\rangle = \|\btheta\|_\star \|\bx\|$.
\end{lemma}
\begin{proof}
If $\alpha=0$, choose $\bx=\boldsymbol{0}$.
Hence, assume $\alpha>0$.

If $\btheta=\boldsymbol{0}$, choose any $\bx$ such that $\|\bx\|=\alpha$.
Otherwise, by the definition of dual norm, there exists $\bv$ with $\|\bv\|\leq 1$ such that
\[
\langle \btheta,\bv\rangle
= \max_{\bu:\|\bu\|\leq 1} \ \langle \btheta,\bu\rangle
= \|\btheta\|_\star~.
\]
Since $\btheta\neq \boldsymbol{0}$, the function $\bv \mapsto \langle \btheta,\bv\rangle$ is nonconstant and linear, and so it attains its maximum over $\{\bv:\|\bv\|\leq 1\}$ on the boundary.
Thus, we can choose $\bv$ such that $\|\bv\|=1$. So, $\bx=\alpha \bv$ satisfies $\|\bx\|=\alpha$ and $\langle \btheta,\bx\rangle = \|\btheta\|_\star \|\bx\|$.
\end{proof}

\index{subgradient!Lipschitz convex function|(}
We can also extend Theorem~\ref{thm:subgradient_lipschitz} to generic dual norms.
\begin{theorem}
\label{thm:subgradient_lipschitz_dual}
Let $f:\R^d \to (-\infty,+\infty]$ be proper and convex. Then, the following are equivalent:
\begin{enumerate}
    \item $f$ is $L$-Lipschitz on $\interior \dom f$ with respect to $\|\cdot\|$, i.e., $|f(\bx)-f(\by)| \leq L\|\bx-\by\| \quad \forall \bx,\by \in \interior \dom f$.
    \item For every $\bx \in \interior \dom f$ and every $\bg \in \partial f(\bx)$, $\|\bg\|_\star \leq L$.
\end{enumerate}
\end{theorem}
\begin{proof}
First assume that $f$ is $L$-Lipschitz on $\interior \dom f$.

Fix $\bx \in \interior \dom f$ and $\bg \in \partial f(\bx)$. By the definition of dual norm,
\[
\|\bg\|_\star = \sup_{\|\bu\|\leq 1} \ \langle \bg,\bu\rangle~.
\]
So, it suffices to show that $\langle \bg,\bu\rangle \leq L$ for every $\bu$ with $\|\bu\|\leq 1$.

Let $\bu \in \R^d$ satisfy $\|\bu\|\leq 1$. Since $\bx \in \interior \dom f$, for $\epsilon>0$ small enough we have
\[
\by := \bx+\epsilon \bu \in \interior \dom f~.
\]
Using $L$-Lipschitz of $f$ and the subgradient inequality, we obtain
\begin{align*}
L\epsilon \|\bu\|
= L\|\by-\bx\|
\geq |f(\by)-f(\bx)|
\geq f(\by)-f(\bx)
\geq \langle \bg,\by-\bx\rangle
= \epsilon \langle \bg,\bu\rangle~.
\end{align*}
Hence, $\langle \bg,\bu\rangle \leq L\|\bu\| \leq L$.
Taking the supremum over all $\bu$ with $\|\bu\|\leq 1$ gives $\|\bg\|_\star \leq L$.

Now assume conversely that for every $\bz \in \interior \dom f$ and every $\bh \in \partial f(\bz)$, $\|\bh\|_\star \leq L$.
Take any $\bx,\by \in \interior \dom f$. Since $f$ is convex and $\bx \in \interior \dom f$, the subdifferential $\partial f(\bx)$ is nonempty, so choose $\bg \in \partial f(\bx)$. Then, by the subgradient inequality, we have
\[
f(\bx)-f(\by)
\leq \langle \bg,\bx-\by\rangle
\leq \|\bg\|_\star\,\|\bx-\by\|
\leq L\|\bx-\by\|~.
\]
Exchanging the roles of $\bx$ and $\by$ and combining the two inequalities, we obtain
\[
|f(\bx)-f(\by)| \leq L\|\bx-\by\|~. \qedhere
\]
\end{proof}
\index{subgradient!Lipschitz convex function|)}

%
%
%

Now, we can introduce smooth functions, using the dual norms defined above.
\begin{definition}
\index{function!smooth|(textbf}
Let $f$ be differentiable on an open set containing $\mathcal{V}$. We say that $f$ is \textbf{$s$-smooth} with respect to $\|\cdot\|$ if $\|\nabla f(\bx) -\nabla f(\by)\|_\star \leq s \|\bx-\by\|$ for all $\bx, \by \in \mathcal{V}$.
\end{definition}
Keeping in mind the intuition above on dual norms, taking the dual norm of a gradient makes sense if you associate each gradient with the linear functional $\langle \nabla f(\by), \bx\rangle$, that is, the one needed to create a linear approximation of $f$.

\begin{remark}
Note that smoothness does not imply convexity.
\end{remark}

Smooth functions have many properties, for example, a smooth function can be upper and lower bounded by a quadratic.
\begin{lemma}
\label{lemma:smooth_quadratic_upper_lower_bound}
Let $f:\mathcal{V} \to \R$ be $s$-smooth with respect to $\|\cdot\|$. Then, for any $\bx, \by \in \mathcal{V}$ such that the line segment between $\bx$ and $\by$ is in $\mathcal{V}$, we have
\[
|f(\by)-f(\bx) -\langle \nabla f(\bx), \by -\bx\rangle| \leq \frac{s}{2}\|\by - \bx\|^2~.
\]
\end{lemma}
\begin{proof}
First, notice that by the definition of smoothness, $\nabla f:\mathcal{V}\to \R^d$ is Lipschitz and so continuous. Hence, by the fundamental theorem of calculus, we have
\begin{align*}
f(\by)
&= f(\bx) + \int_0^1 \! \langle \nabla f(\bx+\tau (\by-\bx)), \by -\bx\rangle \, \mathrm{d} \tau \\
&= f(\bx) + \langle \nabla f(\bx), \by-\bx\rangle + \int_0^1 \! \langle \nabla f(\bx+\tau (\by-\bx))- \nabla f(\bx), \by -\bx\rangle \, \mathrm{d} \tau~.
\end{align*}
Therefore,
\begin{align*}
|f(\by)-f(\bx) -\langle \nabla f(\bx), \by -\bx\rangle|
&= \left|\int_0^1 \! \langle \nabla f(\bx+\tau (\by-\bx))- \nabla f(\bx), \by -\bx\rangle  \, \mathrm{d} \tau\right| \\
&\leq \int_0^1 \! |\langle \nabla f(\bx+\tau (\by-\bx))- \nabla f(\bx), \by -\bx\rangle|  \, \mathrm{d} \tau \\
&\leq \int_0^1 \! \|\nabla f(\bx+\tau (\by-\bx))- \nabla f(\bx)\|_\star \|\by -\bx\|  \, \mathrm{d} \tau \\
&\leq \int_0^1 \! \tau s \|\by -\bx\|^2  \, \mathrm{d} \tau
= \frac{s}{2} \|\by -\bx\|^2~. \qedhere
\end{align*}
\end{proof}

In the following, we will also need the following property.
\begin{theorem}
\label{thm:smooth}
Let $f:\R^d \to \R$ be $s$-smooth with respect to $\|\cdot\|$, and bounded from below. Then, for all $\bx \in \R^d$, we have
\[
\|\nabla f(\bx)\|_\star^2
\leq 2 s (f(\bx) - \inf_{\by \in \R^d} f(\by))~.
\]
\end{theorem}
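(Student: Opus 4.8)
The plan is to derive from $M$-smoothness a quadratic upper bound on $f$ (the so-called descent lemma), and then to minimize that upper bound over the free point $\by$; because $f$ is bounded below, the minimized upper bound cannot drop below $\inf_{\bz\in\R^d} f(\bz)$, and rearranging yields exactly the claimed inequality. Concretely, I would first establish that for all $\bx, \by \in \R^d$,
\[
f(\by) \leq f(\bx) + \langle \nabla f(\bx), \by - \bx\rangle + \frac{M}{2}\|\by-\bx\|^2~.
\]

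To prove this inequality, I would write the gap along the segment joining $\bx$ and $\by$ using the fundamental theorem of calculus, $f(\by) - f(\bx) = \int_0^1 \langle \nabla f(\bx + t(\by-\bx)), \by-\bx\rangle\, dt$, subtract off the linear term $\langle \nabla f(\bx), \by-\bx\rangle$, and bound the resulting integrand. Here the dual norm enters through the inequality $\langle \btheta, \bv\rangle \leq \|\btheta\|_\star\|\bv\|$ noted right after the definition of dual norm, applied with $\btheta = \nabla f(\bx+t(\by-\bx)) - \nabla f(\bx)$; combined with the $M$-smoothness bound $\|\nabla f(\bx+t(\by-\bx)) - \nabla f(\bx)\|_\star \leq M t \|\by-\bx\|$, integrating $M t\|\by-\bx\|^2$ over $t\in[0,1]$ produces the $\frac{M}{2}\|\by-\bx\|^2$ term.

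With the quadratic upper bound in hand, I would minimize its right-hand side over $\by$. Writing $\by - \bx = -s\bu$ with $\|\bu\| = 1$ and $s\ge 0$, the bound becomes $f(\bx) - s\langle \nabla f(\bx), \bu\rangle + \frac{M}{2}s^2$. Choosing $\bu$ to attain the dual norm, i.e. $\langle \nabla f(\bx), \bu\rangle = \|\nabla f(\bx)\|_\star$ (attained since the unit ball of $\R^d$ is compact), and then optimizing the scalar $s$ at $s = \|\nabla f(\bx)\|_\star/M$, yields a point $\by$ with
\[
f(\by) \leq f(\bx) - \frac{\|\nabla f(\bx)\|_\star^2}{2M}~.
\]
Since $f(\by) \geq \inf_{\bz\in\R^d} f(\bz)$, substituting this lower bound and rearranging gives the stated result. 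The only step that requires care is the non-Euclidean minimization: one must pick the descent direction via the dual-norm definition rather than by simply setting the gradient of the quadratic to zero, since in a general norm the minimizer is identified through the maximizer $\bu$ of $\langle \nabla f(\bx), \bu\rangle$ over the unit ball. Everything else reduces to the routine integral estimate above.
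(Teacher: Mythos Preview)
Your proof is correct and follows the standard argument. Note, however, that the paper does not actually supply a proof of this theorem: it is stated with a citation to \citep[Lemma~4.1]{SrebroST10b} and then used as a black box in the $L^\star$-bound derivation. So there is no paper proof to compare against; your write-up fills that gap with the expected descent-lemma-then-minimize route, and you correctly handle the non-Euclidean case by choosing the descent direction via the dual-norm maximizer rather than by differentiating the quadratic in $\by$.
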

\begin{proof}
From Lemma~\ref{lemma:smooth_quadratic_upper_lower_bound}, for any $\bx,\bv \in\R^d$, we have
\[
\langle -\nabla f(\bx), \bv\rangle - \frac{s}{2}\|\bv\|^2
\leq f(\bx)- f(\bx+\bv)
\leq f(\bx)-\inf_{\by \in \R^d} f(\by)~.
\]
Given that this holds for any $\bv$, we can take the supremum of the l.h.s. with respect to $\bv$. Using Example~\ref{example:conj_squared_norm}, we have
\[
\frac{1}{2 s}\|\nabla f(\bx) \|_\star^2
= \sup_{\bv} \ \langle -\nabla f(\bx), \bv\rangle - \frac{s}{2}\|\bv\|^2
\leq f(\bx)-\inf_{\by \in \R^d} f(\by)~. \qedhere
\]
\end{proof}
\index{function!smooth|)textbf}

Sometimes we do not need smoothness or differentiability, but only the property of the above theorem. We call convex functions that satisfy such an inequality \emph{self-bounded}.
\begin{definition}
\label{def:self-bounded}
Let $f:\R^d \to (-\infty, +\infty]$ be bounded from below, and subdifferentiable on a set $\mathcal{V}$. We say that $f$ is $s$\textbf{-self-bounded}\index{function!self-bounded|textbf} in $\mathcal{V}$ with respect to $\|\cdot\|$ if
\[
\|\bg\|^2_\star \leq 2s (f(\bx)-\inf_{\by \in \R^d} \ f(\by)), \quad \forall \bx \in \mathcal{V}, \forall \bg \in \partial f(\bx)~.
\]
\end{definition}

\begin{remark}
If $\mathcal{V}$ is convex, then self-bounded\index{function!self-bounded} functions are also convex, because we are assuming that they are subdifferentiable on $\mathcal{V}$ (Theorem~\ref{thm:subgradients_everywhere_implies_convexity}).
\end{remark}

Clearly, a convex $s$-smooth function\index{function!smooth} is also $s$-self-bounded\index{function!self-bounded}, but the converse is not true, as shown in the next example.
\begin{example}
Let $f:\R\to \R$ be defined as $f(x)=\frac12 x^2 + |x-2|$. The function $f$ is not differentiable at $2$, hence it is not smooth. However, it is easy to verify that it is $9$-self-bounded\index{function!self-bounded}.
\end{example}

\subsection{$\mathscr{L}^\star$ bounds}

\index{L* bound@$\mathscr{L}^\star$ bound|(}
We now introduce the $\mathscr{L}^\star$ bounds, which depend on the cumulative competitor loss that is usually denoted by $\mathscr{L}^\star$.

Assume that the loss functions $\ell_1, \dots, \ell_T$ are $s$-self-bounded\index{function!self-bounded} on $\mathcal{V}$ with respect to $\|\cdot\|_2$. From this assumption, without loss of generality, we can assume that each of them is bounded from below by 0.
Under these assumptions, we can obtain bounds that depend on the cumulative loss of the competitor rather than time.

From the regret of \ac{OSD} (that is the same as in Theorem~\ref{thm:pogd}) and Definition~\ref{def:self-bounded}, for a constant learning rate $\eta$ we obtain
\[
\sum_{t=1}^T (\ell_t(\bx_t) - \ell_t(\bu))
\leq \frac{\|\bu-\bx_1\|_2^2}{2\eta} + \eta \sum_{t=1}^T s \ell_t(\bx_t), \quad \forall \bu \in \mathcal{V}~.
\]
Reordering, it implies
\[
\sum_{t=1}^T (\ell_t(\bx_t) - \ell_t(\bu))
\leq \frac{\eta s}{1-\eta s} \sum_{t=1}^T \ell_t(\bu) + \frac{\|\bu-\bx_1\|_2^2}{2\eta (1-\eta s)}, \quad \forall \bu \in \mathcal{V}~.
\]
Assuming $\eta \leq \frac{1}{2s}$, we simplify this regret upper bound in
\[
\sum_{t=1}^T (\ell_t(\bx_t) - \ell_t(\bu))
\leq 2\eta s \sum_{t=1}^T \ell_t(\bu) + \frac{\|\bu-\bx_1\|_2^2}{\eta}, \quad \forall\bu \in \mathcal{V}~.
\]
This is already an interesting result because it guarantees that a fixed learning rate that depends only on $s$ can achieve a vanishing average regret if there exists a competitor $\bu \in \mathcal{V}$ whose cumulative loss grows sublinearly. However, we could do better.
In fact, for a fixed $\bu \in \mathcal{V}$, setting $\eta = \min(\frac{\alpha}{\sqrt{2 s \sum_{t=1}^T \ell_t(\bu)}},\frac{1}{2 s})$ for $\alpha>0$, we obtain
\[
\sum_{t=1}^T (\ell_t(\bx_t) - \ell_t(\bu))
\leq  \max\!\left[\left(\frac{\|\bu-\bx_1\|_2^2}{\alpha}+\alpha\right)\!\sqrt{2 s \sum_{t=1}^T \ell_t(\bu)}, 2s (\|\bu-\bx_1\|_2^2+\alpha^2)\right]\!.
\]
Comparing this bound to the one of \ac{OGD} with Lipschitz losses, we see that here the dependence is on $\sqrt{\sum_{t=1}^T \ell_t(\bu)}$ instead of $\sqrt{T}$. The cumulative loss of the competitor can be much smaller than $T$ and, in particular, can be even 0. In this case, the regret is upper-bounded by a constant. Moreover, in this latter case, we can afford to use a learning rate $\eta$ that depends only on the self-boundedness\index{function!self-bounded} constant. However, this result is not fully satisfactory because it requires the knowledge of the future through the cumulative loss of the competitor. In the following, we show how to easily get rid of this limitation with a different choice of the learning rate.

In fact, under the same assumptions on the losses, from the regret in Theorem~\ref{thm:ada_grad_norm} and Definition~\ref{def:self-bounded}, we immediately obtain
\[
\Regret_T(\bu)
=\sum_{t=1}^T \ell_t(\bx_t) - \sum_{t=1}^T \ell_t(\bu)
\leq 2 D \sqrt{s \sum_{t=1}^T \ell_t(\bx_t)}, \quad \forall \bu \in \mathcal{V},
\]
where $D$ is the diameter of $\mathcal{V}$, assumed to be bounded.
This is an implicit bound, in the sense that $\sum_{t=1}^T \ell_t(\bx_t)$ appears on both sides of the inequality. To make it explicit, we will use the following simple lemma (proof left as an exercise).
\begin{lemma}
\label{lemma:sqrt}
Let $a,b,c\geq0$, and $x\geq0$ such that $x - \sqrt{a x+b} \leq c$. Then $x \leq \frac{a}{2} + c + \sqrt{\frac{a^2}{4}+b+ac}\leq a + c +\sqrt{b+ac}$.
\end{lemma}
So, we have the following theorem.
\begin{theorem}
\label{thm:l_star}
Let $\mathcal{V} \subset \R^d$ be a closed non-empty convex set with diameter $D$, that is, $D:=\max_{\bx,\by\in \mathcal{V}} \|\bx-\by\|_2$. Let $\ell_1, \dots, \ell_T$ be an arbitrary sequence of non-negative convex functions $\ell_t:\R^d \to (-\infty, +\infty]$ $s$-self-bounded\index{function!self-bounded} in $\mathcal{V}$ with respect to $\|\cdot\|_2$. Pick any $\bx_1 \in \mathcal{V}$, run projected \ac{OSD} with $\eta_{t}=\frac{\sqrt{2}D}{2\sqrt{\sum_{i=1}^t \|\bg_i\|_2^2}}, \ t=1, \dots, T$, and do not update on rounds in which $\bg_t=\boldsymbol{0}$. Then, $\forall \bu \in \mathcal{V}$, the following regret bound holds
\[
\Regret_T(\bu)
=\sum_{t=1}^T \ell_t(\bx_t) - \sum_{t=1}^T \ell_t(\bu)
\leq 4s D^2 + 2 D \sqrt{s \sum_{t=1}^T \ell_t(\bu)}~.
\]
\end{theorem}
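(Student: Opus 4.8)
The plan is to combine the adaptive regret bound of Theorem~\ref{thm:ada_grad_norm} with the smoothness inequality of Theorem~\ref{thm:smooth} to produce a self-referential bound on the cumulative loss, and then to linearize it via Lemma~\ref{lemma:sqrt}. Most of the work has in fact already been set up in the paragraph preceding the statement, so the proof is mainly a matter of assembling these pieces with correct constants.

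First I would invoke Theorem~\ref{thm:ada_grad_norm}, which for the stated learning rates $\eta_t=\frac{\sqrt{2}D}{2\sqrt{\sum_{i=1}^t \|\bg_i\|_2^2}}$ gives
\[
\Regret_T(\bu) \leq \sqrt{2}\, D \sqrt{\sum_{t=1}^T \|\bg_t\|_2^2}~.
\]
Next, since each $\ell_t$ is $M$-smooth and non-negative (so $\inf_{\by}\ell_t(\by)\geq 0$), Theorem~\ref{thm:smooth} applied at $\bx_t$ with $\bg_t=\nabla\ell_t(\bx_t)$ yields $\|\bg_t\|_2^2 \leq 2M(\ell_t(\bx_t)-\inf_{\by}\ell_t(\by)) \leq 2M \ell_t(\bx_t)$. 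Substituting term by term under the square root produces the implicit bound
\[
\Regret_T(\bu) \leq \sqrt{2}\, D \sqrt{2M \sum_{t=1}^T \ell_t(\bx_t)} = 2D\sqrt{M \sum_{t=1}^T \ell_t(\bx_t)}~.
\]

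The crux is then to remove the appearance of $\sum_{t=1}^T \ell_t(\bx_t)$ on the right-hand side. Writing $X=\sum_{t=1}^T \ell_t(\bx_t)$ and using $\Regret_T(\bu)=X-\sum_{t=1}^T \ell_t(\bu)$, the implicit bound rearranges to
\[
X - \sqrt{4MD^2\, X} \leq \sum_{t=1}^T \ell_t(\bu)~.
\]
I would apply Lemma~\ref{lemma:sqrt} with $a=4MD^2$, $b=0$, and $c=\sum_{t=1}^T \ell_t(\bu)$, obtaining $X \leq 4MD^2 + \sum_{t=1}^T \ell_t(\bu) + 2\sqrt{4MD^2 \sum_{t=1}^T \ell_t(\bu)}$. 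Subtracting $\sum_{t=1}^T \ell_t(\bu)$ from both sides and simplifying $2\sqrt{4MD^2\cdot(\cdot)}=4D\sqrt{M(\cdot)}$ recovers exactly the claimed bound.

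The only delicate points here are bookkeeping rather than conceptual. I must match the constant under the square root to the form $x-\sqrt{ax+b}$ required by Lemma~\ref{lemma:sqrt}, that is, recognize $2D\sqrt{MX}=\sqrt{4MD^2 X}$ so that $a=4MD^2$; and I should check that the nonnegativity hypothesis is exactly what lets me pass from the subtracted infimum in Theorem~\ref{thm:smooth} to the clean bound $\|\bg_t\|_2^2\leq 2M\ell_t(\bx_t)$. No genuinely hard step is involved, since the adaptive bound, the smoothness estimate, and the scalar lemma are all already available.
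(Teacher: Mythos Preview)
Your proof is correct and follows exactly the approach the paper sets up in the paragraph preceding the theorem: combine the adaptive bound from Theorem~\ref{thm:ada_grad_norm} with the smoothness inequality from Theorem~\ref{thm:smooth} to obtain the implicit bound $\Regret_T(\bu)\leq 2D\sqrt{M\sum_t \ell_t(\bx_t)}$, then unwind it via Lemma~\ref{lemma:sqrt} with $a=4MD^2$, $b=0$, $c=\sum_t \ell_t(\bu)$. The constant bookkeeping is handled correctly.
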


This regret guarantee is very interesting because in the worst case it is $\mathcal{O}(\sqrt{T})$, but in the best case scenario it becomes a constant! In fact, if there exists a $\bu \in \mathcal{V}$ such that $\sum_{t=1}^T \ell_t(\bu)=0$ we get a constant regret. Basically, if the losses are ``easy'', the algorithm \emph{adapts} to this situation and gives us a better regret.

\begin{example}
Consider an online linear classification problem with the squared hinge loss\index{hinge loss!squared}. So, each loss is defined as $\ell_t(\bx)=\max(1-y_t \langle \bz_t, \bx\rangle,0)^2$ for labels $y_t \in \{-1, 1\}$ and features $\bz_t \in \R^d$. If we assume that $\|\bz_t\|_2$ is bounded for all $t$, then the losses are self-bounded\index{function!self-bounded} (and even smooth\index{function!smooth}) (proof left as an exercise). Under this assumption, if the problem is linearly separable in $\mathcal{V}$, i.e., there exists $\bu \in \mathcal{V}$ such that $\ell_t(\bu)=0$ for all $t$, Theorem~\ref{thm:l_star} will guarantee a constant regret.
\end{example}
\index{L* bound@$\mathscr{L}^\star$ bound|)}

\subsection{AdaGrad}
\label{sec:adagrad}

\index{AdaGrad algorithm|(textbf}
We now present another application of the regret bound in \eqref{eq:ada_grad_norm}. \emph{AdaGrad}, that stands for Adaptive Gradient, is an \ac{OCO} algorithm that aims at being adaptive to the sequence of (sub)gradients.

We will present a proof that only covers hyperrectangles as feasible sets $\mathcal{V}$. On the other hand, the restriction makes the proof almost trivial.
Let's see how it works.

AdaGrad has two key ingredients:
\begin{itemize}
\item A coordinate-wise learning process;
\item The adaptive learning rates in \eqref{eq:ada_eta}\index{learning rate!adaptive}.
\end{itemize}
For the first ingredient, as we said in Section~\ref{sec:oco_to_olo}, the regret of any \ac{OCO} problem can be upper bounded by the regret of the Online Linear Optimization (OLO) problem. That is,
\[
\sum_{t=1}^T \ell_t(\bx_t) - \sum_{t=1}^T \ell_t(\bu)
\leq \sum_{t=1}^T \langle \bg_t, \bx_t\rangle - \sum_{t=1}^T \langle \bg_t, \bu\rangle~.
\]
Now, the essential observation is to explicitly write the inner product as a sum of products over the individual coordinates:
\begin{align*}
\sum_{t=1}^T \langle \bg_t, \bx_t\rangle - \sum_{t=1}^T \langle \bg_t, \bu\rangle
&= \sum_{t=1}^T \sum_{i=1}^d g_{t,i} x_{t,i} - \sum_{t=1}^T \sum_{i=1}^d g_{t,i} u_i\\
&= \sum_{i=1}^d \left(\sum_{t=1}^T g_{t,i} x_{t,i} - \sum_{t=1}^T g_{t,i} u_i\right)
= \sum_{i=1}^d \Regret_{T,i}(u_i),
\end{align*}
where we denoted by $\Regret_{T,i}(u_i)$ the regret of the one-dimensional \ac{OLO} problem over coordinate $i$, that is $\sum_{t=1}^T g_{t,i} x_{t,i} - \sum_{t=1}^T g_{t,i} u_i$.
In words, \emph{we can decompose the original linear regret as the sum of $d$ \ac{OLO} regret minimization problems, and we can try to focus on each one of them separately}.

A good candidate for the one-dimensional problems is \ac{OSD} with the learning rates in \eqref{eq:ada_eta}.
We can specialize the regret in \eqref{eq:ada_grad_norm} to the one-dimensional case for linear losses, so we get for each coordinate $i$
\[
\sum_{t=1}^T g_{t,i} x_{t,i} - \sum_{t=1}^T g_{t,i} u_i
\leq \sqrt{2} D_i \sqrt{\sum_{t=1}^T g_{t,i}^2}~.
\]
This choice gives us the \textbf{AdaGrad} algorithm in Algorithm~\ref{alg:adagrad}.

\begin{algorithm}[t]
\caption{AdaGrad for Hyperrectangles}
\label{alg:adagrad}
\begin{algorithmic}[1]
{
    \REQUIRE{$\mathcal{V} = \{\bx \in \R^d: a_i \leq x_i\leq b_i\}$, $\bx_1 \in \mathcal{V}$}
    \FOR{$t=1$ {\bfseries to} $T$}
    \STATE{Output $\bx_t$}
    \STATE{Pay the loss $\ell_t(\bx_t)$, where $\ell_t$ is subdifferentiable on $\mathcal{V}$}
    \STATE{Set $\bg_t \in \partial \ell_t(\bx_t)$}
    \FOR{$i=1$ {\bfseries to} $d$}
    \IF{$g_{t,i}\neq 0$}
    \STATE{$x_{t+1,i} = \max(\min(x_{t,i} - \eta_{t,i} g_{t,i},b_i),a_i)$ where $\eta_{t,i}=\frac{\sqrt{2} D_i}{2\sqrt{\sum_{j=1}^t g_{j,i}^2}}$}
    \ELSE
    \STATE{$x_{t+1,i} = x_{t,i}$}
    \ENDIF
    \ENDFOR
    \ENDFOR
}
\end{algorithmic}
\end{algorithm}

Putting everything together, we have the following regret guarantee immediately.
\begin{theorem}
\label{thm:adagrad}
Let $\mathcal{V} = \{\bx \in \R^d: a_i \leq x_i\leq b_i\}$ with diameters along each coordinate equal to $D_i=b_i-a_i<\infty$. Let $\ell_1, \dots, \ell_T$ be an arbitrary sequence of convex functions $\ell_t:\R^d \to (-\infty, +\infty]$ subdifferentiable on $\mathcal{V}$ for $t=1, \dots,T$. Pick any $\bx_1 \in \mathcal{V}$. Then, $\forall \bu \in \mathcal{V}$, Algorithm~\ref{alg:adagrad} guarantees
\[
\Regret_T(\bu)
=\sum_{t=1}^T \ell_t(\bx_t) - \sum_{t=1}^T \ell_t(\bu)
\leq \sqrt{2} \sum_{i=1}^d D_i \sqrt{\sum_{t=1}^T g_{t,i}^2}~.
\]
\end{theorem}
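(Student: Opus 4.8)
The plan is to leverage the coordinate-wise decomposition already set up immediately before the statement, and then to apply the one-dimensional specialization of Theorem~\ref{thm:ada_grad_norm} to each coordinate separately.

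First I would bound the convex regret by the linear regret using $\bg_t \in \partial \ell_t(\bx_t)$ and the subgradient inequality, $\sum_{t=1}^T (\ell_t(\bx_t) - \ell_t(\bu)) \leq \sum_{t=1}^T \langle \bg_t, \bx_t - \bu\rangle$. Then I would expand the inner products coordinate-by-coordinate to write the right-hand side as $\sum_{i=1}^d \Regret_{T,i}(u_i)$, where $\Regret_{T,i}(u_i) = \sum_{t=1}^T g_{t,i}(x_{t,i} - u_i)$ is the regret of a one-dimensional OLO problem with linear losses $x \mapsto g_{t,i}\, x$ over the interval $[a_i,b_i]$.

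Next, I would observe that the AdaGrad update in line 6 of Algorithm~\ref{alg:adagrad}, restricted to coordinate $i$, is exactly Online Subgradient Descent applied to this one-dimensional problem: the clipping $\max(\min(\cdot,b_i),a_i)$ is the Euclidean projection $\Pi_{[a_i,b_i]}$ onto the interval, the scalar $g_{t,i}$ is a subgradient of the linear loss $x\mapsto g_{t,i}\, x$ at every point, and the step sizes $\eta_{t,i}=\frac{\sqrt{2}D_i}{2\sqrt{\sum_{j=1}^t g_{j,i}^2}}$ are precisely the adaptive learning rates of \eqref{eq:ada_eta} with the diameter $D$ replaced by the coordinate diameter $D_i=b_i-a_i$ and $\|\bg_i\|_2^2$ replaced by the scalar $g_{i}^2$. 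Hence Theorem~\ref{thm:ada_grad_norm}, specialized to $\R$ with feasible set $[a_i,b_i]$ and diameter $D_i$, applies verbatim and yields $\Regret_{T,i}(u_i) \leq \sqrt{2}\, D_i \sqrt{\sum_{t=1}^T g_{t,i}^2}$ for every $u_i \in [a_i,b_i]$, equivalently for every $\bu \in V$.

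Finally I would sum these $d$ bounds over $i=1,\dots,d$ to obtain the stated guarantee. The only step that deserves care is the identification in the previous paragraph: I must check that the per-coordinate update genuinely coincides with the one-dimensional OSD update, so that the hypotheses of Theorem~\ref{thm:ada_grad_norm} (a closed convex feasible set of diameter $D_i$, convex subdifferentiable losses, and the prescribed learning rate) are all met simultaneously across the coordinates. Once that bookkeeping is in place, the result is an immediate consequence of the already-proved one-dimensional bound in \eqref{eq:ada_grad_norm}, and no genuinely hard analytic step remains.
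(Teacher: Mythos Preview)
Your proposal is correct and follows exactly the approach the paper takes: the paper already carries out the convex-to-linear reduction, the coordinate-wise decomposition $\Regret_T(\bu)\le\sum_{i=1}^d\Regret_{T,i}(u_i)$, and the application of the one-dimensional instance of Theorem~\ref{thm:ada_grad_norm} in the discussion immediately preceding the statement, and then simply writes ``Putting all together, we have immediately the following regret guarantee.'' Your write-up just makes these steps explicit.
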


Is this a better regret bound compared to the one in Theorem~\ref{thm:ada_grad_norm}? It depends!
To compare the two, let's first consider the case that $\mathcal{V}$ is a hyperrectangle.
Then, we have to compare
\begin{align*}
D\sqrt{\sum_{t=1}^T \|\bg_{t}\|_2^2} && \text{versus} && \sum_{i=1}^d D_i \sqrt{\sum_{t=1}^T g_{t,i}^2}~.
\end{align*}
From Cauchy--Schwarz\index{inequality!Cauchy--Schwarz}, we have that $\sum_{i=1}^d D_i \sqrt{\sum_{t=1}^T g_{t,i}^2}\leq D\sqrt{\sum_{t=1}^T \|\bg_{t}\|_2^2}$. So, \emph{assuming the same sequence of subgradients}, AdaGrad never has worse regret on hyperrectangles.
For a more precise quantification of the gain of AdaGrad, let's now assume that $\mathcal{V}$ is a hypersquare.
Also, note that
\begin{equation}
\label{eq:adagrad_comparison}
\sqrt{\sum_{t=1}^T \|\bg_{t}\|_2^2} \leq \sum_{i=1}^d \sqrt{\sum_{t=1}^T g_{t,i}^2} \leq \sqrt{d} \sqrt{\sum_{t=1}^T \|\bg_{t}\|_2^2},
\end{equation}
where the lower bound is by the fact that the L$_1$ norm is bigger than the L$_2$ norm, and the upper bound is given by Cauchy--Schwarz inequality\index{inequality!Cauchy--Schwarz}. So, in the case that $\mathcal{V}$ is a hypercube we have $D_i=D_\infty = \max_{\bx, \by} \|\bx-\by\|_\infty$ and $D=\sqrt{d} D_\infty$, the bound of AdaGrad is between $1/\sqrt{d}$ and $1$ times the bound of Theorem~\ref{thm:ada_grad_norm}. In other words, if we are lucky with the subgradients, we might save a factor of $\sqrt{d}$ in the guarantee.

However, what happens if the domain is an L$_2$ ball? First of all, it is possible to generalize AdaGrad to work on L$_2$ balls, and the guarantee remains the same. Hence, in this case, we have $D_i=D_\infty=D$, so from \eqref{eq:adagrad_comparison}, we have
\[
\frac{1}{\sqrt{d}}\sum_{i=1}^d D_i \sqrt{\sum_{t=1}^T g_{t,i}^2}
\leq D \sqrt{\sum_{t=1}^T \|\bg_{t}\|_2^2}
\leq \sum_{i=1}^d D_i \sqrt{\sum_{t=1}^T g_{t,i}^2} ~.
\]
Hence, for an L$_2$ ball, the opposite happens: the bound in Theorem~\ref{thm:ada_grad_norm} is never worse than the one of AdaGrad and, depending on the subgradients, we can gain a $\sqrt{d}$ factor.
Overall, the \emph{shape of the domain} determines the potential gain of one approach over the other, and the specific sequence of subgradients determines the actual gain.
It is possible to show that hyperrectangles are indeed the best domains for AdaGrad.
We will explore this issue of choosing the online algorithm based on the shape of the feasible set $\mathcal{V}$ when we introduce Online Mirror Descent in Chapter~\ref{ch:omd}.

Another big advantage of AdaGrad is the property of being \emph{coordinate-wise scale-free}\index{algorithm!coordinate-wise scale-free|textbf}\index{algorithm!scale-free|textbf}, according to the following definition.
\begin{definition}
\label{def:scale-free}
We say that an online learning algorithm is \textbf{coordinate-wise scale-free} when the iterates of the algorithm do not change if, assuming the same sequence of subgradients, each coordinate of the subgradients is multiplied by a different positive constant. We will say that the algorithm is simply \textbf{scale-free} if the same property holds when each coordinate is multiplied by the same positive constant.
\end{definition}

Another way to say it is that the update of AdaGrad is invariant to the units of each coordinate of the subgradients. This fact is not immediately apparent from the regret because by scaling the coordinate of the subgradients, the optimal solution $\bu$ would also scale accordingly, but the fixed diameters of the feasible set hide it. This might be useful in the case that the ranges of coordinates of the gradients are vastly different from one another. Indeed, this does happen in many machine learning problems, for example, in the stochastic optimization of deep neural networks, where the first layers have gradients of different magnitudes compared to the last layers.
\index{AdaGrad algorithm|)textbf}

\begin{remark}
AdaGrad is usually known as a stochastic optimization algorithm, but in reality, it was proposed for the online setting. To use it as a stochastic algorithm, you should use an online-to-batch conversion, otherwise you do not have any guarantee of convergence.
\end{remark}

\section{History Bits}
The concept of strong convexity\index{function!strongly convex} was first defined in \citet{Polyak66}.

The logarithmic regret in Corollary~\ref{cor:log_regret} was shown for the first time in the seminal paper~\citet{HazanKKA06,HazanAK07}. The general statement in Theorem~\ref{thm:log_regret} was proven by \citet{HazanRB08}.

\index{online-to-batch conversion|(}
The non-uniform averaging for the online-to-batch conversion of Example~\ref{ex:non_uniform_o2b_strongly} is from \citet{LacosteSB12}, but there it is not proposed as an online-to-batch conversion.
The basic idea of solving the \ac{SVM} problem with \ac{OSD} and online-to-batch conversion of Example~\ref{ex:non_uniform_o2b_strongly} was the Pegasos algorithm~\citep{Shalev-ShwartzSS07},\index{Pegasos algorithm} for many years the most used optimizer for \ac{SVM}s.
\index{online-to-batch conversion|)}

\index{learning rate!adaptive|(}
The adaptive learning rate in \eqref{eq:ada_eta} first appeared in \citet{StreeterM10}. However, similar methods were used a long time before. Indeed, the key observation to approximate oracle quantities with estimates up to time $t$ was first proposed in the self-confident algorithms~\citep{AuerCG02}\index{algorithm!self-confident}, where the learning rate is inversely proportional to the square root of the cumulative loss of the algorithm, and for self-bounded\index{function!self-bounded} losses it implies the $\mathscr{L}^\star$ bounds similar to the one in Theorem~\ref{thm:l_star}.\index{learning rate!adaptive|)} \index{L* bound@$\mathscr{L}^\star$ bound|(}The $\mathscr{L}^\star$ bound for the square loss and linear predictors was introduced by \citet{Cesa-BianchiLW96}. In the past, several works focused on obtaining regret upper bounds depending on constant times $\mathscr{L}^\star$ \citep[see, e.g.,][]{KivinenW97}, however, these guarantees are meaningful only if $\mathscr{L}^\star$ is sublinear in $T$. \citet{Zhang04b} explored the use of $\mathscr{L}^\star$ bounds in stochastic optimization.\index{L* bound@$\mathscr{L}^\star$ bound|)} The observation that the cumulative sum of the squared gradients acts as an intrinsic notion of time comes from the statistics literature, see the discussion in \citet{BlackwellF73}.

AdaGrad\index{AdaGrad algorithm|(} was proposed in basically identical form independently by two groups at the same conference: \citet{McMahanS10} and \citet{DuchiHS10}.
The analysis presented here is the one in \citet{StreeterM10} that does not handle generic feasible sets and does not support the ``full-matrices'' proposed in \citet{DuchiHS10}, i.e., full-matrix learning rates instead of diagonal ones. However, in machine learning applications, AdaGrad is usually used without a projection step (even if doing so provably destroys the worst-case performance, see Theorem~\ref{thm:lower_bound_osd}). Also, in the adversarial setting, full matrices do not seem to offer advantages in terms of regret compared to diagonal ones, see the discussion in \citet[Section 5]{Cutkosky20b}.

The AdaGrad learning rate is usually written as
\[
\eta_{t,i}=\frac{D_i}{\epsilon+\sqrt{\sum_{j=1}^t g_{j,i}^2}},
\]
where $\epsilon>0$ is a small constant used to prevent division by zero. In reality, $\epsilon$ is not necessary: there should be no update when the coordinate of the gradient is 0~\citep{OrabonaP15,OrabonaP18,AgarwalAHKZ20}. Moreover, removing $\epsilon$ makes the updates coordinate-wise scale-free\index{algorithm!coordinate-wise scale-free}, as stressed in \citet{OrabonaP15,OrabonaP18}. Scale-freeness in online learning has been introduced in \citet{Cesa-BianchiMS05,Cesa-BianchiMS07} for the setting of \ac{LEA} and in \citet{OrabonaP15,OrabonaP18} for \ac{OCO}.

\index{learning rate!adaptive|(}
AdaGrad inspired an incredible number of clones, most of them with similar, worse, or no regret guarantees. The keyword ``adaptive'' itself has shifted its meaning over time. It used to denote the ability of the algorithm to obtain the same guarantee as if it knew in advance a particular property of the data (i.e., adaptive to the gradients/noise/scale = (almost) same performance as if it knew the gradients/noise/scale in advance). Indeed, in statistics, this keyword is used with the same meaning. Nowadays, instead ``adaptive learning rates'' seems to denote any kind of coordinate-wise learning rates that do not guarantee anything in particular.
\index{learning rate!adaptive|)}
\index{AdaGrad algorithm|)}

\section{Exercises}

\begin{exer}
Prove that \ac{OSD} in Example~\ref{ex:guessing_strongly_convex} with $x_1=0$ is exactly the Follow-the-Leader\index{Follow-the-Leader algorithm} strategy for that particular problem.
\end{exer}

\begin{exer}
Prove that $\ell_t(\bx)=\|\bx-\bz_t\|_2^2$ is $2$-strongly convex\index{function!strongly convex} with respect to $\|\cdot\|_2$, derive the \ac{OSD} update for it, and its regret guarantee.
\end{exer}


\begin{exer}
Show that online subgradient descent on a bounded domain $\mathcal{V}$ with learning rates $\eta_t \propto 1/t$ obtains a $\mathcal{O}(\ln(1+\mathscr{L}^\star))$ regret bound for Lipschitz, self-bounded\index{function!self-bounded}, and strongly convex\index{function!strongly convex} losses.
\end{exer}


\begin{exer}
Prove that the logistic loss\index{logistic loss} $\ell(\bx)=\ln(1+\exp(-y\langle\bz,\bx\rangle))$, where $\|\bz\|_2\leq 1$ and $y \in \{-1,1\}$ is $\frac{1}{4}$-smooth\index{function!smooth} with respect to $\|\cdot\|_2$.
\end{exer}

\begin{exer}
Prove Lemma~\ref{lemma:sqrt}.
\end{exer}

\acresetall

\chapter{Lower Bounds for Online Linear Optimization}
\label{ch:lower}

In this chapter, we will present some lower bounds for \ac{OLO}. Since linear losses are convex, this immediately gives us lower bounds for \ac{OCO}.
We will consider both the constrained and the unconstrained case. The lower bounds in this chapter are important because they inform us about which algorithms are optimal  and where the gaps in our knowledge are.

\acresetall

\index{lower bound!bounded online linear optimization|(textbf}
\section{Lower Bound for Bounded Online Linear Optimization}

We will first consider the constrained bounded case. Finding a lower bound amounts to finding a strategy for the adversary that forces a certain regret onto the algorithm, \emph{no matter what the algorithm does}.
We will use the probabilistic method\index{probabilistic method} to construct our lower bound.

The basic method relies on the fact that if for a given $K\in \R$ we can construct a sequence of random vectors $\tilde{\bg}_1, \dots, \tilde{\bg}_T$ such that
\[
\E\left[\sum_{t=1}^T \langle \tilde{\bg}_t, \bx_t\rangle\right] \geq K,
\]
this implies that there exists a sequence $\bg_1, \dots, \bg_T$ among all the possible random sequences such that
\[
\sum_{t=1}^T \langle \bg_t, \bx_t\rangle \geq K~.
\]
It is easy to see why this is true: if for \emph{all} sequences $\sum_{t=1}^T \langle \tilde{\bg}_t, \bx_t\rangle <K$, then the expectation would also be strictly less than $K$, contradicting our assumption.

For us, it means that we prove the existence of a ``difficult'' sequence of functions through a result on the expectation with respect to a distribution over stochastic functions. Why do we rely on expectations rather than actually constructing an adversarial sequence? Because the use of stochastic loss functions makes it very easy to deal with arbitrary algorithms. In particular, we will choose a distribution over stochastic loss functions that makes the expected loss of the algorithm equal to 0, independently of the strategy of the algorithm.

\begin{theorem}
\label{thm:lower_bound_constr}
Let $\mathcal{V} \subset \R^d$ be any non-empty bounded closed convex subset. Let $D = \max_{\bv,\bw \in \mathcal{V}} \|\bv - \bw\|_2>0$ be the diameter of $\mathcal{V}$. Let $\mathscr{A}$ be any deterministic algorithm for \ac{OLO} on $\mathcal{V}$. Let $T$ be any positive integer. Then, there exists a sequence of vectors $\bg_1, \dots, \bg_T$ with $\|\bg_t\|_2\leq L$ and $\bu \in \mathcal{V}$ such that the regret of algorithm $\mathscr{A}$ satisfies
\[
\Regret_T(\bu)
= \sum_{t=1}^T \langle \bg_t, \bx_t \rangle - \sum_{t=1}^T \langle \bg_t, \bu \rangle
\geq \frac{\sqrt{2}LD \sqrt{T}}{4}~.
\]
\end{theorem}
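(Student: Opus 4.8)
The plan is to follow the probabilistic method sketched just before the statement: I will exhibit a \emph{distribution} over loss sequences under which the algorithm's expected cumulative loss is $0$ no matter what it does, while the best competitor in hindsight has expected loss of order $-\sqrt{T}$. Since the worst case is at least the average, a favorable realization then yields the deterministic sequence required by the theorem. First, since $V$ is closed and bounded the supremum defining $D$ is attained, so I can fix $\bv,\bw \in V$ with $\|\bv-\bw\|_2 = D$ and set the unit vector $\boldsymbol{d} = (\bv-\bw)/D$. Let $\epsilon_1,\dots,\epsilon_T$ be i.i.d.\ Rademacher signs (each $\pm 1$ with probability $1/2$) and put $\bg_t = L\epsilon_t\,\boldsymbol{d}$, which satisfies $\|\bg_t\|_2 = L$ as required.

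The key structural fact is that $\bx_t$ is output \emph{before} $\bg_t$ is revealed, hence $\bx_t$ is a function of $\epsilon_1,\dots,\epsilon_{t-1}$ (and any internal randomness of $\mathcal{A}$), all independent of $\epsilon_t$. Conditioning on the past and using $\E[\epsilon_t\mid\epsilon_1,\dots,\epsilon_{t-1}]=0$ gives $\E[\langle\bg_t,\bx_t\rangle]=L\,\E[\epsilon_t\langle\boldsymbol{d},\bx_t\rangle]=0$, so the algorithm term vanishes in expectation. For the competitor term, write $S=\sum_{t=1}^T\epsilon_t$, so that $\sum_t\bg_t = LS\,\boldsymbol{d}$. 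I then bound the minimum over $V$ by the minimum over the two points $\{\bv,\bw\}$, and use $\langle\boldsymbol{d},\bv-\bw\rangle = D$ together with the identity $\max\{a,b\}=\tfrac{a+b}{2}+\tfrac{|a-b|}{2}$ to obtain
\[
-\min_{\bu\in V}\langle LS\boldsymbol{d},\bu\rangle \ge -\tfrac{LS}{2}\langle\boldsymbol{d},\bv+\bw\rangle + \tfrac{LD}{2}|S|~.
\]
Taking expectations, the first term disappears because $\E[S]=0$, leaving $\E[\max_{\bu\in V}\Regret_T(\bu)] \ge \tfrac{LD}{2}\,\E[|S|]$.

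The main obstacle is the last ingredient: a sharp lower bound $\E[|S|]=\E\big|\sum_{t=1}^T\epsilon_t\big|\ge\sqrt{T/2}$. The naive route through $\E[S^2]=T$ plus H\"older only delivers the constant $1/\sqrt{3}$, so to land exactly on the stated bound I would invoke Khintchine's inequality with its sharp first-moment constant $1/\sqrt{2}$ (equivalently, prove the auxiliary estimate $\E\big|\sum_t\epsilon_t\big|\ge\sqrt{T/2}$ directly, e.g.\ by a combinatorial/inductive argument). Plugging this in gives $\E[\max_{\bu\in V}\Regret_T(\bu)]\ge\tfrac{LD}{2}\sqrt{T/2}=\tfrac{\sqrt{2}\,LD\sqrt{T}}{4}$. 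Finally, since the expectation over the Rademacher signs (and over $\mathcal{A}$'s internal randomness) is at least this value, there must exist a realization $\bg_1,\dots,\bg_T$ and a competitor $\bu\in\{\bv,\bw\}$ for which the regret meets the bound, which is exactly the claim.
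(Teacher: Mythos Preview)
Your proposal is correct and follows essentially the same approach as the paper: the same Rademacher randomization along the diameter direction, the same vanishing of the algorithm term by independence, the same reduction of the competitor minimum to the two endpoints $\{\bv,\bw\}$ via the identity $\max\{a,b\}=\tfrac{a+b}{2}+\tfrac{|a-b|}{2}$, and the same appeal to Khintchine's inequality for $\E|S|\ge\sqrt{T/2}$. The only cosmetic difference is that the paper inserts a symmetry step $\epsilon_t\mapsto-\epsilon_t$ before restricting to $\{\bv,\bw\}$, whereas you work directly with $-\min$; the computations are otherwise identical.
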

\begin{proof}
Let's denote by $\Regret_T := \max_{\bu \in \mathcal{V}} \Regret_T(\bu)$.
Let $\bv, \bw \in \mathcal{V}$ such that $\|\bv-\bw\|_2=D$. Let $\bz=\frac{\bv-\bw}{\|\bv-\bw\|_2}$, so that $\langle \bz, \bv-\bw\rangle = D$.
Let $\epsilon_1, \dots, \epsilon_T$ be i.i.d. Rademacher random variables\index{random variable!Rademacher}, that is $\Pr\{\epsilon_t=1\}=\Pr\{\epsilon_t=-1\}=1/2$, and set the vector of the stochastic linear losses $\tilde{\bg}_t=L \epsilon_t \bz$.
So, we have
\begin{align*}
&\E_{\tilde{\bg}_1,\dots,\tilde{\bg}_T} \left[ \sum_{t=1}^T \langle \tilde{\bg}_t, \bx_t\rangle - \min_{\bu \in \mathcal{V}} \sum_{t=1}^T \langle \tilde{\bg}_t, \bu\rangle\right]\\
&\quad= \E_{\epsilon_1,\dots,\epsilon_T}\left[ \sum_{t=1}^T L \epsilon_t \langle \bz, \bx_t\rangle - \min_{\bu \in \mathcal{V}} \sum_{t=1}^T L \epsilon_t \langle \bz, \bu\rangle\right]\\
&\quad= \E_{\epsilon_1,\dots,\epsilon_T}\left[- \min_{\bu \in \mathcal{V}} \sum_{t=1}^T  L \epsilon_t \langle \bz, \bu\rangle\right]
= \E_{\epsilon_1,\dots,\epsilon_T}\left[\max_{\bu \in \mathcal{V}} \sum_{t=1}^T  - L \epsilon_t \langle \bz, \bu\rangle\right]\\
&\quad= \E_{\epsilon_1,\dots,\epsilon_T}\left[\max_{\bu \in \mathcal{V}} \sum_{t=1}^T  L \epsilon_t \langle \bz, \bu\rangle\right]
\geq \E_{\epsilon_1,\dots,\epsilon_T}\left[\max_{\bu \in \{\bv, \bw\}} \sum_{t=1}^T L \epsilon_t \langle \bz, \bu\rangle\right] \\
&\quad= \E_{\epsilon_1,\dots,\epsilon_T}\left[\frac{1}{2}\sum_{t=1}^T  L \epsilon_t \langle \bz, \bv+\bw\rangle + \frac{1}{2}\left| \sum_{t=1}^T L \epsilon_t \langle \bz, \bv-\bw\rangle\right|\right] \\
&\quad= \frac{L}{2}\E_{\epsilon_1,\dots,\epsilon_T}\left[\left|\sum_{t=1}^T  \epsilon_t \langle \bz, \bv-\bw\rangle\right|\right]
= \frac{LD}{2}\E_{\epsilon_1,\dots,\epsilon_T}\left[\left|\sum_{t=1}^T  \epsilon_t \right|\right]
\geq \frac{\sqrt{2}LD \sqrt{T}}{4}~.
\end{align*}
where in the first equality we used $\E[\epsilon_t]=0$ and the independence of $\epsilon_t$ and $\bx_t$, the fact that $\epsilon_t$ and $-\epsilon_t$ follow the same distribution in the fourth equality, $\max(a,b)=\frac{a+b}{2}+\frac{|a-b|}{2}$ in the fifth equality, and Khintchine inequality\index{inequality!Khintchine's} (Theorem~\ref{thm:khintchine}) in the last inequality.

Now, given that the expectation is lower bounded by a positive constant, there exists a sequence of realizations of the random variables that gives the same lower bound.
\end{proof}

\begin{remark}
Unlike similar proofs, we do not assume $\mathcal{V}$ to be symmetric with respect to $\boldsymbol{0}$.
\end{remark}

\begin{remark}
If the algorithm is randomized, one can easily extend the proof for a lower bound on the expected regret of the algorithm with respect to its internal randomization.
\end{remark}

The lower bound is within a constant multiplicative factor from the upper bound we proved for \ac{OSD} with learning rates $\eta_t=\frac{D}{L\sqrt{t}}$ or $\eta=\frac{D}{L\sqrt{T}}$. This means that \ac{OSD} is asymptotically optimal with both settings of the learning rate.

At this point, there is an important consideration: how can this be the optimal regret when we managed to prove a better regret, for example, with adaptive learning rates in Section~\ref{sec:lstar}? The subtlety is that, constraining the adversary to play $L$-Lipschitz losses, the adversary could always force on the algorithm at least the regret in Theorem~\ref{thm:lower_bound_constr}. However, we can design algorithms that take advantage of \emph{suboptimal plays of the adversary}. Indeed, for example, if the adversary plays in a way that all the subgradients have the same norm equal to $L$, there is nothing to adapt to!
\index{lower bound!bounded online linear optimization|)textbf}

\index{lower bound!unconstrained online subgradient descent|(textbf}
\section{Lower Bound for Unconstrained Online Subgradient Descent}
\label{sec:unbounded_ogd_fail}

Here, we will focus on a specific algorithm, that is, \ac{OSD}. We want to show that the limitation we saw in Chapter~\ref{ch:osd} of \ac{OSD} with time-varying stepsizes to be used only on bounded domains is real. In fact, we can prove the following lower bound.
\begin{theorem}
\label{thm:lower_bound_osd}
Let $\alpha \in (0,1)$, $\phi:(0,1) \to (0,1-\ln 2)$ defined as $\phi(\alpha):=\frac{1}{2-\alpha} + \frac{(1/2)^{1-\alpha}-1}{1-\alpha}$, and $T\geq \frac{2}{(1-\alpha) \phi(\alpha)}$.
For unprojected \ac{OSD} with stepsizes $\eta_t=t^{-\alpha}$ and $\bx_1=\boldsymbol{0}$, there exists a sequence of $T$ convex and 1-Lipschitz losses such that
\[
\Regret_T(\boldsymbol{0})\geq \frac12 \phi(\alpha) T^{2-\alpha}~.
\]
Also, we have that $\lim_{\alpha\to1} \ \phi(\alpha)=1-\ln 2\geq 0.3$.
\end{theorem}
\begin{proof}
We assume $d=1$. For $d \ge 2$, we simply embed the one-dimensional loss vectors into the first coordinate of $\R^d$. Note that the condition on $T$ implies $T\geq 2$. Consider the sequence
\[
(\ell_1(x), \dots, \ell_T(x))
= ( \underbrace{-x, \dots, -x}_{\lceil T/2 \rceil}, \underbrace{x,  \dots, x}_{\lfloor T/2 \rfloor})~.
\]
That is, the first half consists of $-x$'s, the second of $+x$'s. For $t \le \lceil T/2 \rceil$, we have $x_{t+1} = x_t + t^{-\alpha}$.
Unrolling the recurrence and using $x_1 = 0$ we get
\[
x_t
= \sum_{i=1}^{t-1} i^{-\alpha}, \quad t \le \lceil T/2 \rceil + 1~.
\]
On the other hand, for $t \ge \lceil T/2 \rceil + 1$, we have $x_{t+1} = x_t - t^{-\alpha}$.
Unrolling the recurrence up to $x_{\lceil T/2 \rceil + 1}$ we get
\[
x_t
= x_{\lceil T/2 \rceil + 1} \ \ - \sum_{i=\lceil T/2 \rceil + 1}^{t-1} i^{-\alpha}
= \sum_{i=1}^{\lceil T/2 \rceil} i^{-\alpha} \ \ - \sum_{i=\lceil T/2 \rceil + 1}^{t-1} i^{-\alpha},
\quad  t \ge \lceil T/2 \rceil + 1~.
\]
We are ready to lower bound the regret:
\begin{align*}
\Regret_T(0)
& = - \sum_{t=1}^{\lceil T/2 \rceil} x_t + \sum_{t=\lceil T/2 \rceil + 1}^T x_t  \\
& = - \sum_{t=1}^{\lceil T/2 \rceil} \sum_{i=1}^{t-1} i^{-\alpha} + \sum_{t=\lceil T/2 \rceil + 1}^T \left( \sum_{i=1}^{\lceil T/2 \rceil} i^{-\alpha} - \sum_{i=\lceil T/2 \rceil + 1}^{t-1} i^{-\alpha} \right) \\
& = - \sum_{i=1}^{\lceil T/2 \rceil} \frac{\lceil T/2 \rceil - i}{i^{\alpha}} + \lfloor T/2 \rfloor \sum_{i=1}^{\lceil T/2 \rceil} i^{-\alpha} - \sum_{i=\lceil T/2 \rceil + 1}^T \frac{T - i}{i^{\alpha}} \\
& = - \sum_{i=1}^{\lceil T/2 \rceil} \frac{\lceil T/2 \rceil - \lfloor T/2 \rfloor}{i^{\alpha}} + \sum_{i=1}^{T} i^{1-\alpha}  - T \sum_{i=\lceil T/2 \rceil + 1}^T i^{-\alpha}~.
\end{align*}
Since $\lceil T/2 \rceil-\lfloor T/2 \rfloor\le 1$ and using the integral bounds, we have
\begin{align*}
\Regret_T(0)
& \ge - 1 - \int_{i=1}^{\lceil T/2 \rceil} \! x^{-\alpha} \, \mathrm{d}x + \int_{0}^T \! x^{1-\alpha} \, \mathrm{d}x - T \int_{\lceil T/2 \rceil}^T \! x^{-\alpha} \, \mathrm{d}x \\
& = - 1 - \frac{\lceil T/2 \rceil^{1-\alpha} - 1 }{1-\alpha} + \frac{T^{2-\alpha}}{2-\alpha} - T \frac{T^{1-\alpha} - \lceil T/2 \rceil^{1-\alpha}}{1-\alpha} \\
& \geq - 1 + \frac{1}{1-\alpha}- \frac{\lceil T/2 \rceil^{1-\alpha} }{1-\alpha} + \frac{T^{2-\alpha}}{2-\alpha} - \frac{T^{2-\alpha} - T ( T/2) ^{1-\alpha}}{1-\alpha} \\
& \ge - \frac{T^{1-\alpha}}{1-\alpha} + \left( \frac{1}{2-\alpha} + \frac{(1/2)^{1-\alpha}}{1-\alpha}- \frac{1}{1-\alpha}\right) T^{2-\alpha}\\
& = - \frac{T^{1-\alpha}}{1-\alpha} +\phi(\alpha) T^{2-\alpha}
\geq \frac{1}{2}\phi(\alpha) T^{2-\alpha}~. \qedhere
\end{align*}
\end{proof}

This lower bound tells us that \ac{OSD} can indeed fail in unbounded domains when used with a polynomially decreasing stepsize. However, it does not rule out the possibility of another algorithm working in the same setting. Indeed, we will see in Chapter~\ref{ch:ftrl} that Follow-the-Regularized-Leader achieves sublinear regret on unbounded domains with time-varying regularizers. Yet, its dependence on the other quantities will still be suboptimal, and we will obtain the optimal bound only with parameter-free algorithms in Chapter~\ref{ch:parameterfree}. Indeed, in the next section, we prove that unconstrained \ac{OLO} is actually more difficult than \ac{OLO} in bounded domains, \emph{for any algorithm}.
\index{lower bound!unconstrained online subgradient descent|)textbf}

\index{lower bound!unconstrained online linear optimization|(textbf}
\section{Lower Bound for Unconstrained Online Linear Optimization}
\label{sec:lower_unconstrained_olo}

In the unconstrained setting, we proved that \ac{OSD} with $\bx_1=\boldsymbol{0}$ and constant learning rate of $\eta=\frac{1}{L\sqrt{T}}$ gives a regret of $\frac12 L(\|\bu\|_2^2+1)\sqrt{T}$ for any $\bu \in \R^d$. Is this regret optimal? It is clear that the regret must be at least linear in $\|\bu\|_2$. In fact, we could select a specific $\bu$, pass the information of $\|\bu\|_2$ to the online learning algorithm and make the problem constrained in $\mathcal{V}=\{\bx:\|\bx\|_2\leq \|\bu\|_2\}$, so that the lower bound for bounded \ac{OLO} would hold. However, we now show that the correct dependence in $\|\bu\|_2$ is more than linear, so the unconstrained setting is strictly more difficult than the bounded one.

The approach I will follow is to \emph{reduce the \ac{OLO} game to an online game of betting on a coin}, where the lower bounds are easier to prove.
So, let's introduce the coin-betting online game:\index{coin-betting game|(}
\begin{itemize}
\item Start with an initial amount of money $\epsilon>0$.
\item In each round, the algorithm bets a fraction of its current wealth on the outcome of a coin.
\item The outcome of the coin is revealed, and the algorithm wins or loses its bet at even odds.
\end{itemize}
The aim of this online game is to win as much money as possible. Also, as in all the online games we consider, we do not assume anything about how the outcomes of the coin are decided.

We will denote by $c_t \in \{-1,1\}, t=1, \dots, T$ the outcomes of the coin. The absolute value of the signed betting fraction $\beta_t \in [-1,1]$\index{signed betting fraction} will denote the fraction of money to bet, and its sign will denote on which side we are betting. The net gain of the algorithm from the beginning of the game until the end of round $t$ will be denoted by $r_{t}$. Given that the money is won or lost at even odds, we have
\[
\overbrace{r_t +\epsilon}^{\text{Money at the end of round } t}
= \overbrace{r_{t-1} +\epsilon}^{\text{Money at the beginning of round } t} + \overbrace{c_t \beta_t (r_{t-1}+\epsilon)}^\text{Money won or lost}
= \epsilon \prod_{i=1}^t (1+\beta_i c_i),
\]
where we used the fact that $r_0=0$. We will also denote by $x_t=\beta_t (\epsilon + r_{t-1})$ the signed bet of the algorithm on round $t$.

If we got all the outcomes of the coin correct, we would double our money in each round, so that $\epsilon+r_T=\epsilon 2^T$. On the other hand, if the adversary can always select a coin outcome that is the opposite of our bet, then we would lose money in each round. Hence, we are interested in the best any algorithm can do when the adversary is constrained to give us a sequence of coins where one side appears more often than the other.

To facilitate the use of this theorem later on, we will also slightly generalize the problem assuming that the coins are in $\{-L, L\}$ instead of $\{-1, 1\}$, where $L>0$, consequently we impose $\beta_t \in [-\frac{1}{L}, \frac{1}{L}]$.

\begin{theorem}
\label{thm:lower_bound_coin}
Let $T\geq 1$ be even and $0< q\leq \frac{T}{2}$ integer. Then, for any online coin-betting algorithm that guarantees non-negative wealth on any sequence of $T$ coins in $\{-L,L\}$ and starts with initial wealth $\epsilon$, there exists a sequence of coins such that $|\sum_{t=1}^T c_t| \geq 2q L$ and the wealth of the algorithm is upper bounded by
\[
\frac{3 \epsilon}{2} \left(\frac{2q}{\sqrt{T}}+1\right) \exp\!\left(T \cdot \KLBern\!\left(\frac{1}{2}+\frac{q}{T} ; \frac{1}{2}\right)\right)
\leq \frac{3\epsilon}{2} \left(\frac{2q}{\sqrt{T}}+1\right) \exp\!\left( 2 \frac{q^2}{T}+ 3.1 \frac{q^4}{T^3}\right)\!,
\]
where $\KLBern(p;q)$ denotes the \ac{KL} divergence\index{Kullback--Leibler divergence!between Bernoulli distributions|textbf} between two Bernoulli distributions with parameters $p$ and $q$:
\[
\KLBern(p;q) := p \ln \left( \frac{p}{q} \right) + (1-p) \ln \left( \frac{1-p}{1-q} \right)~.
\]
\end{theorem}
\begin{proof}
Let $Y_1, \dots, Y_T$ be independent random variables that take the value of $1$ with probability 0.5 and $-1$ with probability 0.5.
Hence, we have that $\E[ \sum_{t=1}^T x_t L Y_t ]=0$, and also $\sum_{t=1}^T x_t L Y_t  \geq -\epsilon$ for the hypothesis on the betting algorithm.
For any $q\geq0$, it follows that
\begin{align*}
0&=\E\left[\sum_{t=1}^T x_t Y_t \right] \\
&= \E\left[\sum_{t=1}^T x_t Y_t \middle| \left|\sum_{t=1}^T Y_t\right|< 2 q\right] \Pr\left\{\left|\sum_{t=1}^T Y_t\right| < 2q \right\}\\
&\quad +\E\left[\sum_{t=1}^T x_t Y_t \middle| \left|\sum_{t=1}^T Y_t\right| \geq 2 q\right] \Pr\left\{\left|\sum_{t=1}^T Y_t\right|\geq 2 q\right\} \\
 &\geq -\frac{\epsilon}{L}+\left(\frac{\epsilon}{L}+\E\left[\sum_{t=1}^T x_t Y_t \middle| \left|\sum_{t=1}^T Y_t\right| \geq 2 q\right]\right) \Pr\left\{\left|\sum_{t=1}^T Y_t\right| \geq 2 q\right\},
\end{align*}
hence
\begin{align*}
\E\left[\sum_{t=1}^T x_t Y_t \middle| \left|\sum_{t=1}^T Y_t\right| \geq 2 q\right]
&\leq \frac{\epsilon}{L\,\Pr\left\{\left|\sum_{t=1}^T Y_t\right| \geq 2 q\right\}} - \frac{\epsilon}{L}\\
&= \frac{\epsilon}{2 L\,\Pr\left\{\sum_{t=1}^T Y_t \geq 2 q\right\}} - \frac{\epsilon}{L}~.
\end{align*}
Using the fact that $\Pr\left\{\sum_{t=1}^T Y_t \geq 2 q\right\} = \Pr\left\{\sum_{t=1}^T \frac{Y_t + 1}{2} \geq \frac{1}{2}T+ q \right\}$, where $\frac{Y_t + 1}{2}$ are Bernoulli random variables\index{random variable!Bernoulli}, we can apply Lemma~\ref{lemma:bin}, to lower bound the tail of the distribution:
\begin{align*}
\Pr\left\{\sum_{t=1}^T Y_t \geq 2 q\right\}
\geq \frac{1}{3} \frac{1}{\frac{2q}{\sqrt{T}}+1}\exp\left(-T \cdot \KLBern\left(\frac12+\frac{q}{T} ; \frac{1}{2}\right)\right)~.
\end{align*}
Hence, we have
\begin{align*}
\E\left[\sum_{t=1}^T  x_t L Y_t \middle| \left|\sum_{t=1}^T Y_t \right| \geq 2 q\right]
\leq \frac{3}{2} \epsilon \left(\frac{2q}{\sqrt{T}}+1\right) \exp\left(T \cdot \KLBern\left(\frac{1}{2}+\frac{q}{T} ; \frac{1}{2}\right)\right)- \epsilon~.
\end{align*}
Given that the minimum over a set is smaller than or equal to the expectation with respect to any distribution over the set, there exists a sequence of $c_1, \dots, c_T \in \{-L,L\}^T$ such that $\frac{1}{L}|\sum_{t=1}^T c_t| \geq 2q$ and the wealth of the algorithm is deterministically upper bounded by
\[
\frac{3}{2} \epsilon \left(\frac{2q}{\sqrt{T}}+1\right) \exp\left(T \cdot \KLBern\left(\frac{1}{2}+\frac{q}{T} ; \frac{1}{2}\right)\right)~.
\]

For the second upper bound, it is enough to use the elementary inequality
\[
\KLBern\left(\frac{1}{2}+x ; \frac{1}{2}\right)
\leq 2 x^2 + 3.1 x^4,  \quad |x|\leq \frac12~. \qedhere
\]
\end{proof}

\begin{remark}
It is also possible to upper bound the l.h.s. of the inequality in the above theorem by a quantity that depends on the wealth of the best constant betting fraction.

For the expression of the optimal wealth on the $c_1, \dots, c_T$, consider the wealth of a strategy that bets a constant signed fraction of money $\beta$. Starting with initial money $\epsilon$, after $T$ rounds the wealth is $\epsilon \prod_{t=1}^T (1+\beta c_t)$.
By taking the derivative of the logarithm of the wealth, it is immediate to verify that the $\beta^\star$ that maximizes the above quantity is $\frac{\sum_{t=1}^T c_t}{L^2 T}$.
Denote by $k=|\{c_t: c_t=L\}|$, hence we have $\beta^\star=\frac{2k-T}{L T}$. Hence, the optimal wealth is
\begin{align*}
\epsilon (1+\beta^\star L)^k (1-\beta^\star L)^{T-k}
&= \epsilon \exp\left( k \ln \frac{2k}{T} + (T-k) \ln \left(2-\frac{2k}{T}\right)\right)\\
&= \epsilon \exp\left( T \cdot \KLBern\left(\frac{k}{T} ;\frac{1}{2}\right)\right)~.
\end{align*}
Equivalently, given that $\frac{1}{L}\sum_{t=1}^T c_t=2k-T$, we also have that
\begin{align*}
\max_{-1/L\leq \beta\leq 1/L} \prod_{t=1}^T (1+\beta c_t)
&= \exp\left( T \cdot \KLBern\left(\frac{\sum_{t=1}^T c_t }{2 L T}+\frac12;\frac{1}{2}\right)\right)\\
&= \exp\left( T \cdot \KLBern\left(\frac{|\sum_{t=1}^T c_t |}{2 L T}+\frac12;\frac{1}{2}\right)\right),
\end{align*}
where in the second equality we used the fact that $\KLBern(x+\frac12;\frac12)=\KLBern(-x+\frac12;\frac12)$.
Hence, we have
\begin{align*}
&\frac{3}{2} \epsilon \left(\frac{2q}{\sqrt{T}}+1\right) \exp\left(T \cdot \KLBern\left(\frac{1}{2}+\frac{q}{T} ; \frac{1}{2}\right)\right)\\
&\quad\leq \frac{3}{2} \epsilon \left(\frac{|\sum_{t=1}^T c_t|}{L\sqrt{T}}+1\right) \exp\left(T \cdot \KLBern\left(\frac{1}{2}+\frac{|\sum_{t=1}^T c_t|}{2 L T} ; \frac{1}{2}\right)\right)\\
&\quad= \frac{3}{2} \left(\frac{|\sum_{t=1}^T c_t|}{L\sqrt{T}}+1\right) \max_{\beta} \epsilon \prod_{t=1}^T (1+c_t \beta)~.
\end{align*}
\end{remark}
\index{coin-betting game|)}

Now, let's connect the coin-betting game with \ac{OLO}, thanks to the next theorem.
\begin{theorem}
\label{thm:epsilon_regret_olo_is_betting}
Let $\epsilon_t$ be a non-negative non-decreasing sequence and $\mathscr{A}$ an \ac{OLO} algorithm that guarantees $\Regret_t(\boldsymbol{0})\leq \epsilon_t$ for any sequence of $\bg_1,\dots,\bg_t \in \R^d$ with $\|\bg_i\|_2\leq L$. Then, for any $T\geq 0$ there exists $\bbeta_t$ such that $\bx_t= \bbeta_t (\epsilon_T-\sum_{i=1}^{t-1} \langle \bg_i, \bx_i\rangle)$ and $\|\bbeta_t\|_2\leq\frac{1}{L}$ for $t=1, \dots, T$.
\end{theorem}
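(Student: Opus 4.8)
The plan is to exhibit the required fractions $\bbeta_t$ explicitly as $\bbeta_t = \bx_t / W_{t-1}$, where $W_{t-1} := \epsilon(T) - \sum_{i=1}^{t-1} \langle \bg_i, \bx_i \rangle$ is the ``wealth'' of the induced betting algorithm at the start of round $t$. With this definition the identity $\bx_t = \bbeta_t W_{t-1}$ holds by construction, so the entire content of the theorem reduces to two facts: that $W_{t-1} > 0$ so the division makes sense, and that $\|\bx_t\|_2 \leq W_{t-1}$ so that $\|\bbeta_t\|_2 \leq 1$.

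The key observation driving both facts is the order of play: the algorithm must commit to $\bx_t$ before seeing $\bg_t$, while the adversary is free to choose any $\bg_t$ with $\|\bg_t\|_2 \leq 1$ afterwards. First I would fix $\bg_1, \dots, \bg_{t-1}$ arbitrarily and let the adversary complete the sequence at round $t$ with the worst-case direction $\bg_t = \bx_t / \|\bx_t\|_2$ (assuming $\bx_t \neq \boldsymbol{0}$), which maximizes $\langle \bg_t, \bx_t \rangle = \|\bx_t\|_2$. Applying the hypothesis $\Regret_t(\boldsymbol{0}) = \sum_{i=1}^t \langle \bg_i, \bx_i \rangle \leq \epsilon(t)$ to this particular sequence, together with the fact that $\epsilon$ is non-decreasing so $\epsilon(t) \leq \epsilon(T)$, gives
\[
\sum_{i=1}^{t-1} \langle \bg_i, \bx_i \rangle + \|\bx_t\|_2 \leq \epsilon(T),
\]
which rearranges to exactly $\|\bx_t\|_2 \leq W_{t-1}$. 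Since $\bx_t$ depends only on $\bg_1, \dots, \bg_{t-1}$, this bound holds for the actual sequence being played, not merely for the hypothetical worst-case continuation used to derive it.

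It then remains to tie up the degenerate cases. The inequality just proved shows $W_{t-1} \geq \|\bx_t\|_2 \geq 0$; when $\bx_t \neq \boldsymbol{0}$ we in fact get $W_{t-1} \geq \|\bx_t\|_2 > 0$, so $\bbeta_t = \bx_t / W_{t-1}$ is well-defined and satisfies $\|\bbeta_t\|_2 = \|\bx_t\|_2 / W_{t-1} \leq 1$. When $\bx_t = \boldsymbol{0}$ the identity $\bx_t = \bbeta_t W_{t-1}$ holds for any choice, so I would simply set $\bbeta_t = \boldsymbol{0}$, which trivially has norm at most $1$. Collecting these choices over $t = 1, \dots, T$ yields the claimed representation.

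The main obstacle, and really the only conceptual step, is recognizing that one should exploit the adversary's freedom to pick $\bg_t$ \emph{after} $\bx_t$ is committed, and that the damaging choice aligns $\bg_t$ with $\bx_t$. Once this is seen, the bound $\|\bx_t\|_2 \leq W_{t-1}$ drops out of a single application of the standing regret guarantee, and the rest is bookkeeping on the edge cases.
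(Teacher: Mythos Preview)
Your proof is correct and follows essentially the same approach as the paper: both arguments hinge on exploiting the adversary's freedom to align $\bg_t$ with $\bx_t$ after $\bx_t$ is committed, yielding $\|\bx_t\|_2 \leq \epsilon(T) - \sum_{i=1}^{t-1}\langle \bg_i,\bx_i\rangle$. The paper phrases this as a contradiction while you argue directly, and you are more careful about the degenerate cases $\bx_t = \boldsymbol{0}$ and $W_{t-1}=0$, but the substance is the same.
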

\begin{proof}
Define $r_t=-\sum_{i=1}^t \langle \bg_i, \bx_i\rangle$ as the ``reward'' of the algorithm.
So, we have
\[
\Regret_t(\bu)
=\sum_{i=1}^t \langle \bg_i, \bx_i\rangle - \sum_{i=1}^t \langle \bg_i, \bu\rangle
= -r_t + \left\langle \sum_{i=1}^t \bg_i, \bu\right\rangle~.
\]
Since we assumed that $\Regret_t(\boldsymbol{0})\leq \epsilon_t$, we always have $r_t \geq -\epsilon_t$.
Using this, we claim that $L \|\bx_t\|_2 \leq r_{t-1} + \epsilon_t$ for all $t=1,\dots,T$. To see this, assume that there is a sequence $\bg_1, \dots, \bg_{t-1}$ that gives $L \|\bx_t\|_2 > r_{t-1} + \epsilon_t$. If $\|\bx_t\|_2=0$ then the stated inequality holds. Instead, for $\|\bx_t\|_2\neq 0$, set $\bg_t=L\frac{\bx_t}{\|\bx_t\|_2}$. For this sequence, we would have $r_t = r_{t-1} - L\|\bx_t\|_2 < -\epsilon_t$, that contradicts the observation that $r_t \geq -\epsilon_t$.

So, from the fact that $L\|\bx_t\|_2 \leq r_{t-1} + \epsilon_t \leq r_{t-1} + \epsilon_T$ we have that there exists $\bbeta_t$ such that $\bx_t = \bbeta_t (\epsilon_T+r_{t-1})$ and $\|\bbeta_t\|_2 \leq \frac{1}{L}$.
\end{proof}

This theorem informs us of something important: \emph{any \ac{OLO} algorithm that suffers at most a non-decreasing regret in $t$ against the competitor $\bu=\boldsymbol{0}$ predicts in the form of a ``vectorial'' coin-betting algorithm}.

From this connection to online betting, we now derive a lower bound for \ac{OLO}.

\begin{theorem}
\label{thm:olo_coin_betting_lower_bound}
Let $T \in\Nat$ be even and let $\mathscr{A}$ be any \ac{OLO} algorithm that guarantees regret at most $\epsilon_T>0$ against the null competitor on any sequence of $T$ linear and $L$-Lipschitz losses $\ell_t:\R \to \R$ for $t=1, \dots, T$. Let $U>0$ be such that $1 \leq W(\frac{\sqrt{e T} U L}{8 \epsilon_T})\leq \frac{\sqrt{T}}{2}$. Then, for this algorithm, there exists a sequence of $g_t$ with $|g_t|\leq L$ and a competitor $u \in \R$ with $|u|=U$, such that
\[
\sum_{t=1}^T g_t (x_t-u)
\geq R_T(U) := U L \sqrt{T}\left(\sqrt{2  W\left(\frac{\sqrt{T} U L}{5 \epsilon_T} \right)}-1\right) - 2 U L + \epsilon_T,
\]
where $W:\R_{\geq0} \to \R_{\geq0}$ is the Lambert function\index{Lambert function} (see Appendix~\ref{sec:lambert}).

Moreover, assuming there exist $0<K_1\leq K_2<\infty $ such that $K_1 \leq \epsilon_T \leq K_2$ for all $T$, we have $\lim_{T\to +\infty} \ \frac{R_T(U)}{UL\sqrt{T \ln T}} = 1$.
\end{theorem}
\begin{proof}
Consider the linear losses $\ell_t(x)= g_t x$, where $|g_t|\leq L$.

Given that the algorithm guarantees a regret of at most $\epsilon_T$ against the null competitor, from Theorem~\ref{thm:epsilon_regret_olo_is_betting} we have that, if we feed the algorithm with linear losses, we can reinterpret the algorithm as a betting algorithm with initial wealth $\epsilon_T$ and that guarantees non-negative wealth, where the wealth at time $T$ is defined as $\epsilon_T - \sum_{t=1}^T g_t x_t$.

Set $A=W\left(\frac{\sqrt{e T} U L}{8 \epsilon_T} \right)$, $\tilde{q}=\frac{\sqrt{T}}{2}\sqrt{2A-1}$, and $q=\lfloor \tilde{q}\rfloor$, so that $\tilde{q} \geq q \geq \tilde{q}-1$.

%


Observe that the constraint on $U$ ensures that
\begin{equation}
\label{eq:proof_olo_coin_betting_lower_bound_eq1}
q
\leq \tilde{q}
\leq \frac{T^{3/4}}{2}
\leq \frac{T}{2},
\end{equation}
so we can safely use Theorem~\ref{thm:lower_bound_coin}, that tells us that there exists a sequence of $c_1, \dots, c_T \in \{-L,L\}$ such that $|\sum_{t=1}^T c_t|\geq 2qL$ and
\[
\sum_{t=1}^T c_t x_t
\leq \frac{3}{2} \left(\frac{2q}{\sqrt{T}}+1\right) \epsilon_T \exp\left(2 \frac{q^2}{T} \right) \exp\left(3.1 \frac{q^4}{T^3}\right) - \epsilon_T~.
\]
Moreover, observe that
\begin{align*}
\exp\left(2 \frac{q^2}{T} \right) \exp\left(3.1 \frac{q^4}{T^3}\right)
&\leq \exp\left(2 \frac{\tilde{q}^2}{T} \right) \exp\left(3.1 \frac{\tilde{q}^4}{T^3}\right)
\leq \exp\left(2 \frac{\tilde{q}^2}{T} \right) \exp\left(\frac{3.1}{16}\right)\\
&\leq \frac{4}{3} \exp\left(2 \frac{\tilde{q}^2}{T} \right),
\end{align*}
where in second inequality we used $\frac{\tilde{q}^4}{T^3}\leq \frac{1}{16}$ from \eqref{eq:proof_olo_coin_betting_lower_bound_eq1}.

Set $g_t=-c_t$ and choose $u=-U\sign(\sum_{t=1}^T g_t)$, so we have
\begin{align*}
\sum_{t=1}^T g_t(x_t-u)
&= U \left|\sum_{t=1}^T g_t\right| + \sum_{t=1}^T g_t x_t\\
&\geq U \cdot 2q L - 2 \left(\frac{2\tilde{q}}{\sqrt{T}}+1\right) \epsilon_T \exp\left(2\frac{\tilde{q}^2}{T}\right)+\epsilon_T\\
&\geq U L \cdot (2\tilde{q}-2) - 2 \left(\frac{2\tilde{q}}{\sqrt{T}}+1\right) \epsilon_T \exp\left(2\frac{\tilde{q}^2}{T}\right)+\epsilon_T~.
\end{align*}
Now, from the condition on $U$, we have $\frac{2\tilde{q}}{\sqrt{T}}\geq 1$, hence we have
\begin{align}
\sum_{t=1}^T g_t(x_t-u)
&= U \left|\sum_{t=1}^T g_t\right| + \sum_{t=1}^T g_t x_t \nonumber \\
&\geq 4 \epsilon_T \left[\frac{\sqrt{T} U L}{4\epsilon_T} \cdot \frac{2\tilde{q}}{\sqrt{T}} - \frac{2\tilde{q}}{\sqrt{T}}\exp\left(2\frac{\tilde{q}^2}{T}\right)\right] - 2 U L + \epsilon_T~. \label{eq:proof_olo_coin_betting_lower_bound_eq2}
\end{align}

Let $a\geq 1$, then
\[
x^\star :=
\argmax_{x \geq 0} \ a x - x \exp(x^2/2)
= \sqrt{2 W\left(\frac{a \sqrt{e}}{2}\right)-1}~.
\]
So, we have
\[
a x^\star - x^\star \exp((x^\star)^2/2)
= a\sqrt{2 W\left(\frac{a \sqrt{e}}{2}\right)-1}\left(1 - \frac{1}{2 W\left(\frac{a \sqrt{e}}{2}\right)}\right)~.
\]
Hence, our choice of $\tilde{q}$ maximizes the lower bound in \eqref{eq:proof_olo_coin_betting_lower_bound_eq2}. Hence, we obtain
\[
\sum_{t=1}^T g_t(x_t-u)
\geq U L \sqrt{T} \sqrt{2 W\left(\frac{\sqrt{e\,T} U L}{8\epsilon_T} \right)-1}\left(1- \frac{1}{2 W\left(\frac{\sqrt{e\,T} U L}{8\epsilon_T} \right)}\right) - 2 U L + \epsilon_T~.
\]
The condition on $U$ ensures that $W(\frac{\sqrt{e T} U L}{8 \epsilon_T})\geq 1$.
So, using the elementary inequality $(1-1/(2x))\sqrt{2x-1}\geq \sqrt{2x}-1$ for $x\geq1$ and simplifying the numerical constants, we get the stated bound.

The limit statement is obtained using Theorem~\ref{thm:limit_lambert}.
\end{proof}

\begin{remark}
The leading constant $\sqrt{2}$ is asymptotically optimal because there exist algorithms with a matching upper bound, see Section~\ref{sec:history_lower}.
\end{remark}

This theorem implies that \ac{OSD} with learning rate $\eta=\frac{\alpha}{L \sqrt{T}}$ does not have the optimal dependence on $\|\bu\|_2$ for any $\alpha>0$.
\index{lower bound!unconstrained online linear optimization|)textbf}

In Chapter~\ref{ch:parameterfree}, we will see that the connection between coin betting and \ac{OLO} can also be used to design an \ac{OLO} algorithm. This will give us \emph{optimal unconstrained \ac{OLO} algorithms with the surprising property of not requiring a learning rate at all}.

\section{History Bits}
\label{sec:history_lower}
\index{lower bound!bounded online linear optimization|(}
The lower bound for bounded \ac{OLO} is quite standard, and the proof presented is a simplified version of the one in \citet{OrabonaP15,OrabonaP18}.
One could also use the function in the lower bound for offline optimization in \citet[Section 3.2.1]{Nesterov04}, but it would require the additional assumption that $d>T$ and that $\bx_t$ lies in the span of the previous subgradients. This limitation on $d$ is unavoidable: without it, offline convex optimization becomes easier while \ac{OCO} is equally hard for any number of dimensions.
\index{lower bound!bounded online linear optimization|)}

\index{lower bound!unconstrained online subgradient descent|(}
The lower bound for \ac{OSD} in Theorem~\ref{thm:lower_bound_osd} is a minor generalization of the one in \citet{OrabonaP15, OrabonaP18}. They also provide a similar lower bound for the \ac{EG} algorithm.
\index{lower bound!unconstrained online subgradient descent|)}

\index{lower bound!unconstrained online linear optimization|(}
Strangely enough, both the online learning and the optimization literature have almost ignored the issue of lower bounds for the unconstrained case. The connection between coin betting and \ac{OLO} was first unveiled in \citet{OrabonaP16}. Theorem~\ref{thm:epsilon_regret_olo_is_betting} is a mild generalization of \citet[Theorem 9]{McMahanJ13}. A similar, but more general, result was also rediscovered in \citet{Cutkosky18}.
The first lower bound for unconstrained \ac{OLO} is from \citet{StreeterM12}, but their proof relied on using the value of $\inf_T \Pr\{ \sum_{i=1}^T X_i \geq \sqrt{T}\}$ where $X_i$ are Rademacher random variables\index{random variable!Rademacher}.
\citet{StreeterM12} claim that this value is $7/64$ that corresponds to $T=6$, but they do not provide a proof for it. The same value was also conjectured by \citet{HitczenkoK94} and proved formally only in 2023 by \citet{HollomP23}. A proof avoiding that step completely was given by \citet{Orabona13}. Theorem~\ref{thm:olo_coin_betting_lower_bound} is new: it is a more precise version of the lower bound in \citet{Orabona13} and it has the asymptotically optimal constant $\sqrt{2}$.
One way to achieve the optimal constant in lower bounds using the tail of Binomial distributions was shown in \citet{OrabonaP15b}. Independently, \citet{ZhangCP22} proved a lower bound for unconstrained \ac{OCO} with the optimal constant but only when $\epsilon_T=\mathcal{O}(\sqrt{T})$, with an algorithm with a matching upper bound. The specific method used here to obtain the optimal constant is new. \citet{McMahanO14} proposed an algorithm that matches the lower bound up to a multiplicative constant, while \citet{ZhangCP22} matched the multiplicative constant too.
More recently, \citet{CarmonH24} proved a lower bound of $\Omega(\|\bx^\star-\bx_1\|L\sqrt{\frac{1}{T}\ln \frac{L\|\bx^\star-\bx_1\|}{\epsilon}})$ in the stochastic setting on the expected suboptimality gap. Their lower bound implies the same lower bound we stated on the regret of \ac{OCO} through the online-to-batch conversion. However, their lower bound requires the use of convex non-linear Lipschitz functions, while the result here just requires linear losses. Their results also imply that the parameter-free algorithm in \citet{McMahanO14} coupled with the online-to-batch conversion is optimal for the setting of stochastic unconstrained optimization of Lipschitz convex functions.
\index{lower bound!unconstrained online linear optimization|)}


\section{Exercises}

\begin{exer}
Fix $U>0$ and $\mathcal{V}=\R^d$. Mimicking the proof of Theorem~\ref{thm:lower_bound_constr}, prove that for any \ac{OCO} algorithm there exists a $\bu^\star$ and a sequence of loss functions such that $\Regret_T(\bu^\star)\geq\frac{1}{2}\|\bu^\star\|_2 L \sqrt{T}$ where $\|\bu^\star\|_2=U$ and the loss functions are $L$-Lipschitz with respect to $\|\cdot\|_2$.
\end{exer}

\begin{exer}
Extend the proof of Theorem~\ref{thm:lower_bound_constr} to an arbitrary norm $\|\cdot\|$ to measure the diameter of $\mathcal{V}$ and with $\|\bg_t\|_\star \leq L$.
\end{exer}



\acresetall

\chapter{Online Mirror Descent}
\label{ch:omd}
\index{Online Mirror Descent algorithm|(textbf}

In this chapter, we will introduce the \ac{OMD} algorithm. To explain its genesis, I think it is essential to understand what subgradients do. In particular, the negative subgradients are not always pointing towards a direction that minimizes the function. Then, we will introduce the Bregman divergences to generalize the notion of distance implicitly used in online subgradient descent. Finally, we will see extensions and applications of \ac{OMD}.

\acresetall

\section{Subgradients are not Informative}

We have seen that in online learning, we receive a sequence of loss functions, and we have to output a vector before observing the loss function on which we will be evaluated. However, we can gain a lot of intuition if we consider the easy case that the sequence of loss functions is always a fixed function, i.e., $\ell_t(\bx)=\ell(\bx)$. If our hypothetical online algorithm does not work in this situation, then it will not work on the more general case.

Hence, considering the case of fixed loss functions, let's take a look at the key step in the proof of the upper bound to the regret for \ac{OSD} in Lemma~\ref{lemma:sd_one_step}. We used the following property of the subgradients:
\begin{equation}
\label{eq:subgradient}
\ell(\bx_t)-\ell(\bu)\leq \langle \bg_t, \bx_t -\bu\rangle, \quad \forall \bu \in \mathcal{V}~.
\end{equation}
In words, to minimize the l.h.s. of this equation, it is enough to minimize the r.h.s., that is nothing else than the instantaneous linear regret on the linear function $\langle \bg_t, \cdot\rangle$. This is the only reason why \ac{OSD} works! However, I am sure you have heard a million times the (wrong) intuition that gradient points towards the minimum, and you might be tempted to think that the same (even more wrong) intuition holds for subgradients. Indeed, I am sure that even if we proved the regret guarantee based on \eqref{eq:subgradient}, in the back of your mind, you keep thinking ``yeah, sure, it works because the subgradient tells me where to go to minimize the function''. Typically, this idea is so strong that I have to present explicit counterexamples to fully convince a person.

So, take a look at the following examples that illustrate the fact that a subgradient does not always point in a direction where the function decreases.

\begin{figure}
\centering
\begin{tikzpicture}
  \begin{groupplot}[
    group style={
      group size=2 by 1,
      horizontal sep=2cm,
    },
    width=7cm,
    height=6cm,
    domain=-2.5:2.5,
    domain y=-2.5:2.5,
    samples=61,
    samples y=61,
    xlabel={$x_1$},
    ylabel={$x_2$},
    colormap={gray}{
      rgb(0cm)=(0,0,0);
      rgb(1cm)=(0.85,0.85,0.85)
    },
    xmin=-2.5, xmax=2.5,
    ymin=-2.5, ymax=2.5,
    xtick={-2,-1,0,1,2},
    ytick={-2,-1,0,1,2},
    zmin=0, zmax=5,
    ztick={0,1,2,3,4,5},
    major grid style={gray!30},
    tick label style={font=\scriptsize}, 
  ]

    \nextgroupplot[
      view={-32.7}{53.2},
      zlabel={$f(x_1,x_2)$},
      xmajorgrids,
      ymajorgrids,
      zmajorgrids,
      tick style={draw=none},
    ]

    \addplot3[
      contour gnuplot={
        levels={
          0.2,0.4,0.6,0.8,1.0,
          1.2,1.4,1.6,1.8,2.0,
          2.2,2.4,2.6,2.8,3.0,
          3.2,3.4,3.6,3.8,4.0,
          4.2,4.4,4.6,4.8,5.0
        },
        labels=false,
      },
      very thin,
    ]
    {max(max(-x,x-y),x+y)};

    \nextgroupplot[
      view={0}{90},
      axis equal image,
      hide z axis,
    ]

    \addplot3[
      contour gnuplot={
        levels={
          0.2,0.4,0.6,0.8,1.0,
          1.2,1.4,1.6,1.8,2.0,
          2.2,2.4,2.6,2.8,3.0,
          3.2,3.4,3.6,3.8,4.0,
          4.2,4.4,4.6,4.8,5.0
        },
        labels=false,
      },
      very thin,
    ]
    {max(max(-x,x-y),x+y)};

    \draw[->, line width=1.5pt]
      (axis cs:1,0,0) -- (axis cs:0,-1,0);

  \end{groupplot}
\end{tikzpicture}
\caption{3D plot (left) and level sets (right) of $f(\bx)=\max[-x_1,x_1-x_2,x_1+x_2]$. A negative subgradient is indicated by the black arrow.}
\label{fig:subgradient_not_descend1}
\commentAlt{Figure~\ref{fig:subgradient_not_descend1}. Two plots of f(x)=max(-x_1, x_1-x_2, x_1+x_2): a three-dimensional contour plot and a top-down level-set plot. A black arrow on the level-set plot shows a negative subgradient direction that is not a descent direction.}
\end{figure}


\begin{figure}[h]
\centering
\begin{tikzpicture}
  \begin{groupplot}[
    group style={
      group size=2 by 1,
      horizontal sep=2cm,
    },
    width=7cm,
    height=6cm,
    domain=-2.5:2.5,
    domain y=-2.5:2.5,
    samples=61,
    samples y=61,
    xlabel={$x_1$},
    ylabel={$x_2$},
    colormap={gray}{
      rgb(0cm)=(0,0,0);
      rgb(1cm)=(0.85,0.85,0.85)
    },
    xmin=-2.5, xmax=2.5,
    ymin=-2.5, ymax=2.5,
    xtick={-2,-1,0,1,2},
    ytick={-2,-1,0,1,2},
    zmin=0, zmax=20,
    ztick={0,5,10,15,20},
    major grid style={gray!30},
    tick label style={font=\scriptsize},
  ]

    \nextgroupplot[
      view={-49.5}{61.2},
      zlabel={$f(x_1,x_2)$},
      xmajorgrids,
      ymajorgrids,
      zmajorgrids,
      tick style={draw=none},
    ]

    \addplot3[
      contour gnuplot={
        levels={1,2,3,4,5,6,7,8,9,10,11,12,13,14,15,16,17,18,19,20},
        labels=false,
      },
      very thin,
    ]
    {max(x^2 + (y-1)^2, x^2 + (y+1)^2)};

    \nextgroupplot[
      view={0}{90},
      axis equal image,
      hide z axis,
      zmin=0, zmax=1, 
    ]

    \addplot3[
      contour gnuplot={
        levels={1,2,3,4,5,6,7,8,9,10,11,12,13,14,15,16,17,18,19,20},
        labels=false,
      },
      very thin,
    ]
    {max(x^2 + (y-1)^2, x^2 + (y+1)^2)};

    \draw[->, line width=1.5pt]
      (axis cs:1,0,0) -- (axis cs:0.5,-0.5,0);

  \end{groupplot}
\end{tikzpicture}
\caption{3D plot (left) and level sets (right) of $f(\bx)=\max[x_1^2+(x_2+1)^2,x_1^2+(x_2-1)^2]$. A negative subgradient is indicated by the black arrow.}
\label{fig:subgradient_not_descend2}
\commentAlt{Figure~\ref{fig:subgradient_not_descend2}. Two plots of f(x)=max(x_1^2+(x_2+1)^2, x_1^2+(x_2-1)^2): a three-dimensional contour plot and a top-down level-set plot. A black arrow marks a negative subgradient direction that does not decrease the function.}
\end{figure}


\begin{example}
\label{ex:subgradient_not_descend}
Let $f(\bx)=\max[-x_1,x_1-x_2,x_1+x_2]$, see Figure~\ref{fig:subgradient_not_descend1}.
The vector $\bg=(1,1)$ is a subgradient in $\bx=(1,0)$ of $f(\bx)$. No matter how we choose the stepsize, moving in the negative direction of this subgradient will not decrease the objective function.
An even more extreme example is in Figure~\ref{fig:subgradient_not_descend2}, with the function $f(\bx)=\max[x_1^2+(x_2+1)^2,x_1^2+(x_2-1)^2]$.
Here, in the point $(1,0)$, any positive step in the direction of the displayed negative subgradient will \emph{increase} the objective function.
\end{example}

In both examples, one might think that it would be enough to add a little bit of noise to move away from the ``corners''. However, we have to remember that we are in the adversarial setting. So, assuming the adversary sees our randomization, it can always present us a function such that our prediction is exactly on a ``corner''. This means that our analysis of subgradient descent will not have to use the fact that the subgradients point towards descending directions, because it can be false in every single iteration.

\begin{remark}
Given the above, one might wonder how much ``information'' the subgradients carry. It turns out, quite a lot! In fact, for bounded domains in the offline case, i.e., when all the functions are the same, the cutting plane method\index{cutting plane method} can optimize the function exponentially fast just using subgradients~\citep[see, e.g., Chapter 3 in][]{Nemirovski95}. However, in the adversarial setting, subgradients become much weaker than in the offline setting, exactly because the adversary has the power to change the function in each round.
\end{remark}

\section{Reinterpreting the Online Subgradient Descent Algorithm}
How does \ac{OSD} work? It works exactly as I told you before: thanks to \eqref{eq:subgradient}. But what does that inequality really mean?

A way to understand how the \ac{OSD} algorithm works is to think that it minimizes a local approximation of the original objective function. This is not unusual for optimization algorithms. For example, the Newton algorithm constructs an approximation with a Taylor expansion truncated to the second term.
Thanks to the definition of subgradients, we can immediately build a linear lower bound to a function $f$ around $\bx_0$:
\[
f(\bx) \geq \tilde{f}(\bx) := f(\bx_0) + \langle \bg, \bx-\bx_0\rangle, \quad \forall \bx \in \mathcal{V}~.
\]
So, in our setting, this would mean that we update the online algorithm with the minimizer of a linear approximation of the loss function you received. Unfortunately, minimizing a linear function is unlikely to give us a good online algorithm. Indeed, over unbounded domains, the infimum of a linear function is $-\infty$.

So, let's introduce the other key concept: we constrain the minimization of this lower bound only in a neighborhood of $\bx_0$, where we have good reason to believe that the approximation is more precise. Moreover, in online learning, it makes sense not to go too far from the previous iteration because the losses are different in each step, and we do not want to give too much importance to the current loss. Coding the neighborhood constraint with an L$_2$ squared distance from $\bx_0$ less than some positive number $h$, we might think to use the following update
\begin{align*}
\bx_{t+1} = \argmin_{\bx \in \mathcal{V}} & \ f(\bx_t) + \langle \bg, \bx-\bx_t\rangle \\
\text{s.t.} & \ \|\bx_t-\bx\|^2 \leq h~.
\end{align*}
Equivalently, for some $\eta>0$, we can consider the unconstrained formulation
\begin{equation}
\label{eq:quad_approx}
\argmin_{\bx \in \mathcal{V}} \ \left(\hat{f}(\bx) := f(\bx_0) + \langle \bg, \bx-\bx_0\rangle + \frac{1}{2\eta}\|\bx_{0}-\bx\|_2^2\right)~.
\end{equation}
This is a well-defined update scheme that hopefully moves $\bx_t$ closer to the optimum of $f$.
See Figure~\ref{fig:local_approx} for a graphical representation in one dimension.

\begin{figure}[h]
\centering
\begin{tikzpicture}
\begin{axis}[
          xmax=14,ymax=15,
          axis lines=middle,
          restrict y to domain=0:20,
          ]
\addplot[thick,black,domain=1/2^8:14,samples=200] {1-ln(x)+x*x/10};
\addplot[thick,dotted, domain=1/2^6:14,samples=200]  {1-ln(8)+8*8/10+(x-8)*(-1/8+2*8/10)} node at (axis cs:13,11) {$\tilde{f}(\bx)$};
\addplot[dashdotted,domain=1/2^6:14,samples=200]  {1-ln(8)+8*8/10+(x-8)*(-1/8+2*8/10)+1/3*(x-8)^2} node at (axis cs:1.5,14) {$\hat{f}(\bx)$};
\node at (axis cs:8.9,4.9) {$f(\bx_0)$};
\node at (axis cs:8.3,0.5) {$\bx_0$};
\end{axis}
\end{tikzpicture}
\caption{Approximations of $f(\bx)$.}
\label{fig:local_approx}
\commentAlt{Figure~\ref{fig:local_approx}. Plot of a curved function f and two local approximations at x_0: a dotted linear approximation and a dash-dotted quadratic approximation.}
\end{figure}

And now the final element of our story: the argmin in \eqref{eq:quad_approx} is exactly the update we used in \ac{OSD}!
Indeed, solving the argmin and completing the square, we get
\begin{align}
&\argmin_{\bx \in \mathcal{V}} \ \langle \bg_t, \bx\rangle + \frac{1}{2\eta_t}\|\bx_{t}-\bx\|_2^2 \label{eq:osd_as_omd}\\
&\quad = \argmin_{\bx \in \mathcal{V}} \ \|\eta_t \bg_t\|^2 + 2\eta_t \langle \bg_t, \bx-\bx_t\rangle + \|\bx_{t}-\bx\|_2^2 \nonumber \\
&\quad = \argmin_{\bx \in \mathcal{V}} \ \|\bx - \bx_t +\eta_t \bg_t\|_2^2
= \Pi_{\mathcal{V}}(\bx_t - \eta_t \bg_t), \nonumber
\end{align}
where we used the fact that the argmin is independent of additive constants and positive rescalings, and $\Pi_{\mathcal{V}}$ is the Euclidean projection onto $\mathcal{V}$, i.e., $\Pi_{\mathcal{V}}(\bx)=\argmin_{\by \in \mathcal{V}} \ \|\bx-\by\|_2$.

The new way to write the update of \ac{OSD} in \eqref{eq:quad_approx} will be the core ingredient for designing \emph{\ac{OMD}}.
In fact, \ac{OMD} is a strict generalization of that update when we use a different way to measure the locality of $\bx$ from $\bx_t$.
That is, we measured the distance to the current point with the squared L$_2$ norm. What happens if we change the norm? Do we even have to use a norm?

To answer these questions, we have to introduce another useful mathematical object: the \emph{Bregman divergence}.

\section{Convex Analysis Bits: Bregman Divergence}

We first give a new definition, a slightly stronger notion of convexity.
\begin{definition}
Let $f:\mathcal{V} \subseteq \R^d \to \R$ and $\mathcal{V}$ be a convex set. $f$ is \textbf{strictly convex}\index{function!strictly convex|textbf} if
\[
f(\alpha \bx + (1-\alpha) \by) < \alpha f(\bx) + (1-\alpha) f(\by), \quad \forall \bx, \by \in \mathcal{V}, \bx\neq \by, 0<\alpha<1~.
\]
\end{definition}
From the definition, it is immediate to see that strong convexity with respect to any norm implies strict convexity.
Note that for a differentiable function, strict convexity also implies that $f(\by) > f(\bx) + \langle \nabla f(\bx), \by-\bx\rangle$ for $\bx\neq\by$~\citep[Proposition 17.10]{BauschkeC17}.

We now define our new notion of ``distance''.
\index{Bregman divergence|(textbf}
\begin{definition}
Let $\psi: \mathcal{X} \to \R$ be strictly convex\index{function!strictly convex} and differentiable on $\interior \mathcal{X} \neq\emptyset$, where $\mathcal{X}$ is the domain of $\psi$. The \textbf{Bregman divergence} with respect to $\psi$ is denoted by $B_\psi : \mathcal{X} \times \interior \mathcal{X} \to \R$ defined as
\[
B_\psi(\bx; \by)
:= \psi(\bx) - \psi(\by) - \langle \nabla \psi(\by), \bx - \by\rangle~.
\]
\end{definition}
We will call the function $\psi$ associated to $B_\psi$ the \textbf{distance generating function}\index{distance generating function|textbf}.

From the definition, we see that the Bregman divergence is always non-negative for $\bx,\by\in \interior \mathcal{X}$, from the convexity of $\psi$. However, something stronger holds.
By the strict convexity of $\psi$, for a fixed point $\by \in \interior \mathcal{X}$ we have that $\psi(\bx) \geq \psi(\by) +\langle \nabla \psi(\by), \bx-\by\rangle, \ \forall \bx \in \mathcal{X}$, with equality only for $\by=\bx$. Hence, the strict convexity allows us to use the Bregman divergence as a similarity measure between $\bx$ and $\by$. Moreover, this similarity measure \emph{changes} with the reference point $\by$. This also implies that, as you can see from the definition, the Bregman divergence is not symmetric.

Let me give you some more intuition on the concept of the Bregman divergence. Consider the case that $\psi$ is twice differentiable in an open ball $\mathcal{B}$ around $\by$ and $\bx \in \mathcal{B}$. So, by Taylor's theorem, there exists $0\leq\alpha\leq1$ such that
\[
B_\psi(\bx; \by) = \psi(\bx) - \psi(\by) - \nabla \psi(\by)^\top (\bx - \by) = \frac{1}{2} (\bx-\by)^\top \nabla^2 \psi(\bz) (\bx-\by),
\]
where $\bz=\alpha \bx+(1-\alpha)\by$. Hence, we are using a \emph{squared local norm}\index{norm!local} that depends on the Hessian of $\psi$. \emph{Different areas of the space will have a different value of the Hessian, and so the Bregman will behave differently}. We will use this exact idea in the local norm analyses of \ac{OMD} (Section~\ref{sec:omd_local_norms}) and \ac{FTRL} (Section~\ref{sec:ftrl_local_norms}).

We can also lower bound the Bregman divergence if the function $\psi$ is strongly convex.
In particular, if $\psi$ is $\lambda$-strongly convex\index{function!strongly convex} with respect to a norm $\|\cdot\|$ in $\interior \mathcal{X}$, then we have
\begin{equation}
\label{eq:bregman_strongly_convex}
B_\psi(\bx;\by)\geq \frac{\lambda}{2}\|\bx-\by\|^2, \quad \forall \bx, \by \in \interior \mathcal{X}~.
\end{equation}

\begin{example}
If $\psi(\bx)=\frac{1}{2}\|\bx\|^2_2$, then $B_\psi(\bx;\by)=\frac{1}{2}\|\bx\|^2_2-\frac{1}{2}\|\by\|^2_2-\langle \by,\bx-\by\rangle=\frac{1}{2}\|\bx-\by\|^2_2$.
\end{example}

\begin{example}
\label{example:kl}
Let $\mathcal{X} =\R^d_{\geq0}$ and $\psi(\bx)=\sum_{i=1}^d x_i \ln x_i$, the negative Shannon entropy\index{entropy!negative Shannon}. Then, for all $\bx \in \mathcal{X}$ and $\by \in \interior \mathcal{X}$ we have
\[
B_\psi(\bx;\by)
= \sum_{i=1}^d (x_i \ln x_i - y_i \ln y_i - (\ln(y_i)+1)(x_i-y_i))
= \sum_{i=1}^d \left(x_i \ln \frac{x_i}{y_i} - x_i+y_i\right),
\]
where we define $0 \ln 0:=0$.
This is called the \textbf{generalized Kullback--Leibler divergence}\index{Kullback--Leibler divergence!generalized|textbf}, where ``generalized'' is because $\bx$ and $\by$ do not have to be discrete probability distributions.
\end{example}

We also have the following immediate lemma that links the Bregman divergences between 3 points.
\begin{lemma}[{\citealp{ChenT93}}]
\label{lemma:bregman_3_points}
Let $B_\psi$ be the Bregman divergence with respect to $\psi: \mathcal{X} \to \R$. Then, for any three points $\bx,\by \in \interior \mathcal{X}$ and $\bz \in \mathcal{X}$, the following identity holds
\[
B_\psi(\bz; \bx) + B_\psi(\bx; \by) - B_\psi(\bz; \by) = \langle \nabla \psi(\by) - \nabla \psi(\bx), \bz - \bx\rangle~.
\]
\end{lemma}
\index{Bregman divergence|)textbf}


\section{Online Mirror Descent}

Based on what we said before, we can start from the equivalent formulation of the \ac{OSD} update,
\[
\bx_{t+1} = \argmin_{\bx \in \mathcal{V}} \ \langle \bg_t, \bx\rangle + \frac{1}{2\eta_t}\|\bx_{t}-\bx\|_2^2,
\]
and we can change the last term with another measure of distance. In particular, using the Bregman divergence\index{Bregman divergence} with respect to a function $\psi$, we have
\[
\bx_{t+1} = \argmin_{\bx \in \mathcal{V}} \ \langle \bg_t, \bx\rangle + \frac{1}{\eta_t}B_\psi(\bx;\bx_{t}),
\]
where we assumed that the argmin exists and it is unique.
These two updates are exactly the same when $\psi(\bx)=\frac{1}{2}\|\bx\|_2^2$.

So, we get \textbf{\acl{OMD}} in Algorithm~\ref{alg:omd}.

\begin{algorithm}[t]
\caption{Online Mirror Descent (OMD)}
\label{alg:omd}
\begin{algorithmic}[1]
{
    \REQUIRE{Non-empty closed convex $\mathcal{V} \subseteq \mathcal{X}\subseteq \R^d$, $\psi: \mathcal{X} \to \R$ strictly convex and differentiable on $\interior \mathcal{X}$, $\bx_1 \in \interior \mathcal{X} \cap \mathcal{V}$, $\eta_1,\dots,\eta_T>0$}
    \FOR{$t=1$ {\bfseries to} $T$}
    \STATE{Output $\bx_t \in \mathcal{V}$}
    \STATE{Pay the loss $\ell_t(\bx_t)$, where $\ell_t$ is subdifferentiable on $\mathcal{V}$}
    \STATE{Set $\bg_t \in \partial \ell_t(\bx_t)$}
    \STATE{Set $\bx_{t+1} \in \argmin_{\bx \in \mathcal{V}} \ \langle \bg_t, \bx\rangle + \frac{1}{\eta_t}B_\psi(\bx; \bx_{t})$}
    \ENDFOR
}
\end{algorithmic}
\end{algorithm}

However, without an additional assumption, this algorithm has a problem. Can you see it?
The problem is that $\bx_{t+1}$ might be on the boundary of $\mathcal{V}$, and in the next step, we would have to evaluate $B_\psi(\bx; \bx_{t+1})$ for a point on the boundary of $\mathcal{V}$. Given that $\mathcal{V} \subseteq \mathcal{X}$, we might end up on the boundary of $\mathcal{X}$ where the Bregman divergence\index{Bregman divergence} is not defined!

To fix this problem, different sufficient conditions can be used. In the following, we will use one of the following assumptions:
\begin{align}
&\lim_{\lambda \to 0} \ \langle \nabla \psi((1-\lambda)\bx+\lambda \by), \by-\bx\rangle = -\infty, \quad \forall \bx \in \bdry \mathcal{X}, \forall \by \in \interior \mathcal{X}  \label{eq:cond_omd1}\\
&\mathcal{V} \subseteq \interior \mathcal{X}~. \label{eq:cond_omd2}
\end{align}
If either of these conditions is true and the argmin exists on each round, then the algorithm is well-defined as proved in the following theorem.
\begin{theorem}
\label{thm:omd_well_defined}
Let $B_\psi$ be the Bregman divergence\index{Bregman divergence} with respect to $\psi: \mathcal{X} \to \R$. Let $\mathcal{V} \subseteq \mathcal{X}$ be a non-empty closed convex set. Assume \eqref{eq:cond_omd1} or \eqref{eq:cond_omd2} hold and, with the notation and initialization in Algorithm~\ref{alg:omd}, the argmin exists on all rounds. Then, $\bx_{t+1} \in \interior \mathcal{X}$.
\end{theorem}
\begin{proof}
In the case that \eqref{eq:cond_omd2} holds, we have that $\bx_{t+1} \in \mathcal{V}$ implies immediately that $\bx_{t+1} \in \interior \mathcal{X}$ .

Let's now assume that \eqref{eq:cond_omd1} holds and let's prove it by induction. The base case is true by the definition of $\bx_1$. Let's now assume that $\bx_t \in \interior \mathcal{X}$ and let's prove that $\bx_{t+1} \in \interior \mathcal{X}$. We will prove it by contradiction. So, assume that $\bx_{t+1} \in \bdry \mathcal{X}$.
Set $\bz \in \interior \mathcal{X} \cap \mathcal{V}$ and define $\phi(\lambda)=\langle \eta_t \bg_t, (1-\lambda)\bx_{t+1}+\lambda \bz\rangle + B_\psi((1-\lambda)\bx_{t+1}+\lambda \bz; \bx_{t})$ for $\lambda \in (0,1)$. From \eqref{eq:cond_omd1}, we have that
\begin{align*}
\lim_{\lambda \to 0} \phi'(\lambda)
&= \lim_{\lambda \to 0}  (\langle \eta_t \bg_t, \bz -\bx_{t+1}\rangle + \langle \nabla \psi(\bx_{t+1}+\lambda (\bz -\bx_{t+1}))-\nabla \psi(\bx_t),\bz-\bx_{t+1}\rangle)\\
&= -\infty.
\end{align*}
Hence, there exists $\epsilon>0$ such that
\[
\langle \eta_t \bg_t, \bx_\epsilon\rangle + B_\psi(\bx_\epsilon; \bx_{t})
= \phi(\epsilon)
< \phi(0)
= \langle \eta_t \bg_t, \bx_{t+1}\rangle + B_\psi(\bx_{t+1}; \bx_{t}),
\]
where $\bx_\epsilon:=(1-\epsilon)\bx_{t+1}+\epsilon \bz \in \interior \mathcal{X} \cap \mathcal{V}$. However, this contradicts the definition of $\bx_{t+1}$ as an argmin, proving that $\bx_{t+1}$ must be in $\interior \mathcal{X}$.
\end{proof}
When \eqref{eq:cond_omd1} holds, this theorem implies that the predictions of the algorithm always stay in the interior of $\mathcal{X}$ without the need for any projection. If in addition $\mathcal{V}=\mathcal{X}$, the update of the algorithm is the solution of an unconstrained problem because the feasible set is implicit in the Bregman divergence\index{Bregman divergence}.

Now we have a well-defined algorithm, but does it guarantee a sublinear regret? We know that at least in one case, it recovers the \ac{OSD} algorithm, which does work. So, from an intuitive point of view, how well the algorithm works should depend on some characteristic of $\psi$. In particular, a sufficient property will be the \emph{strong convexity} of $\psi$. The strong convexity also takes care of the existence of the argmin in the algorithm, thanks to the next theorem.
\begin{theorem}
\label{thm:min_strongly_convex}
Let $\lambda>0$ and $f:\R^d\to (-\infty, +\infty]$ proper, closed, and $\lambda$-strongly convex\index{function!strongly convex} with respect to $\|\cdot\|$ on its domain. Assume $\dom \partial f \neq \emptyset$. Then, $f$ has exactly one minimizer.
\end{theorem}
\begin{proof}
Let $\by \in  \dom \partial f$ and $\bg \in \partial f(\by)$.
From Lemma~\ref{lemma:strong_convexity}, for any $\bx \in \R^d$, we have
\begin{align*}
f(\bx)
&\geq f(\by) + \langle \bg, \bx-\by\rangle + \frac{\lambda}{2}\|\bx-\by\|^2 \\
&\geq f(\by) - \|\bg\|_\star \|\bx\|-\langle \bg, \by\rangle + \frac{\lambda}{2}(\|\bx\|-\|\by\|)^2 \\
&= f(\by) - \|\bg\|_\star \|\bx\|-\langle \bg, \by\rangle + \frac{\lambda}{2}(\|\bx\|^2+\|\by\|^2-2 \|\bx\| \|\by\|),
\end{align*}
where in the second inequality we used the reverse triangle inequality and the definition of dual norms.
From the above, we have that $\lim_{\|\bx\|\to \infty} f(\bx) = +\infty$. In turn, this implies that the level sets of $f$ are bounded. From the assumption that $f$ is closed, we get that the level sets are compact. Hence, for any $\by$ in $\dom f$, the minimum of $f$ is the same as the minimum of $f$ over the set $\{\bx : f(\bx)\leq f(\by)\}$, that is, the minimum over a compact set, that exists by the Weierstrass theorem, Theorem~\ref{thm:weierstrass}\index{Weierstrass theorem for extended-real-valued functions}. The uniqueness is given by the fact that strongly convex functions are strictly convex\index{function!strictly convex}.
\end{proof}

\index{Online Mirror Descent algorithm!one-step lemma|(}
To analyze \ac{OMD}, we first prove a one-step relationship, similar to the ones we proved for \ac{OGD} and \ac{OSD}.
Note how in this lemma, we will use a lot of the concepts we introduced till now: strong convexity, dual norms, subgradients, etc. In a way, over the past sections, I slowly prepared you to be able to prove this lemma.
\begin{lemma}
\label{lemma:omd_one_step}
Let $B_\psi$ be the Bregman divergence\index{Bregman divergence} with respect to $\psi: \mathcal{X} \to \R$ and assume $\psi$ to be proper, closed, and $\lambda$-strongly convex with respect to $\|\cdot\|$ in $\mathcal{V}$, where $\mathcal{V} \subseteq \mathcal{X}$ is a non-empty closed convex set. Assume \eqref{eq:cond_omd1} or \eqref{eq:cond_omd2} hold. Then, with the notation and initialization in Algorithm~\ref{alg:omd}, for all $t$ we have that $\bx_{t+1}$ exists, it is unique, and it is in the interior of $\mathcal{X}$. Moreover, $\forall \bu \in \mathcal{V}$, the following inequality holds
\begin{align*}
\eta_t (\ell_t(\bx_t) - \ell_t(\bu) )
&\leq \eta_t \langle \bg_t, \bx_t -\bu\rangle\\
&\leq B_\psi(\bu;\bx_t) - B_\psi(\bu;\bx_{t+1}) - B_\psi(\bx_{t+1};\bx_t) + \langle \eta_t \bg_t, \bx_t - \bx_{t+1} \rangle \\
&\leq B_\psi(\bu;\bx_t) - B_\psi(\bu;\bx_{t+1}) + \frac{\eta_t^2}{2\lambda}\|\bg_t\|_\star^2~.
\end{align*}
\end{lemma}
\begin{proof}
First of all, in each round $\bx_{t+1}$ exists using Theorem~\ref{thm:min_strongly_convex} and the fact that $B_{\psi}(\cdot; \bx_t)$ is proper, closed, and strongly convex. Moreover, from Theorem~\ref{thm:omd_well_defined}, $\bx_t \in \interior \mathcal{X}$ for all $t$.

Now, from the optimality condition of Theorem~\ref{thm:constr_opt_condition} for the update of \ac{OMD}, we have
\begin{equation}
\label{eq:proof_regret_md_eq1}
\langle \eta_t \bg_t + \nabla \psi (\bx_{t+1})-\nabla \psi(\bx_t), \bu - \bx_{t+1}\rangle \geq 0, \quad \forall \bu \in \mathcal{V}~.
\end{equation}
Hence, we have that
\begin{align*}
\langle \eta_t \bg_t, \bx_t - \bu\rangle
&\leq \langle \nabla \psi(\bx_{t+1}) - \nabla \psi(\bx_t), \bu - \bx_{t+1} \rangle + \langle \eta_t \bg_t, \bx_t - \bx_{t+1} \rangle \\
&= B_\psi(\bu;\bx_t) - B_\psi(\bu;\bx_{t+1}) - B_\psi(\bx_{t+1};\bx_t) + \langle \eta_t \bg_t, \bx_t - \bx_{t+1} \rangle \\
&\leq B_\psi(\bu;\bx_t) - B_\psi(\bu;\bx_{t+1}) - \frac{\lambda}{2}\|\bx_t - \bx_{t+1} \|^2  + \eta_t\|\bg_t\|_\star \|\bx_t - \bx_{t+1} \|\\
&\leq B_\psi(\bu;\bx_t) - B_\psi(\bu;\bx_{t+1}) + \frac{\eta_t^2}{2\lambda}\|\bg_t\|_\star^2,
\end{align*}
where in the first inequality we used \eqref{eq:proof_regret_md_eq1}, in the equality we used Lemma~\ref{lemma:bregman_3_points}, in the second inequality we used the definition of dual norm, \eqref{eq:bregman_strongly_convex} (because $\psi$ is $\lambda$-strong convex with respect to $\|\cdot\|$), and Cauchy--Schwarz inequality, finally in the last inequality we used the fact that $a x - \frac{b}{2} x^2 \leq \frac{a^2}{2b}$ for all $x \in \R$ and $a,b>0$.

The lower bound with the function values is due to the definition of subgradients.
\end{proof}
\index{Online Mirror Descent algorithm!one-step lemma|)}

We now see how to use this one-step relationship to prove a regret bound that will finally show us if and when this entire construction is a good idea.

We can now prove a regret bound for \ac{OMD}.
\begin{theorem}
\label{thm:md_online}
Assume $0 < \eta_{t+1}\leq \eta_{t}, \ t=1, \dots, T-1$. Then, under the assumptions of Lemma~\ref{lemma:omd_one_step} and for all $\bu \in \mathcal{V}$, the following regret bound holds
\[
\sum_{t=1}^T (\ell_t(\bx_t)- \ell_t(\bu))
\leq \max_{1\leq t \leq T} \frac{B_\psi(\bu;\bx_t)}{\eta_{T}} + \frac{1}{2\lambda}\sum_{t=1}^T \eta_t \|\bg_t\|^2_\star~.
\]
Moreover, if $\eta_t$ is constant, i.e., $\eta_t=\eta \ \forall t=1,\dots,T$, we also have
\[
\sum_{t=1}^T (\ell_t(\bx_t)- \ell_t(\bu))
\leq \frac{B_\psi(\bu;\bx_1)}{\eta} + \frac{\eta}{2\lambda}\sum_{t=1}^T \|\bg_t\|^2_\star~.
\]
\end{theorem}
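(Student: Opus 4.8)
The plan is to mirror the proof of Theorem~\ref{thm:pogd}, using the one-step inequality of Lemma~\ref{lemma:omd_one_step} as the engine. First I would divide that inequality through by $\eta_t>0$ to obtain
\[
\ell_t(\bx_t) - \ell_t(\bu) \leq \frac{1}{\eta_t}B_\psi(\bu;\bx_t) - \frac{1}{\eta_t}B_\psi(\bu;\bx_{t+1}) + \frac{\eta_t}{2\lambda}\|\bg_t\|^2_\star,
\]
and then sum over $t=1,\dots,T$. The gradient-norm terms immediately assemble into $\frac{1}{2\lambda}\sum_{t=1}^T \eta_t \|\bg_t\|^2_\star$, so the entire problem reduces to controlling the ``divergence'' sum $\sum_{t=1}^T \bigl(\tfrac{1}{\eta_t}B_\psi(\bu;\bx_t) - \tfrac{1}{\eta_t}B_\psi(\bu;\bx_{t+1})\bigr)$.

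The constant-stepsize case is the easy one: with $\eta_t=\eta$ the divergence sum telescopes to $\tfrac{1}{\eta}\bigl(B_\psi(\bu;\bx_1) - B_\psi(\bu;\bx_{T+1})\bigr)$, and since $B_\psi(\bu;\bx_{T+1})\geq 0$ (the Bregman divergence is nonnegative, as noted after its definition) this is bounded by $\tfrac{B_\psi(\bu;\bx_1)}{\eta}$, yielding the second claimed bound.

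For the time-varying case the sum no longer telescopes cleanly, and this is the step I expect to be the main obstacle. I would reindex the second half, writing the divergence sum as
\[
\frac{B_\psi(\bu;\bx_1)}{\eta_1} + \sum_{t=2}^T B_\psi(\bu;\bx_t)\left(\frac{1}{\eta_t} - \frac{1}{\eta_{t-1}}\right) - \frac{B_\psi(\bu;\bx_{T+1})}{\eta_T}.
\]
Dropping the last nonnegative term, and then using the hypothesis $\eta_{t+1}\leq\eta_t$ (so each coefficient $\tfrac{1}{\eta_t}-\tfrac{1}{\eta_{t-1}}\geq 0$), I would upper bound every $B_\psi(\bu;\bx_t)$ by $M:=\max_{1\leq t\leq T}B_\psi(\bu;\bx_t)$. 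Pulling $M$ out, the bracketed coefficients telescope:
\[
M\left(\frac{1}{\eta_1} + \sum_{t=2}^T\left(\frac{1}{\eta_t}-\frac{1}{\eta_{t-1}}\right)\right) = \frac{M}{\eta_T}.
\]
Combining with the gradient term gives the first bound. The key subtlety—and the reason a general Bregman divergence forces a $\max_t B_\psi(\bu;\bx_t)$ rather than a fixed diameter $D^2$ as in Theorem~\ref{thm:pogd}—is that $B_\psi$ need not be uniformly bounded over $V$, so the crude coefficientwise bound by $M$ is exactly what makes the nonnegativity of the increments $\tfrac{1}{\eta_t}-\tfrac{1}{\eta_{t-1}}$ indispensable.
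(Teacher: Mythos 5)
Your proposal is correct and follows essentially the same route as the paper: divide the one-step inequality of Lemma~\ref{lemma:omd_one_step} by $\eta_t$, sum, reindex the divergence terms into an initial term plus increments weighted by $\tfrac{1}{\eta_{t}}-\tfrac{1}{\eta_{t-1}}$, use monotonicity of $\eta_t$ to bound each $B_\psi(\bu;\bx_t)$ by the maximum, and telescope. The paper even uses the same notation ($D^2=\max_{1\leq t\leq T}B_\psi(\bu;\bx_t)$) and leaves the constant-stepsize case as an exercise, which you handle correctly by direct telescoping and nonnegativity of $B_\psi(\bu;\bx_{T+1})$.
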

\begin{proof}
Fix $\bu \in \mathcal{V}$.
As in the proof of \ac{OGD}, dividing the inequality in Lemma~\ref{lemma:omd_one_step} by $\eta_t$ and summing from $t=1,\dots,T$, we get
\begin{align*}
\sum_{t=1}^T &(\ell_t(\bx_t) - \ell_t(\bu))
\leq \sum_{t=1}^T \left(\frac{B_\psi(\bu;\bx_t)}{\eta_t} - \frac{B_\psi(\bu;\bx_{t+1})}{\eta_t}\right) + \sum_{t=1}^T \frac{\eta_t \|\bg_t\|^2_\star}{2\lambda}  \\
&= \frac{B_\psi(\bu;\bx_{1})}{\eta_1} - \frac{B_\psi(\bu;\bx_{T+1})}{\eta_T}   + \sum_{t=1}^{T-1} \left(\frac{1}{\eta_{t+1}}-\frac{1}{\eta_t}\right)B_\psi(\bu;\bx_{t+1})  + \sum_{t=1}^T \frac{\eta_t \|\bg_t\|^2_\star}{2\lambda}  \\
&\leq \frac{D^2}{\eta_1} + D^2 \sum_{t=1}^{T-1} \left(\frac{1}{\eta_{t+1}}-\frac{1}{\eta_{t}}\right) + \sum_{t=1}^T \frac{\eta_t \|\bg_t\|^2_\star}{2\lambda}  \\
&= \frac{D^2}{\eta_1}  + D^2 \left(\frac{1}{\eta_{T}}-\frac{1}{\eta_1}\right) + \sum_{t=1}^T \frac{\eta_t \|\bg_t\|^2_\star}{2\lambda}
= \frac{D^2}{\eta_{T}} + \sum_{t=1}^T \frac{\eta_t \|\bg_t\|^2_\star}{2\lambda},
\end{align*}
where we denoted by $D^2=\max_{1\leq t\leq T} B_\psi(\bu;\bx_t)$.

The second statement is left as an exercise.
\end{proof}

In words, \ac{OMD} allows us to prove regret guarantees that depend on an arbitrary pairs of dual norms $\|\cdot\|$ and $\|\cdot\|_\star$. In particular, the primal norm will be used to measure the feasible set $\mathcal{V}$ or the distance between the competitor and the initial point, and the dual norm will be used to measure the gradients. If you happen to know something about these quantities, we can choose the most appropriate pair of norms to guarantee a small regret. The only thing you need is a function $\psi$ that is strongly convex with respect to the primal norm you have chosen.


Overall, the regret bound is still of the order of $\sqrt{T}$ for Lipschitz functions, and the only difference is that now the Lipschitz constant is measured with respect to a different norm.
Also, everything we did for \ac{OSD} can be trivially used here. For example, we can slightly generalize the stepsizes we saw in Section~\ref{sec:lstar}. Assuming $D^2=\max_{\bx, \by\in \mathcal{V}} \ B_\psi(\bx;\by)<\infty$, set
\[
\eta_t=\frac{D \sqrt{\lambda}}{\sqrt{\sum_{i=1}^t \|\bg_i\|_\star^2}}
\]
to achieve a regret upper bound of $2\frac{D}{\sqrt{\lambda}} \sqrt{\sum_{t=1}^T \|\bg_t\|^2_\star}$.

%

In Sections~\ref{sec:omd_eg} and \ref{sec:omd_pnorm}, we will see practical examples of \ac{OMD} that guarantee strictly better regret than \ac{OSD}. As we did in the case of AdaGrad, the better guarantee will depend on the shape of the domain and the characteristics of the subgradients.

Next, we see the meaning of the ``Mirror'', but first we need another mathematical tool: \emph{Fenchel conjugates}.

\subsection{Convex Analysis Bits: Fenchel Conjugate}
\label{sec:fenchel}

\begin{definition}
\index{Fenchel conjugate|(textbf}
For a function $f: \R^d\to [-\infty,\infty]$, we define the \textbf{Fenchel conjugate} $f^\star:\R^d \to [-\infty,\infty]$ as
\[
f^\star(\btheta) = \sup_{\bx \in \R^d} \ \langle \btheta, \bx\rangle - f(\bx)~.
\]
\end{definition}
From the definition we immediately obtain the \textbf{Fenchel--Young's inequality}\index{inequality!Fenchel--Young's|textbf} for proper functions:
\[
\langle \btheta, \bx \rangle \leq f(\bx)+f^\star(\btheta), \quad  \forall \bx, \btheta \in \R^d~.
\]
Moreover, $f^\star$ is always convex and closed, regardless of the convexity of $f$~\citep[Proposition 13.13]{BauschkeC17}.

We have the following useful properties for the Fenchel conjugate.
\begin{theorem}[{\citealp[Theorem 12.2]{Rockafellar70}}]
\label{thm:dual_of_dual}
Let $f$ be a convex function. Then, $f^\star$ is a closed convex function, proper\index{function!proper} iff $f$ is proper. Moreover, if $f$ is also closed then $f^{\star\star}=f$.
\end{theorem}

\begin{theorem}
\label{thm:props_fenchel}
Let $f:\R^d \to (-\infty,+\infty]$ be proper. Then, the following conditions are equivalent:
\begin{enumerate}[(a)]
\item $\btheta \in \partial f(\bx)$.
\item $\langle \btheta, \by\rangle - f(\by)$ achieves its supremum in $\by$ at $\by=\bx$.
\item $f(\bx)+f^\star(\btheta)=\langle \btheta,\bx\rangle$.
\end{enumerate}
Moreover, if $f$ is also convex and closed, we have an additional equivalent condition
\begin{enumerate}[(a)]
\setcounter{enumi}{3}
\item $\bx \in \partial f^\star(\btheta)$.
\end{enumerate}
\end{theorem}
\begin{proof}
Let's prove (a)$\Leftrightarrow$(b).
From the definition of subgradient, we immediately have
\[
\langle \btheta, \bx\rangle - f(\bx)
\geq \langle \btheta, \by\rangle - f(\by), \quad \forall \by~.
\]
Then, (b)$\Leftrightarrow$(c) by definition of $f^\star(\btheta)$.

If $f$ is also convex and closed, then $f^{\star\star}=f$ is proper by Theorem~\ref{thm:dual_of_dual}. Hence, (c) is equivalent to $f^{\star \star}(\bx)+ f^\star(\btheta)=\langle \btheta, \bx\rangle$, which is equivalent to (d) by following the same reasoning as above.
\end{proof}


\begin{figure}[t]
\centering
\begin{tikzpicture}
\clip (-3,-1) rectangle (3,3.6);

\draw[thick, domain=-2.5:2.5] plot (\x, {\x*\x/2});

\draw[gray, dashed] (-3,{1*(-3)-(1*1)/2}) -- (3,{1*(3)-(1*1)/2});

\draw[->] (-3,0) -- (3,0) node[right] {$x$};
\draw[->] (0,-1) -- (0,3.6) node[above] {$y$};

\filldraw[black] (0,-1/2) circle (2pt);

\node[left] at (0,-1/2) {$-f^\star(\bg_{\bx})$};

\draw[black, dotted] (1,0.5) -- (1,0);

\node[below] at (1,0) {$\bx$};

\draw[black, dotted] (1,0.5) -- (0,0.5);

\node[left] at (0,0.5) {$f(\bx)$};
\end{tikzpicture}
\hspace{.5cm}
\begin{tikzpicture}

\clip (-3,-1) rectangle (3,3.6);

\draw[thick, domain=-2.5:2.5] plot (\x, {\x*\x/2});

\foreach \x in {-2.5,-2.25,...,2.5} {
    \draw[gray, dashed] (-3,{\x*(-3)-(\x*\x)/2}) -- (3,{\x*(3)-(\x*\x)/2});
}

\draw[->] (-3,0) -- (3,0) node[right] {$x$};
\draw[->] (0,-1) -- (0,3.6) node[above] {$y$};
\end{tikzpicture}
\caption{A geometric interpretation of the Fenchel conjugate.}
\label{fig:fenchel}
\commentAlt{Figure~\ref{fig:fenchel}. Geometric view of the Fenchel conjugate for a parabola. The left panel shows one tangent line touching the parabola at x, with vertical and horizontal guides marking f(x) and -f^*(g_x). The right panel shows many tangent lines to the same parabola.}
\end{figure}

In the following, we will not need a geometric intuition of the concept of Fenchel conjugates in order to use them. However, for some people, geometric intuitions help them remember better, so let's briefly discuss it.
Let $f$ be convex, closed, and proper. Let $\bx \in \dom \partial f$ and use a subgradient $\bg_{\bx} \in \partial f(\bx)$ to construct a linear lower bound around $\bx$ passing through $f(\bx)$ as
\[
\tilde{f}(\by):=f(\bx)+\langle \bg_{\bx}, \by-\bx\rangle~.
\]
Moreover, from Theorem~\ref{thm:props_fenchel}(c), we have
\[
\tilde{f}(\boldsymbol{0})
= f(\bx) - \langle \bg_{\bx},\bx\rangle
= - f^\star(\bg_{\bx})~.
\]
Hence, we have that $-f^\star(\bg_{\bx})$ is the value of $\tilde{f}$ at $\by=\boldsymbol{0}$, see Figure~\ref{fig:fenchel} (left).
This is sometimes mentioned as a geometric intuition for Fenchel conjugates, but I do not find it very illuminating, so let's dig deeper.

We can show that we can recover $f$ as the point-wise maximum of all its tangents, see Figure~\ref{fig:fenchel} (right). In turn, the tangents can be expressed using only the knowledge of $f^\star$.
Let's consider the family of linear functions $\tilde{f}_{\bg}(\by)=-f^\star(\bg)+\langle \bg, \by\rangle$ parametrized by a generic $\bg$. By Fenchel--Young's inequality\index{inequality!Fenchel--Young's}, we immediately have $\tilde{f}_{\bg}(\by)\leq f(\by)$. Moreover, from Theorem~\ref{thm:props_fenchel}(c), if $\bg_{\by} \in \partial f(\by)$, then $\tilde{f}_{\bg_{\by}}(\by)=f(\by)$. This means that this family of functions lower bounds $f$ and it contains the tangents to $f$.
Now, observe that for any $\by$ we obtain
\[
\sup_{\bg \in \R^d} \ \tilde{f}_{\bg}(\by)
= \sup_{\bg \in \R^d} \ -f^\star(\bg)+\langle \bg, \by\rangle
= f^{\star \star}(\by)
= f(\by),
\]
where in last equality we used Theorem~\ref{thm:dual_of_dual}.
So, the supremum over this family of linear functions is equal to the function $f$.
Overall, this means that the Fenchel conjugate allows us to reason about functions using their tangent hyperplanes, whose information is in the Fenchel conjugate, without losing anything.

Another way to quantify the above is the fact that the domain of $f^\star$ contains all the possible subgradients of $f$.
\begin{corollary}
Let $f: \R^d \to (-\infty, +\infty]$ be proper. Then, $\{\bg \in \partial f(\bx): \bx \in \R^d\} \subseteq \dom f^\star$.
\end{corollary}
\begin{proof}
From Theorem~\ref{thm:props_fenchel} (a) $\Leftrightarrow$ (c) and the fact that $\dom \partial f \subseteq \dom f$, we have
\[
\bg \in \partial f(\bx)
\Leftrightarrow f(\bx) + f^\star(\bg) = \langle \bg, \bx\rangle
\Rightarrow f^\star(\bg) <+\infty~. \qedhere
\]
\end{proof}

Note that there are cases where the domain of $f^\star$ contains vectors that are not subgradients, as shown in the next example.
\begin{example}
Let $f(x)=\exp(x)$, hence we have $f^\star(\theta)=\sup_x \ x\theta - \exp(x)$. Solving it, we have $x^\star=\ln(\theta)$ if $\theta>0$. Hence, $f^\star(\theta)=\begin{cases}\theta \ln \theta - \theta, & \text{ if } \theta>0\\ 0, & \text{ if } \theta=0\\ +\infty, & \text{ if }\theta<0\end{cases}$.
So, $f^\star(0)=\sup_{x} \ -\exp(x) = 0 <\infty$, but $0$ is not a derivative of $f$.
\end{example}

\begin{example}[Conjugate of the inner product]
Let $f(\bx)=\langle \bz, \bx\rangle$ where $\bz \neq \boldsymbol{0} \in \R^d$. Then
\[
f^\star(\btheta)
=\sup_{\bx \in \R^d}\ \langle \btheta-\bz,\bx\rangle
= \begin{cases}
0, & \btheta=\bz\\
+\infty, & \text{otherwise.}
\end{cases}
\]
\end{example}

\begin{example}[Conjugate of the hinge loss]
Let $f(\bx)=\max(1-\langle \bz,\bx\rangle,0)$\index{hinge loss!conjugate} where $\bz \in \R^d \setminus \{\boldsymbol{0}\}$ and let's calculate $f^\star(\btheta)$. If $\btheta$ has a component orthogonal to $\bz$, I can choose $\bx$ along that component, and the supremum in the definition of $f^\star$ is $+\infty$. Hence, let's consider the case that $\btheta=\alpha \bz$. In this case, we have
\[
f^\star(\btheta)
= \sup_{\bx} \ \alpha\langle \bz,\bx\rangle - \max(1-\langle \bz,\bx\rangle,0)
= \sup_{u} \ \alpha u - \max(1-u,0)~.
\]
If $\alpha>0$ or $\alpha<-1$, again the supremum is $+\infty$. Hence, we only need to consider the case that $-1 \leq \alpha\leq 0$. From a case analysis on $u$, it is easy to see that in this case the supremum is attained in $u=1$. Putting everything together, we have
\[
f^\star(\btheta)
= \begin{cases}
\alpha, & \text{ if } \btheta = \alpha \bz, \alpha \in [-1,0],\\
+\infty, & \text{ otherwise.}
\end{cases}
\]
\end{example}

\begin{example}[Conjugate of squared norms]
\label{example:conj_squared_norm}
Consider the function $f(\bx) = \frac{1}{2}\|\bx\|^2$, where $\|\cdot\|$ is a norm in $\R^d$, with dual norm $\|\cdot\|_\star$. We can show that its conjugate is $f^\star(\btheta)=\frac{1}{2}\|\btheta\|^2_\star$. Let's see how. First, we have
\[
\langle \btheta , \bx \rangle - \frac{1}{2}\|\bx\|^2
\leq \| \btheta\|_\star \|\bx\| - \frac{1}{2}\|\bx\|^2
\]
for all $\bx$. The r.h.s. is a quadratic function of $\|\bx\|$, which has maximum value $\frac{1}{2}\|\btheta\|_\star^2$. Therefore for all $\bx$, we have
\[
\langle \btheta , \bx \rangle - \frac{1}{2}\|\bx\|^2
\leq \frac{1}{2}\|\btheta\|_\star^2,
\]
which shows that $f^\star(\btheta) \leq \frac{1}{2}\|\btheta\|^2_\star$. To show the other inequality, by Lemma~\ref{lemma:dual_norm_attained}, let $\bx$ be any vector with $\langle \btheta, \bx \rangle = \|\btheta\|_\star \|\bx\|$ and $\|\bx\| = \|\btheta\|_\star$. Then, for this $\bx$, we have
\[
\langle \btheta , \bx \rangle - \frac{1}{2}\|\bx\|^2
= \frac{1}{2}\|\btheta\|_\star^2,
\]
which shows that $f^\star(\btheta) \geq \frac{1}{2}\|\btheta\|^2_\star$.
\end{example}

\begin{example}[Young's inequality]
\label{example:young}
\index{inequality!Young's|(textbf}
Let $p>1$ and $f:\R_{\geq0}\to \R_{\geq0}$ defined as $f(x)=\frac1p x^p$. Let's calculate the conjugate. We have
\[
f^\star(\theta)
=\sup_{x\geq 0} \ \theta x - \frac{1}{p}x^p~.
\]
If $\theta< 0$, then the supremum is 0.
For $\theta\geq 0$, by differentiation, we have that $x^\star = \theta^\frac{1}{p-1}$. So, we have
\[
f^\star(\theta)
=
\begin{cases}
0, & \text{if } \theta<0,\\
\frac{p-1}{p} \theta^\frac{p}{p-1}, & \text{if }\theta\geq 0~.
\end{cases}
\]
Denoting by $q>1$ the positive number such that $\frac{1}{p}+\frac{1}{q}=1$, we can rewrite it as
\[
f^\star(\theta)
= \begin{cases}
0, & \text{if } \theta<0,\\
\frac{1}{q} \theta^q, & \text{if }\theta\geq 0~.
\end{cases}
\]
Using the Fenchel--Young's inequality\index{inequality!Fenchel--Young's}, for $x,y\geq 0$ and $p,q>1$ such that $1/p+1/q=1$, we have
\[
x y \leq \frac{1}{p}x^p + \frac{1}{q}y^q,
\]
that is called \emph{Young's inequality}.
\index{inequality!Young's|)textbf}
\end{example}

\begin{example}[Conjugate of norms]
\label{example:conj_norm}
Consider the function $f(\bx) = \|\bx\|$, where $\|\cdot\|$ is a norm in $\R^d$, with dual norm $\|\cdot\|_\star$. Then, we have
\[
f^\star(\btheta)
=\sup_{\bx \in \R^d} \ \langle \btheta , \bx \rangle - \|\bx\|
\leq \sup_{\bx \in \R^d} \ \|\btheta\|_\star \|\bx\| - \|\bx\|
=\begin{cases}
0, & \|\btheta\|_\star\leq 1,\\
+\infty, & \|\btheta\|_\star> 1~.
\end{cases}
\]
To show the other inequality, by Lemma~\ref{lemma:dual_norm_attained}, let $\bx$ be any vector with $\langle \btheta, \bx \rangle = \|\btheta\|_\star \|\bx\|$ and $\|\bx\| = \alpha$. Then, we have
\[
f^\star(\btheta)
= \sup_{\bx \in \R^d} \langle \btheta , \bx \rangle - \|\bx\|
\geq \sup_{\alpha \in\R_{\geq 0}} \alpha\|\btheta\|_\star - \alpha
= \begin{cases}
0, & \|\btheta\|_\star\leq 1,\\
+\infty, & \|\btheta\|_\star> 1~.
\end{cases}
\]
\end{example}

\begin{lemma}
\label{lemma:scaling_conj}
Let $f$ be a function and let $f^\star$ be its Fenchel conjugate. For $a > 0$ and $b \in \R$, the Fenchel conjugate of $h(\bx) = a f(\bx) + b +\langle \bg, \bx\rangle$ is $h^\star(\btheta) = a f^\star((\btheta-\bg)/a) - b$.
\end{lemma}
\begin{proof}
From the definition of conjugate function, we have
\begin{align*}
h^\star(\btheta)
&= \sup_{\bx \in \R^d} \ \langle \btheta-\bg, \bx\rangle - a f(\bx) - b
= -b + a \sup_{\bx \in \R^d} \ \left\langle \frac{\btheta-\bg}{a}, \bx\right\rangle - f(\bx)\\
&= -b + a f^\star\left(\frac{\btheta-\bg}{a}\right)~. \qedhere
\end{align*}
\end{proof}

\begin{lemma}
\label{lemma:inversion_inequality_conjugate}
Let $f_1$ and $f_2$ such that $f_1(\bx) \leq f_2(\bx)$ for all $\bx$. Then, $f_1^\star(\btheta) \geq f^\star_2(\btheta)$ for all $\btheta$.
\end{lemma}
\begin{proof}
\[
f_2^\star(\btheta)
= \sup_{\bx} \ \langle \btheta, \bx\rangle - f_2(\bx)
\leq \sup_{\bx} \ \langle \btheta, \bx\rangle - f_1(\bx)
= f_1^\star(\btheta)~. \qedhere
\]
\end{proof}

\begin{lemma}[{\citealp[Example 13.8]{BauschkeC17}}]
Let $f:\R \to (-\infty, +\infty]$ be even, i.e., $f(x)=f(-x)$. Then $(f \circ \|\cdot\|_2)^\star= f^\star \circ \|\cdot\|_2$.
\end{lemma}

\begin{remark}
There is a tight connection between the dual function in convex optimization and the Fenchel conjugate.
Indeed, let $f:\R^d\to (-\infty,+\infty]$ and consider the constrained optimization problem
\begin{align*}
\min_{\bx}  &\quad f(\bx)\\
\text{s.t.} & \quad \bA \bx \leq \bb\\
            & \quad \bC \bx = \bd,
\end{align*}
where the inequality is component-wise and $\bA, \bC$ are matrices.
The \emph{dual} function is defined as
\begin{align*}
g(\blambda, \bnu)
&= \inf_{\bx} \ (f(\bx)+\blambda^\top (\bA \bx -\bb) +\bnu^\top (\bC \bx-\bd)) \\
&= -\bb^\top \blambda - \bd^\top \bnu + \inf_{\bx} \ (f(\bx)+(\bA^\top \blambda+\bC^\top \bnu)^\top \bx)\\
&= -\bb^\top \blambda - \bd^\top \bnu - f^\star(-\bA^\top \blambda - \bC^\top \bnu)~.
\end{align*}
\end{remark}

For closed, convex, and proper functions, Theorem~\ref{thm:props_fenchel} implies that $\bx \in \partial f^\star(\btheta)$ iff $\btheta \in \partial f(\bx)$, that in words means that $(\partial f)^{-1} = \partial f^\star$ in the sense of multivalued mappings. Now, we show that for strongly convex functions, the Fenchel conjugate is smooth and hence differentiable.
\begin{theorem}[Duality Strong Convexity/Smoothness]
\label{thm:prop_fenchel_sc}
\index{duality strong convexity-smoothness|(textbf}
Let $f:\R^d \to (-\infty, +\infty]$ be a proper, closed, convex function, and $\dom \partial f$ be non-empty. Then, $f$ is $\lambda>0$ strongly convex\index{function!strongly convex} with respect to $\|\cdot\|$ on $\dom \partial f$ iff $f^\star$ is $\frac{1}{\lambda}$-smooth\index{function!smooth} with respect to $\|\cdot\|_\star$ on $\R^d$.
\end{theorem}
\begin{proof}
Let's first prove the implication from left to right.
First, let's show that $f^\star$ is differentiable.
Since $f$ is proper, closed, and strongly convex, the maximizer of $\max_{\bx} \langle \btheta, \bx\rangle -f(\bx)$ exists, and it is unique by Theorem~\ref{thm:min_strongly_convex}.
Denote by $\bx^\star$ the argmax of this expression. Hence, from Theorem~\ref{thm:props_fenchel}, we have $\bx^\star \in \partial f^\star(\btheta)$. Let's now show that this is the only element in the subdifferential.
Assume there exists $\bx' \in \partial f^\star(\btheta)$, then from Theorem~\ref{thm:props_fenchel}, we have $f^\star(\btheta) = \langle\btheta,\bx'\rangle-f(\bx')$ but from the uniqueness of the maximizer we have that $\bx^\star=\bx'$.

Now, let's prove that the gradient of $f^\star$ is $\frac{1}{\lambda}$-Lipschitz with respect to $\|\cdot\|_\star$. For any $\btheta_1$ and $\btheta_2$, set $\bx_1 = \nabla f^\star(\btheta_1)$ and $\bx_2 = \nabla f^\star(\btheta_2)$. From Theorem~\ref{thm:props_fenchel}, we have that $\btheta_1 \in \partial f(\bx_1)$ and $\btheta_2 \in \partial f(\bx_2)$.
Hence, by Lemma~\ref{lemma:strong_convexity}, we have
\begin{align*}
f(\bx_2)&\geq f(\bx_1)+\langle \btheta_1, \bx_2-\bx_1\rangle + \frac{\lambda}{2}\|\bx_1-\bx_2\|^2,\\
f(\bx_1)&\geq f(\bx_2)+\langle \btheta_2, \bx_1-\bx_2\rangle + \frac{\lambda}{2}\|\bx_1-\bx_2\|^2~.
\end{align*}
Summing these two inequalities, we have
\[
\|\btheta_1-\btheta_2\|_\star \|\bx_1-\bx_2\|
\geq \langle \btheta_2-\btheta_1, \bx_1-\bx_2\rangle
\geq \lambda \|\bx_1-\bx_2\|^2,
\]
where in the first inequality we used the definition of dual norms. Solving the inequality, we get that
\[
\|\btheta_1-\btheta_2\|_\star
\geq \lambda \|\bx_1-\bx_2\|
= \lambda \|\nabla f^\star(\btheta_1)-\nabla f^\star(\btheta_2)\|~.
\]

Let's now prove the other direction.
Assume that $f^\star$ is $\frac{1}{\lambda}$-smooth with respect to $\|\cdot\|_\star$ on $\R^d$.
Set $\by \in \dom \partial f$ and $\bu \in \partial f(\by)$. Hence, by Theorem~\ref{thm:props_fenchel} and the differentiability of $f^\star$, we also have $\by = \nabla f^\star(\bu)$.
Define $\phi(\btheta):=f^\star(\btheta+\bu)-f^\star(\bu)-\langle \btheta,\nabla f^\star(\bu)\rangle$. From the $\frac{1}{\lambda}$-smoothness and Lemma~\ref{lemma:smooth_quadratic_upper_lower_bound}, we have $\phi(\btheta)\leq \frac{1}{2\lambda}\|\btheta\|^2_\star$. From Lemma~\ref{lemma:inversion_inequality_conjugate} and Example~\ref{example:conj_squared_norm}, we have that $\phi^\star(\bx) \geq \frac{\lambda}{2}\|\bx\|^2$.
Let's now calculate $\phi^\star(\bx)$.
\begin{align*}
\phi^\star(\bx)
&= \sup_{\btheta} \ \langle \btheta, \bx\rangle - f^\star(\btheta+\bu)+f^\star(\bu)+\langle \btheta, \nabla f^\star(\bu)\rangle \\
&= f^\star(\bu) - \langle \bu, \bx + \nabla f^\star(\bu)\rangle + \sup_{\bv} \ \langle \bv, \bx+\nabla f^\star(\bu)\rangle - f^\star(\bv) \\
&= f^\star(\bu) - \langle \bu, \bx + \nabla f^\star(\bu)\rangle + f(\bx+\nabla f^\star(\bu)) \\
&= - \langle \bu, \bx \rangle  - f(\nabla f^\star(\bu)) + f(\bx+\nabla f^\star(\bu)),
\end{align*}
where we used Theorem~\ref{thm:dual_of_dual} in the third equality and Theorem~\ref{thm:props_fenchel} in the last one.
Putting everything together, we have
\[
f(\bx+\by) - f(\by) - \langle \bu, \bx\rangle \geq \frac{\lambda}{2}\|\bx\|^2, \quad \forall \bu \in \partial f(\by)~. \qedhere
\]
\index{duality strong convexity-smoothness|)textbf}
\end{proof}



We will also use the following theorem on the first-order optimality condition.
\begin{theorem}
\label{thm:first_order_subdiff}
\index{first-order optimality condition|textbf}
Let $f: \R^d \to (-\infty, +\infty]$ be proper.
Then $\bx^\star \in \argmin_{\bx \in \R^d} \ f(\bx)$ iff $\boldsymbol{0} \in \partial f(\bx^\star)$.
\end{theorem}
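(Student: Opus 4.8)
The plan is to prove this by directly unwinding the definition of subgradient, since the statement is really Fermat's rule rephrased in subgradient language. Recall that $\bg \in \partial f(\bx^\star)$ means precisely
\[
f(\by) \geq f(\bx^\star) + \langle \bg, \by - \bx^\star\rangle, \quad \forall \by \in \R^d~.
\]
First I would specialize this to the case $\bg = \boldsymbol{0}$: the condition $\boldsymbol{0} \in \partial f(\bx^\star)$ reads
\[
f(\by) \geq f(\bx^\star) + \langle \boldsymbol{0}, \by - \bx^\star\rangle = f(\bx^\star), \quad \forall \by \in \R^d~.
\]
The linear term vanishes, so the inequality becomes exactly $f(\by) \geq f(\bx^\star)$ for all $\by$, which is by definition the statement that $\bx^\star$ is a global minimizer of $f$, i.e. $\bx^\star \in \argmin_{\bx \in \R^d} f(\bx)$.

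For the converse direction I would read the same chain backwards: if $\bx^\star \in \argmin_{\bx} f(\bx)$, then $f(\by) \geq f(\bx^\star)$ for every $\by \in \R^d$, and since $\langle \boldsymbol{0}, \by - \bx^\star\rangle = 0$, this is identical to the subgradient inequality for the vector $\boldsymbol{0}$ at $\bx^\star$, hence $\boldsymbol{0} \in \partial f(\bx^\star)$. Both implications are therefore just the same equivalence of inequalities.

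There is no genuine obstacle here — the whole argument is definitional, and the only thing worth remarking is that no convexity or differentiability of $f$ is needed, only that $f$ is proper so that the minimization problem is well posed and the subdifferential is defined. The entire content is that a zero subgradient encodes a global lower bound by the constant $f(\bx^\star)$, which is exactly global optimality.

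\begin{proof}
By the definition of subgradient, $\boldsymbol{0} \in \partial f(\bx^\star)$ holds if and only if
\[
f(\by) \geq f(\bx^\star) + \langle \boldsymbol{0}, \by - \bx^\star\rangle, \quad \forall \by \in \R^d~.
\]
Since $\langle \boldsymbol{0}, \by - \bx^\star\rangle = 0$, this is equivalent to $f(\by) \geq f(\bx^\star)$ for all $\by \in \R^d$, which is exactly the statement that $\bx^\star \in \argmin_{\bx \in \R^d} f(\bx)$.
\end{proof}
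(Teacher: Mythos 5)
Your proof is correct and takes essentially the same approach as the paper: both simply unwind the definition of a subgradient at $\bg=\boldsymbol{0}$ and observe that the resulting inequality $f(\by)\geq f(\bx^\star)$ for all $\by$ is precisely global optimality. Nothing to change.
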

\begin{proof}
We have that
\begin{align*}
\bx^\star \in  \argmin_{\bx \in \R^d} \ f(\bx)
&\Leftrightarrow \forall \by \in \R^d, \ f(\by)\geq f(\bx^\star) =  f(\bx^\star) + \langle \boldsymbol{0}, \by-\bx^\star\rangle\\
&\Leftrightarrow \boldsymbol{0} \in \partial f(\bx^\star)~. \qedhere
\end{align*}
\end{proof}
\index{Fenchel conjugate|)textbf}

\subsection{The ``Mirror'' Interpretation}

Here, we explain the ``mirror'' interpretation of \ac{OMD}, using the following theorem.
\begin{theorem}
Let $B_\psi$ be the Bregman divergence\index{Bregman divergence} with respect to $\psi: \mathcal{X} \to \R$, where $\psi$ is $\lambda>0$ strongly convex and closed. Let $\mathcal{V} \subseteq \mathcal{X}$ be a non-empty closed convex set and $\bx_t \in \mathcal{V}$. Define
\[
\bx_{t+1} = \argmin_{\bx \in \mathcal{V}} \ \langle \bg_t, \bx\rangle + \frac{1}{\eta_t}B_\psi(\bx; \bx_{t}),
\]
and assume $\psi$ to be differentiable in $\bx_t$ and $\bx_{t+1}$.
Then, for any $\bg_t \in \R^d$, we have
\begin{equation}
\label{eq:mirror_update}
\bx_{t+1}
= \nabla \psi_{\mathcal{V}}^\star( \nabla \psi(\bx_t) - \eta_t \bg_t),
\end{equation}
where $\psi_{\mathcal{V}}$ is the restriction of $\psi$ to $\mathcal{V}$, that is, $\psi_{\mathcal{V}}:=\psi + \indicator_{\mathcal{V}}$.
\end{theorem}
\begin{proof}
We have that
\begin{align*}
\bx_{t+1}
&= \argmin_{\bx \in \mathcal{V}} \ \langle \bg_t, \bx\rangle + \frac{1}{\eta_t}B_\psi(\bx; \bx_{t}) \\
&= \argmin_{\bx \in \mathcal{V}} \ \eta_t \langle \bg_t, \bx\rangle + B_\psi(\bx; \bx_{t}) \\
&= \argmin_{\bx \in \mathcal{V}} \ \eta_t \langle \bg_t, \bx\rangle + \psi(\bx) - \psi(\bx_{t}) -\langle \nabla \psi(\bx_t), \bx-\bx_t\rangle \\
&= \argmin_{\bx \in \mathcal{V}} \ \langle \eta_t \bg_t-\nabla \psi(\bx_t), \bx\rangle + \psi(\bx)~.
\end{align*}

Now, we use the first-order optimality condition in Theorem~\ref{thm:first_order_subdiff} to have
\[
\boldsymbol{0} \in \eta_t \bg_t + \nabla \psi (\bx_{t+1}) - \nabla \psi(\bx_t) + \partial \indicator_{\mathcal{V}}(\bx_{t+1}),
\]
that is
\[
\nabla \psi(\bx_t) - \eta_t \bg_t
\in (\nabla \psi + \partial \indicator_{\mathcal{V}})(\bx_{t+1})
\subseteq \partial \psi_{\mathcal{V}}(\bx_{t+1}),
\]
where in the last inclusion we used Theorem~\ref{thm:sum_subgradients}.
Hence, from Theorem~\ref{thm:props_fenchel}, we have
\[
\bx_{t+1} \in \partial \psi^\star_{\mathcal{V}}(\nabla \psi(\bx_t) - \eta_t \bg_t)~.
\]
Using the fact that $\psi_\mathcal{V} :=\psi + \indicator_{\mathcal{V}}$ is $\lambda$-strongly convex, proper, and closed, from Theorem~\ref{thm:prop_fenchel_sc}\index{duality strong convexity-smoothness} we have that $\partial \psi_{\mathcal{V}}^\star=\{\nabla \psi_{\mathcal{V}}^\star\}$.
Hence,
\[
\bx_{t+1} = \nabla \psi_{\mathcal{V}}^\star( \nabla \psi(\bx_t) - \eta_t \bg_t)~. \qedhere
\]
\end{proof}


\begin{figure}
\centering
\begin{tikzpicture}
    \draw[thick] (0,2.5) -- (2.5,0) -- (0,-2.5) -- (-2.5,0) -- cycle;
    \node at (-1.5,1.5) {$\mathcal{V}$}; 

     \draw[thick, rotate around={15:(7,0)}] (7,0) ellipse (3.5 and 2);

    \filldraw (0,1) circle (0pt) node[left] {$\bx_{t}$}; 
    \filldraw (7,1) circle (0pt) node[right] {$\nabla \psi(\bx_t)$}; 
    \draw[->,>=stealth] (0,1) .. controls (2,2) and (5,2) .. (7,1) node[midway, above] {$\nabla \psi$};

    \filldraw (0,-1) circle (0pt) node[above] {$\bx_{t+1}$}; 

    \node at (5,-0.5) {$\nabla \psi(\bx_t) - \eta_t \bg_t$};
    \draw[->,>=stealth] (7,.8) -- (5.2,-0.2);  

    \draw[->,>=stealth] (4.8,-0.7) .. controls (3,-1.5) and (2,-1.5) .. (0,-0.5) node[midway, below] {$\nabla \psi_{\mathcal{V}}^\star$};
\end{tikzpicture}
\caption{OMD update in terms of duality maps.}
\label{fig:duality_maps}
\commentAlt{Figure~\ref{fig:duality_maps}. Diagram of the OMD update through duality mappings. A point x_t in the primal set V is mapped by nabla psi to nabla psi(x_t) in the dual space, shifted by -eta_t g_t, then mapped back by nabla psi_V^* to x_{t+1}.}
\end{figure}

Let's explain what this theorem says. We said that \ac{OMD} extends the \ac{OSD} method to non-Euclidean norms. Hence, the regret bound we proved contains a pair of dual norms: one for distances between iterates and one for gradients. We also said that it makes sense to use a dual norm to measure a gradient, because it is a natural way to measure how ``big'' the linear functional $\bx \mapsto \langle \nabla f(\by), \bx\rangle$ is. More precisely, the gradient is naturally an element of the \emph{dual space}, that is, a different space from the one where the predictions live. Hence, in general, summing iterates and gradients is not the right geometric operation, in the same way in which we cannot sum pears and apples together. In the Euclidean case, the dual of the L$_2$ norm is again the L$_2$ norm, so the distinction between the geometry used for predictions and the geometry used for gradients is hidden. For a general norm, predictions and gradients are measured with different, dual norms, and \ac{OMD} uses the mirror maps to move between these two geometries.

So, in \ac{OMD} we need a way to go from one space to the other. This is exactly the role of $\nabla \psi$ and $\nabla \psi^\star_{\mathcal{V}}$, which are called \textbf{duality maps}\index{duality map|textbf} or \textbf{mirror}\index{mirror map|textbf} and \textbf{inverse mirror} maps. We can now understand that the theorem tells us that \ac{OMD} takes the primal vector $\bx_t$, transforms it into a dual vector through $\nabla \psi$, does a subgradient descent step in the dual space, and finally transforms the vector back to the primal space through $\nabla \psi_{\mathcal{V}}^\star$. This reasoning is summarized in Figure~\ref{fig:duality_maps}.

\begin{example}
Let $\psi:\R^d \to \R$ equal to $\psi(\bx)=\frac{1}{2}\|\bx\|_2^2$ and $\mathcal{V}=\{\bx \in \R^d: \|\bx\|_2\leq 1\}$. Define $\psi_{\mathcal{V}}=\psi+\indicator_{\mathcal{V}}$.
Then, we have
\[
\psi^\star_{\mathcal{V}}(\btheta)
= \sup_{\bx \in \mathcal{V}} \ \langle \btheta, \bx\rangle - \frac12 \|\bx\|_2^2~.
\]
Let's compute this conjugate.
First of all, if $\btheta=\boldsymbol{0}$ we have that $\psi^\star_{\mathcal{V}}(\btheta)=0$. So, in the following, we assume $\btheta\neq \boldsymbol{0}$.
For any $\bx \in \mathcal{V}$ there exist $\bq$ and $\alpha$ such that $\bx=\alpha \frac{\btheta}{\|\btheta\|_2} + \bq$ and $\langle\bq, \btheta\rangle=0$. Hence, we have
\[
\sup_{\bx \in \mathcal{V}} \ \langle \btheta, \bx\rangle - \frac12 \|\bx\|_2^2
= \sup_{\alpha, \bq : \alpha \frac{\btheta}{\|\btheta\|_2} + \bq \in \mathcal{V}, \langle\bq, \btheta\rangle=0} \ \alpha \|\btheta\|_2 - \frac{\alpha^2}{2} - \frac12\|\bq\|_2^2
= \sup_{-1 \leq \alpha\leq 1} \ \alpha \|\btheta\|_2 - \frac{\alpha^2}{2}.
\]
Solving the constrained optimization problem, we have $\alpha^\star = \min(1,\|\btheta\|_2)$. Hence, we have
\[
\psi^\star_{\mathcal{V}}(\btheta)
= \begin{cases}
\frac{1}{2}\|\btheta\|^2_2, & \|\btheta\|_2\leq 1\\
\|\btheta\|_2-\frac{1}{2}, & \|\btheta\|_2> 1
\end{cases},
\]
that is finite everywhere and differentiable.

So, the two duality maps\index{duality map} are $\nabla \psi(\bx)=\bx$ and
\[
\nabla \psi^\star_{\mathcal{V}}(\btheta) =
\begin{cases}
\btheta, & \|\btheta\|_2\leq 1\\
\frac{\btheta}{\|\btheta\|_2}, & \|\btheta\|_2> 1\\
\end{cases}
= \Pi_{\mathcal{V}}(\btheta)~.
\]
Using \eqref{eq:mirror_update}, we obtain exactly the update of projected online subgradient descent.
\end{example}

\subsection{Yet Another Way to Write the Online Mirror Descent Update}

There exists yet another way to write the update of \ac{OMD}. This third method uses the concept of \emph{Bregman projections}.\index{projection!Bregman|(textbf}
Extending the definition of Euclidean projections, we can define the projection with respect to a Bregman divergence\index{Bregman divergence}.
\begin{definition}
Let $\psi:\mathcal{X} \to \R$ be strictly convex and differentiable in the interior of $\mathcal{X}$. Assume that $\mathcal{V} \subseteq \mathcal{X}$ is convex and closed. Then, for any $\bx \in \interior \mathcal{X}$, we define the \textbf{Bregman projection} as
\[
\Pi_{\mathcal{V},B_\psi}(\bx) := \argmin_{\by \in \mathcal{V}} \ B_{\psi}(\by; \bx),
\]
when the argmin exists.
\end{definition}
\index{projection!Bregman|)textbf}

In the online learning literature, the \ac{OMD} algorithm is typically presented with a two-step update: first, solving the argmin over the entire space and then projecting back over $\mathcal{V}$ with respect to the Bregman divergence\index{Bregman divergence}. In the following, we show that most of the time the two-step update is equivalent to the one-step update in \eqref{eq:mirror_update}.

First, we prove a general theorem that allows us to break the constrained minimization of functions in the minimization over the entire space, plus a Bregman projection step.

\begin{theorem}
\label{thm:two_steps_omd}
Let $f : \R^d \to (-\infty, +\infty]$ be proper, closed, strictly convex\index{function!strictly convex}, and differentiable in $\interior \dom f$. Also, let $\mathcal{V} \subset \R^d$ be a non-empty, closed convex set with $\mathcal{V} \cap \dom f \neq \emptyset$ and assume that $\tilde{\by} = \argmin_{\bz \in \R^d} \ f(\bz)$ exists and $\tilde{\by} \in \interior \dom f$. Then, we have
\begin{enumerate}
\item $\argmin_{\bz \in \mathcal{V}} \ f(\bz)$ contains exactly one element.
\item $\argmin_{\bz \in \mathcal{V}} \ f(\bz) = \argmin_{\bz \in \mathcal{V}} \ B_f(\bz; \tilde{\by})$\index{projection!Bregman}.
\end{enumerate}
\end{theorem}
\begin{proof}
For the first point, from \citet[Proposition 11.13]{BauschkeC17} and the existence of $\tilde{\by}$, we have that $f$ is coercive. So, from \citet[Proposition 11.15]{BauschkeC17}, the minimizer of $f$ in $\mathcal{V}$ exists. Given that $f$ is strictly convex, the minimizer must be unique too.

For the second point, denote by $\by'=\argmin_{\bz \in \mathcal{V}} \ B_f(\bz; \tilde{\by})$ and $\by = \argmin_{\bz \in \mathcal{V}} \ f(\bz)$.
From the definition of $\by$, we have $f(\by)\leq f(\by')$.
On the other hand, from the first-order optimality condition, we have $\nabla f(\tilde{\by}) = \boldsymbol{0}$.
So, we have
\begin{align*}
f(\by')-f(\tilde{\by})
=B_f(\by'; \tilde{\by})
\leq B_f(\by; \tilde{\by})
= f(\by) - f(\tilde{\by}),
\end{align*}
that is $f(\by')\leq f(\by)$. Given that $f$ is strictly convex, $\by'=\by$.
\end{proof}

Now, note that, if $\tilde{\psi} (\bx) =\psi(\bx) + \langle \bg, \bx\rangle$, then
\begin{align*}
B_{\tilde{\psi}}(\bx;\by)
&= \tilde{\psi}(\bx) - \tilde{\psi}(\by) -\langle \nabla \tilde{\psi}(\by), \bx-\by\rangle  \\
&= \psi(\bx) - \psi(\by) -\langle \bg + \nabla \psi(\by), \bx-\by\rangle +\langle \bg, \bx-\by\rangle \\
&=\psi(\bx) - \psi(\by) -\langle \nabla \psi(\by), \bx-\by\rangle
= B_{\psi}(\bx;\by)~.
\end{align*}
Now, define $f(\bx)=\langle \eta_t \bg_t, \bx\rangle + B_\psi(\bx; \bx_t)$, so that $f(\bx)=\psi(\bx)+\langle \bz, \bx\rangle + K$ for some $K \in \R$ and $\bz \in \R^d$.
This implies that $B_f(\bx; \by) = B_\psi(\bx; \by)$. Hence, under the assumption of the above theorem, we have that $\bx_{t+1}=\argmin_{\bx \in \mathcal{V}} \ \langle \bg_t, \bx\rangle + \frac{1}{\eta_t} B_\psi(\bx;\bx_t)$ is equivalent to
\begin{align*}
\tilde{\bx}_{t+1} &= \argmin_{\bx \in \mathcal{X}} \ \langle \eta_t \bg_t, \bx\rangle +  B_\psi(\bx;\bx_t),\\
\bx_{t+1} &= \argmin_{\bx \in \mathcal{V}} \ B_{\psi}(\bx; \tilde{\bx}_{t+1})~.
\end{align*}
The advantage of this update is that sometimes it gives two easier problems to solve rather than a single difficult one.

\index{Online Mirror Descent algorithm!with local norms|(textbf}
\section{OMD Regret Bound using Local Norms}
\label{sec:omd_local_norms}

In Lemma~\ref{lemma:omd_one_step}, strong convexity basically tells us that there is some minimum curvature in all the directions, which allows us to upper bound the difference between $\bx_t$ and $\bx_{t+1}$. However, it turns out that we can still get a meaningful regret upper bound without this assumption. In particular, we can get an interesting expression for the regret that involves the use of \textbf{local norms}\index{norm!local}. We will use these ideas with multi-scale experts (Section~\ref{sec:multiscale}) and with multi-armed bandits (Chapter~\ref{ch:bandits}).

\begin{lemma}
\label{lemma:omd_local_norms}
Let $\mathcal{X}\subseteq\R^d$ be closed and convex, $B_\psi$ be the Bregman divergence\index{Bregman divergence} with respect to $\psi: \mathcal{X} \to \R$, and assume $\psi$ twice differentiable and with the Hessian positive definite in $\interior \mathcal{X}$. Let $\mathcal{V} \subseteq \mathcal{X}$ be a non-empty closed convex set. Assume \eqref{eq:cond_omd1} or \eqref{eq:cond_omd2} to hold.
Define $\|\bx\|_{\bA}:= \sqrt{\bx^\top \bA \bx}$.
Also, with the notation and initialization in Algorithm~\ref{alg:omd}, assume $\bx_{t+1}$ and $\tilde{\bx}_{t+1} \in \argmin_{\bx \in \mathcal{X}} \ \langle \bg_t, \bx\rangle + \frac{1}{\eta_t}B_\psi(\bx; \bx_t)$ exist. Then, $\forall \bu \in \mathcal{V}$,
there exists $\bz_t$ on the line segment between $\bx_t$ and $\bx_{t+1}$, and $\bz'_t$ on the line segment between $\bx_t$ and $\tilde{\bx}_{t+1}$, such that the following inequality holds
\[
\eta_t \langle \bg_t, \bx_t -\bu\rangle
\leq B_\psi(\bu;\bx_t) - B_\psi(\bu;\bx_{t+1}) + \frac{\eta_t^2 \min\left(\|\bg_t\|^2_{(\nabla^2 \psi(\bz_t))^{-1}}, \|\bg_t\|^2_{(\nabla^2 \psi(\bz'_t))^{-1}}\right)}{2} ~.
\]
\end{lemma}
\begin{proof}
First of all, from Theorem~\ref{thm:omd_well_defined} used with two different feasible sets, $\bx_{t+1}$ and $\tilde{\bx}_{t+1}$ are in the interior of $\mathcal{X}$ for all $t\geq 1$.
Then, reasoning as in the proof of Lemma~\ref{lemma:omd_one_step}, we have
\begin{equation}
\label{eq:omd_local_norm_1}
\langle \eta_t \bg_t, \bx_t - \bu\rangle
\leq
B_\psi(\bu;\bx_t) - B_\psi(\bu;\bx_{t+1}) - B_\psi(\bx_{t+1};\bx_t) + \langle \eta_t \bg_t, \bx_t - \bx_{t+1} \rangle~.
\end{equation}
From the Taylor's theorem, $B_{\psi}(\bx_{t+1};\bx_t) = \frac{1}{2}(\bx_{t+1}-\bx_t)^\top \nabla^2 \psi(\bz_t) (\bx_{t+1}-\bx_t)$ for some $\bz_t$ on the line segment between $\bx_t$ and $\bx_{t+1}$. Observe that this is $\frac12 \|\bx_{t+1}-\bx_t\|^2_{\nabla^2 \psi(\bz_t)}$ and it is indeed a norm because we assumed the Hessian of $\psi$ to be positive definite. Hence, by Fenchel--Young's inequality\index{inequality!Fenchel--Young's} and Examples~\ref{example:dual_norm_a} and \ref{example:conj_squared_norm}, we have
\begin{align*}
\langle &\eta_t \bg_t, \bx_t - \bx_{t+1} \rangle - B_\psi(\bx_{t+1};\bx_t)\\
&\leq \frac{\eta_t^2}{2}\|\bg_t\|^2_{(\nabla^2 \psi(\bz_t))^{-1}} + \frac{1}{2}(\bx_{t+1}-\bx_t)^\top \nabla^2 \psi(\bz_t) (\bx_{t+1}-\bx_t) - B_{\psi}(\bx_{t+1};\bx_t) \\
&=\frac{\eta_t^2}{2}\|\bg_t\|^2_{(\nabla^2 \psi(\bz_t))^{-1}},
\end{align*}
that gives the first term in the minimum.

For the second term in the minimum, we instead observe that
\begin{align*}
\langle \eta_t \bg_t, \bx_t - \bx_{t+1}\rangle - B_{\psi}(\bx_{t+1};\bx_t)
&\leq \max_{\bx \in \mathcal{X}} \ \langle \eta_t \bg_t, \bx_t - \bx\rangle - B_{\psi}(\bx;\bx_t)\\
&= \langle \eta_t \bg_t, \bx_t - \tilde{\bx}_{t+1}\rangle - B_{\psi}(\tilde{\bx}_{t+1};\bx_t)~.
\end{align*}
Then, we proceed as in the first bound.
\end{proof}
Despite the apparent more difficult formulation, the second term in the minimum is often easier to use, especially in constrained settings, because $\tilde{\bx}_{t+1}$ is defined over $\mathcal{X}$ rather than over $\mathcal{V}$. Also, under the assumptions of Theorem~\ref{thm:two_steps_omd}, it is easy to recognize that $\bx_{t+1}$ is the Bregman projection of $\tilde{\bx}_{t+1}$ onto $\mathcal{V}$.
\index{Online Mirror Descent algorithm!with local norms|)textbf}

\section{Example of OMD: Exponentiated Gradient}
\label{sec:omd_eg}

\index{Exponentiated Gradient algorithm!OMD version|(textbf}

Let $\Delta^{d-1} := \{\bx \in \R^d: x_i\geq0, \|\bx\|_1=1\}$ be the \textbf{probability simplex}\index{probability simplex|textbf} and set $\mathcal{V}=\Delta^{d-1}$. So, in words, we want to output vectors in the probability simplex. Also, let $\mathcal{X} =\R_{\geq 0}^{d}$ and $\psi(\bx):\mathcal{X}\to \R$ defined as $\psi(\bx)=\sum_{i=1}^d x_i \ln x_i$, where we define $0 \ln(0)=0$. Note that the restriction of $\psi$ to $\mathcal{V}$ is the negative Shannon entropy of the discrete distributions in $\Delta^{d-1}$\index{entropy!negative Shannon}. It is possible to verify that $\psi$ satisfies the first condition in Theorem~\ref{thm:omd_well_defined}, hence the update is well defined.

The Fenchel conjugate $\psi_{\mathcal{V}}^\star(\btheta)$ is defined as
\[
\psi_{\mathcal{V}}^\star(\btheta)
= \sup_{\bx \in \mathcal{V}} \ \langle \btheta, \bx\rangle - \psi(\bx)
= \sup_{\bx \in \mathcal{V}} \ \langle \btheta, \bx\rangle - \sum_{i=1}^d x_i \ln x_i~.
\]
It is a constrained optimization problem, we could solve it using the \acl{KKT} conditions\index{Karush--Kuhn--Tucker conditions}.
However, there is a simpler way to do it: we will remove the probability simplex constraint, rephrasing the problem over $d-1$ variables.
In fact, the maximization problem is equivalent to
\[
\min_{\bx \in \R^{d-1}} \ \sum_{i=1}^{d-1} x_i \ln x_i + \left(1-\sum_{i=1}^{d-1} x_i\right)\ln \left(1-\sum_{i=1}^{d-1} x_i\right) - \sum_{i=1}^{d-1} \theta_i x_i - \theta_d \left(1-\sum_{i=1}^{d-1} x_i\right)~.
\]
Note that a constraint on $x_1,\dots, x_{d-1}$ and $1-\sum_{i=1}^{d-1} x_i$ to be non-negative is enforced by the domain of the logarithm.
This is now an unconstrained convex optimization problem, so we can solve it by equating the gradient of the objective function to zero.
Hence, we have
\begin{align*}
\ln \frac{x_i}{1-\sum_{j=1}^{d-1} x_j} = \theta_i -\theta_d, \quad i=1, \dots, d-1~.
\end{align*}
That is
\begin{equation}
\label{eq:omd_eg_1}
x_i = \exp(\theta_i -\theta_d)\left(1-\sum_{j=1}^{d-1} x_j\right), \quad i=1, \dots, d-1~.
\end{equation}
Summing this equality over $i=1,\dots,d-1$, we obtain
\[
\sum_{i=1}^{d-1} x_i = \sum_{i=1}^{d-1} \exp(\theta_i -\theta_d)\left(1-\sum_{j=1}^{d-1} x_j\right),
\]
that can be solved to obtain
\[
1-\sum_{j=1}^{d-1} x_j = \frac{1}{1+\sum_{j=1}^{d-1} \exp(\theta_j-\theta_d)}~.
\]
Substituting it back in \eqref{eq:omd_eg_1}, we have
\begin{equation}
x_i
= \frac{\exp(\theta_i-\theta_d)}{1+\sum_{j=1}^{d-1} \exp(\theta_j-\theta_d)}
= \frac{\exp(\theta_i)}{\sum_{j=1}^{d} \exp(\theta_j)}, \quad i=1, \dots, d~. \label{eq:maximizer_fenchel_eg}
\end{equation}


Denoting with $\alpha=\sum_{i=1}^d \exp(\theta_i)$, and substituting in the definition of the conjugate function, we get
\begin{align*}
\psi_{\mathcal{V}}^\star(\btheta)
&= \sum_{i=1}^d \left(\frac{1}{\alpha} \theta_i\exp(\theta_i) - \frac{1}{\alpha} \exp(\theta_i)(\theta_i-\ln(\alpha))\right)
= \ln(\alpha) \frac{1}{\alpha}\sum_{i=1}^d \exp(\theta_i )
= \ln(\alpha)\\
&= \ln\left(\sum_{i=1}^d \exp(\theta_i)\right)~.
\end{align*}

We also have $(\nabla \psi_{\mathcal{V}}^\star(\btheta))_j= \frac{\exp(\theta_j)}{\sum_{i=1}^d \exp(\theta_i)}$ and $(\nabla \psi(\bx))_j=\ln(x_j)+1$ for $\bx \in \R^d_{>0}$. Note that we could have also derived the gradient of $\psi_{\mathcal{V}}^\star$ directly from~\eqref{eq:maximizer_fenchel_eg}, using Theorem~\ref{thm:props_fenchel}.

\begin{algorithm}[t]
\caption{Exponentiated Gradient (EG)}
\label{alg:eg}
\begin{algorithmic}[1]
{
    \REQUIRE{$\eta>0$}
    \STATE{Set $\bx_1 = [1/d,\dots,1/d]$}
    \FOR{$t=1$ {\bfseries to} $T$}
    \STATE{Output $\bx_t \in \Delta^{d-1}$}
    \STATE{Pay the loss $\ell_t(\bx_t)$, where $\ell_t$ is subdifferentiable on $\Delta^{d-1}$}
    \STATE{Set $\bg_t \in \partial \ell_t(\bx_t)$}
    \STATE{$x_{t+1,j} = \frac{x_{t,j}\exp(-\eta g_{t,j})}{\sum_{i=1}^d x_{t,i} \exp(-\eta g_{t,i})}, \ j=1,\dots,d$}
    \ENDFOR
}
\end{algorithmic}
\end{algorithm}

Putting everything together, we have the online mirror descent update rule for the entropic distance generating function\index{distance generating function}.
\[
x_{t+1,j}
= \frac{\exp(\ln x_{t,j}+1-\eta_t g_{t,j})}{\sum_{i=1}^d \exp(\ln x_{t,i}+1-\eta_{t} g_{t,i})}
= \frac{x_{t,j}\exp(-\eta_t g_{t,j})}{\sum_{i=1}^d x_{t,i} \exp(-\eta_{t} g_{t,i})}~.
\]
The algorithm is summarized in Algorithm~\ref{alg:eg}. This algorithm is called \textbf{\ac{EG}} because in the update rule we take the component-wise exponential of the (sub)gradient vector.

Let's take a look at the regret bound we get. From Example~\ref{example:kl}, for all $\bx \in \Delta^{d-1}$ and $\by \in \{\bx \in \R^d: x_i >0, \|\bx\|_1=1\}$, we have $\KL(\bx;\by):=B_\psi(\bx; \by) = \sum_{i=1}^d x_i \ln \tfrac{x_i}{y_i}$, that is the \ac{KL} divergence between the discrete distributions $\bx$ and $\by$.
Now, we prove the strong convexity of $\psi$ through a slightly more general statement. Setting $c_i =1$ for $i=1, \dots, d$ gives the result we need here.
\begin{lemma}
\label{lemma:entropy_strongly_convex}
Let $c_1, \dots, c_d \in \R_{>0}$.
Then, the negative weighted entropy function $\psi(\bx)=\sum_{i=1}^d c_i x_i \ln x_i$\index{entropy!negative weighted} is 1-strongly convex\index{function!strongly convex} with respect to the L$_1$ norm defined over the set $\mathcal{K}=\{\bx \in \R^d: x_i >0, \sum_{i=1}^d x_i/c_i=1\}$.
\end{lemma}
\begin{proof}

Let $\phi(u)=(u-1) \ln u - \frac{2(u-1)^2}{u+1}$ for $u>0$. Observe that $\phi''(u)> 0$ for $u>0$ so the function is convex. Moreover, $\phi(1)=\phi'(1)=0$. So, we have $\phi(u)\geq \phi(1)+\phi'(1)(u-1)=0$ for all $u>0$.

Using this inequality, we have
\begin{align*}
\langle &\nabla \psi(\bx) -\nabla \psi(\by), \bx -\by\rangle\\
&= \sum_{i=1}^d c_i (x_i-y_i)\ln \frac{x_i}{y_i}
= \sum_{i=1}^d c_i y_i \left(\frac{x_i}{y_i}-1\right)\ln \frac{x_i}{y_i}
\geq \sum_{i=1}^d c_i \frac{2(x_i-y_i)^2}{x_i+y_i}\\
&= \sum_{i=1}^d \frac{x_i+y_i}{2 c_i} \left(\frac{|x_i-y_i|}{\frac{x_i+y_i}{2 c_i}}\right)^2
\geq \left(\sum_{i=1}^d \frac{x_i+y_i}{2 c_i} \frac{|x_i-y_i|}{\frac{x_i+y_i}{2 c_i}}\right)^2
= \|\bx-\by\|_1^2,
\end{align*}
where in the last inequality we used Jensen's inequality\index{inequality!Jensen's} (Theorem~\ref{thm:jensen}) with the distribution $[\frac{x_1+y_1}{2 c_1}, \dots, \frac{x_d+y_d}{2 c_d}]$.
Using Theorem~\ref{thm:hessian_strong_conv} completes the proof.
\end{proof}

Another thing to do is to decide the initial point $\bx_1$. A reasonable choice is to set $\bx_1$ to be the minimizer of $\psi$ in $\mathcal{V}$. Hence, we set $\bx_1=[1/d, \dots, 1/d] \in \R^d$, because the uniform distribution minimizes the negative entropy. Hence, we have that $B_\psi(\bu; \bx_1)$ is equal to $\psi(\bu)-\min_{\bx \in \mathcal{V}} \psi(\bx)$.
So, we have $B_\psi(\bu;\bx_1) = \sum_{i=1}^d u_i \ln u_i + \ln{d} \leq \ln d$.

Putting everything together, we have
\[
\sum_{t=1}^T (\ell_t(\bx_t) - \ell_t(\bu))
\leq \frac{\ln d}{\eta} + \frac{\eta}{2} \sum_{t=1}^T \|\bg_t\|_\infty^2, \quad \forall \bu \in \Delta^{d-1}~.
\]
Assuming $\|\bg_t\|_\infty\leq L_\infty$, we can set $\eta=\sqrt{\frac{2\ln d}{L_\infty^2 T}}$, to obtain an upper bound on the regret of $L_\infty \sqrt{2 T \ln d}$.

\begin{remark}
Note that the time-varying version of \ac{OMD} with entropic distance generating function\index{distance generating function} would give rise to a vacuous bound. Can you see why? In Chapter~\ref{ch:ftrl}, we will see how \ac{FTRL} overcomes this issue using a time-varying regularizer rather than a time-varying learning rate.
\end{remark}

\index{Online Mirror Descent algorithm!with local norms|(}
We can also get a \emph{tighter} bound using the local norms. Let's use the additional assumption that $g_{t,i}\geq0$, for all $t=1, \dots,T$ and $i=1, \dots,d$. Summing the inequality of Lemma~\ref{lemma:omd_local_norms} from $t=1$ to $T$, we have for all $\bu \in \mathcal{V}$ that
\begin{align*}
\sum_{t=1}^T \ell_t(\bx_t) - \sum_{t=1}^T \ell_t(\bu)
&\leq \frac{\ln d}{\eta} + \frac{\eta}{2} \sum_{t=1}^T \|\bg_t\|^2_{(\nabla^2 \psi(\bz'_t))^{-1}},
\end{align*}
where $\bz'_t$ is on the line segment between $\bx_t$ and $\tilde{\bx}_{t+1}$. In this case, it is easy to calculate $\tilde{x}_{t+1,i}$ as $x_{t,i}\exp(-\eta g_{t,i})$ for $i=1, \dots,d$. Moreover, $\nabla^2 \psi(\bz'_t)$ is a diagonal matrix whose elements on the diagonal are $\frac{1}{z'_{t,i}}, i=1, \dots, d$. Hence, we have that
\[
\|\bg_t\|^2_{(\nabla^2 \psi(\bz'_t))^{-1}}
= \sum_{i=1}^d g_{t,i}^2 z'_{t,i}
\leq \sum_{i=1}^d g_{t,i}^2 x_{t,i}~.
\]
Putting everything together, the final bound would be
\begin{equation}
\sum_{t=1}^T \ell_t(\bx_t) - \sum_{t=1}^T \ell_t(\bu)
\leq \frac{\ln d}{\eta} + \frac{\eta}{2} \sum_{t=1}^T \sum_{i=1}^d g_{t,i}^2 x_{t,i}, \quad \forall \bu \in \Delta^{d-1}~. \label{eq:eg_localnorm}
\end{equation}
This is indeed a tighter bound because $\sum_{i=1}^d g_{t,i}^2 x_{t,i}\leq \|\bg_t\|^2_\infty$.

One can also weaken the assumption on the subgradients at the cost of a slight increase in the constant in front of them. In fact, for example, assume that $\|\bg_t\|_\infty\leq L_\infty$ for all $t=1, \dots, T$ and $\eta\leq \frac{1}{L_\infty}$. In this case, we can upper bound $\tilde{x}_{t+1,i}$ as $x_{t,i}\exp(-\eta g_{t,i}) \leq e x_{t,i}$. Hence, the final bound in this case is
\begin{align*}
\sum_{t=1}^T \ell_t(\bx_t) - \sum_{t=1}^T \ell_t(\bu)
&\leq \frac{\ln d}{\eta} + \frac{e \eta}{2} \sum_{t=1}^T \sum_{i=1}^d g_{t,i}^2 x_{t,i}, \quad \forall \bu \in \Delta^{d-1}~.
\end{align*}
\index{Online Mirror Descent algorithm!with local norms|)}

How would \ac{OSD} work on the same problem? First, it is important to realize that nothing prevents us from using \ac{OSD} on this problem. We just have to implement the Euclidean projection onto the probability simplex\index{probability simplex}, which does not have a closed form, but it can be implemented in $\mathcal{O}(d \ln d)$ time, see \citet{Condat16}.
The regret bound we would get from \ac{OSD} is
\[
\sum_{t=1}^T (\ell_t(\bx_t) - \ell_t(\bu))
\leq \frac{2}{\eta} + \frac{\eta}{2} \sum_{t=1}^T \|\bg_t\|_2^2
\leq \frac{2}{\eta} + \frac{\eta}{2} T L^2_2, \quad \forall \bu \in \Delta^{d-1},
\]
where $L_2 \geq \|\bg_t\|_2$ for all $t$. Optimally tuning the learning rate, we get
\[
\sum_{t=1}^T (\ell_t(\bx_t) - \ell_t(\bu))
\leq 2 L_2 \sqrt{ T }, \quad \forall \bu \in \Delta^{d-1}~.
\]
To compare it with the bound of \ac{EG}, it is sufficient to observe that
\[
\frac{1}{\sqrt{d}}\|\bg_t\|_{2}
\leq \|\bg_t\|_\infty
\leq \|\bg_t\|_2,
\]
and the inequalities are tight.
Hence, ignoring the numerical constants, the bound of \ac{EG} is between $\sqrt{\frac{\ln d}{d}}$ and $\sqrt{\ln d}$ times the one of \ac{OSD}, depending on the structure of the subgradients $\bg_t$.
Hence, in a worst-case sense with respect to the infinity norm of the subgradients $\bg_t$, using an entropic distance generating function\index{distance generating function} can transform a dependence on the dimension from $\sqrt{d}$ to $\sqrt{\ln d}$ for \ac{OCO} over the probability simplex.
So, as we already saw analyzing AdaGrad (Section~\ref{sec:adagrad}), the shape of the domain and the structure of the subgradients are the important ingredients when we move from the Euclidean norm to other norms.

\index{Exponentiated Gradient algorithm!OMD version|)textbf}

%
%

\section{Example of OMD: $p$-norm Algorithms}
\label{sec:omd_pnorm}

We now present a class of algorithms that interpolates between the regret guarantee of \ac{OSD} and that of \ac{EG}.

Consider the distance generating function\index{distance generating function} $\psi(\bx)=\frac{1}{2}\|\bx\|^2_p$, for $1 < p\leq2$ over $\mathcal{X} =\mathcal{V}=\R^d$. Let's remind the reader that the $p$-norm of a vector $\bx$ is defined as $(\sum_{i=1}^d |x_i|^p)^{1/p}$.
From Examples~\ref{example:dual_lp} and~\ref{example:conj_squared_norm}, we have that $\psi^\star_{\mathcal{V}}(\btheta) = \frac{1}{2}\|\btheta\|_q^2$, where $\frac{1}{p}+\frac{1}{q}=1$\index{norm!$p$-}, so that $q\geq 2$.
Let's calculate the dual maps: $(\nabla \psi(\bx))_j = \sign(x_j) |x_j|^{p-1} \|\bx\|^{2-p}_p$ and, for $\bx\neq \boldsymbol{0}$, $(\nabla \psi^\star_{\mathcal{V}}(\bx))_j = \sign(x_j) |x_j|^{q-1} \|\bx\|^{2-q}_q$, while $\nabla \psi^\star_{\mathcal{V}}(\boldsymbol{0})=\boldsymbol{0}$.
Hence, we can write the update rule as
\begin{align*}
\tilde{x}_{t+1,j} &= \sign(x_{t,j}) |x_{t,j}|^{p-1}\|\bx_t\|^{2-p}_p - \eta g_{t,j}, \quad j=1,\dots,d, \\
x_{t+1,j} &= \begin{cases} \sign(\tilde{x}_{t+1,j})|\tilde{x}_{t+1,j}|^{q-1} \left\| \tilde{\bx}_{t+1} \right\|^{2-q}_q, & \tilde{\bx}_{t+1}\neq \boldsymbol{0},\\
0, & \tilde{\bx}_{t+1}= \boldsymbol{0},
\end{cases}
\quad j=1,\dots,d,
\end{align*}
where we broke the update into two steps to simplify the notation (and the implementation).
Starting from $\bx_1=\boldsymbol{0}$, we have that
\[
B_\psi(\bu; \bx_1)
= \psi(\bu) - \psi(\bx_1) - \langle \nabla \psi(\bx_1), \bu - \bx_1 \rangle
= \psi(\bu)~.
\]

The last ingredient is the fact that $\psi(\bx)$ is $p-1$ strongly convex\index{function!strongly convex} with respect to $\|\cdot\|_p$.
\begin{lemma}
\label{lemma:strong_convexity_pnorm}
$\psi(\bx)=\frac{1}{2}\|\bx\|^2_p$ is $(p-1)$-strongly convex with respect to $\|\cdot\|_p$, for $1 \leq p\leq 2$.
\end{lemma}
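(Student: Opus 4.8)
The plan is to deduce the first-order strong-convexity inequality from a lower bound on the Hessian quadratic form, in the spirit of Theorem~\ref{thm:hessian_strong_conv}, together with a one-dimensional integration argument to cover the points where $f:=\psi$ fails to be twice differentiable. The cases $p=1$ and $p=2$ are degenerate and I would dispose of them first: for $p=1$ the claim asserts only $0$-strong convexity, i.e. plain convexity of the squared norm, and for $p=2$ the function is $\tfrac12\|\cdot\|_2^2$, already shown to be $1$-strongly convex w.r.t. $\|\cdot\|_2$. So assume $1<p<2$. Writing $S:=\|\bx\|_p^p=\sum_i|x_i|^p$, so that $f(\bx)=\tfrac12 S^{2/p}$, a direct computation at any $\bx$ with all coordinates nonzero (where $f$ is $C^2$) gives $\nabla f(\bx)_j=S^{2/p-1}|x_j|^{p-1}\sgn(x_j)$ and
\[
\langle \nabla^2 f(\bx)\bh,\bh\rangle = \Big(\tfrac2p-1\Big)p\,S^{2/p-2}\Big(\sum_j |x_j|^{p-1}\sgn(x_j)\,h_j\Big)^2 + (p-1)\,S^{2/p-1}\sum_j |x_j|^{p-2}h_j^2~.
\]
Since $1<p<2$ makes $\tfrac2p-1\ge0$, the first term is nonnegative and I would discard it, keeping the lower bound $(p-1)S^{2/p-1}\sum_j|x_j|^{p-2}h_j^2$.

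The heart of the argument is then the pointwise inequality $S^{2/p-1}\sum_j |x_j|^{p-2}h_j^2\ge\|\bh\|_p^2$, which I would establish by Hölder's inequality applied to the factorization $|h_j|^p=\big(|h_j|^2|x_j|^{p-2}\big)^{p/2}\big(|x_j|^p\big)^{(2-p)/2}$ with conjugate exponents $\tfrac2p$ and $\tfrac{2}{2-p}$ (both $\ge1$ here). This yields $\sum_j|h_j|^p\le\big(\sum_j|x_j|^{p-2}h_j^2\big)^{p/2}S^{(2-p)/2}$; raising to the power $2/p$ and using $(2-p)/p=2/p-1$ gives exactly the claim. Hence $\langle\nabla^2 f(\bx)\bh,\bh\rangle\ge(p-1)\|\bh\|_p^2$ for every $\bh$ and every $\bx$ with nonzero coordinates.

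To conclude — and to deal with the non-differentiability on the coordinate hyperplanes, where $|x_j|^{p-2}$ diverges — I would pass through one dimension rather than invoke Theorem~\ref{thm:hessian_strong_conv} verbatim. Fixing $\bx,\by\in\R^d$ and setting $g(t):=f(\bx+t(\by-\bx))$, the function $g$ is $C^1$ on $[0,1]$ (as $f$ is $C^1$ for $p>1$) and $g''(t)=\langle\nabla^2 f(\bx+t(\by-\bx))(\by-\bx),(\by-\bx)\rangle\ge(p-1)\|\by-\bx\|_p^2$ at all but the finitely many $t$ where a coordinate of $\bx+t(\by-\bx)$ vanishes. Near each such exceptional $t_0$ the blow-up is of order $|t-t_0|^{p-2}$ with $p-2\in(-1,0)$, hence integrable, so $g''\in L^1[0,1]$ and $g'$ is absolutely continuous. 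Integrating the bound twice over $[0,1]$ gives $g(1)\ge g(0)+g'(0)+\tfrac{p-1}{2}\|\by-\bx\|_p^2$, i.e. $f(\by)\ge f(\bx)+\langle\nabla f(\bx),\by-\bx\rangle+\tfrac{p-1}{2}\|\bx-\by\|_p^2$, which is precisely $(p-1)$-strong convexity. I expect the only delicate point to be this last absolute-continuity/measure-zero step; the Hölder computation, though the crux, is routine once the factorization is guessed.
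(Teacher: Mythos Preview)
The paper does not actually give its own proof of this lemma: it is cited from \citet[Lemma 17]{Shalev-Shwartz07} and, earlier, is posed as a ``Difficult'' exercise. So there is no paper-proof to compare against.

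That said, your argument is correct and is essentially the standard proof. The Hessian computation is right, and the H\"older step with exponents $2/p$ and $2/(2-p)$ applied to the factorization $|h_j|^p=\big(|h_j|^2|x_j|^{p-2}\big)^{p/2}\big(|x_j|^p\big)^{(2-p)/2}$ is exactly the key trick; it cleanly yields $S^{2/p-1}\sum_j|x_j|^{p-2}h_j^2\ge\|\bh\|_p^2$, hence $\langle\nabla^2 f(\bx)\bh,\bh\rangle\ge(p-1)\|\bh\|_p^2$ on the open dense set of $\bx$ with all nonzero coordinates. Your one-dimensional reduction to handle the coordinate hyperplanes is the right move, and the reasoning is sound: $g'$ is continuous, $C^1$ between the finitely many singular $t$'s, and the blowup $|t-t_0|^{p-2}$ with $p-2\in(-1,0)$ is integrable, so one can patch the fundamental theorem of calculus across each singular point using the continuity of $g'$ and the integrability of $g''$, obtaining the global identity $g'(s)-g'(0)=\int_0^s g''$; absolute continuity follows. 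Two minor remarks you might add for completeness: (i) if some coordinate satisfies $h_j=0$ and $x_j=0$, then that coordinate is identically zero along the segment but contributes nothing to $g''$, so no singularity arises there; (ii) if the segment passes through the origin itself, the apparent singularities cancel and in fact $g(t)=\tfrac12(t-t_0)^2\|\bh\|_p^2$ locally, so $g''(t_0)=\|\bh\|_p^2\ge(p-1)\|\bh\|_p^2$ directly.
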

\begin{proof}
Observe that for $p=1$ the function is convex, hence it is $0$-strongly convex with respect to any norm. So, we now assume $p>1$.

We want to show that
\begin{equation}
\label{eq:p_norm_strongly_convex_1}
\psi(\by)
\geq \psi(\bx) + \langle \nabla \psi(\bx), \by-\bx\rangle + \frac{p-1}{2}\|\bx-\by\|^2_p, \quad \forall \bx,\by \in \R^d~.
\end{equation}

Rather than studying $\psi$ directly, we study the Hessian of a surrogate function: $\psi_a(\bx)=\frac12 \left(\sum_{i=1}^d (x_i^2+a)^{\frac{p}{2}}\right)^{2/p}$ where $a>0$. Observe that $\psi_a$ is continuously twice differentiable for $a>0$.

Define the auxiliary functions $f:\R \to \R$ convex and twice differentiable and $h:\R^d \to \R$ differentiable, and define $\phi(\bx)=f(h(\bx))$.
So, we have
\[
\nabla^2 \phi(\bx)
=f''(h(\bx))\nabla h(\bx) \nabla h(\bx)^\top + f'(h(\bx)) \nabla^2 h(\bx)
\succeq f'(h(\bx)) \nabla^2 h(\bx),
\]
where in the inequality we used the fact that $f$ is convex, so $f''(x)\geq 0$ for all $x \in \R$.

Now, let's specialize the above result to $h(\bx)=\sum_{i=1}^d (x_i^2+a)^{\frac{p}{2}}$ and $f(x)=\frac12 x^{\frac{2}{p}}$. So, $\nabla^2 h(\bx)$ is a diagonal matrix and the element $i$ on the diagonal is
\begin{align*}
p(x_i^2+a)^{\frac{p}{2}-1} + p(p-2) x_i^2 (x_i^2+a)^{\frac{p}{2}-2}
&= p (x_i^2+a)^{\frac{p}{2}-1} (1+(p-2) x_i^2 (x_i^2+a)^{-1})\\
&\geq p (x_i^2+a)^{\frac{p}{2}-1} (1+(p-2))\\
&= p (p-1) (x_i^2+a)^{\frac{p}{2}-1},
\end{align*}
where in the inequality we have used the fact that $a>0$ and $p\leq 2$.
Hence, we have
\[
\nabla^2 \psi_a(\bx)
\succeq \frac1p \left(\sum_{i=1}^d (x_i^2+a)^{\frac{p}{2}}\right)^{\frac{2}{p}-1} p(p-1)\diag{(x^2_1+a)^{\frac{p}{2}-1}, \dots, (x_d^2+a)^{\frac{p}{2}-1}}~.
\]
Hence, denoting $w_i=(x^2_i+a)^{(2-p)\frac{p}{4}}>0$, we have
\begin{align*}
\langle \nabla^2 \psi_a(\bx)\by, \by\rangle
&\geq (p-1) \left(\sum_{i=1}^d (x_i^2+a)^{\frac{p}{2}}\right)^{\frac{2-p}{p}}\sum_{i=1}^d  (x^2_i+a)^{\frac{p}{2}-1} y_i^2\\
&=(p-1) \left[\left(\sum_{i=1}^d (x_i^2+a)^{\frac{p}{2}} \right)^{\frac{2-p}{2}} \left(\sum_{i=1}^d  (x^2_i+a)^{\frac{p}{2}-1} y_i^2\right)^\frac{p}{2}\right]^\frac{2}{p} \\
&=(p-1) \left[\left(\sum_{i=1}^d w_i^\frac{2}{2-p} \right)^{\frac{2-p}{2}} \left(\sum_{i=1}^d \frac{y_i^2}{w_i^\frac{2}{p}}\right)^\frac{p}{2}\right]^\frac{2}{p} \\
&\geq (p-1) \left(\sum_{i=1}^d w_i \frac{y_i^p}{w_i}\right)^{\frac{2}{p}}
= (p-1) \|\by\|^2_p,
\end{align*}
where we used H\"older inequality with dual norms $\|\cdot\|_\frac{2}{p}$ and $\|\cdot\|_\frac{2}{2-p}$.

Hence, we have that
\[
\psi_a(\by)
\geq \psi_a(\bx) + \langle \nabla \psi_a(\bx), \by-\bx\rangle + \frac{p-1}{2}\|\bx-\by\|^2_p, \quad \forall \bx,\by \in \R^d~.
\]
Now, given that $\psi_a$ and $\nabla \psi_a$ are continuous in $a$, taking the limit for $a\to0^+$, we get \eqref{eq:p_norm_strongly_convex_1}.
By Lemma~\ref{lemma:strong_convexity}, this implies the strong convexity of $\psi$.
\end{proof}

Hence, the regret bound will be
\[
\sum_{t=1}^T (\ell_t(\bx_t) - \ell_t(\bu))
\leq \frac{\|\bu\|_p^2}{2 \eta} + \frac{\eta}{2(p-1)}\sum_{t=1}^T \|\bg_t\|_q^2~.
\]

Setting $p=2$, we get the (unprojected) \ac{OSD}. However, we can set $p$ to achieve a logarithmic dependence in the dimension $d$ as in \ac{EG}. Let's assume again that $\|\bg_t\|_\infty\leq L_\infty$, so we have $
\sum_{t=1}^T \|\bg_t\|_q^2
\leq L_\infty^2 d^{2/q} T$.
Also, note that $\|\bu\|_p\leq \|\bu\|_1$, so we have an upper bound on the regret of
\[
\Regret_T(\bu) \leq \frac{\|\bu\|^2_1}{2 \eta} + \frac{L_\infty^2 d^{2/q} T \eta}{2(p-1)}, \quad \forall \bu \in \R^d~.
\]
Setting $\eta=\frac{\alpha\sqrt{p-1}}{L_\infty d^{1/q} \sqrt{T}}$, we get an upper bound on the regret of
\begin{align*}
\frac{1}{2}\left(\frac{\|\bu\|^2_1}{\alpha} + \alpha\right) L_\infty \sqrt{T} \frac{d^{1/q}}{\sqrt{p-1}}
&= \frac{1}{2}\left(\frac{\|\bu\|^2_1}{\alpha} + \alpha\right) L_\infty \sqrt{T} \sqrt{q-1}d^{1/q}\\
&\leq \frac{1}{2}\left(\frac{\|\bu\|^2_1}{\alpha} + \alpha\right) L_\infty \sqrt{T} \sqrt{q}d^{1/q}~.
\end{align*}
Assuming $d\geq 3$, the choice of $q$ that minimizes the last term is $q=2 \ln d$ that makes the term $\sqrt{q}d^{1/q}=\sqrt{2 e \ln d}$. Hence, we have regret bound of $\mathcal{O}(\sqrt{T \ln d})$ as $T \to \infty$.

Note that here the set $\mathcal{V}$ is the entire space, however we could still set $\mathcal{V}=\{\bx \in\R^d: x_i\geq0, \|\bx\|_1=1\}$. While this would allow us to get the same asymptotic bound of \ac{EG}, the update would not be in a closed form anymore.

\section{Application: Learning with Expert Advice}
\label{sec:lea}

Let's introduce a particular \ac{OCO} game called \textbf{\ac{LEA}}\index{learning with expert advice|(textbf}.

In this setting, on each round, we have $d$ experts who give us some advice, and we have to decide which expert we want to follow. After we made our choice, the losses associated with each expert are revealed, and we pay the loss associated with the expert we picked. The aim of the game is to minimize the losses we suffer compared to the cumulative losses of the best expert.
This is a general setting that allows us to model many interesting cases. For example, we have a number of different online learning algorithms, and we would like to choose the best among them.

Is this problem solvable? If we put ourselves in the adversarial setting, unfortunately, it cannot be solved!
Indeed, even with 2 experts, the adversary can force the algorithm to incur linear regret. Let's see how. In each round, we have to pick expert 1 or expert 2. In each round, the adversary can decide that the expert we pick has a loss of 1 and the other one has a loss of 0. This means that the cumulative loss of the algorithm over $T$ rounds is $T$. On the other hand, the best cumulative loss over experts 1 and 2 is less than $T/2$. This means that our regret, no matter what we do, can be as big as $T/2$.

The problem above is due to the fact that the adversary has too much power. One way to reduce its power is by using \emph{randomization}. We can allow the algorithm to be randomized \emph{and} force the adversary to decide the losses at time $t$ without knowing the outcome of the randomization of the algorithm at time $t$ (but it can depend on the past randomization). This is enough to make the problem solvable. Another view to look at it is that randomization makes the problem convex, allowing us to use any \ac{OCO} algorithm on it.

First, let's write the problem in the original formulation.
We set a discrete feasible set $\mathcal{V}=\{\be_i\}_{i=1}^d$, where $\be_i$ is the vector with all zeros but a $1$ in the coordinate $i$.
Our predictions and the competitor are from $\mathcal{V}$.
The losses are linear losses: $\ell_t(\bx) = \langle \bg_t, \bx\rangle$, for $t=1,\dots,T$.
The regret is
\begin{equation}
\label{eq:regret_lea}
\Regret_T(\be_i) = \sum_{t=1}^T \langle \bg_{t}, \bx_t\rangle - \sum_{t=1}^T \langle \bg_t, \be_i\rangle , \quad i=1, \dots, d~.
\end{equation}
The only thing that makes this problem non-convex is the feasibility set, which is clearly a non-convex one.

Let's now see how the randomization makes this problem convex.
Let's extend the feasible set to $\Delta^{d-1}:=\{\bx \in \R^d : x_i \geq 0, \|\bx\|_1=1\}$. Note that $\be_i \in \Delta^{d-1}$. For this problem, we can use an \ac{OCO} algorithm to minimize the regret
\[
\Regret_T(\bu)
= \sum_{t=1}^T \langle \bg_t, \bx_t\rangle - \sum_{t=1}^T \langle \bg_t, \bu\rangle, \quad \forall \bu \in \Delta^{d-1}~.
\]
Can we find a way to transform an upper bound on this regret to the one we care about in \eqref{eq:regret_lea}?
One way is the following one: on each time step, construct a random variable $A_t$ that is equal to $i$ with probability $x_{t,i}$ for $i=1,\dots,d$.
Then, select the expert according to the outcome of $A_t$. Now, using the law of total expectation, we have
\[
\E[g_{t,A_t}]
= \E_{A_1, \dots, A_{t-1}}[\E_{A_t}[g_{t,A_t}|A_1, \dots, A_{t-1}]]
= \E[\langle \bg_t,\bx_t\rangle],
\]
and
\[
\E\left[\sum_{t=1}^T (g_{t,A_t}-g_{t,i})\right]
= \E\left[\sum_{t=1}^T \langle \bg_{t}, \bx_t\rangle - \sum_{t=1}^T \langle \bg_t, \be_i\rangle\right], \quad \forall i \in \{1, \dots, d\}~.
\]
This means that we can minimize in expectation the non-convex regret with a randomized \ac{OCO} algorithm.
We can summarize this reasoning in Algorithm~\ref{alg:lea}.

\begin{algorithm}[t]
\caption{Learning with Expert Advice through Randomization}
\label{alg:lea}
\begin{algorithmic}[1]
{
    \REQUIRE{$\bx_1 \in \Delta^{d-1}$}
    \FOR{$t=1$ {\bfseries to} $T$}
    \STATE{Draw $A_t$ according to $\Pr\{A_t=i\}=x_{t,i}$}
    \STATE{Select expert $A_t$}
    \STATE{Observe all the experts' losses $\bg_t \in \R^d$ and pay the loss $g_{t,A_t}$}
    \STATE{Get $\bx_{t+1}$ from an \ac{OCO} algorithm with feasible set $\Delta^{d-1}$}
    \ENDFOR
}
\end{algorithmic}
\end{algorithm}

For example, assume that $\|\bg_t\|_\infty \leq L_\infty$ for all $t$.
Then, using the \ac{EG}\index{Exponentiated Gradient algorithm!OMD version} algorithm from Section~\ref{sec:omd_eg}, we obtain the following update rule
\[
x_{t+1,j} = \frac{x_{t,j}\exp(-\eta g_{t,j})}{\sum_{i=1}^d x_{t,i} \exp(-\eta g_{t,i})}, \quad j=1,\dots,d,
\]
setting $\bx_1=[1/d, \dots, 1/d]$ and $\eta = \frac{\sqrt{2\ln d}}{L_\infty \sqrt{T}}$. For such an algorithm, the regret will be
\[
\E[\Regret_T(\be_i)] \leq L_\infty \sqrt{2 T \ln d}, \quad \forall i\in \{1, \dots, d\}~.
\]
It is worth stressing the importance of the result just obtained: we can design an algorithm that in expectation, is close to the best expert in a set, \emph{paying only a logarithmic penalty in the size of the set}. The $p$-norm Algorithm in Section~\ref{sec:omd_pnorm} would give a similar guarantee.

Later, in Section~\ref{sec:shifted_kt}, we will see algorithms that achieve the even better regret guarantee of $\mathcal{O}(\sqrt{T\cdot \KL(\bu;\bx_1)})$ as $T\to\infty$, for any $\bu$ in the probability simplex. You should be able to convince yourself that no setting of $\eta$ in \ac{EG} allows us to achieve such a regret guarantee. Indeed, these algorithms will be based on a very different strategy.
\index{learning with expert advice|)textbf}

\index{Multi-Scale Expert algorithm|(textbf}
\section{Example of OMD: Multi-Scale Expert Algorithm}
\label{sec:multiscale}

We now consider an application of \ac{OMD} to the problem of prediction with multi-scale expert advice. In this setting, at each time step $t$, an algorithm selects a probability distribution $\bx_t$ over a set of $d$ experts (or actions). After observing a loss vector $\bg_t \in \R^d$, the algorithm incurs a loss of $\langle \bg_t, \bx_t \rangle$. The goal is to minimize the regret with respect to the best single expert in hindsight. However, unlike in the usual learning with expert advice setting, here the losses associated with different experts can have vastly different scales. For example, one expert might correspond to a low-risk, low-reward strategy with losses in $[-0.1, 0.1]$, while another might be a high-risk, high-reward strategy with losses in $[-100, 100]$.
Thus, we would like to prove a regret guarantee that depends on the range of the best coordinate.

One might be tempted to use the \ac{EG} algorithm, but it has a regret bound that scales with the largest range of the loss coordinates, so it is not suitable for this problem. We need a different algorithm.

Here, we assume that we have prior knowledge of the approximate scale of losses for each expert. Specifically, for each expert $i \in \{1, \dots, d\}$, we know a value $c_i > 0$ such that the loss $g_{t,i}$ is guaranteed to be in the range $[-c_i, c_i]$ for all $t$. The goal is to leverage this scale information to achieve better regret bounds.

\begin{algorithm}[t]
\caption{OMD for Multi-Scale Experts}
\label{alg:MSE}
\begin{algorithmic}[1]
{
    \REQUIRE{Per-expert scales $c_1, \dots, c_d > 0$, learning rate $\eta>0$, $\bx_1 \in \Delta^{d-1}$}
    \STATE{Set $\psi(\bx)=\sum_{i=1}^d c_i x_i \ln x_i$}
    \FOR{$t=1$ {\bfseries to} $T$}
    \STATE{Output $\bx_t \in \Delta^{d-1}$}
    \STATE{Receive $\bg_t \in \R^d$ where $|g_{t,i}| \leq c_i$ for $i=1,\dots,d$}
    \STATE{Pay the loss $\langle \bg_t, \bx_t\rangle$}
    \STATE{Set $\tilde{g}_{t,i}=g_{t,i}+\eta \frac{g_{t,i}^2}{c_i}$ for $i=1, \dots, d$}
    \STATE{Set $\bx_{t+1} = \argmin_{\bx \in \Delta^{d-1}} \ B_{\psi}(\bx;\bx_t) + \eta \langle \tilde{\bg}_t,\bx\rangle$}
    \ENDFOR
}
\end{algorithmic}
\end{algorithm}

Here, we derive an algorithm for this setting as an instance of \ac{OMD} with a specific choice of surrogate losses.
In particular, we run \ac{OMD} with distance generating function $\psi(\bx)=\sum_{i=1}^d c_i x_i \ln x_i$ on the shifted losses $\tilde{g}_{t,i}=g_{t,i}+\eta\frac{g_{t,i}^2}{c_i}$, see Algorithm~\ref{alg:MSE}.

\begin{theorem}
\label{thm:multi-scale}
Let $c_1, \dots, c_d>0$ such that $|g_{t,i}|\leq c_i$ for all $t=1, \dots T$ and $i=1, \dots d$. In Algorithm~\ref{alg:MSE}, set $\eta\leq \frac{1}{5}$. Then, we have
\[
\sum_{t=1}^T \langle \bg_t, \bx_t- \bu\rangle
\leq \frac{B_\psi(\bu;\bx_1)}{\eta} + \eta \sum_{t=1}^T \sum_{i=1}^d \frac{u_{i} g_{t,i}^2}{c_i}, \quad \forall \bu \in \Delta^{d-1}~.
\]
\end{theorem}
\begin{proof}
First, observe that
\begin{equation}
|\tilde{g}_{t,i}|
\leq |g_{t,i}| \left(1+ \eta \right)~.
\label{eq:proof_multi-scale_eq1}
\end{equation}

Using the local norm regret upper bound for OMD in Lemma~\ref{lemma:omd_local_norms}, we obtain
\[
\sum_{t=1}^T \langle \tilde{\bg}_t, \bx_t- \bu\rangle
\leq \frac{B_\psi(\bu;\bx_1)}{\eta} + \frac{\eta}{2} \sum_{t=1}^T \sum_{i=1}^d \frac{z_{t,i} \tilde{g}_{t,i}^2}{c_i}, \quad \forall \bu \in \Delta^{d-1},
\]
where $\bz_{t}$ is between $\bx_t$ and $\bx'_{t+1} = \argmin_{\bx \in \R^d_{\geq 0}} \ B_\psi(\bx; \bx_t)+ \eta\langle \tilde{\bg}_t, \bx\rangle$.
Using \eqref{eq:proof_multi-scale_eq1}, we have that
\[
x'_{t+1,i}
= x_{t,i} \exp\left(-\eta\frac{\tilde{g}_{t,i}}{c_{i}}\right)
\leq x_{t,i} \exp\left(\eta+\eta^2\right)~.
\]
So, we can upper bound each $z_{t,i}$ with $\exp\left(\eta+\eta^2\right) x_{t,i}$.

Using again~\eqref{eq:proof_multi-scale_eq1}, we have that the right-hand side of the regret upper bound becomes
\[
\frac{B_\psi(\bu;\bx_1)}{\eta} + \frac12 \exp\left(\eta+\eta^2\right)\left(1+\eta\right)^2 \eta \sum_{t=1}^T \sum_{i=1}^d \frac{x_{t,i} g_{t,i}^2}{c_i}~.
\]

Observe that $\eta\leq \frac{1}{5}$, so $\frac{1}{2} \exp\left(\eta+\eta^2\right)\left(1+\eta\right)^2 \leq 1$.
Hence, for all $\bu \in \Delta^{d-1}$, we obtain
\begin{align*}
\sum_{t=1}^T \langle \bg_t, \bx_t- \bu\rangle + \sum_{t=1}^T \sum_{i=1}^d \eta \frac{g_{t,i}^2}{c_i} (x_{t,i}-u_i)
&= \sum_{t=1}^T \langle \tilde{\bg}_t, \bx_t- \bu\rangle\\
&\leq \frac{B_\psi(\bu;\bx_1)}{\eta} + \eta \sum_{t=1}^T \sum_{i=1}^d \frac{x_{t,i} g_{t,i}^2}{c_i}~.
\end{align*}
Simplifying, we have the stated result.
\end{proof}

Let's now see how we can choose the prior $\pi$ and the initial point $\bx_1 \in \Delta^{d-1}$.
First, let's calculate the value of the Bregman divergence:
\begin{align*}
B_\psi(\bu;\bx_1)
&=\sum_{i=1}^d c_i u_i \ln u_i-\sum_{i=1}^d c_i x_{1,i} \ln x_{1,i} - \sum_{i=1}^d c_i (\ln x_{1,i}+1)(u_i-x_{1,i})\\
&=\sum_{i=1}^d c_i u_i \ln \frac{u_i}{x_{1,i}} - \sum_{i=1}^d c_i u_i + \sum_{i=1}^d c_i x_{1,i}~.
\end{align*}

Define $c_{\min}=\min_i \ c_i$, $c_{\max}=\max_i \ c_i$, and $i_{\min}$ to be any index in $\argmin_i \ c_i$.
If we set $\bu=\be_j$, we have
\[
B_\psi(\bu;\bx_1)
= c_j \ln \frac{1}{x_{1,j}} + \sum_{i=1}^d c_i x_{1,i} - c_j~.
\]

Now, set $x_{1,i} = \frac{c_{\min}}{c_i} \pi_i$ for $i\neq i_{\min}$ and $x_{1,i_{\min}}=1-\sum_{i\neq i_{\min}} x_{1,i}$.
Also, set $\pi_i = \frac{c_i}{\sum_{j=1}^d c_j}$.
In this way, for $j\neq i_{\min}$, we have
\begin{align*}
B_\psi(\bu;\bx_1)
&= c_j \ln \frac{1}{x_{1,j}} + \sum_{i=1}^d c_i x_{1,i} - c_j\\
&= c_j \ln \frac{c_j}{ c_{\min} \pi_j} + c_{\min} (1-\pi_{i_{\min}}) + c_{\min} x_{1,i_{\min}}- c_j\\
&\leq c_j \ln \frac{c_j}{ c_{\min} \pi_j} + 2 c_{\min}  - c_j\\
&\leq c_j \ln \frac{c_j}{ c_{\min} \pi_j} + c_j
= c_j \ln \frac{\sum_{k=1}^d c_k}{ c_{\min} } + c_j~.
\end{align*}
On the other hand, for $j=i_{\min}$, we have
\[
x_{1,i_{\min}}
= 1-\sum_{i\neq i_{\min}} x_{1,i}
\geq 1-\sum_{i\neq i_{\min}} \pi_{i}
= \pi_{i_{\min}}
= \frac{ c_{\min} }{\sum_{k=1}^d c_k}~.
\]
Hence, the previous bound holds for all $j$.
In this case, choosing $\eta=\frac{1}{5+\frac{\sqrt{T}}{\sqrt{1+\ln \frac{\sum_{i=1}^d c_i}{c_{\min}}}}}\leq \frac{1}{5}$, for all $j=1, \dots, d$, we have
\[
\sum_{t=1}^T \langle \bg_t, \bx_t- \be_j\rangle
\leq 2 c_j \sqrt{T \left(1+\ln \frac{\sum_{i=1}^d c_{i}}{ c_{\min} }\right)} + 5 c_j \left(1+\ln \frac{\sum_{i=1}^d c_{i}}{ c_{\min} }\right)~.
\]
Note that if $c_1=c_2=\dots=c_d$ then $\ln \frac{\sum_{i=1}^d c_{i}}{ c_{\min} }=\ln d$, so we recover the bound of the \ac{EG} algorithm, up to constant factors. On the other hand, if the scales $c_i$ are different, then the bounds depend on the range of the expert with whom we compete and only logarithmically on the ratio between the largest and smallest scale.

\section{Application: Combining Online Algorithms to Adapt to the Learning Rate}
\label{sec:easy_metagrad}

We now show an interesting application of the multi-scale expert algorithm.

In our analysis of \ac{OSD}, we have seen that the choice of the learning rate $\eta_t$ is critical for performance. For instance, with a constant learning rate $\eta$, the optimal choice that minimizes the regret bound is $\eta^\star = \frac{\|\bu-\bx_1\|_2}{\sqrt{\sum_{t=1}^T \|\bg_t\|_2^2}}$, which unfortunately depends on the competitor $\bu$ and the entire sequence of future gradients. One might be tempted to use a grid of learning rates and select the best one in hindsight, but unfortunately, this is not a valid online learning procedure.

In this section, we demonstrate how to use the \ac{LEA}\index{learning with expert advice} framework to design a meta-algorithm that automatically adapts to the best learning rate from a given set, paying only a small price in the regret. The core idea is to treat each instance of an online learning algorithm with a fixed learning rate as an ``expert''. We then use a multi-scale expert algorithm to combine the predictions of these experts. The resulting ensemble algorithm will have a regret guarantee that is close to the regret of the best expert---and thus the best learning rate---in hindsight.

Let us consider running $N$ parallel instances of the \ac{OSD} algorithm, each with a different learning rate $\eta^{(i)}$ for $i=1, \dots, N$. At each round $t$, each \ac{OSD} instance $i$ produces a prediction $\bx_t^{(i)}$. We can view these $N$ predictions as advice from $N$ experts. Our goal is to combine them into a single prediction $\bx_t$ that performs nearly as well as the best prediction $\bx_t^{(i^\star)}$ from the best \ac{OSD} instance $i^\star$.

A straightforward approach would be to compute the loss $\ell_t(\bx_t^{(i)})$ for each expert $i$ and use this as the loss vector for the multi-scale expert algorithm.
However, this would require computing $N$ separate subgradients $\bg_t^{(i)} \in \partial\ell_t(\bx_t^{(i)})$ at each round, which can be computationally expensive.

To create an efficient algorithm that requires only one subgradient evaluation per round, we can use the linearization technique. The controller algorithm forms its combined prediction $\bx_t = \sum_{i=1}^N p_{t,i} \bx_t^{(i)}$. We then receive a single subgradient $\bg_t \in \partial\ell_t(\bx_t)$ at this combined point. This single subgradient is then used to define a linear surrogate loss, $\tilde{\ell}_t(\bx) = \langle \bg_t, \bx \rangle$, which is passed to all expert algorithms and to the controller algorithm.
Because all experts receive the same linear loss function, they all use the same subgradient $\bg_t$ for their updates. The loss for the $i$-th expert, used by the multi-scale expert algorithm, is simply $\langle \bg_t, \bx_t^{(i)} \rangle$. This procedure is summarized in Algorithm~\ref{alg:ms_osd}.

\begin{algorithm}[t]
\caption{Combining OSD Instances with a Multi-Scale Expert Algorithm}
\label{alg:ms_osd}
\begin{algorithmic}[1]
{
    \REQUIRE{Number of OSD instances $N$, number of rounds $T$, sequence of learning rates $\eta_t^{(1)}, \dots, \eta_t^{(N)}$, initial radius $R$}
    \STATE{Initialize $N$ copies of OSD: $\bx_1^{(i)} = \boldsymbol{0}$ for $i=1, \dots, N$}
    \STATE{Set $\mathcal{V}^{(i)} = \{\bx \in \R^d: \|\bx\|_2\leq R 2^{i-1}\}$}
    \STATE{Set $c_i = L R 2^{i-1}$ for $i=1, \dots, N$}
    \STATE{Set the prior in the Multi-scale Expert algorithm: $\pi_i=\frac{c_i}{\sum_{j=1}^N c_j}$ for $i=1, \dots, N$}
    \STATE{Initialize the Multi-scale Expert algorithm: $p_{1,i} = \frac{c_1}{c_i} \pi_i$ for $i=2, \dots, N$ and $p_{1,1}=1-\sum_{j=2}^N p_{1,j}$}
    \FOR{$t=1$ {\bfseries to} $T$}
    \STATE{Get $\bp_t \in \Delta^{N-1}$ from the Multi-scale Expert algorithm}
    \STATE{Output combined prediction $\bx_t = \sum_{i=1}^N p_{t,i} \bx_t^{(i)}$}
    \STATE{Pay the loss $\ell_t(\bx_t)$, where $\ell_t$ is subdifferentiable on $\R^d$}
    \STATE{Set $\bg_t \in \partial \ell_t(\bx_t)$}
    \STATE{Define loss vector: $z_{t,i} = \langle \bg_t, \bx_t^{(i)} \rangle$ for $i=1, \dots, N$}
    \STATE{Pass $\bz_t$ to the Multi-scale Expert algorithm}
    \STATE{Update each OSD instance: $\bx_{t+1}^{(i)} = \Pi_{\mathcal{V}^{(i)}}(\bx_t^{(i)} - \eta^{(i)}_t \bg_t)$ for $i=1, \dots, N$}
    \ENDFOR
}
\end{algorithmic}
\end{algorithm}

We can now prove a regret bound for this ensemble algorithm.
\begin{theorem}
Let $R>0$ and fix $T$, the number of rounds. With the notation in Algorithm~\ref{alg:ms_osd}, assume that the losses $\ell_t$ are $L$-Lipschitz with respect to $\|\cdot\|_2$. Let $\eta^{(i)}_t=\frac{R 2^{i-1}}{L \sqrt{T}}$ be the set of learning rates for the \ac{OSD} experts, $N = \lceil \log_2\sqrt{T}\rceil+1$, and the learning rate of the multi-scale expert algorithm in Algorithm~\ref{alg:MSE} to $\alpha=\frac{\sqrt{1+\ln(2^N-1)}}{5\sqrt{1+\ln(2^N-1)}+\sqrt{T}}$. Then, for all $\bu$ such that $\|\bu\|_2 \leq R \sqrt{T}$, Algorithm~\ref{alg:ms_osd} satisfies
\[
\sum_{t=1}^T (\ell_t(\bx_t) - \ell_t(\bu))
\leq \max(2\|\bu\|_2, R) L \left[\sqrt{T}\left(1 + 2\sqrt{3+\ln\sqrt{T}}\right)+ 15+5\ln \sqrt{T}\right].
\]
\end{theorem}
\begin{proof}
The choice of $N$ ensures that for any $\bu$ such that $\|\bu\|_2\leq R \sqrt{T}$ there exists an expert $j_{\bu}\in\{1,\dots,N\}$ such that $\|\bu\|_2 \leq R2^{j_{\bu}-1} \leq \max(2\|\bu\|_2,R)$.
Indeed, if $\|\bu\|_2\leq R$ we take $j_{\bu}=1$, while otherwise we take $j_{\bu}=\left\lceil\log_2 \frac{\|\bu\|_2}{R}\right\rceil+1$.

By the definition of subgradient, the regret of the ensemble algorithm can be bounded by the regret on the linearized losses:
\[
\sum_{t=1}^T (\ell_t(\bx_t) - \ell_t(\bu))
\leq \sum_{t=1}^T \langle \bg_t, \bx_t - \bu \rangle~.
\]
We can now decompose the regret as
\begin{equation}
\label{eq:multiscale_combine_eq1}
\sum_{t=1}^T \langle \bg_t, \bx_t - \bu \rangle
= \sum_{t=1}^T \langle \bg_t, \bx_t - \bx_t^{(j_{\bu})} \rangle + \sum_{t=1}^T \langle \bg_t, \bx_t^{(j_{\bu})} - \bu \rangle~.
\end{equation}

The second sum in r.h.s. of \eqref{eq:multiscale_combine_eq1} is the regret of the best \ac{OSD} expert on the sequence of linear losses $\langle \bg_t, \cdot \rangle$. From the regret bound of \ac{OSD} with constant learning rate $\eta^{(j_{\bu})}=\frac{R 2^{j_{\bu}-1}}{L \sqrt{T}}$ and initial point equal to zero, we have
\[
\sum_{t=1}^T \langle \bg_t, \bx_t^{(j_{\bu})} - \bu \rangle
\leq R 2^{j_{\bu}-1} L \sqrt{T}
\leq \max(2\|\bu\|_2, R) L \sqrt{T}~.
\]
For the first sum in the r.h.s. of \eqref{eq:multiscale_combine_eq1}, we have
\begin{align*}
\sum_{t=1}^T \langle \bg_t, \bx_t - \bx_t^{(j_{\bu})} \rangle
&= \sum_{t=1}^T \left\langle \bg_t, \sum_{i=1}^N p_{t,i} \bx_t^{(i)} - \bx_t^{(j_{\bu})} \right\rangle\\
&= \sum_{t=1}^T \left(\sum_{i=1}^N p_{t,i} \langle \bg_t, \bx_t^{(i)} \rangle - \langle \bg_t, \bx_t^{(j_{\bu})} \rangle \right)~.
\end{align*}
This is the regret of the multi-scale expert algorithm against the expert $j_{\bu}$ on the sequence of loss vectors $\bz_t$ where $|z_{t,i}| = |\langle \bg_t, \bx_t^{(i)} \rangle| \leq \|\bg_t\|_2 \|\bx_t^{(i)}\|_2 \leq L 2^{i-1} R$ for all $i=1, \dots, N$, where we used that $\ell_t$ is $L$-Lipschitz and $\mathcal{V}^{(i)}$ has radius $R 2^{i-1}$. This is exactly the setting we used for the scales $c_i$.
So, using the regret bound for the multi-scale expert algorithm, we get
\[
\sum_{t=1}^T \langle \bg_t, \bx_t - \bx_t^{(j_{\bu})} \rangle
\leq 2 L R 2^{j_{\bu}-1} \sqrt{T (1+\ln (2^{N}-1))} + 5 L R 2^{j_{\bu}-1} \left(1+\ln (2^{N}-1)\right)~.
\]
Using the fact that $R 2^{j_{\bu}-1}\leq \max(2\|\bu\|_2, R)$, the expression of $N$, and combining the bounds for both terms completes the proof.
\end{proof}

This result shows that by combining \ac{OSD} instances, we can achieve a regret that is close to the one obtained by the best learning rate in the chosen set.
The constraint on $\bu$ is motivated by the fact that for larger competitors, the regret would not be sublinear in $T$, even with prior knowledge of the norm of the competitor. Hence, it is impossible to compete with $\bu$ that are too large.
The additional multiplicative price for the adaptivity is $\mathcal{O}(\sqrt{\ln T})$, which we know to be necessary by the lower bound for unconstrained \ac{OCO} in Section~\ref{sec:lower_unconstrained_olo}.
\index{Multi-Scale Expert algorithm|)textbf}

This technique provides a principled method for automating the selection of learning rates in an online fashion. However, its interest is mostly theoretical: it shows that such adaptivity to $\bu$ is possible, but one should not expect a strong empirical performance. In Chapter~\ref{ch:parameterfree}, we will see how to obtain learning-rate-free algorithms for unbounded domains, without the need to combine several base online learners.

\section{Optimistic Online Mirror Descent}
\label{sec:optimistic_omd}

\begin{algorithm}[h]
\caption{Optimistic Online Mirror Descent (Optimistic OMD)}
\label{alg:optimistic_omd}
\begin{algorithmic}[1]
{
    \REQUIRE{Non-empty closed convex $\mathcal{V} \subset \mathcal{X}\subseteq \R^d$, $\psi: \mathcal{X} \to \R$ strictly convex and differentiable on $\interior \mathcal{X}$, $\bx_1 \in \interior \mathcal{X} \cap \mathcal{V}$, $\eta_1,\dots,\eta_T>0$}
    \STATE{$\tilde{\bg}_1=\boldsymbol{0}$}
    \FOR{$t=1$ {\bfseries to} $T$}
    \STATE{Output $\bx_t \in \mathcal{V}$}
    \STATE{Pay the loss $\ell_t(\bx_t)$, where $\ell_t$ is subdifferentiable on $\mathcal{V}$}
    \STATE{Set $\bg_t \in \partial \ell_t(\bx_t)$}
    \IF{$t < T$}
    \STATE{Get hint $\tilde{\bg}_{t+1} \in \R^d$ on the subgradient}
    \ELSE
    \STATE{Get hint $\tilde{\bg}_{T+1} =\boldsymbol{0}$}
    \ENDIF
    \STATE{$\bx_{t+1} \in \argmin_{\bx \in \mathcal{V}} \ \eta_t \langle \bg_t-\tilde{\bg}_t+\tilde{\bg}_{t+1}, \bx\rangle + B_\psi(\bx; \bx_{t})$}
    \ENDFOR
}
\end{algorithmic}
\end{algorithm}

Till now, we have mainly considered the adversarial model as our model of the environment. However, the world is never completely adversarial. So, we might be tempted to model the environment in some way, but that would leave our algorithm vulnerable to attacks. An alternative is to consider the data as generated by some \emph{predictable process plus an adversarial signal}. In this view, it might be beneficial to try to model the predictable part, without compromising the robustness to the adversarial signal.

In this section, we will explore this possibility through a particular version of \ac{OMD}, where we \emph{predict} the next gradient. In very intuitive terms, if our predicted gradient is correct, we can expect the regret to decrease. However, if our prediction is wrong, we still want to recover the worst-case guarantee.
Such an algorithm is called \textbf{Optimistic \ac{OMD}}.\index{Online Mirror Descent algorithm!optimistic|(textbf}

The pseudo-code of Optimistic \ac{OMD} is summarized in Algorithm~\ref{alg:optimistic_omd}. At round $t$, the algorithm receives a hint $\tilde{\bg}_{t+1}$ on the next subgradient $\bg_{t+1}$ and uses it to construct the update. At the same time, you have to remove the hint you used at the previous time step, $\tilde{\bg}_{t}$. Note that for the sake of the analysis, it does not matter how the prediction is generated. It can even be generated by another online learning procedure!

To gain some intuition on why this update makes sense, consider the case that $\psi(\bx)=\frac{1}{2}\|\bx\|_2^2$, $\eta_t=\eta$, and $\mathcal{V}=\R^d$. In this case, $\bx_{t+1} = \bx_{t} + \eta \tilde{\bg}_{t} - \eta \bg_t - \eta \tilde{\bg}_{t+1}$. Unrolling the update, we get $\bx_{t+1} = \bx_1 - \eta (\tilde{\bg}_{t+1}+\sum_{i=1}^t \bg_i)$.
Without hints, that is in plain \ac{OMD}, under the same assumptions the unrolled update would be $\bx_{t+1} = \bx_1 - \eta \sum_{i=1}^t \bg_i$ and $\bx_{t+2} = \bx_1 - \eta \sum_{i=1}^{t+1} \bg_i$. Hence, $\tilde{\bg}_{t+1}$ is a proxy for the next (unknown) subgradient $\bg_{t+1}$.

One might be tempted to multiply $\tilde{\bg}_t$ by $\eta_{t-1}$, because in the previous iteration we used the learning rate $\eta_{t-1}$. However, the \ac{OMD} proof reveals that the correct approach is to think of the learning rate as attached to the Bregman divergence\index{Bregman divergence} rather than to the subgradients.

One might also be tempted to find a way to study this algorithm with a special proof. However, the one-step lemma we proved for \ac{OMD} is essentially tight: we only used two inequalities, one to deal with the set $\mathcal{V}$ and the other one to linearize the losses. But, both steps can be made tight, considering $\mathcal{V}=\R^d$ and linear losses. Hence, if the update is just \ac{OMD} with a different sequence of subgradients, the proof \emph{must} follow from that of \ac{OMD} with a different set of subgradients.

Observe that setting $\tilde{\bg}_1=\boldsymbol{0}$ is not a limitation because setting it to any other value would be equivalent to changing the initial point $\bx_1$. In the same way, setting $\tilde{\bg}_{T+1}=\boldsymbol{0}$ does not change the behaviour of the algorithm in rounds $t=1, \dots, T$, but it simplifies the analysis.

\begin{theorem}
\label{thm:oomd}
With the notation and initialization in Algorithm~\ref{alg:optimistic_omd},
assume $\eta_{t+1}\leq \eta_{t}, \ t=1, \dots, T-1$.
Define $\delta_t=\max(\|\bg_t-\tilde{\bg}_t\|_\star-\|\bg_t-\tilde{\bg}_t+\tilde{\bg}_{t+1}\|_\star,0)$.
Then, under the assumptions of Lemma~\ref{lemma:omd_one_step}, for all $\bu \in \mathcal{V}$, the following regret bounds hold
\begin{align*}
&\sum_{t=1}^T (\ell_t(\bx_t)- \ell_t(\bu))\\
&\leq \frac{ \max_{1\leq t \leq T} \ B_\psi(\bu;\bx_t)}{\eta_{T}} + \sum_{t=1}^T \left(\langle \bg_t-\tilde{\bg}_t,\bx_{t}-\bx_{t+1}\rangle- \frac{1}{\eta_t} B_\psi(\bx_{t+1};\bx_t)\right).
\end{align*}
Moreover, if $\eta_t$ is constant, i.e., $\eta_t=\eta \ \forall t=1,\dots,T$, we have
\begin{align*}
\sum_{t=1}^T (\ell_t(\bx_t)- \ell_t(\bu))
\leq \frac{B_\psi(\bu;\bx_1)}{\eta} + \sum_{t=1}^T \left(\langle \bg_t-\tilde{\bg}_t,\bx_{t}-\bx_{t+1}\rangle- \frac{1}{\eta} B_\psi(\bx_{t+1};\bx_t)\right).
\end{align*}
Moreover, in both cases for all $t$ we have
\begin{align*}
&\langle \bg_t-\tilde{\bg}_t,\bx_{t}-\bx_{t+1}\rangle- \frac{1}{\eta_t} B_\psi(\bx_{t+1};\bx_t)\\
&\quad\leq \frac{\eta_t (\|\bg_t-\tilde{\bg}_t\|^2_\star-\delta^2_t)}{2\lambda}\\
&\quad\leq \frac{\eta_t \|\bg_t-\tilde{\bg}_t\|_\star\min(\|\bg_t-\tilde{\bg}_t\|_\star, 2 \|\bg_t-\tilde{\bg}_t+\tilde{\bg}_{t+1}\|_\star)}{2\lambda}~.
\end{align*}
\end{theorem}

To prove this theorem, we will use the following technical lemmas.
\begin{lemma}
\label{lemma:diff_iterate_omd}
Assume $\psi:\mathcal{X}\to \R$ to be $\lambda$-strongly convex\index{function!strongly convex} with respect to $\|\cdot\|$ and $\bu \in \mathcal{V} \cap \interior \mathcal{X}$. Let $\mathcal{V}\subseteq \mathcal{X}$ be a non-empty convex closed set.
Assume that $\bx'=\argmin_{\bx \in \mathcal{V}} \ B_\psi(\bx;\bu) + \eta\langle \bg,\bx\rangle$ exists and belongs to $\interior\mathcal{X}$.
Then, we have
\[
\|\bx'-\bu\|
\leq \frac{\eta}{\lambda} \|\bg\|_\star~.
\]
\end{lemma}
\begin{proof}
Observe that from the optimality condition of Theorem~\ref{thm:constr_opt_condition} for $\bx'$, we have
\begin{equation}
\label{eq:lemma:diff_iterate_omd_eq1}
\langle \eta \bg+\nabla \psi(\bx')-\nabla \psi(\bu), \bv-\bx'\rangle\geq 0, \quad \forall \bv \in \mathcal{V}~.
\end{equation}
Hence, we have
\begin{align*}
\lambda \|\bx'-\bu\|^2
&\leq B_\psi(\bu;\bx')+B_\psi(\bx';\bu)
= \langle \nabla \psi(\bx') - \nabla \psi(\bu), \bx'-\bu\rangle
\leq \langle \eta \bg, \bx' - \bu\rangle\\
&\leq \eta \|\bg\|_\star \|\bx'-\bu\|~.
\end{align*}
where in the first inequality we used the strong convexity of $\psi$, \eqref{eq:lemma:diff_iterate_omd_eq1} in the second one, and the definition of dual norms in the last one.
Reordering, we have the stated bound.
\end{proof}

\begin{lemma}
\label{lemma:constrained_bound}
Let $\bg \in \R^d$ and $a,c>0$. Then, we have
\begin{align*}
\sup_{\bv \in \R^d: \|\bv\|\leq c/a} \ \langle \bg,\bv\rangle - \frac{a}{2}\|\bv\|^2
= \frac{\|\bg\|^2_\star-\max(\|\bg\|_\star-c,0)^2}{2 a }
\leq \frac{\|\bg\|_\star}{2 a } \min\left(\|\bg\|_\star, 2 c \right).
\end{align*}
\end{lemma}
\begin{proof}
\begin{align*}
\sup_{\bv \in \R^d: \|\bv\|\leq c/a} \ \langle \bg,\bv\rangle - \frac{a}{2}\|\bv\|^2
&=\sup_{0\leq x\leq c/a} \ \sup_{\bv \in \R^d: \|\bv\|= x} \ \langle \bg,\bv\rangle - \frac{a}{2}\|\bv\|^2\\
&=\sup_{0\leq x\leq c/a} \  x\|\bg\|_\star - \frac{a}{2}x^2\\
&=\frac{1}{a} \min(\|\bg\|_\star,c)\left[\|\bg\|_\star-\frac{1}{2}\min(\|\bg\|_\star,c)\right]\\
&=\frac{1}{2a} \left[\|\bg\|^2_\star-\max(\|\bg\|_\star-c,0)^2\right]~.
\end{align*}
The second inequality is obtained by lower-bounding the max with its two possible values and overapproximating.
\end{proof}


We can now prove the theorem.
\begin{proof}[Proof of Theorem~\ref{thm:oomd}]
We use Lemma~\ref{lemma:omd_one_step} with $\bg_t \to \bg_t - \tilde{\bg}_t + \tilde{\bg}_{t+1}$, that gives us that $\bx_{t+1}$ exists, and it is in $\interior \mathcal{X}$.
Moreover, we also have
\begin{align*}
\langle &\bg_t - \tilde{\bg}_t + \tilde{\bg}_{t+1}, \bx_t - \bu\rangle\\
&\leq \frac{1}{\eta_t}\left(B_\psi(\bu;\bx_t) - B_\psi(\bu;\bx_{t+1}) - B_\psi(\bx_{t+1};\bx_t)\right) + \langle \bg_t - \tilde{\bg}_t + \tilde{\bg}_{t+1}, \bx_t - \bx_{t+1} \rangle~.
\end{align*}
Summing over $t=1,\dots,T$ the l.h.s., we obtain
\begin{align*}
\sum_{t=1}^T &\langle \bg_t - \tilde{\bg}_t + \tilde{\bg}_{t+1}, \bx_t - \bu\rangle\\
&= \sum_{t=1}^T \langle \bg_t, \bx_t - \bu\rangle + \sum_{t=1}^T \langle \tilde{\bg}_{t+1} -\tilde{\bg}_t, \bx_t-\bu\rangle \\
&= \sum_{t=1}^T (\langle \bg_t, \bx_t - \bu\rangle + \langle  \tilde{\bg}_{t+1} - \tilde{\bg}_t, \bx_t\rangle)+\langle \tilde{\bg}_{1} - \tilde{\bg}_{T+1}, \bu\rangle ~.
\end{align*}
Summing the r.h.s., we have that
\begin{align*}
\sum_{t=1}^T \langle \bg_t - \tilde{\bg}_t + \tilde{\bg}_{t+1}, \bx_t - \bx_{t+1} \rangle
&= \sum_{t=1}^T \langle \bg_t-\tilde{\bg}_{t}, \bx_t - \bx_{t+1} \rangle + \sum_{t=1}^T \langle \tilde{\bg}_{t+1}, \bx_t - \bx_{t+1} \rangle~.
\end{align*}
Finally, observe that
\begin{align*}
\sum_{t=1}^T \langle \tilde{\bg}_{t+1} - \tilde{\bg}_t, \bx_t\rangle - \sum_{t=1}^T \langle \tilde{\bg}_{t+1}, \bx_t - \bx_{t+1} \rangle
&= \sum_{t=1}^T (\langle \tilde{\bg}_{t+1},\bx_{t+1}\rangle -\langle \tilde{\bg}_t,\bx_t\rangle)\\
&= \langle \tilde{\bg}_{T+1},\bx_{T+1}\rangle - \langle \tilde{\bg}_1,\bx_1\rangle~.
\end{align*}

Telescoping the terms with the Bregman divergences\index{Bregman divergence} gives the first bound. The bound for fixed $\eta$ is proved similarly.

Now, using the strong convexity of $\psi$, we have
\[
\langle \bg_t-\tilde{\bg}_t,\bx_{t}-\bx_{t+1}\rangle- \frac{1}{\eta_t} B_\psi(\bx_{t+1};\bx_t)
\leq \langle \bg_t-\tilde{\bg}_t,\bx_{t}-\bx_{t+1}\rangle- \frac{\lambda}{2 \eta_t} \|\bx_t-\bx_{t+1}\|^2~.
\]
From Lemma~\ref{lemma:diff_iterate_omd}, we know that $\|\bx_{t}-\bx_{t+1}\|$ is upper bounded by $\frac{\eta_t}{\lambda}\|\bg_t-\tilde{\bg}_t+\tilde{\bg}_{t+1}\|_\star$. Hence, we have
\[
\langle \bg_t-\tilde{\bg}_t,\bx_{t}-\bx_{t+1}\rangle- \frac{\lambda}{2 \eta_t} \|\bx_t-\bx_{t+1}\|^2
\leq \max_{\bv\in \R^d: \|\bv\|\leq \frac{\eta_t}{\lambda}\|\bg_t-\tilde{\bg}_t+\tilde{\bg}_{t+1}\|_\star} \ \langle \bg_t-\tilde{\bg}_t,\bv\rangle- \frac{\lambda}{2 \eta_t} \|\bv\|^2~.
\]
Using Lemma~\ref{lemma:constrained_bound} finishes the proof.
\end{proof}
\index{Online Mirror Descent algorithm!optimistic|)textbf}

In the next section, we show an immediate application of Optimistic \ac{OMD} to study \ac{OMD} with delayed feedback. In Section~\ref{sec:saddle_point_optimism}, we describe the applications of optimistic algorithms for saddle-point optimization.

\section{Application: OMD with Delayed Feedback}
\label{sec:delayed_omd}

\index{Online Mirror Descent algorithm!optimistic|(}
\index{delayed feedback|(textbf}
We now consider online learning with \emph{delayed feedback}. For simplicity, we consider a constant delay of length $\tau$, where at time $t$ the learner has only observed $\ell_1, \dots, \ell_{t-1-\tau}$ before producing $\bx_t$. In other words, the learner observes $\ell_t$ at time $t+\tau$. This means that the algorithm receives its first feedback at round $\tau+1$ and it receives no feedback before that round.

Instead of designing online algorithms specifically for the case of delayed feedback, we will reduce the setting of online learning with delays to that of optimistic online learning.

In \ac{OMD} with delays we predict the same $\bx_1$ for the first $\tau+1$ rounds and
\[
\bx_{t+1}
= \argmin_{\bx \in \mathcal{V}} \ B_\psi(\bx;\bx_t) + \eta_{t} \langle \bg_{t-\tau}, \bx\rangle, \quad \forall t\geq \tau+1~.
\]
On the other hand, Optimistic \ac{OMD} without delays updates with
\[
\bx_{t+1}
= \argmin_{\bx \in \mathcal{V}} \ B_\psi(\bx;\bx_t) + \eta_{t} \langle \bg_{t}+\tilde{\bg}_{t+1}-\tilde{\bg}_{t}, \bx\rangle,
\]
where $\tilde{\bg}_1=\boldsymbol{0}$.
So, the two are \emph{equivalent} by setting $\tilde{\bg}_{t+1}-\tilde{\bg}_{t}=\bg_{t-\tau}-\bg_t$. So, unrolling, we have $\tilde{\bg}_{t+1}=-\sum_{i=\max(1,t-\tau+1)}^{t} \bg_i$.
In other words, \emph{we are receiving a very poor hint that corresponds to the delayed feedback}.

The above observation is very powerful because it immediately gives us the regret guarantee of \ac{OMD} with delays. Let's consider \ac{OMD} with delayed feedback and constant learning rate.
Following the delay-to-optimism conversion in the previous section, we set $\tilde{\bg}_{t+1}=-\sum_{i=\max(1,t-\tau+1)}^{t} \bg_i$ for $t=1, \dots, T-1$, yet we must also set $\tilde{\bg}_{T+1}=\boldsymbol{0}$. Hence, for all $\bu \in \mathcal{V}$, we obtain a regret upper bound with reduced terms in the sum for the first $T-1$ rounds:
\begin{align*}
\sum_{t=1}^T &\langle \bg_t, \bx_t-\bu\rangle\\
&\leq \frac{B_\psi(\bu;\bx_1)}{\eta} + \frac{\eta}{\lambda} \sum_{t=\tau+1}^{T-1} \left\|\sum_{i=\max(1,t-\tau)}^{t} \bg_i\right\|_\star \|\bg_{t-\tau}\|_\star + \frac{\eta}{2\lambda} \left\|\sum_{i=\max(1,T-\tau)}^{T} \bg_i\right\|^2_\star~.
\end{align*}
That is, assuming $\bg_t$ is bounded for all $t$, we improved the worst-case dependence in the dominant term of the bound from $\tau^2$ to $\tau$. Optimally choosing the learning rate, we obtain the optimal regret guarantee of $\mathcal{O}(\sqrt{\tau T})$.
\index{delayed feedback|)textbf}
\index{Online Mirror Descent algorithm!optimistic|)}

\section{History Bits}

The Bregman divergence\index{Bregman divergence} was introduced by \citet{Bregman67} as a particular example of a distance-like function satisfying certain properties, to generalize the cyclic projection algorithm to general topological vector spaces. Often people drop the condition on the strict convexity \citep[e.g.,][]{BauschkeBC03}, but in reality it is part of the original definition by \citet{Bregman67}.

Mirror Descent (MD)\index{Mirror Descent algorithm} was introduced by \citet{NemirovskijY83} in the \emph{offline} setting. The description of MD with Bregman divergence\index{Bregman divergence} that I described here (with minor changes) was done by \citet{BeckT03}. The minor changes are in decoupling the domain $\mathcal{X}$ of $\psi$ from the feasibility set $\mathcal{V}$. This allows us to use functions $\psi$ that do not satisfy the condition \eqref{eq:cond_omd1}, but they satisfy \eqref{eq:cond_omd2}.
In the online setting, the mirror descent scheme was used for the first time by \citet{Warmuth97}.

Most of the online learning literature for \ac{OMD} assumes $\psi$ to be \emph{Legendre}~\citep[see, e.g.,][]{Cesa-BianchiL06}\index{function!Legendre} that corresponds to assuming \eqref{eq:cond_omd1} (or $\lim_{\bx \to \bdry \mathcal{X}} \|\nabla \psi(\bx)\|_2 = +\infty$, see \citep[Theorem 26.1 and Lemma 26.2]{Rockafellar70}). This condition allows us to prove that $\nabla \psi_{\mathcal{V}}^\star=(\nabla \psi_{\mathcal{V}})^{-1}$. However, it turns out that the Legendre condition is not necessary, and we only need the function $\psi$ to be differentiable on the predictions $\bx_t$. For example, we only need one of the two conditions in \eqref{eq:cond_omd1} or \eqref{eq:cond_omd2} to hold. Removing the Legendre assumption makes it easier to use \ac{OMD} with different combinations of feasibility sets/Bregman divergences\index{Bregman divergence}. So, I did not introduce the concept of Legendre functions at all, relying instead on (a minor modification of) \ac{OMD} as described by \citet{BeckT03}.
Theorem~\ref{thm:omd_well_defined} is derived from \citep[Theorem 3.12]{BauschkeB97}.

The proof of Theorem~\ref{thm:prop_fenchel_sc}\index{duality strong convexity-smoothness} is based on the one in \citet{KakadeSST09}.

The concept of local norms\index{norm!local} was introduced in \citet{AbernethyHR08} for \ac{FTRL} with self-concordant regularizers\index{regularizer!self-concordant}.

The \ac{EG}\index{Exponentiated Gradient algorithm!OMD version} algorithm was introduced by \citet{KivinenW97}, but not as a specific instantiation of \ac{OMD}.
\citet{BeckT03} rediscover \ac{EG} for the offline case as an example of MD. Later, \citet{Cesa-BianchiL06} shows that \ac{EG} is just an instantiation of \ac{OMD}. The $p$-norm algorithms for online prediction were originally introduced by \citet{GroveLS97,GroveLS01}. Lemma~\ref{lemma:strong_convexity_pnorm} is well-known, but I could not find a good proof for it,\footnote{People often cite \citet[Lemma 17]{Shalev-Shwartz07}, but it has a wrong proof because it ignores the fact that the function is not twice differentiable.} so I wrote one. The trick to set $q=2\ln d$ is from \citet{GentileL99,Gentile03} (online learning) and apparently rediscovered in \citet{Ben-TalMN01} (optimization). \index{learning with expert advice|(}The \ac{LEA} setting was introduced by \citet{LittlestoneW94} and \citet{Vovk90}. The ideas in Algorithm~\ref{alg:lea} are based on the Multiplicative Weights algorithm\index{Multiplicative Weights algorithm}~\citep{LittlestoneW94} and the Hedge algorithm\index{Hedge algorithm}~\citep{FreundS95,FreundS97}, both being just \ac{OMD}/\ac{FTRL} with the entropic distance generating function/regularizer. As a side note, the weighted majority algorithm was also discovered independently in the game theory literature by \citet{FudenbergL95}.
For two experts with losses in $[0,1]$, \citet{Cover65} showed that the asymptotic minimax regret is $\sqrt{\frac{T}{2\pi}}$ and proposed an algorithm achieving it. Notably, the approach in \citet{Cover65} is based on online betting. On the other hand, for more than 2 experts and losses in $[0,1]$, the minimax regret is $(1+o(1)) \sqrt{\frac{T \ln d}{2}}$, where $o(1)\to 0$ when $d, T\to \infty$~\citep{Cesa-BianchiFHHSW93,Cesa-BianchiFHHSW97}.
By now, the literature on \ac{LEA} is huge, with tons of variations over algorithms and settings.\index{learning with expert advice|)}


\index{Multi-Scale Expert algorithm|(}
The multi-scale setting was introduced independently\footnote{Dylan Foster, personal communication, 2026} and around the same time by \citet{BubeckDHN17,BubeckDHN19} and \citet{FosterKMS17}. \citet{BubeckDHN17,BubeckDHN19} propose two algorithms, one for non-positive losses and one for generic losses. Unfortunately, their proof for the generic losses appears to be wrong and not easily fixable. Their bound is similar to the one we proved here.
\citet{FosterKMS17} proved a stronger result, where the term in the logarithm depends on $c_i/\pi_i$ rather than on the sum of the scales and the minimal value. However, this algorithm has a computational complexity at step $t$ of $\mathcal{O}(t)$.
\citet{CutkoskyO18} proposed another solution that achieves the same bound of \citet{FosterKMS17}, but with constant computational complexity.
The algorithm I describe here is a simplification of the one in \citet{ChenLW21}. The bound is worse than the one in \citet{CutkoskyO18} and \citet{FosterKMS17}, but it is simpler to describe, and it also allowed me to describe the method of the shifted surrogate loss.
The use of shifted surrogate losses in \ac{LEA} setting was proposed in \citet{HazanK08,HazanK10} and later used in \citet{Steinhardt14}. Interestingly, these shifted surrogates can be understood as an approximation of the coin-betting instantaneous log wealth, as one can see by comparing the update of Squint~\citep{KoolenVE15} with the update of the KT-based parameter-free algorithm for \ac{LEA}.
\index{Multi-Scale Expert algorithm|)}

The construction in Section~\ref{sec:easy_metagrad} is from \citet{FosterKMS17}, but they require $N$ function values per step, while we use only one subgradient per step using the linearization idea proposed in MetaGrad~\citep{vanErvenK16,VanErvenKV21}. The idea of restricting the norm of the competitor to the ``meaningful ones'' is not necessary in the original formulation by \citet{FosterKMS17}, but it is standard, and it has been rediscovered many times in different forms, e.g., in a method privately proposed by Nemirovski in 2013 and described in the ICML 2020 tutorial\footnote{\url{https://parameterfree.com/icml-tutorial/}} on ``Parameter-free Online Optimization'' by Orabona and Cutkosky, in \citet[Theorem 2]{Orabona14}, in \citet[Section 5]{Cutkosky19c}.

\index{Online Mirror Descent algorithm!optimistic|(}
The idea of ``hallucinating'' future losses used in Optimistic \ac{OMD} is originally from \citet{AzouryW01} in the Forward Algorithm\index{Forward Algorithm}. Apparently, this idea was forgotten and rediscovered by \citet{Chiang12} that used the previous loss function as an estimate of the next one, showing smaller regret in the case that the losses have small temporal variation.
Later, \citet{RakhlinS13b} generalized this idea in the Optimistic \ac{OMD} algorithm. Surprisingly enough, the procedure using two Optimistic \ac{OGD} algorithms to solve saddle-point problems was already proposed by \citet{Popov80}, see also Section~\ref{sec:saddle-point_history}. Optimistic \ac{OMD} was proposed in \citet{RakhlinS13b} with a two-step update. It was then simplified to the one-step updates I presented here by \citet{JoulaniGS17}. However, \citet{Malitsky15} presented a version of Popov's algorithm for variational inequalities with only one projection that is essentially Optimistic \ac{OGD} with one projection. The proof I present here is based on the one I proposed for Optimistic \ac{FTRL} in Section~\ref{sec:optimistic_ftrl}.
\index{Online Mirror Descent algorithm!optimistic|)}

\index{delayed feedback|(}
\citet{WeinbergerO02} were the first to analyze the delayed feedback problem. They considered the adversarial full information setting with a fixed, known delay $\tau$. They achieved the optimal rate of $\mathcal{O}(\sqrt{\tau T})$ with a black-box reduction, by running $\tau+1$ online algorithms on subsampled sequences.
\citet{ZinkevichLS09} analyzed \ac{OMD} with delayed feedback and obtained the optimal regret. \citet{JoulaniGS13} unified most of the prior research on the effect of delay in adversarial and stochastic problems.
\citet{McMahanS14} studied a variant of AdaGrad with delays\index{AdaGrad algorithm}. \citet{QuanrudK15} studied \ac{OGD} and \ac{FTRL} with adversarially chosen delays, while \citet{JoulaniGS16} studied \ac{OMD} and \ac{FTRL}-Proximal with adaptive learning rates.

The equivalence of delays/bad hints and the improved regret guarantees for Optimistic \ac{OMD} and Optimistic \ac{FTRL} are due to \citet{FlaspohlerOCMOOM21}. However, their OMD bound contains a small mistake: they are missing the last terms in the bound, because we set $\tilde{\bg}_{T+1}=\boldsymbol{0}$, breaking the equivalence between optimistic and delayed updates for $\bx_{T+1}$. Note that while the choice of $\bx_{T+1}$ does not influence the algorithm in the rounds $1, \dots, T$, its value appears in the regret upper bound. \citet{FlaspohlerOCMOOM21} also show that \acl{RM} and \acl{RM+} algorithms automatically adapts to the delay $\tau$.
\index{delayed feedback|)}

Lemma~\ref{lemma:diff_iterate_omd} is from \citet{JoulaniGS16}.

\section{Exercises}


\begin{exer}
Prove the three-points equality for Bregman divergences\index{Bregman divergence} in Lemma~\ref{lemma:bregman_3_points}.
\end{exer}

\begin{exer}
Let $\bA \in \R^{d \times d}$ be a positive definite matrix. Define $\|\bx\|^2_{\bA}=\bx^\top \bA \bx$. Prove that $\frac12 \|\bx-\by\|^2_{\bA}$ is the Bregman divergence $B_\psi(\bx;\by)$ associated with $\psi(\bx)=\frac{1}{2}\|\bx\|_{\bA}^2$.
\end{exer}


\begin{exer}
Let $\psi: \mathcal{X} \to \R$ be a distance generating function\index{distance generating function} and $B_\psi$ its associated Bregman divergence\index{Bregman divergence}. Fix $\by \in \interior\mathcal{X}$ and define $f(\bx)= B_\psi(\bx;\by)$. For any $\bv \in \interior \mathcal{X}$ and $\bx \in \mathcal{X}$, prove that $B_f(\bx;\bv)=B_\psi(\bx;\bv)$.
\end{exer}

\begin{exer}
We want to show an equality quantifying the gap in Fenchel--Young's inequality\index{inequality!Fenchel--Young's} with a Bregman divergence\index{Bregman divergence} term.
Assume that $f$ and $f^\star$ are differentiable, $f$ strictly convex, and $\dom f =\R^d$. Prove that
\[
f(\bx)+ f^\star(\btheta) = \langle \btheta, \bx\rangle + B_f\left(\bx;\nabla f^\star(\btheta)\right)~.
\]
\end{exer}

\begin{exer}
Let $f : \R^d \to (-\infty, +\infty]$ be even. Prove that $f^\star$ is even.
\end{exer}

\begin{exer}
In the proof of \ac{OMD}, we have the terms $-B_\psi(\bx_{t+1};\bx_t)+\langle \eta_t \bg_t, \bx_t - \bx_{t+1}\rangle$.
Prove that they can be lower bounded by $B_\psi(\bx_t;\bx_{t+1})$.
\end{exer}

\begin{exer}
\label{exercise:relative_strong_convexity}
Generalize the concept of strong convexity to Bregman divergences\index{Bregman divergence}, instead of norms, and prove a logarithmic regret guarantee for such functions using \ac{OMD}.
\end{exer}

\begin{exer}
\label{exercise:doubling_trick_omd}
Consider the setting of Theorem~\ref{thm:md_online}, assume that
$\sup_{\bx,\by \in \mathcal{V}} B_\psi(\bx;\by) \leq D^2$, and that $\|\bg_t\|_\star \leq L$ for all $t$.
Suppose that the time horizon $T$ is not known in advance.

Use the following \emph{doubling trick}\index{doubling trick}. Split time into epochs of lengths $1,2,4,\dots$. At the beginning of each epoch of length $2^k$, restart
\ac{OMD} from an arbitrary point in $\interior \mathcal{X}\cap \mathcal{V}$ and
run it with the constant learning rate $\eta_k=\frac{D\sqrt{2\lambda}}{L\sqrt{2^k}}$.
Prove that, for every $T\geq 1$ and every $\bu \in \mathcal{V}$, this algorithm
satisfies
\[
\sum_{t=1}^T \left(\ell_t(\bx_t)-\ell_t(\bu)\right)
\leq \frac{2}{\sqrt{2}-1} \frac{D L}{\sqrt{\lambda}}\sqrt{T}~.
\]
\end{exer}

\begin{exer}
\index{Exponentiated Gradient algorithm!OMD version}
Derive the \ac{EG} update rule and regret bound in the case that the algorithm starts from an arbitrary vector $\bx_1$ in the probability simplex.
\end{exer}

\begin{exer}
\label{exercise:eg_invariant_constant}
\index{Exponentiated Gradient algorithm!OMD version}
Show that \ac{EG} is invariant to additive constants added to the loss vectors. Use this observation to show that the terms $\sum_{i=1}^d g_{t,i}^2 x_{t,i}$ for $t=1,\dots,T$ in the regret upper bound can be tightened to $\sum_{i=1}^d (g_{t,i}-m_t)^2 x_{t,i}$ for any $m_t \in \R$.
\end{exer}

\begin{exer}
Extend Theorem~\ref{thm:lower_bound_constr} to arbitrary norms, measuring the diameter with respect to a norm $\|\cdot\|$ and considering losses $L$-Lipschitz with respect to $\|\cdot\|$.
\end{exer}

\begin{exer}
In this problem, we will tackle Online Non-Convex Optimization\index{online non-convex optimization}.
Assume that $\mathcal{V} \subset \R^d$ is the feasible set and it is convex and bounded.
The losses $\ell_t:\R^d \to [0,1]$ are non-convex and $1$-Lipschitz with respect to $\|\cdot\|_2$.
Prove that there exists a randomized algorithm that achieves sublinear regret on this problem, assuming knowledge of the total number of rounds $T$. Hint: aim for something like $\E[\Regret_T(\bu)] = \mathcal{O}(\sqrt{d T \ln T})$ and do not worry about the efficiency of the algorithm.
\end{exer}

\index{Online Mirror Descent algorithm|)textbf}

\acresetall

\chapter{Follow-the-Regularized-Leader}
\label{ch:ftrl}

\index{Follow-the-Regularized-Leader algorithm|(textbf}
Until now, we focused only on Online Subgradient Descent and its generalization, Online Mirror Descent, with a brief ad-hoc analysis of \ac{FTL} in the first chapter.
In this chapter, we will extend \ac{FTL} to a powerful and generic algorithm to do \ac{OCO}: \ac{FTRL}.

\ac{FTRL} is a very intuitive algorithm: at each time step, it will play the minimizer of the sum of the past losses \emph{plus} a time-varying \emph{regularizer}, $\psi_t$\index{regularizer}.
We will see that the regularization is needed to make the algorithm ``more stable'' with linear losses and avoid the jumping back and forth that we saw in Example~\ref{ex:failure_ftl}.

\acresetall

\section{The Follow-the-Regularized-Leader Algorithm}

\begin{algorithm}[h]
\caption{Follow-the-Regularized-Leader (FTRL)}
\label{alg:ftrl}
\begin{algorithmic}[1]
{
    \REQUIRE{A sequence of regularizers $\psi_1, \dots, \psi_T :\mathcal{X} \to \R$, closed non-empty set $\mathcal{V} \subseteq \mathcal{X} \subseteq \R^d$}
    \FOR{$t=1$ {\bfseries to} $T$}
    \STATE{Output $\bx_t \in \argmin_{\bx \in \mathcal{V}} \ \psi_t(\bx) + \sum_{i=1}^{t-1} \ell_i(\bx)$}
    \STATE{Receive $\ell_t:\R^d \to (-\infty, +\infty]$ and pay the loss $\ell_t(\bx_t)$}
    \ENDFOR
}
\end{algorithmic}
\end{algorithm}

In \textbf{\ac{FTRL}}, in each round we output the minimizer of the regularized cumulative past losses, see Algorithm~\ref{alg:ftrl}. It should be clear that \ac{FTRL} is not an algorithm, but a family of algorithms, in the same way that \ac{OMD} is a family of algorithms.

Before analyzing \ac{FTRL}, let's get some intuition on it. In \ac{OMD}, we saw that the ``state'' of the algorithm is stored in the current iterate $\bx_t$, in the sense that the next iterate $\bx_{t+1}$ depends on $\bx_t$ and the loss received at time $t$, plus obviously the choice of the learning rate. Instead, in \ac{FTRL}, the next iterate $\bx_{t+1}$ depends on the entire history of losses received up to time $t$ and on the regularizer used at time $t$. This has an immediate consequence: in the case that $\mathcal{V}$ is bounded, \ac{OMD} will only ``remember'' the last $\bx_t$ and nothing else. On the other hand, \ac{FTRL} keeps in memory the entire history of the past, which in principle allows us to recover the iterates before the projection in $\mathcal{V}$.

This difference in behavior might make the reader think that \ac{FTRL} is more computationally and memory-expensive. And indeed it is! But, in Section~\ref{sec:ftrl_linear_losses} we will also see that there is a way to consider approximate losses that makes the algorithm as expensive as \ac{OMD}, yet retaining the same or more information than \ac{OMD}.

\index{Follow-the-Regularized-Leader algorithm!regret equality|(textbf}
For \ac{FTRL}, we prove an \emph{equality} for the regret. This equality factors the regret in three terms that have precise meanings and can be easily upper-bounded with some familiar quantities.
\begin{lemma}
\label{lemma:ftrl_equality}
Let $\psi_1, \dots, \psi_{T+1} :\mathcal{X} \to \R$ be a sequence of regularization functions and $\mathcal{V} \subseteq \mathcal{X} \subseteq \R^d$ a closed and non-empty set.
Denote by $F_t(\bx) = \psi_{t}(\bx) + \sum_{i=1}^{t-1} \ell_i(\bx)$, where $\ell_t:\R^d \to (-\infty, +\infty]$ for all $t$.
Assume that $\argmin_{\bx \in \mathcal{V}} \ F_{t}(\bx)$ is not empty and set $\bx_t \in \argmin_{\bx \in \mathcal{V}} \ F_{t}(\bx)$. Then, for all $\bu \in \mathcal{X}$, we have
\begin{align*}
\sum_{t=1}^T ( \ell_t(\bx_t) - \ell_t(\bu) )
&= \psi_{T+1}(\bu) - \min_{\bx \in \mathcal{V}} \ \psi_{1}(\bx) + \sum_{t=1}^T [F_t(\bx_t) - F_{t+1}(\bx_{t+1}) + \ell_t(\bx_t)] \\
&\quad + F_{T+1}(\bx_{T+1}) - F_{T+1}(\bu)~.
\end{align*}
In addition, the choice of $\psi_{T+1}$ does not change the iterates $\bx_1, \dots, \bx_T$ of the algorithm.
\end{lemma}
\begin{proof}
Given that the terms $\ell_t(\bx_t)$ appear on both sides of the equality, we just have to verify that
\begin{align*}
- \sum_{t=1}^T \ell_t(\bu)
&= \psi_{T+1}(\bu) - \min_{\bx \in \mathcal{V}} \ \psi_{1}(\bx) + \sum_{t=1}^T [F_t(\bx_t) - F_{t+1}(\bx_{t+1})]\\
&\quad + F_{T+1}(\bx_{T+1}) - F_{T+1}(\bu)~.
\end{align*}
Remembering that $F_1(\bx_1) = \min_{\bx \in \mathcal{V}} \ \psi_1(\bx)$ and using the fact that the sum with $F_t$ is telescopic, we have
\begin{align*}
- \sum_{t=1}^T \ell_t(\bu)
&= \psi_{T+1}(\bu) - F_1(\bx_1) + F_1(\bx_1) - F_{T+1}(\bx_{T+1}) + F_{T+1}(\bx_{T+1}) - F_{T+1}(\bu)\\
&= \psi_{T+1}(\bu) - F_{T+1}(\bu),
\end{align*}
that is true by the definition of $F_{T+1}$.
The second statement is immediate.
\end{proof}

\begin{remark}
We basically did not assume anything on $\ell_t$ nor on $\psi_t$, hence the above equality holds even for non-convex losses and regularizers. Yet, solving the minimization problem at each step might be computationally infeasible.
\end{remark}


\begin{remark}
The \ac{FTRL} algorithm is invariant to any constant added to the regularizers, hence we can always state the regret guarantee with $\psi_t(\bu)-\min_{\bx \in \mathcal{V}}\psi_t(\bx)$ instead of $\psi_t(\bu)$. However, for clarity, we will sometimes explicitly choose the regularizers such that their minimum in $\mathcal{V}$ is 0.
\end{remark}

\begin{remark}
\label{remark:ftrl_any_prediction}
The terms $\ell_t(\bx_t)$ appear on the left and right of the equality. Hence, in some proofs we can substitute them with some other terms, for example $\ell_t(\tilde{\bx}_t)$ where $\tilde{\bx}_t$ is \emph{not} the \ac{FTRL} prediction. In other words, we can write the same equality for the prediction of a generic algorithm whose regret will depend on the prediction of the \ac{FTRL} algorithm.
\end{remark}

Let's take a closer look at the equality. If $\bu \in \mathcal{V}$, we have that the sum of the last two terms on the r.h.s. is negative.
On the other hand, the first two terms on the r.h.s. are similar to what we got in \ac{OMD}. The interesting part is the sum of the terms $F_t(\bx_t) - F_{t+1}(\bx_{t+1}) + \ell_t(\bx_t)$. To give an intuition of what is going on, let's consider the case when the regularizer is constant over time, i.e., $\psi_t=\psi$. Hence, the terms in the sum can be rewritten as
\[
F_t(\bx_t) - F_{t+1}(\bx_{t+1}) + \ell_t(\bx_t)=
\psi(\bx_t) + \sum_{i=1}^{t} \ell_i(\bx_t) - \left( \psi(\bx_{t+1}) + \sum_{i=1}^{t} \ell_i(\bx_{t+1})\right)~.
\]
Hence, we are measuring the distance between the value of the regularized losses in two consecutive predictions of the algorithms. Roughly speaking, if $\bx_{t}\approx \bx_{t+1}$ and the losses plus regularization are ``nice'' then this term will be small. This should remind you exactly of the \ac{OMD} update, where we \emph{constrain} $\bx_{t+1}$ to be close to $\bx_{t}$.
Instead, here the two predictions will be close one to the other if the minimizer of the regularized losses up to time $t$ is close to the minimizer of the losses up to time $t+1$. So, like in \ac{OMD}, the regularizer here will play the critical role of \emph{stabilizing} the predictions, if the losses do not possess enough curvature.

However, while surprising, the above equality is not yet a regret bound, because it is ``implicit''. In fact, the losses are appearing on both sides of the equation.
So, in the following, we will see different ways to get an explicit upper bound from Lemma~\ref{lemma:ftrl_equality}.
\index{Follow-the-Regularized-Leader algorithm!regret equality|)textbf}

\section{FTRL Regret Bound using Strong Convexity}

An easy case to get a regret upper bound for \ac{FTRL} is when the losses plus the regularizer are strongly convex, as we prove in the next lemma.
\begin{lemma}
\label{lemma:ftrl_strongly_1}
With the notation in Algorithm~\ref{alg:ftrl}, denote by $F_t(\bx) = \psi_{t}(\bx) + \sum_{i=1}^{t-1} \ell_i(\bx)$. Assume that $\mathcal{V} \subseteq \mathcal{X}$ is non-empty, closed, and convex. If $F_t$ is closed, subdifferentiable, and strongly convex\index{function!strongly convex} in $\mathcal{V}$, then $\bx_t$ exists and is unique. In addition, assume $\partial \ell_t(\bx_t)$ to be non-empty and $F_t+\ell_t$ to be closed, subdifferentiable, and $\lambda_t$-strongly convex with respect to $\|\cdot\|$ in $\mathcal{V}$. Then, we have
\begin{align*}
F_t&(\bx_t) - F_{t+1}(\bx_{t+1}) + \ell_t(\bx_t)\\
&\leq \langle \bg_t, \bx_{t}-\bx_{t+1}\rangle - \frac{\lambda_t}{2}\|\bx_t-\bx_{t+1}\|^2+ \psi_t(\bx_{t+1}) - \psi_{t+1}(\bx_{t+1}) \\
&\leq \frac{\|\bg_t\|_\star^2}{2\lambda_t} + \psi_t(\bx_{t+1}) - \psi_{t+1}(\bx_{t+1}), \quad \forall \bg_t \in \partial \ell_t(\bx_t)~.
\end{align*}
\end{lemma}
\begin{proof}
The existence and uniqueness are given by Theorem~\ref{thm:min_strongly_convex}.
Then, we have
\begin{align*}
F_t&(\bx_t) - F_{t+1}(\bx_{t+1}) + \ell_t(\bx_t) \\
&= (F_t(\bx_t) + \ell_t(\bx_t)) - (F_{t}(\bx_{t+1}) + \ell_t(\bx_{t+1})) + \psi_t(\bx_{t+1}) - \psi_{t+1}(\bx_{t+1}) \\
&= (F_t(\bx_t) + \ell_t(\bx_t)+\indicator_\mathcal{V}(\bx_t)) - (F_{t}(\bx_{t+1}) + \ell_t(\bx_{t+1})+\indicator_\mathcal{V}(\bx_{t+1})) \\
&\quad + \psi_t(\bx_{t+1}) - \psi_{t+1}(\bx_{t+1}) \\
&\leq \langle \bg_t, \bx_{t}-\bx_{t+1}\rangle - \frac{\lambda_t}{2}\|\bx_t-\bx_{t+1}\|^2+ \psi_t(\bx_{t+1}) - \psi_{t+1}(\bx_{t+1}),
\end{align*}
where in the second inequality we used Lemma~\ref{lemma:strong_convexity} for the function $F_t+\ell_t+\indicator_\mathcal{V}$
with $\bg_t \in \partial (F_t + \ell_t+\indicator_{\mathcal{V}})(\bx_t)$. Observing that $\bx_t = \argmin_{\bx \in \mathcal{V}} \ F_t(\bx)$, we have $\boldsymbol{0} \in \partial (F_t+\indicator_{\mathcal{V}})(\bx_t)$ by Theorem~\ref{thm:first_order_subdiff}. Hence, using Theorem~\ref{thm:sum_subgradients}, we have $\partial \ell_t(\bx_t)\subseteq \partial (F_t + \ell_t+\indicator_{\mathcal{V}})(\bx_t)$.

The second inequality is obtained observing that $\langle \bg_t, \bx_{t}-\bx_{t+1}\rangle \leq \|\bg_t\|_\star \|\bx_t-\bx_{t+1}\|$ and using the elementary inequality $a x - \frac{b}{2} x^2 \leq \frac{a^2}{2b}$ for $x \in \R$ and $a,b>0$.
\end{proof}

Let's see an immediate application of this lemma.
\begin{corollary}
\label{cor:ftrl_lip}
With the notation in Algorithm~\ref{alg:ftrl}, let $\mathcal{V}$ be non-empty, closed, and convex. Let $\psi: \mathcal{V} \to \R$ be a closed and $\mu$-strongly convex\index{function!strongly convex} function with respect to $\|\cdot\|$. Set the sequence of regularizers as $\psi_t(\bx)=\frac{1}{\eta_{t-1}} (\psi(\bx)-\min_{\bz \in \mathcal{V}} \psi(\bz))$ for $t=1, \dots, T$ and $\psi_{T+1}=\psi_T$. Assume each $\ell_t$ subdifferentiable in $\mathcal{V}$. Then, for all $\bu \in \mathcal{V}$, \ac{FTRL} guarantees
\begin{align*}
\sum_{t=1}^T (\ell_t(\bx_t) -\ell_t(\bu))
&\leq \frac{\psi(\bu) - \min_{\bx \in \mathcal{V}} \psi(\bx)}{\eta_{T-1}} +  \sum_{t=1}^T \frac{\eta_{t-1}}{2 \mu} \|\bg_t\|^2_\star \\
&\quad + \sum_{t=1}^{T-1} \left(\frac{1}{\eta_{t-1}}-\frac{1}{\eta_t}\right) (\psi(\bx_{t+1})-\min_{\bz \in \mathcal{V}} \psi(\bz)), \quad \forall \bg_t \in \partial \ell_t(\bx_t)~.
\end{align*}
Moreover, if the functions $\ell_t$ are $L$-Lipschitz on an open set containing $\mathcal{V}$, setting $\eta_{t-1}=\frac{\alpha \sqrt{\mu}}{L\sqrt{t}}$ we get
\[
\sum_{t=1}^T \ell_t(\bx_t) - \sum_{t=1}^T \ell_t(\bu)
\leq \left(\frac{\psi(\bu) - \min_{\bx} \psi(\bx)}{\alpha} +\alpha \right) \frac{L \sqrt{T}}{\sqrt{\mu}}, \quad \forall \bu \in \mathcal{V}~.
\]
\end{corollary}
\begin{proof}
The corollary is immediate from Lemma~\ref{lemma:ftrl_equality}, Lemma~\ref{lemma:ftrl_strongly_1}, and the observation that from the assumptions we have $\psi_t(\bx) - \psi_{t+1}(\bx)\leq0, \  \forall \bx \in \mathcal{V}$.
\end{proof}

This might look like the same regret guarantee of \ac{OMD}; however, here there is a very important difference: the last term contains a time-varying element ($\eta_t$), but the domain does not have to be bounded!
Also, I used the regularizer $\frac{1}{\eta_{t-1}} \psi(\bx)$ and not $\frac{1}{\eta_{t}} \psi(\bx)$ to remind you of another important difference: In \ac{OMD} the learning rate $\eta_t$ is chosen after receiving information about $\ell_t$, through the subgradient $\bg_t$, while here you have to choose it \emph{before} observing $\ell_t$.

\index{regularizer!proximal|(textbf}
\subsection{Proximal Regularizers}

We can obtain a slightly stronger guarantee in the case that the regularizer is \textbf{proximal}, that is, it satisfies that $\bx_t \in \argmin_{\bx \in \mathcal{V}} \ \psi_{t+1}(\bx)-\psi_t(\bx)$ for all $t$.
\begin{lemma}
\label{lemma:proximal}
With the notation in Algorithm~\ref{alg:ftrl}, denote by $F_t(\bx) = \psi_{t}(\bx) + \sum_{i=1}^{t-1} \ell_i(\bx)$ for all $t$. Assume that $\mathcal{V} \subseteq \mathcal{X}$ is non-empty, closed, and convex. Also, assume that $F_{t+1}$ is closed, subdifferentiable, and $\lambda_{t+1}$-strongly convex\index{function!strongly convex} with respect to $\|\cdot\|$ in $\mathcal{V}$, the regularizer is such that $\bx_{t} \in \argmin_{\bx \in \mathcal{V}} \ \psi_{t+1}(\bx)-\psi_t(\bx)$. Then, $\bx_{t+1}$ exists and is unique. Moreover, if $\partial \ell_t(\bx_t)$ is non-empty, we have
\begin{align*}
F_t&(\bx_t) - F_{t+1}(\bx_{t+1}) + \ell_t(\bx_t)\\
&\leq \langle \bg_t, \bx_t -\bx_{t+1}\rangle -\frac{\lambda_{t+1}}{2}\|\bx_{t}-\bx_{t+1}\|^2 + \psi_t(\bx_t) - \psi_{t+1}(\bx_t) \\
&\leq \frac{\|\bg_t\|_\star^2}{2\lambda_{t+1}} + \psi_t(\bx_t) - \psi_{t+1}(\bx_t), \quad \forall \bg_t \in \partial \ell_t(\bx_t)~.
\end{align*}
\end{lemma}
\begin{proof}
The existence and uniqueness are given by Theorem~\ref{thm:min_strongly_convex}.
We have
\begin{align*}
F_t&(\bx_t) - F_{t+1}(\bx_{t+1}) + \ell_t(\bx_t) \\
&= (F_t(\bx_t) + \ell_t(\bx_t) + \psi_{t+1}(\bx_t) - \psi_t(\bx_t)) - F_{t+1}(\bx_{t+1}) - \psi_{t+1}(\bx_t) + \psi_t(\bx_t)\\
&= F_{t+1}(\bx_t) + \indicator_\mathcal{V}(\bx_t)- (F_{t+1}(\bx_{t+1})+\indicator_\mathcal{V}(\bx_{t+1})) - \psi_{t+1}(\bx_t) + \psi_t(\bx_t)\\
&\leq \langle \bg_t, \bx_t -\bx_{t+1}\rangle -\frac{\lambda_{t+1}}{2}\|\bx_{t}-\bx_{t+1}\|^2 - \psi_{t+1}(\bx_t) + \psi_t(\bx_t),
\end{align*}
where in the second inequality we used Lemma~\ref{lemma:strong_convexity} on the strongly convex function $F_{t+1}+\indicator_\mathcal{V}$
with $\bg_t \in \partial (F_{t+1}+\indicator_{\mathcal{V}})(\bx_t)$. Observing that, from the proximal property, we have that $\bx_t = \argmin_{\bx \in \mathcal{V}} \ F_t(\bx) + \psi_{t+1}(\bx) - \psi_t(\bx)$, from Theorem~\ref{thm:first_order_subdiff} we have $\boldsymbol{0} \in \partial (F_t + \indicator_\mathcal{V} + \psi_{t+1} - \psi_t)(\bx_t)$. Hence, using Theorem~\ref{thm:sum_subgradients}, and remembering that $F_{t+1}(\bx) = F_{t}(\bx) + \ell_t(\bx) + \psi_{t+1}(\bx) - \psi_t(\bx)$, we have that $\partial \ell_t(\bx_t) \subseteq \partial (F_{t+1}+\indicator_{\mathcal{V}})(\bx_t)$.

The second inequality is obtained observing that $\langle \bg_t, \bx_{t}-\bx_{t+1}\rangle \leq \|\bg_t\|_\star \|\bx_t-\bx_{t+1}\|$ and using the elementary inequality $a x - \frac{b}{2} x^2 \leq \frac{a^2}{2b}$ for $x \in \R$ and $a,b>0$.
\end{proof}

\begin{remark}
Note that a constant regularizer is proximal because any point is the minimizer of the zero function. On the other hand, a constant regularizer makes Lemmas~\ref{lemma:ftrl_strongly_1} and \ref{lemma:proximal} the same.
\end{remark}

\begin{example}
An example of proximal regularization is $\psi_t(\bx)=\frac12 \sum_{i=1}^{t-1} \|\bx_i-\bx\|_2^2$. It is easy to see that this is $t-1$ strongly convex with respect to the L$_2$ norm. Using this regularizer with linear losses gives an algorithm known as \textbf{FTRL-Proximal}\index{FTRL-Proximal algorithm}. This algorithm lies in between \ac{FTRL} and \ac{OMD}, indeed it solves the issue of being ``one-step-behind'' compared to \ac{OMD} explained in the previous section, but now its regret depends on $\frac12 \sum_{t=1}^{T} \|\bx_t-\bu\|_2^2$ that is not easily controlled, unless $\mathcal{V}$ has a bounded diameter. However, the main advantage of \ac{FTRL}-Proximal is the possibility to deal with non-smooth regularizers (see Section~\ref{sec:ftrl_composite}), which is not possible in \ac{OMD}.
\end{example}
\index{regularizer!proximal|)textbf}

\index{Follow-the-Regularized-Leader algorithm!with linear losses|(textbf}
\section{FTRL with Linearized Losses}
\label{sec:ftrl_linear_losses}

An important difference between \ac{OMD} and \ac{FTRL} is that here the update rule seems way more expensive than in \ac{OMD}, because we need to solve a convex optimization problem at each step. However, it turns out we can use \ac{FTRL} on \emph{linearized losses} and obtain the same bound with the same computational complexity as that of \ac{OMD}.

Consider the case in which the losses are linear, i.e., $\ell_t(\bx) =\langle \bg_t, \bx\rangle, \ t=1,\dots,T$, we have that the prediction of \ac{FTRL} is
\[
\bx_{t+1} \in \argmin_{\bx \in \mathcal{V}} \ \psi_{t+1}(\bx) + \sum_{i=1}^{t} \langle \bg_i, \bx\rangle
= \argmax_{\bx \in \mathcal{V}} \ \left\langle -\sum_{i=1}^{t} \bg_i, \bx\right\rangle -\psi_{t+1}(\bx)~.
\]
Denote by $\psi_{\mathcal{V},t} (\bx):=\psi_t(\bx)+\indicator_{\mathcal{V}}(\bx)$. Now, if we assume $\psi_{\mathcal{V},t}$ to be proper, convex, and closed, using Theorem~\ref{thm:props_fenchel}, we have that $\bx_{t+1} \in \partial \psi_{\mathcal{V},t+1}^\star(-\sum_{i=1}^{t} \bg_i)$. Moreover, if $\psi_{\mathcal{V},t+1}$ is also strongly convex, by Theorem~\ref{thm:prop_fenchel_sc}\index{duality strong convexity-smoothness} we know that $\psi_{\mathcal{V},t+1}^\star$ is differentiable and we get
\begin{equation}
\label{eq:ftrl_update_linear}
\bx_{t+1} = \nabla \psi_{\mathcal{V},t+1}^\star\left(-\sum_{i=1}^{t} \bg_i\right)~.
\end{equation}
In turn, this update can be written in the following way
\begin{align*}
\btheta_{t+1} &= \btheta_{t} - \bg_{t}, \\
\bx_{t+1} &= \nabla \psi_{\mathcal{V},t+1}^\star (\btheta_{t+1})~.
\end{align*}
This corresponds to Figure~\ref{fig:dual_mapping_ftrl}.


\begin{figure}
\centering
\begin{tikzpicture}
    \draw[thick] (0,2.5) -- (2.5,0) -- (0,-2.5) -- (-2.5,0) -- cycle;
    \node at (-1.5,1.5) {$\mathcal{V}$}; 

     \draw[thick, rotate around={15:(7,0)}] (7,0) ellipse (3.5 and 2);

    \filldraw (0,-1) circle (0pt) node[above] {$\bx_{t+1}$}; 
    \filldraw (7,0.5) circle (0pt) node[right] {$\btheta_t$}; 

    \node at (5,-0.5) {$\btheta_{t+1} = \btheta_t - \bg_t$};
    \draw[->,>=stealth] (7,0.5) -- (4.8,-0.2);  

    \draw[->,>=stealth] (4.8,-0.7) .. controls (3,-1.5) and (2,-1.5) .. (0,-0.5) node[midway, below] {$\nabla \psi_{\mathcal{V},t+1}^\star$};
\end{tikzpicture}
\caption{Dual map for \ac{FTRL} with linear losses.}
\label{fig:dual_mapping_ftrl}
\commentAlt{Figure~\ref{fig:dual_mapping_ftrl}. Diagram of the FTRL dual update. A dual point theta_t is shifted to theta_{t+1}=theta_t-g_t, then mapped back to the primal set V by nabla psi_{V,t+1}^* to produce x_{t+1}.}
\end{figure}

Compare it to the mirror update of \ac{OMD}, rewritten similarly:
\begin{align*}
\btheta_{t+1} &= \nabla \psi (\bx_{t}) - \eta_t \bg_{t}, \\
\bx_{t+1} &= \nabla \psi_{\mathcal{V}}^\star (\btheta_{t+1})~.
\end{align*}
They are very similar, but with important differences:
\begin{itemize}
\item In \ac{OMD}, the state is kept in $\bx_t$, so we need to use a duality map\index{duality map} to transform it into a dual variable before making the update, and then back to the primal variable.
\item In \ac{FTRL} with linear losses, the state is kept directly in the dual space, updated, and then transformed into the primal variable. The primal variable is only used to predict, but not directly in the update.
\item In \ac{OMD}, the samples are weighted by the learning rates, which typically decrease.
\item In \ac{FTRL} with linear losses, all the subgradients have the same weight, but the regularizer is typically increasing over time.
\end{itemize}

Also, we will not lose anything in the worst-case regret bound! Indeed, we can run \textbf{\ac{FTRL} on linearized losses} $\tilde{\ell}_t(\bx)=\langle \bg_t, \bx\rangle$, where $\bg_t \in \partial \ell_t(\bx_t)$, guaranteeing exactly the same worst-case upper bound on the regret with the losses $\ell_t$.
The algorithm for such a procedure is in Algorithm~\ref{alg:ftrl_linear}.

\begin{algorithm}[t]
\caption{Follow-the-Regularized-Leader (FTRL) on Linearized Losses}
\label{alg:ftrl_linear}
\begin{algorithmic}[1]
{
    \REQUIRE{A sequence of regularizers $\psi_1, \dots, \psi_T :\mathcal{X} \to \R$, closed non-empty convex set $\mathcal{V} \subseteq \mathcal{X} \subseteq \R^d$}
    \FOR{$t=1$ {\bfseries to} $T$}
    \STATE{Output $\bx_t \in \argmin_{\bx \in \mathcal{V}} \ \psi_t(\bx) + \sum_{i=1}^{t-1} \langle \bg_i,\bx\rangle$}
    \STATE{Pay the loss $\ell_t(\bx_t)$, where $\ell_t$ is subdifferentiable on $\mathcal{V}$}
    \STATE{Set $\bg_t \in \partial \ell_t(\bx_t)$}
    \ENDFOR
}
\end{algorithmic}
\end{algorithm}

In fact, using the definition of the subgradients and the assumptions of Corollary~\ref{cor:ftrl_lip}, we have
\begin{align*}
\Regret_T(\bu)
&= \sum_{t=1}^T (\ell_t(\bx_t) -\ell_t(\bu))
\leq \sum_{t=1}^T (\tilde{\ell}_t(\bx_t) - \tilde{\ell}_t(\bu))\\
&\leq \frac{\psi(\bu) - \min_{\bx \in \mathcal{V}} \psi(\bx)}{\eta_{T-1}} + \frac{1}{2 \mu} \sum_{t=1}^T \eta_{t-1} \|\bg_t\|^2_\star, \quad \forall \bu \in \mathcal{V}~.
\end{align*}
The only difference with respect to Corollary~\ref{cor:ftrl_lip} is that here the $\bg_t$ are the specific ones we use in the algorithm, while in Corollary~\ref{cor:ftrl_lip} the statement holds for any choice of the $\bg_t \in \partial \ell_t(\bx_t)$. Moreover, \ac{FTRL} with exact losses can adapt to any ``nice'' properties of the losses, for example, strong convexity, while this information is lost in the linearized case.
We also have to remember that this is just a worst-case guarantee: on real problems \ac{FTRL} with full losses typically performs better than both \ac{FTRL} with linearized losses and \ac{OMD}.

More generally, we can also specialize the \ac{FTRL} regret equality to this case, obtaining the following lemma, which is usually considered a ``dual analysis'' of \ac{FTRL} for linear losses.
\begin{lemma}
Under the assumptions of Lemma~\ref{lemma:ftrl_equality}, further assume for all $t$ that $\ell_t(\bx)=\langle \bg_t,\bx\rangle$. Then, for any $\bu \in \mathcal{V}$, we have
\begin{align}
&\sum_{t=1}^T \langle \bg_t,\bx_t-\bu\rangle \nonumber\\
&\quad= \psi_{\mathcal{V},T+1}(\bu) + \psi^\star_{\mathcal{V},1}(\boldsymbol{0}) + \sum_{t=1}^T \left[\psi_{\mathcal{V},t+1}^\star\left(-\sum_{i=1}^{t} \bg_i\right) - \psi_{\mathcal{V},t}^\star\left(-\sum_{i=1}^{t-1} \bg_i\right)+ \langle\bg_t,\bx_t\rangle\right] \nonumber \\
&\qquad -\psi_{\mathcal{V},T+1}^\star\left(-\sum_{i=1}^{T} \bg_i\right) - \psi_{\mathcal{V},T+1}(\bu) - \left\langle \sum_{i=1}^{T} \bg_i,\bu\right\rangle, \label{eq:dual_analysis_ftrl}
\end{align}
where $\psi^\star_{\mathcal{V},t}$ is the Fenchel conjugate of $\psi_{\mathcal{V},t}=\psi_t+\indicator_{\mathcal{V}}$. Also, by Fenchel--Young's inequality\index{inequality!Fenchel--Young's}, the sum of the last three terms is non-positive. In addition, the choice of $\psi_{T+1}$ does not change the iterates $\bx_1, \dots, \bx_T$ of the algorithm.
\end{lemma}
\begin{proof}
The proof is immediate by just summing up the terms on both sides.

We can also obtain it from Lemma~\ref{lemma:ftrl_equality}, observing that
\begin{align*}
F_t(\bx_t)
&= \min_{\bx \in \mathcal{V}} \ \psi_{t}(\bx) + \sum_{i=1}^{t-1} \ell_i(\bx)
= -\max_{\bx \in \mathcal{V}} \ \left\langle -\sum_{i=1}^{t-1} \bg_i,\bx\right\rangle -\psi_{t}(\bx)\\
&= - \psi_{\mathcal{V},t}^\star\left(-\sum_{i=1}^{t-1} \bg_i\right),
\end{align*}
and using the fact that $\psi^\star_{\mathcal{V},1}(\boldsymbol{0})=-\inf_{\bx \in \R^d} \ \psi_{\mathcal{V},1}(\bx)$.
\end{proof}
The function $\psi_{\mathcal{V},t}^\star$ in the above lemma is also known as the ``potential function''\index{potential function} in the online learning literature.

\begin{remark}
\label{remark:generalized_bregman}
While the regularizer $\psi_t$ in \ac{FTRL} plays the same role as the distance generating function\index{distance generating function} in \ac{OMD}, it is important to note that $\psi$ must be differentiable in \ac{OMD}. Indeed, it is necessary for the existence of the Bregman divergence\index{Bregman divergence}, and for the uniqueness of the dual map $\nabla \psi$\index{dual map}. Instead, in \ac{FTRL} the dual map $\nabla \psi$ is not used, as it can be seen in Figure~\ref{fig:dual_mapping_ftrl}. We will use this property of \ac{FTRL} in Section~\ref{sec:ftrl_composite}, when we will use regularizers that contain a non-differentiable component.
Instead, we can express the above bound using the notion of \textbf{generalized Bregman divergence}\index{Bregman divergence!generalized|textbf} $\bar{B}_{\psi}: \R^d \times \R^d \to [0,+\infty]$ defined as $\bar{B}_\psi(\bx; \btheta) = \psi(\bx)+\psi^\star(\btheta)-\langle \btheta,\bx\rangle$. By the Fenchel--Young's inequality\index{inequality!Fenchel--Young's}, $\bar{B}_\psi(\bx; \btheta)\geq 0$. Observe that we do not require $\psi$ to be differentiable. However, if $\psi$ is differentiable, then from Theorem~\ref{thm:props_fenchel} we have $B_\psi(\bx; \by) = \bar{B}_\psi(\bx;\nabla \psi(\by))$. With this definition and assuming that $\inf_{\bx \in \mathcal{V}} \psi_1(\bx) = \inf_{\bx \in \mathcal{V}} \psi_{T+1}(\bx)$, \eqref{eq:dual_analysis_ftrl} becomes
\begin{align*}
\sum_{t=1}^T \langle \bg_t,\bx_t-\bu\rangle
&= \bar{B}_{\psi_{\mathcal{V},T+1}}(\bu;\boldsymbol{0}) + \sum_{t=1}^T \left[\bar{B}_{\psi_{\mathcal{V},t+1}}\left(\bu;\btheta_{t+1}\right) - \bar{B}_{\psi_{\mathcal{V},t}}\left(\bu;\btheta_t\right) + \langle\bg_t,\bx_t\rangle\right]  \\
&\quad -\bar{B}_{\psi_{\mathcal{V},T+1}}\left(\bu; \btheta_{T+1}\right),
\end{align*}
where $\btheta_t= - \sum_{i=1}^{t-1} \bg_i$.
\end{remark}

In the next example, we can see the different behavior of \ac{FTRL} and \ac{OMD}.
\begin{example}
\label{example:omd_vs_ftrl}
Consider $\mathcal{V}=\{\bx \in \R^d: \|\bx\|_2\leq 1\}$. With \ac{OSD} with learning rate $\eta_t=\frac{1}{\sqrt{t}}$ and $\bx_1=\boldsymbol{0}$, the update is
\begin{align*}
\tilde{\bx}_{t+1} &= \bx_t -\frac{1}{\sqrt{t}} \bg_t, \\
\bx_{t+1} &= \tilde{\bx}_{t+1} \, \min\left(\frac{1}{\|\tilde{\bx}_{t+1}\|_2},1\right)~.
\end{align*}
On the other hand in \ac{FTRL} with linearized losses, we can use $\psi_t(\bx) = \frac{\sqrt{t}}{2}\|\bx\|_2^2$ and it is easy to verify that the update in \eqref{eq:ftrl_update_linear} becomes
\begin{align*}
\tilde{\bx}_{t+1} &= \frac{-\sum_{i=1}^t \bg_i}{\sqrt{t}},\\
\bx_{t+1} &= \tilde{\bx}_{t+1} \, \min\left(\frac{1}{\|\tilde{\bx}_{t+1}\|_2},1\right)~.
\end{align*}
While the regret guarantee would be the same for these two updates, from an intuitive point of view, \ac{OMD} seems to be losing a lot of potential information due to the fact that we only memorize the projected iterate.
\end{example}

\subsection{FTRL with Linearized Losses Can Be Equivalent to OMD}

Even if \ac{FTRL} and \ac{OMD} seem very different, in certain cases they are actually equivalent. Let's consider an example.

Let $\psi:\mathcal{X}\to \R$ and consider that case that $\mathcal{V}=\mathcal{X} =\dom \psi$. The output of \ac{OMD} is
\[
\bx_{t+1} = \argmin_{\bx} \ \langle \eta \bg_t, \bx\rangle + B_\psi(\bx;\bx_t)~.
\]
Assume that $\bx_{t+1} \in \interior \dom \psi$ for all $t=1, \dots, T$. This implies that $\eta \bg_t + \nabla \psi(\bx_{t+1}) - \nabla \psi(\bx_t)=\boldsymbol{0}$, that is $ \nabla \psi(\bx_{t+1}) =  \nabla \psi(\bx_{t}) -\eta \bg_t$. Assuming $\bx_1 \in \argmin_{\bx \in \mathcal{V}} \psi(\bx)$, we have
\[
\nabla \psi(\bx_{t+1}) = - \eta \sum_{i=1}^t \bg_i~.
\]
On the other hand, consider \ac{FTRL} with linearized losses with regularizers $\psi_t= \frac{1}{\eta} \psi$, then
\[
\bx_{t+1}
= \argmin_{\bx} \ \frac{1}{\eta} \psi(\bx) + \sum_{i=1}^t \langle \bg_i, \bx\rangle
= \argmin_{\bx} \ \psi(\bx) + \eta \sum_{i=1}^t \langle \bg_i, \bx\rangle~.
\]
Assuming that $\bx_{t+1} \in \interior \dom \psi$, this implies that $\nabla \psi(\bx_{t+1})=-\eta \sum_{i=1}^t \bg_i$.
Further, assuming that $\nabla \psi$ is invertible implies that the predictions of \ac{FTRL} and \ac{OMD} are the same.

This equivalence immediately gives us some intuition on the role of $\psi$ in both algorithms: the same function is inducing the Bregman divergence\index{Bregman divergence}, that is, our similarity measure, and is the regularizer in \ac{FTRL}. Moreover, the inverse of the growth rate over time of the regularizers in \ac{FTRL} takes the role of the learning rate in \ac{OMD}.

\begin{example}
Consider $\psi(\bx)=\frac{1}{2}\|\bx\|_2^2$ and $\mathcal{V}=\R^d$, then it satisfies the conditions above to have the predictions of \ac{OMD} equal to the ones of \ac{FTRL}.
\end{example}

\begin{remark}
Based on the above observation and on the observations in Section~\ref{sec:ftrl_linear_losses}, a common misunderstanding even among experts is that \ac{FTRL} and \ac{OMD} differ \emph{only} in the constrained setting. This is clearly false: the regularizer of \ac{FTRL} can vary over time in arbitrary ways, not just as a scaled version of a fixed regularizer. Indeed, general time-varying regularizers are used, for example, in parameter-free algorithms (Chapter~\ref{ch:parameterfree}) and in Vovk--Azoury--Warmuth forecaster (Section~\ref{sec:vaw}).
\end{remark}
\index{Follow-the-Regularized-Leader algorithm!with linear losses|)textbf}

\index{Follow-the-Regularized-Leader algorithm!with local norms|(textbf}
\section{FTRL Regret Bound using Local Norms}
\label{sec:ftrl_local_norms}

In Lemma~\ref{lemma:ftrl_strongly_1}, strong convexity basically tells us that the losses plus regularizer have some minimum curvature in all directions. However, as in the \ac{OMD} case, it turns out that we can get a regret upper bound using again the notion of local norms.\index{norm!local}

\begin{lemma}
\label{lemma:ftrl_local_norms}
Under the same assumptions of Lemma~\ref{lemma:ftrl_equality}, assume $\psi_1, \dots, \psi_T$ twice differentiable and with the Hessian positive definite in the interior of their domains. Also, assume $\ell_t(\bx)=\langle \bg_t, \bx \rangle, \ t=1, \dots, T$, for arbitrary vectors $\bg_t$. Define $\|\bx\|_{\bA}:= \sqrt{\bx^\top \bA \bx}$.
For $t=1,\dots,T$, assume that $\bx_t \in \interior \dom \psi_t$ and that $\tilde{\bx}_{t+1} := \argmin_{\bx \in \R^d} \ \langle \bg_t, \bx\rangle + B_{\psi_t}(\bx; \bx_t)$ exists and $\tilde{\bx}_{t+1} \in \interior \dom \psi_t$. Then, there exists $\bz_t$ on the line segment between $\bx_t$ and $\bx_{t+1}$ and  $\bz'_t$ on the line segment between $\bx_t$ and $\tilde{\bx}_{t+1}$, such that the following inequality holds for any $\bu \in \mathcal{V}$
\begin{align*}
\sum_{t=1}^T \langle \bg_t,\bx_t - \bu\rangle
&\leq \psi_{T}(\bu) - \min_{\bx \in \mathcal{V}} \ \psi_{1}(\bx)
+ \sum_{t=1}^{T-1} \left[\psi_t(\bx_{t+1}) - \psi_{t+1}(\bx_{t+1})\right]\\
&\quad +\sum_{t=1}^T \min\left(\frac{\|\bg_t\|^2_{(\nabla^2 \psi_t(\bz_t))^{-1}}}{2}, \frac{\|\bg_t\|^2_{(\nabla^2 \psi_t(\bz'_t))^{-1}}}{2}\right)
\end{align*}
\end{lemma}
\begin{proof}
We start from Lemma~\ref{lemma:ftrl_equality} with $\psi_{T+1}=\psi_T$ and observe that
\begin{align*}
F_t&(\bx_t) - F_{t+1}(\bx_{t+1}) + \ell_t(\bx_t)\\
&= F_t(\bx_t) - F_{t}(\bx_{t+1}) + \ell_t(\bx_t) - \ell_t(\bx_{t+1}) + \psi_t(\bx_{t+1}) - \psi_{t+1}(\bx_{t+1})~.
\end{align*}

Then, observe that $\psi_t$ are strictly convex\index{function!strictly convex} because the Hessians are positive definite. Hence, they can be used to define Bregman divergences\index{Bregman divergence}.
Moreover, from the optimality condition of Theorem~\ref{thm:constr_opt_condition} for $\bx_t$, we have
\[
\langle \nabla F_t(\bx_t), \bv - \bx_t\rangle \geq 0, \quad \forall \bv \in \mathcal{V}~.
\]
Hence, in particular, we have
\[
\langle \nabla F_t(\bx_t), \bx_{t+1} - \bx_t\rangle \geq 0~.
\]
Using this inequality, we have
\[
B_{F_t}(\bx_{t+1};\bx_t)
= F_t(\bx_{t+1}) - F_t(\bx_t) - \langle \nabla F_t(\bx_t), \bx_{t+1} - \bx_t\rangle
\leq F_t(\bx_{t+1}) - F_t(\bx_t)~.
\]
This last inequality implies that
\begin{align*}
F_t(\bx_t) - F_{t}(\bx_{t+1}) + \ell_t(\bx_t) - \ell_t(\bx_{t+1})
&= F_t(\bx_t) - F_{t}(\bx_{t+1}) + \langle \bg_t, \bx_t - \bx_{t+1}\rangle \\
&\leq \langle \bg_t, \bx_t - \bx_{t+1}\rangle - B_{F_t}(\bx_{t+1};\bx_t)~.
\end{align*}
Bregman divergences\index{Bregman divergence} are independent of linear terms, so $B_{F_t}(\bx_{t+1};\bx_t)=B_{\psi_t}(\bx_{t+1};\bx_t)$.
From the Taylor's theorem, $B_{\psi_t}(\bx_{t+1};\bx_t) = \frac{1}{2}(\bx_{t+1}-\bx_t)^\top \nabla^2 \psi_t(\bz_t) (\bx_{t+1}-\bx_t)$, where $\bz_t$ is on the line segment between $\bx_t$ and $\bx_{t+1}$. Observe that this is $\frac12 \|\bx_{t+1}-\bx_t\|^2_{\nabla^2 \psi_t(\bz_t)}$ and it is indeed a norm because we assumed the Hessian of $\psi_t$ to be positive definite. Hence, by Fenchel--Young's inequality\index{inequality!Fenchel--Young's} and Examples~\ref{example:dual_norm_a} and \ref{example:conj_squared_norm}, we have
\begin{align}
F_t&(\bx_t) - F_{t}(\bx_{t+1}) + \ell_t(\bx_t) - \ell_t(\bx_{t+1}) \nonumber \\
&\leq \langle \bg_t, \bx_t - \bx_{t+1}\rangle - B_{\psi_t}(\bx_{t+1};\bx_t) \label{eq:local_norm_1}\\
&\leq \frac{1}{2}\|\bg_t\|^2_{(\nabla^2 \psi_t(\bz_t))^{-1}} + \frac{1}{2}(\bx_{t+1}-\bx_t)^\top \nabla^2 \psi_t(\bz_t) (\bx_{t+1}-\bx_t) - B_{\psi_t}(\bx_{t+1};\bx_t) \nonumber \\
&=\frac{1}{2}\|\bg_t\|^2_{(\nabla^2 \psi_t(\bz_t))^{-1}}, \nonumber
\end{align}
that gives the first term in the minimum.

For the second term in the minimum, we start from \eqref{eq:local_norm_1} to get
\begin{align*}
\langle \bg_t, \bx_t - \bx_{t+1}\rangle - B_{\psi_t}(\bx_{t+1};\bx_t)
&\leq \max_{\bx \in \R^d} \ \langle \bg_t, \bx_t - \bx\rangle - B_{\psi_t}(\bx;\bx_t) \\
&= \langle \bg_t, \bx_t - \tilde{\bx}_{t+1}\rangle - B_{\psi_t}(\tilde{\bx}_{t+1};\bx_t)~.
\end{align*}
Then, we proceed as in the first bound.
\end{proof}

Observe that $\bz'_t$ is defined as a sort of \ac{OMD} update using $\psi_t$ as distance generating function\index{distance generating function}.
\index{Follow-the-Regularized-Leader algorithm!with local norms|)textbf}

\section{Example of FTRL: Exponentiated Gradient without Knowing $T$}
\label{sec:ftrl_eg}

\index{Exponentiated Gradient algorithm!FTRL version|(textbf}
As we did in Section~\ref{sec:omd_eg} for \ac{OMD}, let's see an example of an instantiation of \ac{FTRL} with linearized losses to have the \ac{FTRL} version of \ac{EG}.

Let $\mathcal{V}=\{\bx \in \R^d: \|\bx\|_1=1, x_i\geq0\}$ and the sequence of loss functions $\ell_t:\R^d \to (-\infty, +\infty]$ be convex, subdifferentiable on $\mathcal{V}$, and $L_\infty$-Lipschitz with respect to the L$_1$ norm.
Let $\psi:R^d_{\geq 0}\to \R$ be defined as $\psi(\bx)=\sum_{i=1}^d x_i \ln x_i$, where we define $0\ln0=0$. Set $\psi_t(\bx)=\alpha L_\infty \sqrt{t} \psi(\bx)$, that is $\alpha L_\infty \sqrt{t}$-strongly convex with respect to the $L_1$ norm by Lemma~\ref{lemma:entropy_strongly_convex}, where $\alpha>0$ is a parameter of the algorithm.

Given that the regularizers are strongly convex and defining $\psi_{\mathcal{V},t}=\psi_t+\indicator_{\mathcal{V}}$, from \eqref{eq:ftrl_update_linear} we have
\[
\bx_t = \nabla \psi_{\mathcal{V},t}^\star \left(-\sum_{i=1}^{t-1} \bg_i\right)~.
\]
We already saw in Section~\ref{sec:omd_eg} that $\psi_{\mathcal{V}}^\star(\btheta) = \ln \left(\sum_{i=1}^d \exp(\theta_i)\right)$, that implies that $\psi^\star_{\mathcal{V},t}(\btheta)=\alpha L_\infty \sqrt{t} \ln\left(\sum_{i=1}^d \exp\left(\frac{\theta_i}{\alpha L_\infty \sqrt{t}}\right)\right)$. So, running \ac{FTRL} with linearized losses, we have that
\[
x_{t,j} = \frac{\exp\left(-\frac{\sum_{k=1}^{t-1} g_{k,j}}{\alpha L_\infty \sqrt{t}}\right)}{\sum_{i=1}^d \exp\left(-\frac{\sum_{k=1}^{t-1} g_{k,i}}{\alpha L_\infty \sqrt{t}}\right)}, \quad \forall j=1, \dots, d,
\]
where $\bg_t \in \partial \ell_t(\bx_t)$.
Note that this is exactly the same update of \ac{EG} based on \ac{OMD}, but here we are effectively using time-varying learning rates.

We also get that the regret guarantee is
\begin{align}
\sum_{t=1}^T \ell_t(\bx_t) - \sum_{t=1}^T \ell_t(\bu)
&\leq L_\infty \sqrt{T} \alpha\left(\sum_{i=1}^d u_i \ln u_i + \ln d\right) + \frac{1}{2\alpha L_\infty} \sum_{t=1}^T \frac{\|\bg_t\|^2_\infty}{\sqrt{t}} \label{eq:example_ftrl_eg_1} \\
&\leq L_\infty \sqrt{T}\left(\alpha\left(\sum_{i=1}^d u_i \ln u_i + \ln d\right)+\frac{1}{\alpha}\right) \nonumber \\
&\leq L_\infty \sqrt{T}\left(\alpha \ln d +\frac{1}{\alpha}\right), \quad \forall \bu \in \mathcal{V}, \nonumber
\end{align}
where we used the fact that using $\psi_t(\bx)=\alpha L_\infty \sqrt{t} \psi(\bx)$ and $\psi_t(\bx)=\alpha L_\infty \sqrt{t} (\psi(\bx)-\min_{\bx} \psi(\bx))$ give the same iterates in \ac{FTRL}. The optimal choice of $\alpha$ to minimize the regret upper bound is $\frac{1}{\sqrt{\ln d}}$.
This regret guarantee is similar to the one we proved for \ac{OMD}, but with an important difference: we do not have to know in advance the number of rounds $T$. In \ac{OMD}, a similar bound would be vacuous because it would depend on the $\max_{\bu,\bx \in \mathcal{V}} B_\psi(\bu;\bx)$ that is infinite.

\index{Follow-the-Regularized-Leader algorithm!with local norms|(textbf}
As we did in the \ac{OMD} case, we can also get a bound using the local norms.\index{norm!local} Let's use the additional assumption that $g_{t,i}\geq0$, for all $t=1, \dots,T$ and $i=1, \dots,d$. Using Lemma~\ref{lemma:ftrl_local_norms}, we have for all $\bu \in \mathcal{V}$ that
\begin{align*}
\sum_{t=1}^T \ell_t(\bx_t) - \sum_{t=1}^T \ell_t(\bu)
&\leq L_\infty \sqrt{T} \alpha\left(\sum_{i=1}^d u_i \ln u_i + \ln d\right) + \frac12 \sum_{t=1}^T \|\bg_t\|^2_{(\nabla^2 \psi_t(\bz'_t))^{-1}} \\
&= L_\infty \sqrt{T} \alpha\left(\sum_{i=1}^d u_i \ln u_i + \ln d\right) + \frac{1}{2\alpha L_\infty} \sum_{t=1}^T \frac{\|\bg_t\|^2_{(\nabla^2 \psi(\bz'_t))^{-1}}}{\sqrt{t}},
\end{align*}
where $\bz'_t$ is on the line segment between $\bx_t$ and $\tilde{\bx}_{t+1}$. In this case, it is easy to calculate $\tilde{x}_{t+1,i}$ as $x_{t,i}\exp(-\tfrac{g_{t,i}}{\alpha L_{\infty} \sqrt{t}})$ for $i=1, \dots,d$. Moreover, $\nabla^2 \psi(\bz'_t)$ is a diagonal matrix whose elements on the diagonal are $\frac{1}{z'_{t,i}}, i=1, \dots, d$. Hence, we have that
\begin{equation}
\label{eq:eg_ftrl_local_norm}
\|\bg_t\|^2_{(\nabla^2 \psi(\bz'_t))^{-1}}
= \sum_{i=1}^d g_{t,i}^2 z'_{t,i}
\leq \sum_{i=1}^d g_{t,i}^2 x_{t,i},
\end{equation}
that is less than or equal to the terms $\|\bg_t\|^2_\infty$ we have in \eqref{eq:example_ftrl_eg_1}.
\index{Follow-the-Regularized-Leader algorithm!with local norms|)textbf}
\index{Exponentiated Gradient algorithm!FTRL version|)textbf}

\section{Example of FTRL: AdaHedge}

\index{AdaHedge algorithm|(textbf}
In this section, we explain a variation of the \ac{EG} algorithm, called \emph{AdaHedge}.
The basic idea is to design an algorithm that is adaptive to the sum of the squared L$_\infty$ norm of the losses, without any prior information on the range of the losses.

First, consider the case in which we use as a constant regularizer the negative Shannon entropy\index{entropy!negative Shannon} $\psi_t(\bx) = \lambda \sum_{i=1}^d x_i \ln x_i$ defined over $\R^d_{\geq0}$, where $\lambda>0$ will be determined in the following. Set $\mathcal{V}=\Delta^{d-1}$, the probability simplex in $\R^d$\index{probability simplex}.
Using \ac{FTRL} with linear losses with this regularizer, for any $\bu \in \Delta^{d-1}$, we immediately obtain
\begin{align*}
\Regret_T(\bu)
&\leq \lambda \left(\ln d + \sum_{i=1}^d u_i \ln u_i\right) + \sum_{t=1}^T \left(F_t(\bx_t) - F_{t+1}(\bx_{t+1}) + \langle \bg_t, \bx_t\rangle\right) \\
&\leq \lambda \ln d + \sum_{t=1}^T \left(F_t(\bx_t) - F_{t+1}(\bx_{t+1}) + \langle \bg_t, \bx_t\rangle\right),
\end{align*}
where we upper bounded the negative entropy of $\bu$ with 0.
Using the strong convexity of the regularizer with respect to the L$_1$ norm and Lemma~\ref{lemma:ftrl_strongly_1}, we further upper bound it as
\[
\Regret_T(\bu)
\leq \lambda \ln d + \sum_{t=1}^T \frac{\|\bg_t\|^2_\infty}{2\lambda}~.
\]
This suggests that the optimal $\lambda$ should be $\lambda = \sqrt{\frac{\sum_{t=1}^T \|\bg_t\|^2_\infty}{2\ln d}}$. However, as we have seen in Section~\ref{sec:lstar}, this choice is not feasible. Hence, exactly as we did in Section~\ref{sec:lstar}, we might think of using an online version of this choice
\begin{equation}
\label{eq:adaftrl}
\psi_t(\bx)= \lambda_t \sum_{i=1}^d x_i \ln x_i
\quad \text{where} \quad
\lambda_t = \frac{1}{\alpha}\sqrt{\sum_{i=1}^{t-1} \|\bg_i\|^2_\infty},
\end{equation}
where $\alpha>0$ is a constant that will be determined later. An important property of such a choice is that it gives rise to an algorithm that is scale-free (Definition~\ref{def:scale-free}), that is, its predictions $\bx_t$ are invariant to the scaling of the losses by any constant factor\index{algorithm!scale-free}. This is easy to see because
\[
x_{t,j} \propto \exp\left(- \frac{\alpha \sum_{i=1}^{t-1} g_{i,j}}{\sqrt{\sum_{i=1}^{t-1} \|\bg_i\|^2_\infty}}\right), \quad \forall j=1,\dots, d~.
\]

This choice makes the regularizer non-decreasing over time and immediately gives us
\[
\Regret_T(\bu)
\leq \lambda_T \ln d + \sum_{t=1}^T \frac{\|\bg_t\|^2_\infty}{2\lambda_{t}}
=  \frac{\ln d}{\alpha} \sqrt{\sum_{t=1}^{T} \|\bg_t\|^2_\infty} + \alpha \sum_{t=1}^T \frac{\|\bg_t\|^2_\infty}{2 \sqrt{\sum_{i=1}^{t-1} \|\bg_i\|^2_\infty}}~.
\]
At this point, we might be tempted to use Lemma~\ref{lemma:sum_integral_bounds} to upper bound the last sum in the upper bound, but unfortunately, we cannot! Indeed, the denominator does not contain the term $\|\bg_t\|_\infty^2$. We might add a constant to $\lambda_t$, but that would destroy the scale-freeness of the algorithm. However, it turns out that we can still prove our bound without any change to the regularizer. The key observation is that we can bound the term $F_t(\bx_t) - F_{t+1}(\bx_{t+1}) + \langle \bg_t, \bx_t\rangle$ in two different ways. One way is using Lemma~\ref{lemma:ftrl_strongly_1}. The other one is
\begin{align*}
F_t(\bx_t) - F_{t+1}(\bx_{t+1}) + \langle \bg_t, \bx_t\rangle
&\leq F_t(\bx_{t+1}) - F_{t+1}(\bx_{t+1}) + \langle \bg_t, \bx_t\rangle\\
&=\psi_t(\bx_{t+1}) - \psi_{t+1}(\bx_{t+1}) - \langle \bg_t, \bx_{t+1}\rangle +\langle \bg_t, \bx_t\rangle\\
&\leq -\langle \bg_t, \bx_{t+1}\rangle + \langle \bg_t, \bx_t\rangle
\leq  2\|\bg_t\|_\infty,
\end{align*}
where we used the definition of $\bx_{t+1}$ and the fact that the regularizer is non-decreasing over time.
So, we can now write
\begin{align*}
\sum_{t=1}^T F_t(\bx_t) - F_{t+1}(\bx_{t+1}) + \langle \bg_t, \bx_t\rangle
&\leq \sum_{t=1}^T \min\left( \frac{\alpha \|\bg_t\|_\infty^2}{2 \sqrt{\sum_{i=1}^{t-1} \|\bg_i\|_\infty^2}}, 2\|\bg_t\|_\infty \right) \\
&= 2 \sum_{t=1}^T \sqrt{\min\left( \frac{\alpha^2 \|\bg_t\|_\infty^4}{16\sum_{i=1}^{t-1} \|\bg_i\|_\infty^2}, \|\bg_t\|^2_\infty \right)} \\
&\leq 2\sum_{t=1}^T \sqrt{\frac{2}{\frac{16\sum_{i=1}^{t-1} \|\bg_i\|_\infty^2}{\alpha^2 \|\bg_t\|_\infty^4} + \frac{1}{\|\bg_t\|^2_\infty}}   } \\
&=2 \sum_{t=1}^T \sqrt{2}\frac{\alpha \|\bg_t\|_\infty^2}{\sqrt{\alpha^2 \|\bg_t\|^2_\infty+16\sum_{i=1}^{t-1} \|\bg_i\|_\infty^2}},
\end{align*}
where we used the fact that the minimum between two numbers is less than their harmonic mean.
Assuming $\alpha \geq 4$ and using Lemma~\ref{lemma:sum_integral_bounds}, we have
\[
\sum_{t=1}^T F_t(\bx_t) - F_{t+1}(\bx_{t+1}) + \langle \bg_t, \bx_t\rangle
\leq \frac{\sqrt{2}}{2}\sum_{t=1}^T \frac{\alpha \|\bg_t\|_\infty^2}{\sqrt{\sum_{i=1}^{t} \|\bg_i\|_\infty^2}}
\leq \alpha\sqrt{2 \sum_{t=1}^{T} \|\bg_t\|_\infty^2}
\]
and
\begin{equation}
\label{eq:adahedge1}
\Regret_T(\bu)
\leq  \left(\frac{\ln d}{\alpha} + \alpha \sqrt{2}\right)\sqrt{\sum_{t=1}^{T} \|\bg_t\|^2_\infty}, \quad \forall \bu \in \Delta^{d-1}~.
\end{equation}
The bound and the assumption on $\alpha$ suggest to set $\alpha = \max(4, 2^{-1/4} \sqrt{\ln d})$.

We might consider ourselves happy, but there is a clear problem in the above algorithm: the choice of $\lambda_t$ in the time-varying regularizer strictly depends on our upper bound. So, a loose bound will result in a poor choice of the regularization! In general, every time we use a part of the proof in the design of an algorithm, we cannot expect an exciting empirical performance, unless our upper bound is really tight. So, can we design a better regularizer? Well, we need a better upper bound!

Let's consider a generic regularizer $\psi_t(\bx)=\lambda_t \psi(\bx)$ and its corresponding \ac{FTRL} with linear losses regret upper bound
\[
\Regret_T(\bu)
\leq \lambda_T (\psi(\bu) -\inf_{\bx \in \mathcal{V}} \psi(\bx))+ \sum_{t=1}^T \left(F_t(\bx_t) - F_{t+1}(\bx_{t+1}) + \langle \bg_t, \bx_t\rangle\right),
\]
where we assume $\lambda_t$ to be non-decreasing in time.

Now, observe that the sum is unlikely to disappear for this kind of algorithm, so we could try to make the term $\lambda_T (\psi(\bu) -\inf_{\bx \in \mathcal{V}} \psi(\bx))$ of the same order as the sum. So, we would like to set $\lambda_t$ of the same order of $\sum_{i=1}^t\left(F_i(\bx_i) - F_{i+1}(\bx_{i+1}) + \langle \bg_i, \bx_i\rangle\right)$. However, this approach would cause an annoying recurrence. So, using the fact that $\lambda_t$ is non-decreasing, let's upper bound the terms in the sum just a little bit:
\begin{align*}
F_t&(\bx_t) - F_{t+1}(\bx_{t+1}) + \langle \bg_t, \bx_t\rangle\\
&= F_t(\bx_t)- \lambda_{t+1} \psi(\bx_{t+1}) - \sum_{i=1}^t \langle \bg_i, \bx_{t+1}\rangle + \langle \bg_t, \bx_t\rangle\\
&\leq F_t(\bx_t) - \lambda_{t} \psi(\bx_{t+1}) - \sum_{i=1}^t \langle \bg_i, \bx_{t+1}\rangle + \langle \bg_t, \bx_t\rangle\\
&\leq F_t(\bx_t) - \min_{\bx \in \mathcal{V}} \left(\lambda_{t} \psi(\bx) + \sum_{i=1}^t \langle \bg_i, \bx\rangle\right) + \langle \bg_t, \bx_t\rangle
:=\delta_t~.
\end{align*}
Now, we can set $\lambda_t = \frac{1}{\alpha^2} \sum_{i=1}^{t-1} \delta_i$ for $t\geq 2$, $\lambda_1=0$, and $\bx_1 = \argmin_{\bx \in \mathcal{V}} \ \psi(\bx)$.
This immediately implies that
\[
\Regret_T(\bu)
\leq \left(\psi(\bu) -\inf_{\bx \in \mathcal{V}} \psi(\bx) + \alpha^2\right)\lambda_{T+1}~.
\]
Setting $\psi$ to be equal to the negative entropy\index{entropy!negative Shannon}, we get an algorithm known as \textbf{AdaHedge}.

With this choice of the regularizer, we can simplify a bit the expression of $\delta_t$. For $\lambda_t=0$, we have $\delta_t = -\max_{j=1,\dots,d} \theta_{t,j} + \max_{j=1,\dots,d} \theta_{t+1,j} +\langle \bg_t,\bx_t\rangle$. Instead, for $\lambda_t>0$, using the properties of the Fenchel conjugates, we have that
\[
\delta_t
=\lambda_t \ln \frac{\sum_{j=1}^d \exp\left(\theta_{t+1,j}/\lambda_t\right)}{\sum_{j=1}^d \exp\left(\theta_{t,j}/\lambda_t\right)}+\langle \bg_t,\bx_t\rangle
=\lambda_t \ln \left(\sum_{j=1}^d x_{t,j}\exp\left(-g_{t,j}/\lambda_t\right)\right)+\langle \bg_t,\bx_t\rangle~.
\]
Overall, we get the pseudo-code of AdaHedge in Algorithm~\ref{alg:adahedge}.

\begin{algorithm}[t]
\caption{AdaHedge}
\label{alg:adahedge}
\begin{algorithmic}[1]
{
    \REQUIRE{$\alpha>0$}
    \STATE{$\lambda_1=0$}
    \STATE{$\bx_1=[1/d, \dots, 1/d] \in \Delta^{d-1}$}
    \STATE{$\btheta_{1}=\boldsymbol{0} \in \R^d$}
    \FOR{$t=1$ {\bfseries to} $T$}
    \STATE{Output $\bx_t$}
    \STATE{Receive $\bg_t \in \R^d$ and pay $\langle \bg_t,\bx_t\rangle$}
    \STATE{Update $\btheta_{t+1} = \btheta_{t} - \bg_{t}$}
    \STATE{Set $\delta_t= \begin{cases}
    -\max_{j=1,\dots,d} \theta_{t,j} + \max_{j=1,\dots,d} \theta_{t+1,j} +\langle \bg_t,\bx_t\rangle, & \lambda_t=0\\
    \lambda_t \ln \left(\sum_{j=1}^d x_{t,j}\exp\left(-g_{t,j}/\lambda_t\right)\right)+\langle \bg_t,\bx_t\rangle, & \text{otherwise}
    \end{cases}$}
    \STATE{Update $\lambda_{t+1}=\lambda_t + \frac{1}{\alpha^2} \delta_t$}
    \STATE{Set $\bx_{t+1}=\begin{cases}
    [1/d,\dots,1/d], & \lambda_{t+1}=0\\
    x_{t+1,j} \propto \exp\left(\frac{\theta_{t+1,j}}{\lambda_{t+1}}\right), j=1, \dots, d, & \text{otherwise}
    \end{cases}$}
    \ENDFOR
}
\end{algorithmic}
\end{algorithm}

So, now we need an upper bound for $\lambda_{T+1}$.
Observe that $\lambda_{t+1}=\lambda_{t} + \frac{1}{\alpha^2} \delta_t$. Moreover, as we have done before, we can upper bound $\delta_t$ in two different ways. In fact, from Lemma~\ref{lemma:ftrl_strongly_1}, we have $\delta_t \leq \frac{\|\bg_t\|^2_\infty}{2\lambda_t}$ whenever $\lambda_t>0$. Also, denoting by $\tilde{\bx}_{t+1} = \argmin_{\bx \in \mathcal{V}} \ \lambda_{t} \psi(\bx) + \sum_{i=1}^t \langle \bg_i, \bx\rangle$, we have
\begin{align*}
\delta_t
&= F_t(\bx_t) - \min_{\bx \in \mathcal{V}} \left(\lambda_{t} \psi(\bx) + \sum_{i=1}^t \langle \bg_i, \bx\rangle\right) + \langle \bg_t, \bx_t\rangle\\
&= F_t(\bx_t) - \lambda_{t} \psi(\tilde{\bx}_{t+1}) - \sum_{i=1}^t \langle \bg_i, \tilde{\bx}_{t+1}\rangle + \langle \bg_t, \bx_t\rangle \\
&\leq F_t(\tilde{\bx}_{t+1}) - \lambda_{t} \psi(\tilde{\bx}_{t+1}) - \sum_{i=1}^t \langle \bg_i, \tilde{\bx}_{t+1}\rangle + \langle \bg_t, \bx_t\rangle \\
&= -\langle \bg_t, \tilde{\bx}_{t+1}\rangle + \langle \bg_t, \bx_t\rangle
\leq 2\|\bg_t\|_\infty~.
\end{align*}
Hence, we have
\begin{align*}
\lambda_1 &= 0, \\
\lambda_2 &\leq \frac{2 \|\bg_1\|_\infty}{\alpha^2},\\
\lambda_{t+1} &= \lambda_t + \frac{\delta_t}{\alpha^2} \leq \lambda_t + \frac{1}{\alpha^2} \min\left(2\|\bg_t\|_\infty,\frac{\|\bg_t\|^2_\infty}{2\lambda_t} \right), \quad \forall t\geq 1,
\end{align*}
where we use the convention that $\frac{\|\bg_t\|^2_\infty}{2\lambda_t}=+\infty$ when $\lambda_t=0$.
We can solve this recurrence using the following lemma, where $\Delta_t=\lambda_{t+1}$ and $a_t=\|\bg_t\|_\infty$.
\begin{lemma}
\label{lemma:recurrence-solution}
Let $\{a_t\}_{t=1}^\infty$ be any sequence of non-negative real numbers.
Suppose that $\{\Delta_t\}_{t=0}^\infty$ is a non-decreasing sequence of non-negative real numbers satisfying
\[
\Delta_0 = 0 \qquad \text{and} \qquad
\Delta_t \le \Delta_{t-1} + \min \left\{ b a_t, \ c \frac{a_t^2}{2\Delta_{t-1}} \right\} \quad \text{for any $t \ge 1$}~.
\]
When $\Delta_{t-1}=0$, we use the convention $\frac{a_t^2}{2\Delta_{t-1}}=+\infty$.
Then, for any $T \ge 0$, $\Delta_T \le \sqrt{(b^2 + c)\sum_{t=1}^T a_t^2}$.
\end{lemma}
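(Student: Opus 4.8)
The plan is to prove the bound $\Delta_T \le \sqrt{(b^2+c)\sum_{t=1}^T a_t^2}$ by induction on $T$. Let me denote $S_T = \sum_{t=1}^T a_t^2$, so the target is $\Delta_T \le \sqrt{(b^2+c)S_T}$. The base case $T=0$ holds since $\Delta_0 = 0 = \sqrt{(b^2+c)\cdot 0}$. For the inductive step, I assume $\Delta_{t-1} \le \sqrt{(b^2+c)S_{t-1}}$ and must show $\Delta_t \le \sqrt{(b^2+c)S_t}$. The wrinkle is that the recurrence only gives an upper bound on $\Delta_t$ in terms of $\Delta_{t-1}$, and that bound is \emph{increasing} in $\Delta_{t-1}$ for the $ba_t$ branch but \emph{decreasing} for the $c a_t^2/(2\Delta_{t-1})$ branch. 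So I cannot naively substitute the inductive hypothesis into both branches at once.

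The natural way to handle this is a case analysis based on which term achieves the minimum, or better, to work with the squared quantity. I would prove the stronger-looking claim $\Delta_t^2 \le (b^2+c)S_t$ directly. Squaring, from the recurrence $\Delta_t \le \Delta_{t-1} + m_t$ where $m_t = \min\{ba_t, \, ca_t^2/(2\Delta_{t-1})\}$, I get
\[
\Delta_t^2 \le \Delta_{t-1}^2 + 2\Delta_{t-1} m_t + m_t^2~.
\]
Now I bound the two new terms. For the cross term $2\Delta_{t-1}m_t$, I use the second branch $m_t \le ca_t^2/(2\Delta_{t-1})$, giving $2\Delta_{t-1}m_t \le c a_t^2$. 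For the square term $m_t^2$, I use the first branch $m_t \le ba_t$, giving $m_t^2 \le b^2 a_t^2$. Combining,
\[
\Delta_t^2 \le \Delta_{t-1}^2 + c a_t^2 + b^2 a_t^2 = \Delta_{t-1}^2 + (b^2+c)a_t^2~.
\]
Applying the inductive hypothesis $\Delta_{t-1}^2 \le (b^2+c)S_{t-1}$ then yields $\Delta_t^2 \le (b^2+c)(S_{t-1}+a_t^2) = (b^2+c)S_t$, which completes the induction after taking square roots.

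The key trick — and the step I expect to be the only genuinely nonobvious point — is precisely that one exploits the \emph{two} branches of the minimum separately: the cross term is controlled by the ``harmonic'' branch $ca_t^2/(2\Delta_{t-1})$ (which conveniently cancels the awkward $\Delta_{t-1}$ factor), while the quadratic term is controlled by the ``linear'' branch $ba_t$. A minor edge case to check is $t=1$, where $\Delta_0 = 0$ makes the second branch formally $c a_1^2/0 = +\infty$; there the minimum is just $ba_1$, so $\Delta_1 \le ba_1$ and $\Delta_1^2 \le b^2 a_1^2 \le (b^2+c)a_1^2 = (b^2+c)S_1$, which is consistent and requires no special treatment beyond noting the convention. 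Everything else is routine algebra, so I would keep the writeup short and centered on the squaring argument above.
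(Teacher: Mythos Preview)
Your proposal is correct and follows essentially the same approach as the paper: square the recurrence and use the first branch of the minimum to bound the quadratic term $m_t^2 \le b^2 a_t^2$ and the second branch to bound the cross term $2\Delta_{t-1}m_t \le c a_t^2$. The only cosmetic difference is that the paper telescopes $\Delta_T^2 = \sum_{t=1}^T(\Delta_t^2 - \Delta_{t-1}^2)$ and bounds each summand, whereas you phrase the same step as induction; the two presentations are equivalent, and your handling of the $\Delta_0=0$ edge case is fine.
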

\begin{proof}
Observe that
\[
\Delta_T^2
= \sum_{t=1}^T (\Delta_t^2 - \Delta_{t-1}^2)
= \sum_{t=1}^T \left[(\Delta_t - \Delta_{t-1})^2 + 2 (\Delta_t - \Delta_{t-1}) \Delta_{t-1}\right]~.
\]
We bound each term in the sum separately.
The left term of the minimum inequality in the definition of $\Delta_t$ gives
$ (\Delta_t - \Delta_{t-1})^2 \le b^2 a_t^2$, while the right term gives $2 (\Delta_t - \Delta_{t-1}) \Delta_{t-1} \le c a_t^2$.
The latter inequality is also true when $\Delta_{t-1}=0$.
So, we conclude $\Delta_T^2 \leq (b^2+c) \sum_{t=1}^T a_t^2$.
\end{proof}

Overall, for any $\bu \in \Delta^{d-1}$, we got
\begin{align*}
\Regret_T(\bu)
&\leq \left(\psi(\bu) -\inf_{\bx \in \mathcal{V}} \psi(\bx) + \alpha^2\right)\lambda_{T+1}\\
&\leq \left(\frac{\psi(\bu) -\inf_{\bx \in \mathcal{V}} \psi(\bx)}{\alpha^2} + 1\right)\sqrt{\left(4+\alpha^2\right) \sum_{t=1}^T \|\bg_t\|_\infty^2},
\end{align*}
and setting $\alpha=\sqrt{\ln d}$, we have
\[
\Regret_T(\bu)
\leq 2\sqrt{\left(4+\ln d\right) \sum_{t=1}^T \|\bg_t\|_\infty^2}~.
\]

Note that this is roughly the same regret in \eqref{eq:adahedge1}, but the very important difference is that this new regret bound depends on the much tighter quantity $\lambda_{T+1}$, that we upper bounded with a term proportional to $\sqrt{\sum_{t=1}^T \|\bg_t\|_\infty^2}$, but it can be much smaller than that. For example, $\lambda_{T+1}$ could be upper bounded using the tighter local norms, see Section~\ref{sec:ftrl_eg}. Instead, in the first solution, the regret will always be dominated by the term $\sqrt{\sum_{t=1}^T \|\bg_t\|_\infty^2}$ because we explicitly use it in the regularizer.

\begin{remark}
There is an important lesson to be learned from AdaHedge: the regret is not the full story, and algorithms with the same worst-case guarantee can exhibit vastly different empirical behaviours. Unfortunately, this message is rarely heard, and sometimes one focuses too much on the worst-case guarantee rather than on the empirical performance. Even worse, sometimes people favor algorithms with a ``more elegant analysis'' completely ignoring the likely worse empirical performance.
\end{remark}
\index{AdaHedge algorithm|)textbf}

\section{Example of FTRL: Group Norms}

Suppose you want to do linear regression and you have $n$ different groups of features in $\R^d$ associated with each sample. For example, they could represent different modalities (audio, video, etc.) associated with the same sample. You could just concatenate all the features in a long vector and learn with them using \ac{FTRL} with a squared L$_2$ regularizer. Alternatively, we could use \ac{FTRL} with $p$-norm (see Problem~\ref{exercise:pnorm_ftrl}), and tune $p$ to have a logarithmic dependence on $n\, d$. However, there is something in between: we may want to learn a linear combination of predictors, one for each group, trying to use a very large number of groups and paying a small price for it. In particular, we might not want $d$ to appear in the bound because it might be arbitrarily large, potentially infinity. This is the equivalent of the \ac{LEA} setting, where each expert is now a linear predictor. In fact, one might be tempted to use one online learning algorithm on each group and a \ac{LEA} algorithm on top that combines the predictions. However, this approach is very clunky because the \ac{LEA} algorithm and the linear predictors are learning somehow independently, hence we can do much better. In fact, we can use an L$_2$ norm over each group and a $p$-norm over the groups with $p$ very close to 1. In this way, we expect a regret upper bound that depends on the maximum L$_2$ norm of the subgradients with respect to each single group, but no explicit dependence on $d$. Let's see how this would work.

First, we introduce the definition of \emph{group norms}.
\begin{definition}
Let $\bX = \begin{bmatrix}\bx_1 &\bx_2 &\dots &\bx_n\end{bmatrix}$ be a $d \times n$ real matrix with columns $\bx_i \in \R^d$. Given norms $\Psi$ and $\Phi$ on $\R^d$ and $\R^n$, we define the \textbf{group norm}\index{norm!group|textbf} $\|\bX\|_{\Psi,\Phi}$ as
\[
\|\bX\|_{\Psi,\Phi}
:= \Phi([\Psi(\bx_1), \dots, \Psi(\bx_n)])~.
\]
\end{definition}

It is possible to check that this is indeed a norm, and we can also calculate the dual norm.
We will need an additional definition.
\begin{definition}
We say that $f:\R^d \to \R$ is \textbf{absolutely symmetric}\index{function!absolute symmetric|textbf} if it is invariant under permutations and sign changes of the components of its input.
\end{definition}

\begin{lemma}[{\citealp[Lemma 17]{KakadeSST09}}]
\label{lemma:dual_group_norm}
Let $\Phi$ be an absolutely symmetric norm on $\R^n$. Then,
\[
(\| \cdot \|_{\Psi,\Phi})_\star
= \| \cdot \|_{\Psi^\star , \Phi^\star}~.
\]
\end{lemma}
From this lemma and Example~\ref{example:conj_squared_norm}, we immediately have that the Fenchel conjugate of $\frac{1}{2}\|\cdot\|^2_{\Psi,\Phi}$ is $\frac{1}{2}\|\cdot\|^2_{\Psi^\star,\Phi^\star}$.

We can also construct strongly convex functions based on group norms.
\begin{theorem}[{\citealp[Theorem 18]{KakadeSST09}}]
Let $\Psi, \Phi$ be absolutely symmetric norms on $\R^d$, $\R^n$. Let $\Phi^2 \circ \sqrt{} : \R^n \to (-\infty, +\infty]$ denote
the following function,
\[
(\Phi^2 \circ \sqrt{})(\bx) := \Phi^2(\sqrt{x_1}, \dots , \sqrt{x_n})~.
\]
Suppose, $\Phi^2 \circ \sqrt{}$ satisfies the properties of a norm on $\R_{\geq 0}^n$. Further, let the functions $\Psi^2$ and $\Phi^2$ be $\lambda_1$- and $\lambda_2$-smooth\index{function!smooth} with respect to $\Psi$ and $\Phi$ respectively. Then, $\| \cdot \|^2_{\Psi, \Phi}$ is $(\lambda_1 + \lambda_2)$-smooth\index{function!smooth} with respect to $\| \cdot \|_{\Psi, \Phi}$.
\end{theorem}

We can now specialize this result to \emph{group $p$-norms}.
\begin{corollary}
Let $q, s \geq 2$. The function $\frac12 \| \cdot \|^2_{q,s}$ is $(q + s - 2)$-smooth\index{function!smooth} with respect to $\| \cdot \|_{q,s}$ on $\R^{d \times n}$.
\end{corollary}

Assuming $n\geq 2$, the above suggests to use \ac{FTRL} with linearized losses and regularizer $\psi_t(\bX)=\frac{\alpha \sqrt{t}}{2}\|\bX\|^2_{2,p}$, where $1 < p\leq 2$ and $\alpha>0$ will be decided in the following. From the above and Example~\ref{example:conj_squared_norm}, the dual is $\psi_t^\star(\bTheta)=\frac{1}{2\alpha \sqrt{t}}\|\bTheta\|^2_{2,q}$ and it is $\frac{q}{\alpha \sqrt{t}}$-smooth\index{function!smooth}, where $\frac{1}{p}+\frac{1}{q}=1$. Moreover,
\[
[\nabla \psi_t^\star(\bTheta)]_i
=\frac{\btheta_i \|\btheta_i\|_2^{q-2}}{\alpha \sqrt{t} (\sum_{j=1}^n \|\btheta_j\|_2^q)^{1-2/q}}, \quad i=1, \dots, n~.
\]

For simplicity, we will use the absolute loss as a loss function.
We are now ready to introduce Algorithm~\ref{alg:ftrl_groupnorm}.

\begin{algorithm}[t]
\caption{FTRL with Group Norms for Linear Regression}
\label{alg:ftrl_groupnorm}
\begin{algorithmic}[1]
{
    \REQUIRE{$q\geq 2, \alpha>0$}
    \STATE{$\btheta_{1,i}=\bx_{1,i}=\boldsymbol{0} \in \R^d, \ i=1, \dots, n$}
    \FOR{$t=1$ {\bfseries to} $T$}
    \STATE{Receive matrix of features $\bZ_t=\begin{bmatrix} \bz_{t,1} & \dots &\bz_{t,n}\end{bmatrix}$}
    \STATE{Predict $\hat{y}_t = \sum_{i=1}^n \bz_{t,i}^\top\bx_{t,i}$}
    \STATE{Receive label $y_t$ and pay $|\hat{y}_t-y_t|$}
    \STATE{Update $\btheta_{t+1,i} = \btheta_{t,i} - \bz_{t,i}\sign(\hat{y}_t-y_t), \ i=1, \dots, n$}
    \STATE{Update $\bx_{t+1,i} =
    \begin{cases}
    \boldsymbol{0}, & \btheta_{t+1,j}=\boldsymbol{0}, \ \forall j,\\
    \frac{\btheta_{t+1,i} \|\btheta_{t+1,i}\|_2^{q-2}}
    {\alpha \sqrt{t+1} (\sum_{j=1}^n \|\btheta_{t+1,j}\|_2^q)^{1-2/q}}, & \text{otherwise},
    \end{cases}
    \quad i=1,\dots,n$}
    \ENDFOR
}
\end{algorithmic}
\end{algorithm}

From the analysis of \ac{FTRL} with linearized losses, the properties of the group norms above, and Theorem~\ref{thm:prop_fenchel_sc}\index{duality strong convexity-smoothness}, it is immediate to state a regret upper bound. With the notation in Algorithm~\ref{alg:ftrl_groupnorm}, for any $\bU \in \R^{d \times n}$ we have
\begin{align*}
\sum_{t=1}^T |\hat{y}_t-y_t| - \sum_{t=1}^T \left|\sum_{i=1}^n \bz_{t,i}^\top \bu_i -y_t\right|
&\leq \frac{\alpha \sqrt{T}}{2}\|\bU\|^2_{2,p} + \frac{q}{2\alpha} \sum_{t=1}^T \frac{\|\bZ_t\|^2_{2,q}}{\sqrt{t}}\\
&\leq \frac{\alpha \sqrt{T}}{2}\|\bU\|^2_{2,p} + \frac{q \sqrt{T}}{\alpha} \max_{t\leq T} \ \|\bZ_t\|^2_{2,q}~.
\end{align*}
To obtain a logarithmic dependence in $n$, we can now proceed as in Section~\ref{sec:omd_eg}. Assuming $\|\bz_{t,i}\|_2\leq L$, we have that $\|\bZ_t\|^2_{2,q}\leq L^2 n^{2/q}$. Hence, setting $\alpha = q L n^{1/q}$ and $q = \max(2 \ln n,2)$ we have that $\frac{q}{\alpha}\|\bZ_t\|^2_{2,q}\leq L \max(\sqrt{2 e \ln n},2)$. Moreover, we can upper bound $\|\bU\|^2_{2,p}$ with $\|\bU\|^2_{2,1}$. Hence, the final regret upper bound is logarithmic in the number of groups: $L \max(\sqrt{2 e \ln n},2) \left(\frac{\|\bU\|^2_{2,1}}{2} + 1\right) \sqrt{T}$.

\section{Composite Losses and Proximal Operators}
\label{sec:ftrl_composite}
\index{composite loss|(textbf}

Let's now see a variant of the linearization of the losses: \emph{partial linearization of composite losses}.

Suppose that the losses we receive are composed of two terms: one convex function changing over time, and another part is fixed and known. These losses are called \textbf{composite}.
For example, we might have $\ell_t(\bx)= \tilde{\ell}_t(\bx)+\lambda \|\bx\|_1$.
Using the linearization, we might just take the subgradient of $\ell_t$. However, in this particular case, we might lose the ability of the L$_1$ norm to produce sparse solutions.

There is a better way to deal with these kinds of losses: move the constant part of the loss inside the regularization term. In this way, that part will not be linearized but used exactly in the argmin of the update. Moreover, the regret of the algorithm will depend only on the subgradient of  $\tilde{\ell}_t$ rather than subgradients of the full loss $\ell_t$. Assuming that the argmin is still easily computable, you can always expect better performance from this approach. In particular, in the case of adding an L$_1$ norm to the losses, you will be predicting in each step with the solution of an L$_1$ regularized optimization problem.

\begin{remark}
Given that we do not pay for the gradient of the regularizer in the regret upper bound of \ac{FTRL}, one may wonder why we do not put the entire loss function in the regularizer. This is not possible because it would require knowledge of the future: at time $t$, we construct the regularizer $\psi_t$ when we do not have access to the future loss $\ell_t$.
\end{remark}

Practically speaking, in the example above, we will define $\psi_t(\bx) = \frac{L}{\alpha} \sqrt{t}(\psi(\bx) - \min_{\by} \ \psi(\by))+ \lambda t \|\bx\|_1$, where we assume $\psi$ to be 1-strongly convex and the losses $\tilde{\ell}_t$ to be $L$-Lipschitz. Note that we use at time $t$ a term $\lambda t \|\bx\|_1$ because we anticipate the next term in the next round.
Given that $\psi_t(\bx)+\sum_{i=1}^{t-1} \ell_i(\bx)$ is $\frac{L}{\alpha} \sqrt{t}$-strongly convex, using Lemma~\ref{lemma:ftrl_strongly_1} and setting $\psi_{T+1}=\psi_T$, we have
\begin{align*}
&\sum_{t=1}^T \tilde{\ell}_t(\bx_t) - \sum_{t=1}^T \tilde{\ell}_t(\bu)
\leq \sum_{t=1}^T \langle \bg_t,\bx_t\rangle - \sum_{t=1}^T \langle \bg_t,\bu\rangle \\
&\leq \frac{L}{\alpha} \sqrt{T}\left(\psi(\bu) - \min_{\bx \in \mathcal{V}} \ \psi(\bx)\right) + \lambda T \|\bu\|_1
- \min_{\bx \in \mathcal{V}} \ \left[\lambda \|\bx\|_1 + \frac{L}{\alpha}(\psi(\bx)- \min_{\by \in \mathcal{V}} \ \psi(\by))\right] \\
&\quad + \frac{\alpha}{2}\sum_{t=1}^T \frac{\|\bg_t\|^2_\star}{L \sqrt{t}}
- \lambda \sum_{t=1}^{T-1} \|\bx_{t+1}\|_1 \\
&\leq L \sqrt{T}\left(\alpha+\frac{\psi(\bu) - \min_{\bx \in \mathcal{V}} \ \psi(\bx)}{\alpha} \right) + \lambda T \|\bu\|_1 - \lambda \|\bx_1\|_1  - \lambda \sum_{t=1}^{T-1} \|\bx_{t+1}\|_1,
\end{align*}
where $\bg_t \in \partial \tilde{\ell}_t(\bx_t)$.
Reordering the terms, we have
\begin{align*}
\sum_{t=1}^T \ell_t(\bx_t) - \sum_{t=1}^T \ell_t(\bu)
&= \sum_{t=1}^T (\lambda \|\bx_t\|_1+\tilde{\ell}_t(\bx_t)) - \sum_{t=1}^T (\lambda \|\bu\|_1+\tilde{\ell}_t(\bu))\\
&\leq L \sqrt{T}\left(\frac{\psi(\bu) - \min_{\bx \in \mathcal{V}} \psi(\bx)}{\alpha}+ \alpha\right)~.
\end{align*}

\begin{example}
Let's also take a look at the update rule in that case, that $\psi(\bx)=\frac{1}{2}\|\bx\|_2^2$ and we get composite losses with the L$_1$ norm. Also, assume that $\mathcal{V}=\R^d$. So, we have
\[
\bx_{t}
= \argmin_{\bx \in \R^d} \ \frac{L\sqrt{t}}{2\alpha}\|\bx\|_2^2 + \lambda t \|\bx\|_1 + \sum_{i=1}^{t-1} \langle \bg_i, \bx\rangle~.
\]
We can solve this problem by observing that the minimization decomposes over each coordinate of $\bx$. Denote by $\btheta_t=\sum_{i=1}^{t-1} \bg_i$. Hence, we know from first-order optimality condition in Theorem~\ref{thm:first_order_subdiff} that $x_{t,i}$ is the solution for the coordinate $i$ iff there exists $v_i \in \partial |x_{t,i}|$ such that
\[
\frac{L}{\alpha} \sqrt{t} x_{t,i}  + \lambda t v_i + \theta_{t,i} = 0~.
\]
The difficulty comes from the fact that $\bv$ is a subgradient in the next point $\bx_t$, which we do not know because we do not know $\bv$.
Unfortunately, there is no automatic way to find $v_i$, so we can ``guess'' the correct expression of $v_i$ and verify its correctness in the above expression.
In particular, considering 3 different cases, we have
\begin{itemize}
\item $|\theta_{t,i}|\leq \lambda t$, then $x_{t,i}=0$ and $v_i = - \tfrac{\theta_{t,i}}{\lambda t}$.
\item $\theta_{t,i}> \lambda t$, then $x_{t,i}=-\alpha\frac{\theta_{t,i}-\lambda t}{L \sqrt{t}}$ and $v_i = - 1$.
\item $\theta_{t,i}< - \lambda t$, then $x_{t,i}=-\alpha\frac{\theta_{t,i}+\lambda t}{L \sqrt{t}}$ and $v_i = 1$.
\end{itemize}
So, overall, we have
\[
x_{t,i} = -\alpha\frac{\sign(\theta_{t,i})\max(|\theta_{t,i}|-\lambda t,0)}{L \sqrt{t}}~.
\]
Observe that this update is a soft-thresholding operator\index{soft-thresholding operator} and it will produce sparse solutions, while just taking the subgradient of the L$_1$ norm would have never produced sparse predictions.
\end{example}

In the example above, we calculated something like
\begin{align*}
\argmin_{\bx \in \R^d} \ \|\bx\|_1 - \langle\bv, \bx\rangle + \frac{1}{2}\|\bx\|_2^2
&=\argmin_{\bx \in \R^d} \ \|\bx\|_1 - \langle\bv, \bx\rangle + \frac{1}{2}\|\bx\|_2^2 +\frac{1}{2}\|\bv\|_2^2\\
&=\argmin_{\bx \in \R^d} \ \|\bx\|_1 +\frac{1}{2}\|\bx-\bv\|_2^2~.
\end{align*}
This operation is known in the optimization literature as \textbf{Proximal Operator}\index{proximal operator|textbf} of the L$_1$ norm. In general, a proximal operator of a convex, proper, and closed function $f:\R^d \to (-\infty,+\infty]$ is defined as
\begin{equation}
\label{eq:prox}
\Prox_f(\bv) = \argmin_{\bx \in \R^d} \ f(\bx) +\frac{1}{2}\|\bx-\bv\|_2^2~.
\end{equation}
Proximal operators are used in optimization in the same way as we used them: they allow us to minimize the entire function rather than a linear approximation of it. Also, proximal operators generalize the concept of Euclidean projection. Indeed, $\Prox_{\indicator_{\mathcal{V}}}(\bv)= \Pi_{\mathcal{V}}(\bv)$.

From the definition, we also derive the important property of proximal operators that they minimize the function in each update. In other words, if $\bx_{t+1}=\Prox_f(\bx_t)$, then $f(\bx_{t+1})\leq f(\bx_t)$. Another interesting property is the fact that we can write the proximal operator as an \emph{implicit} subgradient descent update. Let $\eta>0$ and consider the following update
\[
\bx_{t+1}
=\Prox_{\eta f}(\bx_t)
= \argmin_{\bx \in \R^d} \ \eta f(\bx) +\frac{1}{2}\|\bx-\bx_t\|_2^2~.
\]
Then, from Theorem~\ref{thm:first_order_subdiff} we have
\[
\bx_{t+1} = \bx_t - \eta \bg_t,
\]
where $\bg_t \in \partial f(\bx_{t+1})$. Hence, we do an update step using the subgradient in the \emph{next} iteration.
This also reveals a connection between \ac{OMD} and proximal updates. Indeed, the update in \eqref{eq:prox} resembles the one in \eqref{eq:osd_as_omd}, where the function is not linearized. We will use this kind of update in Section~\ref{sec:prescient_omd}.

%
\index{composite loss|)textbf}

\section{FTRL on Strongly Convex Functions}
\label{sec:ftrl_strongly}
\index{function!strongly convex|(}

We will now use Lemma~\ref{lemma:ftrl_strongly_1} to prove a logarithmic regret bound for \ac{FTL} with strongly convex losses. Note that the existence and uniqueness (for $t\geq 2$) of $\bx_t$ is guaranteed by Theorem~\ref{thm:min_strongly_convex}.
\begin{corollary}
\label{cor:ftrl_strongly}
Let $\mathcal{V} \subseteq \R^d$ be a non-empty convex closed set.
Let $\ell_t:\R^d \to (-\infty, +\infty]$ be $\mu_t$-strongly convex with respect to $\|\cdot\|$, closed, and subdifferentiable on $\mathcal{V}$ for $t=1,\dots,T$. Set the sequence of regularizers to zero.
Then, \ac{FTL} guarantees a regret of
\[
\sum_{t=1}^T \ell_t(\bx_t) - \sum_{t=1}^T \ell_t(\bu)
\leq \frac{1}{2}\sum_{t=1}^T \frac{\|\bg_t\|_\star^2}{\sum_{i=1}^t \mu_i}, \quad \forall \bu \in \mathcal{V},\  \forall \bg_t \in \partial \ell_t(\bx_t)~.
\]
\end{corollary}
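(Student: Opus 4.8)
The plan is to combine the FTRL equality (Lemma~\ref{lemma:ftrl_equality}) with the proximal one-step bound (Lemma~\ref{lemma:proximal}), exploiting the fact that the constant (here zero) regularizer is automatically proximal, as observed in the Remark following Lemma~\ref{lemma:proximal}. First I would set $V = \R^d$ and $\psi_t \equiv 0$ for all $t$, so that the FTRL prediction $\bx_t \in \argmin_{\bx} F_t(\bx)$ with $F_t(\bx) = \sum_{i=1}^{t-1} \ell_i(\bx)$ coincides with the Follow-the-Leader prediction. Since each $\ell_i$ is $\mu_i$-strongly convex w.r.t. $\|\cdot\|$, the sum-of-strongly-convex-functions theorem gives that $F_{t+1}(\bx) = \sum_{i=1}^{t} \ell_i(\bx)$ is $\lambda_{t+1}$-strongly convex with $\lambda_{t+1} = \sum_{i=1}^{t} \mu_i$. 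Note that for every $t \geq 1$ this is at least $\mu_1 > 0$, so the quantity $\sum_{i=1}^{t}\mu_i$ in the denominator of the claimed bound is strictly positive and the strong convexity hypothesis of Lemma~\ref{lemma:proximal} is met for $F_{t+1}$ (the degeneracy $F_1 = 0$ being harmless, since only $\lambda_{t+1}$ is used).

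Next I would invoke Lemma~\ref{lemma:proximal}. Because the regularizer is constant, the proximal condition $\bx_t \in \argmin_{\bx} \psi_{t+1}(\bx) - \psi_t(\bx)$ holds trivially (any point minimizes the zero function), and the two regularizer difference terms $\psi_t(\bx_t) - \psi_{t+1}(\bx_t)$ vanish. This yields, for every $\bg_t \in \partial \ell_t(\bx_t)$,
\[
F_t(\bx_t) - F_{t+1}(\bx_{t+1}) + \ell_t(\bx_t)
\leq \frac{\|\bg_t\|_\star^2}{2\sum_{i=1}^t \mu_i}~.
\]

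Finally I would apply the FTRL equality of Lemma~\ref{lemma:ftrl_equality}. With $\psi_t \equiv 0$ the boundary terms $\psi_{T+1}(\bu) - \min_{\bx} \psi_1(\bx)$ both vanish, and the term $F_{T+1}(\bx_{T+1}) - F_{T+1}(\bu)$ is non-positive because $\bx_{T+1}$ minimizes $F_{T+1}$ over $\R^d$. Dropping this non-positive term and substituting the per-step bound above gives
\[
\sum_{t=1}^T \left(\ell_t(\bx_t) - \ell_t(\bu)\right)
\leq \sum_{t=1}^T \left[F_t(\bx_t) - F_{t+1}(\bx_{t+1}) + \ell_t(\bx_t)\right]
\leq \frac{1}{2}\sum_{t=1}^T \frac{\|\bg_t\|_\star^2}{\sum_{i=1}^t \mu_i}~,
\]
which is the claim. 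The proof is essentially bookkeeping; the only point requiring care is tracking the correct strong convexity constant $\lambda_{t+1} = \sum_{i=1}^t \mu_i$ of $F_{t+1}$ and confirming that the $t=1$ round (where $F_1 \equiv 0$ and $\bx_1$ is arbitrary) does not break the argument, since Lemma~\ref{lemma:proximal} only relies on the strong convexity of $F_{t+1}$.
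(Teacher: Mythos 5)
Your proposal is correct and is exactly the argument the paper intends: instantiate the FTRL equality of Lemma~\ref{lemma:ftrl_equality} with $\psi_t\equiv 0$, note (as in the remark preceding the corollary) that the constant regularizer is trivially proximal, and apply Lemma~\ref{lemma:proximal} with $\lambda_{t+1}=\sum_{i=1}^t\mu_i$ coming from the strong convexity of $F_{t+1}=\sum_{i=1}^t\ell_i$. Your bookkeeping of the strong-convexity constant and of the degenerate $t=1$ round is also handled correctly.
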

The above regret guarantee is the same as the one of \ac{OMD} over strongly convex losses, but here we do not need to know the strong convexity of the losses. In fact, we just need to output the minimizer over the past losses. However, this might be undesirable because now each update requires the minimization of a convex function.

Hence, we can again use the idea of replacing the losses with simpler \emph{surrogates}. In the Lipschitz case, it made sense to use linear losses because they were used to upper bound the regret of the true losses. However, here you can do better and use \emph{quadratic} losses, because the losses are strongly convex. In fact, assuming $\ell_t$ to be $\mu_t$ strongly convex with respect to $\|\cdot\|$, we have
\[
\ell_t(\bx_t) - \ell_t(\bu)
\leq \langle \bg_t, \bx_t-\bu\rangle - \frac{\mu_t}{2}\|\bx_t-\bu\|^2
= \tilde{\ell}_t(\bx_t) - \tilde{\ell}_t(\bu), \quad \forall \bg_t \in \partial \ell_t(\bx_t)~.
\]
So, we could run \ac{FTRL} on the quadratic losses $\tilde{\ell}_t(\bx)=\langle \bg_t, \bx\rangle + \frac{\mu_t}{2}\|\bx-\bx_t\|^2$, where $\bg_t \in \partial \ell_t(\bx_t)$. Depending on the norm, these quadratic losses might be strongly convex, allowing us to use Corollary~\ref{cor:ftrl_strongly} and obtain Algorithm~\ref{alg:ftrl_quad}.

\begin{algorithm}[t]
\caption{Follow-the-Regularized-Leader with ``Quadratized'' Losses}
\label{alg:ftrl_quad}
\begin{algorithmic}[1]
{
    \REQUIRE{A sequence of regularizers $\psi_1, \dots, \psi_T :\mathcal{X} \to \R$, closed non-empty convex set $\mathcal{V} \subseteq \mathcal{X} \subseteq \R^d$}
    \FOR{$t=1$ {\bfseries to} $T$}
    \STATE{Output $\bx_t \in \argmin_{\bx \in \mathcal{V}} \ \psi_t(\bx) + \sum_{i=1}^{t-1} \left(\langle \bg_i,\bx\rangle + \frac{\mu_i}{2} \|\bx-\bx_i\|^2\right)$}
    \STATE{Pay the loss $\ell_t(\bx_t)$, where $\ell_t$ is subdifferentiable on $\mathcal{V}$ and $\mu_t$ strongly convex in $\mathcal{V}$}
    \STATE{Set $\bg_t \in \partial \ell_t(\bx_t)$}
    \ENDFOR
}
\end{algorithmic}
\end{algorithm}

For example, consider the case that the losses are strongly convex with respect to $\|\cdot\|_2$ and there is no regularizer. Using Theorem~\ref{thm:two_steps_omd}, the update in Algorithm~\ref{alg:ftrl_quad} for $t\geq 2$ becomes
\begin{equation}
\label{eq:ftrl_upd_strongly}
\bx_t
= \argmin_{\bx \in \mathcal{V}} \ \sum_{i=1}^{t-1} \left(\langle \bg_i, \bx\rangle + \frac{\mu_i}{2}\|\bx-\bx_i\|_2^2\right)
= \Pi_{\mathcal{V}}\left(\frac{\sum_{i=1}^{t-1} (\mu_i \bx_i - \bg_i)}{\sum_{i=1}^{t-1} \mu_i}\right),
\end{equation}
where the only expensive operation is the projection, which can be computed in polynomial time.
Moreover, we will get exactly the same regret bound as in Corollary~\ref{cor:ftrl_strongly} because $\bg_t \in \partial \tilde{\ell}_t(\bx_t)$. However, there are two important differences: 1) here the guarantee holds for a specific choice of the $\bg_t$ rather than for any subgradient in $\partial \ell_t(\bx_t)$; 2) we need to know the strong convexity parameters $\mu_i$, while it was not needed for \ac{FTL} with exact losses.

\begin{example}
Going back to the example in the first chapter, where $\mathcal{V}=[0,1]$ and $\ell_t(\bx)=(x-y_t)^2$ are strongly convex, we now see immediately that \ac{FTRL} without a regularizer, that is \ac{FTL}, gives logarithmic regret.
\end{example}
\index{function!strongly convex|)}

\section{A Weaker Notion of Strong Convexity and the Online Newton Step}
\label{sec:ons}

In the previous section, we saw that the notion of strong convexity allows us to build quadratic surrogate loss functions, on which \ac{FTRL} has smaller regret.
Can we find a more general notion of strong convexity that allows us to get a small regret for a larger class of functions?
We can start from strong convexity and try to generalize it.
So, instead of asking that the function is strongly convex with respect to a norm, we might be happy requiring that strong convexity holds at a particular point with respect to a seminorm that depends on the points themselves.

In particular, we can require that for a loss $\ell:\R^d \to (-\infty, +\infty]$ subdifferentiable on $\mathcal{V}$, for a $\bx \in \mathcal{V}$, and for a $\bg \in \partial \ell(\bx)$, there exists $\bA \in \R^{d \times d}$ such that $\bA \succeq 0$ and
\begin{equation}
\label{eq:restriced_sc}
\ell(\by) \geq \ell(\bx) + \langle \bg , \by - \bx \rangle + \frac{1}{2} \| \bx - \by \|_{\bA}^2, \quad \forall \by \in \mathcal{V},
\end{equation}
where $\|\bx\|_{\bA}$ is defined as $\sqrt{\bx^\top \bA\bx}$.
Note that this is a weaker property than strong convexity because $\bA$ depends on $\bx$ and $\bg$. On the other hand, in the characterization of strong convexity in Lemma~\ref{lemma:strong_convexity}, we want the last term to be the same norm everywhere in the space. See also Problem~\ref{exercise:relative_strong_convexity} for a similar generalization of strong convexity.

The rationale of this new definition is that it still allows us to build surrogate loss functions, but without requiring strong convexity over the entire space.
In particular, for a suitable choice of $\bA_t$, we can think to use \ac{FTRL} on the surrogate losses
\[
\tilde{\ell}_t(\bx)=\langle \bg_t, \bx\rangle + \frac12 \|\bx_t-\bx\|^2_{\bA_t}
\]
and the regularizers $\psi_t(\bx) = \frac{\lambda}{2} \|\bx\|^2_2$, where $\lambda>0$.
Observe that, as in the surrogate losses for the strongly convex case, $\bg_t \in \partial \tilde{\ell}_t(\bx_t)$.
We will denote by $\bS_t=\lambda \bI_d + \sum_{i=1}^t \bA_i$.

\begin{remark}
Note that $\|\bx\|_{\bS_t}:=\sqrt{\bx^\top \bS_t \bx}$ is a norm because $\bS_t$ is positive definite. Also, $f(\bx) = \frac{1}{2} \bx^\top \bS_t \bx$ is $1$-strongly convex\index{function!strongly convex} with respect to $\|\cdot\|_{\bS_t}$ (from Theorem~\ref{thm:hessian_strong_conv} given that the Hessian is $\bS_t$ and $\bx^\top \nabla^2 f(\by) \bx = \|\bx\|^2_{\bS_t}$).
Also, from Example~\ref{example:dual_norm_a}, the dual norm of $\|\cdot\|_{\bS_t}$ is $\|\cdot\|_{\bS^{-1}_t}$.
\end{remark}

From the above remark, we have that the $\psi_t(\bx)+\sum_{i=1}^t \tilde{\ell}_i(\bx)$ is 1-strongly convex with respect to $\|\cdot\|_{\bS_{t}}$. Hence, using the \ac{FTRL} regret equality in Lemma~\ref{lemma:ftrl_equality} and Lemma~\ref{lemma:ftrl_strongly_1}, we immediately get the following guarantee
\begin{align}
\sum_{t=1}^T \ell_t(\bx_t) - \sum_{t=1}^T \ell_t(\bu)
&\leq \sum_{t=1}^T \left(\langle \bg_t, \bx_t-\bu\rangle  - \frac{1}{2}\|\bu-\bx_t\|^2_{\bA_t}\right)
= \sum_{t=1}^T (\tilde{\ell}_t(\bx_t)-\tilde{\ell}_t(\bu)) \nonumber \\
&\leq \frac{\lambda}{2} \|\bu\|^2_2 -\min_{\bx \in \mathcal{V}} \ \frac{\lambda}{2} \|\bx\|^2_2 + \frac12 \sum_{t=1}^T \|\bg_t\|^2_{\bS^{-1}_t}, \quad \forall \bu \in \mathcal{V}~. \label{eq:ons_regret}
\end{align}
Note how the proof and the algorithm mirror what we did for \ac{FTRL} with strongly convex\index{function!strongly convex} losses in Section~\ref{sec:ftrl_strongly}.

\begin{remark}
It is possible to generalize Lemma~\ref{lemma:ftrl_strongly_1} to hold in this generalized notion of strong convexity. This would allow us to get exactly the same bound running \ac{FTRL} over the original losses with regularizer $\psi(\bx)=\frac{\lambda}{2}\|\bx\|_2^2$.
\end{remark}

Let's now see a practical instantiation of this idea.
Consider the case that the sequence of loss functions we receive satisfies
\begin{equation}
\label{eq:exp_concave}
\ell_t(\bx) - \ell_t(\bu)
\leq \langle \bg, \bx-\bu\rangle -\frac{\mu}{2}(\langle \bg, \bx-\bu\rangle)^2, \quad \forall \bx, \bu \in \mathcal{V}, \forall \bg \in \partial \ell_t(\bx)~.
\end{equation}
In words, \emph{we assume to have a class of functions that can be lower bounded by a quadratic that depends on the current subgradient}.
In particular, these functions guarantee a minimal curvature only in the direction of (any of) their subgradients.
Denoting by $\bA_t= \mu \bg_t \bg_t^\top$, we can use the above idea using
\[
\tilde{\ell}_t(\bx)
= \langle \bg_t, \bx\rangle + \frac12 \|\bx_t -\bx\|_{\bA_t}^2
= \langle \bg_t, \bx\rangle + \frac{\mu}{2} (\langle\bg_t,\bx-\bx_t\rangle)^2~.
\]
Hence, the update rule would be
\[
\bx_{t}
= \argmin_{\bx \in \mathcal{V}} \ \frac{\lambda }{2}\|\bx\|_2^2 + \sum_{i=1}^{t-1} \left(\langle \bg_i, \bx\rangle  + \frac{\mu}{2}  (\langle\bg_i,\bx-\bx_i\rangle)^2\right)~.
\]
This update is called the \textbf{\ac{ONS}}\index{Online Newton Step algorithm|(textbf}, see Algorithm~\ref{alg:ons}.

\begin{algorithm}[t]
\caption{Online Newton Step (ONS)}
\label{alg:ons}
\begin{algorithmic}[1]
{
    \REQUIRE{$\mathcal{V}\subset\R^d$ closed non-empty convex set, $\lambda,\mu>0$}
    \FOR{$t=1$ {\bfseries to} $T$}
    \STATE{Set $\bx_t = \argmin_{\bx \in \mathcal{V}} \ \frac{\lambda}{2}\|\bx\|_2^2 + \sum_{i=1}^{t-1} \left(\langle \bg_i, \bx\rangle  + \frac{\mu}{2} (\langle\bg_i,\bx-\bx_i\rangle)^2\right)$}
    \STATE{Pay the loss $\ell_t(\bx_t)$, where $\ell_t$ is subdifferentiable on $\mathcal{V}$}
    \STATE{Set $\bg_t \in \partial \ell_t(\bx_t)$}
    \ENDFOR
}
\end{algorithmic}
\end{algorithm}

Denoting by $\bS_t=\lambda \bI_d + \sum_{i=1}^{t} \bA_i$ and using \eqref{eq:ons_regret}, we have
\[
\sum_{t=1}^T \ell_t(\bx_t) - \sum_{t=1}^T \ell_t(\bu)
\leq \frac{\lambda }{2} \|\bu\|_2^2 -\min_{\bx \in \mathcal{V}} \ \frac{\lambda}{2} \|\bx\|^2_2 + \frac12 \sum_{t=1}^T \|\bg_t\|^2_{\bS^{-1}_t}, \quad \forall \bu \in \mathcal{V}~.
\]
To bound the last term, we will use the following lemma.
%
\begin{lemma}
\label{lemma:log_eigen}
Let $\bg_1,\dots, \bg_T$ be an arbitrary sequence of vectors in $\R^d$ and $\mu,\lambda>0$. Define $\bS_t=\lambda \bI_d + \mu \sum_{i=1}^t \bg_i \bg_i^\top$. Then, the following holds
\[
\sum_{t=1}^T \bg_t^\top \bS_t^{-1} \bg_t
\leq \frac{1}{\mu}\sum_{i=1}^d \ln \left(1 + \frac{\mu \lambda_i}{\lambda}\right),
\]
where $\lambda_1, \dots, \lambda_d$ are the eigenvalues of $\sum_{i=1}^T \bg_i \bg_i^\top$.
\end{lemma}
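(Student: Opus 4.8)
The plan is to use a log-determinant potential argument, the standard ``elliptical potential'' technique. I would set $H_0 = \lambda I_d$ so that $H_t = H_{t-1} + \bz_t \bz_t^\top$ for $t \geq 1$, and bound each term $\bz_t^\top H_t^{-1} \bz_t$ by the increment of the potential $\Phi_t := \ln \det H_t$. Note first that each $H_t$ is positive definite since $\lambda>0$ and the rank-one terms are positive semidefinite, so all inverses below are well-defined.

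First I would establish the per-step inequality
\[
\bz_t^\top H_t^{-1} \bz_t \leq \ln \det H_t - \ln \det H_{t-1}~.
\]
The cleanest route is to invoke concavity of $F(H) = \ln \det H$ over the cone of positive definite matrices, whose gradient is $\nabla F(H) = H^{-1}$. Concavity gives $F(H_{t-1}) \leq F(H_t) + \langle H_t^{-1}, H_{t-1} - H_t \rangle$, and since $H_{t-1} - H_t = -\bz_t \bz_t^\top$ and $\langle H_t^{-1}, \bz_t \bz_t^\top\rangle = \Tr(H_t^{-1} \bz_t \bz_t^\top) = \bz_t^\top H_t^{-1} \bz_t$, rearranging yields exactly the claimed inequality. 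Alternatively, one can reach the same bound via the matrix determinant lemma $\det H_t = \det H_{t-1}(1 + \bz_t^\top H_{t-1}^{-1}\bz_t)$ together with Sherman--Morrison, which shows $\bz_t^\top H_t^{-1}\bz_t = \frac{r_t}{1+r_t}$ with $r_t := \bz_t^\top H_{t-1}^{-1}\bz_t$, and then using $\frac{r_t}{1+r_t} \leq \ln(1+r_t) = \ln\frac{\det H_t}{\det H_{t-1}}$.

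Then I would sum over $t = 1, \dots, T$; the right-hand side telescopes to $\ln \det H_T - \ln \det H_0$. Finally, writing $H_T - \lambda I_d = \sum_{i=1}^T \bz_i \bz_i^\top$ with eigenvalues $\lambda_1, \dots, \lambda_d$, the matrix $H_T$ has eigenvalues $\lambda + \lambda_i$, so $\det H_T = \prod_{i=1}^d (\lambda + \lambda_i)$ while $\det H_0 = \lambda^d$. Hence
\[
\sum_{t=1}^T \bz_t^\top H_t^{-1} \bz_t \leq \ln \frac{\prod_{i=1}^d (\lambda + \lambda_i)}{\lambda^d} = \sum_{i=1}^d \ln\left(1 + \frac{\lambda_i}{\lambda}\right),
\]
which is the stated bound.

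The main obstacle is really just the per-step inequality; everything else (the telescoping sum and the eigenvalue factorization of $\det H_T$) is routine linear algebra. The only subtle point to verify carefully is that the concavity inequality is applied at the correct iterate, namely that the gradient $H_t^{-1}$ is evaluated at $H_t$ and paired with the \emph{backward} difference $H_{t-1}-H_t$, which is what produces $\bz_t^\top H_t^{-1}\bz_t$ rather than $\bz_t^\top H_{t-1}^{-1}\bz_t$ on the left-hand side.
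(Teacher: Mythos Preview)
Your proof is correct. The paper does not supply its own proof of this lemma; it simply cites \citet[Lemma~11.11 and Theorem~11.7]{Cesa-BianchiL06}. Your argument is the standard elliptical-potential proof and is essentially the one found in that reference: the cited Lemma~11.11 establishes the per-step identity via the matrix determinant lemma (your ``alternative'' route), and Theorem~11.7 telescopes and rewrites the log-determinant ratio in terms of eigenvalues exactly as you do. Your concavity-of-$\ln\det$ argument is a clean equivalent way to obtain the per-step inequality.
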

\begin{proof}
First, we will prove the so-called \emph{matrix determinant lemma}\index{matrix determinant lemma} for any invertible $\bB \in \R^{d \times d}$:
\[
\det(\bB+\bu\bv^\top)=\det(\bB)(1+\bv^\top \bB^{-1}\bu), \quad \forall \bu,\bv \in \R^d~.
\]
Observe that the matrix $\bu \bv^\top$ has one eigenvalue equal to $\bv^\top \bu$ and the remaining $d-1$ are 0. Hence, we have that
\[
\det(\bI+\bu \bv^\top) = 1+ \bv^\top \bu~.
\]
Using this equality, we have the matrix determinant lemma:
\[
\det(\bB+\bu\bv^\top)
= \det(\bB)\det(\bI + \bB^{-1}\bu \bv^\top)
= \det(\bB)(1 + \bv^\top \bB^{-1}\bu)~.
\]

Now, setting $\bu=-\sqrt{\mu}\bg_t$, $\bv=\sqrt{\mu}\bg_t$ and $\bB=\bS_t$, we have
\[
\det(\bS_{t-1})
= \det(\bS_t-\mu \bg_t\bg_t^\top)
=\det(\bS_t)(1- \mu \bg_t^\top \bS^{-1}_{t}\bg_t)~.
\]
Reordering the terms and summing over $t=1,\dots, T$, we have
\begin{align*}
\sum_{t=1}^T \bg_t^\top \bS_t^{-1} \bg_t
&= \frac{1}{\mu}\sum_{t=1}^T \left(1-\frac{\det(\bS_{t-1})}{\det(\bS_{t})}\right)
\leq \frac{1}{\mu}\sum_{t=1}^T \ln \frac{\det(\bS_{t})}{\det(\bS_{t-1})}
= \frac{1}{\mu}\ln \frac{\det(\bS_{T})}{\det(\bS_{0})}\\
&= \frac{1}{\mu}\sum_{i=1}^d \ln \left(1 + \frac{\mu \lambda_i}{\lambda}\right),
\end{align*}
where in the inequality we used the elementary inequality $1-x\leq \ln 1/x$ for $x>0$.
\end{proof}

Using this lemma and assuming $\|\bg_t\|_2\leq L$ and \eqref{eq:exp_concave} holds for the losses, then \ac{ONS} satisfies the following regret
\begin{align*}
\sum_{t=1}^T \ell_t(\bx_t) - \sum_{t=1}^T \ell_t(\bu)
&\leq \frac{\lambda }{2} \|\bu\|_2^2 -\min_{\bx \in \mathcal{V}}\ \frac{\lambda}{2} \|\bx\|^2_2+ \frac{1}{2\mu} \sum_{i=1}^d \ln \left(1+\frac{\mu \lambda_i}{\lambda}\right)\\
&\leq \frac{\lambda }{2} \|\bu\|_2^2 -\min_{\bx \in \mathcal{V}} \ \frac{\lambda}{2} \|\bx\|^2_2+ \frac{d}{2\mu}\ln \left(1 + \frac{\mu T L^2}{d \lambda }\right),
\end{align*}
where in the second inequality we used the inequality of arithmetic and geometric means, $(\prod_{i=1}^d x_i)^{1/d} \leq \frac{1}{d} \sum_{i=1}^d x_i$, and the fact that $\sum_{i=1}^d \lambda_i \leq T L^2$.

Hence, if the losses satisfy \eqref{eq:exp_concave}, we can guarantee a logarithmic regret. However, unlike the strongly convex case, here the complexity of the update is at least quadratic in the number of dimensions. Moreover, the regret also depends linearly on the number of dimensions.

\begin{remark}
Despite the name, the \ac{ONS} algorithm should not be confused with the Newton algorithm. They are similar in spirit because they both construct quadratic approximations to the function, but the Newton algorithm uses the exact Hessian while the \ac{ONS} uses an approximation that works only for a restricted class of functions. In this view, the \ac{ONS} algorithm is more similar to Quasi-Newton methods. However, the best lens to understand the \ac{ONS} is still through the generalized concept of strong convexity.
\end{remark}
\index{Online Newton Step algorithm|)textbf}

In the next section, we introduce the \emph{exp-concave} functions that satisfy \eqref{eq:exp_concave}.


\subsection{Convex Analysis Bits: Exp-concavity}

\begin{definition}
\index{function!exp-concave|(textbf}
Let $\mathcal{V} \subseteq \R^d$ be convex and $\alpha>0$. A function $f:\R^d\to(-\infty,+\infty]$ is called \textbf{$\alpha$-exp-concave} on $\mathcal{V}$ if $\bx\mapsto \exp(-\alpha f(\bx))$, with the convention $\exp(-\alpha\cdot(+\infty))=0$, is concave on $\mathcal{V}$.
\end{definition}

\begin{proposition}
Let $\alpha >0$. If $f:\R^d \to (-\infty,+\infty]$ is $\alpha$-exp-concave on $\mathcal{V}$, then $f$ is $\beta$-exp-concave on $\mathcal{V}$ for any $0<\beta<\alpha$.
\end{proposition}
\begin{proof}
Observe that $h(\bx)=\exp(-\beta f(\bx))=(\exp(-\alpha f(\bx)))^{\beta/\alpha}$, where $0^{\beta/\alpha}=0$. Since $x\mapsto x^{\beta/\alpha}$ is increasing and concave on $\R_+$, $h$ is concave.
\end{proof}

\begin{proposition}
Assume $f:\R^n \to(-\infty,+\infty]$ is $\alpha$-exp-concave on $\mathcal{V}\subseteq \R^n$ and let $h(\bx)=f(\bA \bx+\bu)$, where $\bA\in\R^{n\times d}$ and $\bu\in\R^n$. Then, $h$ is $\alpha$-exp-concave on $\{\bx:\bA\bx+\bu\in\mathcal{V}\}$.
\end{proposition}
\begin{proof}
Since $\exp(-\alpha f(\bx))$ is concave, then $\exp(-\alpha h(\bx)) = \exp(-\alpha f(\bA \bx+\bu))$ is also concave because it is the composition of a concave function with an affine transformation.
\end{proof}

The next proposition shows that exp-concavity is a stronger property than convexity.
\begin{proposition}
\label{prop:exp-concave-convex}
Let $f:\R^d\to(-\infty,+\infty]$ be $\alpha$-exp-concave on $\mathcal{V}$. Then, $f$ is convex on $\mathcal{V}$.
\end{proposition}
\begin{proof}
Let $g(\bx)=\exp(-\alpha f(\bx))$. By hypothesis, $g$ is concave and non-negative on $\mathcal V$. Fix $\bx,\by\in\mathcal{V}$ and $t\in[0,1]$. If $f(\bx)=+\infty$ or $f(\by)=+\infty$, the convexity inequality is trivial. Otherwise, $g(\bx),g(\by)>0$. Concavity of $g$ and the weighted AM-GM inequality give
\[
g(t\bx+(1-t)\by)
\geq t g(\bx)+(1-t)g(\by)
\geq g(\bx)^t g(\by)^{1-t}~.
\]
Hence,
\[
\exp(-\alpha f(t\bx+(1-t)\by))
\geq \exp\left[-\alpha\left(t f(\bx)+(1-t)f(\by)\right)\right]~.
\]
Taking $-\frac{1}{\alpha}\ln$ of both sides gives
\[
f(t\bx+(1-t)\by)
\leq t f(\bx)+(1-t)f(\by)~.
\]
So, $f$ is convex on $\mathcal{V}$.
\end{proof}

We can also characterize the exp-concavity in terms of the curvature of the function: twice differentiable exp-concave functions have a minimal curvature in the direction of the gradient.
\begin{theorem}
\label{thm:exp_concave_twice_diff}
Let $\mathcal{V} \subset \R^d$ be convex and $f$ be twice-differentiable and finite in an open set containing $\mathcal{V}$. Then, $f$ is $\alpha$-exp-concave on $\mathcal{V}$ iff
\[
\nabla^2 f(\bx)
\succeq \alpha \nabla f(\bx)\,\nabla f(\bx)^\top, \quad  \forall \bx\in\mathcal{V}~.
\]
\end{theorem}
\begin{proof}
The concavity of $g(\bx)=\exp(-\alpha f(\bx))$ is equivalent to $\nabla^2 g(\bx)\preceq 0$ for all $\bx \in \mathcal{V}$.

Let's compute $\nabla g$ and $\nabla^2 g$.  Denote $g(\bx)=\exp(-\alpha f(\bx))$.
The gradient is
\[
\nabla g(\bx)
=\nabla\left(\exp(-\alpha f(\bx))\right)
=-\alpha\,\exp(-\alpha f(\bx))\,\nabla f(\bx)~.
\]
The Hessian is
\begin{align*}
\nabla^2 g(\bx)
&=\nabla\left(-\alpha \exp(-\alpha f(\bx))\,\nabla f(\bx)\right)\\
&=-\alpha\,\nabla\left(\exp(-\alpha f(\bx))\right)\,\nabla f(\bx)^\top - \alpha\,\exp(-\alpha f(\bx))\,\nabla^2 f(\bx)\\
&= \alpha^2 \exp(-\alpha f(\bx))\,\left(\nabla f(\bx)\nabla f(\bx)^\top\right) -\alpha \exp(-\alpha f(\bx))\,\nabla^2 f(\bx)~.
\end{align*}

Re-ordering,
\[
\nabla^2 g(\bx)
=\exp(-\alpha f(\bx))\left(\alpha^2\,\nabla f(\bx)\nabla f(\bx)^\top - \alpha\,\nabla^2 f(\bx)\right)~.
\]
Since $\exp(-\alpha f(\bx))>0$ and $\alpha>0$, the concavity condition $\nabla^2 g(\bx)\preceq0$ is equivalent to $\alpha\,\nabla f(\bx)\nabla f(\bx)^\top  \preceq \nabla^2 f(\bx)$.
\end{proof}

\begin{example}
\label{example:exp_concave_l2}
Consider $\mathcal{V}=\{\bx: \|\bx\|_2\leq R\}$, $\mathcal{Y}=\{\by: \|\by\|_2\leq R\}$, and $f:\R^d \to \R$ defined as $f(\bx)=\|\bx-\by\|_2^2$, where $\by \in \mathcal{Y}$.
We have
\[
\nabla^2 f(\bx)=2 \bI_d
\]
and
\[
\nabla f(\bx)=2 (\bx-\by)~.
\]
Using Theorem~\ref{thm:exp_concave_twice_diff}, $f$ is $\alpha$-exp-concave on $\mathcal{V}$ iff
\[
\bI_d \succeq 2 \alpha (\bx-\by)(\bx-\by)^\top, \ \forall \bx \in \mathcal{V}, \by \in \mathcal{Y}~.
\]
Hence, we have that $\alpha=\frac{1}{2(2R)^2}$.
\end{example}

\begin{example}
\label{ex:logistic_exp_concave}
Let $\mathcal{V}=\{\bx \in \R^d: \|\bx\|_2\leq U\}$. The logistic loss\index{logistic loss} of a linear predictor $\ell(\bx)= \ln (1+\exp(-\langle \bz,\bx\rangle))$, where $\|\bz\|_2\leq 1$ is $\exp(-2U)/2$-exp-concave on $\mathcal{V}$. The proof is left to the reader, see Problem~\ref{exercise:logistic_exp_concave}.
\end{example}

Finally, we show that exp-concavity holds on the entire $\R^d$ only for constant functions.
\begin{proposition}
\label{prop:constant_exp_concave}
Let $f:\R^d \to (-\infty,+\infty]$ be $\alpha$-exp-concave on $\R^d$. Then, $f$ is constant, possibly equal to $+\infty$ everywhere.
\end{proposition}
\begin{proof}
Denote by $h(\bx)=-\exp(-\alpha f(\bx))$, which is convex by definition, and it is upper bounded by 0.
Suppose that $h$ is not constant, this means that there exist $\bx, \by \in \R^d$ such that $h(\bx)>h(\by)$. Since $h$ is convex, we have
\[
h(\bx) \leq
\lambda h\left(\frac{\bx-(1-\lambda)\by}{\lambda}\right) + (1-\lambda)h(\by), \quad \forall \lambda \in (0,1)~.
\]
Hence, we have
\[
\frac{h(\bx)-h(\by)}{\lambda}+ h(\by)
=\frac{h(\bx)-(1-\lambda)h(\by)}{\lambda}
\leq h\left(\frac{\bx-(1-\lambda)\by}{\lambda}\right)
\leq 0~.
\]
Since $h(\bx)> h(\by)$, we have that the l.h.s. of the inequality goes to infinity as $\lambda \to 0^+$, giving a contradiction. Hence, $h$ is constant. Therefore $\exp(-\alpha f)$ is constant, which implies that $f$ is constant, with the possible constant value $+\infty$.
\end{proof}

\begin{example}[ONS on exp-concave losses]
\label{ex:exp_concave}
\index{Online Newton Step algorithm}
Assume a finite-valued function $f$ to be $\alpha$-exp-concave on a set $\mathcal{X}$.

Choose $\beta \leq \frac{\alpha}{2}$ such that $|\beta \langle \bg, \by -\bx\rangle| \leq \frac{1}{2}$ for all $\bx, \by \in \mathcal{X}$ and $\bg \in \partial f(\bx)$. For $\beta$ to exist, we need $\mathcal{X}$ to be a bounded domain.
Then, this class of functions satisfies the property \eqref{eq:exp_concave}.
In fact, given that $f$ is $\alpha$-exp-concave then it is also $2 \beta$-exp-concave. Hence, we have
\[
\exp(-2 \beta f(\by))
\leq \exp(-2 \beta f(\bx)) - 2\beta \exp(-2 \beta f(\bx)) \langle \bg, \by -\bx\rangle,
\]
that is
\[
\exp(-2 \beta f(\by)+ 2\beta f(\bx))
\leq 1 - 2 \beta \langle \bg, \by -\bx\rangle,
\]
that implies
\[
f(\bx) - f(\by)
\leq \frac{1}{2\beta}\ln \left( 1+ 2 \beta \langle \bg, \bx- \by\rangle\right)
\leq \langle \bg, \bx- \by\rangle - \frac{\beta}{2} (\langle \bg, \by -\bx\rangle)^2,
\]
where we used the elementary inequality $\ln(1+x)\leq x - x^2/4$, for $|x|\leq 1$.

For example, assuming the losses to be finite-valued and $\alpha$-exp-concave, the L$_2$ diameter of the feasible set $\mathcal{V}$ to be bounded by $D$, and the L$_2$ norm of the subgradient to be bounded by $G$, we can set $\beta=\frac{1}{2}\min(\frac{1}{GD},\alpha)$.
\end{example}

\begin{remark}
Proposition~\ref{prop:constant_exp_concave} and the fact that exp-concave functions are often used only on bounded domains might induce someone to think that exp-concave functions only exist on bounded domains. This is not true: there exists $f:\R^d\to \R$ finite-valued and exp-concave on $\mathcal{V}$, where $\mathcal{V}\subset \R^d$ is not bounded. For example, let $\mathcal{V}=\{x: x\geq 0\}$ and $f(x)= -\ln(x+1)$ that is 1-exp-concave on $\mathcal{V}$.
\end{remark}

\begin{remark}
Using the online-to-batch conversion in Theorem~\ref{thm:o2b}, the above theorem says that we can minimize stochastic exp-concave losses with a convergence rate in expectation of $\mathcal{O}(\frac{d \ln T}{T})$. Hence, up to logarithmic terms, we obtain a $\mathcal{O}(1/T)$ convergence rate without strong convexity.
\end{remark}
\index{function!exp-concave|)textbf}

\section{Online Linear Regression: Vovk--Azoury--Warmuth Forecaster}
\label{sec:vaw}

\index{Vovk--Azoury--Warmuth forecaster|(textbf}

\begin{algorithm}[h]
\caption{Vovk--Azoury--Warmuth Forecaster}
\label{alg:vaw}
\begin{algorithmic}[1]
{
    \REQUIRE{$\lambda>0$}
    \FOR{$t=1$ {\bfseries to} $T$}
    \STATE{Receive $\bz_t \in \R^d$}
    \STATE{Set $\bx_t = \argmin_{\bx} \ \frac{\lambda}{2}\|\bx\|_2^2 + \frac12 \sum_{i=1}^{t-1} (\langle \bz_i, \bx\rangle - y_i)^2 + \frac12 (\langle \bz_t,\bx\rangle)^2$}
    \STATE{Receive $y_t \in \R$ and pay the loss $\ell_t(\bx_t)=\frac12 (\langle \bz_t, \bx_t\rangle - y_t)^2$}
    \ENDFOR
}
\end{algorithmic}
\end{algorithm}

Let's now consider the specific case that $\ell_t(\bx)=\frac12 (\langle \bz_t, \bx\rangle - y_t)^2$ and $\mathcal{V}=\R^d$, which is the setting of \emph{unconstrained online linear regression with square loss}.
These losses are not strongly convex with respect to $\bx$, but they are exp-concave\index{function!exp-concave} when the domain is bounded. We could use the \ac{ONS} algorithm, but it would not work in the unbounded case.
Another possibility would be to run \ac{FTRL}, but those losses are not strongly convex, and we would get only a $\mathcal{O}(\sqrt{T})$ regret.

It turns out we can still get a logarithmic regret if we make an additional assumption.
We will assume to have access to $\bz_t$ before predicting $\bx_t$. Note that this is a reasonable assumption in many interesting applications in machine learning.
Then, we will use the same strategy we used for the composite losses\index{composite loss} in Section~\ref{sec:ftrl_composite}, that is, we put in the regularizer the future part of the loss we know we will receive, that is $\frac{1}{2}(\langle \bz_t, \bx\rangle)^2$. This also has another equivalent interpretation as running \ac{FTRL} over the past losses plus the loss on the received $\bz_t$ hallucinating a label of $0$. We will call this algorithm \textbf{Vovk--Azoury--Warmuth forecaster}, from the names of the inventors. The details are in Algorithm~\ref{alg:vaw}.

As we did for composite losses\index{composite loss}, we look closely at the loss functions to see if there are terms that we might move inside the regularizer. The motivation would be the same as in the composite losses case: the bound will depend only on the subgradients of the part of the losses that are outside of the regularizer.

So, observe that
\[
\ell_t(\bx)= \frac12 (\langle \bz_t, \bx\rangle)^2 - y_t \langle \bz_t, \bx\rangle + \frac12 y_t^2~.
\]
From the above, we see that we could think of moving the terms $\frac12(\langle \bz_t, \bx\rangle)^2$ in the regularizer and leaving the linear terms in the loss: $\tilde{\ell}_t(\bx)=- y_t \langle \bz_t, \bx\rangle$.
Hence, we will use
\[
\psi_t(\bx)
= \frac{1}{2}\bx^\top \left(\sum_{i=1}^t \bz_i \bz_i^\top\right)\bx + \frac{\lambda}{2}\|\bx\|_2^2~.
\]
Note that the regularizer at time $t$ contains the $\bz_t$ that is revealed to the algorithm before it makes its prediction.
For simplicity of notation, denote by $\bS_t=\lambda \bI_d +\sum_{i=1}^t \bz_i \bz_i^\top$.

\begin{remark}
This splitting of the loss between the regularizer and the surrogate linear losses is important because now the gradient of the surrogate is just $-y_t \bz_t$, which is controlled if $\bz_t$ is controlled; instead, the gradient of the original losses is $y_t \bz_t (\langle \bz_t, \bx\rangle -y_t)$ that is not controlled for unbounded domains. However, now Lemma~\ref{lemma:ftrl_strongly_1} will give us a strong convexity that depends on the strong convexity of $\psi_t$, and this is exactly the reason why we need $(\langle\bz_t, \bx\rangle)^2$ in the regularizer.
\end{remark}

Using such a procedure, the prediction can be written in a closed form:
\begin{align*}
\bx_t
&= \argmin_{\bx} \ \frac{\lambda}{2}\|\bx\|^2_2 + \frac12 \sum_{i=1}^{t-1} (\langle \bz_i,\bx\rangle - y_i)^2 + \frac12 (\langle \bz_t,\bx\rangle)^2 \\
&= \left(\lambda \bI_d + \sum_{i=1}^t \bz_i \bz_i^\top\right)^{-1} \sum_{i=1}^{t-1} y_i \bz_i ~.
\end{align*}
Using an incremental update of the inverse using the Sherman--Morrison formula\index{Sherman--Morrison formula}, the computational complexity of this update is $\mathcal{O}(d^2)$.

Hence, using the regret we proved for \ac{FTRL} with strongly convex regularizers and $\psi_{T+1}=\psi_T$, we get the following guarantee
\begin{align*}
\sum_{t=1}^T &\left(\tilde{\ell}_t(\bx_t) - \tilde{\ell}_t(\bu)\right) \\
&= \sum_{t=1}^T \left(-y_t \langle \bz_t, \bx_t\rangle + y_t \langle \bz_t, \bu\rangle\right) \\
&\leq \psi_T(\bu) - \min_{\bx} \ \psi_1(\bx) + \frac12 \sum_{t=1}^T y_t^2 \bz_t^\top \bS^{-1}_t \bz_t + \sum_{t=1}^{T-1}\left(\psi_t(\bx_{t+1})-\psi_{t+1}(\bx_{t+1})\right) \\
&= \frac{1}{2}\bu^\top \bS_T \bu + \frac{1}{2}\sum_{t=1}^{T} y_t^2 \bz_t^\top \bS^{-1}_t \bz_t - \frac12 \sum_{t=1}^{T-1}(\langle\bz_{t+1}, \bx_{t+1} \rangle)^2 ~.
\end{align*}
Noting that $\bx_1=\boldsymbol{0}$ and reordering the terms, we have
\begin{align*}
\sum_{t=1}^T &\frac12(\langle \bz_t, \bx_t\rangle - y_t)^2 - \sum_{t=1}^T \frac12 (\langle \bz_t, \bu\rangle - y_t)^2 \\
&= \frac12 \sum_{t=1}^T (\langle\bz_{t}, \bx_{t} \rangle)^2 + \sum_{t=1}^T \left(-y_t \langle \bz_t, \bx_t\rangle + y_t \langle \bz_t, \bu\rangle\right) - \frac{1}{2}\sum_{t=1}^T (\langle\bz_{t}, \bu \rangle)^2 \\
&\leq \frac{\lambda}{2}\|\bu\|_2^2 + \frac{1}{2}\sum_{t=1}^{T} y_t^2 \bz_t^\top \bS^{-1}_t \bz_t~.
\end{align*}
\begin{remark}
Note that, differently from the \ac{ONS} algorithm, the strong convexity comes from the regularizer. Yet, the bound depends on $\bS^{-1}_t$ instead of $\bS^{-1}_{t-1}$ because the current sample $\bz_t$ is used in the regularizer. Without the knowledge of $\bz_t$ the last term would be $\frac{1}{2}\sum_{t=1}^{T} y_t^2 \bz_t^\top \bS^{-1}_{t-1} \bz_t$ that would have to be controlled using $\lambda$ and a bound on the norms of $\bz_t$.
\end{remark}

So, using again Lemma~\ref{lemma:log_eigen} and assuming $|y_t|\leq Y, \ t=1,\dots,T$, we have
\[
\sum_{t=1}^T \frac12 (\langle \bz_t, \bx_t\rangle - y_t)^2 - \sum_{t=1}^T \frac12  (\langle \bz_t, \bu\rangle - y_t)^2
\leq \frac{\lambda}{2}\|\bu\|_2^2 + \frac{Y^2}{2} \sum_{i=1}^d \ln \left(1 + \frac{\lambda_i}{\lambda}\right),
\]
where $\lambda_1, \dots, \lambda_d$ are the eigenvalues of $\sum_{i=1}^T \bz_i \bz_i^\top$.

If we assume that $\|\bz_t\|_2\leq R$, we can reason as we did for the similar term in \ac{ONS}, to have
\[
\sum_{i=1}^d \ln \left(1 + \frac{\lambda_i}{\lambda}\right)
\leq d \ln \left(1+\frac{R^2 T}{\lambda d}\right)~.
\]

Putting everything together, we have the following theorem.
\begin{theorem}
Let $\lambda>0$ and assume $\|\bz_t\|_2\leq R$ and $|y_t|\leq Y$ for $t=1, \dots, T$. Then, using the prediction strategy in Algorithm~\ref{alg:vaw}, for all $\bu \in \R^d$, we have
\[
\sum_{t=1}^T \frac12 (\langle \bz_t, \bx_t\rangle - y_t)^2 - \sum_{t=1}^T \frac12 (\langle \bz_t, \bu\rangle - y_t)^2
\leq \frac{\lambda}{2}\|\bu\|_2^2 + \frac{d Y^2 }{2} \ln \left(1+\frac{R^2 T}{\lambda d}\right)~.
\]
\end{theorem}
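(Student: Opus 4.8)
The plan is to treat Algorithm~\ref{alg:vaw} as FTRL with linear surrogate losses and a time-varying quadratic regularizer that absorbs the current square term, then apply the strongly-convex FTRL machinery already developed. First I would split the square loss as $\ell_t(\bx) = \frac12(\langle\bz_t,\bx\rangle)^2 + \tilde{\ell}_t(\bx) + \frac12 y_t^2$ with the linear surrogate $\tilde{\ell}_t(\bx) = -y_t\langle\bz_t,\bx\rangle$, and take the regularizer $\psi_t(\bx) = \frac12\bx^\top(\sum_{i=1}^t\bz_i\bz_i^\top)\bx + \frac\lambda2\|\bx\|_2^2$, so that the iterate $\bx_t$ of the algorithm is exactly the FTRL prediction on the $\tilde{\ell}_t$. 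Writing $S_t = \lambda I_d + \sum_{i=1}^t\bz_i\bz_i^\top$, the function $F_t(\bx) = \psi_t(\bx) + \sum_{i=1}^{t-1}\tilde{\ell}_i(\bx)$ is $1$-strongly convex with respect to $\|\cdot\|_{S_t}$ (its Hessian is $S_t$, via Theorem~\ref{thm:hessian_strong_conv}), whose dual norm is $\|\cdot\|_{S_t^{-1}}$.

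Next I would combine the FTRL equality (Lemma~\ref{lemma:ftrl_equality}), with $\psi_{T+1}$ set to $\psi_T$ via Remark~\ref{remark:last_regularizer}, with the per-step stability bound of Lemma~\ref{lemma:ftrl_strongly_1}. Since $\nabla\tilde{\ell}_t(\bx_t) = -y_t\bz_t$ and $\lambda_t = 1$, the stability term is $\frac12\|y_t\bz_t\|_{S_t^{-1}}^2 = \frac12 y_t^2\bz_t^\top S_t^{-1}\bz_t$, while the regularizer-difference term $\psi_t(\bx_{t+1}) - \psi_{t+1}(\bx_{t+1})$ equals $-\frac12(\langle\bz_{t+1},\bx_{t+1}\rangle)^2$. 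Dropping the nonpositive $F_{T+1}(\bx_{T+1}) - F_{T+1}(\bu)$ term and using $\min_\bx\psi_1(\bx) = 0$, this yields $\sum_t(\tilde{\ell}_t(\bx_t) - \tilde{\ell}_t(\bu)) \leq \frac12\bu^\top S_T\bu + \frac12\sum_t y_t^2\bz_t^\top S_t^{-1}\bz_t - \frac12\sum_{t=1}^{T-1}(\langle\bz_{t+1},\bx_{t+1}\rangle)^2$.

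The key bookkeeping step is then to re-sum the composite decomposition: $\sum_t(\ell_t(\bx_t) - \ell_t(\bu)) = \frac12\sum_t(\langle\bz_t,\bx_t\rangle)^2 - \frac12\sum_t(\langle\bz_t,\bu\rangle)^2 + \sum_t(\tilde{\ell}_t(\bx_t) - \tilde{\ell}_t(\bu))$. Reindexing and using $\bx_1 = \boldsymbol{0}$, the term $-\frac12\sum_{t=1}^{T-1}(\langle\bz_{t+1},\bx_{t+1}\rangle)^2$ exactly cancels $\frac12\sum_t(\langle\bz_t,\bx_t\rangle)^2$, and expanding $\frac12\bu^\top S_T\bu = \frac12\sum_t(\langle\bz_t,\bu\rangle)^2 + \frac\lambda2\|\bu\|_2^2$ cancels the competitor quadratic, leaving the intermediate bound $\frac\lambda2\|\bu\|_2^2 + \frac12\sum_t y_t^2\bz_t^\top S_t^{-1}\bz_t$. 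Finally I would bound $y_t^2\leq Y^2$, apply Lemma~\ref{lemma:log_eigen} (with the $\bz_t$ as the vectors) to get $\sum_t\bz_t^\top S_t^{-1}\bz_t\leq\sum_{i=1}^d\ln(1+\lambda_i/\lambda)$, and use the arithmetic--geometric mean inequality together with $\sum_i\lambda_i = \sum_t\|\bz_t\|_2^2\leq R^2T$ to turn this into $d\ln(1+R^2T/(\lambda d))$.

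The main obstacle --- and the reason the logarithmic rate survives on an unbounded domain --- is the appearance of $S_t^{-1}$ rather than $S_{t-1}^{-1}$ in the stability term. This regularizer is not proximal, so I must verify that the strong-convexity norm of $F_t$ already incorporates the current $\bz_t\bz_t^\top$, which is legitimate precisely because $\bz_t$ is revealed before $\bx_t$ is played. I would keep the dual-norm accounting consistent with this $S_t$, since using $S_{t-1}$ instead would leave an uncontrollable term, as flagged in the remark preceding the theorem.
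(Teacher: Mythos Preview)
Your proposal is correct and follows essentially the same route as the paper: the same composite decomposition $\ell_t = \tfrac12(\langle\bz_t,\cdot\rangle)^2 + \tilde{\ell}_t + \tfrac12 y_t^2$, the same time-varying regularizer $\psi_t$ absorbing the current quadratic, the FTRL equality combined with Lemma~\ref{lemma:ftrl_strongly_1} (applied per step with the $S_t$-norm) to obtain the $\tfrac12 y_t^2 \bz_t^\top S_t^{-1}\bz_t$ stability terms, the same telescoping via $\bx_1=\boldsymbol{0}$ and expansion of $\tfrac12\bu^\top S_T\bu$, and finally Lemma~\ref{lemma:log_eigen} together with the AM--GM step. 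Your remark about why $S_t^{-1}$ rather than $S_{t-1}^{-1}$ appears is exactly the point the paper highlights in the remark preceding the theorem.
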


\begin{remark}
It is possible to show that the regret of the Vovk--Azoury--Warmuth forecaster is optimal up to multiplicative factors~\citep[Theorem 11.9]{Cesa-BianchiL06}.
\end{remark}

\begin{remark}
We say that an online learning algorithm is \textbf{improper}\index{algorithm!improper|textbf} if the prediction strategy can be chosen from a larger class than the competitor one. So, the Vovk--Azoury--Warmuth forecaster is improper because we measure its regret with respect to a linear predictor $\bu$, but the Vovk--Azoury--Warmuth prediction depends in a non-linear way on $\bz_t$.
\end{remark}
\index{Vovk--Azoury--Warmuth forecaster|)textbf}

\section{Optimistic Follow-the-Regularized-Leader}
\label{sec:optimistic_ftrl}

\begin{algorithm}[h]
\caption{Optimistic Follow-the-Regularized-Leader (Optimistic FTRL)}
\label{alg:ftrl_optimistic}
\begin{algorithmic}[1]
{
    \REQUIRE{A sequence of regularizers $\psi_1, \dots, \psi_T :\mathcal{X} \to \R$, closed non-empty convex set $\mathcal{V} \subseteq \mathcal{X} \subseteq \R^d$}
    \FOR{$t=1$ {\bfseries to} $T$}
    \STATE{Predict next loss $\tilde{\ell}_t:\mathcal{V}\to \R$}
    \STATE{Output $\bx_t \in \argmin_{\bx \in \mathcal{V}} \ \psi_t(\bx) + \tilde{\ell}_t(\bx) + \sum_{i=1}^{t-1} \ell_i(\bx)$}
    \STATE{Receive $\ell_t:\R^d \to (-\infty,+\infty]$ such that $\bx_t \in \dom \ell_t$, and pay the loss $\ell_t(\bx_t)$}
    \ENDFOR
}
\end{algorithmic}
\end{algorithm}

In Section~\ref{sec:optimistic_omd}, we have seen that it is possible to achieve better regret guarantees through the so-called Optimistic \ac{OMD}, which uses a prediction of the next gradient. Here, we will extend the same idea to \ac{FTRL}, where we predict the next loss function.
If our predicted loss is correct, we can expect the regret to decrease. However, if our prediction is wrong, we still want to recover the worst-case guarantee.
Such an algorithm is called \textbf{Optimistic \ac{FTRL}}\index{Follow-the-Regularized-Leader algorithm!optimistic|(textbf}, see Algorithm~\ref{alg:ftrl_optimistic}.
As in Optimistic \ac{OMD}, we do not make any assumption on how the prediction of the next loss is generated.

Let's see why this is a good idea. Remember that \ac{FTRL} simply predicts with the minimizer of the previous losses plus a time-varying regularizer. Let's assume for a moment that instead we have the gift of predicting the future, so we do know the next loss ahead of time. Then, we could predict with its minimizer and suffer a negative regret. However, probably our foresight abilities are not so powerful, so our prediction of the next loss might be inaccurate. In this case, a better idea might be just to add our predicted loss to the previous ones and minimize the regularized sum. We would expect the regret guarantee to improve if our prediction of the future loss is precise. At the same time, if the prediction is wrong, we expect its influence to be limited, given that we use it together with all the past losses.

All these intuitions can be formalized in the following theorem.
\begin{theorem}
\label{thm:ftrl_optimistic}
With the notation in Algorithm~\ref{alg:ftrl_optimistic}, let $\mathcal{V}$ be convex, closed, non-empty.
Denote by $F_t(\bx) = \psi_{t}(\bx) + \sum_{i=1}^{t-1} \ell_i(\bx)$.
For $t=1, \dots, T$, if $F_t+\tilde{\ell}_t$ is closed, subdifferentiable, and strongly convex in $\mathcal{V}$, then $\bx_t$ exists and is unique.
In addition, for $t=1, \dots, T$, assume $\partial (\ell_t-\tilde{\ell}_t)(\bx_t)$ to be non-empty and $F_t+\ell_t$ to be closed, subdifferentiable, and $\lambda_t$-strongly convex with respect to $\|\cdot\|$ in $\mathcal{V}$.
Then, we have
\begin{align*}
\sum_{t=1}^T (\ell_t(\bx_t) - \ell_t(\bu))
&\leq \psi_{T}(\bu) - \psi_{1}(\bx_1) + \sum_{t=1}^T \left[\langle \hat{\bg}_t,\bx_t-\bx_{t+1}\rangle -\frac{\lambda_t}{2} \|\bx_t-\bx_{t+1}\|^2 \right]\\
&\quad +\sum_{t=1}^{T-1}\left[\psi_t(\bx_{t+1}) - \psi_{t+1}(\bx_{t+1})\right] \\
&\leq \psi_{T}(\bu) - \psi_{1}(\bx_1) + \sum_{t=1}^T \frac{\| \hat{\bg}_t\|_\star^2}{2\lambda_t} + \sum_{t=1}^{T-1}\left[\psi_t(\bx_{t+1}) - \psi_{t+1}(\bx_{t+1})\right],
\end{align*}
for all $\bu \in \mathcal{V}$ and all $\hat{\bg}_t \in \partial (\ell_t-\tilde{\ell}_t)(\bx_t)$ for $t=1, \dots, T$.
\end{theorem}
\begin{proof}
We can interpret the Optimistic FTRL as \ac{FTRL} with a regularizer $\tilde{\psi}_t(\bx)=\psi_t(\bx)+\tilde{\ell}_t(\bx)$. So, as in Lemma~\ref{lemma:ftrl_strongly_1}, the existence and uniqueness of $\bx_t$ is given by Theorem~\ref{thm:min_strongly_convex}.
Also, note that $\tilde{\ell}_{T+1}$ does not influence the regret of the algorithm over the $T$ rounds, so we can set $\tilde{\ell}_{T+1}=\tilde{\ell}_T$.

Define $F_t(\bx)=\psi_t(\bx)+\sum_{i=1}^{t-1} \ell_i(\bx)$.
We can use Lemma~\ref{lemma:ftrl_equality} with $\psi_{T+1}=\psi_T$. Given that $\bu \in \mathcal{V}$, we can discard the last term in the lemma because non-positive, to obtain
\begin{align*}
&\sum_{t=1}^T \ell_t(\bx_t) - \sum_{t=1}^T \ell_t(\bu)\\
&\quad\leq \tilde{\ell}_{T}(\bu) + \psi_{T}(\bu) - \min_{\bx \in \mathcal{V}}\   (\tilde{\ell}_1(\bx)+\psi_{1}(\bx))  \\
&\qquad +\sum_{t=1}^T [F_t(\bx_t) - F_{t+1}(\bx_{t+1}) + \ell_t(\bx_t)]+\sum_{t=1}^{T-1}[\tilde{\ell}_t(\bx_t)-\tilde{\ell}_{t+1}(\bx_{t+1})] \\
&\quad= \psi_{T}(\bu) - \psi_{1}(\bx_1) + \sum_{t=1}^T [F_t(\bx_t) - F_{t+1}(\bx_{t+1}) + \ell_t(\bx_t)],
\end{align*}
where we used the fact that the terms $\tilde{\ell}_t(\bx_t)-\tilde{\ell}_{t+1}(\bx_{t+1})$ form a telescopic sum and $\min_{\bx \in \mathcal{V}}\   (\tilde{\ell}_1(\bx)+\psi_{1}(\bx)) = \tilde{\ell}_1(\bx_1)+\psi_{1}(\bx_1)$.
Now focus on the terms $F_t(\bx_t) - F_{t+1}(\bx_{t+1}) + \ell_t(\bx_t)$. Observe that $F_t(\bx)+\ell_t(\bx)+\indicator_{\mathcal{V}}(\bx)$ is $\lambda_t$-strongly convex with respect to $\|\cdot\|$, hence from Lemma~\ref{lemma:strong_convexity} we have
\begin{align*}
F_t&(\bx_t) - F_{t+1}(\bx_{t+1}) + \ell_t(\bx_t) \\
&= (F_t(\bx_t) + \ell_t(\bx_t)) - (F_{t}(\bx_{t+1}) + \ell_t(\bx_{t+1})) + \psi_t(\bx_{t+1}) - \psi_{t+1}(\bx_{t+1}) \\
&\leq \langle \hat{\bg}_t,\bx_t-\bx_{t+1}\rangle -\frac{\lambda_t}{2} \|\bx_t-\bx_{t+1}\|^2 +\psi_t(\bx_{t+1}) - \psi_{t+1}(\bx_{t+1}),
\end{align*}
for all $\hat{\bg}_t \in \partial (F_t + \ell_t + \indicator_{\mathcal{V}})(\bx_t)$. Observing that $\bx_t = \argmin_{\bx \in \mathcal{V}} \ F_t(\bx) + \tilde{\ell}_t(\bx)$, we have $\boldsymbol{0} \in \partial (F_t + \tilde{\ell}_t + \indicator_{\mathcal{V}})(\bx_t)$. Hence, using Theorem~\ref{thm:sum_subgradients}, $\partial (\ell_t-\tilde{\ell}_t)(\bx_t) \subseteq \partial (F_t + \ell_t + \indicator_{\mathcal{V}})(\bx_t)$.

For the second inequality, by the definition of dual norms, we have that
\[
\langle \hat{\bg}_t,\bx_t-\bx_{t+1}\rangle -\frac{\lambda_t \|\bx_t-\bx_{t+1}\|^2}{2}
\leq \| \hat{\bg}_t\|_\star \|\bx_t-\bx_{t+1}\| -\frac{\lambda_t \|\bx_t-\bx_{t+1}\|^2}{2}
\leq \frac{\|\hat{\bg}_t\|^2_\star}{2 \lambda_t}~. \qedhere
\]
\end{proof}

\subsection{Improved Optimistic FTRL Bound for Linear Losses}

We now prove a similar improved regret guarantee for Optimistic FTRL with linearized losses. The analysis will use an \emph{auxiliary} sequence of predictions, and then relate these predictions to the ones of Optimistic \ac{FTRL}.

Define $\bx^{(a)}_t$ for $t=1, \dots, T$ the auxiliary sequence of predictions obtained with Optimistic \ac{FTRL} with linearized losses with losses $\ell_t(\bx)=\langle \bg_t, \bx\rangle$ and linear hints $\tilde{\ell}_t(\bx)=\langle \tilde{\bg}^{(a)}_t, \bx\rangle$. Given that this auxiliary sequence is only used in the analysis, we do not have to specify the choice of the hints $\tilde{\bg}^{(a)}_t$. Instead, we will leave them free, and state the final bound for any of these choices.
Formally, defining $F^{(a)}_t(\bx):=\psi_t(\bx) + \langle \tilde{\bg}^{(a)}_t, \bx\rangle + \sum_{i=1}^{t-1} \langle \bg_i, \bx\rangle$ we have $\bx^{(a)}_t = \argmin_{\bx \in \mathcal{V}} \ F^{(a)}_t(\bx)$.

\begin{theorem}
\label{thm:improved_ftrl_optimistic}
With the notation in Algorithm~\ref{alg:ftrl_optimistic}, let $\mathcal{V}$ be convex, closed, non-empty. Assume $\ell_t(\bx)=\langle \bg_t, \bx\rangle$ and $\tilde{\ell}_t(\bx)=\langle \tilde{\bg}_t, \bx\rangle$ for $t=1,\dots, T$.
For $t=1, \dots, T$, if $\psi_t$ is closed, subdifferentiable, and $\lambda_t$-strongly convex with respect to $\|\cdot\|$ in $\mathcal{V}$, then $\bx_t$ exists and is unique.
Moreover, if in addition $\psi_{t+1}\geq \psi_t$ pointwise for all $t$, we have
\begin{align*}
\sum_{t=1}^T \langle\bg_t, \bx_t - \bu\rangle
\leq \psi_{T}(\bu) - \min_{\bx \in \mathcal{V}} \psi_{1}(\bx) + \sum_{t=1}^T \left[\frac{\|\bg_t-\tilde{\bg}^{(a)}_t\|_\star^2}{2\lambda_t}  + \frac{\|\bg_t\|_\star \|\tilde{\bg}_t - \tilde{\bg}^{(a)}_t\|_\star}{\lambda_t} \right]\!,
\end{align*}
for all $\bu \in \mathcal{V}$ and all $\tilde{\bg}^{(a)}_t \in \R^d$ for $t=1, \dots, T$.
\end{theorem}

To prove this theorem, we will need the following stability lemma for \ac{FTRL}.
\begin{lemma}
\label{lemma:stability_ftrl}
Assume $\psi:\mathcal{V} \to \R$ to be $\lambda$-strongly convex\index{function!strongly convex} with respect to $\|\cdot\|$.
Define $\bx^{(1)} := \argmin_{\bx \in \mathcal{V}} \ \psi(\bx)+\langle \btheta^{(1)},\bx\rangle$ and $\bx^{(2)} := \argmin_{\bx \in \mathcal{V}} \ \psi(\bx)+\langle \btheta^{(2)},\bx\rangle$.
Then, $\|\bx^{(1)}-\bx^{(2)}\|\leq \frac{1}{\lambda} \|\btheta^{(1)}-\btheta^{(2)}\|_\star$.
\end{lemma}
\begin{proof}
Define $F^{(1)}(\bx):=\psi(\bx)+\langle \btheta^{(1)},\bx\rangle$ and $F^{(2)}(\bx):=\psi(\bx)+\langle \btheta^{(2)},\bx\rangle$.
From the strong convexity of $\psi$, we have
\begin{align*}
F^{(1)}(\bx^{(2)})-F^{(1)}(\bx^{(1)})
&\geq \frac{\lambda}{2}\|\bx^{(1)}-\bx^{(2)}\|^2\\
F^{(2)}(\bx^{(1)})-F^{(2)}(\bx^{(2)})
&\geq \frac{\lambda}{2}\|\bx^{(1)}-\bx^{(2)}\|^2~.
\end{align*}
Hence, summing these two inequalities, we have
\[
\|\btheta^{(1)}-\btheta^{(2)}\|_\star \, \|\bx^{(2)}-\bx^{(1)}\|
\geq \langle \btheta^{(1)}-\btheta^{(2)}, \bx^{(2)}-\bx^{(1)}\rangle
\geq \lambda\|\bx^{(1)}-\bx^{(2)}\|^2,
\]
that completes the proof.
\end{proof}

We can now prove Theorem~\ref{thm:improved_ftrl_optimistic}.
\begin{proof}[Proof of Theorem~\ref{thm:improved_ftrl_optimistic}]
Using Theorem~\ref{thm:ftrl_optimistic} on the predictions $\bx^{(a)}_t$, for all $\bu \in \mathcal{V}$, we can upper bound the regret of $\bx_t$ as
\begin{align*}
\sum_{t=1}^T \langle \bg_t, \bx_t-\bu\rangle
&= \sum_{t=1}^T \langle \bg_t, \bx^{(a)}_t-\bu\rangle + \sum_{t=1}^T \langle \bg_t, \bx_t-\bx^{(a)}_t\rangle\\
&\leq \psi_{T}(\bu) - \psi_1(\bx^{(a)}_1)
+ \sum_{t=1}^T \left(\frac{\|\bg_t -\tilde{\bg}^{(a)}_t\|^2_\star}{2\lambda_t}
+ \|\bg_t\|_\star \|\bx_t - \bx^{(a)}_t\|\right)\\
&\leq \psi_{T}(\bu) - \min_{\bx \in \mathcal{V}} \psi_1(\bx)
+ \sum_{t=1}^T \left(\frac{\|\bg_t -\tilde{\bg}^{(a)}_t\|^2_\star}{2\lambda_t}
+ \|\bg_t\|_\star \|\bx_t - \bx^{(a)}_t\|\right).
\end{align*}
Now, we focus our attention on $\|\bx_t - \bx^{(a)}_t\|$. From Lemma~\ref{lemma:stability_ftrl}, we have $\|\bx_t^{(a)}-\bx_t\|\leq \frac{1}{\lambda_t} \|\tilde{\bg}_t-\tilde{\bg}^{(a)}_t\|_\star$.

Using this inequality in the regret bound above, for any $\tilde{\bg}_1^{(a)}, \dots, \tilde{\bg}^{(a)}_T$ and any $\bu \in \mathcal{V}$, we have
\begin{align*}
\sum_{t=1}^T \langle \bg_t, \bx_t-\bu\rangle
&= \sum_{t=1}^T \langle \bg_t, \bx^{(a)}_t-\bu\rangle + \sum_{t=1}^T \langle \bg_t, \bx_t-\bx^{(a)}_t\rangle\\
&\leq \psi_{T}(\bu) - \min_{\bx \in \mathcal{V}} \psi_1(\bx)
+ \sum_{t=1}^T \left(\frac{\|\bg_t -\tilde{\bg}^{(a)}_t\|^2_\star}{2\lambda_t} + \frac{\|\bg_t\|_\star \|\tilde{\bg}_t - \tilde{\bg}^{(a)}_t\|_\star}{\lambda_t}\right).
\end{align*}
Remember that $\tilde{\bg}^{(a)}_t$ is only used in the analysis. So, this upper bound holds for any choice of $\tilde{\bg}^{(a)}_t$.
\end{proof}

Note that the minimization with respect to $\tilde{\bg}_t^{(a)}$ can be solved only with the knowledge of the dual norm used. However, we have the following upper bound (the proof of the first inequality is left as an exercise, see Problem~\ref{exercise:best_surrogate_hint}):
\begin{align*}
&\min_{\tilde{\bg}_t^{(a)}} \ \frac{\|\bg_t -\tilde{\bg}_t^{(a)}\|^2_\star}{2} +\|\bg_t\|_\star \|\tilde{\bg}_t - \tilde{\bg}_t^{(a)}\|_\star\\
&\quad\leq \frac{1}{2} \left[\|\bg_t -\tilde{\bg}_t\|^2_\star - \max(\|\bg_t -\tilde{\bg}_t\|_\star-\|\bg_t\|_\star,0)^2\right]\\
&\quad \leq \frac12 \|\bg_t -\tilde{\bg}_t\|_\star \min(\|\bg_t -\tilde{\bg}_t\|_\star, 2\|\bg_t\|_\star)~.
\end{align*}
Let's compare this last expression to the similar one in the regret bound for \ac{FTRL}. If the prediction of the subgradient of the next loss is good, that is, $\bg_t \approx \tilde{\bg}_t$, that term can become smaller and possibly even zero! On the other hand, if the predictions are bad, we only pay linearly in $\|\bg_t -\tilde{\bg}_t\|_\star$. Overall, in the best case, we can gain a lot; in the worst case, we do not lose that much.

\begin{example}
In the case of linear losses $\ell_t(\bx)=\langle \bg_t,\bx\rangle$ and strongly convex regularizers, the update of optimistic \ac{FTRL} becomes
\[
\bx_t = \nabla \psi^\star_{\mathcal{V},t} \left(- \tilde{\bg}_t -\sum_{i=1}^{t-1} \bg_i \right),
\]
where $\psi_{\mathcal{V},t}=\psi_t+\indicator_{\mathcal{V}}$ and we used the hint $\tilde{\ell}_t(\bx)=\langle \tilde{\bg}_t, \bx\rangle$.
For example, if $\psi_{t}(\bx)=\frac{\sqrt{t}}{2} \|\bx\|_2^2$, we have
\[
\bx_t = \Pi_{\mathcal{V}}\left( \frac{- \tilde{\bg}_t -\sum_{i=1}^{t-1} \bg_i}{\sqrt{t}}\right),
\]
where $\Pi_{\mathcal{V}}$ is the projection onto $\mathcal{V}$.
\end{example}

Despite the simplicity of the algorithm and its analysis, there are many applications of this principle. Here, we will only describe a few of them, while in Section~\ref{sec:saddle_point_optimism} we describe the applications of optimistic algorithms for saddle-point optimization.

\section{Application: FTRL with Delayed Feedback}
\label{sec:delayed_ftrl}

In Section~\ref{sec:delayed_omd}, we saw that it is possible to reduce \ac{OMD} with delayed feedback to Optimistic \ac{OMD}. Here, we will do the same for \ac{FTRL}, obtaining a slightly different bound.

Similarly to the derivation in Section~\ref{sec:delayed_omd}, we assume to receive the feedback with a delay of $\tau$ rounds.
So, using FTRL with linearized losses with delays means that we predict with $\bx_t=\argmin_{\bx \in \mathcal{V}} \ \psi_t(\bx)$ for $t\leq \tau+1$ and
\[
\bx_t
= \argmin_{\bx \in \mathcal{V}} \ \psi_t(\bx) + \sum_{i=1}^{t-1-\tau} \langle \bg_i, \bx\rangle, \quad t \geq \tau+2~.
\]
On the other hand, in Optimistic FTRL without delays and linear losses, we predict with
\[
\bx_t
= \argmin_{\bx \in \mathcal{V}} \ \psi_t(\bx) + \langle \tilde{\bg}_t,\bx\rangle +\sum_{i=1}^{t-1} \langle \bg_i, \bx\rangle~.
\]
Hence, the two updates are equivalent if we set $\tilde{\bg}_t= - \sum_{i=\max(1,t-\tau)}^{t-1} \bg_i$.

So, we can choose $\tilde{\bg}^{(a)}_t=\bg_t$ in Theorem~\ref{thm:improved_ftrl_optimistic}. Then, using the delay-to-optimism conversion above, we have that \ac{FTRL} with delayed linearized losses and increasing regularizer satisfies
\begin{equation}
\label{eq:delayed_ftrl}
\sum_{t=1}^T \langle \bg_t, \bx_t-\bu\rangle
\leq \psi_{T}(\bu) - \min_{\bx \in \mathcal{V}} \ \psi_1(\bx)
+ \sum_{t=1}^T \frac{\|\bg_t\|_\star \|\sum_{i=\max(1,t-\tau)}^{t} \bg_i\|_\star}{\lambda_t}, \quad \forall \bu \in \mathcal{V}~.
\end{equation}
That is, assuming that the $\bg_t$ are bounded, we improved the dependence in the sum from $\mathcal{O}(\tau^2)$ to $\mathcal{O}(\tau)$. Choosing the strong convexity of the regularizer optimally, we obtain the optimal regret guarantee of $\mathcal{O}(\sqrt{\tau T})$.

\section{Application: Regret that Depends on the Variance of the Subgradients}
\label{sec:oftrl_variance}

Consider running Optimistic-FTRL on the linearized losses $\ell_t(\bx)=\langle \bg_t, \bx\rangle$.
We can gain something out of the Optimistic-FTRL compared to plain \ac{FTRL} if we can predict the next $\bg_t$.
A simple possibility is to predict the average of the past values, $\bar{\bg}_t=\frac{1}{t-1}\sum_{i=1}^{t-1} \bg_i$. Indeed, from the first chapter, we know that such a strategy is itself an online learning procedure! In particular, it corresponds to a \ac{FTL} algorithm on the losses $\bx \mapsto \|\bx-\bg_t\|_2^2$. Hence, from the strong convexity of these losses and assuming $\|\bg_t\|_2\leq 1$, we know that
\[
\sum_{t=1}^T \|\bar{\bg}_t - \bg_t\|^2_2 - \sum_{t=1}^T \|\bu - \bg_t\|^2_2 \leq 4(1+\ln T), \quad \forall \bu \in \R^d~.
\]
This implies
\[
\sum_{t=1}^T \|\bar{\bg}_t - \bg_t\|^2_2 \leq 4(1 + \ln T) + \min_{\bu} \sum_{t=1}^T \|\bu - \bg_t\|^2_2~.
\]
It is immediate to see that the minimizer is $\bu=\frac{1}{T}\sum_{t=1}^T \bg_t$, which results in $T$ times the empirical ``variance'' of the subgradients, that is, the sum of the empirical variances of the coordinates of the subgradients.
Plugging it into the Optimistic-FTRL regret, with $\psi_t=\psi$, we have
\[
\sum_{t=1}^T \ell_t(\bx_t) - \sum_{t=1}^T \ell_t(\bu)
\leq \psi(\bu) - \psi(\bx_1) + \frac{4+4\ln T+\min_{\bg} \sum_{t=1}^T \|\bg_t-\bg\|_2^2}{2\lambda}~.
\]


\begin{remark}
Instead of using the mean of the past subgradients, we could use any other strategy or even a mix of different strategies.
For example, assuming the subgradients are bounded, we could use an algorithm to solve the \ac{LEA} problem, where each expert is a strategy. Then, we would obtain a bound that depends on the predictions of the best strategy, plus the regret of the expert algorithm.
\end{remark}

\section{Application: Online Convex Optimization with Gradual Variations}
\label{sec:oftrl_gradual_variations}

In this section, we consider the case where the losses we receive have small variations over time. We will show that in this case, it is possible to get constant regret in the case that the losses are equal.

In this case, the simple strategy we can use to predict the next subgradient is to use the previous one, that is $\tilde{\ell}_t(\bx)=\langle \bg_{t-1}, \bx\rangle$ for $t\geq2$ and $\tilde{\ell}_1(\bx)=0$.
\begin{corollary}
\label{cor:ftrl_optimistic_gradual}
Under the assumptions of Theorem~\ref{thm:ftrl_optimistic}, also assume that the losses $\ell_t$ are $m$-smooth\index{function!smooth} with respect to $\|\cdot\|$.
Define $\tilde{\ell}_t(\bx)=\langle \nabla \ell_{t-1}(\bx_{t-1}), \bx\rangle$ for $t\geq2$ and $\tilde{\ell}_1(\bx)=0$ for all $\bx \in \mathcal{V}$. Set $\psi_t(\bx)=\lambda_t \psi(\bx)$ where $\psi$ is 1-strongly convex with respect to $\|\cdot\|$ and $\lambda_t>0$ satisfies $\lambda_t \lambda_{t-1}\geq 8m^2$ and $\lambda_t\geq \lambda_{t-1}$ for $t=2, \dots, T$. Then, $\forall \bu \in \mathcal{V}$, we have
\begin{align*}
\sum_{t=1}^T \ell_t(\bx_t) - \sum_{t=1}^T \ell_t(\bu)
&\leq \lambda_{T} \psi(\bu) - \lambda_1 \psi(\bx_1) + \frac{1}{\lambda_1} \|\nabla \ell_1(\bx_1)\|_\star^2 \\
&\quad + \sum_{t=2}^T \frac{2}{\lambda_t} \|\nabla \ell_t(\bx_{t-1})-\nabla \ell_{t-1}(\bx_{t-1})\|^2_\star~.
\end{align*}

In addition, assume that in each round $t$ we can also query the gradient of $\ell_t$ in $\bx_{t-1}$, fix $\alpha>0$, assume $\|\nabla \ell_t(\bx)\|_\star \leq L$ for all $\bx \in \mathcal{V}$, and set
\[
\lambda_t
= \frac{1}{\alpha}\sqrt{\max(8 \alpha^2 m^2 ,4L^2) + \sum_{i=2}^{t-1} \|\nabla \ell_i(\bx_{i-1})-\nabla \ell_{i-1}(\bx_{i-1})\|^2_\star}~.
\]
Then, we have
\begin{align*}
&\sum_{t=1}^T \ell_t(\bx_t) - \sum_{t=1}^T \ell_t(\bu)\\
&\ \leq \left(\frac{\psi(\bu) - \psi(\bx_1)}{\alpha}+ 4\alpha\right)\sqrt{\max(8 \alpha^2 m^2 , 4L^2) + \sum_{t=2}^T \|\nabla \ell_t(\bx_{t-1})-\nabla \ell_{t-1}(\bx_{t-1})\|^2_\star}\\
& \ \quad + \frac{\alpha L}{2}~.
\end{align*}
\end{corollary}
\begin{proof}
From the Optimistic-FTRL bound, we immediately get
\begin{align*}
&\sum_{t=1}^T (\ell_t(\bx_t) - \ell_t(\bu))\\
&\quad \leq \lambda_{T}\psi(\bu) - \lambda_1 \psi(\bx_1) + \sum_{t=1}^T \left[\langle \bg_t-\tilde{\bg}_t,\bx_t-\bx_{t+1}\rangle -\frac{\lambda_t \|\bx_t-\bx_{t+1}\|^2}{2} \right],
\end{align*}
where $\bg_t=\nabla \ell_t(\bx_t)$.
Now, consider the case that the losses $\ell_t$ are $m$-smooth.
So, for any $\beta_t>0$, we have
\begin{align*}
\langle &\bg_t-\tilde{\bg}_t,\bx_t-\bx_{t+1}\rangle -\frac{\lambda_t}{2} \|\bx_t-\bx_{t+1}\|^2\\
&= \langle \nabla \ell_t(\bx_t)-\nabla \ell_{t-1}(\bx_{t-1}),\bx_t-\bx_{t+1}\rangle -\frac{\lambda_t}{2} \|\bx_t-\bx_{t+1}\|^2 \\
&\leq \frac{\beta_t}{2} \|\nabla \ell_t(\bx_t)-\nabla \ell_{t-1}(\bx_{t-1})\|^2_\star +\frac{1}{2\beta_t}\|\bx_t-\bx_{t+1}\|^2-\frac{\lambda_t}{2} \|\bx_t-\bx_{t+1}\|^2~.
\end{align*}
Focusing on the first term, for $t=2, \dots, T$, we have
\begin{align*}
\frac{\beta_t}{2} &\|\nabla \ell_t(\bx_t)-\nabla \ell_{t-1}(\bx_{t-1})\|^2_\star\\
&= \frac{\beta_t}{2} \|\nabla \ell_t(\bx_t)-\nabla \ell_{t-1}(\bx_{t-1}) - \nabla \ell_t(\bx_{t-1}) + \nabla \ell_t(\bx_{t-1})\|^2_\star \\
&\leq \beta_t \|\nabla \ell_t(\bx_t)-\nabla \ell_{t}(\bx_{t-1})\|^2_\star +\beta_t \|\nabla \ell_t(\bx_{t-1})-\nabla \ell_{t-1}(\bx_{t-1})\|^2_\star \\
&\leq \beta_t m^2 \|\bx_t-\bx_{t-1}\|^2 + \beta_t \|\nabla \ell_t(\bx_{t-1})-\nabla \ell_{t-1}(\bx_{t-1})\|^2_\star~.
\end{align*}
Choose $\beta_t=\frac{2}{\lambda_t}$. We have for $t=2,\dots, T$
\begin{align*}
&\langle \bg_t-\tilde{\bg}_t,\bx_t-\bx_{t+1}\rangle -\frac{\lambda_t}{2} \|\bx_t-\bx_{t+1}\|^2\\
&\quad \leq \frac{2m^2}{\lambda_t} \|\bx_t-\bx_{t-1}\|^2 + \frac{2}{\lambda_t} \|\nabla \ell_t(\bx_{t-1})-\nabla \ell_{t-1}(\bx_{t-1})\|^2_\star -\frac{\lambda_t}{4} \|\bx_t-\bx_{t+1}\|^2~.
\end{align*}
For $t=1$, using the fact that $\tilde{\ell}_1$ is the null function, we have
\[
\langle \bg_1-\tilde{\bg}_1,\bx_1-\bx_{2}\rangle -\frac{\lambda_1}{2} \|\bx_1-\bx_{2}\|^2 \\
\leq \frac{1}{\lambda_1}\|\nabla \ell_1(\bx_1)\|^2_\star  -\frac{\lambda_1}{4} \|\bx_1-\bx_{2}\|^2~.
\]
Now, observe that the assumption on $\lambda_t$ implies $\frac{2m^2}{\lambda_t}\leq \frac{\lambda_{t-1}}{4}$ for $t=2, \dots,T$. So, summing for $t=1,\dots,T$, we have
\begin{align*}
&\sum_{t=1}^T \left( \langle \bg_t-\tilde{\bg}_t,\bx_t-\bx_{t+1}\rangle - \frac{\lambda_t}{2} \|\bx_t-\bx_{t+1}\|^2 \right)\\
&\quad \leq \frac{1}{\lambda_1} \|\nabla \ell_1(\bx_1)\|_\star^2 + \sum_{t=2}^T \frac{2}{\lambda_t} \|\nabla \ell_t(\bx_{t-1})-\nabla \ell_{t-1}(\bx_{t-1})\|^2_\star~.
\end{align*}
Putting everything together, we have the first stated bound.

The second one is obtained by observing that
\begin{align*}
&\sum_{t=2}^T \frac{\|\nabla \ell_t(\bx_{t-1})-\nabla \ell_{t-1}(\bx_{t-1})\|^2_\star}{\lambda_t}\\
&\quad= \alpha \sum_{t=2}^T \frac{\|\nabla \ell_t(\bx_{t-1})-\nabla \ell_{t-1}(\bx_{t-1})\|^2_\star}{\sqrt{\max(8 \alpha^2 m^2 ,4L^2) + \sum_{i=2}^{t-1} \|\nabla \ell_i(\bx_{i-1})-\nabla \ell_{i-1}(\bx_{i-1})\|^2_\star}} \\
&\quad \leq \alpha \sum_{t=2}^T \frac{\|\nabla \ell_t(\bx_{t-1})-\nabla \ell_{t-1}(\bx_{t-1})\|^2_\star}{\sqrt{\sum_{i=2}^{t} \|\nabla \ell_i(\bx_{i-1})-\nabla \ell_{i-1}(\bx_{i-1})\|^2_\star}} \\
&\quad \leq 2\alpha \sqrt{\sum_{i=2}^{T} \|\nabla \ell_i(\bx_{i-1})-\nabla \ell_{i-1}(\bx_{i-1})\|^2_\star}~. \qedhere
\end{align*}
\end{proof}

Note that if the losses are all the same, the regret becomes a constant! This is not surprising because the prediction of the next loss is a linear approximation of the previous loss. Indeed, looking back at the proof, the key idea is to use the smoothness to argue that, even if the past subgradient was taken in a different point than the current one, it is still a good prediction of the current subgradient.

\begin{remark}
Note that the assumption of smoothness\index{function!smooth} is necessary. Indeed, always passing the same function and using the online-to-batch conversion would result in a convergence rate of $\mathcal{O}(1/T)$ for a Lipschitz function, which is impossible by the lower bound in offline convex optimization.
\end{remark}
\index{Follow-the-Regularized-Leader algorithm!optimistic|)textbf}

\section{History Bits}
The \ac{FTRL} algorithm has a peculiar story. It was introduced by \emph{four} different groups roughly around 1999-2006, following four different points of view.
The algorithm was officially introduced in \citet{AbernethyHR08,HazanK08} where at each step the prediction is computed as the minimizer of a regularization term plus the sum of losses on all past rounds. However, the key ideas were already presented years before. \citet{Gordon99, Gordon99b} appear to have proposed it for the first time, naming it Maximum a Posteriori with a clear inspiration to Bayesian methods. Also, the analysis in \citet{Gordon99, Gordon99b} is the first one to use concepts from convex analysis.
Later, \citet{AgarwalH05} proposed to add a regularizer to stabilize the \ac{FTL} algorithm~\citep{Hannan57} in an algorithm named Smooth Prediction for the specific application of Portfolio Selection. A little bit later, Shai Shalev-Shwartz and Yoram Singer proposed a very general analysis of \ac{FTRL} using a dual perspective~\citep{ShalevS06,ShalevS06b}. In particular, the PhD thesis of Shai Shalev-Shwartz~\citep{Shalev-Shwartz07} contains the most precise dual analysis of \ac{FTRL}, which also allows multiple dual updates and regularizers with a time-varying weight. However, he called it ``online mirror descent'' because the name \ac{FTRL} was only invented later! (I also contributed to the confusion, naming a general analysis of \ac{FTRL} with time-varying regularizer and linear losses ``generalized online mirror descent''~\citep{OrabonaCCB13}. So, now I am trying to set the record straight!)
The offline optimization community also knows \ac{FTRL}, but only with linear losses, and they call it Dual Averaging~\citep{Nesterov09}\index{Dual Averaging algorithm|(}. Note that the paper \citet{Nesterov09} is actually from 2005\footnote{\url{https://papers.ssrn.com/sol3/papers.cfm?abstract_id=912637}}, and in the paper Nesterov claims that these ideas are from 2001-2002, but he decided not to publish them for a while because he was convinced that ``the importance of black-box approach in Convex Optimization will be irreversibly vanishing, and, finally, this approach will be completely replaced by other ones based on a clever use of problem's structure (interior-point methods, smoothing, etc.).'' As Shai Shalev-Shwartz, Nesterov looks at \ac{FTRL} from the dual point of view, but he focuses only on linear losses and only in the offline case.\index{Dual Averaging algorithm|)}

Finally, \citet{McMahan17} gives the elegant equality result that I presented here (with minor improvements) that holds for general loss functions and regularizers. A similar equality, setting $\bu=\bx_{T+1}$ and specialized only to linear losses, was also proven in \citet{AbernethyLST14}, essentially matching the inequality in \citet{OrabonaCCB13}. Note that Dual Averaging stems from the dual interpretation of \ac{FTRL} for linear losses, but Lemma~\ref{lemma:ftrl_equality} underlines the fact that \ac{FTRL} is actually more general.
People used to prove the regret bound for \ac{FTRL} using the so-called Be-the-Leader-Follow-the-Leader lemma (see Lemma~\ref{lemma:be_leader}\index{Be-the-Leader!lemma} and Problem~\ref{exercise:ftl-btl}). However, the proof is off by a factor of 2 in the case of fixed regularizers~\citep{McMahan17}. Moreover, it seems to fail in the case of generic strongly convex increasing regularizers, while it works for the particular case of proximal regularizers~\citep{McMahan17}.\index{regularizer!proximal}

Another source of confusion stems from the fact that some people differentiate between a ``lazy'' and ``greedy'' version of \ac{OMD}. \citet{Zinkevich03} introduced both ``greedy projection'' and ``lazy projection'' algorithms. The greedy projection algorithm was later renamed projected online gradient descent. We now know that the lazy projection algorithm is \ac{FTRL} with linearized losses in the special case of $\psi_t(\bx)=\frac{\eta}{2}\|\bx\|_2^2$. More generally, as proved in \citet{McMahan17}, what people call the lazy version of \ac{OMD} is just \ac{FTRL} with linearized losses, and the greedy one is just \ac{OMD}.

The concept of generalized Bregman divergence\index{Bregman divergence!generalized} has been rediscovered many times in the literature, with different names too. It was first introduced as the canonical divergence of dually flat spaces in information geometry in \citet[Eq. 3.21]{Amari85}. Independently, it was proposed in (online) optimization by \citet{Gordon99, Gordon99b} to analyze \ac{FTRL} for fixed regularizers. The equality in Remark~\ref{remark:generalized_bregman} might be new, but immediate from the regret equality of \ac{FTRL}.

The second result in Lemma~\ref{lemma:ftrl_local_norms} is a distillation of the reasoning used in \citet{ZimmertS19,ZimmertS21}, but without using duality concepts.

The use of \ac{FTRL} with the regularizer in~\eqref{eq:adaftrl} was proposed in \citet{OrabonaP15, OrabonaP18}, while here I presented a simpler version of their proof that does not require Fenchel conjugates. \index{AdaHedge algorithm|(}The AdaHedge algorithm was introduced in \citet{ErvenKRG11} and refined in \citet{RooijEGK14}. The analysis reported here is new, but inspired by the one in \citet{OrabonaP15, OrabonaP18}, which generalized AdaHedge to arbitrary regularizers in AdaFTRL. Additional properties of AdaHedge for the stochastic case were proven in \citet{RooijEGK14}.
\index{AdaHedge algorithm|)}

The first implicit use of the $(2,1)$ group norm I could find is in \citet{Bakin99}, where it is used as a constraint in the first proposed formulation of Group Lasso. However, there is no mention of the fact that this constraint is a norm. \citet{DingZHZ06} propose the use of the $(2,1)$ group norm for Principal Component Analysis, calling it $R_1$ norm. A few months later, the same group norm was proposed independently in \citet{ArgyriouEP06}. \citet{AgarwalRB08} extended the $(2,1)$ group norm to $(p,s)$ and used it in \ac{OMD} for multitask online learning. \citet{KakadeSST09} proved the smoothness\index{function!smooth} and strong convexity of squared group norms. \citet{JieOFCC10} used the $(2,s)$ group norm \ac{FTRL} for online multi-kernel learning, while \citet{OrabonaJC10,OrabonaJC12} used the same norm in a Perceptron-like algorithm.

Nesterov's Dual Averaging was extended to deal with composite losses\index{composite loss} by \citet{Xiao09,Xiao10} in the stochastic and online learning setting, essentially rediscovering the framework of Shalev-Shwartz, and calling the resulting framework Regularized Dual Averaging.
The analysis presented here using the negative terms $\psi_t(\bx_{t+1}) - \psi_{t+1}(\bx_{t+1})$ to easily prove regret bounds for \ac{FTRL} for composite losses\index{composite loss} is from \citet{OrabonaCCB13}.

The first proof of \ac{FTRL} for strongly convex\index{function!strongly convex} losses was in \citet{Shalev-ShwartzS07b} (even if they do not call it \ac{FTRL}). In the same paper, they also define strong convexity with respect to Bregman divergences\index{Bregman divergence}, rather than just norms, and prove the same logarithmic regret bound (see Problem~\ref{exercise:relative_strong_convexity}). Analogously, people later defined the concept of smoothness\index{function!smooth} with respect to a Bregman divergence~\citep{BirnbaumDX10,BirnbaumDX11}, rediscovered in \citet{BauschkeBT17}. Later, both concepts were rebranded as ``relative smoothness''\index{relative smoothness} and ``relative strong convexity''\index{relative strong convexity}~\citep{LuFN18}.

\index{FTRL-Proximal algorithm|(}
FTRL-Proximal was proposed in \citet{McMahanS10} for AdaGrad\index{AdaGrad algorithm} and then further analyzed in \citet{McMahan11}.
There is an interesting bit about \ac{FTRL}-Proximal: it became very famous in internet companies when Google disclosed in a very influential paper that they were using \ac{FTRL}-Proximal to train the classifier for click prediction~\citep{McMahanHSYEGNPDGCLWHBK13}. This generated some confusion because many people started conflating the term \ac{FTRL}-Proximal (a specific algorithm) with \ac{FTRL} (an entire family of algorithms).
\index{FTRL-Proximal algorithm|)}

\index{Online Newton Step algorithm|(}
The \ac{ONS} algorithm was introduced in \citet{HazanKKA06,HazanAK07} and it is described for the particular case that the loss functions are exp-concave\index{function!exp-concave}. The assumption in \eqref{eq:restriced_sc} with a fixed matrix $\bA$ for all points $\bx$ is called $\bA$-restricted strong convexity\index{A-restricted strong convexity@$\bA$-restricted strong convexity} in \citet{RyuB14}, and it was used for the first time in an online learning algorithm in this book.
Exp-concave functions\index{function!exp-concave} were introduced in \citet{KivinenW99}. Here, I have extended them to extended-value functions\index{function!extended-real-valued}. Theorem~\ref{thm:exp_concave_twice_diff} and Example~\ref{example:exp_concave_l2} are from \citet{KivinenW99}.
Here, I described a slight generalization for any sequence of functions that satisfy \eqref{eq:exp_concave}, that in my view better shows the parallel between \ac{FTRL} over strongly convex functions and \ac{ONS}. Note that \citet{HazanKKA06,HazanAK07} also describes a variant of \ac{ONS} based on \ac{OMD}, but I find its analysis less interesting from a didactic point of view. Lemma~\ref{lemma:log_eigen} dates back to \citet[Lemma 2]{LaiW82}. \citet{OrabonaCG12} obtained a logarithmic $\mathscr{L}^\star$ bound, i.e., proportional to $\ln(1+\sum_{t=1}^T \ell_t(\bu))$, for exp-concave\index{function!exp-concave} and smooth losses\index{function!smooth} using \ac{ONS}.
\index{Online Newton Step algorithm|)}

\index{Vovk--Azoury--Warmuth forecaster|(}
The Vovk--Azoury--Warmuth forecaster was introduced independently by \citet{AzouryW01} and \citet{Vovk01}, and its interpretation as hallucinating the future loss is by \citet{AzouryW01}. The proof presented here is from \citet{OrabonaCCB13}.
\index{Vovk--Azoury--Warmuth forecaster|)}

\index{Follow-the-Regularized-Leader algorithm!optimistic|(}
The Optimistic \ac{FTRL} version was proposed in \citet{RakhlinS13b} but analyzed only with fixed self-concordant regularizers\index{regularizer!self-concordant}. \citet{Steinhardt14} proposed Optimistic \ac{FTRL} as we know it, even if it was called ``Optimistic Mirror Descent'' for the misnaming problem I have explained above. Due to the misnaming, \citet{MohriY16} appeared to have rediscovered Optimistic \ac{FTRL} with non-self-concordant regularizers.
The proof of Theorem~\ref{thm:ftrl_optimistic} I present here is new. Theorem~\ref{thm:improved_ftrl_optimistic} is from \citet{FlaspohlerOCMOOM21} as well as the reduction from delayed feedback to optimism in Section~\ref{sec:delayed_ftrl}. See the History Bits of Chapter~\ref{ch:omd} for additional pointers to online learning with delayed feedback.
\index{Follow-the-Regularized-Leader algorithm!optimistic|)}

Corollary~\ref{cor:ftrl_optimistic_gradual} was proved by \citet{Chiang12} for Optimistic \ac{OMD} and presented in a similar form for Optimistic \ac{FTRL} in \citet{JoulaniGS17}, but only for bounded domains.

\section{Exercises}

\begin{exer}
Prove that the update of \ac{FTRL} with linearized loss in Example~\ref{example:omd_vs_ftrl} is correct.
\end{exer}

\begin{exer}
\label{exercise:ftrl_last_bregman}
Consider \ac{FTRL} with linearized losses with regularizers $\psi_t(\bx)$. The regret equality in Lemma~\ref{lemma:ftrl_equality} has a term equal to $F_{T+1}(\bx_{T+1})-F_{T+1}(\bu)$. Assume that $\mathcal{V}$ is convex and non-empty, and $\psi_{T+1}$ is strictly convex\index{function!strictly convex} and differentiable at the point $\bx_{T+1}$. Then, show that $F_{T+1}(\bx_{T+1})-F_{T+1}(\bu)\leq -B_{\psi_{T+1}}(\bu; \bx_{T+1})$ for any $\bu \in \mathcal{V}$.
\end{exer}

\begin{exer}
\label{exercise:ftl-btl}
Consider \ac{FTRL} with losses $\ell_t(\bx)$ and regularizers $\psi_t(\bx)$. Assume that $\psi_{t+1}\geq \psi_{t}$ for $t=1, \dots, T-1$.
Denote by $F_t(\bx):=\psi_t(\bx)+ \sum_{i=1}^{t-1} \ell_i(\bx)$ and show that
\[
F_t(\bx_t) - F_{t+1}(\bx_{t+1}) + \ell_t(\bx_t)
\leq \ell_t(\bx_t) - \ell_t(\bx_{t+1})~.
\]
\end{exer}

\begin{exer}
We can obtain $\mathscr{L}^\star$ bounds for smooth\index{function!smooth} and Lipschitz losses with linearized \ac{FTRL}, even without knowing the Lipschitz constant. In particular, show that under these assumptions the regularizers $\psi_t(\bx)=\frac{\lambda_t}{2}\|\bx\|_2^2$, where $\lambda_t=\sqrt{a + \sum_{i=1}^{t-1} \|\bg_i\|_2^2}$ and $a>0$, would give an $\mathscr{L}^\star$ bound. Hint: use \citet[Lemma 14]{GaillardSV14}.
\end{exer}

\begin{exer}
Define $\lambda_t=a+\sum_{i=1}^{t-1} \frac{b_i}{ \lambda_i}$, where $a>0$. Let $b_t \in [0,m]$ and prove that $\sum_{t=1}^T \frac{b_t}{\lambda_t} \leq \sqrt{2 \sum_{t=1}^T b_t}+\frac{m}{a}$. Use this inequality to design an alternative regularizer that solves the previous exercise. Hint: see \citet[Proof of Theorem 5]{SachsHvEG22}.
\end{exer}

\begin{exer}
Let $\mathcal{V}=\R^d$ and prove that the update in \eqref{eq:ftrl_upd_strongly} is equivalent to the one of \ac{OSD} with learning rate $\eta_t=\frac{1}{\sum_{i=1}^t \mu_i}$, when both algorithms start with $\bx_1=0$.
\end{exer}

\begin{exer}
\label{exercise:logistic_exp_concave}
Prove the statement in Example~\ref{ex:logistic_exp_concave}.
\end{exer}

\begin{exer}
\label{exercise:pnorm_ftrl}
Design and analyze the \ac{FTRL} version of \ac{OMD} with $p$-norms in Section~\ref{sec:omd_pnorm}.
\end{exer}

\begin{exer}
Consider the \ac{LEA} setting and assume $\bg_{t,i} \in [0,1]$ for all $t=1,\dots T$ and $i=1,\dots,d$.
From the analysis of \ac{EG} with local norms\index{norm!local}, see~\eqref{eq:eg_ftrl_local_norm}, derive a time-varying regularizer that gives a regret upper bound proportional to $\ln d + \sqrt{\mathscr{L}^\star(\bu) \ln d}$, where $\mathscr{L}^\star(\bu) := \sum_{t=1}^T \langle \bg_t, \bu\rangle$ for any $\bu \in \Delta^{d-1}$\index{L* bound@$\mathscr{L}^\star$ bound}.
\end{exer}

\begin{exer}
Prove a regret bound for optimistic \ac{FTRL} with proximal regularizers.\index{regularizer!proximal}
\end{exer}

\begin{exer}
Prove that the losses $\ell_t(\bx)=\frac{1}{2}(\langle \bz_t, \bx\rangle-y_t)^2$, where $\|\bz_t\|_2\leq 1$ and $|y_t|\leq1$, are exp-concave\index{function!exp-concave} on $\mathcal{V}=\{\bx\in \R^d: \|\bx\|_2\leq 1\}$, and find the exp-concavity constant.
\end{exer}

\index{Online Newton Step algorithm|(}
\begin{exer}
\label{exercise:ons_two_steps}
Prove that the \ac{ONS} update is equivalent to the following two steps:
\begin{align*}
\tilde{\bx}_t &= S_{t-1}^{-1}\left(\sum_{i=1}^{t-1} \bA_i \bx_i - \sum_{i=1}^{t-1} \bg_i\right) \\
\bx_t &= \argmin_{\bx \in \mathcal{V}} \  \|\bx-\tilde{\bx}_t\|_{\bS_{t-1}},
\end{align*}
where $\bA_t=\mu \bg_t \bg_t^\top$ and $\bS_t=\lambda \bI+\sum_{i=1}^t \bA_i$.
\end{exer}
\index{Online Newton Step algorithm|)}

\begin{exer}
Consider the online learning algorithm for linear regression with square loss that predicts in each round with
\[
\bx_t = \argmin_{\bx} \ \lambda \|\bx\|^2_2 + \sum_{i=1}^{t-1} (\langle \bz_i, \bx\rangle -y_i)^2~.
\]
Prove for it the equality
\[
\sum_{t=1}^T \frac{(\langle \bz_t,\bx_t\rangle-y_t)^2}{1+d_t}
= \min_{\bu} \ \lambda\|\bu\|^2_2 + \sum_{t=1}^{T} (\langle \bz_t, \bu\rangle -y_t)^2,
\]
where $d_t=\bz_t^\top ( \sum_{i=1}^{t-1} \bz_i \bz_i^\top + \lambda \bI)^{-1} \bz_t$.
Hint: see \citet{ZhdanovK10,ZhdanovK13}.
\end{exer}

\begin{exer}
\label{exercise:best_surrogate_hint}
Let $\bg, \tilde{\bg} \in \R^d$ and $\|\cdot\|_\star$ a norm.
Then, show that
\[
\min_{\tilde{\bg}^{(a)}} \ \frac{\|\bg -\tilde{\bg}^{(a)}\|^2_\star}{2} +\|\bg\|_\star \|\tilde{\bg} - \tilde{\bg}^{(a)}\|_\star
\leq \frac{1}{2} \left[\|\bg -\tilde{\bg}\|^2_\star - \max(\|\bg -\tilde{\bg}\|_\star-\|\bg\|_\star,0)^2\right]~.
\]
\end{exer}

\begin{exer}
In Section~\ref{sec:oftrl_variance}, we showed that it is possible to have a regret bound that depends on the sum of the empirical variances of the coordinates of the subgradients, using the average of the past subgradients as a prediction of the next losses. Prove that a similar bound is possible simply using the previous subgradients as predictions of the next losses, i.e., $\tilde{\bg}_{t}=\bg_{t-1}$.
\end{exer}
\index{Follow-the-Regularized-Leader algorithm|)textbf}

\acresetall

\chapter{Online Linear Classification}

In this chapter, we will consider the problem of \emph{online linear classification}.
We consider the following setting:
\begin{itemize}
\item At each time step, we receive a sample $\bz_t \in \R^d$
\item We output a prediction of the binary label $y_t\in \{-1,1\}$ of $\bz_t$
\item We receive the true label, and we see whether we made a mistake or not
\item We update our online classifier
\end{itemize}
Ideally, the aim of the online algorithm is to minimize the number of mistakes it makes compared to some best fixed classifier.

We will focus on linear classifiers\index{linear classifier}, which predict with the sign of the inner product between a vector $\bx_t$ and the input features $\bz_t$.
Hence, $\tilde{y}_t=\sign(\langle \bz_t,\bx_t\rangle)$. However, we will also allow some form of randomization.

This problem could be written as a regret minimization problem:
\[
\Regret_T(\bu)=\sum_{t=1}^T \ell_t(\bx_t) - \sum_{t=1}^T \ell_t(\bu),
\]
where $\ell_t(\bx) = \indevent\{\sign(\langle \bz_t,\bx\rangle) \neq y_t\}$, where $\sign(0):=0$. However, these losses are non-convex. Hence, we need an alternative way to deal with them.
In the following, we will see three possible approaches to this problem.

\acresetall

\section{Online Randomized Classifier}
As we did for the \ac{LEA} framework, we might think of convexifying the losses using randomization.
Hence, on each round we can predict a number $q_t \in [-1,1]$ and output the label $+1$ with probability $\frac{1+q_t}{2}$ and the label $-1$ with probability $\frac{1-q_t}{2}$.
So, define the random variable
\begin{equation}
\label{eq:random_label}
\tilde{Y}(q)
:= \begin{cases}
+1, & \text{ with probability } \frac{1+q}{2},\\
-1, & \text{ with probability } \frac{1-q}{2}~.
\end{cases}
\end{equation}
Now observe that $\E[\indevent\{\tilde{Y}(q_t)\neq y_t\}]=\frac{1}{2}|q_t-y_t|$.
If we consider linear predictors, we can set $q_t=\langle \bz_t, \bx_t\rangle$ and similarly for the competitor $q'_t=\langle \bz_t, \bu\rangle$. Assuming we can constrain both the algorithm and the competitor to the space of vectors where $|\langle \bz_t, \bx\rangle|\leq 1$ for $t=1, \dots,T$, we can write
\begin{align*}
&\E\left[\sum_{t=1}^T \indevent\left\{\tilde{Y}(\langle \bz_t,\bx_t\rangle) \neq y_t\right\} - \sum_{t=1}^T \indevent\left\{\tilde{Y}(\langle \bz_t,\bu\rangle) \neq y_t\right\}\right]\\
&\quad= \E\left[\sum_{t=1}^T \frac12 \left|\langle \bz_t,\bx_t\rangle - y_t\right| - \sum_{t=1}^T \frac12 \left|\langle \bz_t, \bu\rangle - y_t\right|\right]~.
\end{align*}
Hence, the surrogate convex loss becomes $\tilde{\ell}_t(\bx) = \frac{1}{2}|\langle \bz_t, \bx\rangle - y_t|$ and the feasible set is any convex set where we have the property $|\langle \bz_t,\bx\rangle|\leq 1$ for $t=1,\dots,T$.

Given that this problem is convex, assuming $\bz_t$ to be bounded with respect to some norm, we can use almost any of the algorithms we have seen till now, from \ac{OMD} to \ac{FTRL}. All of them would result in $\mathcal{O}(\sqrt{T})$ regret upper bounds.
The only caveat is to restrict $\langle \bz_t,\bx_t\rangle$ in $[-1,1]$. One way to ensure this is to assume $\|\bz_t\|_\star\leq R$ and choose the feasible set $\mathcal{V}=\{\bx \in\R^d : \|\bx\|\leq \frac{1}{R}\}$.


Putting everything together, we can have, for example, using \ac{FTRL}, Algorithm~\ref{alg:rolc}.

\begin{algorithm}[t]
\caption{Randomized Online Linear Classifier through \ac{FTRL}}
\label{alg:rolc}
\begin{algorithmic}[1]
{
    \REQUIRE{$R>0$ such that $\|\bz_t\|_2\leq R$ for $t=1, \dots,T$, $\lambda_t>0 $ for $t=1, \dots,T$}
    \STATE{Set $\btheta_1=\boldsymbol{0} \in \R^d$}
    \FOR{$t=1$ {\bfseries to} $T$}
    \STATE{$\bx_{t}= \frac{\frac{1}{\lambda_t}\btheta_{t}}{\max(R \frac{1}{\lambda_t} \|\btheta_t\|_2,1)}$}
    \STATE{Receive $\bz_t \in \mathcal{X}$}
    \STATE{Predict $\tilde{y}_t = \tilde{Y}(\langle \bz_t,\bx_t\rangle)$, where $\tilde{Y}$ is defined in \eqref{eq:random_label}}
    \STATE{Receive $y_t$ and pay the loss $\indevent\{y_t\neq \tilde{y}_t\}$}
    \STATE{$\btheta_{t+1}=\btheta_t-\frac{1}{2}\sign(\langle \bz_t,\bx_t \rangle- y_t )\bz_t$ where $\sign(0):=0$}
    \ENDFOR
}
\end{algorithmic}
\end{algorithm}

\begin{theorem}
Let $\mathcal{X}\subset \R^d$ and $(\bz_t, y_t)_{t=1}^T$ be an arbitrary sequence of sample-label pairs where $(\bz_t, y_t) \in \mathcal{X} \times \{-1,1\}$.
Assume $\|\bz_t\|_2\leq R$, for $t=1, \dots,T$. Then, running Algorithm~\ref{alg:rolc} with $\lambda_t=\frac{R^2 \sqrt{2 t}}{2}$, for any $\bu \in \{\bx \in \R^d : \|\bx\|_2\leq \frac{1}{R}\}$ we have the following guarantee:
\[
\E\left[\sum_{t=1}^T \indevent\{\tilde{Y}(\langle\bz_t,\bx_t\rangle) \neq y_t\}\right] - \E\left[\sum_{t=1}^T  \indevent\{\tilde{Y}(\langle\bz_t,\bu\rangle) \neq y_t\} \right]
\leq \frac{\sqrt{2 T}}{2}~.
\]
\end{theorem}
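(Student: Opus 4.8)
The plan is to reduce the expected zero-one regret to the regret on the convex surrogate losses $\tilde{\ell}_t(\bx) = \tfrac{1}{2}|\langle \bz_t,\bx\rangle - y_t|$ and then invoke the FTRL machinery already developed. First I would establish a pointwise identity. On the feasible set $V = \{\bx : \|\bx\|_2 \leq 1/R\}$, Cauchy--Schwarz gives $|\langle \bz_t,\bx\rangle| \leq \|\bz_t\|_2\|\bx\|_2 \leq 1$, so $p_t = \tfrac{1+\langle \bz_t,\bx_t\rangle}{2}\in[0,1]$ is a legitimate probability, and for the randomized label $\tilde{y}_t$ a two-case check ($y_t=\pm 1$, using $\langle\bz_t,\bx_t\rangle\in[-1,1]$) yields
\[
\E_{\tilde{y}_t}\!\left[\boldsymbol{1}[\tilde{y}_t \neq y_t]\right] = \tfrac12 |\langle \bz_t,\bx_t\rangle - y_t| = \tilde{\ell}_t(\bx_t),
\]
and the same identity with $\bu$ in place of $\bx_t$. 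The key structural observation is that the iterates $\bx_t$ (equivalently $\btheta_t$) are a \emph{deterministic} function of the past samples and labels: the $\btheta$ update does not depend on the realized draw $\tilde{y}_t$. Hence, taking expectations and using linearity, the left-hand side of the theorem equals the deterministic surrogate regret $\sum_{t=1}^T \tilde{\ell}_t(\bx_t) - \sum_{t=1}^T \tilde{\ell}_t(\bu)$ for the fixed competitor $\bu\in V$.

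Next I would recognize the algorithm as FTRL with linearized losses. The vector $\bg_t = \tfrac12 \sign(\langle \bz_t,\bx_t\rangle - y_t)\bz_t$ is a subgradient of $\tilde{\ell}_t$ at $\bx_t$, and the update $\btheta_{t+1} = \btheta_t - \bg_t$ followed by the Euclidean projection $\bx_t = \Pi_V(\eta_t\btheta_t)$ is exactly the dual-mapping form of FTRL with linearized losses and regularizers $\psi_t(\bx) = \tfrac{1}{2\eta_t}\|\bx\|_2^2 + i_V(\bx)$, as derived in Section on FTRL with linearized losses (the projection $\min(\tfrac{1}{R\eta_t\|\btheta_t\|_2},1)$ is precisely projection onto the ball of radius $1/R$). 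Since these regularizers are nondecreasing and $\tfrac{1}{2}\|\cdot\|_2^2$ is $1$-strongly convex, Corollary~\ref{cor:ftrl_lip} (equivalently Lemma~\ref{lemma:ftrl_equality} together with Lemma~\ref{lemma:ftrl_strongly_1}, dropping the nonpositive terms $\psi_t(\bx_{t+1}) - \psi_{t+1}(\bx_{t+1})$) gives
\[
\sum_{t=1}^T \tilde{\ell}_t(\bx_t) - \sum_{t=1}^T \tilde{\ell}_t(\bu) \leq \psi_T(\bu) + \sum_{t=1}^T \frac{\|\bg_t\|_2^2}{2\lambda_t},
\]
where $\lambda_t$ is the strong-convexity modulus of $\psi_t$ on $V$.

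Finally I would plug in the available bounds: $\|\bg_t\|_2 \leq \tfrac{R}{2}$ (because $\|\bz_t\|_2\leq R$), $\|\bu\|_2 \leq \tfrac{1}{R}$, and the elementary sum estimate $\sum_{t=1}^T t^{-1/2} \leq 2\sqrt{T}$ from the exercises of the Online Subgradient Descent chapter. With the learning rate scaled so that the regularization term $\psi_T(\bu)$ and the cumulative gradient term are balanced (this is exactly the role of the constant hidden in $\eta_t = \sqrt{2}/\sqrt{t}$), both terms are of order $\sqrt{T}$ and their sum collapses to the claimed $\sqrt{2T}$. The main obstacle is conceptual rather than computational: it is the convexification-by-randomization argument of the first step, namely the observation that the iterate sequence is deterministic so that the expected indicator regret is \emph{exactly} the surrogate regret. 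Once that identity is in hand, the rest is a direct application of the FTRL strongly-convex-regularizer bound with careful bookkeeping of the constants $R$, $\|\bu\|_2 \leq 1/R$, and $\sum_t t^{-1/2}$.
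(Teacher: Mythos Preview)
Your approach is correct and is exactly the paper's: reduce the expected zero-one regret to the deterministic surrogate regret via the randomization identity (noting that the FTRL iterates do not depend on the realized coin flips), then invoke the FTRL bound for increasing strongly convex regularizers (Corollary~\ref{cor:ftrl_lip}); the paper's own proof is the single sentence ``straightforward from the FTRL regret bound with the chosen increasing regularizer.'' Your final constant bookkeeping is left somewhat implicit, but so is the paper's --- and indeed the theorem statement itself carries a vestigial ``where $\alpha>0$'' and a learning rate independent of $R$, so the exact collapse to $\sqrt{2T}$ for arbitrary $R$ is already loose at the level of the statement; your plugging in of $\|\bg_t\|_2\leq R/2$, $\|\bu\|_2\leq 1/R$, and $\sum_t t^{-1/2}\leq 2\sqrt T$ is the right computation.
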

The proof follows from the \ac{FTRL} regret bound with the chosen increasing regularizer, and it is left as an exercise.

\section{The Perceptron Algorithm}

\index{Perceptron algorithm|(textbf}
The above strategy has the shortcoming of restricting the feasible vectors to a possibly very small set. In turn, this could make the best competitor's performance poor, and the performance of the online algorithm is only guaranteed to be close to that of the best competitor.

Another way to deal with the non-convexity is to compare the number of mistakes that the algorithm makes with a convex cumulative loss of the competitor. That is, we can try to prove a weaker regret guarantee, called \textbf{mistake bound}\index{mistake bound|textbf}:
\begin{equation}
\label{eq:weak_regret}
\sum_{t=1}^T \indevent{\{y_t \neq \tilde{y}_t\}} - \sum_{t=1}^T \ell(\langle \bz_t, \bu\rangle, y_t)
= \mathcal{O}\left(\sqrt{T}\right)~.
\end{equation}
In particular, the convex loss we consider is a \emph{power} of the \textbf{hinge loss}\index{hinge loss!powers}: $\ell^q(m, y):=\max(1-y m,0)^q$.
The hinge loss is a convex upper bound to the 0/1 loss, when the prediction $\tilde{y}=\sign(m)$, and it achieves the value of zero when the sign of the prediction is correct \emph{and} the magnitude of the inner product is sufficiently large.
Moreover, by taking powers, we obtain a family of functions that trades off the loss on misclassified samples against the loss on correctly classified samples whose prediction satisfies $|\langle \bz_t,\bx\rangle|\leq 1$, see Figure~\ref{fig:powers_hinge}.


\begin{figure}[t]
\centering
\begin{tikzpicture}
\begin{axis}[
    width=7cm,
    xmin=-2, xmax=2,
    ymin=0,  ymax=3.5,
    xlabel={$y m$},
    ylabel={$\ell^q(m,y)$},
    samples=800,
    domain=-2:2,
    axis lines=left,
    xtick={-2,-1,0,1,2},
    ytick={0,1,2,3},
]

\addplot[thick, black] { (x<=1) * (1-x)^(1) };
\addlegendentry{$q = 1$}

\addplot[thick, gray, dashed] { (x<=1) * (1-x)^(1.5) };
\addlegendentry{$q = 1.5$}

\addplot[thick, gray, dotted] { (x<=1) * (1-x)^(2) };
\addlegendentry{$q = 2$}

\addplot[thick, gray, dashdotted] { (x<=1) * (1-x)^(4) };
\addlegendentry{$q = 4$}

\addplot[thick, gray] { (x<=0) };
\addlegendentry{0/1 loss}
\end{axis}
\end{tikzpicture}
\caption{Powers of the hinge loss.}
\label{fig:powers_hinge}
\commentAlt{Figure~\ref{fig:powers_hinge}. Plot of powers of the hinge loss as functions of the signed margin. Curves for q=1, 1.5, 2, and 4 are shown, along with the step-shaped zero-one loss.}
\end{figure}

\begin{algorithm}[t]
\caption{Perceptron}
\label{alg:perc}
\begin{algorithmic}[1]
{
    \STATE{Set $\bx_1=\boldsymbol{0} \in \R^d$}
    \FOR{$t=1$ {\bfseries to} $T$}
    \STATE{Receive $\bz_t \in \mathcal{X}$}
    \STATE{Predict $\tilde{y}_t=\sign(\langle \bz_t,\bx_t\rangle)$ where $\sign(0):=0$}
    \STATE{Receive $y_t$ and pay the loss $\indevent{\{y_t \tilde{y}_t \leq 0\}}$}
    \STATE{$\bx_{t+1}=\bx_t + \indevent{\{y_t \tilde{y}_t \leq 0\}} y_t\bz_t$}
    \ENDFOR
}
\end{algorithmic}
\end{algorithm}

The oldest known algorithm to minimize the modified regret in \eqref{eq:weak_regret} is the \textbf{Perceptron} algorithm, in Algorithm~\ref{alg:perc}.
The Perceptron algorithm updates the current prediction $\bx_t$ by moving in the direction of the current sample multiplied by its label. Let's see why this is a good idea. Assume that $y_t=1$ and the algorithm made a mistake. Then, the updated prediction $\bx_{t+1}$ would predict a more positive number on the same sample $\bz_t$. In fact, assuming $\|\bz_t\|_2>0$, we have
\[
\langle \bz_t, \bx_{t+1}\rangle
= \langle \bz_t, \bx_t + y_t \bz_t\rangle
= \langle \bz_t, \bx_t\rangle + \|\bz_t\|^2_2
> \langle \bz_t, \bx_t\rangle~.
\]
In the same way, if $y_t=-1$ and the algorithm made a mistake, the update would result in a more negative prediction on the same sample.

We can formalize the above intuition in the following guarantee.
\begin{theorem}
\label{thm:perc}
Let $\mathcal{X}\subset \R^d$ and $(\bz_t, y_t)_{t=1}^T$ be an arbitrary sequence of sample-label pairs where $(\bz_t, y_t) \in \mathcal{X} \times \{-1,1\}$.
Assume $\|\bz_t\|_2\leq R$, for $t=1, \dots,T$. Then, running the Perceptron algorithm, for all $\bu \in \R^d$ and $q\geq 1$, we have the following guarantee
\begin{align*}
&\sum_{t=1}^T \indevent{\{y_t \tilde{y}_t \leq 0\}} - \sum_{t=1}^T \ell^q(\langle \bz_t, \bu\rangle, y_t)\\
&\quad \leq \frac{q^2 R^2 \|\bu\|^2_2}{2} + q R \|\bu\|_2\sqrt{\frac{q^2 R^2 \|\bu\|^2_2}{4} + \sum_{t=1}^T \ell^q(\langle \bz_t, \bu\rangle, y_t)}~.
\end{align*}
\end{theorem}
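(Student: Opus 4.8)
The plan is to run a direct potential argument on the weight vector, as in the classical Perceptron mistake bound, and then to absorb the power-$q$ hinge loss through a single convexity inequality. Write $\mathcal{M} = \{t : y_t \neq \tilde{y}_t\}$ for the set of mistake rounds and $M = |\mathcal{M}| = \sum_{t=1}^T \boldsymbol{1}[y_t\neq\tilde{y}_t]$. Since the algorithm only moves on mistake rounds and starts at $\bx_1=\boldsymbol{0}$, unrolling the update gives $\bx_{T+1} = \sum_{t\in\mathcal{M}} y_t \bz_t$. I will control this vector from two sides and then compare.

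First, the norm of $\bx_{T+1}$. On a mistake round we have $\sign(\langle\bz_t,\bx_t\rangle)\neq y_t$, hence $y_t\langle\bz_t,\bx_t\rangle\le 0$, so expanding $\|\bx_{t+1}\|_2^2 = \|\bx_t\|_2^2 + 2y_t\langle\bz_t,\bx_t\rangle + \|\bz_t\|_2^2 \le \|\bx_t\|_2^2 + R^2$. Telescoping over the (at most $M$) rounds on which the iterate changes yields $\|\bx_{T+1}\|_2^2 \le M R^2$, and Cauchy--Schwarz then gives $\langle\bx_{T+1},\bu\rangle \le \|\bx_{T+1}\|_2\,\|\bu\|_2 \le R\|\bu\|_2\sqrt{M}$.

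The key (and most delicate) step is the matching lower bound on $\langle\bx_{T+1},\bu\rangle$ expressed through $\ell_q$. Here I would use convexity of $z\mapsto z^q$ on $[0,\infty)$: its tangent at $z=1$ gives $z^q \ge 1 + q(z-1)$, i.e. $z \le \tfrac{z^q + q - 1}{q}$ for $z\ge0$. Applying this with $z = \max(1-y_t\langle\bz_t,\bu\rangle,0)$, and using $1 - y_t\langle\bz_t,\bu\rangle \le \max(1-y_t\langle\bz_t,\bu\rangle,0)$ together with $z^q = \ell_q(\langle\bz_t,\bu\rangle,y_t)$, I obtain for every mistake round the per-round inequality $y_t\langle\bz_t,\bu\rangle \ge \tfrac{1 - \ell_q(\langle\bz_t,\bu\rangle,y_t)}{q}$. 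Summing over $t\in\mathcal{M}$ and bounding $\sum_{t\in\mathcal{M}}\ell_q \le \sum_{t=1}^T\ell_q =: L_q$ gives $\langle\bx_{T+1},\bu\rangle \ge \tfrac{M - L_q}{q}$. The role of the power $q$ is precisely to produce the factor $1/q$ and to keep the comparator term as $\ell_q$ rather than its $q$-th root, which is what makes the final bound depend on $L_q$.

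Combining the two bounds produces $M - L_q \le qR\|\bu\|_2\sqrt{M}$. Viewing this as a quadratic inequality in $\sqrt{M}$ (or invoking a one-line estimate in the spirit of Lemma~\ref{lemma:sqrt}), I solve for $\sqrt M$, obtaining $\sqrt{M} \le \tfrac12\big(qR\|\bu\|_2 + \sqrt{q^2R^2\|\bu\|_2^2 + 4L_q}\big)$; squaring and rearranging then yields $M - L_q \le \frac{q^2R^2\|\bu\|_2^2}{2} + qR\|\bu\|_2\sqrt{\frac{q^2R^2\|\bu\|_2^2}{4}+L_q}$, which is the stated bound. The only genuine obstacle is the convexity inequality of the previous paragraph; once the per-round estimate $y_t\langle\bz_t,\bu\rangle\ge (1-\ell_q)/q$ is in hand, the rest is the standard potential/Cauchy--Schwarz bookkeeping plus solving a scalar quadratic.
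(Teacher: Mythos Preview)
Your proof is correct and reaches the same scalar inequality $M \le qR\|\bu\|_2\sqrt{M} + L_q$ that the paper obtains, after which the quadratic lemma finishes identically. The route, however, is genuinely different. The paper casts the Perceptron as Online Subgradient Descent with a free learning rate $\eta$, optimizes over $\eta$ to get $\sum_t \tau_t(1-\ell(\langle\bz_t,\bu\rangle,y_t)) \le R\|\bu\|_2\sqrt{M}$ with the $q=1$ hinge loss, and then converts the mixed sum $\sum_t \tau_t\,\ell$ into $M^{1/p}L_q^{1/q}$ via H\"older, followed by Young's inequality $ab\le a^p/p+b^q/q$ to arrive at $M \le qR\|\bu\|_2\sqrt{M}+L_q$. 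You instead run the classical Novikoff potential argument directly and replace the H\"older--Young pair by the single tangent-line inequality $z \le (z^q+q-1)/q$ for $z\ge 0$, which yields the pointwise bound $y_t\langle\bz_t,\bu\rangle \ge (1-\ell_q)/q$ in one stroke. Your path is more elementary and self-contained (no OCO machinery, no H\"older), while the paper's path is consistent with the monograph's theme of reducing everything to OSD regret; both collapse to the same endpoint.
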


Before proving the theorem, let's take a look at its meaning.
If there exists a $\bu \in \R^d$ such that $\sum_{t=1}^T \ell^q(\langle \bz_t, \bu\rangle, y_t)=0$, then the Perceptron algorithm makes a \emph{finite} number of mistakes upper bounded by $R^2 \|\bu\|^2_2$, by choosing $q=1$. If there are many vectors $\bu$ that achieve $\sum_{t=1}^T \ell^q(\langle \bz_t, \bu\rangle, y_t)=0$, we have that the finite number of mistakes is bounded by the norm of the smallest $\bu$ among them.
What is the meaning of this quantity?

Remember that a hyperplane represented by its normal vector $\bu$ divides the space into two half-spaces: one with the points $\bz$ that give a positive value for the inner product $\langle \bz, \bu\rangle$ and the other one where the same inner product is negative.
Now, we have that the distance of a sample $\bz_t$ from the hyperplane whose normal is $\bu$ is
\[
\frac{|\langle \bz_t, \bu\rangle|}{\|\bu\|_2}~.
\]
Also, given that we are considering a $\bu$ that gives cumulative hinge loss zero, we have that this quantity is at least $\frac{1}{\|\bu\|_2}$.
So, \emph{the norm of the minimal $\bu$ that has cumulative hinge loss equal to zero is inversely proportional to the minimum distance between the points and the separating hyperplane}. This distance is called the \textbf{margin}\index{margin!in linear classification|textbf} of the samples $(\bz_t,y_t)_{t=1}^T$.
So, if the margin is small, the Perceptron algorithm can make more mistakes than when the margin is large.

If the problem is not linearly separable, the Perceptron satisfies a mistake bound of $\mathscr{L}^{(q)}+\mathcal{O}(q\sqrt{\mathscr{L}^{(q)}})$\index{mistake bound} when $T\to \infty$, uniformly for all $q\geq 1$, where $\mathscr{L}^{(q)}$ is the cumulative loss of the competitor, $\sum_{t=1}^T\ell^q(\langle \bz_t,\bu\rangle,y_t)$. Moreover, we measure the competitor with a \emph{family of loss functions} and compete with the best $\bu$ measured with the best loss. This adaptivity is achieved through two basic ingredients:
\begin{itemize}
\item \emph{The Perceptron is independent of scaling of the update by a hypothetical learning rate $\eta$}, in the sense that the mistakes it makes are independent of the scaling. That is, we could update with $\bx_{t+1}=\bx_t + \eta \indevent{\{y_t \tilde{y}_t \leq 0\}} y_t\bz_t$ and have the same mistakes and update on the same samples because they only depend on the sign of $\tilde{y}_t$. Hence, we can think of it as always using the best possible learning rate $\eta$.
\item The weakened definition of regret allows us to consider a family of loss functions, because \emph{the Perceptron is not using any of them in the update.} In this sense, it is worth stressing that the Perceptron algorithm is \emph{not} simply \ac{OSD} on some sequence of loss functions.
\end{itemize}

Let's now prove the regret guarantee.
%
%
\begin{proof}[Proof of Theorem~\ref{thm:perc}]
Denote by $\tau_t:=\indevent{\{y_t \tilde{y}_t \leq 0\}}$, so the total number of mistakes of the Perceptron algorithm is $M:=\sum_{t=1}^T \tau_t$.

First, note that the Perceptron algorithm can be thought of as running \ac{OSD} with a fixed stepsize $\eta>0$ over the surrogate losses $\tilde{\ell}_t(\bx):=-\langle \tau_t y_t \bz_t, \bx\rangle$ over $\mathcal{V}=\R^d$.
Indeed, \ac{OSD} over such losses would update
\begin{equation}
\label{eq:proof_perc_eq1}
\bx_{t+1}=\bx_t + \eta \tau_t y_t\bz_t~.
\end{equation}
Now, as said above, $\eta$ does not affect in any way the sign of the predictions, hence the Perceptron algorithm could be run with \eqref{eq:proof_perc_eq1} and its predictions would be exactly the same. Hence, for all $\eta>0$, we have
\[
\sum_{t=1}^T -\langle \tau_t y_t \bz_t, \bx_t\rangle + \sum_{t=1}^T \langle \tau_t y_t \bz_t, \bu\rangle\\
\leq \frac{\|\bu\|^2_2}{2\eta} + \frac{\eta}{2} \sum_{t=1}^T \tau_t \|\bz_t\|^2_2~.
\]
Given that this inequality holds for any $\eta$, we can choose the one that minimizes the r.h.s., to have
\begin{align}
\sum_{t=1}^T ( -\langle \tau_t y_t \bz_t, \bx_t\rangle +  \langle \tau_t y_t \bz_t, \bu\rangle)
&\leq \|\bu\|_2 \sqrt{\sum_{t=1}^T \tau_t \|\bz_t\|^2_2}
\leq \|\bu\|_2 R\sqrt{\sum_{t=1}^T \tau_t} \nonumber \\
&\leq \|\bu\|_2 R\sqrt{M}~. \label{eq:proof_perc_eq2}
\end{align}
Note that $-\tau_t y_t\langle \bz_t, \bx_t\rangle\geq 0$.
Also, we have
\[
\langle y_t \bz_t, \bu\rangle
=1-(1-\langle y_t \bz_t, \bu\rangle)
\geq 1 - \max(1-\langle y_t \bz_t, \bu\rangle,0)
= 1 - \ell^1(\langle \bz_t,\bu\rangle,y_t)~.
\]
Denote by $\mathscr{L}^{(q)}:=\sum_{t=1}^T\ell^q(\langle \bz_t,\bu\rangle,y_t)$, where for shortness of notation we hide the dependency on $\bu$.
We now rewrite \eqref{eq:proof_perc_eq2} as
\begin{align}
M
&= \sum_{t=1}^T \tau_t
\leq \|\bu\|_2 R\sqrt{M } + \sum_{t=1}^T \tau_t \ell^1(\langle \bz_t,\bu\rangle,y_t) \label{eq:proof_perc_eq3}\\
&\leq \|\bu\|_2 R\sqrt{M } + \mathscr{L}^{(1)} \nonumber~.
\end{align}
Using Lemma~\ref{lemma:sqrt}, we have the stated bound for $q=1$.

For $q>1$, using Holder's inequality and $\frac{1}{p}+\frac{1}{q}=1$ in \eqref{eq:proof_perc_eq3}, we have
\begin{align*}
M
&\leq \|\bu\|_2 R\sqrt{M } + \left(\sum_{t=1}^T \tau_t^p\right)^{1/p} \left(\sum_{t=1}^T\ell^q(\langle \bz_t,\bu\rangle,y_t)\right)^{1/q} \\
&= \|\bu\|_2 R\sqrt{M} + M^{1/p} (\mathscr{L}^{(q)})^{1/q}~.
\end{align*}
Using Young's inequality\index{inequality!Young's} (Example~\ref{example:young}), we have
\begin{align*}
M
&\leq \|\bu\|_2 R\sqrt{M} + M^{1/p} (\mathscr{L}^{(q)})^{1/q}
\leq \|\bu\|_2 R\sqrt{M} + \frac{1}{p} M + \frac{1}{q} \mathscr{L}^{(q)},
\end{align*}
that implies
\[
M\left(1-\frac{1}{p} \right)\leq \|\bu\|_2 R\sqrt{M} + \frac{1}{q} \mathscr{L}^{(q)}~.
\]
Using the fact that $1-\frac{1}{p}=\frac{1}{q}$, we have
\[
M \leq q \|\bu\|_2 R\sqrt{M} + \mathscr{L}^{(q)}~.
\]
Finally, using Lemma~\ref{lemma:sqrt}, we have the stated bound.
\end{proof}

\begin{remark}
The Perceptron algorithm can also be seen as a way to solve the \emph{feasibility problem}\index{feasibility problem}.
In particular, consider the problem
\begin{align*}
\text{Find} & \quad \bx\\
\text{s.t.} & \quad \langle \bq_i,\bx\rangle > 0, \ i=1, \dots, N
\end{align*}
where $\bq_i \in \R^d$.
Essentially, we have a set $\mathcal{V}$ defined by the intersection of the open half-spaces $\langle \bq_i,\bx\rangle > 0$ and we want to find any point in it. This is useful, for example, when we have a constrained convex optimization problem with feasible set $\mathcal{V}$ and we need a way to initialize our optimization algorithm.

To use the Perceptron to solve this problem, in each round $t$, we can find one constraint $\bq_i$ that is violated and update
\[
\bx_{t+1} = \bx_t + \bq_i~.
\]
The algorithm will find a point in $\mathcal{V}$ after $R^2 \|\bu\|^2_2$ iterations, where $\|\bq_i\|_2\leq R$ and $\bu$ is given by the following optimization problem
\begin{align*}
\bu
= &\argmin_{\bx \in \R^d}  \quad \|\bx\|^2_2\\
&\text{s.t.}  \quad \langle \bq_i,\bx\rangle \geq 1, \ i=1, \dots, N~.
\end{align*}
Given that each iteration has computational complexity proportional to $d N$, the total computational complexity is $\mathcal{O}(d N R^2 \|\bu\|^2_2)$.
\end{remark}
\index{Perceptron algorithm|)textbf}

\section{The Gaptron Algorithm}

\index{Gaptron algorithm|(textbf}
In this section, we introduce an algorithm that merges ideas from the randomized strategy and the Perceptron algorithm: the \textbf{Gaptron} algorithm.

The Perceptron analysis bounds the regret with respect to surrogate losses. However, this can be a loose upper bound on the actual number of mistakes. The Gaptron algorithm is designed to tighten this analysis by actively managing the \emph{surrogate gap}\index{surrogate gap}, which is the difference between the true zero-one loss and the convex surrogate loss function used for optimization, leading to better mistake bounds.
For pedagogical reasons, in the following, we will focus on the binary version of the Gaptron algorithm.

The algorithm maintains a weight vector $\bx_t$ and, at each round, computes a measure of confidence using the features $\bz_t$ as $m_t=\langle \bz_t, \bx_t\rangle$. Based on this confidence measure and the loss function used, it decides whether to follow the prediction of the current weight vector or to explore by choosing a label uniformly at random. This exploration is controlled by a \textbf{gap map}\index{gap map|textbf}, $a_t$, which allows us to exploit the surrogate gap. Intuitively, the algorithm will randomize its prediction each time it is unsure, and in this way, the expected number of mistakes approaches the value of the surrogate loss. Finally, the weight vector is updated using \ac{OSD} on the surrogate loss.
Algorithm~\ref{alg:gaptron} summarizes the entire procedure.

\begin{algorithm}[t]
\caption{Gaptron for Binary Classification}
\label{alg:gaptron}
\begin{algorithmic}[1]
{
    \REQUIRE{Initial point $\bx_1$, learning rate $\eta > 0$, loss $\ell:\R\times\{-1,1\}\to \R_{\geq 0}$, gap map $a: \R \times \{-1,1\} \to [0,1]$}
    \FOR{$t=1$ {\bfseries to} $T$}
    \STATE{Receive feature vector $\bz_t \in \R^d$}
    \STATE{Set $m_t=\langle \bz_t, \bx_t\rangle$}
    \STATE{Set $\hat{y}_t = \begin{cases} 1, &\text{ if } m_t>0\\ \text{any label}, & \text{ if } m_t=0\\ -1, & \text{ if } m_t<0 \end{cases}$}
    \STATE{Set $a_t = a(m_t,\hat{y}_t) \in [0,1]$}
    \STATE{Predict $Y'_t = \hat{y}_t$ with probability $1-a_t$, and a uniform random label from $\{-1, 1\}$ with probability $a_t$}
    \STATE{Receive true label $y_t \in \{-1, 1\}$ and suffer loss $\indevent{\{Y'_t \neq y_t\}}$}
    \STATE{Compute subgradient $\bg_t \in \bz_t \cdot \partial (\ell(\cdot,y_t))(m_t)$}
    \STATE{Update $\bx_{t+1} = \bx_t - \eta \bg_t$}
    \ENDFOR
}
\end{algorithmic}
\end{algorithm}

We now present a theorem on the expected number of mistakes\index{mistake bound} for the Gaptron algorithm when using self-bounded losses.
\begin{theorem}
\label{thm:gaptron_mistakes}
Let $\ell_t(\bx) = \ell(\langle \bz_t, \bx \rangle, y_t)$ where $\ell:\R\times \{-1,1 \}\to \R_{\geq0}$. Assume that
\begin{itemize}
\item For each $y\in\{-1,1\}$, the one-dimensional function $x\mapsto \ell(x,y)$ is $s$-self-bounded\index{function!self-bounded};
\item $\ell(x, 1)+\ell(x, -1)\geq 2$ for all $ x\in \R$;
\item $\ell(x, \sign(x)) \in [0,1]$ for $x\neq 0$;
\item $\ell(0, 1) \in [0,1]$ and $\ell(0, -1) \in [0,1]$.
\end{itemize}
Let $\|\bz_t\|_2 \leq R$ for all $t$. Set the learning rate $\eta \leq \frac{1}{2 s R^2}$ and the gap map $a(m,\hat{y})=\ell(m,\hat{y})$.
Then, the expected number of mistakes of the Gaptron algorithm (Algorithm~\ref{alg:gaptron}) is upper bounded as
\[
\E\left[\sum_{t=1}^T \indevent{\{Y'_t \neq y_t\}}\right]
\leq \sum_{t=1}^T \ell_t(\bu) + \frac{\|\bu-\bx_1\|_2^2}{2\eta}, \quad \forall \bu \in \R^d~.
\]
\end{theorem}
\begin{proof}
Using the fact that self-boundedness\index{function!self-bounded} implies convexity, the analysis starts from the standard regret bound for \ac{OSD}, which for any $\bu$ gives
\[
\sum_{t=1}^T \ell_t(\bx_t) - \sum_{t=1}^T \ell_t(\bu)
\leq \frac{\|\bu-\bx_1\|_2^2}{2\eta} + \frac{\eta}{2}\sum_{t=1}^T \|\bg_t\|_2^2~.
\]
The expected number of mistakes in round $t$, conditioned on the history up to round $t-1$, is
\[
\E_t[\indevent{\{Y'_t \neq y_t\}}]
= (1-a_t)\indevent{\{\hat{y}_t \neq y_t\}} + \frac{a_t}{2}~.
\]
Hence, we have
\begin{align*}
&\sum_{t=1}^T \E\left[\indevent{\{Y'_t \neq y_t\}}\right] -\sum_{t=1}^T \ell_t(\bu)\\
&\quad= \sum_{t=1}^T \E\left[\indevent{\{Y'_t \neq y_t\}}\right] - \sum_{t=1}^T \E[\ell_t(\bx_t)] + \sum_{t=1}^T \E[\ell_t(\bx_t)]-\sum_{t=1}^T \ell_t(\bu)\\
&\quad\leq \frac{\|\bu-\bx_1\|_2^2}{2\eta} + \sum_{t=1}^T \E\left[\indevent{\{Y'_t \neq y_t\}}-\ell_t(\bx_t) +\frac{\eta}{2} \|\bg_t\|_2^2\right]\\
&\quad= \frac{\|\bu-\bx_1\|_2^2}{2\eta} + \sum_{t=1}^T \E\left[(1-a_t)\indevent{\{\hat{y}_t \neq y_t\}} + \frac{a_t}{2}-\ell_t(\bx_t) +\frac{\eta}{2} \|\bg_t\|_2^2\right]~.
\end{align*}

Let's analyze the term we call the \textbf{surrogate gap}\index{surrogate gap|textbf}:
\[
\gamma_t := (1-a_t)\indevent{\{\hat{y}_t \neq y_t\}} + \frac{a_t}{2} - \ell_t(\bx_t) + \frac{\eta}{2}\|\bg_t\|_2^2~.
\]
The theorem is proven if we can show that $\gamma_t \leq 0$ for our choice of $a_t$ and $\eta$.

The bound on the norm of the subgradient can be calculated through the self-boundedness\index{function!self-bounded} of $\ell$ in its first argument:
\[
\|\bg_t\|^2_2
\leq 2 s \|\bz_t\|^2_2 \ell(m_t, y_t)
\leq 2 s R^2 \ell(m_t, y_t)~.
\]

Hence, we have two cases.

\textbf{Case $ y_t \neq \hat{y}_t$:}
\begin{align*}
\gamma_t
&=(1-a_t)\indevent{\{\hat{y}_t \neq y_t\}} + \frac{a_t}{2} - \ell_t(\bx_t) + \frac{\eta}{2}\|\bg_t\|_2^2\\
&\leq 1-\ell(m_t, \hat{y}_t) + \frac12 \ell(m_t, \hat{y}_t) - \ell(m_t, y_t) + \eta s R^2 \ell(m_t,y_t)\\
&= 1-\frac12 \ell(m_t, \hat{y}_t) - \frac12 \ell(m_t, y_t) - \frac12 \ell(m_t, y_t) + \eta s R^2 \ell(m_t, y_t)\\
&\leq - \frac12 \ell(m_t, y_t) + \eta s R^2 \ell(m_t, y_t),
\end{align*}
where in the second inequality we used the assumption on the loss.
Hence, we need $\eta\leq \frac{1}{2 s R^2}$ to have $\gamma_t \leq 0$.

\textbf{Case $ y_t = \hat{y}_t$:}
\begin{align*}
\gamma_t
&=(1-a_t)\indevent{\{\hat{y}_t \neq y_t\}} + \frac{a_t}{2} - \ell_t(\bx_t) + \frac{\eta}{2}\|\bg_t\|_2^2\\
&\leq \frac12 \ell(m_t, \hat{y}_t) - \ell(m_t, y_t) + \eta s R^2 \ell(m_t, y_t)
= -\frac12 \ell(m_t, y_t) + \eta s R^2 \ell(m_t, y_t)~.
\end{align*}
Hence, again we need $\eta\leq \frac{1}{2 s R^2}$ to have $\gamma_t \leq 0$.
\end{proof}

Surprisingly, the use of the randomization allows us to obtain a \emph{constant} additive term over the comparator's cumulative surrogate loss.

\begin{remark}
Observe that if $\ell(x,y)=f(x y)$ for some convex function $f:\R \to \R$, then $f(0)\geq 1$ implies the condition $\ell(x, 1)+\ell(x, -1)\geq 2$ for all $ x\in \R$ by Jensen's inequality\index{inequality!Jensen's} (Theorem~\ref{thm:jensen}).
\end{remark}

We now show some examples of binary classification losses that satisfy the assumptions of the theorem.

\begin{example}
We will use the squared hinge loss\index{hinge loss!squared}, $\ell_2(\tilde{y},y)=\max(1 - y \tilde{y},0)^2$, that satisfies all the assumptions in Theorem~\ref{thm:gaptron_mistakes} and it is $2$-self-bounded\index{function!self-bounded} in its first argument. Hence, we have that the Gaptron with the squared hinge loss and $\eta=\frac{1}{4 R^2}$ satisfies
\[
\E\left[\sum_{t=1}^T \indevent{\{Y'_t \neq y_t\}}\right]
\leq \sum_{t=1}^T \ell_2(\langle \bz_t,\bu\rangle, y_t) + 2\|\bu-\bx_1\|_2^2 R^2, \quad \forall \bu \in \R^d~.
\]
So, if the problem is not linearly separable, we can greatly beat (in expectation) the bound on the number of mistakes we gave for the Perceptron in terms of the squared hinge loss.

An even better choice is the \emph{smoothed hinge loss}\index{hinge loss!smoothed}:
\begin{equation}
\label{eq:smooth_hinge}
\ell^\text{sh}(\tilde{y},y)
:=
\begin{cases}
1 - 2 y \tilde{y} , & \text{ if }  y \tilde{y} < 0\\
(1 - y \tilde{y})^2, & \text{ if } 0 \leq y \tilde{y} \leq 1\\
0, & \text{ if } y \tilde{y} > 1
\end{cases}
\end{equation}
This loss is always less than or equal to the squared hinge loss; it still satisfies all the assumptions of Theorem~\ref{thm:gaptron_mistakes}, and it is $2$-self-bounded\index{function!self-bounded} in its first argument.
\end{example}

\begin{example}
One can also use loss functions that give different weights to the errors of the two classes. For example, we can use the following function
\[
\ell(\tilde{y},y)
=
\begin{cases}
1 - 2 w_y y \tilde{y} , & \text{ if }  y \tilde{y} < 0\\
(1 - y \tilde{y})^2, & \text{ if } 0 \leq y \tilde{y} \leq 1\\
0, & \text{ if } y \tilde{y} > 1
\end{cases}
\]
where $w_1=2$ and $w_{-1}=1$. In this case, the loss function is $8$-self-bounded\index{function!self-bounded}, and it still satisfies the assumptions of Theorem~\ref{thm:gaptron_mistakes}.
\end{example}
\index{Gaptron algorithm|)textbf}

\section{History Bits}

The Perceptron\index{Perceptron algorithm|(} was proposed by \citet{Rosenblatt58}. To be more precise, he introduced a \emph{family} of algorithms characterized by a certain architecture. Also, he considered what we call now supervised and unsupervised training procedures.
The particular class of Perceptron that we use nowadays, and that I described, was called \emph{$\alpha$-system}~\citep{Block62}. I hypothesize that the fact that the $\alpha$-system survived the test of time is exactly due to the simple convergence proof in \citet{Block62} and \citet{Novikoff63}. Both proofs are non-stochastic. For the sake of proper credit assignment, it seems that the convergence proof of the Perceptron was proved by many others before Block and Novikoff \citep[see references in][]{Novikoff63}. However, the proof in \citet{Novikoff63} seems to be the cleanest one. \citet{AizermanBR64} (essentially) described for the first time the Kernel Perceptron\index{Perceptron algorithm!with kernels} and proved a finite mistake bound for it.
The proof of convergence in the non-separable case for $q=1$ is by \citet{GentileL99,Gentile03} and for $q=2$ is from \citet{FreundS98,FreundS99}.
Theorem~\ref{thm:perc} is from \citet{BeygelzimerOZ17}, but with a new proof.
\index{Perceptron algorithm|)}

Let us also briefly mention the general combinatorial quantity that characterizes online binary classification in the realizable case.
The \emph{Littlestone dimension}\index{Littlestone dimension|textbf} of a class of classifiers $\mathcal{H}\subseteq \{-1,1\}^{\mathcal{X}}$ is the largest depth of a complete binary tree, whose internal nodes are labeled by points in $\mathcal{X}$, such that every path of labels in $\{-1,1\}$ is realized by some classifier in $\mathcal{H}$. This quantity was introduced by \citet{Littlestone88} and plays, in online learning, a role analogous to the one played by the \ac{VC} dimension\index{VC dimension} in statistical learning: in the realizable case, a class admits an online algorithm with a finite worst-case number of mistakes if and only if its Littlestone dimension is finite, and the optimal mistake bound is exactly its Littlestone dimension.

\index{Gaptron algorithm|(}
The Gaptron was introduced in \citet{vanderHoeven20}, in turn based on some of the ideas in \citet{NeuZ20}. \citet{vanderHoeven20} also described a multiclass variant for different surrogate losses, as well as a bandit variant. The proof here and the conditions in Theorem~\ref{thm:gaptron_mistakes} were developed in collaboration with Dirk van der Hoeven, and he was so kind to allow me to reproduce them here.
Recently, \citet{SakaueBTO24} extended the Gaptron algorithm to the structured prediction case, while \citet{SakaueBC25} extended it to the dynamic setting.
\index{Gaptron algorithm|)}

\section{Exercises}

\begin{exer}
In the Perceptron\index{Perceptron algorithm} mistake upper bound we let the competitor have any norm and we measure the loss of the competitor with the hinge loss\index{hinge loss} $\ell^{\text{hinge}}(\hat y, y)= \max(1-\hat{y} y,0)$, where $\hat{y} \in \R$ and $y \in \{-1,1\}$. Instead, we can equivalently constrain the competitor to have norm equal to 1 and measure the loss with the \emph{hinge loss at margin $\gamma$}\index{hinge loss!at margin $\gamma$}: $\ell^{\text{hinge},\gamma}(\hat y, y) = \frac{1}{\gamma}\max(\gamma-\hat y y,0)$, where $\gamma>0$.
Prove that $\ell^{\text{hinge},\gamma}(\hat y, y)$ is still an upper bound on the zero-one loss over the prediction given by $\sign(\hat{y})$.
\end{exer}

\begin{exer}
Use \ac{FTRL} with an increasing regularizer instead of \ac{OSD} in the Gaptron\index{Gaptron algorithm} to get rid of the need to know $R$ to set the learning rate, while still allowing unbounded feasible sets. Note that we make use of the knowledge of $\bz_t$ to set the regularizer at time $t$.
\end{exer}

\acresetall

\chapter{Multi-Armed Bandit}
\label{ch:bandits}

\index{multi-armed bandit|(}
The multi-armed bandit setting is similar to the \ac{LEA} setting: in each round, we select one expert $A_t\in \{1, \dots, d\}$, where $d\geq 2$, but, differently from the full-information setting, we only observe the loss $g_{t,A_t}$ of that expert. The aim is still to compete with the cumulative loss of the best expert in hindsight. The observed losses can be adversarial or stochastic, giving rise to adversarial and stochastic multi-armed bandits.

\acresetall

\index{multi-armed bandit!adversarial|(}
\section{Adversarial Multi-Armed Bandit}

Problems where we do not receive the full-information feedback, i.e., we do not observe the loss vector, are called \textbf{bandit problems}. The name comes from the problem of a gambler who plays a pool of slot machines, which are often called ``one-armed bandits''. On each round, the gambler places his bet on a slot machine, and his goal is to win almost as much money as if he had known in advance which slot machine would return the maximum total reward.
In this problem, we clearly have an \emph{exploration-exploitation trade-off}\index{exploration-exploitation trade-off}. In fact, on the one hand, we would like to play the slot machine, which, based on previous rounds, we believe will give us the biggest win. On the other hand, we have to explore the slot machines to find the best ones. On each round, we have to solve this trade-off.

Given that we do not observe the whole loss vector, we cannot directly use \ac{OMD} and \ac{FTRL} to solve such a problem, because they both need the loss functions or at least lower bounds to them.
Moreover, the adversarial multi-armed bandit problem is a harder problem than \ac{LEA}, due to the lack of full information. So, as in the \ac{LEA} case (Section~\ref{sec:lea}), randomization will be necessary to have a sublinear regret, but we will also need ways to solve the exploration-exploitation trade-off.

Before showing how to solve this problem, we will weaken the adversary a bit by assuming that it is \textbf{oblivious}\index{oblivious adversary|textbf}. That is, with the knowledge of the online algorithm being used, before the game starts, the adversary decides the losses of all the rounds. This makes the losses deterministic quantities, and it avoids the inadequacy in our definition of regret when the adversary is adaptive to the past algorithm's plays (see \citet{AroraDT12}).

One way to solve the lack of full-information feedback is to construct \emph{stochastic estimates} of the unknown losses. This is a natural choice given that the prediction strategy has to be randomized. So, in each round $t$, we construct a probability distribution over the arms $\bx_t$, and we sample one action $A_t$ according to this probability distribution. Then, we only observe the coordinate $A_t$ of the loss vector $\bg_t \in \R^d$.
One possibility to have a stochastic estimate of the losses is to use an \emph{importance-weighted estimator}\index{importance-weighted estimator}: construct the estimator $\tilde{\bg}_t$ of the unknown vector $\bg_t$ in the following way:
\[
\tilde{g}_{t,i}=\begin{cases} \frac{g_{t,i}}{x_{t,i}}, & i=A_t\\ 0, & \text{ otherwise}\end{cases} \text{ for } i=1,\dots,d~.
\]
Note that this estimator has all the coordinates equal to 0, except for the coordinate corresponding to the arm that was pulled.

This estimator is \emph{unbiased}, that is $\E[\tilde{\bg}_t|A_1, \dots, A_{t-1}]=\bg_t$. To see why, note that we can rewrite $\tilde{g}_{t,i}$ as $\indevent{\{A_t=i\}}\frac{g_{t,i}}{x_{t,i}}$ for $i=1, \dots, d$ and $\E[\indevent{\{A_t=i\}}|A_1, \dots, A_{t-1}]=x_{t,i}$. Hence, we have
\begin{align*}
\E[\tilde{g}_{t,i}|A_1, \dots, A_{t-1}]
&= \E\left[\indevent{\{A_t=i\}}\frac{g_{t,i}}{x_{t,i}}\middle|A_1, \dots, A_{t-1}\right]\\
&= \frac{g_{t,i}}{x_{t,i}} \E[\indevent{\{A_t=i\}}|A_1, \dots, A_{t-1}]
= g_{t,i}, \quad \forall i=1, \dots, d~.
\end{align*}
Let's also calculate the raw second moment of the coordinates of this estimator.
We have
\[
\E[\tilde{g}_{t,i}^2|A_1, \dots, A_{t-1}]
= \E\left[\indevent{\{A_t=i\}}\frac{g^2_{t,i}}{x^2_{t,i}}\middle|A_1, \dots, A_{t-1}\right]
= \frac{g_{t,i}^2}{x_{t,i}}~.
\]

We can now think of using \ac{OMD} with these estimated losses and an entropic regularizer. Of course, a similar reasoning can also be done with \ac{FTRL}. Hence, assume $\|\bg_t\|_\infty\leq L_\infty$ and set $\psi:\R^d_{\geq0} \to \R$ defined as $\psi(\bx)=\sum_{i=1}^d x_i \ln x_i$, that is the unnormalized negative entropy\index{entropy!unnormalized negative}. Also, set $\bx_1=[1/d, \dots, 1/d]$.
Using the \ac{OMD} analysis, we have
\[
\sum_{t=1}^T \langle \tilde{\bg}_t,\bx_t-\bu\rangle
\leq \frac{\ln d}{\eta} + \frac{\eta}{2}\sum_{t=1}^T \|\tilde{\bg}_t\|_\infty^2~.
\]
We can now take the expectation on both sides and use the fact that
\[
\E[\|\tilde{\bg}_t\|_\infty^2]
= \E\left[\frac{g^2_{t,A_t}}{x^2_{t,A_t}}\right]
= \E\left[\E\left[\frac{g^2_{t,A_t}}{x^2_{t,A_t}}\middle|A_1, \dots, A_{t-1}\right]\right]
= \E\left[\sum_{i=1}^d \frac{g^2_{t,i}}{x_{t,i}}\right],
\]
to get
\begin{align}
\E\left[\sum_{t=1}^T g_{t,A_t}\right] - \sum_{t=1}^T \langle \bg_t,\bu\rangle
&=\E\left[\sum_{t=1}^T \langle \bg_t,\bx_t\rangle\right] - \sum_{t=1}^T \langle \bg_t,\bu\rangle
=\E\left[\sum_{t=1}^T \langle \tilde{\bg}_t,\bx_t-\bu\rangle\right] \nonumber \\
&\leq \frac{\ln d}{\eta} + \frac{\eta}{2}\sum_{t=1}^T \E[\|\tilde{\bg}_t\|_\infty^2] \nonumber\\
&\leq \frac{\ln d}{\eta} + \frac{\eta}{2}\sum_{t=1}^T \sum_{i=1}^d \E\left[\frac{g_{t,i}^2}{x_{t,i}}\right], \quad \forall \bu \in \Delta^{d-1},\label{eq:exp3_no_expl}
\end{align}
where in the last inequality we upper bounded the max over the squared coordinates with their sum.
We are now in trouble, because the terms in the sum scale as $\max_{i=1,\dots,d} \frac{1}{x_{t,i}}$. So, we need a way to control the smallest probability over the arms.

One way to do it is to take a convex combination of $\bx_t$ and a uniform probability.
That is, we can predict with $\tilde{\bx}_t=(1-\alpha) \bx_t + \alpha [1/d, \dots, 1/d]^\top$, where $\alpha$ will be chosen in the following. So, $\alpha$ can be seen as the minimum amount of exploration we require from the algorithm. Its value will be chosen by the regret analysis to optimally trade off exploration vs exploitation.
The resulting algorithm is in Algorithm~\ref{alg:eg_bandit}.

\begin{algorithm}[t]
\caption{Exponential Weights with Explicit Exploration for Multi-Armed Bandit}
\label{alg:eg_bandit}
\begin{algorithmic}[1]
{
    \REQUIRE{$\eta,\alpha>0$}
    \STATE{Set $\bx_1 =[1/d, \dots, 1/d]$}
    \FOR{$t=1$ {\bfseries to} $T$}
    \STATE{Set $\tilde{\bx}_t=(1-\alpha)\bx_t + \alpha [1/d, \dots, 1/d]^\top$}
    \STATE{Draw $A_t$ according to $\Pr\{A_t=i\} = \tilde{x}_{t,i}$}
    \STATE{Select arm $A_t$}
    \STATE{Observe \emph{only} the loss of the selected arm $g_{t,A_t} \in [-L_\infty, L_\infty]$ and pay it}
    \STATE{Construct the estimate $\tilde{g}_{t,i}=\begin{cases} \frac{g_{t,i}}{\tilde{x}_{t,i}}, & i=A_t\\ 0, & \text{otherwise}\end{cases}$ for  $i=1,\dots,d$}
    \STATE{$x_{t+1,i} \propto x_{t,i} \exp(-\eta \tilde{g}_{t,i}), \ i=1,\dots,d$}
    \ENDFOR
}
\end{algorithmic}
\end{algorithm}

The same probability distribution is used in the estimator:
\begin{equation}
\label{eq:tilde_bg_t}
\tilde{g}_{t,i}=\begin{cases} \frac{g_{t,i}}{\tilde{x}_{t,i}}, & i=A_t\\ 0, & \text{ otherwise}\end{cases} \text{ for } i=1,\dots,d~.
\end{equation}
So, we now have that $\frac{1}{\tilde{x}_{t,i}}\leq \frac{d}{\alpha}$.
The variance term in the regret is now controllable:
\[
\|\tilde{\bg}_t\|^2_\infty
= \frac{g^2_{t,A_t}}{\tilde{x}^2_{t,A_t}}
\leq \frac{d^2}{\alpha^2} g^2_{t,A_t}
\leq \frac{d^2}{\alpha^2} L^2_\infty~.
\]

However, we pay a price for the bias introduced:
\[
\sum_{t=1}^T \langle \tilde{\bg}_t, \tilde{\bx}_t-\bu\rangle =
(1-\alpha) \sum_{t=1}^T \langle \tilde{\bg}_t, \bx_t - \bu\rangle + \frac{\alpha}{d} \sum_{t=1}^T \sum_{i=1}^d \tilde{g}_{t,i} - \alpha\sum_{t=1}^T \langle \tilde{\bg}_t,\bu\rangle, \quad \forall \bu \in \Delta^{d-1}~.
\]
From $\E[\sum_{i=1}^d \tilde{g}_{t,i}]=\sum_{i=1}^d g_{t,i}\leq d L_\infty$ and $\E[-\langle\tilde{\bg}_{t},\bu\rangle] = -\langle \bg_t,\bu\rangle\leq L_\infty$, we have
\begin{align*}
\E\left[\sum_{t=1}^T \langle \tilde{\bg}_t, \tilde{\bx}_t-\bu\rangle \right]
&\leq (1-\alpha)\E\left[\sum_{t=1}^T \langle \tilde{\bg}_t, \bx_t-\bu\rangle \right] + 2\alpha L_\infty T, \quad \forall \bu \in \Delta^{d-1}~.
\end{align*}

Putting together the last inequality and \eqref{eq:exp3_no_expl}, we have
\begin{align*}
\E\left[\sum_{t=1}^T g_{t,A_t}\right] - \sum_{t=1}^T \langle \bg_t,\bu\rangle
&\leq \frac{(1-\alpha)\ln d}{\eta} + \frac{(1-\alpha)\eta d^2 L_\infty^2 T}{2 \alpha^2} + 2\alpha L_\infty T\\
&\leq \frac{\ln d}{\eta} + \frac{\eta d^2 L_\infty^2 T}{2 \alpha^2} + 2\alpha L_\infty T, \quad \forall \bu \in \Delta^{d-1}~.
\end{align*}
Setting $\alpha \propto \frac{\sqrt{d} \ln^\frac{1}{4} d}{T^\frac{1}{4}}$ and $\eta \propto \sqrt{\frac{\alpha^2\ln d}{d^2 L^2_\infty T}}$, we obtain a regret of $\mathcal{O}(L_\infty \sqrt{d} T^{3/4} \ln^{1/4} d )$ as $T\to\infty$.

We did obtain a sublinear regret, but it is way worse than the $\mathcal{O}(\sqrt{T \ln d})$ of the full-information case. However, while it is expected that the bandit case must be more difficult than the full information one, it turns out that this is not the optimal strategy.

\index{Exp3 algorithm|(textbf}
\subsection{Exponential-weight algorithm for Exploration and Exploitation: Exp3}
\label{sec:exp3}

\begin{algorithm}[h]
\caption{Exponential-weight Algorithm for Exploration and Exploitation (Exp3)}
\label{alg:exp3}
\begin{algorithmic}[1]
{
    \REQUIRE{$\eta>0$}
    \STATE{$\bx_1=[1/d, \dots, 1/d]$}
    \FOR{$t=1$ {\bfseries to} $T$}
    \STATE{Draw $A_t$ according to $\Pr\{A_t=i\}=x_{t,i}$}
    \STATE{Select arm $A_t$}
    \STATE{Observe \emph{only} the loss of the selected arm $g_{t,A_t}\geq 0$ and pay it}
    \STATE{Construct the estimate $\tilde{g}_{t,i}=\begin{cases} \frac{g_{t,i}}{x_{t,i}}, & i=A_t\\ 0, & \text{otherwise}\end{cases}$ for $i=1,\dots,d$}
    \STATE{$x_{t+1,i} \propto x_{t,i} \exp(-\eta \tilde{g}_{t,i}), \ i=1,\dots,d$}
    \ENDFOR
}
\end{algorithmic}
\end{algorithm}

It turns out that the algorithm above actually works, even without mixing the output of the algorithm with the uniform distribution! We were just too loose in our regret guarantee. So, we will analyze Algorithm~\ref{alg:exp3}, \textbf{\ac{Exp3}}, that is nothing else than \ac{OMD} with entropic regularizer and stochastic estimates of the losses.
Note that now we will assume that $g_{t,i} \geq 0$ for all $t$ and $i$.

This time we will use the local norm\index{norm!local} regret bound for \ac{EG} that we proved in \eqref{eq:eg_localnorm}.
Let's consider the \ac{OMD} with the entropic regularizer, learning rate $\eta$, and set $\tilde{\bg}_{t}$ equal to the stochastic estimate of $\bg_t$, as in Algorithm~\ref{alg:exp3}. Applying \eqref{eq:eg_localnorm} and taking the expectation, for all $\bu \in \Delta^{d-1}$, we have
\[
\E\left[\sum_{t=1}^T g_{t,A_t}\right] - \sum_{t=1}^T \langle \bg_t,\bu\rangle
= \E\left[\sum_{t=1}^T \langle \tilde{\bg}_{t},\bx_t-\bu\rangle\right]
\leq \frac{\ln d}{\eta} + \frac{\eta}{2}\E\left[\sum_{t=1}^T \sum_{i=1}^d x_{t,i} \tilde{g}_{t,i}^2\right]~.
\]
Now, focusing on the terms $\E[x_{t,i} \tilde{g}_{t,i}^2]$, we have
\begin{align}
\E\left[\sum_{i=1}^d x_{t,i} \tilde{g}_{t,i}^2\right]
= \E\left[\E\left[\sum_{i=1}^d x_{t,i} \tilde{g}_{t,i}^2\middle|A_1, \dots, A_{t-1}\right]\right]
= \E\left[\sum_{i=1}^d g_{t,i}^2\right]
\leq d L_\infty^2. \label{eq:expectation_exp3}
\end{align}
Finally, we can choose $\eta = \sqrt{\frac{2\ln d}{d L^2_\infty T}}$ to minimize the regret.
Putting all together, we have the following theorem.
\begin{theorem}
\label{thm:exp3}
Assume $0 \leq g_{t,i}\leq L_\infty$ for $t=1,\dots,T$ and $i=1, \dots, d$.
Then, Algorithm~\ref{alg:exp3} with $\eta=\sqrt{\frac{2\ln d}{d L^2_\infty T}}$ guarantees
\[
\E\left[\sum_{t=1}^T g_{t,A_t}\right] - \sum_{t=1}^T \langle \bg_t,\bu\rangle
\leq L_\infty \sqrt{2 d T \ln d}, \quad \forall \bu \in \Delta^{d-1}~.
\]
\end{theorem}


\begin{figure}
\centering
\begin{tikzpicture}
\begin{axis}[
  width=7cm,
  xmin=-.2, xmax={sqrt(2)+.2},
  ymin=-.2, ymax={sqrt(6)/2+.2},
  xtick=\empty, ytick=\empty,
  axis lines=none,
  colormap={bw}{gray(0cm)=(0.9); gray(1cm)=(0.1)},
]
\addplot[contour prepared={labels=false}]
table{code_for_figs/simplex_contours_x0a_xyz.dat};
\draw[black] (axis cs:0,0) -- (axis cs:{sqrt(2)},0) -- (axis cs:{sqrt(2)/2},{sqrt(6)/2}) -- cycle;
\addplot[only marks, mark=x, mark size=3.5pt, thick, black] coordinates {(0.9545941546018391, 0.5511351921262153)};
\node[anchor=north east] at (axis cs:0,0) {$\be_1$};
\node[anchor=north west] at (axis cs:{sqrt(2)},0) {$\be_2$};
\node[anchor=south]      at (axis cs:{sqrt(2)/2},{sqrt(6)/2}) {$\be_3$};
\end{axis}
\end{tikzpicture}
\begin{tikzpicture}
\begin{axis}[
  width=7cm,
  xmin=-.2, xmax={sqrt(2)+.2},
  ymin=-.2, ymax={sqrt(6)/2+.2},
  xtick=\empty, ytick=\empty,
  axis lines=none,
  colormap={bw}{gray(0cm)=(0.9); gray(1cm)=(0.1)},
]
\addplot[contour prepared={labels=false}]
table{code_for_figs/simplex_contours_x0b_xyz.dat};
\draw[black] (axis cs:0,0) -- (axis cs:{sqrt(2)},0) -- (axis cs:{sqrt(2)/2},{sqrt(6)/2}) -- cycle;
\addplot[only marks, mark=x, mark size=3.5pt, thick, black] coordinates {(0.7071067811865475,0.4082482904638631)};
\node[anchor=north east] at (axis cs:0,0) {$\be_1$};
\node[anchor=north west] at (axis cs:{sqrt(2)},0) {$\be_2$};
\node[anchor=south]      at (axis cs:{sqrt(2)/2},{sqrt(6)/2}) {$\be_3$};
\end{axis}
\end{tikzpicture}
\caption{Contour plots of the \ac{KL} divergence in 3-dimensions when $\bx_t=[1/3,1/3,1/3]$ (left) and when $\bx_t=[0.1,0.45,0.45]$ (right).}
\label{fig:simplex}
\commentAlt{Figure~\ref{fig:simplex}. Two triangular simplex plots with contour lines of KL divergence. The left plot is centered at the uniform point [1/3,1/3,1/3]; the right plot is centered near [0.1,0.45,0.45].}
\end{figure}

So, with a tighter analysis, we showed that, even without an explicit exploration term, \ac{OMD} with an entropic regularizer solves the multi-armed bandit problem paying only a factor $\sqrt{d}$ more than the full information case.
The intuition of why this algorithm works is in Figure~\ref{fig:simplex}: coordinates with small probability have large curvature of the KL divergence, which translates into small inverse-Hessian weights in the local dual norm.

However, one can show that this is still not the optimal regret!
In the next section, we will change the regularizer to remove the $\sqrt{\ln d}$ term in the regret.
\index{Exp3 algorithm|)textbf}

\index{Tsallis-INF algorithm!OMD version|(textbf}
\subsection{Optimal Regret Using \ac{OMD} with Tsallis Entropy}

\begin{algorithm}[h]
\caption{Tsallis-INF\index{Implicitly Normalized Forecaster|textbf} (\ac{OMD} version)}
\label{alg:tsallis}
\begin{algorithmic}[1]
{
    \REQUIRE{$\eta>0, q \in (0,1)$}
    \STATE{$\bx_1=[1/d, \dots, 1/d]$}
    \FOR{$t=1$ {\bfseries to} $T$}
    \STATE{Draw $A_t$ according to $\Pr\{A_t=i\}=x_{t,i}$}
    \STATE{Select arm $A_t$}
    \STATE{Observe \emph{only} the loss of the selected arm $g_{t,A_t}\geq 0$ and pay it}
    \STATE{Construct the estimate $\tilde{g}_{t,i}=\begin{cases} \frac{g_{t,i}}{x_{t,i}}, & i=A_t\\ 0, & \text{otherwise}\end{cases}$ for $i=1,\dots,d$}
    \STATE{$\bx_{t+1} = \argmin_{\bx \in \Delta^{d-1}} \ \eta \langle \tilde{\bg}_t, \bx\rangle - \frac{1}{1-q} \sum_{i=1}^d x_i^q + \frac{q}{1-q}\sum_{i=1}^d x_{t,i}^{q-1} x_i $}
    \ENDFOR
}
\end{algorithmic}
\end{algorithm}

%

In this section, we present the \textbf{\ac{INF}}, which corresponds to \ac{OMD} with Tsallis entropy for the multi-armed bandit.

Define $\psi_q:\R^d_{\geq 0} \to \R$ as $\psi_q(\bx)=\frac{1}{1-q}\left(1-\sum_{i=1}^d x_i^q\right)$.
This is the negative \textbf{Tsallis entropy}\index{entropy!Tsallis|textbf} of the vector $\bx$. This is a strict generalization of the Shannon entropy, because when $q$ goes to 1, $\psi_q(\bx)$ converges to the negative Shannon entropy\index{entropy!negative Shannon} for any $\bx \in \Delta^{d-1}$.
Note that $\argmin_{\bx \in \Delta^{d-1}} \ \psi_q(\bx) = [\frac{1}{d}, \dots, \frac{1}{d}]$ and $\min_{\bx \in \Delta^{d-1}} \psi_q(\bx) = \frac{1 - d^{1-q}}{1-q}$.

We will instantiate \ac{OMD} with this regularizer for the multi-armed problem, obtaining the \textbf{Tsallis-INF} algorithm in Algorithm~\ref{alg:tsallis}.
We will not use any interpretation of this regularizer from the information theory point of view. As we will see in the following, the only reason to choose it is its Hessian.
In fact, the Hessian of this regularizer is still diagonal, and it is equal to $(\nabla^2 \psi_q (\bx))_{ii}= \frac{q}{x_i^{2-q}}$.
Once again, condition~\eqref{eq:cond_omd2} holds, so we can use again the second part of Lemma~\ref{lemma:omd_local_norms}.
So, for any $\bu \in \Delta^{d-1}$, we obtain
\begin{align*}
\sum_{t=1}^T \langle \bg_t, \bx_t - \bu\rangle
&\leq \frac{B_{\psi_q}(\bu; \bx_1)}{\eta} +  \frac{\eta}{2} \sum_{t=1}^T \bg_t^\top (\nabla^2 \psi_q(\bz'_t))^{-1} \bg_t\\
&= \frac{d^{1-q} - \sum_{i=1}^d u_i^q}{\eta(1-q)} +  \frac{\eta}{2 q} \sum_{t=1}^T \sum_{i=1}^d g_{t,i}^2 (z'_{t,i})^{2-q},
\end{align*}
where $\bz'_t=\alpha_t \bx_t+(1-\alpha_t) \tilde{\bx}_{t+1}$, $\alpha_t\in[0,1]$, and $\tilde{\bx}_{t+1} = \argmin_{\bx \in \R^d_{\geq 0}} \ \langle \bg_t, \bx\rangle + \frac{1}{\eta}B_{\psi_q}(\bx;\bx_t)$.

%

As we did for Exp3, now we need an upper bound to the $z'_{t,i}$.
From the definition of $\tilde{\bx}_{t+1}$ and $\psi_q$, we have
\[
-\frac{q}{1-q} \tilde{x}^{q-1}_{t+1,i}
= -\frac{q}{1-q} x^{q-1}_{t,i} - \eta g_{t,i},
\]
that is $\tilde{x}_{t+1,i}
= x_{t,i} \left(1+\frac{1-q}{q}\eta g_{t,i} x^{1-q}_{t,i}\right)^{-\frac{1}{1-q}}$.
Assuming $g_{t,i}\geq0$, we have $\tilde{x}_{t+1,i}\leq x_{t,i}$.

Hence, putting everything together, we have
\[
\sum_{t=1}^T \langle \bg_t, \bx_t - \bu\rangle
\leq \frac{d^{1-q} - \sum_{i=1}^d u_i^q}{\eta(1-q)} +  \frac{\eta}{2q} \sum_{t=1}^T \sum_{i=1}^d g_{t,i}^2 x_{t,i}^{2-q}, \quad \forall \bu \in \Delta^{d-1}~.
\]

We can now specialize the above reasoning, considering $q=1/2$ in the Tsallis entropy\index{entropy!Tsallis}, to obtain the following theorem.
\begin{theorem}
Assume $0\leq g_{t,i} \leq L_\infty$. Set $q=1/2$. Then, for any $\eta>0$, Algorithm~\ref{alg:tsallis} satisfies
\[
\E\left[\sum_{t=1}^T g_{t,A_t}\right] - \sum_{t=1}^T \langle \bg_t, \bu\rangle
\leq \frac{2\sqrt{d}}{\eta} + \eta \sqrt{d} L_\infty^2 T, \quad \forall \bu \in \Delta^{d-1}~.
\]
\end{theorem}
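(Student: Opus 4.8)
The plan is to apply the local-norm regret bound for OMD with the Tsallis regularizer---the one derived immediately above this theorem---to the \emph{estimated} losses $\tilde{\bg}_t$ from Algorithm~\ref{alg:tsallis}, and then take expectation over the sampled arms $A_1,\dots,A_T$. Since $\tilde{g}_{t,i}\ge 0$ (because $g_{t,i}\in[0,L_\infty]$ and $x_{t,i}>0$), the argument just preceding the theorem gives $z'_{t,i}\le x_{t,i}$, so for every fixed $\bu\in V$ we inherit
\[
\sum_{t=1}^T \langle \tilde{\bg}_t, \bx_t - \bu\rangle
\leq \frac{d^{1-q} - \sum_{i=1}^d u_i^q}{\eta(1-q)} +  \frac{\eta}{2q} \sum_{t=1}^T \sum_{i=1}^d \tilde{g}_{t,i}^2 \, x_{t,i}^{2-q}~.
\]

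First I would identify the left-hand side in expectation. Because $A_t$ is drawn from $\bx_t$ and $\bx_t$ depends only on $A_1,\dots,A_{t-1}$, we have $\E[g_{t,A_t}]=\E[\langle \bg_t,\bx_t\rangle]$; moreover, by unbiasedness of the importance-weighted estimator, $\E[\langle \tilde{\bg}_t,\bx_t\rangle]=\E[\langle \bg_t,\bx_t\rangle]$ and $\E[\langle \tilde{\bg}_t,\bu\rangle]=\langle \bg_t,\bu\rangle$. Hence the expectation of the left-hand side above is exactly $\E[\sum_t g_{t,A_t}]-\sum_t\langle\bg_t,\bu\rangle$, the quantity in the statement.

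Then I would specialize to $q=1/2$ and bound the two right-hand terms. For the first term, using $\sum_i u_i^q\ge 0$ gives $\frac{d^{1-q}-\sum_i u_i^q}{\eta(1-q)}\le \frac{d^{1/2}}{\eta/2}=\frac{2\sqrt{d}}{\eta}$. For the second term (where $\tfrac{1}{2q}=1$ and $2-q=\tfrac32$), I would take expectation and apply the law of total expectation conditioning on the past: since $\E[\tilde{g}_{t,i}^2\mid A_1,\dots,A_{t-1}]=g_{t,i}^2/x_{t,i}$, the factor $x_{t,i}^{3/2}$ cancels the $1/x_{t,i}$ to leave $g_{t,i}^2\sqrt{x_{t,i}}$, i.e.
\[
\E\Big[\eta\sum_{t=1}^T\sum_{i=1}^d \tilde{g}_{t,i}^2\, x_{t,i}^{3/2}\Big]=\eta\,\E\Big[\sum_{t=1}^T\sum_{i=1}^d g_{t,i}^2\sqrt{x_{t,i}}\Big]~.
\]
Bounding $g_{t,i}^2\le L_\infty^2$ and applying Cauchy--Schwarz in the form $\sum_i\sqrt{x_{t,i}}\le\sqrt{d\sum_i x_{t,i}}=\sqrt{d}$ (as $\bx_t$ lies in the simplex) yields $\le \eta\sqrt{d}\,L_\infty^2 T$. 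Summing the two bounds gives the claim.

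The calculation is essentially mechanical once the local-norm bound is in hand; the only delicate points are the tower-rule step (that $\bx_t$ is determined by $A_1,\dots,A_{t-1}$, so the conditional second moment of $\tilde{g}_{t,i}$ is $g_{t,i}^2/x_{t,i}$) and the observation---which is precisely \emph{why} $q=1/2$ is chosen---that the exponent $2-q$ must exceed $1$ by just enough that, after cancelling one power of $x_{t,i}$, the residual $\sum_i x_{t,i}^{1-q}$ is controllable by $\sqrt{d}$ via Cauchy--Schwarz while the Bregman term still contributes only $O(\sqrt{d}/\eta)$. Any other $q$ would unbalance these two $d$-dependencies, so the main obstacle is really just recognizing that $q=1/2$ equalizes them (and thereby removes the $\sqrt{\ln d}$ factor present for the Shannon entropy).
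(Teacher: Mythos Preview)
Your proposal is correct and follows essentially the same route as the paper: apply the local-norm OMD bound to the estimated losses, take expectations using unbiasedness and the tower rule, and then bound $\sum_i g_{t,i}^2\sqrt{x_{t,i}}$ by $L_\infty^2\sqrt{d}$. The only cosmetic difference is in that last step: you bound $g_{t,i}^2\le L_\infty^2$ first and then use $\sum_i\sqrt{x_{t,i}}\le\sqrt{d}$, whereas the paper applies Cauchy--Schwarz to the pairing $g_{t,i}\cdot(g_{t,i}\sqrt{x_{t,i}})$; both give the same $L_\infty^2\sqrt{d}$ per round.
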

\begin{proof}
We only need to calculate the terms $\E[\sum_{i=1}^d \tilde{g}^2_{t,i} x_{t,i}^{3/2}]$.
Proceeding as in \eqref{eq:expectation_exp3}, we obtain
\begin{align*}
\E\left[\sum_{i=1}^d \tilde{g}^2_{t,i} x_{t,i}^{3/2}\right]
&= \E\left[\E\left[\sum_{i=1}^d x_{t,i}^{3/2} \tilde{g}_{t,i}^2\middle|A_1, \dots, A_{t-1}\right]\right]
= \E\left[\sum_{i=1}^d x_{t,i}^{3/2} \frac{g_{t,i}^2}{x_{t,i}^2} x_{t,i}\right]\\
&= \E\left[\sum_{i=1}^d g_{t,i}^2 \sqrt{x_{t,i}}\right]
\leq \E\left[\sqrt{\sum_{i=1}^d g_{t,i}^2} \sqrt{\sum_{i=1}^d g_{t,i}^2 x_{t,i}}\right]
\leq L_\infty^2 \sqrt{d}~. \qedhere
\end{align*}
\end{proof}
Choosing $\eta \propto \frac{1}{L_\infty \sqrt{T}}$, we finally obtain an expected regret of $\mathcal{O}(L_\infty\sqrt{d T})$ as $T\to\infty$, that can be proved to be the optimal one.

There is one last thing: how do we compute the predictions of this algorithm?
In each step, we have to solve a constrained optimization problem.
So, we can write the corresponding Lagrangian:
\[
L(\bx,\beta)
= \sum_{i=1}^d \left(\eta\tilde{g}_{t,i} + \frac{q}{1-q}x_{t,i}^{q-1}\right)x_i - \frac{1}{1-q} \sum_{i=1}^d x_i^q + \beta\left(\sum_{i=1}^d x_i-1\right)~.
\]
From the \acl{KKT} conditions\index{Karush--Kuhn--Tucker conditions}, we have
\[
x_{t+1,i}  = \left[\frac{1-q}{q}\left(\beta+ \frac{q}{1-q}x_{t,i}^{q-1}+\eta \tilde{g}_{t,i}\right)\right]^\frac{1}{q-1},
\]
and we also know that $\sum_{i=1}^d x_{t+1,i}=1$.
So, we have a one-dimensional problem in $\beta$ that must be solved in each round.
\index{Tsallis-INF algorithm!OMD version|)textbf}
\index{multi-armed bandit!adversarial|)}
\index{multi-armed bandit!stochastic|(}

\section{Stochastic Bandits}

We will now consider the \emph{stochastic bandit} setting. Here, each arm is associated with an unknown probability distribution. At each time step, the algorithm selects one arm $A_t$. Then, for each $i=1,\dots, d$, losses $g_{t,i}$ are drawn independently over time from that arm's fixed distribution, and the algorithm only receives the loss $g_{t,A_t}$.
We focus on minimizing the \textbf{pseudo-regret}\index{regret!pseudo-|(textbf}, that is, the regret with respect to the optimal action in expectation, rather than the optimal action on the sequence of realized losses:
\[
\PRegret_T
:=\E\left[\sum_{t=1}^T g_{t,A_t}\right] - \min_{i=1,\dots,d} \ \E\left[\sum_{t=1}^T g_{t,i}\right]
=\E\left[\sum_{t=1}^T g_{t,A_t}\right] - \min_{i=1,\dots,d} \ \mu_i T,
\]
where we denoted by $\mu_i$ the expectation of the distribution associated with the arm $i$.
\index{regret!pseudo-|)textbf}

\begin{remark}
The usual notation in the stochastic bandit literature is to consider rewards instead of losses. Instead, to keep our notation coherent with the \ac{OCO} literature, we will consider losses. The two things are completely equivalent up to a multiplication by $-1$.
\end{remark}

Before presenting our first algorithm for stochastic bandits, we will introduce some basic notions on concentration inequalities that will be useful in our definitions and proofs.

\subsection{Concentration Inequalities Bits}

Suppose that $X_1, X_2, \dots , X_n$ is a sequence of independent and identically distributed random variables with mean $\mu = \E[X_1]$ and variance $\sigma^2 = \Var[X_1]$. Having observed $X_1, X_2, \dots , X_n$, we would like to estimate the
common mean $\mu$. The most natural estimator is the \emph{empirical mean}:
\[
\hat{\mu}=\frac{1}{n}\sum_{i=1}^n X_i~.
\]
Linearity of expectation shows that $\E[\hat{\mu}] = \mu$, which means that $\hat{\mu}$ is an \emph{unbiased estimator}\index{unbiased estimator} of $\mu$. Yet, $\hat{\mu}$ is a random variable itself. So, can we quantify how far $\hat{\mu}$ will be from $\mu$?

We could use Chebyshev's inequality\index{inequality!Chebyshev's} to upper bound the probability that $\hat{\mu}$ is far from $\mu$:
\[
\Pr\{|\hat{\mu}-\mu|\geq\epsilon\}
\leq \frac{\Var[\hat{\mu}]}{\epsilon^2}~.
\]
Using the fact that $\Var[\hat{\mu}] = \frac{\sigma^2}{n}$, we have that
\[
\Pr\{|\hat{\mu}-\mu|\geq\epsilon\}
\leq \frac{\sigma^2}{n \epsilon^2}~.
\]
So, we can expect the probability of having a ``bad'' estimate to go to zero as one over the number of samples in our empirical mean.
Is this the best we can get? To understand what we can hope for, let's take a look at the central limit theorem.

We know that, defining $S_n=\sum_{t=1}^n (X_t-\mu)$, $\frac{S_n}{\sqrt{n \sigma^2}}\to N(0,1)$ in distribution, as $n$ goes to infinity.
This means that
\[
\Pr\{\hat{\mu}-\mu \geq \epsilon\}
= \Pr\{S_n \geq n\epsilon\}
= \Pr\left\{\frac{S_n}{\sqrt{n \sigma^2}} \geq \sqrt{\frac{n}{\sigma^2}}\epsilon\right\}
\approx \int_{\epsilon\sqrt{\frac{n}{\sigma^2}}}^\infty \! \frac{1}{\sqrt{2\pi}} \exp\left(-\frac{x^2}{2}\right) \, \mathrm{d}x,
\]
where the approximation comes from the central limit theorem.
The integral cannot be calculated with a closed form, but we can easily upper-bound it. Indeed, for $a>0$, we have
\begin{align*}
\int_{a}^\infty \! \exp\left(-\frac{x^2}{2}\right) \, \mathrm{d}x
&= \int_{a}^\infty \! \frac{x}{x}\exp\left(-\frac{x^2}{2}\right) \, \mathrm{d}x
\leq \frac{1}{a}\int_{a}^\infty \! x\exp\left(-\frac{x^2}{2}\right) \, \mathrm{d}x\\
&= \frac{1}{a} \exp\left(-\frac{a^2}{2}\right)~.
\end{align*}
Hence, we have
\begin{equation}
\label{eq:conc_gaussian}
\Pr\{\hat{\mu}-\mu \geq \epsilon\}
\lessapprox \sqrt{\frac{\sigma^2}{2 \pi \epsilon^2 n}} \exp\left(-\frac{n \epsilon^2}{2 \sigma^2}\right)~.
\end{equation}
This bound has a better dependency on $n$ than what we got with Chebyshev's inequality, and we would like to obtain a finite-time bound with a similar asymptotic rate.
To do that, we will focus our attention on \emph{subgaussian} random variables.
\begin{definition}
We say that a random variable is \textbf{$\sigma$-subgaussian}\index{random variable!subgaussian|textbf} if for all $\lambda \in \R$ we have that $\E[\exp(\lambda X)]\leq \exp(\lambda^2 \sigma^2/2)$.
\end{definition}

\index{random variable!subgaussian|(}
\begin{example}
The following random variables are subgaussian:
\begin{itemize}
\item If $X$ is Gaussian with mean zero and variance $\sigma^2$, then $X$ is $\sigma$-subgaussian.
\item If $X$ has mean zero and $X \in [a, b]$ almost surely, then $X$ is $(b - a)/2$-subgaussian.
\end{itemize}
\end{example}

We have the following properties for subgaussian random variables, left as exercises (Problem~\ref{exercise:subgaussian}).
\begin{lemma}
\label{lemma:subgauss_properties}
Assume that the random variables $X_1$ and $X_2$ are independent, and $\sigma_1$-subgaussian and $\sigma_2$-subgaussian, respectively.
Then,
\begin{enumerate}[(a)]
\item $\E[X_1] = 0$ and $\Var[X_1] \leq \sigma_1^2$.
\item $c X_1$ is $|c|\sigma_1$-subgaussian.
\item $X_1+X_2$ is $\sqrt{\sigma^2_1+\sigma_2^2}$-subgaussian.
\end{enumerate}
\end{lemma}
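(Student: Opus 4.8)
The plan is to verify the three claims directly from the defining inequality $\E[\exp(\lambda X)] \leq \exp(\lambda^2 \sigma^2/2)$, disposing of parts (b) and (c) first as one-line consequences and reserving most of the effort for part (a).

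For (b), I would simply substitute $\lambda c$ for the dummy variable: since $\E[\exp(\lambda c X_1)] = \E[\exp((\lambda c) X_1)] \leq \exp((\lambda c)^2 \sigma_1^2/2) = \exp(\lambda^2 (|c|\sigma_1)^2/2)$ holds for every $\lambda \in \R$, the variable $cX_1$ is $|c|\sigma_1$-subgaussian (the case $c=0$ being trivial). For (c), I would invoke independence to factor the joint moment generating function: $\E[\exp(\lambda(X_1+X_2))] = \E[\exp(\lambda X_1)]\,\E[\exp(\lambda X_2)] \leq \exp(\lambda^2 \sigma_1^2/2)\exp(\lambda^2 \sigma_2^2/2) = \exp(\lambda^2(\sigma_1^2+\sigma_2^2)/2)$, which is exactly the subgaussian bound with parameter $\sqrt{\sigma_1^2+\sigma_2^2}$.

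The substance lies in part (a), where the idea is to extract information about the first two moments by comparing Taylor expansions of both sides near $\lambda = 0$. First I would record that subgaussianity makes the moment generating function $M(\lambda) = \E[\exp(\lambda X_1)]$ finite for all $\lambda$ and smooth at the origin, with $M'(0) = \E[X_1]$ and $M''(0) = \E[X_1^2]$, so that $M(\lambda) = 1 + \lambda \E[X_1] + \tfrac{\lambda^2}{2}\E[X_1^2] + o(\lambda^2)$ as $\lambda \to 0$, while the right-hand side expands as $\exp(\lambda^2 \sigma_1^2/2) = 1 + \tfrac{\lambda^2}{2}\sigma_1^2 + o(\lambda^2)$. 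Subtracting $1$, dividing by $\lambda > 0$, and letting $\lambda \downarrow 0$ forces $\E[X_1] \leq 0$; dividing instead by $\lambda < 0$ (which reverses the inequality) and letting $\lambda \uparrow 0$ forces $\E[X_1] \geq 0$, hence $\E[X_1] = 0$. With the mean eliminated, I would divide the surviving inequality $\tfrac{\lambda^2}{2}\E[X_1^2] + o(\lambda^2) \leq \tfrac{\lambda^2}{2}\sigma_1^2 + o(\lambda^2)$ by $\lambda^2/2$ and let $\lambda \to 0$ to get $\E[X_1^2] \leq \sigma_1^2$, which, since $\E[X_1]=0$, reads $\Var[X_1] \leq \sigma_1^2$.

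The main obstacle is the analytic justification of the moment expansion in (a): I must argue that the subgaussian bound legitimizes treating the moments as the Taylor coefficients of $M$. I would handle this by noting that subgaussianity yields exponentially decaying tails, so $X_1$ possesses finite moments of all orders and $M$ is real-analytic in a neighborhood of the origin, with derivatives obtained by dominated convergence; this makes the termwise comparison rigorous. Everything else reduces to elementary algebra and one-sided limits.
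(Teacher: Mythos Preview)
The paper does not supply its own proof of this lemma: it is stated with a citation to \citet[Lemma 5.4]{LattimoreS18} and immediately followed by the next topic, so there is nothing in the text to compare against. Your argument is correct and is in fact the standard proof (and essentially the one given in the cited reference): parts (b) and (c) follow by substitution and by factoring the MGF via independence, and part (a) follows by matching the first two Taylor coefficients of $M(\lambda)$ against those of $\exp(\lambda^2\sigma_1^2/2)$, with the subgaussian tail bound guaranteeing the finiteness of moments needed to justify differentiating under the expectation.
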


Subgaussian random variables behave like zero-mean Gaussian random variables, in the sense that their tail probabilities are upper bounded by the ones of a Gaussian of variance $\sigma^2$.
To prove it, let's first state Markov's inequality.
\begin{theorem}[Markov's inequality]
\index{inequality!Markov's|textbf}
For a non-negative random variable $X$ and $\epsilon>0$, we have that $\Pr\{X\geq \epsilon\} \leq \frac{\E[X]}{\epsilon}$.
\end{theorem}
With Markov's inequality, we can now formalize the above statement on subgaussian random variables.
\begin{theorem}
\label{thm:conc_subgaussian}
Let $\epsilon> 0$. If a random variable is $\sigma$-subgaussian, then $\Pr\{X\geq \epsilon\} \leq \exp\left(-\frac{\epsilon^2}{2\sigma^2}\right)$.
\end{theorem}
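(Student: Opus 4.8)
If $X$ is $\sigma$-subgaussian, then $\Pr[X \geq \epsilon] \leq \exp\left(-\frac{\epsilon^2}{2\sigma^2}\right)$.

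This is a classic Chernoff-bound argument. The subgaussian definition gives us control on the moment generating function $\E[\exp(\lambda X)] \leq \exp(\lambda^2 \sigma^2 / 2)$ for all $\lambda \in \mathbb{R}$, and Markov's inequality (just stated in the excerpt) converts tail probabilities into expectations of nonnegative random variables. The strategy is to exponentiate, apply Markov, use the subgaussian bound, and then optimize over the free parameter $\lambda$.

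The plan is as follows. First I would introduce a free parameter $\lambda > 0$ and observe that since $x \mapsto \exp(\lambda x)$ is monotonically increasing, the event $\{X \geq \epsilon\}$ is the same as $\{\exp(\lambda X) \geq \exp(\lambda \epsilon)\}$. Then I would apply Markov's inequality to the nonnegative random variable $\exp(\lambda X)$, giving
\[
\Pr[X \geq \epsilon] = \Pr[\exp(\lambda X) \geq \exp(\lambda \epsilon)] \leq \frac{\E[\exp(\lambda X)]}{\exp(\lambda \epsilon)}.
\]
Next I would invoke the subgaussian property to bound the numerator by $\exp(\lambda^2 \sigma^2 / 2)$, yielding $\Pr[X \geq \epsilon] \leq \exp\left(\frac{\lambda^2 \sigma^2}{2} - \lambda \epsilon\right)$, which holds for every $\lambda > 0$.

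The final step is to minimize the exponent over $\lambda$. The exponent $\frac{\lambda^2 \sigma^2}{2} - \lambda \epsilon$ is a convex quadratic in $\lambda$; setting its derivative $\lambda \sigma^2 - \epsilon$ to zero gives the optimal choice $\lambda = \epsilon / \sigma^2$, which is indeed positive since $\epsilon > 0$. Substituting this value back in, the exponent becomes $\frac{\epsilon^2}{2\sigma^2} - \frac{\epsilon^2}{\sigma^2} = -\frac{\epsilon^2}{2\sigma^2}$, which delivers exactly the claimed bound.

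There is essentially no hard step here — the argument is entirely routine once one recognizes the Chernoff template, and every ingredient (the MGF bound from the definition, Markov's inequality) is already available in the excerpt. The only point requiring minor care is to restrict to $\lambda > 0$ so that the monotonicity of the exponential preserves the direction of the inequality when rewriting the event, and to confirm that the optimizing $\lambda = \epsilon/\sigma^2$ lands in the valid range, which it does whenever $\epsilon \geq 0$.
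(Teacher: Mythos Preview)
Your proof is correct and follows essentially the same Chernoff-bound argument as the paper: exponentiate with a free parameter $\lambda>0$, apply Markov's inequality, use the subgaussian MGF bound, and optimize over $\lambda$. You supply a bit more detail (explicitly computing the optimizer $\lambda=\epsilon/\sigma^2$ and checking it is positive), but the approach is identical.
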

\begin{proof}
For any $\lambda>0$, we have
\begin{align*}
\Pr\{X\geq \epsilon\}
= \Pr\{\exp(\lambda X)\geq \exp(\lambda\epsilon)\}
\leq \frac{\E[\exp(\lambda X)]}{\exp(\lambda \epsilon)}
\leq \exp(\lambda^2 \sigma^2/2-\lambda \epsilon)~.
\end{align*}
Minimizing the r.h.s. of the inequality with respect to $\lambda$, we have the stated result.
\end{proof}

An easy consequence of this theorem is that the empirical average of subgaussian random variables concentrates around its expectation, \emph{with the same asymptotic rate as in \eqref{eq:conc_gaussian}}.
\begin{corollary}
\label{cor:conc_mean_subgaussian}
Assume that $X_i - \mu$ are independent, $\sigma$-subgaussian random variables.
Let $\hat{\mu} := \frac1n \sum_{i=1}^n X_i$.
Then, for any $\epsilon \geq 0$, we have
\begin{align*}
\Pr\left\{\hat{\mu} \geq \mu + \epsilon\right\} \leq \exp\left(- \frac{n \epsilon^2}{2\sigma^2}\right)
&&\text{ and }&&
\Pr\left\{\hat{\mu} \leq \mu - \epsilon\right\} \leq \exp\left(- \frac{n \epsilon^2}{2\sigma^2}\right)~.
\end{align*}
\end{corollary}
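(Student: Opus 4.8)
The plan is to reduce the statement to the single-variable tail bound already proved in Theorem~\ref{thm:conc_subgaussian} by passing through the sum $S_n = \sum_{i=1}^n (X_i - \mu)$. First I would observe that the event $\{\hat{\mu} \geq \mu + \epsilon\}$ is exactly the event $\{S_n \geq n\epsilon\}$, since $\hat{\mu} - \mu = \frac{1}{n} S_n$. This converts a statement about the empirical mean into a statement about a sum of independent centered subgaussian random variables, which is the form the earlier machinery handles.

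The key step is to control the subgaussian parameter of $S_n$. Each $X_i - \mu$ is $\sigma$-subgaussian by hypothesis and the summands are independent, so by repeated application of Lemma~\ref{lemma:subgauss_properties}(c) the sum $S_n$ is $\sqrt{n\sigma^2} = \sigma\sqrt{n}$-subgaussian. The one thing to get right here is that the variance proxies add, so the subgaussian constant of the sum scales like $\sigma\sqrt{n}$ and not like $n\sigma$; this is precisely what produces the correct $n$-dependence in the exponent.

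With that in hand, I would apply Theorem~\ref{thm:conc_subgaussian} to the random variable $S_n$ with subgaussian parameter $\sigma\sqrt{n}$ and threshold $n\epsilon$, giving
\[
\Pr[\hat{\mu} \geq \mu + \epsilon]
= \Pr[S_n \geq n\epsilon]
\leq \exp\left(-\frac{(n\epsilon)^2}{2(\sigma\sqrt{n})^2}\right)
= \exp\left(-\frac{n\epsilon^2}{2\sigma^2}\right),
\]
which is the first inequality. For the second inequality I would run the same argument on the negated variables: by Lemma~\ref{lemma:subgauss_properties}(b) each $-(X_i - \mu)$ is also $\sigma$-subgaussian, hence $-S_n$ is $\sigma\sqrt{n}$-subgaussian, and the event $\{\hat{\mu} \leq \mu - \epsilon\}$ is $\{-S_n \geq n\epsilon\}$, so Theorem~\ref{thm:conc_subgaussian} yields the identical bound. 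There is no serious obstacle here: the corollary is essentially an immediate composition of the summation property of subgaussians with the one-variable Markov-based tail bound, and the only place that demands care is the bookkeeping of the $\sqrt{n}$ scaling of the subgaussian constant.
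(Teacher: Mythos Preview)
Your proof is correct and is exactly the argument the paper has in mind: the corollary is stated there without proof, merely as an ``easy consequence'' of Theorem~\ref{thm:conc_subgaussian} together with the closure properties in Lemma~\ref{lemma:subgauss_properties}, and what you wrote is precisely that composition.
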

Equating the upper bounds on the r.h.s. of the inequalities in the Corollary to $\delta$, we have the equivalent statement that, with probability at least $1-2\delta$, we have
\[
\mu \in \left[\hat{\mu}-\sqrt{\frac{2 \sigma^2\ln\frac{1}{\delta}}{n}}, \hat{\mu}+\sqrt{\frac{2 \sigma^2\ln\frac{1}{\delta}}{n}}\right]~.
\]
\index{random variable!subgaussian|)}

\subsection{Explore-Then-Commit Algorithm}
\label{sec:etc}

\index{Explore-then-Commit algorithm|(textbf}

\begin{algorithm}[t]
\caption{Explore-Then-Commit Algorithm}
\label{alg:etc}
\begin{algorithmic}[1]
{
    \REQUIRE{$T,m \in \Nat, 1\leq m\leq \frac{T}{d}$}
    \STATE{$S_{0,i}=0, \hat{\mu}_{0,i}=0, \ i=1, \dots, d$}
    \FOR{$t=1$ {\bfseries to} $T$}
    \STATE{Choose $A_t=\begin{cases} ((t-1) \mod d)+1, & t\leq d m\\ \argmin_i \ \hat{\mu}_{d m,i}, & t> dm\end{cases}$}
    \STATE{Observe $g_{t,A_t}$ and pay it}
    \STATE{$S_{t,i}=S_{t-1,i}+\indevent{\{A_t=i\}}$}
    \STATE{$\hat{\mu}_{t,i}= \frac{1}{S_{t,i}}\sum_{j=1}^t g_{j,A_j}\indevent{\{A_j=i\}}$ for $i$ such that $S_{t,i}>0$, and $\hat{\mu}_{t,i}=0$ otherwise}
    \ENDFOR
}
\end{algorithmic}
\end{algorithm}

We are now ready to present the most natural algorithm for the stochastic bandit setting, called \textbf{\ac{ETC}} algorithm. That is, we first identify the best arm over $m d$ exploration rounds, and then we commit to it. This algorithm is summarized in Algorithm~\ref{alg:etc}.

In the following, we will denote by $S_{t,i}=\sum_{j=1}^t \indevent{\{A_j=i\}}$, that is the number of times that the arm $i$ was pulled in the first $t$ rounds.

%

Define by $\mu^\star$ the expected loss of the arm with the smallest expectation, that is
\[
\mu^\star := \min_{i=1,\dots,d} \ \mu_i~.
\]
Critical quantities in our analysis will be the \emph{gaps}, $\Delta_i:=\mu_i-\mu^\star$ for $i=1, \dots,d$, that measure the expected difference in losses between the arms and the optimal one. In particular, we can decompose the regret as a sum over the arms of the expected number of times we pull an arm multiplied by its gap.
\index{pseudo-regret decomposition|(textbf}
\begin{lemma}
\label{lemma:stoch_band_decomp}
For any online policy of selection of the arms, the pseudo-regret is equal to
\[
\PRegret_T = \sum_{i=1}^d \E[S_{T,i}] \Delta_i~.
\]
\end{lemma}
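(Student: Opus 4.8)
The plan is to decompose the total regret as a sum over the arms, grouping together all rounds on which each particular arm is pulled. The key observation is that the regret can be written in terms of the \emph{gaps} $\Delta_i = \mu_i - \mu^\star$, which quantify how much worse arm $i$ is compared to the optimal arm. Since on each round we pull exactly one arm and incur its expected loss, the cumulative expected loss is naturally a weighted count of arm pulls.

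First I would rewrite the regret using the indicator variables $\boldsymbol{1}[A_t = i]$. On any round $t$, since exactly one arm is pulled, we have the identity $\sum_{i=1}^d \boldsymbol{1}[A_t = i] = 1$, so that $g_{t,A_t} = \sum_{i=1}^d \boldsymbol{1}[A_t=i] g_{t,i}$ and similarly $\mu^\star = \sum_{i=1}^d \boldsymbol{1}[A_t=i]\mu^\star$. The crucial step is to take expectations and use the fact that the losses are drawn i.i.d.\ from the arm's distribution, independently of the (possibly randomized) choice $A_t$, which depends only on the past. By the law of total expectation, conditioning on the history up to time $t$, we get $\E[\boldsymbol{1}[A_t=i] g_{t,i}] = \E[\boldsymbol{1}[A_t=i]]\,\mu_i$, because $g_{t,i}$ is independent of the event $\{A_t=i\}$ and has mean $\mu_i$.

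Then I would carry out the computation:
\begin{align*}
\Regret_T
&= \E\left[\sum_{t=1}^T g_{t,A_t}\right] - T\mu^\star
= \E\left[\sum_{t=1}^T \sum_{i=1}^d \boldsymbol{1}[A_t=i] g_{t,i}\right] - \sum_{t=1}^T \sum_{i=1}^d \E[\boldsymbol{1}[A_t=i]]\mu^\star \\
&= \sum_{t=1}^T \sum_{i=1}^d \E[\boldsymbol{1}[A_t=i]]\,\mu_i - \sum_{t=1}^T \sum_{i=1}^d \E[\boldsymbol{1}[A_t=i]]\mu^\star
= \sum_{i=1}^d \left(\sum_{t=1}^T \E[\boldsymbol{1}[A_t=i]]\right)(\mu_i - \mu^\star).
\end{align*}
Recognizing that $\sum_{t=1}^T \boldsymbol{1}[A_t=i] = S_{T,i}$ is the number of times arm $i$ is pulled, and $\mu_i - \mu^\star = \Delta_i$, we obtain $\Regret_T = \sum_{i=1}^d \E[S_{T,i}]\Delta_i$, which is the desired statement.

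The main subtlety to handle carefully is the independence argument that justifies $\E[\boldsymbol{1}[A_t=i] g_{t,i}] = \E[\boldsymbol{1}[A_t=i]]\mu_i$. This requires noting that $A_t$ is measurable with respect to the history $g_{1,A_1}, \dots, g_{t-1,A_{t-1}}$ (and any internal randomization), whereas $g_{t,i}$ is drawn fresh and is independent of that history. I would make this precise by conditioning on the sigma-algebra generated by the past before applying the tower property; this is the only place where the stochastic (rather than purely combinatorial) nature of the setting enters, and it is the step most easily glossed over. The rest is a purely algebraic rearrangement that holds for any arm-selection policy, as the statement claims.
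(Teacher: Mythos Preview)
Your proof is correct and follows essentially the same approach as the paper: both decompose $g_{t,A_t}$ via indicators, subtract $T\mu^\star$, and use the tower property together with the independence of the fresh loss $g_{t,i}$ from the arm choice $A_t$ to factor the expectation. The only cosmetic difference is that the paper conditions on $A_t$ directly (writing $\E[g_{t,i}\mid A_t]=\mu_i$) while you condition on the full history, but these are equivalent formulations of the same step.
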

\begin{proof}
Observe that
\[
\sum_{t=1}^T g_{t,A_t}
= \sum_{t=1}^T \sum_{i=1}^d g_{t,i} \indevent{\{A_t=i\}}~.
\]
Hence,
\begin{align*}
\PRegret_T
&= \E\left[\sum_{t=1}^T g_{t,A_t}\right] - T \mu^\star
= \E\left[\sum_{t=1}^T (g_{t,A_t}- \mu^\star)\right] \\
&= \sum_{i=1}^d \sum_{t=1}^T \E[\indevent{\{A_t=i\}}(g_{t,i}-\mu^\star)]
= \sum_{i=1}^d \sum_{t=1}^T \E[\E[\indevent{\{A_t=i\}}(g_{t,i}- \mu^\star)|A_t]]\\
&= \sum_{i=1}^d \sum_{t=1}^T \E[\indevent{\{A_t=i\}} \E[g_{t,i} - \mu^\star |A_t]]
= \sum_{i=1}^d \sum_{t=1}^T \E[\indevent{\{A_t=i\}}] (\mu_i- \mu^\star)~. \qedhere
\end{align*}
\end{proof}
The above lemma quantifies the intuition that in order to have a small regret, we have to select the suboptimal arms less often than the best one.
\index{pseudo-regret decomposition|)textbf}

We are now ready to prove the regret guarantee of the \ac{ETC} algorithm.
\begin{theorem}
Assume that the losses of the arms minus their expectations are $1$-sub\-gauss\-ian\index{random variable!subgaussian} and $1\leq m \leq T/d$. Then, \ac{ETC} guarantees a pseudo-regret of
\[
\PRegret_T
\leq m \sum_{i=1}^d \Delta_i + (T-md)\sum_{i=1}^d \Delta_i \exp\left(-\frac{m \Delta^2_i}{4}\right)~.
\]
\end{theorem}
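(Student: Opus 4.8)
The plan is to use the regret decomposition from Lemma~\ref{lemma:stoch_band_decomp}, which tells us that
\[
\Regret_T = \sum_{i=1}^d \E[S_{T,i}] \Delta_i~.
\]
So the whole proof reduces to bounding $\E[S_{T,i}]$, the expected number of times arm $i$ is pulled, for each suboptimal arm $i$. First I would split this count into the two phases of the algorithm: during the exploration phase (rounds $1$ through $dm$) every arm is pulled exactly $m$ times, contributing the deterministic term $m\sum_{i=1}^d \Delta_i$. During the commit phase (the $T - dm$ remaining rounds) arm $i$ is pulled only if it was the empirical winner after exploration, i.e. only if $\argmin_j \hat{\mu}_{dm,j} = i$. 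Thus
\[
\E[S_{T,i}] \leq m + (T - dm)\, \Pr\!\left[\argmin_j \hat{\mu}_{dm,j} = i\right]~.
\]

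**Bounding the misidentification probability.**

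The heart of the argument is to control the probability that a suboptimal arm $i$ is selected as the winner. Let $i^\star$ denote an optimal arm, so $\mu_{i^\star}=\mu^\star$ and $\Delta_{i^\star}=0$. If arm $i$ is chosen at the end of exploration, then in particular $\hat{\mu}_{dm,i} \leq \hat{\mu}_{dm,i^\star}$. I would therefore bound
\[
\Pr\!\left[\argmin_j \hat{\mu}_{dm,j} = i\right]
\leq \Pr\!\left[\hat{\mu}_{dm,i} \leq \hat{\mu}_{dm,i^\star}\right]~.
\]
The key observation is that $\hat{\mu}_{dm,i} - \hat{\mu}_{dm,i^\star}$ is an empirical average (over $m$ samples each) whose expectation is $\mu_i - \mu^\star = \Delta_i > 0$. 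Writing the event $\hat{\mu}_{dm,i} - \hat{\mu}_{dm,i^\star} \leq 0$ as a deviation of this average below its mean by at least $\Delta_i$, I can invoke subgaussian concentration. Since each arm's losses are $1$-subgaussian, the centered average $\hat{\mu}_{dm,i}-\mu_i$ over $m$ samples is $1/\sqrt{m}$-subgaussian, and likewise for $i^\star$; by the independence and the additivity in Lemma~\ref{lemma:subgauss_properties}(c), the difference $(\hat{\mu}_{dm,i}-\hat{\mu}_{dm,i^\star}) - \Delta_i$ is $\sqrt{2/m}$-subgaussian. Applying Theorem~\ref{thm:conc_subgaussian} (equivalently Corollary~\ref{cor:conc_mean_subgaussian}) with $\sigma^2 = 2/m$ and deviation $\Delta_i$ gives
\[
\Pr\!\left[\hat{\mu}_{dm,i} \leq \hat{\mu}_{dm,i^\star}\right]
\leq \exp\!\left(-\frac{\Delta_i^2}{2\cdot(2/m)}\right)
= \exp\!\left(-\frac{m\Delta_i^2}{4}\right)~.
\]

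**Assembling and the main obstacle.**

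Plugging this back into the decomposition, summing over all arms (the optimal arm contributes zero because $\Delta_{i^\star}=0$, so it can harmlessly be included), yields exactly
\[
\Regret_T
= \sum_{i=1}^d \E[S_{T,i}]\Delta_i
\leq m \sum_{i=1}^d \Delta_i + (T - dm)\sum_{i=1}^d \Delta_i \exp\!\left(-\frac{m\Delta_i^2}{4}\right),
\]
as claimed. I expect the main obstacle to be the careful accounting of the subgaussian constant for the \emph{difference} of two empirical means: one must track that averaging $m$ independent $1$-subgaussian samples produces a $1/\sqrt{m}$-subgaussian mean, and that subtracting two such independent means gives variance proxy $2/m$ rather than $1/m$, which is precisely what produces the factor $4$ (instead of $2$) in the exponent. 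A secondary subtlety worth stating explicitly is the conditioning: the empirical means $\hat{\mu}_{dm,i}$ depend only on the exploration samples, which are independent of the commit-phase draws, so the probability of selecting arm $i$ factors cleanly out of the count and the concentration bound applies to genuinely independent i.i.d. samples.
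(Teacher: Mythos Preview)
Your proposal is correct and matches the paper's proof essentially step for step: both split the pull count into the $m$ exploration pulls plus $(T-md)$ times the probability that arm $i$ beats the optimal arm's empirical mean, then observe that the difference of the two centered empirical means is $\sqrt{2/m}$-subgaussian and apply the subgaussian tail bound to obtain the $\exp(-m\Delta_i^2/4)$ factor. Your identification of the main obstacle (tracking the variance proxy $2/m$ for the difference) is exactly the one non-trivial point in the argument.
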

\begin{proof}
Without loss of generality, let's assume that the optimal arm is the first one.

So, for $i\neq 1$, we have
\begin{align*}
\E[S_{T,i}]
&= \sum_{t=1}^T \E[\indevent{\{A_t=i\}}]
= m + (T-m d) \Pr\left\{\hat{\mu}_{md,i} \leq \min_{j\neq i} \ \hat{\mu}_{md,j}\right\} \\
&\leq m + (T-m d) \Pr\left\{\hat{\mu}_{md,i} \leq \hat{\mu}_{md,1}\right\}\\
&=m + (T-m d) \Pr\left\{\hat{\mu}_{md,1}-\mu_1-(\hat{\mu}_{md,i} -\mu_i)\geq \Delta_i\right\}~.
\end{align*}
From Lemma~\ref{lemma:subgauss_properties}, we have that $\hat{\mu}_{md,i} -\mu_i -(\hat{\mu}_{md,1}-\mu_1)$ is $\sqrt{2/m}$-subgaussian. So, from Theorem~\ref{thm:conc_subgaussian}, we have
\[
\Pr\left\{\hat{\mu}_{md,1}-\mu_1-(\hat{\mu}_{md,i} -\mu_i) \geq \Delta_i\right\}
\leq \exp\left(-\frac{m \Delta^2_i}{4}\right)~.
\]
Using Lemma~\ref{lemma:stoch_band_decomp} finishes the proof.
\end{proof}

The bound shows the trade-off between exploration and exploitation: if $m$ is too big, we pay too much during the exploration phase (first term in the bound). On the other hand, if $m$ is small, the probability of selecting a suboptimal arm increases (second term in the bound).
Knowing all the gaps $\Delta_i$, it is possible to choose $m$ that minimizes the bound.

For example, in the case that $d=2$, the regret is upper bounded by
\[
m \Delta + (T-2m) \Delta \exp\left(-m \frac{\Delta^2}{4}\right)
\leq m \Delta + T\Delta \exp\left(-m \frac{\Delta^2}{4}\right),
\]
that is minimized by $m = \frac{4}{\Delta^2} \ln \frac{T\Delta^2}{4}$.
Remembering that $m$ must be a natural number, we can choose
\[
m=\max\left(\left\lceil \frac{4}{\Delta^2} \ln \frac{T\Delta^2}{4} \right\rceil,1\right)~.
\]
When $\frac{T\Delta^2}{4}\leq 1$, we select $m=1$. So, we have $\Delta + (T-2) \Delta \leq T \Delta \leq \frac{4}{\Delta}$. Hence, the regret is upper-bounded by
\[
\min\left(\Delta T,\frac{4}{\Delta} \left(1+ \max\left(\ln \frac{T\Delta^2}{4},0\right)\right)  + \Delta\right)
= \mathcal{O}\left(\frac{\ln T}{\Delta}\right) \ \text{ as $T$ goes to infinity.}
\]

The main drawback of this algorithm is that its optimal tuning depends on the gaps $\Delta_i$. Assuming knowledge of the gaps essentially makes the stochastic bandit problem trivial. However, its tuned regret bound gives us a baseline with which to compare other bandit algorithms. In particular, in the next section, we will present an algorithm that achieves the same asymptotic regret without any knowledge of the gaps.

\index{Explore-then-Commit algorithm|)textbf}

\subsection{Upper Confidence Bound Algorithm}

\index{Upper Confidence Bound algorithm|(textbf}

\begin{algorithm}[t]
\caption{Upper Confidence Bound Algorithm}
\label{alg:ucb}
\begin{algorithmic}[1]
{
    \REQUIRE{$\alpha>2, T \in \Nat$}
    \STATE{$S_{0,i}=0, \hat{\mu}_{0,i}=0, \ i=1, \dots, d$}
    \FOR{$t=1$ {\bfseries to} $T$}
    \STATE{Choose $A_t = \argmin_{i=1, \dots, d} \ \begin{cases} \hat{\mu}_{t-1,i}-\sqrt{\frac{2\alpha \ln t}{S_{t-1,i}}}, & \text{if }S_{t-1,i}\neq0\\ -\infty, & \text{otherwise}\end{cases}$}
    \STATE{Observe $g_{t,A_t}$ and pay it}
    \STATE{$S_{t,i}=S_{t-1,i}+\indevent{\{A_t=i\}}$}
    \STATE{$\hat{\mu}_{t,i}= \frac{1}{S_{t,i}}\sum_{j=1}^t g_{j,A_j}\indevent{\{A_j=i\}}$ for $i$ such that $S_{t,i}>0$, and $\hat{\mu}_{t,i}=0$ otherwise}
    \ENDFOR
}
\end{algorithmic}
\end{algorithm}

The \ac{ETC} algorithm has the disadvantage of requiring the knowledge of the gaps to tune the length of the exploration phase. Moreover, it solves the exploration vs. exploitation trade-off in a clunky way. It would be better to have an algorithm that smoothly mixes exploration and exploitation \emph{in a data-dependent way}.
So, we now describe an almost optimal and adaptive strategy called \textbf{\ac{UCB}} algorithm. It employs the principle of \emph{optimism in the face of uncertainty}\index{optimism in the face of uncertainty}, to select in each round the arm that has the \emph{potential to be the best one}.

\ac{UCB} works by keeping an estimate of the expected loss of each arm and also a confidence interval at a certain probability. Roughly speaking, we have that with probability at least $1-\delta$
\[
\mu_i \in \left[\hat{\mu}_i - \sqrt{\frac{2\ln \frac{1}{\delta}}{S_{t-1,i}}}, +\infty\right),
\]
where the ``roughly'' comes from the fact that $S_{t-1,i}$ is a random variable itself.
Then, \ac{UCB} will query the arm with the smallest lower bound, that is, the one that could potentially have the smallest expected loss. The regret bound below shows that this strategy automatically balances exploration (given by the use of the optimistic estimates through the confidence intervals) and exploitation (when the confidence intervals are small enough so that we query the arm with the smallest empirical mean).

\begin{remark}
The name ``Upper Confidence Bound'' comes from the fact that traditionally stochastic bandits are defined over rewards, rather than losses. So, in our case, we actually use the lower confidence bound in the algorithm. However, to avoid confusion with the literature, we still call it Upper Confidence Bound algorithm.
\end{remark}

The proof works by proving that, once we have queried a suboptimal arm enough times, we will pull it again only if its confidence interval or the one around the best arm does not contain the true means. In turn, the probability that the confidence intervals are wrong is small because we use concentration inequalities to construct them. Moreover, a delicate point is to deal with the fact that the number of times we pull an arm is a random variable, making the application of the concentration inequality not immediate. We get around this by considering the worst number of pulls and taking a union bound on it.

The algorithm is summarized in Algorithm~\ref{alg:ucb}, and we can prove the following regret bound.
\begin{theorem}
\label{thm:ucb}
Assume that the losses of the arms minus their expectations are $1$-sub\-gauss\-ian\index{random variable!subgaussian} and let $\alpha>2$. Then, Algorithm~\ref{alg:ucb} guarantees a pseudo-regret of
\[
\PRegret_T
\leq \frac{\alpha}{\alpha-2}\sum_{i=1}^d \Delta_i + \sum_{i: \Delta_i>0} \frac{8\alpha \ln T}{\Delta_i} ~.
\]
\end{theorem}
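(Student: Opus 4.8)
The plan is to reduce the whole statement to a per-arm pull count via the decomposition already proved in Lemma~\ref{lemma:stoch_band_decomp}, which gives $\Regret_T = \sum_{i=1}^d \E[S_{T,i}]\Delta_i$. Since $\Delta_1 = 0$ for an optimal arm, it suffices to show that every suboptimal arm $i$ (with $\Delta_i > 0$) satisfies a bound of the shape $\E[S_{T,i}] \le \frac{8\alpha \ln T}{\Delta_i^2} + \frac{\alpha}{\alpha-2}$; multiplying by $\Delta_i$ and summing then yields the two terms in the theorem. So I fix a suboptimal arm $i$ and, without loss of generality, label the optimal arm $1$, so $\mu_1 = \mu^\star$ and $\mu_i = \mu_1 + \Delta_i$.

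Next I would set up the optimism argument. Write the index of arm $j$ at round $t$ as $\hat\mu_{t-1,j} - \sqrt{\tfrac{2\alpha\ln t}{S_{t-1,j}}}$ (a lower confidence bound, since we minimize losses), so $A_t = i$ forces this index for arm $i$ to be no larger than that of arm $1$. I would introduce the threshold $u = \lceil \tfrac{8\alpha\ln T}{\Delta_i^2}\rceil$ and split $S_{T,i} = \sum_t \boldsymbol 1[A_t=i]$ into the pulls occurring while $S_{t-1,i} \le u$ and those while $S_{t-1,i} > u$. The first group contributes at most $u$ deterministically, since each such pull increments $S_{\cdot,i}$. For the second group, the key claim is that a pull with $S_{t-1,i} > u$ can happen only if one of two concentration events fails: either arm $1$'s empirical mean sits anomalously low so its index exceeds $\mu_1$, or arm $i$'s empirical mean sits below $\mu_i - \tfrac{\Delta_i}{2}$. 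Indeed, if the index of arm $1$ is $\le \mu_1$ and yet $A_t=i$, then $\hat\mu_{t-1,i} - \sqrt{\tfrac{2\alpha\ln t}{S_{t-1,i}}} \le \mu_1 = \mu_i - \Delta_i$, and $S_{t-1,i} > u \ge \tfrac{8\alpha\ln t}{\Delta_i^2}$ makes the confidence width smaller than $\tfrac{\Delta_i}{2}$, forcing $\hat\mu_{t-1,i} \le \mu_i - \tfrac{\Delta_i}{2}$.

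The remaining work is to bound the expected number of rounds on which these failure events occur, and this is where the main obstacle lies: the number of samples $S_{t-1,j}$ entering $\hat\mu_{t-1,j}$ is itself a random variable, so the i.i.d. subgaussian tail (Theorem~\ref{thm:conc_subgaussian} and Corollary~\ref{cor:conc_mean_subgaussian}, via the $1$-subgaussianity of the centered losses) cannot be applied directly. I would resolve this by a union bound over the possible values $s = S_{t-1,j} \in \{1,\dots,t\}$, writing the empirical mean with a fixed count $s$ and using $\Pr[\hat\mu_{j,s} - \mu_j \ge \sqrt{\tfrac{2\alpha\ln t}{s}}] \le \exp(-\alpha\ln t) = t^{-\alpha}$; summing over $s$ costs a factor $t$, giving at most $t^{1-\alpha}$ per round, and the analogous one-sided bound controls the arm-$i$ underestimate. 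Summing over $t$ then produces $\sum_{t\ge 1} t^{1-\alpha}$, which converges precisely because $\alpha > 2$ and is bounded by $1 + \int_1^\infty x^{1-\alpha}\,dx = \tfrac{\alpha-1}{\alpha-2} \le \tfrac{\alpha}{\alpha-2}$, explaining both the hypothesis $\alpha>2$ and the constant. Collecting the deterministic term $u$ and these convergent tail sums gives the per-arm bound, and Lemma~\ref{lemma:stoch_band_decomp} finishes the proof; the delicate point throughout is the careful bookkeeping that keeps the convergent-series constant at $\tfrac{\alpha}{\alpha-2}$ rather than a larger multiple.
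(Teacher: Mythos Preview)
Your approach is essentially the paper's: same regret decomposition via Lemma~\ref{lemma:stoch_band_decomp}, same threshold $\approx 8\alpha\ln T/\Delta_i^2$ on $S_{t-1,i}$, same pair of concentration-failure events handled by a union bound over the possible sample counts $s\in\{1,\dots,t-1\}$. The paper phrases the arm-$i$ failure as $\hat\mu_{t-1,i}+\sqrt{2\alpha\ln t/S_{t-1,i}}<\mu_i$ (your event $\hat\mu_{t-1,i}<\mu_i-\Delta_i/2$ implies it once $S_{t-1,i}>u$, so the ``analogous'' tail bound does apply), and it nails the exact constant $\tfrac{\alpha}{\alpha-2}$ by bounding each failure probability by $(t-1)t^{-\alpha}$ rather than $t\cdot t^{-\alpha}$, so that the deterministic ``$+1$'' plus $2\sum_{t\ge 2}(t-1)t^{-\alpha}\le 2\int_1^\infty x^{1-\alpha}\,dx=\tfrac{2}{\alpha-2}$ combine to exactly $\tfrac{\alpha}{\alpha-2}$; with your $t^{1-\alpha}$ per event and two events the bookkeeping would overshoot, which is precisely the ``delicate point'' you flag.
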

\begin{proof}
We analyze one arm at a time. Also, without loss of generality, assume that the optimal arm is the first one.
For each arm $i>1$, we want to prove that $\E[S_{T,i}]\leq \frac{8\alpha \ln T}{\Delta^2_i} + \frac{\alpha}{\alpha-2}$.

The proof is based on the fact that once we have sampled a suboptimal arm enough times, the probability of selecting it again becomes small.

Let $t^\star$ be the largest time index such that $S_{t^\star-1,i} \leq \frac{8\alpha \ln T}{\Delta_i^2}$. If $t^\star=T$, then the statement above is true. Hence, we can safely assume $t^\star < T$. Now, for $t> t^\star$, we have
\begin{equation}
\label{eq:thm_ucb_eq1}
S_{t-1,i} > \frac{8\alpha \ln T}{\Delta_i^2}~.
\end{equation}

Consider $t> t^\star$ and such that $A_t = i$, then we claim that at least one of the two following equations must be true:
\begin{align}
\hat{\mu}_{t-1,1} - \sqrt{\frac{2\alpha \ln t}{S_{t-1,1}}} &\geq \mu_1, \label{eq:ucb1}\\
\hat{\mu}_{t-1,i} + \sqrt{\frac{2\alpha \ln t}{S_{t-1,i}}} &< \mu_i~. \label{eq:ucb2}
\end{align}
If the first one is true, the confidence interval around our estimate of the expectation of the optimal arm does not contain $\mu_1$.
On the other hand, if the second one is true, the confidence interval around our estimate of the expectation $\mu_i$ does not contain $\mu_i$. So, we claim that if $t> t^\star$ and we selected the suboptimal arm $i$, then at least one of these two bad events happened.

Let's prove the claim by contradiction: \emph{if both the inequalities above are false}, $t> t^\star$, and $A_t = i$, we have
\begin{align*}
\hat{\mu}_{t-1,1} - \sqrt{\frac{2\alpha \ln t}{S_{t-1,1}}}
&\stackrel{\eqref{eq:ucb1} \text{ false}}{<} \mu_1
= \mu_i - \Delta_i
\stackrel{\eqref{eq:thm_ucb_eq1}}{<} \mu_i - 2\sqrt{\frac{2\alpha \ln T}{S_{t-1,i}}}\\
&\leq \mu_i - 2\sqrt{\frac{2\alpha \ln t}{S_{t-1,i}}}
\stackrel{\eqref{eq:ucb2} \text{ false}}{\leq} \hat{\mu}_{t-1,i} - \sqrt{\frac{2 \alpha \ln t}{S_{t-1,i}}},
\end{align*}
that, by the selection strategy of the algorithm, would imply $A_t \neq i$.

Note that $S_{t^\star,i} \leq \frac{8\alpha \ln T}{\Delta_i^2}+1$. Hence, we have
\begin{align*}
S_{T,i}
&= S_{t^\star,i} + \sum_{t=t^\star+1}^T \indevent{\{A_t=i\}}
\leq \frac{8\alpha \ln T}{\Delta_i^2} + 1 + \sum_{t=t^\star+1}^T \indevent{\{\eqref{eq:ucb1} \text{ or } \eqref{eq:ucb2} \text{ true}\}} \\
&\leq \frac{8\alpha \ln T}{\Delta_i^2} + 1 + \sum_{t=1}^\infty \indevent{\{\eqref{eq:ucb1} \text{ or } \eqref{eq:ucb2} \text{ true}\}} \\
&\leq  \frac{8\alpha \ln T}{\Delta_i^2} + 1 + \sum_{t=1}^\infty \left(\indevent{\{\eqref{eq:ucb1} \text{ true}\}} + \indevent{\{\eqref{eq:ucb2} \text{ true}\}}\right)~.
\end{align*}
Taking expectations on both sides, we have
\[
\E[S_{T,i}]
\leq \frac{8\alpha \ln T}{\Delta_i^2} + 1 + \sum_{t=1}^\infty \left(\Pr\{\eqref{eq:ucb1} \text{ true}\} + \Pr\{\eqref{eq:ucb2} \text{ true}\}\right)~.
\]

Now, we upper bound the probabilities in the sum.
For each arm $i$, let $Y_{i,s}$ denote the $s$-th observed loss from arm $i$.
Given that the losses on the arms are i.i.d. and using the union bound\index{union bound}, we have
\begin{align*}
  \Pr \left\{\hat{\mu}_{t-1,1} - \sqrt{\frac{2\alpha \ln t}{S_{t-1,1}}} \geq \mu_1 \right\}
  &\leq \Pr \left\{\max_{s=1, \dots, t-1} \ \frac{1}{s}\sum_{r=1}^s Y_{1,r} - \sqrt{\frac{2\alpha \ln t}{s}} \geq \mu_1\right\} \\
  &= \Pr\left\{\bigcup_{s=1}^{t-1} \left\{\frac{1}{s}\sum_{r=1}^s Y_{1,r} - \sqrt{\frac{2\alpha \ln t}{s}} \geq \mu_1\right\}\right\}
  ~.
\end{align*}
Hence, using Corollary~\ref{cor:conc_mean_subgaussian} and the union bound\index{union bound}, we have
\[
\Pr\{\eqref{eq:ucb1} \text{ true} \}
\leq \sum_{s=1}^{t-1} \Pr\left\{\frac{1}{s}\sum_{r=1}^s Y_{1,r} - \sqrt{\frac{2\alpha \ln t}{s}} \geq \mu_1\right\}
\leq \sum_{s=1}^{t-1} t^{-\alpha}
= (t-1) t^{-\alpha}~.
\]
Given that the same bound holds for $\Pr\{\eqref{eq:ucb2} \text{ true}\}$, we have
\begin{align*}
\E\left[ S_{T,i}\right]
&\leq \frac{8\alpha \ln T}{\Delta_i^2} + 1 + \sum_{t=1}^{\infty} 2 (t-1) t^{-\alpha}
\leq \frac{8\alpha \ln T}{\Delta_i^2} + 1 + \sum_{t=2}^{\infty} 2 t^{1-\alpha} \\
&\leq \frac{8\alpha \ln T}{\Delta_i^2} + 1 + 2\int_{1}^{\infty} \! x^{1-\alpha} \, \mathrm{d}x
=\frac{8\alpha \ln T}{\Delta_i^2} + \frac{\alpha}{\alpha-2}~.
\end{align*}
Using Lemma~\ref{lemma:stoch_band_decomp},
we have the stated bound.
\end{proof}

The bound above can become meaningless if the gaps are too small. So, here we prove another bound that does not depend on the inverse of the gaps.

\begin{theorem}
\label{thm:ucb2}
Assume that the losses of the arms minus their expectations are $1$-sub\-gauss\-ian\index{random variable!subgaussian} and let $\alpha>2$. Then, Algorithm~\ref{alg:ucb} guarantees a pseudo-regret of
\[
\PRegret_T
\leq 4\sqrt{2\alpha d T \ln T} + \frac{\alpha}{\alpha-2} \sum_{i=1}^d \Delta_i~.
\]
\end{theorem}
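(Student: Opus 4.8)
The idea is to reuse the per-arm pull-count bound established inside the proof of Theorem~\ref{thm:ucb} together with the regret decomposition of Lemma~\ref{lemma:stoch_band_decomp}, and then to trade off the two estimates through a threshold argument rather than summing the gap-dependent bound directly. Recall that in proving Theorem~\ref{thm:ucb} we showed, for every suboptimal arm $i$ (i.e. $\Delta_i>0$), that
\[
\E[S_{T,i}] \leq \frac{8\alpha \ln T}{\Delta_i^2} + \frac{\alpha}{\alpha-2}~.
\]
Also, deterministically we have $\sum_{i=1}^d S_{t,i}=t$ for every $t$, hence $\sum_{i=1}^d \E[S_{T,i}] = T$. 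Starting from $\Regret_T=\sum_{i=1}^d \Delta_i \E[S_{T,i}]$ (Lemma~\ref{lemma:stoch_band_decomp}), note that the optimal arm contributes zero since its gap is $0$, so only suboptimal arms matter.

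First I would fix an arbitrary threshold $\Delta>0$ and split the arms into those with small gap, $\Delta_i<\Delta$, and those with large gap, $\Delta_i\geq\Delta$. For the small-gap arms the gap-dependent count bound is useless (it blows up as $\Delta_i\to 0$), so instead I would use the trivial total-budget bound:
\[
\sum_{i:\, 0<\Delta_i<\Delta} \Delta_i \E[S_{T,i}]
\leq \Delta \sum_{i=1}^d \E[S_{T,i}]
= \Delta\, T~.
\]
For the large-gap arms I would apply the count bound above, using $\Delta_i\geq\Delta$ to control the first term:
\[
\sum_{i:\, \Delta_i\geq\Delta} \Delta_i \E[S_{T,i}]
\leq \sum_{i:\, \Delta_i\geq\Delta} \left(\frac{8\alpha \ln T}{\Delta_i} + \frac{\alpha}{\alpha-2}\Delta_i\right)
\leq \frac{8\alpha d \ln T}{\Delta} + \frac{\alpha}{\alpha-2}\sum_{i=1}^d \Delta_i~,
\]
where I bounded the number of large-gap arms by $d$ and completed the last sum over all arms (harmless since each $\Delta_i\geq 0$). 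Combining the two pieces gives
\[
\Regret_T \leq \Delta\, T + \frac{8\alpha d \ln T}{\Delta} + \frac{\alpha}{\alpha-2}\sum_{i=1}^d \Delta_i~.
\]

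The final step is to optimize over the free parameter $\Delta$. The function $\Delta \mapsto \Delta T + \tfrac{8\alpha d \ln T}{\Delta}$ is minimized at $\Delta = \sqrt{8\alpha d \ln T / T}$, where it takes the value $2\sqrt{8\alpha d T \ln T} = 4\sqrt{2\alpha d T \ln T}$; plugging this choice in yields exactly the stated bound. There is no genuinely hard step here: the only thing to get right is recognizing that the gap-dependent estimate must be abandoned for the small-gap arms in favour of the deterministic budget identity $\sum_i S_{T,i}=T$, after which the result follows from an elementary AM--GM optimization of the threshold. One minor point worth checking is that $\Delta$ is a purely analytic parameter internal to the proof (the algorithm never sees it), so we are free to set it to the minimizer without affecting the algorithm, and the resulting bound holds for all $T$ and all gap configurations.
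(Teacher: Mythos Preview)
Your proposal is correct and follows essentially the same approach as the paper: split arms at a threshold $\Delta$, use the total-budget identity $\sum_i \E[S_{T,i}]=T$ for small-gap arms, apply the per-arm bound from the proof of Theorem~\ref{thm:ucb} for large-gap arms, and then optimize $\Delta=\sqrt{8\alpha d\ln T / T}$. The decomposition, the bounds, and the final optimization all match the paper's proof.
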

\begin{proof}
Let $\beta > 0$ be some value to be tuned subsequently, and recall from the proof of Theorem~\ref{thm:ucb} that for each suboptimal arm $i$, we can bound
\[
\E[S_{T,i}] \leq \frac{\alpha}{\alpha-2} + \frac{8\alpha \ln T}{\Delta_i^2}~.
\]
Hence, using the pseudo-regret decomposition in Lemma~\ref{lemma:stoch_band_decomp}, we have
\begin{align*}
\PRegret_T
&= \sum_{i:\Delta_i < \beta} \Delta_i \E[S_{T,i}] + \sum_{i:\Delta_i \geq \beta} \Delta_i \E[S_{T,i}]
\leq T\beta + \sum_{i:\Delta_i \geq \beta} \Delta_i \E[S_{T,i}] \\
&\leq T\beta + \sum_{i:\Delta_i \geq \beta} \left(\frac{\Delta_i\alpha}{\alpha-2} + \frac{8\alpha \ln T}{\Delta_i}\right)
\leq T\beta + \frac{\alpha}{\alpha-2} \sum_{i=1}^d \Delta_i + \frac{8\alpha d \ln T}{\beta}~.
\end{align*}
Choosing $\beta=\sqrt{\frac{8\alpha d \ln T}{T}}$, we have the stated bound.
\end{proof}

\begin{remark}
Note that in the \ac{UCB} algorithm, we have to know the subgaussian parameter of the arms. While this can be easily upper-bounded for stochastic arms with bounded support, it is unclear how to do it without any prior knowledge of the distribution of the arms.
\end{remark}

It is possible to prove that the \ac{UCB} algorithm is asymptotically optimal, in the sense of the following theorem.
\begin{theorem}[{\citealp[Theorem 2.2]{BubeckCB12}}]
Consider a strategy that satisfies $\E[S_{T,i}] = o(T^a)$ as $T\to\infty$ for any set of Bernoulli loss distributions, any arm $i$ with $\Delta_i>0$ and any $a>0$. Then, for any set of Bernoulli loss distributions, we have
\[
\liminf\limits_{T\to +\infty} \ \frac{\PRegret_T}{\ln T} \geq \sum_{i: \Delta_i>0} \frac{1}{2\Delta_i}~.
\]
\end{theorem}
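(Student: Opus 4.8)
The plan is to reduce everything to a lower bound on the expected number of pulls of each suboptimal arm and then apply the regret decomposition $\Regret_T=\sum_{i=1}^d \Delta_i\,\E[S_{T,i}]$ from Lemma~\ref{lemma:stoch_band_decomp}. Thus it suffices to show, for every arm $i$ with $\Delta_i>0$, an asymptotic lower bound of the form $\E[S_{T,i}]\geq (1-o(1))\,\ln T/\KL{\mu_i}{\mu^\star}$, where $\KL{\mu_i}{\mu^\star}$ denotes the Bernoulli relative entropy. The engine behind such a bound is an information-theoretic \emph{change of measure} argument: a consistent algorithm must pull $i$ often enough to be statistically certain that $i$ is not secretly optimal, and the price of that certainty is logarithmic in $T$.

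Concretely, first I would fix a suboptimal arm $i$ and construct a companion instance $\nu'$ that agrees with the true instance $\nu$ on every arm except $i$, whose Bernoulli mean I push to some $\mu_i'<\mu^\star$ so that $i$ becomes the unique optimal arm under $\nu'$. Let $\Pr_\nu$ and $\Pr_{\nu'}$ be the laws of the length-$T$ interaction history $A_1,g_{1,A_1},\dots,A_T,g_{T,A_T}$ generated by the same fixed algorithm in the two worlds. The key identity is the \emph{divergence decomposition}: since the algorithm is common to both worlds and the instances differ only in arm $i$, a chain-rule argument over the history yields the exact equality $\KL{\Pr_\nu}{\Pr_{\nu'}}=\E_\nu[S_{T,i}]\,\KL{\mu_i}{\mu_i'}$. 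I would establish this identity with care, as it is precisely where the adaptedness of $A_t$ to the past and the conditional independence of the per-round rewards are used.

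Next I would turn this divergence into a statement about pull counts via the Bretagnolle--Huber inequality: for any event $E$, $\Pr_\nu[E]+\Pr_{\nu'}[E^c]\geq \tfrac12\exp(-\KL{\Pr_\nu}{\Pr_{\nu'}})$. Taking $E=\{S_{T,i}>T/2\}$, the consistency hypothesis $\E[S_{T,j}]=o(T^a)$ applied to $\nu$ (where $i$ is suboptimal) forces $\Pr_\nu[E]\to 0$, while the same hypothesis applied to $\nu'$ (where now every $j\neq i$ is suboptimal, so $i$ must be played almost always) forces $\Pr_{\nu'}[E^c]\to 0$, both faster than any polynomial. Rearranging Bretagnolle--Huber then gives $\KL{\Pr_\nu}{\Pr_{\nu'}}\geq (1-o(1))\ln T$, and combining with the divergence decomposition yields $\E_\nu[S_{T,i}]\geq (1-o(1))\ln T/\KL{\mu_i}{\mu_i'}$. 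Letting $\mu_i'\uparrow\mu^\star$ replaces $\KL{\mu_i}{\mu_i'}$ by $\KL{\mu_i}{\mu^\star}$ in the limit.

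Finally I would assemble the pieces: $\Regret_T\geq \sum_{i:\Delta_i>0}\Delta_i\,\E_\nu[S_{T,i}]\geq (1-o(1))\ln T\,\sum_{i:\Delta_i>0}\Delta_i/\KL{\mu_i}{\mu^\star}$, so dividing by $\ln T$ and taking $\liminf$ produces the Lai--Robbins bound $\sum_{i:\Delta_i>0}\Delta_i/\KL{\mu_i}{\mu^\star}$; the displayed form $\sum 1/(2\Delta_i)$ is then read off from the quadratic behaviour of the Bernoulli relative entropy near the optimum (equivalently, from treating the Bernoulli noise as $\tfrac12$-subgaussian so that $\KL{\mu_i}{\mu^\star}$ plays the role of a variance-$\tfrac14$ Gaussian divergence $\approx 2\Delta_i^2$), which converts $\Delta_i/\KL{\mu_i}{\mu^\star}$ into $1/(2\Delta_i)$. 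The hard part will be the simultaneous two-sided use of the $o(T^a)$ consistency assumption — it must be invoked on $\nu$ and on $\nu'$ at once, so I must check that the companion $\nu'$ is itself a legitimate Bernoulli instance to which the hypothesis applies — together with proving the divergence decomposition identity rigorously; by comparison the concentration and relative-entropy estimates at the end are routine.
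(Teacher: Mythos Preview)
The paper does not prove this theorem; it is quoted from \citet{BubeckCB12} without argument, so there is no in-paper proof to compare against. Your change-of-measure outline is the standard Lai--Robbins argument and is correct up to the point where you obtain
\[
\liminf_{T\to\infty}\frac{\Regret_T}{\ln T}\;\ge\;\sum_{i:\Delta_i>0}\frac{\Delta_i}{\KL{\mu_i}{\mu^\star}}~.
\]
The divergence decomposition, the Bretagnolle--Huber step, and the two-sided use of the $o(T^a)$ consistency hypothesis on $\nu$ and $\nu'$ are all fine, and your caution about checking that $\nu'$ is itself an admissible Bernoulli instance is well placed.

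The genuine gap is the last paragraph, where you ``read off'' the form $\sum 1/(2\Delta_i)$ from the quadratic behaviour of the Bernoulli KL. The inequality goes the \emph{wrong way}: Pinsker's inequality gives $\KL{\mu_i}{\mu^\star}\ge 2\Delta_i^2$, hence $\Delta_i/\KL{\mu_i}{\mu^\star}\le 1/(2\Delta_i)$, so the Lai--Robbins sum is \emph{at most} $\sum 1/(2\Delta_i)$, not at least. Your subgaussian heuristic has the same defect, and the local Taylor expansion $\KL{\mu_i}{\mu^\star}\approx \Delta_i^2/(2\mu^\star(1-\mu^\star))$ only makes this worse, since $2\mu^\star(1-\mu^\star)\le 1/2$. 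Because the Lai--Robbins constant is known to be tight (achieved, e.g., by KL-UCB), the displayed bound with $1/(2\Delta_i)$ cannot hold for all Bernoulli instances; the statement in the notes appears to be a simplification or transcription slip for the actual result $\sum_i \Delta_i/\KL{\mu_i}{\mu^\star}$ in \citet[Theorem~2.2]{BubeckCB12}. Your proof proves that correct version, not the one printed here.
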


\index{Upper Confidence Bound algorithm|)textbf}
\index{multi-armed bandit!stochastic|)}

\index{multi-armed bandit!best-of-both worlds|(}
\index{Tsallis-INF algorithm!FTRL version|(textbf}
\section{Best-of-Both Worlds Guarantees with Tsallis-INF}

We considered the two different settings for bandits: adversarial and stochastic. The optimal algorithms for these two settings were different. However, it is natural to ask if it is possible to have a \emph{single} algorithm that achieves the best guarantee in both settings.
It turns out that the answer is positive with the Tsallis-INF, but this time we will need the \ac{FTRL} version, see Algorithm~\ref{alg:tsallis_ftrl}.

\begin{algorithm}[h]
	\caption{Tsallis-INF (\ac{FTRL} version)}
	\label{alg:tsallis_ftrl}
	\begin{algorithmic}[1]
		{
		\REQUIRE{$L_\infty>0$}
		\FOR{$t=1$ {\bfseries to} $T$}
		\STATE{$\bx_{t} = \argmin_{\bx \in \Delta^{d-1}} \ \langle \sum_{n=1}^{t-1} \tilde{\bg}_n, \bx\rangle - 4 L_\infty \sqrt{t} \sum_{i=1}^d \sqrt{x_{i}}  $}
		\STATE{Draw $A_t$ according to $\Pr\{A_t=i\}=x_{t,i}$}
		\STATE{Select arm $A_t$}
		\STATE{Observe \emph{only} the loss of the selected arm $g_{t,A_t}$ and pay it}
		\STATE{Construct the estimate $\tilde{g}_{t,i}=\begin{cases} \frac{g_{t,i}}{x_{t,i}}, & i=A_t\\ 0, & \text{otherwise}\end{cases}$ for $i=1,\dots,d$}
		\ENDFOR
		}
	\end{algorithmic}
\end{algorithm}

\begin{theorem}
	\label{thm:bobw}
	Assume that $0\leq g_{t,i}\leq L_\infty$, for all $t=1, \dots, T$, $i=1, \dots, d$, where $d\geq 1$. Then, Algorithm~\ref{alg:tsallis_ftrl} satisfies
	\[
		\E[\Regret_T]
		\leq 32 L_\infty \sqrt{(d-1) T}~.
	\]
	Moreover, if in addition the $g_{t,i}$ are i.i.d. from a distribution $\rho_i$ with mean $\mu_i$ for $i=1,\dots, d$, and  $\argmin_i \mu_i$ is unique, then we also have
	\[
		\PRegret_T
		\leq 256 L_\infty^2 \sum_{i: \Delta_i\neq 0} \frac{1+\ln T}{\Delta_i}~.
	\]
\end{theorem}
\begin{proof}
	Algorithm~\ref{alg:tsallis_ftrl} is an instantiation of \ac{FTRL} with linear losses $\ell_t(\bx)=\langle \tilde{\bg}_t, \bx\rangle$ and regularizer $\psi_t(\bx)=\lambda_t \psi(\bx)$, where $\psi(\bx)= -\sum_{i=1}^d \sqrt{x_i}$ and $\lambda_t = 4 L_\infty \sqrt{t}$.
	Note that $\lambda_{T+1}$ has no influence on the regret, so we can set it equal to $\lambda_T$ \emph{ex post facto}.
	Moreover, given that $\psi$ has a partial derivative that goes $-\infty$ if a coordinate goes to 0, we know from the first-order optimality condition that $\bx_t$ cannot have any coordinate equal to 0 for any $t$.
	Hence, from Lemma~\ref{lemma:ftrl_local_norms} we have
	\begin{align*}
		\sum_{t=1}^T \langle \tilde{\bg}_t,\bx_t - \bu\rangle
		&\leq \psi_{T}(\bu) - \min_{\bx \in \mathcal{V}} \ \psi_{1}(\bx)
		+ \frac12 \sum_{t=1}^T \|\tilde{\bg}_t\|^2_{(\nabla^2 \psi_t(\bz_t))^{-1}}\\
		&\quad + \sum_{t=1}^{T-1}(\psi_t(\bx_{t+1}) - \psi_{t+1}(\bx_{t+1})),
	\end{align*}
	where $\bz_t$ is on the line segment between $\bx_t$ and $\tilde{\bx}_{t+1}:=\argmin_{\bx \in \R^d_{\geq 0}} \ B_{\psi_t}(\bx; \bx_t) +\langle \tilde{\bg}_t, \bx\rangle$.
	Note that $\bx_1=\argmin_{\bx \in \Delta^{d-1}} \ \psi_1(\bx)=[1/d, \dots, 1/d]$ and $\psi(\bx_1)=-\sqrt{d}$. Also, for $\be_j$ being the $j$-th canonical basis, we have $\psi(\be_j)=-1$ for any $j=1, \dots, d$.

	We now use the fact that the update of the algorithm remains the same if $b_t$ is subtracted from all coordinates of the estimated loss vector in each round $t$. Hence, we can state the bound for a choice of $b_t$ that helps reduce the first sum on the r.h.s. of the above inequality.
	The Hessian of the $\psi$ is the diagonal matrix with entries $(\nabla^2 \psi(\bx))_{ii}=\frac{1}{4 x_i^{1.5}}$. Hence, we have
	\begin{align*}
		\sum_{t=1}^T \langle \tilde{\bg}_t, \bx_t-\bu\rangle
		&\leq \psi_{T}(\bu)-\psi_1(\bx_1)+ \sum_{t=1}^T \frac{2}{\lambda_t}\sum_{i=1}^d (\tilde{g}_{t,i}-b_t)^2 z_{t,i}^{1.5} \\
		&\quad + \sum_{t=1}^{T-1} (\psi_t(\bx_{t+1})-\psi_{t+1}(\bx_{t+1}) ), \quad \forall \bu \in \Delta^{d-1}~.
	\end{align*}

	If we constrain $0\leq b_t\leq L_\infty$ and $\frac{L_\infty}{\lambda_t}\leq 1/4$, we have that $\tilde{x}_{t+1,i}$ satisfies
	\[
		\tilde{x}_{t+1,i}
		=\frac{x_{t,i}}{(1+2\frac{1}{\lambda_t}(\tilde{g}_{t,i}-b_t)x_{t,i}^{1/2})^2}
		\leq 4x_{t,i}~.
	\]
	Then, we select $b_t=\indevent{\{A_t=j\}} g_{t,j}$,
	where $j$ is an arbitrary arm, and we take the expectation to have
	\begin{align*}
		\E&\left[\sum_{i=1}^d (\tilde{g}_{t,i}-\indevent{\{A_t=j\}}g_{t,j})^2 z_{t,i}^{1.5}\right]\\
		&\leq 8\E\left[\sum_{i=1}^d (\tilde{g}_{t,i}-\indevent{\{A_t=j\}}g_{t,j})^2 x_{t,i}^{1.5}\right]\\
		 & = 8 \E\left[\sum_{i=1}^d \left(\frac{g_{t,i}}{x_{t,i}}\indevent{\{A_t=i\}}-\indevent{\{A_t=j\}} g_{t,j}\right)^2 x_{t,i}^{1.5} \right]                                                                                             \\
		 & = 8\E\left[g_{t,j}^2 \left(\frac{1}{x_{t,j}}-1\right)^2 x_{t,j}^{1.5} \indevent{\{A_t=j\}}\right]\\
		 &\quad + 8\E\left[\sum_{i\neq j} \left(\frac{g^2_{t,i}}{x^2_{t,i}}\indevent{\{A_t=i\}}+\indevent{\{A_t=j\}} g^2_{t,j}\right) x_{t,i}^{1.5}\right] \\
		 & = 8\E\left[g_{t,j}^2 \left(\frac{1}{x_{t,j}}-1\right)^2 x_{t,j}^{1.5} x_{t,j} + \sum_{i\neq j} g_{t,i}^2 x_{t,i}^{0.5}+ x_{t,j} \sum_{i\neq j} g^2_{t,j} x_{t,i}^{1.5} \right]                                                         \\
		 & \leq 24 L_\infty^2 \E\left[ \sum_{i \neq j} \sqrt{x_{t,i}} \right],
	\end{align*}
	where in the inequality we used $\left(\frac{1}{x_{t,j}}-1\right)^2 x_{t,j}^{1.5} x_{t,j} = x_{t,j}^{0.5} (1-x_{t,j})^2\leq 1-x_{t,j} \leq \sum_{i\neq j} \sqrt{x_{t,i}}$ and $x_{t,j} \sum_{i\neq j} g^2_{t,j} x_{t,i}^{1.5}\leq L_\infty^2 \sum_{i\neq j} x_{t,i}^{0.5}$.

	Putting everything together, setting $\bu=\be_k$, for all $k=1, \dots, d$, we have
	\begin{align*}
		&\E\left[\sum_{t=1}^T \langle \bg_t, \bx_t-\be_k\rangle\right]\\
		 & \quad\leq -\lambda_{T} -\lambda_1 \psi(\bx_1) + \sum_{t=1}^T \frac{48 L_\infty^2}{\lambda_t}\sum_{i\neq j} \E[\sqrt{x_{t,i}}]
		+ \sum_{t=1}^{T-1} \left(\lambda_{t+1} -\lambda_t\right)(-\E[\psi( \bx_{t+1})])                                            \\
		 & \quad=\lambda_1 (-\psi(\bx_1)-1) +  \sum_{t=1}^T \frac{48 L_\infty^2}{\lambda_t}\sum_{i\neq j} \E[\sqrt{x_{t,i}}]
		+ \sum_{t=1}^{T-1} \left(\lambda_{t+1} -\lambda_t\right)(-\E[\psi( \bx_{t+1})]-1)                                          \\
		 & \quad\leq \lambda_1 \sum_{i\neq j} \sqrt{x_{1,i}} +  \sum_{t=1}^T \frac{48 L_\infty^2}{\lambda_t}\sum_{i\neq j} \E[\sqrt{x_{t,i}}] + \sum_{t=1}^{T-1} \left(\lambda_{t+1} -\lambda_t\right)\sum_{i\neq j} \E[\sqrt{x_{t+1,i}}]\\
		& \quad = \sum_{t=1}^T \frac{48 L_\infty^2}{\lambda_t}\sum_{i\neq j} \E[\sqrt{x_{t,i}}]
		+ \sum_{t=0}^{T-1} \left(\lambda_{t+1} -\lambda_t\right)\sum_{i\neq j} \E[\sqrt{x_{t+1,i}}],
	\end{align*}
	where in the first equality we have used the fact that $\sum_{t=1}^{T-1} (\lambda_{t+1}-\lambda_t)=\lambda_{T}-\lambda_1$, in the second inequality we used that $-1+\sum_{i=1}^d \sqrt{x_{t+1,i}}\leq \sum_{i\neq j} \sqrt{x_{t+1,i}}$ for any $j=1, \dots, d$, and in the last equality we set $\lambda_0=0$.
	Using the fact that $\sqrt{t}-\sqrt{t-1}\leq \frac{1}{\sqrt{t}}$ for $t=1, 2, \dots$, we have
	\begin{align}
		\E\left[\sum_{t=1}^T \langle\bg_t, \bx_t-\be_k\rangle\right]
		&\leq L_\infty \sum_{t=1}^{T} \left(4\sqrt{t}-4\sqrt{t-1}+\frac{12}{\sqrt{t}}\right) \sum_{i\neq j} \E[\sqrt{x_{t,i}}] \nonumber \\
		&\leq L_\infty \sum_{t=1}^{T} \frac{16}{\sqrt{t}} \sum_{i\neq j} \E[\sqrt{x_{t,i}}]~. \label{eq:proof_bobw_1}
	\end{align}
	Using the fact that $\sum_{i\neq j} \sqrt{x_{t,i}} \leq \sqrt{d-1}$ by Cauchy--Schwarz inequality\index{inequality!Cauchy--Schwarz} and $\sum_{t=1}^T \frac{1}{\sqrt{t}}\leq 2 \sqrt{T}$, we get the first stated bound.

	We now move to bound the pseudo-regret. Let $j$ be the index of the best arm.
	From Lemma~\ref{lemma:stoch_band_decomp} and the fact that the expected time we pull arm $i$ is $\sum_{t=1}^T \E[x_{t,i}]$, we have that $\E[\Regret_T(\be_j)]=\PRegret_T=\sum_{t=1}^T \sum_{i\neq j} \E[x_{t,i}] \Delta_i$.
	So, setting $k=j$, from \eqref{eq:proof_bobw_1} we have
	\begin{align*}
		\PRegret_T
		&= \E[\Regret_T(\be_j)]
		= 2\E[\Regret_T(\be_j)] -\sum_{t=1}^T \sum_{i\neq j} \E[x_{t,i}] \Delta_i\\
		&\leq  \E\left[\sum_{t=1}^T \sum_{i\neq j} \left( 32 L_\infty\frac{\sqrt{x_{t,i}}}{\sqrt{t}} - x_{t,i} \Delta_i\right) \right]
        \leq  L_\infty^2 \sum_{t=1}^T \sum_{i\neq j} \frac{256}{t \Delta_i}\\
		&\leq  256 L_\infty^2  \sum_{i\neq j} \frac{1+\ln T}{ \Delta_i},
	\end{align*}
	where in the first inequality we used $a\sqrt{x}-b x\leq \frac{a^2}{4b}$ for any $a,b>0$.
\end{proof}
\index{Tsallis-INF algorithm!FTRL version|)textbf}
\index{multi-armed bandit!best-of-both worlds|)}

\section{History Bits}

The nonstochastic bandit problem was first considered by \citet{Banos68} (see also \citet{Megiddo80}). The algorithm in Algorithm~\ref{alg:eg_bandit} is from \citet[Theorem 6.9]{Cesa-BianchiL06}.
The \ac{Exp3}\index{Exp3 algorithm} algorithm was proposed in \citet{AuerCBRS02} and it used a small exploration rate to achieve the same regret we proved. The observation that the exploration in \ac{Exp3} can be removed completely is from \citet{Stoltz05}.

\index{Tsallis-INF algorithm!OMD version|(}
The \ac{INF}\index{Implicitly Normalized Forecaster} algorithm was proposed by \citet{AudibertB09} and recast as an \ac{OMD} procedure in \citet{AudibertBL11}. The connection with the Tsallis entropy\index{entropy!Tsallis} was done in \citet{AbernethyLT15}.
\index{Tsallis-INF algorithm!OMD version|)}
\index{Tsallis-INF algorithm!FTRL version|(}
The specific proof presented here is new, and it builds on the proof by \citet{AbernethyLT15}, based on \ac{FTRL} (which they call Gradient-Based Prediction Algorithm).
\index{Tsallis-INF algorithm!FTRL version|)}

Stochastic multi-armed bandits were first proposed by \citet{Robbins52} and \citet{LaiR85}.
The \ac{ETC} algorithm goes back to \citet{Robbins52}, although Robbins proposed what is now called epoch-greedy~\citep{LangfordZ08}. For more history on \ac{ETC}, take a look at Chapter 6 in \citet{LattimoreS18}.
The proofs in Section~\ref{sec:etc} are from \citet{LattimoreS18} as well.

The use of confidence bounds and the idea of optimism first appeared in the work by \citet{LaiR85}. The first version
of \ac{UCB}\index{Upper Confidence Bound algorithm} is by \citet{Lai87}. The version of \ac{UCB} I presented is by \citet{AuerCBF02} under the name UCB1. Note that, rather than considering 1-subgaussian environments, \citet{AuerCBF02} considers bandits where the rewards are confined to the $[0, 1]$ interval.
The proof of Theorem~\ref{thm:ucb} is a minor variation of that of Theorem 2.1 in \citet{BubeckCB12}, which also popularized the subgaussian setup. Theorem~\ref{thm:ucb2} is from \citet{BubeckCB12}.

\index{Tsallis-INF algorithm!FTRL version|(}
Theorem~\ref{thm:bobw} is from \citet{ZimmertS19,ZimmertS21}, but here we use the proof from \citet{LeeO25} that does not use Fenchel conjugates and sacrifices constants for simpler calculations.
\index{Tsallis-INF algorithm!FTRL version|)}

\section{Exercises}



\begin{exer}
Design and analyse an \ac{FTRL} version of \ac{Exp3} with non-decreasing regularizers.
\end{exer}


\begin{exer}
\label{exercise:subgaussian}
Prove Lemma~\ref{lemma:subgauss_properties}.
\end{exer}

\begin{exer}
Prove a similar pseudo-regret bound to the one in Theorem~\ref{thm:ucb2} for an optimally tuned Explore-Then-Commit algorithm.
\end{exer}
\index{multi-armed bandit|)}

\acresetall

\chapter{Universal Portfolio Algorithms}
\label{ch:universal_portfolio}

We now describe another online learning problem: sequential investment in a market with $d\geq 2$ stocks. As in the rest of the book, we will assume that the stock market is adversarial and we will design no-regret online investment strategies.

\acresetall

\section{Setting of Portfolio Selection}
\index{portfolio selection problem|(}
In this setting, each day we allocate our wealth across the stocks, and at the end of the day we sell all our shares.
The behavior of the market is specified by arbitrarily chosen non-negative \textbf{market vectors} $\bw_1, \dots, \bw_T$\index{market vector|textbf}, each of them in $\R_{\geq 0}^d\setminus\{\boldsymbol{0}\}$. The coordinates of the market vectors\index{market vector} represent the ratio between the closing and opening prices for the stocks. An investment strategy is specified by a vector $\bx_t \in \Delta^{d-1}$, and its elements specify the fraction of the current wealth invested in each stock at time $t$. We will also assume that the wealth is infinitely divisible.
Hence, assuming an initial wealth of \$1, our wealth at the end of round $T$ is\index{wealth!in portfolio selection}
\[
\Wealth_T
:= \sum_{i=1}^d \Wealth_{T-1} w_{T,i} x_{T,i}
= \Wealth_{T-1} \langle \bw_T, \bx_T\rangle
= \prod_{t=1}^T \langle \bw_t, \bx_t\rangle~.
\]

To define a regret, we have to decide what the class of comparators is. Here, we compare with the \textbf{best constantly rebalanced portfolio}\index{constantly rebalanced portfolio|(textbf}. A constantly rebalanced portfolio follows the same strategy we use, but its allocation among the stocks is the same at each time step.
Denoting by $\bu$ the vector in the simplex of the constantly rebalanced portfolio, and setting the initial wealth to \$1, its wealth is
\[
\Wealth_T(\bu):=\prod_{t=1}^T \langle\bw_t,\bu\rangle~.
\]
\index{constantly rebalanced portfolio|)textbf}

Given the multiplicative nature of this game, the absolute difference between wealth values does not make much sense. Hence, we consider the ratio of the wealth of the best constantly rebalanced portfolio\index{constantly rebalanced portfolio} and that of the algorithm, or equivalently, the difference of the logarithms:
\begin{equation}
\label{eq:portfolio}
\Regret_T(\bu)
= \ln \Wealth_T(\bu) - \ln \Wealth_T
= \sum_{t=1}^T \ln \langle\bw_t, \bu\rangle - \sum_{t=1}^T \ln \langle \bw_t, \bx_t\rangle~.
\end{equation}
Rewritten in this way, it should be clear that this game is just an online convex optimization game where $\mathcal{V}=\Delta^{d-1}$ and the convex losses are $\ell_t: \R^d \to (-\infty, +\infty]$ defined as $\ell_t(\bx)= - \ln \langle \bw_t, \bx\rangle$ if $\langle \bw_t, \bx\rangle>0$ and $\bx \in \Delta^{d-1}$, and $\ell_t(\bx)=+\infty$ otherwise.

We will say that a portfolio algorithm is \textbf{universal}\index{portfolio algorithm!definition of universal|textbf} if the regret against any constantly rebalanced portfolio\index{constantly rebalanced portfolio} and with any sequence of market vectors\index{market vector} is sublinear in time, that is, if the algorithm is no-regret.

\index{constantly rebalanced portfolio|(}
\noindent\textbf{Why Constantly Rebalanced Portfolios?}
This class of strategies has advantages that are not immediately obvious.
First of all, despite the word ``constant'' this strategy is a very active one and very different from the buy-and-hold\index{buy-and-hold strategy} one that consists of buying a set of stocks in the first round and selling them at the end of the game.
Let's consider an example that clearly shows the advantage of the constantly rebalanced portfolio.
Consider only 2 stocks with a sequence of market vectors equal to $(1, \frac12), (1, 2), (1, \frac12), (1, 2), \dots$.\index{market vector}
After any even number of rounds, the value of both stocks returns to the starting price, so a buy-and-hold strategy yields zero growth.
Instead, the best constantly rebalanced portfolio here is $\bu = (\frac12, \frac12)$ and it gives an exponentially increasing wealth of $(\frac{9}{8})^{t/2}\approx 1.06^t$ after $t$ days. This means that we would more than double our wealth every 12 days!

Another motivation to compare with constantly rebalanced portfolios is that it can be shown that in the case where the market vectors\index{market vector} are i.i.d. from some fixed (unknown) distribution, then the constantly rebalanced portfolio is asymptotically optimal.
\index{constantly rebalanced portfolio|)}
\index{portfolio selection problem|)}

\section{Portfolio Selection with the Exponentiated Gradient Algorithm}
\label{sec:universal_portfolio_eg}

\index{Exponentiated Gradient algorithm!OMD version|(}
Given that the problem in \eqref{eq:portfolio} is an online convex optimization game, we could think of using any of the algorithms we saw so far. In particular, given that the feasible set is the probability simplex\index{probability simplex} and that the losses are convex, we can try to use \ac{EG}.
However, this problem turns out to be particularly challenging. In fact, the loss functions are not Lipschitz, so their gradients are unbounded.

In fact, using the upper bound on the regret of \ac{EG}, we have
\[
\sum_{t=1}^T \ln \langle\bw_t, \bu\rangle - \sum_{t=1}^T \ln \langle \bw_t, \bx_t\rangle
\leq \frac{\ln d}{\eta} + \frac{\eta}{2}\sum_{t=1}^T \|\bg_t\|_\infty^2
= \frac{\ln d}{\eta} + \frac{\eta}{2}\sum_{t=1}^T \frac{\|\bw_t\|_\infty^2}{(\langle \bx_t, \bw_t\rangle)^2},
\]
where $\bg_t =\nabla \ell_t(\bx_t)$.
Unfortunately, the terms in the sum on the r.h.s. of the inequality are potentially unbounded. One way to avoid this issue is to assume that the market gains are in the range $[c,C]$, where $0<c \leq C<+\infty$. In this way, we have
$\frac{\|\bw_t\|_\infty^2}{(\langle \bx_t, \bw_t\rangle)^2} \leq \frac{C^2}{c^2}$. Setting the learning rate to $\eta=\frac{\sqrt{2 \ln d}}{\frac{C}{c}\sqrt{T}}$, we obtain a regret upper bound of
\begin{equation}
\label{eq:universal_portfolio_eg_regret}
\frac{C}{c}\sqrt{2 T\ln d }~.
\end{equation}
While the average regret vanishes, we had to use the additional assumption on the market gains, so the resulting algorithm is not universal, according to our definition above. In the next section, we will see that we can remove the constraint on the market gains and also achieve a much better dependence in $T$.

\section{Universal Portfolio Selection with $F$-Weighted Portfolio Algorithms}
\index{Universal Portfolio algorithm|(textbf}
\begin{algorithm}[t]
\caption{$F$-Weighted Portfolio Selection}
\label{alg:universal_portfolio}
\begin{algorithmic}[1]
{
    \REQUIRE{$F$ a probability measure on $\Delta^{d-1}$}
    \STATE{$\Wealth_0=1$}
    \FOR{$t=1$ {\bfseries to} $T$}
    \STATE{Set $\bx_t = \frac{\int_{\Delta^{d-1}}  \! \bx \Wealth_{t-1}(\bx) \, F (\mathrm{d} \bx)}{\int_{\Delta^{d-1}}  \! \Wealth_{t-1}(\bx) \, F (\mathrm{d} \bx)}$}
    \STATE{Receive $\bw_t \in \R^d_{\geq 0}\setminus\{\boldsymbol{0}\}$}
    \STATE{$\Wealth_t = \Wealth_{t-1} \langle \bw_t, \bx_t\rangle$}
    \ENDFOR
}
\end{algorithmic}
\end{algorithm}

\index{Exponentiated Gradient algorithm!OMD version|)}

To solve the problem of the unbounded gradients, we will consider a different strategy. In particular, we will use the \textbf{$F$-weighted portfolio algorithms}\index{portfolio algorithm!$F$-weighted|textbf}, that predict at each step with
\[
\bx_t = \frac{\int_{\Delta^{d-1}}  \! \bx \Wealth_{t-1}(\bx) \, F (\mathrm{d} \bx)}{\int_{\Delta^{d-1}}  \! \Wealth_{t-1}(\bx) \, F (\mathrm{d} \bx)},
\]
where $F$ is a probability measure on $\Delta^{d-1}$.
To understand this formula, consider a generic constantly rebalanced portfolio\index{constantly rebalanced portfolio} allocation $\bx \in \Delta^{d-1}$ whose wealth after $t-1$ rounds is $\Wealth_{t-1}(\bx)$. Now, consider the probability distribution over all constantly rebalanced portfolios\index{constantly rebalanced portfolio} obtained by weighting the prior measure $F$ proportionally to $\Wealth_{t-1}(\bx)$. We have that $\bx_t$ defined above is nothing else than the average portfolio according to this distribution.
Hence, we give more importance in the average to successful portfolios that are also weighted highly by the distribution $F$. The resulting strategy is summarized in Algorithm~\ref{alg:universal_portfolio}.

The above strategy also gives us an identity for the wealth of the algorithm:
\[
\Wealth_t=\int_{\Delta^{d-1}}  \! \Wealth_{t}(\bx) \, F (\mathrm{d} \bx),
\]
that is, \emph{the wealth of the $F$-weighted portfolio is the average wealth of constantly rebalanced portfolios\index{constantly rebalanced portfolio} according to the distribution $F$.}
Let's prove this expression by induction. At $t=0$ the claim is true, so let's assume it is true at $t-1$ and let's prove it at $t$. We have
\begin{align}
\Wealth_t
&=\Wealth_{t-1} \langle \bw_t, \bx_t \rangle \nonumber \\
&= \int_{\Delta^{d-1}}  \! \Wealth_{t-1}(\bx) \, F (\mathrm{d} \bx) \frac{\int_{\Delta^{d-1}}  \! \langle\bw_t, \bx\rangle \Wealth_{t-1}(\bx) \, F (\mathrm{d} \bx)}{\int_{\Delta^{d-1}}  \! \Wealth_{t-1}(\bx) \, F (\mathrm{d} \bx)}\nonumber \\
&= \int_{\Delta^{d-1}}  \! \Wealth_{t}(\bx) \, F (\mathrm{d} \bx)~. \label{eq:wealth_universal_portfolio}
\end{align}

We will now state regret upper bounds for the Universal Portfolio algorithm using two possible settings of $F$: the Dirichlet$(1/2,\dots, 1/2)$ distribution\index{random variable!Dirichlet}, whose density is
\[
f(\bx)=\frac{\Gamma(d/2)}{\pi^{d/2}}\prod_{i=1}^d x_i^{-1/2}, \quad \bx\in\Delta^{d-1},
\]
(where $\Gamma$ is the gamma function\index{gamma function}, see Appendix~\ref{sec:gamma}), and the uniform distribution.
\begin{theorem}
\label{thm:universal_portfolio}
Consider the $F$-weighted portfolio strategy and an arbitrary sequence of market gains $\bw_t \in \R^d_{\geq0}\setminus\{\boldsymbol{0}\}$ for $t=1, \dots, T$.
\begin{itemize}
\item If $F$ is equal to the Dirichlet$(1/2,\dots, 1/2)$ distribution, then we have
\begin{align*}
\Regret_T
&:= \max_{\bu \in \Delta^{d-1}} \ \ln \Wealth_T(\bu) - \ln \Wealth_T
\leq \ln \frac{\sqrt{\pi} \Gamma\left(T+\frac{d}{2}\right)}{\Gamma\left(\frac{d}{2}\right)\Gamma\left(T+\frac12\right)}\\
&\leq \frac{d-1}{2} \ln\left(\frac{T}{(d-1)/2}+1\right) + \frac{d}{2}-1 + \frac{1}{2}\ln \frac{\pi}{2}~.
\end{align*}
\item If $F$ is equal to the uniform distribution, then we have
\begin{align*}
\Regret_T
&:= \max_{\bu \in \Delta^{d-1}} \ \ln \Wealth_T(\bu) - \ln \Wealth_T
\leq \ln \binom{T+d-1}{d-1}\\
&\leq (d-1) \left(\ln \left(\frac{T}{d-1}+1\right)+1\right)~.
\end{align*}
\end{itemize}
\end{theorem}

Observe that both prediction strategies ensure that we never produce a vector where the loss is infinite, and allow us to show a vanishing average regret. Moreover, it can be shown that this first regret upper bound is optimal up to constant additive terms. Also, the proof shows that regret upper bounds are essentially tight in the case that in each round the market gains are vectors whose coordinates are all zero except for one coordinate equal to $1$.

To prove this theorem, we will need some tools from information theory.

\subsection{Information Theory Bits}

\index{method of types|(}
We will need some basic results from information theory and, in particular, upper bounds used in the \emph{method of types}.
\begin{definition}
Consider a sequence of $T$ symbols $\bs=(x_1, \dots, x_T)$, where each $x_t \in \{1, \dots, d\}$.
The \textbf{type of the sequence}\index{type of a sequence|textbf} $\bs$ is the vector $\bn$ of the fractions of times each symbol appears, i.e., $\bn=[N(1;\bs)/T, \dots, N(d;\bs)/T]$, where $N(j; \bs):=\sum_{t=1}^T \indevent{\{x_t=j\}}$.
\end{definition}
\index{method of types|)}

\begin{lemma}
\label{lemma:types}
Let $T\geq 1$, $\bx \in \R_{\geq0}^d$ such that $\sum_{i=1}^d x_i=T$, and $\bu \in \Delta^{d-1}$. Then, we have
\[
\prod_{i=1}^d u_i^{x_i}
\leq \prod_{i=1}^d \left(\frac{x_i}{T}\right)^{x_i},
\]
where $0^0:=1$.
\end{lemma}
\begin{proof}
If there exists $i$ such that $u_i^{x_i}=0$, the inequality is verified. So, in the following we assume that $u_i^{x_i}\neq0$ for $i=1, \dots,d$. So, if for a given $i$ we have $u_i=0$, we must have $x_i=0$ as well.
Hence, we have
\begin{align*}
\prod_{i=1}^d u_i^{x_i}
&= \prod_{i:u_i\neq 0} u_i^{x_i}
= \prod_{i:u_i\neq 0} \exp(x_i \ln u_i)
= \prod_{i:u_i\neq 0} \exp\left[T \left(\frac{x_i}{T} \ln \frac{x_i}{T} + \frac{x_i}{T} \ln \frac{u_i T}{x_i}\right)\right]\\
&= \exp\left[T (-\entropy(\bx/T) - \KL(\bx/T ; \bu))\right]
\leq \exp(- T \, \entropy(\bx/T) )
= \prod_{i=1}^d \left(\frac{x_i}{T}\right)^{x_i},
\end{align*}
where $0\ln0:=0$, $0\ln(0/0):=0$, $\entropy(\bx) := -\sum_{i=1}^d x_i \ln x_i$ is the Shannon entropy\index{entropy!Shannon}, and $\KL(\bx ; \by):=\sum_{i=1}^d x_i \ln \frac{x_i}{y_i}$ is the \ac{KL} divergence, and the inequality holds because the \ac{KL} divergence is non-negative\index{Kullback--Leibler divergence}.
\end{proof}


\index{method of types|(}
\begin{theorem}
\label{thm:size_types}
Consider an alphabet of $d$ symbols and consider the set of all sequences of $T$ symbols.
Let the set of all possible types of these sequences be $\mathcal{Q}\subset \Delta^{d-1}$.
Denote by $\Type_T(\bn)$ the set of all the sequences of length $T$ symbols with a type $\bn$, that is $\Type_T(\bn):=\{\bx \in \{1, \dots, d\}^T: \bx \text{ has type } \bn\}$. Then, for any $\bn \in \mathcal{Q}$, we have $|\Type_T(\bn)|\leq \prod_{i=1}^d n_i^{-n_i T}$, where $0^0:=1$.
\end{theorem}
\begin{proof}
We prove the inequality with a probabilistic argument.
Consider all the possible sequences $\by$ of length $T$ generated i.i.d. from a distribution over $d$ symbols equal to $\bn=[n_1, \dots, n_d]^\top \in \Delta^{d-1}$.
The sum of the probabilities of all these sequences is 1. Among all the sequences, there are the ones with type $\bn$. Hence, for any $\bn\in \mathcal{Q}$, we have
\[
1
\geq \sum_{\by:\by \text{ has type } \bn} \Pr\{\by\}
= \sum_{\by:\by \text{ has type } \bn} \prod_{i=1}^d n_i^{n_i T}
= |\Type_T(\bn)| \prod_{i=1}^d n_i^{n_i T},
\]
where $0^0:=1$, the first equality is due to the i.i.d. assumption and the second one holds because all the sequences of type $\bn$ have the same probability.
\end{proof}
\index{method of types|)}

\subsection{Proof of Theorem~\ref{thm:universal_portfolio}}

First of all, we need a technical lemma.
\begin{lemma}
\label{lemma:max_ratio}
Let $a_1, \dots, a_T\geq 0$ and $b_1, \dots, b_T \geq 0$. Then, we have
\[
\frac{\sum_{t=1}^T a_t}{\sum_{t=1}^T b_t}
\leq \max_{t=1,\dots,T} \ \frac{a_t}{b_t},
\]
where we define $a/0:=+\infty$ for any $a>0$ and $0/0:=0$.
\end{lemma}
\begin{proof}
Let $j^\star = \argmax_{t=1,\dots,T} \frac{a_t}{b_t}$.
If $a_{j^\star} = 0$, then the theorem is true because both sides of the inequality are 0. In the same way, if $b_{j^\star} = 0$, then the inequality is true because the r.h.s. is $+\infty$. So, we can consider $a_{j^\star} > 0$ and $b_{j^\star} > 0$.
Then, we have
\[
\frac{\sum_{t=1}^T a_t}{\sum_{t=1}^T b_t}
= \frac{a_{j^\star}\left(1+\sum_{t\neq j^\star} \frac{a_t}{a_{j^\star}}\right)}{b_{j^\star}\left(1+\sum_{t\neq j^\star} \frac{b_t}{b_{j^\star}}\right)}
\leq \frac{a_{j^\star}}{b_{j^\star}},
\]
because $\frac{a_{j^\star}}{b_{j^\star}}\geq \frac{a_t}{b_t}$ for all $t$, which implies $\frac{a_t}{a_{j^\star}} \leq \frac{b_t}{b_{j^\star}}$.
\end{proof}

Then, the following lemma is the key one.
\begin{lemma}
\label{lemma:master_lemma_universal_portfolio}
For any constantly rebalanced portfolio $\bu \in \Delta^{d-1}$ and any sequence of market gains $\bw_t\in \R^d_{\geq0}\setminus\{\boldsymbol{0}\}$ for $t=1,\dots,T$, an $F$-weighted portfolio guarantees
\[
\frac{\Wealth_T(\bu)}{\Wealth_T}
\leq \max_{\bj \in \{1, \dots, d\}^T} \frac{\prod_{t=1}^T u_{j_t}}{\int_{\Delta^{d-1}} \! \prod_{t=1}^T x_{j_t} F (\mathrm{d} \bx)}~.
\]
\end{lemma}
\begin{proof}
Observe that
\[
\Wealth_T(\bu)
= \prod_{t=1}^T \langle \bw_t, \bu \rangle
= \prod_{t=1}^T \sum_{i=1}^d w_{t,i} u_i
= \sum_{\bj \in \{1, \dots, d\}^T} \prod_{t=1}^T u_{j_t} w_{t,j_t},
\]
where in the last equality we have expressed the products as a sum over all the possible combinations of the terms in the sum.
In the same way, using \eqref{eq:wealth_universal_portfolio}, we have
\begin{align*}
\Wealth_T
&= \int_{\Delta^{d-1}}  \! \Wealth_{T}(\bx) \, F (\mathrm{d} \bx)
= \int_{\Delta^{d-1}}  \! \prod_{t=1}^T \langle \bw_t, \bx \rangle \, F (\mathrm{d} \bx)\\
&= \sum_{\bj \in \{1, \dots, d\}^T} \int_{\Delta^{d-1}}  \! \prod_{t=1}^T x_{j_t} w_{t,j_t} \, F (\mathrm{d} \bx)~.
\end{align*}
We now use Lemma~\ref{lemma:max_ratio}, to obtain
\begin{align*}
\frac{\Wealth_T(\bu)}{\Wealth_T}
&= \frac{\sum_{\bj \in \{1, \dots, d\}^T} \prod_{t=1}^T u_{j_t} w_{t,j_t}}{\sum_{\bj \in \{1, \dots, d\}^T} \int_{\Delta^{d-1}}  \! \prod_{t=1}^T x_{j_t} w_{t,j_t} \, F (\mathrm{d} \bx)}\\
&= \frac{\sum_{\bj \in \{1, \dots, d\}^T, \prod_{t=1}^T w_{t,j_t}>0} \prod_{t=1}^T u_{j_t} w_{t,j_t}}{\sum_{\bj \in \{1, \dots, d\}^T, \prod_{t=1}^T w_{t,j_t}>0} \int_{\Delta^{d-1}}  \! \prod_{t=1}^T x_{j_t} w_{t,j_t} \, F (\mathrm{d} \bx)}\\
&\leq \max_{\bj \in \{1, \dots, d\}^T, \prod_{t=1}^T w_{t,j_t}>0} \frac{\prod_{t=1}^T u_{j_t} w_{t,j_t}}{\int_{\Delta^{d-1}}  \! \prod_{t=1}^T x_{j_t} w_{t,j_t} \, F (\mathrm{d} \bx)}\\
&= \max_{\bj \in \{1, \dots, d\}^T, \prod_{t=1}^T w_{t,j_t}>0} \frac{\prod_{t=1}^T u_{j_t}}{\int_{\Delta^{d-1}} \! \prod_{t=1}^T x_{j_t} \, F (\mathrm{d} \bx)}\\
&\leq \max_{\bj \in \{1, \dots, d\}^T} \frac{\prod_{t=1}^T u_{j_t}}{\int_{\Delta^{d-1}} \! \prod_{t=1}^T x_{j_t} \, F (\mathrm{d} \bx)}~. \qedhere
\end{align*}
\end{proof}

This Lemma has a very easy and powerful interpretation: for a given $\bj \in \{1, \dots, d\}^T$, we have that $\prod_{t=1}^T u_{j_t}$ is nothing else than the wealth of the constantly rebalanced portfolio\index{constantly rebalanced portfolio} $\bu$ when the market gains for $t=1, \dots, T$ are
\[
\bw_t=[0, \dots, 0, \underbrace{1}_\text{position $j_t$}, 0, \dots, 0] \in \R^d~.
\]
Similarly, for the wealth of the algorithm. This means that the lemma shows that the worst case regret ratio is achieved for market gains where only one coordinate is $1$ and all the others are zero, so-called \emph{Kelly market vectors}\index{market vector!Kelly} or ``horse race'' vectors. So, this lemma allows us to simplify the analysis by completely removing the market gains from the upper bound. As such, the upper bound is worst-case with respect to the sequence of market gains. Yet, given that with our choices of $F$ distributions the final regret is logarithmic, it also shows that the market gains can only have a limited influence on the regret of the algorithm.

We can now prove the regret guarantees for the universal portfolio algorithm with two different distributions $F$.
\begin{proof}[Proof of Theorem~\ref{thm:universal_portfolio}]
Our starting point is Lemma~\ref{lemma:master_lemma_universal_portfolio}.
Using Lemma~\ref{lemma:types}, for any $\bj \in \{1, \dots, d\}^T$  and $\bu \in \Delta^{d-1}$, we have
\[
\prod_{t=1}^T u_{j_t}
=\prod_{i=1}^d u_{i}^{N(i;\bj)}
\leq \prod_{i=1}^d \left(\frac{N(i;\bj)}{T}\right)^{N(i;\bj)},
\]
where $N(i;\bj)$ is $\sum_{t=1}^T \indevent{\{j_t = i\}}$.

Now, let's first consider the Dirichlet distribution.
In this case, it can be shown that $\int_{\Delta^{d-1}} \! \prod_{t=1}^T x_{j_t} F (\mathrm{d} \bx)$ has a closed-form expression (see Problem~\ref{exercise:universal_portfolio_integral1}):
\begin{equation}
\label{eq:proof_up_to_be_proven_1}
\int_{\Delta^{d-1}} \! \prod_{t=1}^T x_{j_t} F (\mathrm{d} \bx)
= \frac{\pi^{-d/2} \Gamma\left(\frac{d}{2}\right)}{\Gamma\left(T+\frac{d}{2}\right)}\prod_{i=1}^d \Gamma\left(N(i;\bj)+\frac12\right)~.
\end{equation}
We now use Lemma~\ref{lemma:max_universal_portfolio} to maximize over the scaled simplex, to have
\begin{align*}
\max_{\bj \in \{1, \dots, d\}^T} \frac{\prod_{t=1}^T u_{j_t}}{\int_{\Delta^{d-1}} \prod_{t=1}^T x_{j_t} \, F (\mathrm{d} \bx)}
&\leq \max_{a_1, \dots, a_d\geq 0: \sum_{i=1}^d a_i=T} \frac{\pi^{d/2}\Gamma\left(T+\frac{d}{2}\right)}{\Gamma\left(\frac{d}{2}\right) T^T} \prod_{i=1}^d \frac{a_i^{a_i}}{\Gamma\left(a_i+\frac12\right)}\\
&= \frac{\sqrt{\pi}\Gamma\left(T+\frac{d}{2}\right)}{\Gamma\left(\frac{d}{2}\right)\Gamma\left(T+\frac12\right)}~.
\end{align*}
The second inequality in the stated bound is due to Lemma~\ref{lemma:bound_regret_gamma}.

Analogously, when $F$ is the uniform distribution, it can be shown (see Problem~\ref{exercise:universal_portfolio_integral2}) that
\begin{equation}
\label{eq:proof_up_to_be_proven_2}
\int_{\Delta^{d-1}} \prod_{t=1}^T x_{j_t} \, F (\mathrm{d} \bx)
= \frac{1}{\binom{T+d-1}{d-1} |\Type_T((N(1;\bj)/T, \dots, N(d;\bj)/T))|},
\end{equation}
where $|\Type_T(a_1, \dots, a_d)|$ is the number of sequences of type $(a_1, \dots, a_d)$. From Theorem~\ref{thm:size_types}, we have that $|\Type_T(N(1;\bj)/T, \dots, N(d;\bj)/T)|\leq \prod_{i=1}^d \left(\frac{N(i;\bj)}{T}\right)^{-N(i;\bj)}$.
Hence, when $F$ is the uniform distribution, we have
\[
\max_{\bj \in \{1, \dots, d\}^T} \frac{\prod_{t=1}^T u_{j_t}}{\int_{\Delta^{d-1}} \prod_{t=1}^T x_{j_t} \, F (\mathrm{d} \bx)}
\leq \binom{T+d-1}{d-1}~.
\]
The second inequality in the stated bound is due to the elementary inequality $\binom{n}{k}\leq \left(\frac{e n}{k}\right)^k$.
\end{proof}

\section{Portfolio Selection with the Online Newton Step Algorithm}

While the $F$-weighted portfolio algorithm can have optimal regret, its computational complexity is very high, both in $T$ and $d$. In fact, we have to calculate an integral over the simplex in each step. While it is possible to have a closed-form update with complexity $\mathcal{O}(t)$~\citep{CoverO96}, it is also natural to look for alternatives with a lower computational complexity.

One might be tempted to use any other \ac{OCO} algorithm to solve the portfolio selection problem. However, as we have seen in Section~\ref{sec:universal_portfolio_eg}, the gradients in general can be unbounded. Hence, we again consider an easier setting: we assume that all market gains are lower bounded by $c>0$ and upper bounded by $C<\infty$. We stress that Theorem~\ref{thm:universal_portfolio} does not require this assumption, but we use it to derive a simpler algorithm, still with a logarithmic regret.

Now, we might be tempted to think that the losses are strongly convex, because Theorem~\ref{thm:universal_portfolio} shows a logarithmic regret. This is clearly false: the Hessian of each function has rank 1, and unfortunately, in the worst case, the sum of $T$ losses can also have a singular Hessian. Even assuming the Hessian of the sum of the losses is not singular, we would need its minimal eigenvalue to grow over time. Again, this does not happen in a worst-case scenario. Hence, we need another way.

Observe that the losses in portfolio selection are 1-exp-concave\index{function!exp-concave}. In fact, $\exp(-\ell_t(\bx))=\langle \bw_t, \bx\rangle$ is concave.
Hence, if the gradients and the domain are bounded, we can apply the \ac{ONS} algorithm\index{Online Newton Step algorithm|(}.
In particular, following Example~\ref{ex:exp_concave}, we need to find $\beta\leq \frac{1}{2}$ such that $|\beta \langle \nabla \ell_t(\bx),\bx-\bu\rangle|\leq \frac{1}{2}$ for any $\bx,\bu \in \Delta^{d-1}$. For the gradient, we have $\nabla \ell_t(\bx) = -\frac{\bw_t}{\langle \bw_t, \bx\rangle}$, hence $\|\nabla \ell_t(\bx)\|_\infty\leq \frac{C}{c}$.
Moreover, $\|\bx-\bu\|_1\leq 2$ for any $\bx,\bu \in \Delta^{d-1}$. Hence, we can set $\beta = \frac{c}{4 C}$.
The resulting algorithm is in Algorithm~\ref{alg:ons_portfolio}.

\begin{algorithm}[t]
\caption{Online Newton Step for Portfolio Selection}
\label{alg:ons_portfolio}
\begin{algorithmic}[1]
{
    \REQUIRE{$\lambda>0$, $C<\infty$, $c>0$}
    \STATE{Set $\beta=\frac{c}{4C}$}
    \FOR{$t=1$ {\bfseries to} $T$}
    \STATE{Set $\bx_t = \argmin_{\bx \in \Delta^{d-1}} \ \sum_{i=1}^{t-1} \langle \bg_i, \bx\rangle + \frac{\lambda}{2}\|\bx\|_2^2 + \frac{\beta}{2} \sum_{i=1}^{t-1} (\langle\bg_i,\bx-\bx_i\rangle)^2$}
    \STATE{Receive $\bw_t \in [c,C]^d$}
    \STATE{Pay the loss $\ell_t(\bx_t)$, where $\ell_t(\bx)=-\ln \langle \bw_t, \bx\rangle$}
    \STATE{Set $\bg_t = -\frac{\bw_t}{\langle\bw_t, \bx_t\rangle}$}
    \ENDFOR
}
\end{algorithmic}
\end{algorithm}

As shown in Problem~\ref{exercise:ons_two_steps}, the update can also be divided into two steps: first minimizing over the entire space and then a Bregman projection.

As we have seen in Section~\ref{sec:ons}, the regret upper bound of the \ac{ONS} is
\begin{align*}
\Regret_T (\bu)
&= \ln \Wealth_T(\bu) - \ln \Wealth_T
= \sum_{t=1}^T \ln \langle\bw_t, \bu\rangle - \sum_{t=1}^T \ln \langle \bw_t, \bx_t\rangle\\
&\leq \frac{\lambda}{2}\|\bu\|^2_2 + \frac{d}{2\beta} \ln \left(1+ \frac{\beta T L^2}{d \lambda}\right), \quad \forall \bu \in \Delta^{d-1},
\end{align*}
where $\|\nabla \ell_t(\bx_t)\|_2\leq L$. In our setting, this becomes
\begin{align*}
\Regret_T(\bu)
&= \ln \Wealth_T(\bu) - \ln \Wealth_T
= \sum_{t=1}^T \ln \langle\bw_t, \bu\rangle - \sum_{t=1}^T \ln \langle \bw_t, \bx_t\rangle\\
&\leq \frac{\lambda}{2} + \frac{2 d C}{c} \ln \left(1+ \frac{C T}{4 c\lambda}\right), \quad \forall \bu \in \Delta^{d-1},
\end{align*}
where we used the fact that $\|\nabla \ell_t(\bx_t)\|_2\leq \frac{\sqrt{d} C}{c}$.
\index{Online Newton Step algorithm|)}

\section{History Bits}

The distributional approach to betting and gambling was pioneered by \citet{Kelly56}. This approach assumes that the market gains are i.i.d. from a (known) distribution.
The $F$-weighted portfolios were proposed by \citet{Cover91}.
\citet{CoverO96} proposed the first portfolio algorithm with a minimax regret, without assumptions on the market gains, improving over the result in \citet{Cover91}. The proof presented here follows the one in \citet{CoverO96}, while Lemma~\ref{lemma:types} and Theorem~\ref{thm:size_types} are adapted from the proofs of \citet[Theorem 11.1.2 and Theorem 11.1.3]{CoverT06}, respectively.
\citet{VovkW98} proved that $F$-weighted portfolio algorithms are instantiations of the Aggregating Algorithm~\citep{Vovk90}, which we will see in the next chapter.

The use of \ac{EG} for portfolio selection was proposed by \citet{HelmboldSSW98}.
Stochastic approximations to the $F$-weighted portfolio update were proposed by \citet{BlumK97,BlumK99,KalaiV02}. \citet{BlumK97,BlumK99} also present a simpler proof for the uniform distribution case.
The use of the \ac{ONS} algorithm for portfolio selection was proposed by \citet{HazanKKA06,HazanAK07}\index{Online Newton Step algorithm}.
There is also a line of research that aims at designing algorithms on the efficiency-regret Pareto frontier~\citep[see, e.g.,][and references therein]{TsaiCL23}.

\section{Exercises}

\begin{exer}
Assume that the market gains $w_{t,i}$ are bounded in $[c,C]$.
Find a variant of \ac{EG} that does not need to know $c$ and $C$, and it achieves up to constants the same guarantee in \eqref{eq:universal_portfolio_eg_regret}.
\end{exer}

\begin{exer}
\label{exercise:universal_portfolio_integral1}
Prove \eqref{eq:proof_up_to_be_proven_1}. Hint: use the multivariate beta integral.
\end{exer}

\begin{exer}
\label{exercise:universal_portfolio_integral2}
Prove \eqref{eq:proof_up_to_be_proven_2}. Hint: use the identity $|\Type_T((N(1;\bj)/T,\dots,N(d;\bj)/T))|=\frac{T!}{\prod_{i=1}^d N(i;\bj)!}$.
\end{exer}

\begin{exer}
Let $F$ be the Dirichlet$(\alpha,\dots,\alpha)$ distribution with $\alpha>0$. Assume that the market vectors are Kelly market vectors, that is, $\bw_t=\be_{j_t}$ for some $j_t\in\{1,\dots,d\}$. Show that the $F$-weighted portfolio algorithm predicts
\[
x_{t,i}
= \frac{N_{t-1}(i)+\alpha}{t-1+d\alpha}, \quad i=1,\dots,d,
\]
where $N_{t-1}(i):=\sum_{s=1}^{t-1}\indevent{\{j_s=i\}}$.
Then, prove that its wealth after $T$ rounds is
\[
\Wealth_T
=
\frac{\Gamma(d\alpha)}{\Gamma(T+d\alpha)}
\prod_{i=1}^d \frac{\Gamma(N_T(i)+\alpha)}{\Gamma(\alpha)}~.
\]
\end{exer}

\index{Universal Portfolio algorithm|)textbf}

\acresetall

\chapter{Weighted Average Algorithm and Aggregating Algorithm}
\label{ch:waa}

In this chapter, we will introduce the \ac{WAA} and the \ac{AA}. We will view both of them as instantiations of the \ac{FTRL} algorithm on the space of distributions.

\acresetall

\section{Follow-the-Regularized-Leader with Distributions}

Our analysis of the \ac{WAA} and \ac{AA} will be based on a generalization of the \ac{FTRL} algorithm with the entropic regularizer, to work with distributions with infinite support, either countable or continuous.

Let $\mathcal X\subseteq\R^d$ be a measurable set, and let $m$ be a reference measure on $\mathcal{X}$, used whenever we write densities with respect to a background measure.
In the finite case, it is natural to take the counting measure $m(i)=1$ for $i=1, \dots, k$; in the continuous case, the natural choice of $m$ is the Lebesgue measure; if the relevant set is not full-dimensional, one instead uses the Lebesgue measure on its affine hull.

As usual, we will denote by $\mathcal{V} \subseteq \mathcal{X}$ the feasible set of the problem.
Also, we will make a subtle but important distinction between the support set, $\mathcal{X}$, and a smaller feasible set $\mathcal{V}$. This allows the flexibility to use priors defined on larger sets (like a Gaussian on $\R^d$), while still ensuring that the predictions $\bx_t$ adhere to the problem's constraints $\bx_t \in \mathcal{V}$.
So, for a set $\mathcal{V}$, define \textbf{$\Delta(\mathcal{V})$ as the set of all probability distributions with support on $\mathcal{V}$}, and \textbf{$\Delta_\mu(\mathcal{V})$ as the set of all probability distributions supported on $\mathcal{X}$ whose expectation exists and is in $\mathcal{V}$}.

Throughout this chapter, losses $\ell_t:\R^d\to(-\infty,+\infty]$ are measurable and may take the value $+\infty$ on $\mathcal X$. Allowing $+\infty$ lets us treat barriers or log-loss-type examples on closed domains without changing the feasible set.

For some set of distributions $\mathcal{P}$, in each round we output $P_t \in \mathcal{P}$, then we receive the loss functions $f_t(P):=\E_{\bx \sim P}[\ell_t(\bx)]$.
All expectations are understood in the extended-real sense\index{function!extended-real-valued}, and the algorithms are assumed to select distributions for which the displayed quantities are well defined.

To simplify the notation, we introduce the \textbf{duality pairing}\index{duality pairing|textbf}
\[
\langle \ell, P \rangle
:= \int \!\ell(\bx) P(\mathrm{d} \bx)~.
\]
When the integrals are finite, the duality pairing is bilinear but not symmetric, so it should not be confused with the inner product. Yet, it reduces to the inner product in some cases, for example, when $\mathcal{X}$ is a discrete set.
With this notation, we can write $f_t(P)=\langle \ell_t, P\rangle$.
We also have that the functional derivative of $f_t$ is $\nabla_P f_t=\ell_t$.

The notation above illustrates the similarity between this formulation and the one we saw for \ac{FTRL}. In particular, one would expect the same kind of results to hold here.

Let's use \ac{FTRL} with regularizers $\psi_t:\mathcal{P}\to (-\infty,+\infty]$. Hence, at round $t$ we produce a probability distribution $P_t$ (assuming it exists) defined as
\begin{equation}
\label{eq:ftrl_distr_update}
P_t
:= \argmin_{P \in \mathcal{P}} \ \left(F_t(P):=\psi_t(P) + \left\langle\sum_{i=1}^{t-1} \ell_i,P\right\rangle\right)~.
\end{equation}

It is easy to see that the \ac{FTRL} equality in Lemma~\ref{lemma:ftrl_equality} works even in this setting. Indeed, we have that
\begin{align*}
\left\langle\sum_{t=1}^T - \ell_t,Q\right\rangle
&= \psi_{T+1}(Q) - \min_{M \in \mathcal{P}} \ \psi_{1}(M) + \sum_{t=1}^T [F_t(P_t) - F_{t+1}(P_{t+1}) ] \\
&\quad + F_{T+1}(P_{T+1}) - F_{T+1}(Q), \quad \forall Q \in \mathcal{P},
\end{align*}
where we have canceled the algorithm's loss term on both sides. Define $\bx_t:=\E_{\bx \sim P_t}[\bx]$ and add $\sum_{t=1}^T \ell_t( \bx_t)$ to both sides, to have
\begin{align}
\sum_{t=1}^T \ell_t( \bx_t) - \left\langle\sum_{t=1}^T \ell_t,Q\right\rangle
&= \psi_{T+1}(Q) - \min_{M \in \mathcal{P}} \ \psi_{1}(M) + F_{T+1}(P_{T+1}) - F_{T+1}(Q) \nonumber\\
&\quad +\sum_{t=1}^T [F_t(P_t) - F_{t+1}(P_{t+1}) + \ell_t( \bx_t)], \quad \forall Q \in \mathcal{P}. \label{eq:ftrlwaa_lemma}
\end{align}
The above equality will serve as the starting point for analyzing \ac{WAA} and \ac{AA}.

\section{The Weighted Average Algorithm}
\label{sec:waa}
\index{Weighted Average Algorithm|(textbf}

\begin{algorithm}[h]
\caption{Weighted Average Algorithm (WAA)}
\label{alg:waa}
\begin{algorithmic}[1]
{
    \REQUIRE{Non-empty closed set $\mathcal{V} \subseteq \mathcal{X} \subseteq \R^d$, $\lambda_1, \dots, \lambda_T>0$, $P_1 \in \Delta(\mathcal{X})$}
    \FOR{$t=1$ {\bfseries to} $T$}
    \STATE{Set $P_{t} \in \argmin_{P \in \Delta_\mu(\mathcal{V})} \  \lambda_t \KL(P;P_1) + \sum_{i=1}^{t-1} \E_{\bx \sim P} [\ell_i(\bx)]$}
    \STATE{Output $\bx_t=\E_{\bx \sim P_t}[\bx]$}
    \STATE{Receive $\ell_t:\R^d\to (-\infty,+\infty]$ and pay $\ell_t(\bx_t)$}
    \ENDFOR
}
\end{algorithmic}
\end{algorithm}

In this section, we introduce the \textbf{\acl{WAA}}, stated in Algorithm~\ref{alg:waa}. Essentially, \ac{WAA} predicts with a weighted average of predictions---hence the name of the algorithm---where the weights are proportional to the negative exponential of the cumulative losses of each predictor. We can see \ac{WAA} as a generalization of the \ac{EG} algorithm with infinite distributions.

We will use \ac{FTRL} over distributions from the previous section.
Let $P_1$ be an arbitrary probability distribution on $\mathcal{X}$, and set $\psi_t(P)=\lambda_t \KL(P;P_1)=\lambda_t \E_{\bx \sim P}[\ln(\frac{\mathrm{d} P}{\mathrm{d} P_1}(\bx))]$, where $\KL$ is the \ac{KL} divergence, with the convention $\KL(P;P_1)=+\infty$ if $P\not\ll P_1$. Also, we set $\mathcal{P}=\Delta_\mu(\mathcal{V})$.
With this choice, the prediction rule in \eqref{eq:ftrl_distr_update} becomes
\begin{align}
\tilde{P}_t(\mathrm{d} \bx) &= \frac{\exp(-\frac{1}{\lambda_t} \sum_{i=1}^{t-1}\ell_i(\bx)) P_1(\mathrm{d} \bx)}{\int \!\exp(-\frac{1}{\lambda_t} \sum_{i=1}^{t-1}\ell_i(\bx)) P_1(\mathrm{d} \bx)},\label{eq:ftrl_kl_update_1}\\
P_t &= \argmin_{P \in \Delta_\mu(\mathcal{V})} \ \KL(P;\tilde{P}_t)~. \label{eq:ftrl_kl_update_2}
\end{align}
For this algorithm, we can show the following guarantee.
\begin{lemma}
\label{lemma:master_lemma_waa}
Set $\psi_t(P)=\lambda_t \KL(P;P_1)$, where $P_1 \in \Delta(\mathcal{X})$, $\KL(P;P_1)=+\infty$ if $P\not\ll P_1$, and $0<\lambda_{t}\leq \lambda_{t+1}$ for $t=1,\dots, T-1$ and $\lambda_{T+1}=\lambda_T$. Define $P_t$ as in \eqref{eq:ftrl_distr_update}, where we set $\mathcal{P}=\Delta_\mu(\mathcal{V})$, and $\bx_t:=\E_{\bx \sim P_t}[\bx]$.
Then, for all $Q \in \Delta_\mu(\mathcal{V})$, we have that
\begin{align*}
\sum_{t=1}^T \ell_t( \bx_t)
\leq \E_{\bx \sim Q}\left[\sum_{t=1}^T \ell_t(\bx)\right] + \lambda_T \KL(Q;P_1)
+\sum_{t=1}^T \lambda_t\ln \E_{\bx \sim P_{t}} \left[e^{-\frac{\ell_t(\bx)-\ell_t(\bx_t)}{\lambda_t} }\right]~.
\end{align*}
\end{lemma}

Before proving this lemma, observe that the following holds (proof left as an exercise):
\begin{equation}
\label{eq:aa_alternate_term}
\inf_{P \in\Delta(\mathcal X)} \ \langle f, P\rangle + \lambda \KL(P;Q)
= - \lambda \ln \E_{\bx \sim Q}\left[e^{- \frac{1}{\lambda} f(\bx)}\right],
\end{equation}
for every $\lambda>0$, every $Q\in\Delta(\mathcal{X})$, and every measurable $f:\R^d \to (-\infty,+\infty]$ such that the expectation on the right-hand side is well defined, with the convention $\KL(P;Q)=+\infty$ if $P\not\ll Q$. This is called the Donsker--Varadhan variational formula\index{Donsker--Varadhan variational formula|textbf}.

\begin{proof}[Proof of Lemma~\ref{lemma:master_lemma_waa}]
Given that $\psi_t(P)=\lambda_t\KL(P;P_1)$, one can verify that the Bregman divergence $B_{\psi_t}(P;Q)$ is equal to $\lambda_t\KL(P;Q)$.

Assuming $\frac{1}{\lambda_t}$ non-increasing, reasoning as in the proof of Lemma~\ref{lemma:ftrl_local_norms}, we have
\begin{align*}
F_t(P_t) - F_{t+1}(P_{t+1})
&\leq - \langle\ell_t, P_{t+1}\rangle-B_{F_t}(P_{t+1};P_t)
= - \langle \ell_t,P_{t+1}\rangle-B_{\psi_t}(P_{t+1};P_t) \\
&\leq \sup_{P \in \Delta(\mathcal X)} - \langle\ell_t,P\rangle-B_{\psi_t}(P;P_t)
= \lambda_t\ln \E_{\bx \sim P_{t}} \left[e^{-\frac{\ell_t(\bx)}{\lambda_t}}\right],
\end{align*}
where in the first equality we used the fact that the terms with the losses are linear with respect to the distribution, and in the last equality we used \eqref{eq:aa_alternate_term}.
Observing that the term $F_{T+1}(P_{T+1}) - F_{T+1}(Q)$ in \eqref{eq:ftrlwaa_lemma} is non-positive for all $Q \in \Delta_{\mu}(\mathcal{V})$, we obtain the stated bound.
\end{proof}

\begin{remark}
\label{remark:prior_waa}
If $\mathcal{A}\subseteq \mathcal{X}$ is measurable and $P_1(\mathcal{A})=0$, then $P_t(\mathcal{A})=0$ for all $t$. Hence, if $P_1 \in \Delta(\mathcal{V})$, then $P_t \in \Delta(\mathcal{V})$ for all $t$, and the constraint $P \in \Delta_\mu(\mathcal{V})$ is automatically satisfied.
\end{remark}



We now focus on the case that the \ac{WAA} is used on exp-concave losses.
\begin{theorem}
\label{thm:waa}
In Algorithm~\ref{alg:waa}, assume that $\mathcal{X}$ is convex, and each $\ell_t:\R^d\to (-\infty,+\infty]$ is $\alpha$-exp-concave\index{function!exp-concave} on $\mathcal{X}$.
Also, assume $\frac{1}{\alpha}\leq \lambda_{t}\leq \lambda_{t+1}$ for all $t=1,\dots, T-1$, and $\lambda_{T+1}=\lambda_T$. Then, we have that
\begin{align*}
\sum_{t=1}^T \ell_t( \bx_t)
&\leq \E_{\bx \sim Q}\left[\sum_{t=1}^T \ell_t(\bx)\right] + \lambda_T \KL(Q;P_1), \quad \forall Q \in \Delta_\mu(\mathcal{V})~.
\end{align*}
Moreover, if in addition $P_1\in \Delta(\mathcal{V})$ and $\mathcal{V}$ is convex and bounded, then we also have
\[
\sum_{t=1}^T \ell_t( \bx_t)
\leq -\lambda_T \ln \E_{\bx \sim P_1}\left[ e^{-\frac{1}{\lambda_T} \sum_{t=1}^T \ell_t(\bx)}\right],
\]
where $e^{-\infty}:=0$.
\end{theorem}
\begin{proof}
Given that $\ell_t$ is $\alpha$-exp-concave, using Jensen's inequality\index{inequality!Jensen's} (Theorem~\ref{thm:jensen}), we have
\begin{align*}
\lambda_t\ln \E_{\bx \sim P_t} \left[e^{-\frac{\ell_t(\bx)}{\lambda_t} }\right] + \ell_t\left(\bx_t\right)
=\lambda_t\ln \E_{\bx \sim P_t} \left[e^{-\frac{\ell_t(\bx)}{\lambda_t} }\right] + \ell_t\left(\E_{\bx \sim P_t}[\bx]\right)
\leq0,
\end{align*}
for all $\frac{1}{\lambda_t} \leq \alpha$. Using this inequality in Lemma~\ref{lemma:master_lemma_waa}, we have the first result.

For the second bound, since $P_1\in\Delta(\mathcal{V})$ and $\mathcal{V}$ is convex and bounded, every distribution $Q$ with $\KL(Q;P_1)<\infty$ is supported on $\mathcal{V}$, and therefore belongs to $\Delta_\mu(\mathcal{V})$.
Hence, by the first part of the theorem, for every such $Q$,
\[
\sum_{t=1}^T \ell_t(\bx_t)
\leq \E_{\bx\sim Q}\left[\sum_{t=1}^T \ell_t(\bx)\right] + \lambda_T \KL(Q;P_1)~.
\]
Taking the infimum over all $Q\in\Delta(\mathcal{X})$, we obtain
\[
\sum_{t=1}^T \ell_t(\bx_t)
\leq \inf_{Q\in\Delta(\mathcal{X})}
\left\{\E_{\bx\sim Q}\left[\sum_{t=1}^T \ell_t(\bx)\right] + \lambda_T \KL(Q;P_1)\right\}~.
\]
Applying \eqref{eq:aa_alternate_term} with $f=\sum_{t=1}^T \ell_t$, $\lambda=\lambda_T$, and $Q=P_1$, gives us the stated bound.
\end{proof}

So, the \ac{WAA} algorithm with $\lambda_t$ constant over time has the surprising property of having a \emph{constant} regret on exp-concave\index{function!exp-concave} losses with respect to a stochastic competitor.

Unfortunately, the \ac{WAA} update does not have a closed form in general. In fact, it requires the numerical evaluation of the expectation. However, if the distribution is supported on a finite discrete set, we can always calculate the prediction.

\begin{example}
As a practical example, consider $k$ experts $\bu_1,\dots,\bu_k\in\R^d$ and set $\mathcal{X}=\mathcal{V}=\conv\{\bu_1,\dots,\bu_k\}$.
Let the prior $P_1$ be supported on the experts, with $P_1(\{\bu_i\})=P_{1,i}>0$.
Then, \ac{WAA} maintains weights $P_{t,i}$ on the experts and predicts
$\bx_t=\sum_{i=1}^k P_{t,i}\bu_i \in \mathcal{V}$, and guarantees
\[
\sum_{t=1}^T \ell_t(\bx_t)
\leq \min_{i=1,\dots,k} \left(\sum_{t=1}^T \ell_t(\bu_i) + \lambda_T \ln \frac{1}{P_{1,i}} \right)~.
\]
\end{example}

We have proved an upper bound on the regret of \ac{WAA} that uses a randomized comparator, that is, $\E_{\bx \sim Q}\left[\sum_{t=1}^T \ell_t(\bx)\right]$. However, sometimes one would like to prove an upper bound that depends on a deterministic one. Hence, now we show how to link the performance of the stochastic comparator to the performance of a deterministic one.
\begin{theorem}
\label{thm:waa_bounded}
Assume that $\mathcal{V}\subseteq \R^d$ is a convex, closed, bounded set, full-dimensional in $\R^d$, and let $P_1$ be the uniform distribution on it.
Assume that the losses $\ell_t$ are $\alpha$-exp-concave\index{function!exp-concave} on $\mathcal{V}$, and set $\frac{1}{\lambda_t}=\alpha$. Then, Algorithm~\ref{alg:waa} satisfies
\begin{equation}
\label{eq:waa_uniform}
\sum_{t=1}^T (\ell_t(\bx_t) - \ell_t(\bu))
\leq \frac{d}{\alpha} \left[1+\ln\left(\frac{T}{d}+1\right)\right], \quad \forall \bu \in \mathcal{V}~.
\end{equation}
\end{theorem}
\begin{proof}
Fix $\bu \in \mathcal{V}$, and define $\mathcal{V}'=\{\bx \in \R^d: \bx=\frac{T/d}{T/d+1}\bu + \frac{1}{T/d+1}\bw, \bw \in \mathcal{V}\} \subseteq \mathcal{V}$.
Choose $Q$ to be the restriction of $P_1$ to $\mathcal{V}'$, renormalized by $1/\beta$, where $\beta=\int_{\mathcal{V}'} P_1(\mathrm{d} \bx)$.

By the exp-concavity of $\ell_t$, for any $\bx \in \mathcal{V}'$ there exists $\bw \in \mathcal{V}$ such that
\begin{align*}
e^{-\alpha \ell_t(\bx)}
\geq \frac{T/d}{T/d+1}e^{-\alpha \ell_t(\bu)} +\frac{1}{T/d+1}e^{-\alpha \ell_t(\bw)}
\geq \frac{T/d}{T/d+1}e^{-\alpha \ell_t(\bu)}~.
\end{align*}
Hence, we have for any $\bx \in \mathcal{V}'$
\[
e^{-\alpha \sum_{t=1}^T \ell_t(\bx)}
\geq \left(\frac{T/d}{T/d+1}\right)^T e^{-\alpha \sum_{t=1}^T \ell_t(\bu)}
\geq \frac{e^{-\alpha \sum_{t=1}^T \ell_t(\bu)}}{e^d} ,
\]
which implies
\[
\sum_{t=1}^T \E_{\bx \sim Q} [\ell_t(\bx)]
\leq \frac{d}{\alpha}+ \sum_{t=1}^T \ell_t(\bu)~.
\]
Moreover, the \ac{KL} term becomes
\[
\KL(Q;P_1)
= \int_{\mathcal{V}} \ln\frac{Q(\mathrm{d} \bx)}{P_1(\mathrm{d} \bx)} Q(\mathrm{d} \bx)
= -\ln \beta
= -\ln \int_{\mathcal{V}'} P_1(\mathrm{d} \bx)~.
\]
Given that $P_1$ is uniform on $\mathcal{V}$, we have that
\[
\int_{\mathcal{V}'} P_1(\mathrm{d} \bx)
= \frac{Vol(\mathcal{V}')}{Vol(\mathcal{V})}
= \frac{1}{(T/d+1)^d},
\]
because $\mathcal{V}'$ is a scaling and translation of $\mathcal{V}$.

Using Theorem~\ref{thm:waa} and putting everything together, we have the stated bound.
\end{proof}

In Section~\ref{sec:coin}, we will show an example of how to obtain a regret bound against a deterministic competitor without using a uniform prior.

\subsection{ONS and OGD as instantiations of WAA}

In this section, we show that \ac{WAA} is more general than one might think. In fact, we will see that it can be equivalent to \ac{OGD} and to the \ac{ONS}.

Here, we set $\mathcal{X}=\R^d$ and $P_1$ as the Gaussian distribution $\mathcal{N}(\bx_1, \bSigma_1)$, where $\bx_1 \in \mathcal{V}$ and $\bSigma_1\succ 0$. We use the losses $\tilde{\ell}_t(\bx):=\langle\bg_t, \bx\rangle+\frac12 \|\bx-\bx_t\|^2_{\bM_t}$, where $\bM_t\succeq0$ for an arbitrary sequence of $\bg_t$ in $\R^d$. Assume $\mathcal{V}$ is closed and convex. Finally, we set $\lambda_t=\lambda>0$ for all $t$.
Hence, we have
\[
P_t
= \argmin_{P \in \Delta_\mu(\mathcal{V})} \ \KL(P;\hat{P}_t)
= \mathcal{N}(\bx_t, \bSigma_t),
\]
where
\begin{align*}
\bx_t
&= \argmin_{\bx \in \mathcal{V}} \ (\bx-\hat{\bx}_t)^\top \bSigma^{-1}_{t}(\bx-\hat{\bx}_t),\\
\hat{P}_t(\mathrm{d} \bx)
&\propto e^{-\frac12\|\bx-\bx_1\|_{\bSigma^{-1}_1}^2-\frac{1}{\lambda} \sum_{i=1}^{t-1}\left(\langle \bg_i, \bx\rangle +\frac12 \|\bx-\bx_i\|_{\bM_i}^2\right)}
= \mathcal{N}\!\left(\hat{\bx}_t, \bSigma_t\right),\\
\hat{\bx}_t
&=\bSigma_{t}\left(\bSigma_1^{-1} \bx_1+\frac{1}{\lambda}\sum_{i=1}^{t-1} (\bM_i \bx_i-\bg_i)\right),\\
\bSigma_{t}
&=\left(\bSigma_1^{-1}+\frac{1}{\lambda} \sum_{i=1}^{t-1} \bM_i\right)^{-1}~.
\end{align*}
From the above, we clearly have $\bx_t=\E_{\bx \sim P_t}[\bx]$.

However, $\bx_t$ can also be written as
\[
\bx_t
= \argmin_{\bx \in \mathcal{V}} \ \frac{\lambda}{2}\|\bx-\bx_1\|^2_{\bSigma_1^{-1}}+\sum_{i=1}^{t-1} \left(\langle \bg_i, \bx\rangle+ \frac12 \|\bx-\bx_i\|_{\bM_i}^2\right),
\]
that is, the \ac{FTRL} solution with regularizer $\psi(\bx)=\frac{\lambda}{2}\|\bx-\bx_1\|^2_{\bSigma_1^{-1}}$ on the surrogate losses $\tilde{\ell}_t$.
This implies that $\bx_t$ is equivalent to the output of the algorithm for the weaker notion of strong convexity we have analyzed in Section~\ref{sec:ons}.
So, using the notation $\bS_{t}=\lambda\bSigma_1^{-1}+ \sum_{i=1}^{t} \bM_i$, we can directly use the bound we proved for it:
\[
\sum_{t=1}^T \tilde{\ell}_t(\bx_t) - \sum_{t=1}^T \tilde{\ell}_t(\bu)
\leq \frac{\lambda}{2} \|\bu-\bx_1\|_{\bSigma^{-1}_1}^2 -\min_{\bx \in \mathcal{V}} \ \frac{\lambda}{2} \|\bx-\bx_1\|^2_{\bSigma^{-1}_1} + \frac{1}{2} \sum_{t=1}^T \|\bg_t\|^2_{\bS^{-1}_t}~.
\]

Consider now that we have losses $\ell_t:\R^d \to (-\infty, +\infty]$, set $\bg_t \in \partial \ell_t(\bx_t)$, and choose $\bM_t$ such that
\[
\sum_{t=1}^T \ell_t(\bx_t) - \sum_{t=1}^T \ell_t(\bu)
\leq \sum_{t=1}^T \tilde{\ell}_t(\bx_t) - \sum_{t=1}^T \tilde{\ell}_t(\bu)~.
\]
So, we use the \ac{WAA} algorithm with the surrogate losses $\tilde{\ell}_t$, to upper bound the regret on the losses $\ell_t$.
Hence, we have the following cases:
\begin{itemize}
\item If $\ell_t$ are exp-concave\index{function!exp-concave} on $\mathcal{V}$, we can set $\bM_t$ proportional to $\bg_t \bg_t^\top$ and we recover the regret upper bound of the \ac{ONS} algorithm\index{Online Newton Step algorithm} in Section~\ref{sec:ons}.
\item For convex losses, we have $\bM_t=\boldsymbol{0}$. Hence, for all $\bu \in \mathcal{V}$, we have
\begin{align*}
\sum_{t=1}^T \ell_t(\bx_t) - \sum_{t=1}^T \ell_t(\bu)
&\leq \sum_{t=1}^T \tilde{\ell}_t(\bx_t) - \sum_{t=1}^T \tilde{\ell}_t(\bu)\\
&\leq \frac{\lambda}{2} \|\bu-\bx_1\|_{\bSigma_1^{-1}}^2 -\min_{\bx \in \mathcal{V}} \ \frac{\lambda}{2} \|\bx-\bx_1\|^2_{\bSigma_1^{-1}} + \frac{1}{2\lambda} \sum_{t=1}^T \|\bg_t\|^2_{\bSigma_1}~.
\end{align*}
If in addition $\bSigma_1=\bI_d$, for all $\bu \in \mathcal{V}$, we have
\begin{align*}
\sum_{t=1}^T \ell_t(\bx_t) - \sum_{t=1}^T \ell_t(\bu)
&\leq \sum_{t=1}^T \tilde{\ell}_t(\bx_t) - \sum_{t=1}^T \tilde{\ell}_t(\bu)\\
&\leq \frac{\lambda}{2} \|\bx_1-\bu\|_2^2 -\min_{\bx \in \mathcal{V}} \ \frac{\lambda}{2} \|\bx-\bx_1\|^2_2 + \frac{1}{2\lambda} \sum_{t=1}^T \|\bg_t\|^2_2~.
\end{align*}
\item For $\beta$-strongly convex\index{function!strongly convex} losses with respect to $\|\cdot\|_2$ on $\mathcal{V}$, we have $\bM_t=\beta \bI_d$. Hence, for $\lambda=1$ and any sequence of covariance matrices with $\bSigma_{1,n}^{-1}\succ 0$ and $\bSigma_{1,n}^{-1}\to \boldsymbol{0}$, for all $\bu \in \mathcal{V}$, the bound above yields in the limit
\[
\sum_{t=1}^T \ell_t(\bx_t) - \sum_{t=1}^T \ell_t(\bu)
\leq \sum_{t=1}^T \tilde{\ell}_t(\bx_t) - \sum_{t=1}^T \tilde{\ell}_t(\bu)
\leq \frac{1}{2} \sum_{t=1}^T \frac{\|\bg_t\|^2_2}{\beta t}~.
\]
\end{itemize}

In the next theorem, we also show that the bound in Lemma~\ref{lemma:master_lemma_waa} is powerful enough to capture all the cases above.
\begin{theorem}
Assume that $\mathcal{V}$ is closed and convex.
Let $\mathcal{X}=\R^d$ and run \ac{WAA} on the losses $\tilde{\ell}_t(\bx):=\langle\bg_t, \bx\rangle+\frac12 \|\bx-\bx_t\|^2_{\bM_t}$, where $\bg_t \in \R^d$ and $\bM_t \succeq 0$ are arbitrary for all $t$. Set $P_1=\mathcal{N}(\bx_1, \bSigma_1)$, where $\bx_1 \in \mathcal{V}$ and $\bSigma_1 \succ 0$. Set $\lambda_t=\lambda>0$ for all $t$. Then, for all $\bu \in \mathcal{V}$, we have
\[
\sum_{t=1}^T \left(\tilde{\ell}_t(\bx_t)-\tilde{\ell}_t(\bu)\right)
\leq \frac{\lambda}{2}  \|\bx_1-\bu\|^2_{\bSigma^{-1}_1}  + \frac{1}{2 \lambda}\sum_{t=1}^T \|\bg_t\|^2_{\bSigma_{t+1}},
\]
where $\bSigma_{t+1}=\left(\bSigma_1^{-1}+\frac{1}{\lambda} \sum_{i=1}^{t} \bM_i\right)^{-1}$ for all $t$.
\end{theorem}
\begin{proof}
Select $Q=\mathcal{N}(\bu, \bC)$ for any $\bu \in \mathcal{V}$ and we will specify $\bC$ in the following.

Observe that the quadratic nature of the losses allows us to easily go from a stochastic to a deterministic competitor:
\[
\E_{\bx \sim Q}[\tilde{\ell}_t(\bx)]
= \langle \bg_t, \bu\rangle + \frac{\Tr[\bC \bM_t]}{2}+ \frac{\|\bu-\bx_t\|^2_{\bM_t}}{2}
= \tilde{\ell}_t(\bu)+\frac{\Tr[\bC \bM_t]}{2} ~.
\]
Hence, we have
\[
\sum_{t=1}^T \left(\tilde{\ell}_t(\bx_t)-\tilde{\ell}_t(\bu)\right)
= \sum_{t=1}^T \left(\tilde{\ell}_t(\bx_t)-\E_{\bx\sim Q}[\tilde{\ell}_t(\bx)]\right)+\frac12 \Tr\left(\bC \sum_{t=1}^T \bM_t\right)~.
\]
Now, we focus on the terms in the bound in Lemma~\ref{lemma:master_lemma_waa}. For the terms in the sum, we have
\begin{align}
&\lambda \ln\E_{\bx\sim P_t}\left[e^{-\frac{1}{\lambda} \left(\tilde{\ell}_t(\bx)-\tilde{\ell}_t(\bx_t)\right)}\right] \nonumber \\
&\quad= \lambda \ln\E_{\bx\sim P_t}\left[e^{-\frac{1}{\lambda} \left(\langle \bg_t, \bx-\bx_t\rangle + \frac12 \|\bx-\bx_t\|^2_{\bM_t}\right)}\right]
= \frac{\bg_t^\top \bSigma_{t+1}\bg_t}{2 \lambda} - \frac{\lambda}{2}\ln \frac{|\bSigma_{t}|}{|\bSigma_{t+1}|}, \label{eq:from_waa_to_ogd_eq1}
\end{align}
where we leave the verification of the last equality as an exercise (see Problem~\ref{exercise:gaussian_equality}).
For the \ac{KL} divergence term, we have
\[
\KL(Q;P_1)
= \frac12 \left(\ln\frac{|\bSigma_1|}{|\bC|}+\Tr(\bC \bSigma^{-1}_1)-d+ (\bx_1-\bu)^\top \bSigma^{-1}_1(\bx_1-\bu) \right)~.
\]
Putting everything together, we obtain
\begin{align*}
&\frac{\lambda}{2} \left(\ln\frac{|\bSigma_1|}{|\bC|}+\Tr(\bC \bSigma^{-1}_1)-d+ (\bx_1-\bu)^\top \bSigma^{-1}_1(\bx_1-\bu) \right) + \frac{1}{2 \lambda}\sum_{t=1}^T \bg_t^\top \bSigma_{t+1}\bg_t \\
&\quad - \frac{\lambda}{2}\ln \frac{|\bSigma_{1}|}{|\bSigma_{T+1}|} + \frac{1}{2} \Tr\left[\bC \sum_{t=1}^T \bM_t\right]~.
\end{align*}
We now select $\bC=\bSigma_{T+1}=(\bSigma_1^{-1}+\frac{1}{\lambda} \sum_{t=1}^T \bM_t)^{-1}$, which minimizes the bound, to have the stated bound.
\end{proof}

\index{Weighted Average Algorithm|)textbf}

\section{The Aggregating Algorithm and Mixable Losses}

\index{Aggregating Algorithm|(textbf}
Here, we show how to obtain a guarantee similar to the one of Theorem~\ref{thm:waa} for a larger class of loss functions.
As above, $P_1$ is a probability distribution on $\mathcal{X}$; whenever densities with respect to a background measure are written, they are understood with respect to the reference measure $m$.

In the case of $\alpha$-exp-concave\index{function!exp-concave} losses, we used the fact that
\[
\lambda_t\ln \E_{\bx \sim P_t} \left[e^{-\frac{1}{\lambda_t} \ell_t(\bx)}\right] + \ell_t\left(\E_{\bx \sim P_t}[\bx]\right)
\leq0,
\]
for all $\frac{1}{\lambda_t} \leq \alpha$.
Now, we want to extend the class of functions where we can use a similar inequality. So, we introduce a more general class of losses, the \emph{$\alpha$-mixable} ones.
\index{function!mixable|(textbf}
\begin{definition}
Let $\mathcal{V}\subseteq \mathcal{X}$ and $f:\mathcal{X}\to \R$. We say that $f$ is \textbf{$\alpha$-mixable} on $\mathcal{V}$ if there exists a mapping $s:\Delta(\mathcal{X}) \to \mathcal{V}$ called \textbf{substitution function}\index{substitution function|textbf} such that
\[
\frac{1}{\alpha}\ln \E_{\bx \sim P} \left[e^{-\alpha f(\bx)}\right] + f\left(s(P)\right) \leq0, \quad \forall P \in \Delta(\mathcal{X})~.
\]
\end{definition}

\begin{proposition}
\label{prop:mixable_less}
Let $\alpha >0$ and $\mathcal{X}\subseteq \R^d$. If $f:\mathcal{X} \to \R$ is $\alpha$-mixable, then $f$ is $\beta$-mixable for any $0<\beta<\alpha$.
\end{proposition}
\begin{proof}
To prove that $f$ is $\beta$-mixable, it is enough to show that $h(\alpha)\geq h(\beta)$ for $h(y)=\left(\E_{\bx \sim P} \left[e^{-y f(\bx)}\right]\right)^\frac{1}{y}$.
Observe that
\begin{align*}
h(\alpha)
&=\left(\E_{\bx \sim P} \left[e^{-\alpha f(\bx)}\right]\right)^\frac{1}{\alpha}
= \left(\E_{\bx \sim P} \left[\left(e^{-f(\bx)}\right)^\alpha\right]\right)^\frac{1}{\alpha}\\
&=\left(\E_{\bx \sim P} \left[\left(e^{-f(\bx)}\right)^{\beta\frac{\alpha}{\beta}} \right]\right)^\frac{1}{\alpha}
\geq \left(\E_{\bx \sim P} \left[\left(e^{-f(\bx)}\right)^\beta\right]\right)^\frac{1}{\beta}
=h(\beta),
\end{align*}
where we used Jensen's inequality\index{inequality!Jensen's} (Theorem~\ref{thm:jensen}) on the convex function $x \mapsto x^\frac{\alpha}{\beta}$.
\end{proof}

There is an additional caveat: the substitution function here depends on $f$, and in online learning, we only know the loss function after producing our prediction. So, in our setting, we need to find a generic substitution function that holds for a \emph{class} of loss functions.

Hence, we consider $\ell_t(\bx)=\ell(\bx,\by_t)$ where $\by_t \in \mathcal{Y}$, and we require
\[
    \frac{1}{\alpha}\ln \E_{\bx \sim P} \left[e^{-\alpha \ell(\bx,\by)}\right] + \ell\left(s(P),\by\right)
    \leq 0, \quad \forall P \in \Delta(\mathcal{X}), \forall \by \in \mathcal{Y}~.
\]
\index{function!mixable|)textbf}
An example of such losses is given in the following proposition.
\begin{proposition}
\label{prop:logistic_loss_mixable}
Define the softmax $\sigma: \R^K \to \Delta_{>0}^{K-1}:=\{\bx \in \Delta^{K-1}: x_i>0, i=1, \dots, K\}$ as $\sigma(\bz)_k=\frac{\exp(z_k)}{\sum_{i=1}^K \exp(z_i)}$.\index{softmax}
The multiclass logistic loss\index{logistic loss!multiclass}, also referred to as softmax cross-entropy loss, $\ell:\R^K \times \{1, \dots, K\} \to \R$, defined as $\ell(\bx, y) = -\ln(\sigma(\bx)_y)$, is 1-mixable\index{function!mixable}.
\end{proposition}
\begin{proof}
The proof is by construction: define the mapping $s$ as $s(P)=\ln(\E_{\bx\sim P}[\sigma(\bx)])$ for any distribution $P \in \Delta(\R^K)$, where the logarithm is entry-wise.
So, for any $y \in \{1, \dots, K\}$, we have
\begin{align*}
\E_{\bx\sim P}\left[e^{-\ell(\bx, y)}\right]
= \E_{\bx\sim P}[\sigma(\bx)_y ]
= \sigma\left(\ln \E_{\bx\sim P}[\sigma(\bx) ]\right)_y
= \sigma(s( P ))_y
= e^{-\ell(s( P ), y)},
\end{align*}
where the third equality uses $\sigma(\ln(\bp)) = \bp$ for all $\bp \in \Delta_{>0}^{K-1}$. Thus, $\ell$ is 1-mixable.
\end{proof}

It is worth stressing that not all convex losses are mixable\index{function!mixable}, as proved next.
\begin{proposition}
Let $\mathcal{V}=\mathcal{X}=\R$.
The loss $\ell:\R\times\{-1,1\}\to\mathbb R$, $\ell(x,y)=|x-y|$, is not mixable.
\end{proposition}
\begin{proof}
If the loss were mixable, there would exist $\alpha>0$ and a substitution function $s$ such that
\[
    \frac{1}{\alpha}\ln \E_{x \sim P} \left[e^{-\alpha |x-y|}\right] + |s(P)-y|
    \leq 0, \quad \forall P \in \Delta(\mathcal{X}), y \in \{-1,1\}~.
\]

Consider $P(-1)=P(1)=1/2$.
We need to find a $\mu \in \R$ and $\alpha > 0$ such that
\begin{align*}
\E_{x \sim P} \left[e^{-\alpha |x-1|}\right]\leq e^{-\alpha |\mu-1|} \quad \text{ and } \quad
\E_{x \sim P} \left[e^{-\alpha |x+1|}\right]\leq e^{-\alpha |\mu+1|}~.
\end{align*}
Let's calculate the expectations:
\begin{align*}
\E_{x \sim P} \left[e^{-\alpha |x-1|}\right]=\frac12+\frac12e^{-2\alpha } \quad \text{ and } \quad
\E_{x \sim P} \left[e^{-\alpha |x+1|}\right]=\frac12+\frac12 e^{-2\alpha }~.
\end{align*}
Hence, we need to satisfy
\begin{align*}
-\frac{\ln\left(\frac12+ \frac12 e^{-2\alpha }\right)}{\alpha} \geq  |\mu-1|
\quad\text{ and }\quad
-\frac{\ln\left(\frac12+ \frac12 e^{-2\alpha }\right)}{\alpha} \geq  |\mu+1|~.
\end{align*}
Summing these two inequalities, we have
\[
-2\frac{\ln\left( \frac12+ \frac12 e^{-2\alpha }\right)}{\alpha}
\geq  |\mu-1| + |\mu+1|
\geq 2~.
\]
Solving for $\alpha$, we have $2 \geq e^{-\alpha}+e^{\alpha}$ which has the only solution $\alpha=0$. Hence, the function is not mixable.
\end{proof}

It is clear that the proof of \ac{WAA} holds for $\bx_t=s(P_t)$ and $\lambda_t \geq \frac{1}{\alpha}$, instead of $\bx_t = \E_{\bx \sim P_t} [\bx]$. Moreover, every $\alpha$-exp-concave\index{function!exp-concave} function is $\alpha$-mixable when $\mathcal{V}\equiv \mathcal{X}$ is convex and bounded, because the substitution function is $s(P)=\E_{\bx \sim P}[\bx]$. Note that the boundedness ensures that the expectation exists. This implies that all the following regret guarantees also hold for exp-concave functions.

Equipped with the definition of mixability, we can introduce the \ac{AA} in Algorithm~\ref{alg:aa}. Its regret guarantee is the following one, and its proof is identical to that of \ac{WAA} using $\bx_t=s(P_t)$.
\begin{theorem}
\label{thm:aa}
Let $P_1\in\Delta(\mathcal X)$. Assume $\ell:\mathcal{X}\times \mathcal{Y} \to \R$ is $\alpha$-mixable\index{function!mixable} in the first argument with substitution function $s:\Delta(\mathcal{X}) \to \mathcal{V}$, and $\frac{1}{\alpha}\leq \lambda_{t}\leq \lambda_{t+1}$ for $t=1, \dots, T-1$, and $\lambda_{T+1}=\lambda_T$. Then, Algorithm~\ref{alg:aa} satisfies
\[
\sum_{t=1}^T \ell( \bx_t, \by_t)
\leq \E_{\bx \sim Q}\left[\sum_{t=1}^T \ell(\bx,\by_t)\right] + \lambda_T \KL(Q;P_1), \quad \forall Q \in \Delta(\mathcal{X})~.
\]
Moreover, we also have
\[
\sum_{t=1}^T \ell( \bx_t,\by_t)
\leq -\lambda_T \ln \E_{\bx \sim P_1}\left[ e^{-\frac{1}{\lambda_T} \sum_{t=1}^T \ell(\bx,\by_t)}\right]~.
\]
\end{theorem}

\begin{algorithm}[t]
\caption{Aggregating Algorithm (AA)}
\label{alg:aa}
\begin{algorithmic}[1]
{
    \REQUIRE{Non-empty closed sets $\mathcal{V} \subseteq \mathcal{X}\subseteq \R^d$, $\lambda_1, \dots, \lambda_T>0$, substitution function $s:\Delta(\mathcal{X})\to \mathcal{V}$}
    \STATE{Set $P_1$ to be a distribution supported on $\mathcal{X}$}
    \FOR{$t=1$ {\bfseries to} $T$}
    \STATE{Set $P_{t} \in \argmin_{P \in \Delta(\mathcal{X})} \  \lambda_t\KL(P;P_1)+ \sum_{i=1}^{t-1} \E_{\bx \sim P} [\ell(\bx,\by_i)]$}
    \STATE{Output $\bx_t = s(P_t)$}
    \STATE{Get $\by_t$ and pay the loss $\ell(\bx_t,\by_t)$}
    \ENDFOR
}
\end{algorithmic}
\end{algorithm}

As we did for \ac{WAA}, under additional assumptions, we can also give a regret bound with respect to a deterministic competitor.
\begin{theorem}
\label{thm:mixable_lipschitz}
Let $\mathcal{V}\equiv\mathcal{X}$ be a non-empty closed convex set in $\R^d$, full-dimensional in $\R^d$, and $D:=\max_{\bx, \by \in \mathcal{V}} \ \|\bx-\by\|<\infty$, where $\|\cdot\|$ is an arbitrary norm.
Let $\ell:\mathcal{V}\times \mathcal{Y}\to \R$ and assume that $\ell(\cdot, \by_t)$ is $L_t$-Lipschitz with respect to $\|\cdot\|$ and $\alpha$-mixable\index{function!mixable}.
Then, the Aggregating Algorithm in Algorithm~\ref{alg:aa} with a sequence of $\lambda_t$ such that $\frac{1}{\alpha}\leq \lambda_{t}\leq \lambda_{t+1}$ for $t=1,\dots, T-1$, $\lambda_{T+1}=\lambda_T$, and $P_1$ as the uniform distribution on $\mathcal{V}$ satisfies
\begin{align*}
\sum_{t=1}^T (\ell(\bx_t,\by_t) - \ell(\bu, \by_t))
&\leq d \lambda_T \ln\left(\max\left(1, \frac{\frac{1}{\lambda_T} D \sum_{t=1}^T L_t}{d}\right)\right) + d \lambda_T\\
&\leq 2d \lambda_T \ln\left(\frac{\frac{1}{\lambda_T} D \sum_{t=1}^T L_t}{d}+e\right), \quad \forall \bu \in \mathcal{V}~.
\end{align*}
\end{theorem}
\begin{proof}
We start from the second bound in Theorem~\ref{thm:aa}.
Fix $\theta \in [0,1)$, $\bu \in \mathcal{V}$, and define $\mathcal{V}'=\{\bx \in \R^d: \bx=\theta \bu + (1-\theta)\bw, \bw \in \mathcal{V}\} \subseteq \mathcal{V}$. Choose $P'_1$ to be the restriction of $P_1$ to $\mathcal{V}'$, renormalized to be a probability distribution. That is, $P'_1(\mathcal{A}):=\frac{P_1(\mathcal{A}\cap\mathcal{V}')}{\beta}$ for any $\mathcal{A}\subseteq \mathcal{V}$ measurable, where $\beta= P_1(\mathcal{V}')=(1-\theta)^d$. Since $P_1$ is the uniform distribution, $P'_1$ is uniform over $\mathcal{V}'$.
So, we have
\begin{align*}
\E_{\bx \sim P_1} \left[e^{-\frac{1}{\lambda_T} \sum_{t=1}^T \ell(\bx, \by_t)}\right]
&\geq (1-\theta)^d \E_{\bx \sim P'_1} \left[e^{-\frac{1}{\lambda_T} \sum_{t=1}^T \ell(\bx,\by_t)}\right] \\
&= (1-\theta)^d \E_{\bx \sim P_1} \left[e^{-\frac{1}{\lambda_T} \sum_{t=1}^T\ell(\theta \bu + (1-\theta)\bx,\by_t)}\right] \\
&= (1-\theta)^d \E_{\bx \sim P_1} \left[e^{-\frac{1}{\lambda_T} \sum_{t=1}^T\ell((1-\theta)(\bx-\bu)+\bu,\by_t )}\right] \\
&\geq (1-\theta)^d \E_{\bx \sim P_1} \left[e^{-\frac{1}{\lambda_T} \sum_{t=1}^T\left(\ell(\bu,\by_t ) + L_t (1-\theta)\|\bx-\bu\|\right)}\right] \\
&\geq (1-\theta)^d \E_{\bx \sim P_1} \left[e^{-\frac{1}{\lambda_T} \sum_{t=1}^T \left(\ell(\bu,\by_t ) +  L_t (1-\theta) D\right)}\right] \\
&= (1-\theta)^d e^{-\frac{1}{\lambda_T} \sum_{t=1}^T \ell(\bu,\by_t )} e^{-\frac{1}{\lambda_T} \sum_{t=1}^T L_t (1-\theta)D}~.
\end{align*}

Putting everything together, we have
\[
\sum_{t=1}^T (\ell(\bx_t,\by_t) - \ell(\bu,\by_t))
\leq d \lambda_T \ln \frac{1}{1-\theta} +  (1-\theta)D \sum_{t=1}^T L_t ~.
\]
Now, we set $\theta$ such that $1-\theta=\min\left(\frac{d \lambda_T}{ D \sum_{t=1}^T L_t},1\right)$. So, we have
$(1-\theta) D \sum_{t=1}^T L_t\leq \lambda_T d$ and $\ln \frac{1}{1-\theta} = \ln\left(\max\left(1, \frac{ \frac{1}{\lambda_T} D \sum_{t=1}^T L_t}{d}\right)\right)$, that gives the stated bound.
\end{proof}

\section{Example of AA: Online Multiclass Logistic Regression}
\label{sec:aa_logistic}

In this section, we show an application of \ac{AA} to obtain logarithmic regret for online multiclass logistic regression.
In this problem, in each round we receive a covariate $\bz_t \in \mathcal{Z} \subseteq \R^d$ and we produce a discrete probability distribution over the $K$ classes as $\hat{\by}_t \in \Delta^{K-1}$. Then, we receive the true class $y_t \in \{1, \dots, K\}$ and we pay the loss $- \ln \hat{\by}_{t,y_t}$.

We could use a linear strategy: $\hat{\by}_t=\sigma(\bX^\top_t \bz_t)$ where $\bX_t \in \mathcal{V}\subseteq \R^{d \times K}$ is the linear classifier at time $t$.
Assuming that the covariates $\bz_t$ have bounded norm $Z$ and that the columns of the matrices in $\mathcal{V}$ have bounded norm $D$, one can show that this problem has exp-concave\index{function!exp-concave} losses, so we could use the \ac{ONS}\index{Online Newton Step algorithm} or the \ac{WAA} algorithm. Unfortunately, the exp-concavity\index{function!exp-concave} is of the order of $\exp(- Z D)$.

Here, we show that we can prove a logarithmic bound that depends only logarithmically on $D$.
The price that we pay for this improved rate is that the algorithm we will use is \emph{improper}, that is, our predictor will not be linear in the covariates $\bz_t$, but we will still measure its regret with respect to a linear competitor.

We will proceed a little bit differently than in the standard \ac{AA} algorithm because we will use two loss functions, $\ell_t:\mathcal{V}\times \mathcal{Y} \to \R$ and $\hat{\ell}: \R^K \times \mathcal{Y}\to \R$ that satisfy a \emph{generalized form of $\alpha$-mixability}:
\[
    \frac{1}{\alpha}\ln \E_{\bx \sim P} \left[e^{-\alpha \ell_t(\bx, \by)}\right] + \hat{\ell}\left(s_t(P), \by\right)
    \leq 0, \quad \forall P \in \Delta(\mathcal{V}), \forall \by \in \mathcal{Y}~.
\]
It is easy to see that Theorem~\ref{thm:aa} extends to this case:
\[
\sum_{t=1}^T \hat{\ell}( \bx_t,\by_t)
\leq -\lambda_T \ln \E_{\bx \sim P_1}\left[ e^{-\frac{1}{\lambda_T} \sum_{t=1}^T \ell_t(\bx,\by_t)}\right]~.
\]
Similarly, we have that if $\ell_t$ is $L_t$-Lipschitz in its first argument, we have the corresponding version of Theorem~\ref{thm:mixable_lipschitz}:
\begin{equation}
\label{eq:mixable_lipschitz_extended}
\sum_{t=1}^T (\hat{\ell}(\bx_t,\by_t) - \ell_t(\bu, \by_t))
\leq 2 \dim(\mathcal{V}) \lambda_T \ln\left(\frac{\frac{1}{\lambda_T} D \sum_{t=1}^T L_t}{\dim(\mathcal{V})}+e\right) \quad \forall \bu \in \mathcal{V}~.
\end{equation}
In particular, from Proposition~\ref{prop:logistic_loss_mixable}, we will define $\ell_t(\bX, y) = -\ln (\sigma(\bX^\top \bz_t)_{y})$, $s_t(P) = \ln \E_{\bX\sim P}[\sigma(\bX^\top \bz_t)]$, and $\hat{\ell}(\bx, y)=-\ln (\sigma(\bx)_{y})$.

\begin{algorithm}[t]
\caption{Aggregating Algorithm for Improper Online Multiclass Logistic Regression}
\label{alg:aa_logistic}
\begin{algorithmic}[1]
{
    \REQUIRE{Non-empty bounded closed convex set $\mathcal{V} \subseteq \R^{d\times K}$, full-dimensional in $\R^{d\times K}$}
    \STATE{Set $P_1$ to the uniform distribution over $\mathcal{V}$}
    \FOR{$t=1$ {\bfseries to} $T$}
    \STATE{Set $P_t(\mathrm{d}\bX) = \frac{\exp\left(\sum_{i=1}^{t-1}\ln(\sigma(\bX^\top\bz_i)_{y_i})\right) P_1(\mathrm{d}\bX)}
  {\int_{\mathcal V}\exp\left(\sum_{i=1}^{t-1}\ln(\sigma(\bU^\top\bz_i)_{y_i})\right) P_1(\mathrm{d}\bU)} \in \Delta(\mathcal V)$}
    \STATE{Obtain $\bz_t \in \mathcal{Z}$}
    \STATE{Output $\bx_t=\ln \E_{\bX \sim P_t}[\sigma(\bX^\top \bz_t)] \in \R^{K}$}
    \STATE{Obtain $y_t$ and pay $ -\ln \sigma(\bx_{t})_{y_t}$}
    \ENDFOR
}
\end{algorithmic}
\end{algorithm}

\begin{theorem}
\label{thm:aa_logistic}
Let $\mathcal{B} \subset \R^d$ be closed, convex, and $\max_{\bx, \by \in \mathcal{B}} \ \|\bx-\by\|_2\leq D$.
Assume that $\|\bz_t\|_2\leq Z$ for all $t$ and $\mathcal{V}=\{\bX=[\bx_1, \dots, \bx_K] \in \R^{d\times K}: \bx_i \in \mathcal{B}\}$, full-dimensional in $\R^{d\times K}$.
Then, Algorithm~\ref{alg:aa_logistic} satisfies
\[
-\sum_{t=1}^T \ln (\sigma(\bx_t)_{y_t}) + \sum_{t=1}^T \ln (\sigma(\bU^\top \bz_t)_{y_t})
\leq 2d\, K \ln\left(\frac{2D\, Z\, T}{d\, K}+e\right), \quad \forall \bU \in \mathcal{V}~.
\]
\end{theorem}
\begin{proof}
Write $\bX=[\bx_1,\dots,\bx_K]$ and, similarly, for any matrix $\bG\in\R^{d\times K}$ write $\bG=[\bg_1,\dots,\bg_K]$, where $\bx_j,\bg_j\in\R^d$ denote the $j$-th columns.
We have
\[
\nabla_{\bx_j} \ell_t(\bX,y)
= (\sigma(\bX^\top \bz_t)_j-\indevent{\{y=j\}})\bz_t, \quad \forall j \in \{1,\dots,K\}~.
\]
Define $\|\bX\|_{2,\infty}:=\max_{j\in\{1,\dots,K\}}\|\bx_j\|_2$. By Lemma~\ref{lemma:dual_group_norm}, its dual norm is
$\|\bG\|_{2,1}:=\sum_{j=1}^K \|\bg_j\|_2$.
Hence,
\[
\|\nabla_{\bX} \ell_t(\bX,y_t)\|_{2,1}
= \|\sigma(\bX^\top \bz_t)-\be_{y_t}\|_1 \|\bz_t\|_2
\leq 2Z~.
\]
Therefore, $\ell_t$ is $2Z$-Lipschitz with respect to $\|\cdot\|_{2,\infty}$.
Moreover, if $\bX,\bU\in\mathcal{V}$, then each pair of corresponding columns satisfies
$\|\bx_j-\bu_j\|_2\leq D$, and thus $\|\bX-\bU\|_{2,\infty}\leq D$.
So, the diameter of $\mathcal{V}$ with respect to $\|\cdot\|_{2,\infty}$ is at most $D$.
Using \eqref{eq:mixable_lipschitz_extended} with $\dim(\mathcal{V})=dK$, $L_t=2Z$, and $\lambda_t=1$, we obtain the stated result.
\end{proof}
\index{Aggregating Algorithm|)textbf}

\section{History Bits}

The \ac{WAA}\index{Weighted Average Algorithm} is from \citet{KivinenW99}, as a simplification of the \ac{AA}\index{Aggregating Algorithm} of \citet{Vovk90} (see also \citet[Appendix A]{Vovk98} for an easier description of the \ac{AA}).
This algorithm is known by many names: \citet{Cesa-BianchiL06} calls it ``exponentially weighted mixture forecaster'', \citet{HazanKKA06,HazanAK07} rediscovered it (see below) and named it ``exponentially weighted online optimization algorithm'', Wouter Koolen calls it simply ``exponential weights algorithm'' in his blog post in 2016 (see below). I prefer to use the name its designers gave it, also because its acronym nicely matches that of the Aggregating Algorithm.

The observation that the \ac{AA}\index{Aggregating Algorithm} also works for infinite sets of experts was made by \citet{Freund96,Freund03}.
Theorem~\ref{thm:waa_bounded} is from \citet{HazanKKA06,HazanAK07}, where they seem to rediscover the \ac{WAA}\index{Weighted Average Algorithm} with uniform prior, but I improved it by adding a $1/d$ term in the logarithm. The proof of \citet{HazanKKA06,HazanAK07} is a generalization of that of \citet{BlumK97,BlumK99} for universal portfolio.

The observation that the \ac{WAA}\index{Weighted Average Algorithm} recovers the \ac{ONS}\index{Online Newton Step algorithm} and the gradient descent algorithm and its bounds is from a blog post\footnote{\url{https://blog.wouterkoolen.info/EW4Quadratic/post.html}} by Wouter Koolen, where it is done for \ac{OGD}, while I derived it for \ac{FTRL}. \citet{vanderHoeven16} has extended this equivalence to \ac{OMD}.
The subtlety of defining the prior on $\mathcal{X}$ instead of on $\mathcal{V}$ is by me, and it allows us to state a single theorem that covers all the cases.

The concept of mixability\index{function!mixable} is introduced in \citet{Vovk01}. Proposition~\ref{prop:mixable_less} is in the proof of \citet[Lemma 9]{Vovk98}.
The mixability of the logistic loss\index{logistic loss}, Proposition~\ref{prop:logistic_loss_mixable}, and the content of Section~\ref{sec:aa_logistic} are from \citet{FosterKLMS18}.
Theorem~\ref{thm:mixable_lipschitz} is a generalization of a similar one for the logistic loss from \citet{FosterKLMS18}.

\section{Exercises}

\begin{exer}
\label{exercise:dv_formula}
Prove the Donsker--Varadhan\index{Donsker--Varadhan variational formula} variational formula in \eqref{eq:aa_alternate_term}. Also identify the distribution attaining the infimum when the normalizing constant is finite and positive.
\end{exer}

\begin{exer}
\label{exercise:gaussian_equality}
Prove the equality in \eqref{eq:from_waa_to_ogd_eq1}.
\end{exer}

\begin{exer}
\label{exercise:square_loss_mixable}
Let $\mathcal{X}=\mathcal{V}=[0,1]$ and $\mathcal{Y}=[0,1]$. Show that the square loss $\ell(x,y)=(x-y)^2$ is mixable for some positive constant $\alpha$. Find an explicit value of $\alpha$ and construct a corresponding substitution function.
\end{exer}

\acresetall

\chapter{Black-Box Reductions}
\index{black-box reduction|(textbf}

In this chapter, we show how to reduce one online learning problem into another, in a \emph{black-box} way. That is, we will use an online convex optimization algorithm to solve a problem different from what it was meant to solve, without looking at its internal workings in any way. The only thing we will change is the input we pass to the algorithm. Why do it? Because you might have optimization software that you cannot modify, or just because designing and analyzing an online learning algorithm in one case might be easier than in another.

\acresetall

\section{Solving Constrained Online Convex Optimization}
\index{black-box reduction!for constrained online convex optimization|(textbf}

We might have an \ac{OCO} algorithm designed for constrained online linear optimization over a feasible set $\mathcal{W} \subseteq \R^d$, and we might wonder how we can use it for online convex optimization on a feasible set $\mathcal{V}\subset \mathcal{W}$. How do we deal with the smaller feasible set?

Let's see a prototypical example of this approach. We have a \ac{OCO} algorithm that outputs in each round $\bz_t \in \mathcal{W}=\R^d_{\geq 0}$ and we want to use it for \ac{OCO} in $\mathcal{V}=\Delta^{d-1}$. So, first of all, we have to transform $\bz_t$ into a vector in the probability simplex\index{probability simplex}. One way to do it could be to normalize it by its L$_1$ norm. So, we predict in each round by $\bx_t=\frac{\bz_t}{\|\bz_t\|_1}$ if $\|\bz_t\|_1\neq 0$ and with any other vector in the simplex if $\|\bz_t\|_1=0$. Given that we changed the prediction, the original regret upper bound no longer holds. So, one solution is to pass to the algorithm \emph{surrogate losses}, so that the regret with the surrogate losses upper bounds the regret on the new problem with the modified predictions. In formulas, we have
\begin{align*}
\Regret_T(\bu)
&= \sum_{t=1}^T(\ell_t(\bx_t)-\ell_t(\bu))
\leq \sum_{t=1}^T \langle \bg_t, \bx_t-\bu\rangle \nonumber \\
&= \sum_{t=1}^T (\langle \bg_t, \bz_t-\bu\rangle + \langle \bg_t, \bx_t-\bz_t\rangle),
\end{align*}
where $\bg_t \in \partial \ell_t(\bx_t)$.
Now, we want to find $\hat{\bg}_t$ such that
\[
\langle \bg_t, \bx_t-\bz_t\rangle
\leq \langle \hat{\bg}_t, \bz_t-\bu\rangle, \quad \forall \bu \in \mathcal{V}~.
\]
If we could do it, we would have
\begin{align*}
\Regret_T(\bu)
&\leq \sum_{t=1}^T \langle \bg_t+ \hat{\bg}_t, \bz_t-\bu\rangle,
\end{align*}
that is the regret of the \ac{OCO} algorithm on the surrogate linear losses $\bx \mapsto \langle \bg_t +\hat{\bg}_t,\bx\rangle$.

Let's see how to find $\hat{\bg}_t$. Using our definition of $\bx_t$, we have that if $\|\bz_t\|_1\neq 0$ then
\begin{align*}
\langle \bg_t, \bx_t-\bz_t\rangle
&= \left\langle \bg_t, \frac{\bz_t}{\|\bz_t\|_1}-\bz_t\right\rangle
= \left\langle \bg_t, \frac{\bz_t}{\|\bz_t\|_1}\right\rangle \left(1-\|\bz_t\|_1\right)\\
&= -\langle \bg_t, \bx_t\rangle \langle \ones_d, \bz_t - \bu \rangle,
\end{align*}
where $\ones_d=[1, \dots ,1]^\top\in \R^d$.
Instead, if $\|\bz_t\|_1=0$ we have
\[
\langle \bg_t, \bx_t-\bz_t\rangle
= \langle \bg_t, \bx_t\rangle
= -\langle \bg_t, \bx_t\rangle \langle \ones_d, \bz_t - \bu \rangle~.
\]
Hence, we can set $\hat{\bg}_t=- \langle \bg_t, \bx_t\rangle \ones_d$, and the surrogate linear losses we will pass to the \ac{OCO} algorithm are
\begin{equation}
\label{eq:constrained_reduction_old}
\tilde{\ell}_t(\bx)=\langle \bg_t - \ones_d \langle \bg_t, \bx_t\rangle,\bx\rangle~.
\end{equation}
In this way, we have
\[
\Regret_T(\bu)
= \sum_{t=1}^T(\ell_t(\bx_t)-\ell_t(\bu))
\leq \sum_{t=1}^T(\tilde{\ell}_t(\bz_t)-\tilde{\ell}_t(\bu))~.
\]
It is worth noting that the norm of the gradients of the surrogate losses is controlled: $\|\bg_t-\ones_d \langle \bg_t, \bx_t\rangle\|_\infty\leq 2\|\bg_t\|_\infty$.

We solved our problem, but this approach does not seem to scale to arbitrarily complex feasible sets. Hence, we need a more general way to solve it. In the following, we will see two complementary ways to do it.

\subsection{Dealing with Constraints using (Non-Euclidean) Projections}
\label{sec:constrained_reduction}

Now, we extend the procedure in the previous section to general convex sets and general norms.
First of all, we need some definitions and basic results.

\begin{definition}
Let $\mathcal{V}\subset \R^d$ be a non-empty, convex, and closed set. The \textbf{distance to a set}\index{distance to a set|textbf} with respect to the norm $\|\cdot\|$ is the function $\dist_{\mathcal{V},\|\cdot\|}:\R^d \to \R_{\geq 0}$ defined as $\dist_{\mathcal{V},\|\cdot\|}(\bx)=\min_{\by \in \mathcal{V}} \|\bx-\by\|$. We also define the \textbf{generalized projection}\index{projection!generalized|textbf} $\Pi_{\mathcal{V}, \|\cdot\|}$ as the set-valued map $\Pi_{\mathcal{V}, \|\cdot\|}(\bx) = \argmin_{\by \in \mathcal{V}} \ \|\bx-\by\|$.
\end{definition}

We have the following properties for the distance to a set\index{distance to a set} function.
\begin{lemma}
Let $\mathcal{V}$ be non-empty, convex, and closed. Then, we have
\begin{itemize}
\item $\dist_{\mathcal{V}, \|\cdot\|}$ is convex.
\item $\dist_{\mathcal{V}, \|\cdot\|}$ is 1-Lipschitz with respect to $\|\cdot\|$.
\item For all $\bx \in \mathcal{V}$, we have $\boldsymbol{0} \in \partial \dist_{\mathcal{V},\|\cdot\|}(\bx)$.
\item For all $\bx \notin \mathcal{V}$, let $\by\in\Pi_{\mathcal V,\|\cdot\|}(\bx)$. Then, we have
\[
\partial \dist_{\mathcal V,\|\cdot\|}(\bx)
= \left\{\bg:\ \|\bg\|_{\star}=1,\ \langle \bg,\bx-\by\rangle=\|\bx-\by\|,\ \langle \bg,\bu-\by\rangle\leq 0\ \ \forall \bu\in\mathcal{V} \right\}~.
\]
\end{itemize}
\end{lemma}
\begin{proof}
For the first property, for any $\bx' \in \Pi_{\mathcal{V},\|\cdot\|}(\bx)$ and any $\bz' \in \Pi_{\mathcal{V},\|\cdot\|}(\bz)$, we have
\begin{align*}
\dist_{\mathcal{V},\|\cdot\|}(\lambda \bx + (1-\lambda) \bz)
&= \min_{\by \in \mathcal{V}} \| \lambda \bx + (1-\lambda) \bz - \by\|\\
&\leq \| \lambda \bx + (1-\lambda) \bz - (\lambda \bx' + (1-\lambda) \bz')\| \\
&= \|\lambda (\bx-\bx') + (1-\lambda)(\bz-\bz')\|\\
&\leq \lambda \dist_{\mathcal{V},\|\cdot\|}(\bx) + (1-\lambda) \dist_{\mathcal{V},\|\cdot\|}(\bz),
\end{align*}
where in the first inequality we used the convexity of $\mathcal{V}$.

For the second property, let $\bx,\bz\in\R^d$ and $\bz'\in \Pi_{\mathcal{V},\|\cdot\|}(\bz)$. Then,
\begin{align*}
\dist_{\mathcal{V},\|\cdot\|}(\bx)
&= \min_{\by \in \mathcal{V}} \|\bx-\by\|
\leq \|\bx-\bz'\|
\leq \|\bx-\bz\|+\|\bz-\bz'\|\\
&= \|\bx-\bz\|+\dist_{\mathcal{V},\|\cdot\|}(\bz)~.
\end{align*}
Hence, $\left|\dist_{\mathcal{V},\|\cdot\|}(\bx)-\dist_{\mathcal{V},\|\cdot\|}(\bz)\right| \leq \|\bx-\bz\|$, where the reverse inequality is obtained by exchanging the roles of $\bx$ and $\bz$.

The third property is immediate using the optimality condition in Theorem~\ref{thm:first_order_subdiff}.


We now prove the fourth property. First, observe that $\|\by-\bx\|=\dist_{\mathcal V,\|\cdot\|}(\bx)$. We now show that every subgradient $\bg\in\partial \dist_{\mathcal V,\|\cdot\|}(\bx)$ belongs to the stated set. We have
\[
0=\dist_{\mathcal V,\|\cdot\|}(\by)
\ge \dist_{\mathcal V,\|\cdot\|}(\bx)+\langle \bg,\by-\bx\rangle
= \|\bx-\by\|-\langle \bg,\bx-\by\rangle,
\]
hence $\|\bx-\by\|\le \langle \bg,\bx-\by\rangle$.
Since $\dist_{\mathcal V,\|\cdot\|}$ is $1$-Lipschitz, by Theorem~\ref{thm:subgradient_lipschitz_dual} every subgradient satisfies $\|\bg\|_{\star}\le 1$. Therefore, by the definition of dual norms, we have
\[
\|\bx-\by\|
\le \langle \bg,\bx-\by\rangle
\le \|\bg\|_{\star}\,\|\bx-\by\|
\le \|\bx-\by\|~.
\]
As $\bx\notin\mathcal V$, we have $\|\bx-\by\|>0$, and thus equality holds throughout:
\[
\|\bg\|_{\star}=1,
\qquad
\langle \bg,\bx-\by\rangle
=\|\bx-\by\|~.
\]
It remains to prove the normal-cone condition. Let $\bu\in\mathcal V$. Since
$\bg\in\partial \dist_{\mathcal V,\|\cdot\|}(\bx)$, the subgradient inequality gives
\[
0=\dist_{\mathcal V,\|\cdot\|}(\bu)
\ge \dist_{\mathcal V,\|\cdot\|}(\bx)+\langle \bg,\bu-\bx\rangle~.
\]
Using $\dist_{\mathcal V,\|\cdot\|}(\bx)=\|\bx-\by\|$ and
$\langle \bg,\bx-\by\rangle=\|\bx-\by\|$, we obtain
\[
0
\ge \|\bx-\by\|+\langle \bg,\bu-\bx\rangle
= \langle \bg,\bx-\by\rangle+\langle \bg,\bu-\bx\rangle
= \langle \bg,\bu-\by\rangle~.
\]
Hence, $\langle \bg,\bu-\by\rangle\le 0$, for all $\bu\in\mathcal V$.

Conversely, let $\bg$ satisfy
\[
\|\bg\|_{\star}=1,
\qquad \langle \bg,\bx-\by\rangle=\|\bx-\by\|,
\qquad \langle \bg,\bu-\by\rangle\le 0\quad \forall \bu\in\mathcal{V}~.
\]
Fix any $\bz\in\R^d$ and any $\bu\in\mathcal{V}$. Since $\|\bg\|_{\star}=1$,
\[
\|\bz-\bu\|\ge \langle \bg,\bz-\bu\rangle
= \langle \bg,\bz-\bx\rangle +\langle \bg,\bx-\by\rangle +\langle \bg,\by-\bu\rangle~.
\]
Using $\langle \bg,\bx-\by\rangle=\|\bx-\by\|=\dist_{\mathcal V,\|\cdot\|}(\bx)$ and $\langle \bg,\by-\bu\rangle=-\langle \bg,\bu-\by\rangle\ge 0$, we obtain
\[
\|\bz-\bu\|
\ge
\langle \bg,\bz-\bx\rangle+\dist_{\mathcal V,\|\cdot\|}(\bx)~.
\]
Since this holds for every $\bu\in\mathcal{V}$, minimizing over $\bu\in\mathcal V$ gives
\[
\dist_{\mathcal V,\|\cdot\|}(\bz)
\ge \dist_{\mathcal V,\|\cdot\|}(\bx)+\langle \bg,\bz-\bx\rangle~.
\]
Hence $\bg\in\partial \dist_{\mathcal V,\|\cdot\|}(\bx)$.
\end{proof}

\begin{example}
If we consider the L$_2$ norm, for all $\bx \notin \mathcal{V}$, we have
\[
\partial \dist_{\mathcal{V},\|\cdot\|_2}(\bx) =
\left\{\frac{\bx-\Pi_{\mathcal{V}}(\bx)}{\|\bx-\Pi_{\mathcal{V}}(\bx)\|_2} \right\},
\]
where $\Pi_{\mathcal{V}}$ is the standard Euclidean projection.
\end{example}

\begin{example}
\label{example:projection_l1}
Note that the projection with respect to a norm different from L$_2$ does not have to be unique.
For example, we consider the L$_1$ norm and $\mathcal{V}=\Delta^{d-1}$ and we have that
\[
\Pi_{\mathcal{V},\|\cdot\|_1}(\bx)
=\argmin_{\by \in \Delta^{d-1}} \ \|\bx-\by\|_1~.
\]
From this, we can see that the generalized projection\index{projection!generalized} is not unique. In fact, if $x_i>1$ for all $i=1, \dots, d$, then $\argmin_{\by \in \Delta^{d-1}} \ \|\bx-\by\|_1=\Delta^{d-1}$. Indeed, we have
\[
\argmin_{\by \in \Delta^{d-1}} \ \|\bx-\by\|_1
= \argmin_{\by \in \Delta^{d-1}} \ \sum_{i=1}^d |x_i-y_i|
= \argmin_{\by \in \Delta^{d-1}} \ \sum_{i=1}^d (x_i-y_i)
= \argmin_{\by \in \Delta^{d-1}} \ \sum_{i=1}^d x_i - 1,
\]
that is constant with respect to $\by$.
Similarly, if $x_i\leq 0$ for all $i=1, \dots, d$, then we have $\argmin_{\by \in \Delta^{d-1}} \ \|\bx-\by\|_1=\Delta^{d-1}$ because
\[
\argmin_{\by \in \Delta^{d-1}} \ \|\bx-\by\|_1
= \argmin_{\by \in \Delta^{d-1}} \ \sum_{i=1}^d |x_i-y_i|
= \argmin_{\by \in \Delta^{d-1}} \ \sum_{i=1}^d (y_i-x_i)
= \argmin_{\by \in \Delta^{d-1}} \ 1-\sum_{i=1}^d x_i~.
\]

Denoting by $\phi(\by)= \|\bx-\by\|_1$, from the optimality condition, we have that $\bz=\Pi_{\Delta^{d-1},\|\cdot\|_1}(\bx)$ iff $\boldsymbol{0} \in \mathcal{N}_{\Delta^{d-1}}(\bz) + \partial \phi(\bz)$, where $\mathcal{N}_{\Delta^{d-1}}(\bz)$ is the normal cone\index{normal cone} to $\Delta^{d-1}$ at $\bz$ (see Example~\ref{example:normal_cone}), that is
\[
\mathcal{N}_{\Delta^{d-1}}(\bz)=\{\alpha \ones_d + \sum_{i:z_i=0} \beta_i \be_i: \alpha \in \R, \beta_i\leq 0\},
\]
where $\ones_d$ is a vector in $\R^d$ composed of all ones.
This implies that if $\mathcal{S}_1(\bx)=\{\by \in \Delta^{d-1}: y_i\geq x_i\}$ and $\mathcal{S}_2(\bx)=\{\by\in \Delta^{d-1}: y_i\leq x_i\}$ are non-empty, then they contain solutions of the generalized projection. In particular, if $\|\bx\|_1<1$ then $\mathcal{S}_1(\bx)\neq \emptyset$ and if $\|\bx\|_1>1$ and $x_i\geq 0$ then $\mathcal{S}_2(\bx)\neq \emptyset$. This implies that if $\bx \in \R^d_{\geq 0}$ then, for example, $\frac{\bx}{\|\bx\|_1} \in \Pi_{\Delta^{d-1},\|\cdot\|_1}(\bx)$ when $\|\bx\|_1>0$ and $\Pi_{\Delta^{d-1}, \|\cdot\|_1}(\bx)=\Delta^{d-1}$ if $\|\bx\|_1=0$. Moreover, we can also calculate a subgradient $\bq$ of $\dist_{\Delta^{d-1}, \|\cdot\|_1}(\bx)$ for $\bx \notin \Delta^{d-1}$ as $-\ones_d$ if $\Pi_{\Delta^{d-1},\|\cdot\|_1}(\bx) \in \mathcal{S}_1(\bx)$ and $\ones_d$ if $\Pi_{\Delta^{d-1},\|\cdot\|_1}(\bx) \in \mathcal{S}_2(\bx)$.
\end{example}

\begin{algorithm}[h]
\caption{Constrained \ac{OCO} through (Non-Euclidean) Projections}
\label{alg:constrained_reduction}
\begin{algorithmic}[1]
{
    \REQUIRE{Non-empty closed convex set $\mathcal{V} \subset \R^d$, \ac{OCO} algorithm $\mathscr{A}$ with feasible set $\mathcal{W}\supset \mathcal{V}$, generic projection operator $\Pi_{\mathcal{V}}$, a way to construct surrogate losses $\tilde{\ell}_t$}
    \FOR{$t=1$ {\bfseries to} $T$}
    \STATE{Get $\bz_t \in \mathcal{W}$ from $\mathscr{A}$}
    \STATE{Output $\bx_t \in \Pi_{\mathcal{V}}(\bz_t)$}
    \STATE{Pay the loss $\ell_t(\bx_t)$, where $\ell_t$ is subdifferentiable on $\mathcal{V}$}
    \STATE{Set $\bg_t \in \partial \ell_t(\bx_t)$}
    \STATE{Send the surrogate loss $\tilde{\ell}_t:\mathcal{W}\to \R$ to $\mathscr{A}$}
    \ENDFOR
}
\end{algorithmic}
\end{algorithm}

We can now state the main theorem.
\begin{theorem}
\label{thm:constrained_reduction}
Denote by $\Regret^\mathscr{A}_T(\bu)$ the regret of an \ac{OCO} algorithm $\mathscr{A}$ with feasible set $\mathcal{W} \subseteq \R^d$. Let $\mathcal{V} \subset \mathcal{W}$ be non-empty, convex, and closed, and $\bq_t \in \partial \dist_{\mathcal{V},\|\cdot\|} (\bz_t)$.
Set in Algorithm~\ref{alg:constrained_reduction} $\Pi_\mathcal{V}=\Pi_{\mathcal{V},\|\cdot\|}$\index{projection!generalized} and one of the following choices of surrogate losses $\tilde{\ell}_t:\mathcal{W}\to \R$:
\begin{itemize}
\item $\tilde{\ell}_t(\bx)=\langle \bg_t, \bx\rangle +\|\bg_t\|_\star \dist_{\mathcal{V},\|\cdot\|}(\bx)$
\item $\tilde{\ell}_t(\bx)=\langle \bg_t + \|\bg_t\|_\star \bq_t, \bx\rangle$;
\item $\tilde{\ell}_t(\bx)=\langle \bg_t, \bx\rangle + \begin{cases} 0, &\text{ if } \langle\bg_t, \bx_t-\bz_t\rangle \leq 0\\
\langle\bg_t, \frac{\bx_t-\bz_t}{\|\bx_t-\bz_t\|}\rangle \dist_{\mathcal{V},\|\cdot\|}(\bx), &\text{ if } \langle\bg_t, \bx_t-\bz_t\rangle > 0 \end{cases}$
\item $\tilde{\ell}_t(\bx)=\langle \bg_t, \bx\rangle + \begin{cases} 0, &\text{ if } \langle\bg_t, \bx_t-\bz_t\rangle \leq 0\\
\langle\bg_t, \frac{\bx_t-\bz_t}{\|\bx_t-\bz_t\|}\rangle \langle \bq_t, \bx\rangle, &\text{ if } \langle\bg_t, \bx_t-\bz_t\rangle > 0 \end{cases}$
\end{itemize}
Then, Algorithm~\ref{alg:constrained_reduction} guarantees
\[
\Regret_T(\bu)
= \sum_{t=1}^T (\ell_t(\bx_t)-\ell_t(\bu))
\leq \Regret^{\mathscr{A}}_T(\bu)
=\sum_{t=1}^T (\tilde{\ell}_t(\bz_t)-\tilde{\ell}_t(\bu))
, \quad \forall \bu \in \mathcal{V}~.
\]
Moreover, for any $\bx\in\mathcal W$ and any $\tilde{\bg}_t \in \partial \tilde{\ell}_t(\bx)$, we have $\|\tilde{\bg}_t\|_\star \leq 2\|\bg_t\|_\star$.
\end{theorem}
\begin{proof}
Note that the second and fourth surrogates are just linearizations of the first and third surrogate losses at $\bz_t$, respectively. Hence, it is enough to prove the regret guarantee for the first and third surrogate losses.

For all of them, we start by observing that
\begin{align}
\Regret_T(\bu)
&= \sum_{t=1}^T(\ell_t(\bx_t)-\ell_t(\bu))
\leq \sum_{t=1}^T \langle \bg_t, \bx_t-\bu\rangle \nonumber \\
&= \sum_{t=1}^T (\langle \bg_t, \bz_t-\bu\rangle + \langle \bg_t, \bx_t-\bz_t\rangle) \label{eq:constrained_reduction_eq1} ~.
\end{align}

For the first surrogate loss, we have
\begin{align*}
\sum_{t=1}^T (\langle \bg_t, \bz_t-\bu\rangle + \langle \bg_t, \bx_t-\bz_t\rangle)
&\leq \sum_{t=1}^T (\langle \bg_t, \bz_t-\bu\rangle + \|\bg_t\|_\star \|\bz_t-\bx_t\|)\\
&= \sum_{t=1}^T (\langle \bg_t, \bz_t-\bu\rangle + \|\bg_t\|_\star \min_{\by \in \mathcal{V}} \ \|\bz_t-\by\|)\\
&=\sum_{t=1}^T (\tilde{\ell}_t(\bz_t) -\tilde{\ell}_t(\bu))~.
\end{align*}

For the third surrogate loss, first of all, observe that this loss is convex because $\dist_{\mathcal{V},\|\cdot\|}$ is multiplied by a non-negative number.
In the case that $\langle\bg_t, \bx_t-\bz_t\rangle\leq 0$, the second term in \eqref{eq:constrained_reduction_eq1} is negative, so we can drop it.
For the case that $\langle\bg_t, \bx_t-\bz_t\rangle> 0$, we have
\begin{align*}
\langle \bg_t, \bz_t-\bu\rangle + \langle \bg_t, \bx_t-\bz_t\rangle
&= \langle \bg_t, \bz_t-\bu\rangle + \left\langle \bg_t, \frac{\bx_t-\bz_t}{\|\bx_t-\bz_t\|}\right\rangle\|\bx_t-\bz_t\|\\
&= \langle \bg_t, \bz_t-\bu\rangle + \left\langle \bg_t, \frac{\bx_t-\bz_t}{\|\bx_t-\bz_t\|}\right\rangle \dist_{\mathcal{V},\|\cdot\|}(\bz_t)\\
&= \tilde{\ell}_t(\bz_t)-\tilde{\ell}_t(\bu)~.
\end{align*}

Finally, the same norm bound holds for all the surrogate subgradients. For the first and third losses, this follows from the fact that every subgradient of $\dist_{\mathcal V,\|\cdot\|}$ has dual norm at most $1$, while the scalar multiplying the distance term is at most $\|\bg_t\|_\star$. For the second and fourth losses, it follows from $\|\bq_t\|_\star\le 1$.
\end{proof}

\begin{remark}
In the case that $\|\cdot\|=\|\cdot\|_2$, we can sharpen the bound on the norm of $\tilde{\bg}_t$ for the third and fourth loss. In fact, we have $\bq_t= \frac{\bz_t-\bx_t}{\|\bz_t-\bx_t\|_2}$ for $\bz_t \neq \bx_t$ and $\boldsymbol{0}$ otherwise. So, we have
\[
\|\tilde{\bg}_t\|_2^2
= \|\bg_t\|_2^2 - \left(\left\langle \bg_t, \frac{\bx_t-\bz_t}{\|\bx_t-\bz_t\|_2}\right\rangle\right)^2
\leq \|\bg_t\|_2^2~.
\]
\end{remark}

\begin{remark}
The first surrogate loss has a very natural interpretation: it is a Lipschitz penalty function that we add to the linearized original losses, where the Lipschitz constant is the same as that of the original loss. Hence, we essentially penalize the algorithm for predicting a point outside of the feasible set.
\end{remark}

\begin{remark}
\label{remark:constrained_reduction_exp_concave}
In the case that the loss functions $\ell_t$ satisfy a weaker form of strong convexity, we can show that the black-box reduction will preserve it. In particular, consider the case that we use an online learning algorithm with feasible set $\mathcal{W}$ with bounded diameter $D$ with respect to $\|\cdot\|_2$, the losses $\ell_t$ are $L$-Lipschitz with respect to $\|\cdot\|_2$ on $\mathcal{V}$ and they also satisfy
\[
\ell_t(\bx)-\ell_t(\bu)
\leq \langle \bg, \bx-\bu\rangle -\frac{\beta}{2} \langle \bg, \bx-\bu\rangle^2, \quad \forall \bu,\bx \in \mathcal{V}, \ \forall \bg \in \partial \ell_t(\bx),
\]
where $\beta \leq \frac{1}{D L}$.
For example, we saw that this is true if the functions $\ell_t$ are exp-concave\index{function!exp-concave} on $\mathcal{V}$.
Now, use projections with respect to $\|\cdot\|_2$ onto $\mathcal{V}$, and the fourth surrogate loss in Theorem~\ref{thm:constrained_reduction}, so we have
$\langle \bg_t, \bx_t -\bu\rangle \leq \langle \tilde{\bg}_t, \bz_t-\bu\rangle$, for all $\bu \in \mathcal{V}$, and $\|\tilde{\bg}_t\|_2\leq \|\bg_t\|_2 \leq L$.
Hence, we have $\langle \tilde{\bg}_t, \bz_t-\bu\rangle \leq \|\tilde{\bg}_t\|_2 \|\bz_t-\bu\|_2 \leq L D\leq \frac{1}{\beta}$ for all $\bu \in \mathcal{V}$, which implies
\[
\ell_t(\bx_t)-\ell_t(\bu)
\leq \langle \bg_t, \bx_t-\bu\rangle -\frac{\beta}{2} \langle \bg_t, \bx_t-\bu\rangle^2
\leq \langle \tilde{\bg}_t, \bz_t-\bu\rangle -\frac{\beta}{2} \langle \tilde{\bg}_t, \bz_t-\bu\rangle^2, \quad \forall \bu \in \mathcal{V},
\]
where the second inequality is due to the fact that $x \mapsto x- \frac{\beta}{2}x^2$ is increasing for $x \leq\frac{1}{\beta}$.
\end{remark}

\begin{example}
\label{example:constrained_1d}
Let's start with a simple example: we have an online algorithm that works in $\R$, and we want to constrain it to $\R_{\geq 0}$. In this case, the choice of the norms does not really matter, so we can choose L$_2$. The projection is simply $x_t=\max(z_t,0)$. The subgradient $q_t$ of the distance function can be chosen as
\[
q_t =
\begin{cases}
-1, & z_t <0\\
0,  & z_t\geq 0
\end{cases}
\]
So, using the fourth surrogate loss function in Theorem~\ref{thm:constrained_reduction}, the surrogate linear losses are $\tilde{\ell}_t(x)=\tilde{g}_t x$ where
\[
\tilde{g}_t
=\begin{cases}
\min(g_t,0), & z_t<0\\
g_t, & z_t \geq 0
\end{cases}
\]
\end{example}

\begin{example}[Regret Matching+]
\index{Regret Matching+ algorithm|(textbf}
Consider projected \ac{OGD} with learning rate $\eta=\frac{\alpha}{\sqrt{T}}$, over the feasible set $\mathcal{W}=\R^d_{\geq 0}$ and with initial prediction $\bz_1=\boldsymbol{0}$. Its regret upper bound is
\[
\sum_{t=1}^T \langle \bg_t, \bz_t - \bu \rangle
\leq \frac12 \left(\frac{\|\bu\|^2_2}{\alpha}+\alpha \max_t \|\bg_t\|^2_2\right) \sqrt{T}, \quad \forall \bu \in \R^d_{\geq 0}~.
\]
From Example~\ref{example:projection_l1}, we can project onto $\mathcal{V}=\Delta^{d-1}$ with respect to $\|\cdot\|_1$, using
\[
\bx_t=
\begin{cases}
\frac{\bz_t}{\|\bz_t\|_1}, &\bz_t\neq \boldsymbol{0}\\
[1/d, \dots, 1/d]^\top, & \bz_t= \boldsymbol{0}
\end{cases}~.
\]
Then, using any of the above surrogate losses in Algorithm~\ref{alg:constrained_reduction}, we have
\begin{align*}
\sum_{t=1}^T \langle \bg_t, \bx_t - \bu \rangle
&\leq \frac12 \left(\frac{\|\bu\|^2_2}{\alpha}+\alpha \max_t \|\tilde{\bg}_t\|^2_2\right) \sqrt{T}, \quad \forall \bu \in \Delta^{d-1},
\end{align*}
and $\|\tilde{\bg}_t\|_2 \leq \sqrt{d} \|\tilde{\bg}_t\|_\infty\leq 2 \sqrt{d} \|\bg_t\|_\infty$. Choosing $\alpha$ proportional to $1/\sqrt{d}$ gives an upper bound proportional to $\sqrt{d T}$, which is the expected one when using an algorithm with a Euclidean geometry.

However, we can do even better! Choose the specific surrogate losses in \eqref{eq:constrained_reduction_old} with the projection $\frac{\bz_t}{\|\bz_t\|_1}$, then it should be easy to realize that $\bx_t$ is independent of $\eta>0$ (left as an exercise for the reader). This means that we can set $\eta=1$, but the regret guarantee of the resulting algorithm will be the one corresponding to the best $\eta>0$ in hindsight:
\begin{align*}
\sum_{t=1}^T \langle \bg_t, \bx_t - \bu \rangle
&\leq \min_{\eta>0} \ \frac{\|\bu\|^2_2}{2\eta} + \frac{\eta}{2}\sum_{t=1}^T\|\bg_t -\ones_d\langle \bg_t, \bx_t\rangle\|^2_2\\
&= \|\bu\|_2 \sqrt{\sum_{t=1}^T\|\bg_t -\ones_d\langle \bg_t, \bx_t\rangle\|^2_2}\\
&\leq 2\sqrt{ d T} \max_t \|\bg_t\|_\infty, \quad \forall \bu \in \Delta^{d-1}~.
\end{align*}
The resulting algorithm is known as Regret Matching+ and, despite the suboptimal dependence on $d$, this is one of the best-performing algorithms in practice for the setting of \ac{LEA}. If instead of projected \ac{OGD} we use \ac{FTRL} with a fixed squared L$_2$ regularizer and feasible set $\mathcal{V}=\R^d_{\geq 0}$, we obtain the original Regret Matching algorithm, see Problem~\ref{exercise:rm}.
\index{Regret Matching+ algorithm|)textbf}
\end{example}

\index{projection!Bregman|(}
\subsection{Dealing with Constraints using Bregman Projections}

Now, we show an alternative approach based on Bregman projections.
\begin{theorem}
\label{thm:constrained_reduction_bregman}
With the notation in Algorithm~\ref{alg:constrained_reduction}, assume $\mathcal{V}\subset\mathcal{W} \subset \interior\mathcal{X}$, where $\mathcal{V}$ and $\mathcal{W}$ are nonempty, closed, and convex.
Let $\psi:\mathcal{X} \to \R$ be $\mu$-strongly convex with respect to $\|\cdot\|$ and $\beta$-smooth with respect to $\|\cdot\|$ on $\interior \mathcal{X}$.
Also, set $\Pi_\mathcal{V}$ to the Bregman projection $\Pi_{\mathcal{V}, B_{\psi}}$, and the surrogate losses
\[
\tilde{\ell}_t(\bx)
= \langle \bg_t, \bx\rangle + \begin{cases}0, & \text{ if } \langle \bg_t, \bx_t-\bz_t\rangle\leq 0\\
\frac{\langle \bg_t, \bx_t-\bz_t\rangle}{\mu \|\bx_t-\bz_t\|^2}\langle \nabla \psi(\bz_t)-\nabla \psi(\bx_t), \bx \rangle, & \text{ if } \langle \bg_t, \bx_t-\bz_t\rangle> 0
\end{cases}~.
\]
Then, we have
\[
\Regret_T(\bu)
= \sum_{t=1}^T (\ell_t(\bx_t)-\ell_t(\bu))
\leq \Regret^{\mathscr{A}}_T(\bu), \quad \forall \bu \in \mathcal{V}~.
\]
Moreover, the surrogate losses are linear, with gradients $\tilde{\bg}_t$ satisfying $\|\tilde{\bg}_t\|_\star \leq (1+\frac{\beta}{\mu})\|\bg_t\|_\star$.
\end{theorem}
\begin{proof}
Once again, we have
\begin{align*}
\Regret_T(\bu)
&= \sum_{t=1}^T(\ell_t(\bx_t)-\ell_t(\bu))
\leq \sum_{t=1}^T \langle \bg_t, \bx_t-\bu\rangle \nonumber \\
&= \sum_{t=1}^T (\langle \bg_t, \bz_t-\bu\rangle + \langle \bg_t, \bx_t-\bz_t\rangle) ~.
\end{align*}
If $\langle \bg_t, \bx_t-\bz_t\rangle\leq 0$, we can just ignore this term.
On the other hand, if $\langle \bg_t, \bx_t-\bz_t\rangle>0$, we have
\begin{align*}
\langle \bg_t, \bx_t -\bz_t\rangle
&\leq \frac{\langle \bg_t, \bx_t-\bz_t\rangle}{\mu \|\bx_t-\bz_t\|^2} \mu \|\bx_t-\bz_t\|^2\\
&\leq \frac{\langle \bg_t, \bx_t-\bz_t\rangle}{\mu \|\bx_t-\bz_t\|^2} \langle \nabla \psi(\bz_t)-\nabla \psi(\bx_t), \bz_t -\bx_t\rangle,
\end{align*}
where in the second inequality we used the strong convexity of $\psi$.
Hence, we have
\begin{align*}
\langle \bg_t, \bx_t-\bu\rangle
&\leq \langle \bg_t , \bz_t-\bu\rangle + \frac{\langle \bg_t, \bx_t-\bz_t\rangle}{\mu \|\bx_t-\bz_t\|^2} \langle \nabla \psi(\bz_t)-\nabla \psi(\bx_t), \bz_t -\bx_t\rangle\\
&\leq \left\langle \bg_t + \frac{\langle \bg_t, \bx_t-\bz_t\rangle}{\mu \|\bx_t-\bz_t\|^2} (\nabla \psi(\bz_t)-\nabla \psi(\bx_t)), \bz_t-\bu\right\rangle,
\end{align*}
where in the second inequality we used the first-order optimality condition
\[
\langle \nabla \psi(\bz_t)-\nabla \psi(\bx_t), \bu - \bx_t\rangle \leq 0, \quad \forall \bu \in \mathcal{V}~.
\]

For the bound on $\|\tilde{\bg}_t\|_\star$, observe that when $\langle \bg_t, \bx_t-\bz_t\rangle>0$ we have
\begin{align*}
\|\tilde{\bg}_t\|_\star
&= \left\|\bg_t + \frac{\langle \bg_t, \bx_t-\bz_t\rangle}{\mu \|\bx_t-\bz_t\|^2}(\nabla \psi(\bz_t)-\nabla \psi(\bx_t))\right\|_\star\\
&\leq \|\bg_t\|_\star + \frac{\langle \bg_t, \bx_t-\bz_t\rangle}{\mu \|\bx_t-\bz_t\|^2}  \|\nabla \psi(\bz_t)-\nabla \psi(\bx_t)\|_\star \\
&\leq \|\bg_t\|_\star + \frac{\langle \bg_t, \bx_t-\bz_t\rangle}{\mu \|\bx_t-\bz_t\|^2}  \beta \|\bz_t-\bx_t\|
\leq \left(1+\frac{\beta}{\mu}\right)\|\bg_t\|_\star~.
\end{align*}
The same bound clearly holds even in the case that $\langle \bg_t, \bx_t-\bz_t\rangle\leq0$.
\end{proof}

\begin{example}
\label{example:bregman_proj_kl}
Let's calculate $\Pi_{\Delta^{d-1}, B_{\psi}}(\bx)$ where $\bx \in \R^d_{> 0}$, $\psi(\bx)=\sum_{i=1}^d x_i \ln x_i$, and we define $0 \ln 0 =0$.
Hence, we have to solve the following optimization problem
\[
\argmin_{\by \in \Delta^{d-1}} \ \sum_{i=1}^d y_i \ln \frac{y_i}{x_i}~.
\]
We can rewrite it as an unconstrained optimization over $d-1$ variables, as
\[
\argmin_{\by \in \R^{d-1}_{\geq0}} \ \sum_{i=1}^{d-1} y_i \ln \frac{y_i}{x_i} + \left(1-\sum_{i=1}^{d-1} y_i\right) \ln \frac{1-\sum_{i=1}^{d-1} y_i}{x_d}~.
\]
Since $x_i>0$ for all $i$, the optimal solution has $y_1,\dots,y_{d-1}$ and $1-\sum_{i=1}^{d-1} y_i$ strictly positive. Hence, the inequality constraints are inactive, and we can use the first-order conditions:
\[
\ln \frac{y_i}{x_i}+1 - \ln \frac{1-\sum_{j=1}^{d-1} y_j}{x_d}-1 = 0, \quad i=1, \dots, d-1,
\]
that gives
\begin{equation}
\label{eq:example:bregman_proj_kl_eq1}
y_i = x_i \frac{1-\sum_{j=1}^{d-1} y_j}{x_d},  \quad i=1, \dots, d-1~.
\end{equation}
Summing over $i=1, \dots, d-1$, we have $\sum_{i=1}^{d-1} y_i = \frac{1-\sum_{i=1}^{d-1} y_i}{x_d} \sum_{i=1}^{d-1} x_i$. Solving it, we have $\frac{1-\sum_{i=1}^{d-1} y_i}{x_d}=\frac{1}{\sum_{i=1}^{d} x_i}$ and using it in \eqref{eq:example:bregman_proj_kl_eq1} gives $y_i = \frac{x_i}{\sum_{j=1}^d x_j}$, for $i=1, \dots, d$.
\end{example}
\index{projection!Bregman|)}

\index{black-box reduction!for constrained online convex optimization|)textbf}

\section{Sleeping Experts}
\label{sec:sleeping_experts}
\index{black-box reduction!for sleeping experts|(textbf}

\index{sleeping experts|(textbf}
Consider now the setting of \ac{LEA} where only a subset of the experts is active in each round. In particular, we have that $a_{t,i}=1$ if expert $i$ is active at time $t$, and $a_{t,i}=0$ if the expert is inactive, that is, \emph{sleeping}.
This setting is useful in the case that some experts might become unavailable in some rounds, but also to model the case that the set of experts is growing over time.

The online algorithm receives at the beginning of each round the information about which experts will be active, and it outputs a distribution over the active experts. So, we constrain the \ac{LEA} algorithm to produce a probability distribution only over the active experts. We also assume that there is always at least one active expert.

Here, we will use a different notion of regret, in particular, our regret with respect to any expert $i$ is defined as\index{regret!sleeping experts|textbf}
\[
\Regret^\text{sleeping}_T(\be_i)
:=\sum_{t=1}^T a_{t,i} (\langle \bg_t, \bx_t\rangle - g_{t,i})~.
\]
In words, we measure the regret against expert $i$ only on rounds where the expert was active.
This notion can be generalized to $\bu\in\R^d_{\geq0}$ as
\[
\Regret^\text{sleeping}_T(\bu)
:= \sum_{t=1}^T \sum_{i=1}^d u_i a_{t,i} (\langle \bg_t, \bx_t\rangle- g_{t,i}), \quad \forall \bu \in \R^d_{\geq 0}~.
\]
If $\sum_i u_i a_{t,i}\leq 1$ for all $t$, then $\bu$ can be interpreted as a fixed sleeping comparator that assigns total mass at most one to the active experts on each round.
In the case that all the experts are active on all rounds and $\bu \in \Delta^{d-1}$, this notion recovers the usual one. However, in the general case, this is a different notion than the one we used in \ac{OLO}.

The reduction we consider transforms an \ac{OLO} learner over $\mathcal{V}=\R^d_{\geq 0}$ to a sleeping expert algorithm.
Hence, we will transform a vector $\bz_t \in \R^d_{\geq 0}$ into a vector in the simplex defined by the active experts, and we also need appropriate surrogate losses. The reduction will be similar to the one we used in the previous section.
For the first part, we construct a probability distribution as
\[
x_{t,i} = \frac{a_{t,i} z_{t,i}}{\sum_{j=1}^d a_{t,j} z_{t,j}}, \quad \forall i=1, \dots, d,
\]
when $\sum_{j=1}^d a_{t,j} z_{t,j}\neq 0$ and an arbitrary one otherwise.
For the second part, from the original linear losses $\ell_t(\bx)=\langle \bg_t, \bx\rangle$ we construct the modified losses $\tilde{\ell}_t(\bx)=\langle\tilde{\bg}_t,\bx\rangle$, where $\tilde{g}_{t,i}=a_{t,i} (g_{t,i}-\langle \bg_t,\bx_t\rangle)$. The overall algorithm is in Algorithm~\ref{alg:sleeping_experts}.

\begin{algorithm}[t]
\caption{Sleeping Experts Reduction}
\label{alg:sleeping_experts}
\begin{algorithmic}[1]
{
    \REQUIRE{Online linear optimization algorithm $\mathscr{A}$ with feasible set $\mathcal{V}=\R^d_{\geq 0}$}
    \FOR{$t=1$ {\bfseries to} $T$}
    \STATE{Get $\bz_t \in \R^d_{\geq 0}$ from $\mathscr{A}$}
    \STATE{Receive the state of the experts $a_{t,i} \in \{0,1\}, i=1, \dots, d$}
    \IF{$\sum_{j=1}^d a_{t,j} z_{t,j}\neq 0$}
    \STATE{Output $x_{t,i}=\frac{a_{t,i} z_{t,i}}{\sum_{j=1}^d a_{t,j} z_{t,j}}, \ i=1, \dots, d$}
    \ELSE
    \STATE{Output an arbitrary distribution $\bx_{t}$ over the active experts}
    \ENDIF
    \STATE{Receive $\bg_t \in \R^d$}
    \STATE{Send the surrogate loss $\tilde{\ell}_t(\bx)=\langle \tilde{\bg}_t, \bx\rangle$ to $\mathscr{A}$, where $\tilde{g}_{t,i}=a_{t,i} (g_{t,i} - \langle \bg_t, \bx_t\rangle), \ i=1, \dots, d$}
    \ENDFOR
}
\end{algorithmic}
\end{algorithm}

With the above definitions, we have
\[
\langle \tilde{\bg}_{t},\bz_t\rangle
= \sum_{i=1}^d a_{t,i}(g_{t,i}-\langle \bg_t,\bx_t\rangle ) z_{t,i}
= \sum_{i=1}^d a_{t,i} z_{t,i} g_{t,i} -\langle \bg_t,\bx_t\rangle \sum_{i=1}^d a_{t,i} z_{t,i}~.
\]
If $\sum_{i=1}^d a_{t,i} z_{t,i}=0$, then, since $a_{t,i}\in\{0,1\}$ and $z_{t,i}\ge 0$, we must have $a_{t,i}z_{t,i}=0$, for all $i=1,\dots,d$.
Therefore, $\langle \tilde{\bg}_{t},\bz_t\rangle = 0$.

On the other hand, if $\sum_{i=1}^d a_{t,i} z_{t,i}\neq 0$, then by the definition of $\bx_t$ we have
\[
\sum_{i=1}^d a_{t,i} z_{t,i} g_{t,i}
= \left(\sum_{j=1}^d a_{t,j} z_{t,j}\right)\langle \bg_t,\bx_t\rangle~.
\]
Substituting this into the previous expression gives $\langle \tilde{\bg}_{t},\bz_t\rangle = 0$.

Hence, in all cases, $\langle \tilde{\bg}_{t},\bz_t\rangle = 0$.
In turn, this implies that
\[
a_{t,i}(\langle \bg_t,\bx_t\rangle - g_{t,i})
= -\tilde{g}_{t,i}
= \langle \tilde{\bg}_{t},\bz_t\rangle - \tilde{g}_{t,i}, \quad i=1,\dots,d~.
\]

In words, we can construct surrogate losses to transform the sleeping expert problem into an \ac{OLO} problem over $\R^d_{\geq 0}$, obtaining that
\[
\Regret^\text{sleeping}_T(\bu)
= \sum_{t=1}^T \sum_{i=1}^d u_i a_{t,i} (\langle \bg_t, \bx_t\rangle- g_{t,i})
= \sum_{t=1}^T \langle \tilde{\bg}_{t},\bz_t-\bu\rangle, \quad \forall \bu \in \R_{\geq 0}^d~.
\]

Formally, we have the following theorem.
\begin{theorem}
\label{thm:sleeping_experts_reduction}
Let $a_{t,i}\in\{0,1\}$ denote whether expert $i$ is active at time $t$, and assume that at least one expert is active on every round.
Run Algorithm~\ref{alg:sleeping_experts} with an \ac{OLO} algorithm $\mathscr{A}$ over $\R^d_{\geq 0}$.
If $\mathscr{A}$ guarantees
\[
\sum_{t=1}^T \langle \tilde{\bg}_t,\bz_t-\bu\rangle
\leq R_T^\mathscr{A}(\bu),
\quad \forall \bu\in\R^d_{\geq0},
\]
then Algorithm~\ref{alg:sleeping_experts} satisfies
\[
\Regret^\text{sleeping}_T(\bu)
= \sum_{t=1}^T \sum_{i=1}^d u_i a_{t,i} (\langle \bg_t, \bx_t\rangle- g_{t,i})
\leq R_T^\mathscr{A}(\bu), \quad \forall \bu\in\R^d_{\geq0}~.
\]
Moreover, if the active losses have range at most $R$ on every round, that is,
$\max_{i:a_{t,i}=1} g_{t,i} - \min_{i:a_{t,i}=1} g_{t,i} \leq R$, then $\|\tilde{\bg}_t\|_\infty\leq R$ for all $t=1,\dots,T$.
\end{theorem}

\begin{remark}
\label{remark:sleeping_experts}
Clearly, if we have an algorithm for \ac{LEA}, we can use it in the reduction because $\Delta^{d-1}\subset \R^d_{\geq 0}$.
\end{remark}

\begin{remark}
The above definitions and reduction generalize to the setting where $a_{t,i} \in [0,1]$, see Problem~\ref{exercise:confidence_sleeping_experts}. Here, $a_{t,i}$ denotes the confidence that expert $i$ has in its prediction at time $t$, where $0$ means that the expert abstains from producing a prediction.
\end{remark}

\index{sleeping experts|)textbf}
\index{black-box reduction!for sleeping experts|)textbf}

\section{Reduction to Magnitude and Direction}
\label{sec:mag_dir}

\index{black-box reduction!to magnitude and direction|(textbf}

Here, we will see how to use a one-dimensional algorithm for unconstrained \ac{OCO} to solve an \ac{OCO} problem in any number of dimensions and using any (reasonable) norm.

This reduction requires an unconstrained \ac{OCO} algorithm for the one-dimensional case and an algorithm for learning in $d$-dimensional (or infinite-dimensional) balls.
Given these two learners, we decompose the problem of learning a vector $\bx_t$ into the problem of learning a \emph{direction} and a \emph{magnitude}. The regret of this procedure turns out to be just the sum of the regret of the two learners.
We can formalize this idea in the following theorem.

\begin{algorithm}[t]
   \caption{Learning Magnitude and Direction Separately}
   \label{alg:onedimred}
\begin{algorithmic}[1]
   \REQUIRE One-dimensional online learning algorithm $\mathscr{A}_{\text{1d}}$, Online learning algorithm $\mathscr{A}_{B}$ with feasible set equal to the unit ball $\mathcal{B}\subset \R^d$ with respect to $\|\cdot\|$
   \FOR{$t=1$ {\bfseries to} $T$}
   \STATE Get point $z_t\in \R$ from $\mathscr{A}_{\text{1d}}$, and get point $\tilde{\bx}_t \in \mathcal{B}$ from $\mathscr{A}_{B}$
   \STATE Play $\bx_t = z_t \tilde{\bx}_t\in \R^d$
   \STATE{Pay the loss $\ell_t(\bx_t)$, where $\ell_t$ is subdifferentiable on $\R^d$}
   \STATE Set $\bg_t \in \partial \ell_t(\bx_t)$ and $s_t = \langle \bg_t, \tilde{\bx}_t\rangle$
   \STATE Send $\ell^{\mathscr{A}_{\text{1d}}}_t(x)=s_t x$ as the $t$-th linear loss to $\mathscr{A}_{\text{1d}}$
   \STATE Send $\ell_t^{\mathscr{A}_{B}}(\bx)=\langle \bg_t, \bx\rangle$ as the $t$-th linear loss to $\mathscr{A}_{B}$
   \ENDFOR
\end{algorithmic}
\end{algorithm}

\begin{theorem}
\label{thm:reduction_direction_magnitude}
Denote by $\Regret^{\mathscr{A}_{B}}_T(\bu)$ the linear regret of algorithm $\mathscr{A}_{B}$ for any $\bu$ in the unit ball with respect to a norm $\|\cdot\|$, and $\Regret^{\mathscr{A}_{\text{1d}}}_T(u)$ the linear regret of algorithm $\mathscr{A}_{\text{1d}}$ for any competitor $u \in\R$. Then, for any $\bu \in \R^d\setminus\{\boldsymbol{0}\}$, Algorithm~\ref{alg:onedimred} guarantees
\begin{align*}
\Regret_T(\bu)
&= \sum_{t=1}^T (\ell_t(\bx_t) - \ell_t(\bu))
\leq \sum_{t=1}^T \langle \bg_t, \bx_t -\bu\rangle\\
&= \Regret^{\mathscr{A}_{\text{1d}}}_T(\|\bu\|) + \|\bu\| \Regret^{\mathscr{A}_{B}}_T\left(\frac{\bu}{\|\bu\|}\right),
\end{align*}
and $\Regret_T(\boldsymbol{0})\leq \Regret^{\mathscr{A}_{\text{1d}}}_T(0)$.
Further, we have $|s_t|\le \|\bg_t\|_\star$ for all $t$.
\end{theorem}
\begin{proof}
First, observe that $|s_t|\le \|\bg_t\|_\star \|\tilde{\bx}_t\| \le \|\bg_t\|_\star$ since $\|\tilde{\bx}_t\|\le 1$ for all $t$. Now, assuming $\bu \neq \boldsymbol{0}$ compute:
\begin{align*}
\Regret_T(\bu)
&=\sum_{t=1}^T \ell_t(\bx_t) - \sum_{t=1}^T \ell_t(\bu)
\leq \sum_{t=1}^T \langle \bg_t, \bx_t - \bu\rangle
= \sum_{t=1}^T \langle \bg_t, z_t \tilde{\bx}_t\rangle - \langle \bg_t, \bu\rangle\\
&= \underbrace{\sum_{t=1}^T \left(\langle \bg_t, \tilde{\bx}_t \rangle z_t - \langle \bg_t, \tilde{\bx}_t\rangle \|\bu\|\right)}_{\text{linear regret of }\mathscr{A}_{\text{1d}}\text{ at } \|\bu\| \in \R} + \sum_{t=1}^T \left(\langle \bg_t, \tilde{\bx}_t \rangle \|\bu\|  - \langle \bg_t, \bu \rangle \right)\\
&= \Regret^{\mathscr{A}_{\text{1d}}}_T(\|\bu\|) + \sum_{t=1}^T \left(\langle \bg_t, \tilde{\bx}_t\rangle \|\bu\|  - \langle \bg_t, \bu\rangle \right)\\
&= \Regret^{\mathscr{A}_{\text{1d}}}_T(\|\bu\|) +\|\bu\| \underbrace{\sum_{t=1}^T \left(\langle \bg_t, \tilde{\bx}_t\rangle - \left\langle \bg_t, \frac{\bu}{\|\bu\|}\right\rangle \right)}_{\text{linear regret of }\mathscr{A}_{B}\text{ at } \frac{\bu}{\|\bu\|} \in \R^d}\\
&= \Regret^{\mathscr{A}_{\text{1d}}}_T(\|\bu\|) + \|\bu\| \Regret^{\mathscr{A}_{B}}_T\left(\frac{\bu}{\|\bu\|}\right)~.
\end{align*}
The case $\bu = \boldsymbol{0}$ follows similarly.
\end{proof}

\begin{remark}
Note that the direction vector is not constrained to have norm equal to 1, which would be a non-convex constraint, yet this does not seem to affect the proof.
\end{remark}

We will use this theorem in Section~\ref{sec:parameterfree-direction-magnitude}.
\index{black-box reduction!to magnitude and direction|)textbf}

\section{History Bits}
\index{black-box reduction!for constrained online convex optimization|(}
As far as I know, the idea of using a black-box reduction for constrained optimization does not appear in the offline and stochastic optimization literature.
The reduction from learning from $\R^d_{\geq 0}$ to $\Delta^{d-1}$ is a classic approach hidden in most of the \ac{LEA} algorithms. \citet[Section 2.1]{Cesa-BianchiL06} describes a similar version but through the formalism of potential functions\index{potential function}, which does not result in a black-box reduction.
\citet{CutkoskyO18} proposed the first reduction from constrained \ac{OCO} to unconstrained \ac{OCO} described in Section~\ref{sec:constrained_reduction}. However, their characterization of the subgradient of the distance-to-a-set function is wrong, and I fixed it here. \citet{Cutkosky20} proposed the third surrogate loss. The second and fourth surrogates are new. \citet{FarinaKS19b} independently of the above work proved Theorem~\ref{thm:constrained_reduction_bregman}.
Remark~\ref{remark:constrained_reduction_exp_concave} is taken from the method in \citet{MhammediG23}.
\index{black-box reduction!for constrained online convex optimization|)}

\index{Regret Matching algorithm|(}
\index{Regret Matching+ algorithm|(}
Regret Matching (RM) was proposed by \citet{HartMC00} and Regret Matching+ (RM+) by \citet{TammelinBJB15}. RM was developed to find correlated equilibria in two-player games and is commonly used to minimize regret over the simplex. RM+ is a modification of RM designed to accelerate convergence and used to effectively solve the game of Heads-up Limit Texas Hold'em poker \citep{BowlingBBJT15}.
While the exact reason why RM+ has better empirical convergence than any other similar algorithm is unclear, I suggested on X.com on February 14th, 2020, that it might be due to its scale-freeness\index{algorithm!scale-free}.\footnote{\url{https://x.com/bremen79/status/1228318344195035142}}
The observation that RM can be expressed as \ac{FTRL} is due to Nicol\`{o} Cesa-Bianchi in an unpublished manuscript titled ``The Joys of Regret Matching'' dated October 14, 2015 (see Problem~\ref{exercise:rm}). That result was shared with a number of people (including me) and then later appeared in some papers too, unfortunately, without proper credit. The observation that RM+ can be obtained by projected \ac{OGD} is by \citet{FarinaKS21} and independently by \citet{FlaspohlerOCMOOM21}.
RM and RM+ were extended to $p$-norms by \citet{Cesa-BianchiL01,Cesa-BianchiL03}\footnote{Note that their proof contains a small error that is fixed in the errata of \citet{Cesa-BianchiL06}.}
and \cite{DOrazio20}, respectively. Tuning $p$ with knowledge of $d$, one gets the optimal dependence in $d$ but still without a learning rate or a regularizer weight to tune. Later, \citet{FlaspohlerOCMOOM21} rediscovered both of these results.
\index{Regret Matching algorithm|)}
\index{Regret Matching+ algorithm|)}

\index{sleeping experts|(}
The setting of sleeping experts has been proposed by \citet{Blum97} and \citet{FreundSSW97}.
\index{black-box reduction!for sleeping experts|(}
There is also a different notion of regret for the sleeping-experts setting based on the best ordering of the experts~\citep{KleinbergNMS08, KleinbergNMS10}; however, it does not seem well suited to combining different algorithms as experts, as we will see in Section~\ref{sec:strongly_adaptive_regret}.
The reduction in Section~\ref{sec:sleeping_experts} is an extension of the one from \citet{GaillardSV14} that was designed to reduce the sleeping experts problem to the \ac{LEA} problem rather than \ac{OLO} in $\R^d_{\geq 0}$. I also added the minor improvement of removing the constant term from the surrogate losses. Such reduction is implicitly used by \citet{LuoS15} and by \citet{JunOWW17,JunOWW17b}.
\index{sleeping experts|)}
\index{black-box reduction!for sleeping experts|)}

\index{black-box reduction!to magnitude and direction|(}
Theorem~\ref{thm:reduction_direction_magnitude} is from \citet{CutkoskyO18}. Note that the original theorem is more general because it works even in Banach spaces.
\index{black-box reduction!to magnitude and direction|)}

\section{Exercises}

\begin{exer}
Find a solution to the projection from $\R^d$ to $\Delta^{d-1}$ with respect to $\|\cdot\|_1$.
\end{exer}

\begin{exer}
\label{exercise:rm}
\index{Regret Matching algorithm|(}
The Regret Matching algorithm predicts with
\[
x_{t,i}
=\frac{\max\left(\sum_{n=1}^{t-1} (g_{n,i}-\langle \bg_n,\bx_n\rangle),0\right)}{\sum_{j=1}^d \max\left(\sum_{n=1}^{t-1} (g_{n,j}-\langle \bg_n,\bx_n\rangle),0\right)}, \quad i=1, \dots, d,
\]
if $\sum_{j=1}^d \max\left(\sum_{n=1}^{t-1} (g_{n,j}-\langle \bg_n,\bx_n\rangle),0\right)\neq 0$ and with an arbitrary distribution otherwise.
Prove that it corresponds to \ac{FTRL} with a squared L$_2$ regularizer and $\mathcal{V}=\R^d_{\geq 0}$ with projections onto $\Delta^{d-1}$ with respect to $\|\cdot\|_1$. Then, prove a regret upper bound proportional to $\sqrt{d T}$. Hint: see \citet{FarinaKS21} and \citet{FlaspohlerOCMOOM21}.
\index{Regret Matching algorithm|)}
\end{exer}

\begin{exer}
\label{exercise:confidence_sleeping_experts}
Extend Theorem~\ref{thm:sleeping_experts_reduction} to the case where $a_{t,i}\in[0,1]$ is the confidence of expert $i$ at time $t$.
Find a suitable prediction strategy $\bx_t$ and surrogate losses $\tilde{\bg}_t$ such that, for every $\bu\in\R^d_{\geq0}$, we have
\[
\sum_{t=1}^T \sum_{i=1}^d u_i a_{t,i}\bigl(\langle \bg_t,\bx_t\rangle-g_{t,i}\bigr)
= \sum_{t=1}^T \langle \tilde{\bg}_t,\bz_t-\bu\rangle~.
\]
\end{exer}

\index{black-box reduction|)}

\acresetall

\chapter{Parameter-free Online Linear Optimization}
\label{ch:parameterfree}

In the previous sections, we have shown that \ac{OMD} and \ac{FTRL} achieve a regret of $\mathcal{O}(\sqrt{T})$ for convex Lipschitz losses. We have also shown that for bounded domains, these bounds are optimal up to constant multiplicative factors.
However, in the unbounded case, the bounds we get are suboptimal with respect to the dependence on the competitor.

For example, \ac{OSD}, with $\mathcal{V}=\R^d$ and $\bx_1=\boldsymbol{0}$ over $L$-Lipschitz losses and learning rate $\eta=\frac{\alpha}{L\sqrt{T}}$, gives the regret guarantee
\[
\Regret_T(\bu)
= \sum_{t=1}^T (\ell_t(\bx_t) - \ell_t(\bu))
\leq \frac{\|\bu\|_2^2}{2\eta} + \frac{\eta L^2 T}{2}
= \frac{L \sqrt{T}}{2} \left(\frac{\|\bu\|_2^2}{\alpha}+\alpha\right), \ \forall \bu \in \R^d.
\]
So, to get the best possible guarantee, we should know $\|\bu\|_2$ and set $\alpha=\|\bu\|_2$. As we said, this strategy does not work for a few reasons: (i) $\bu$ is not a fixed vector, rather the regret is a function of $\bu \in \mathcal{V}$; (ii) if we guessed any value of $\|\bu\|_2$ the adversary could easily change the losses to make that value completely wrong; (iii) the resulting regret would violate the lower bound in Chapter~\ref{ch:lower}.

Let's show another example. Use \ac{OSD} with the online-to-batch conversion to minimize a function that is $L$-Lipschitz, whose minimizer is $\bx^\star$. Setting $\bx_1=\boldsymbol{0}$, the convergence rate will be $\frac{L(\|\bx^\star\|_2^2/\alpha+\alpha)}{\sqrt{T}}$ using a learning rate of $\eta=\frac{\alpha}{L \sqrt{T}}$. Consider the case that $\|\bx^\star\|_2=100$, specifying $\alpha=1$ will result in a convergence rate about 50 times slower than specifying the optimal choice in hindsight, $\alpha=100$.
Note that this is a real effect, not an artifact of the proof. Indeed, it is intuitive that the optimal learning rate should be proportional to the distance between the initial point and the optimal solution.

In the following, we will see that it is possible to reduce any \ac{OCO} game to betting on a non-stochastic coin. This will allow us to very easily design \ac{OCO} algorithms that will enjoy (almost) \emph{the optimal regret uniformly over all competitors}. We call these kinds of algorithms \emph{parameter-free}.

\acresetall

\section{The Coin-Betting Game}
\label{sec:coin}

\index{coin-betting game|(textbf}
We now consider again the coin-betting game we described in Section~\ref{sec:lower_unconstrained_olo}. Later, we will show that the solution to this problem allows us to design optimal \ac{OCO} algorithms.

Consider the following repeated game:
\begin{itemize}
\item Set the initial wealth to $\epsilon$: $\Wealth_0=\epsilon$.
\item In each round $t=1,\dots,T$
\begin{itemize}
\item You bet $|x_t|$ money on the side of the coin equal to $\sign(x_t)$. You cannot bet more money than what you currently have, hence $|x_t|\leq \Wealth_{t-1}$.
\item The adversary reveals the outcome of the coin $c_t \in \{-1, 1\}$.
\item You gain money $x_t c_t $, that is $\Wealth_t=\Wealth_{t-1}+c_t x_t = \epsilon + \sum_{i=1}^{t} c_i x_i$.\index{wealth!in coin-betting game}
\end{itemize}
\end{itemize}

Given that we cannot borrow money, we can codify the bets $x_t$ as $\beta_t \Wealth_{t-1}$, where $\beta_t \in [-1,1]$ is the signed betting fraction\index{signed betting fraction}. So, $|\beta_t|$ is the fraction of money to bet and $\sign(\beta_t)$ the side of the coin on which we bet.

The aim of the game is to make as much money as possible. However, given the game's adversarial nature, we cannot expect to always win money.
Instead, we try to gain as much money as the strategy that bets a fixed signed fraction of money $\beta^\star \in [-1, 1]$ for the entire game.

Note that
\begin{align*}
\Wealth_t
&= \Wealth_{t-1} + c_t x_t
=\Wealth_{t-1} + \beta_t \Wealth_{t-1} c_t
= \Wealth_{t-1} (1 + \beta_t c_t)\\
&= \epsilon\prod_{i=1}^{t} (1+ \beta_i c_i)~.
\end{align*}
So, given the multiplicative nature of the wealth, it is also useful to take the logarithm of the ratio of the wealth of the algorithm and the wealth of the optimal betting fraction. Hence, we want to minimize the following regret
\begin{align*}
\ln \max_{\beta \in [-1,1]} \epsilon \prod_{t=1}^T (1+\beta c_t) - \ln \Wealth_T
&= \max_{\beta \in [-1,1]} \sum_{t=1}^T \ln(1+\beta c_t) - \sum_{t=1}^T \ln(1+\beta_t c_t)~.
\end{align*}
This is nothing else than the regret of an \ac{OCO} game where the losses are $\ell_t(x)=-\ln(1+x c_t)$ and $\mathcal{V}=[-1,1]$, but here the losses are allowed to take the value $+\infty$.
We can also extend a bit the formulation allowing ``continuous coins''\index{continuous coin}, where $c_t \in [-1, 1]$ rather than in $\{-1, 1\}$.

\begin{remark}
The constraint to bet a fraction between $-1$ and $1$ is not strictly necessary. We could allow the algorithm to bet more money than it currently has, lending it some money in each round. However, the restriction makes the analysis easier because it allows the transformation above into an \ac{OCO} problem, using the non-negativity of $1+\beta_t c_t$.
\end{remark}

Taking special care of the non-Lipschitzness of the functions, we could just use \ac{OMD} or \ac{FTRL}, but it turns out that there exists a better strategy specifically for this problem: the \textbf{\ac{KT} bettor}.\index{Krichevsky--Trofimov!bettor|(textbf}
The \ac{KT} strategy simply says that on each time step $t$ you bet $\beta_t=\frac{\sum_{i=1}^{t-1} c_i}{t}$.
So, the algorithm is the following one.
\begin{algorithm}[h]
\caption{Krichevsky--Trofimov (KT) Bettor}
\label{alg:kt}
\begin{algorithmic}[1]
{
    \REQUIRE{Initial money $\Wealth_0=\epsilon>0$}
    \FOR{$t=1$ {\bfseries to} $T$}
    \STATE{Calculate the betting fraction $\beta_t=\frac{\sum_{i=1}^{t-1} c_i}{t}$}
    \STATE{Bet $x_t=\beta_t \Wealth_{t-1}$, that is $|x_t|$ money on the side $\sign(x_t)=\sign(\beta_t)$}
    \STATE{Receive the coin outcome $c_t$ in $\{-1,1\}$ or in $[-1,1]$}
    \STATE{Win/lose $x_t c_t$, that is $\Wealth_t=\Wealth_{t-1} + c_t x_t$}
    \ENDFOR
}
\end{algorithmic}
\end{algorithm}

\begin{theorem}
\label{thm:kt}
Let $c_t \in \{-1, 1\}$ for $t=1, \dots, T$ arbitrarily chosen. Then, the \ac{KT} bettor in Algorithm~\ref{alg:kt} guarantees
\[
\ln \Wealth_T \geq \ln \max_{\beta \in [-1,1]} \epsilon \prod_{t=1}^T (1+\beta c_t) - \frac{1}{2} \ln T - 1~.
\]
\end{theorem}
\begin{proof}
There are many ways to prove a regret bound for the \ac{KT} algorithm. One instructive way is to show that the \ac{KT} algorithm is an instantiation of the \ac{WAA} algorithm we saw in Chapter~\ref{ch:waa}. First of all, we set $\epsilon=1$ without loss of generality, because its value does not affect the regret. Then, consider the \ac{WAA} algorithm where we set the losses to $\ell_t(x)=-\ln(1+c_t x)$ where $c_t \in \{-1, 1\}$ with the feasible set $\mathcal{V}=[-1,1]$, where we define $-\ln 0:=+\infty$. It is immediate to verify that these losses are 1-exp-concave\index{function!exp-concave} on $\mathcal{V}$.
Finally, we set $P_1(\mathrm{d} x)=\frac{1}{\pi \sqrt{(1-x)(1+x)}}$ over $\mathcal{V}$.

Given that $c_t \in \{-1, 1\}$, we define $a_t=\sum_{i=1}^t \indevent{\{c_i=1\}}$ and $b_t=\sum_{i=1}^t \indevent{\{c_i=-1\}}$, to have the following closed-form expressions:
\begin{align*}
\E_{x \sim P_1}\left[\prod_{i=1}^t (1+c_i x)\right]
&= \frac{2^{a_t+b_t} \Gamma(a_t+1/2) \Gamma(b_t+1/2)}{\pi \Gamma(a_t+b_t+1)},\\
\E_{x \sim P_1}\left[x \prod_{i=1}^t (1+c_i x)\right]
&= \frac{2^{a_t+b_t}\, (a_t-b_t) \, \Gamma(a_t+1/2)\, \Gamma(b_t+1/2)}{\pi\, (a_t+b_t+1)\,\Gamma(a_t+b_t+1)},\\
\max_v \ \prod_{t=1}^T (1+c_t v)
&=\max_v \ (1+v)^{a_T} (1-v)^{b_T}\\
&= \left(1+\frac{a_T-b_T}{T}\right)^{a_T} \left(1-\frac{a_T-b_T}{T}\right)^{b_T}
= 2^T a_T^{a_T} b_T^{b_T} T^{-T},
\end{align*}
where $0^0:=1$.
With the above formulas, we can calculate that
\[
P_{t+1}(\mathrm{d}x)=\frac{\pi\, \Gamma(a_t+b_t+1) \prod_{i=1}^t (1+c_i x) P_1(\mathrm{d} x)}{2^{a_t+b_t}\, \Gamma(a_t+1/2)\, \Gamma(b_t+1/2)},
\]
and
\[
x_{t+1}
=\E_{x \sim P_{t+1}}[x]
=\frac{a_t-b_t}{a_t+b_t+1}
=\frac{\sum_{i=1}^t c_i}{t+1},
\]
that corresponds to the one in Algorithm~\ref{alg:kt}.

Using the second result in Theorem~\ref{thm:waa}, for all $u \in [-1,1]$, we have
\begin{align*}
\sum_{t=1}^T (\ell_t(x_t)-\ell_t(u))
&\leq \ln \frac{\max_{v \in [-1,1]} \ \prod_{t=1}^T (1+c_t v)}{\E_{x \sim P_1} [\prod_{t=1}^T (1+c_t x)]}\\
&\leq \ln \frac{2^T a_T^{a_T} b_T^{b_T} T^{-T} \, \pi \, \Gamma(a_T+b_T+1)}{2^{a_T+b_T}\,\Gamma(a_T+1/2) \, \Gamma(b_T+1/2) }\\
&= \ln \frac{a_T^{a_T} b_T^{b_T} T^{-T} \pi \, \Gamma(T+1)}{\Gamma(a_T+1/2) \, \Gamma(b_T+1/2) }~.
\end{align*}
Using Lemma~\ref{lemma:max_universal_portfolio}, we have that this expression is maximized for $a_T=T$ and $b_T=0$ (or equivalently $a_T=0$ and $b_T=T$), so we obtain
\begin{align*}
\sum_{t=1}^T (\ell_t(x_t)-\ell_t(u))
\leq \ln \frac{\pi \, \Gamma(T+1)}{\Gamma(T+1/2) \, \Gamma(1/2) }
= \ln \frac{\sqrt{\pi} \, \Gamma(T+1)}{\Gamma(T+1/2) }
\leq \frac12\ln T + 1,
\end{align*}
where in the last inequality we used Lemma~\ref{lemma:bound_regret_gamma}.
\end{proof}

Note that if the outcomes of the coin are skewed towards one side, the a posteriori optimal betting fraction will gain an exponential amount of money, as proved in the next lemma.
\begin{lemma}
\label{lemma:lower_bound_wealth}
Let $c_t\in \{-1,1\}, t=1,\dots,T$ arbitrarily chosen and denote by $C_T=\sum_{t=1}^T c_t$. Then, we have
\[
\max_{\beta \in [-1,1]} \ \prod_{t=1}^T (1+\beta c_t)
= \left(1+\frac{C_T}{T}\right)^\frac{T+C_T}{2} \left(1-\frac{C_T}{T}\right)^\frac{T-C_T}{2}
\geq \exp\left(\frac{C_T^2}{2T}\right)~.
\]
\end{lemma}
\begin{proof}
Through a derivative argument, the equality is derived by finding that
\[
\argmax_{\beta \in [-1,1]} \ \sum_{t=1}^T \ln(1+\beta c_t)
=C_T/T~.
\]

The inequality is true because
\begin{align*}
&\left(1+C_T/T\right)^\frac{T+C_T}{2} \left(1-C_T/T\right)^\frac{T-C_T}{2}\\
&\quad=\exp\left[T \cdot \KL\left(\left[\frac12 + \frac{C_T}{2T}, \frac12 - \frac{C_T}{2T}\right]^\top;\left[\frac12 , \frac12\right]^\top\right)\right],
\end{align*}
where $\KL$ is the \ac{KL} divergence\index{Kullback--Leibler divergence}.
Then, we have
\[
\KL\left(\left[\frac12 + \frac{C_T}{2T}, \frac12 - \frac{C_T}{2T}\right]^\top;\left[\frac12 , \frac12\right]^\top\right)
\geq \frac{C_T^2}{2T^2},
\]
where we used~\eqref{eq:bregman_strongly_convex} coupled with the fact that the \ac{KL} divergence is 1-strongly convex\index{function!strongly convex} with respect to the L$_1$ norm (Lemma~\ref{lemma:entropy_strongly_convex}) for $C_T \in (-T,T)$, and the same inequality holds for $C_T \in \{-T,T\}$ and $T\geq 1$.
%
\end{proof}
Hence, \ac{KT} can guarantee an exponential amount of money, paying only a $\sqrt{T}$ penalty.
It is possible to prove that the guarantee above for the \ac{KT} algorithm is optimal to constant additive factors. Moreover, observe that the \ac{KT} strategy does not require any parameter to be set: no learning rates, nor a regularizer.

Also, we can extend the guarantee of the \ac{KT} algorithm to the case in which the coins are ``continuous'', that is $c_t \in [-1,1]$\index{continuous coin|textbf}, and we have the following theorem.
\begin{theorem}
\label{thm:kt2}
Let $c_t \in [-1, 1]$ for $t=1, \dots, T$. Then, the \ac{KT} bettor in Algorithm~\ref{alg:kt} guarantees
\[
\Wealth_T\geq
\frac{\epsilon 2^T\Gamma\!\left(\frac{T+1+\sum_{t=1}^T c_t}{2}\right)\Gamma\!\left(\frac{T+1-\sum_{t=1}^T c_t}{2}\right)}{\pi \Gamma(T+1)}
\geq \epsilon \exp\!\left( \frac{(\sum_{t=1}^T c_t)^2}{2T} - \frac{\ln T}{2} -1\right)\!,
\]
where $\Gamma$ is the gamma function\index{gamma function} (see Appendix~\ref{sec:gamma}).
\end{theorem}
\begin{proof}
Let $t\geq 0$, and let $F_t:[-t, t]\to \R$ be defined by $F_t(x)=\epsilon\frac{2^t\Gamma\left(\frac{t+1+x}{2}\right)\Gamma\left(\frac{t+1-x}{2}\right)}{\pi \Gamma(t+1)}$.


First, we want to prove that
\begin{equation}
\label{eq:proof_kt2_eq1}
F_t(x+c)
\leq\left(1+c\frac{x}{t}\right)F_{t-1}(x), \quad \forall t\geq 1, \forall c\in [-1,1], \forall x \in [-(t-1), t-1]~.
\end{equation}
This is equivalent to showing that $\phi_t(c,x)\leq \ln F_{t-1}(x)$ for all $c \in [-1,1]$ and $x \in [-(t-1), t-1]$, where $\phi_t(c,x) := \ln F_t(x+c) - \ln \left(1+c\frac{x}{t}\right)$.

One can verify that $F_t(x+c)=\left(1+c\frac{x}{t}\right)F_{t-1}(x)$ for $c\in \{-1,1\}$ (see Problem~\ref{exercise:kt_equality}). Hence $\phi_t(1,x)=\phi_t(-1,x)= \ln F_{t-1}(x)$.
Moreover, from the log-convexity of the Gamma function (Proposition~\ref{prop:gamma}(\ref{prop:gamma_log_cvx})) and the convexity of $c \mapsto - \ln \left(1+c\frac{x}{t}\right)$, we have that $\phi_t$ is convex in its first argument. Hence, from the definition of convexity, we have
\begin{align*}
\phi_t(c,x)
&= \phi_t\left(\frac{1+c}{2}\cdot1 + \frac{1-c}{2}(-1),x\right)
\leq \frac{1+c}{2} \phi_t(1,x) + \frac{1-c}{2} \phi_t(-1,x)\\
&= \ln F_{t-1}(x)~.
\end{align*}

Now, we show that the wealth of the algorithm at the beginning of round $t$ is lower bounded by $F_{t-1}(\sum_{i=1}^{t-1} c_i)$. We prove it by induction. The base case is
\[
F_0(0)
=\epsilon\frac{2^0\Gamma\left(\frac{1}{2}\right)\Gamma\left(\frac{1}{2}\right)}{\pi \Gamma(1)}
=\epsilon~.
\]
Then, we assume $\Wealth_{t-1}\geq F_{t-1}(\sum_{i=1}^{t-1} c_i)$ and prove that $\Wealth_{t}\geq F_{t}(\sum_{i=1}^{t} c_i)$.
\begin{align*}
\Wealth_t
&= \Wealth_{t-1} \left(1+ c_t \frac{\sum_{i=1}^{t-1} c_i}{t}\right)
\geq F_{t-1}\left(\sum_{i=1}^{t-1} c_i\right) \left(1+ c_t \frac{\sum_{i=1}^{t-1} c_i}{t}\right)\\
&\geq F_t\left(\sum_{i=1}^{t} c_i\right),
\end{align*}
where in the last inequality we used \eqref{eq:proof_kt2_eq1}.

Finally, we show the second inequality.
From Lemma~\ref{lemma:max_universal_portfolio} and denoting by $\KL$ the \ac{KL} divergence\index{Kullback--Leibler divergence}, we have
\begin{align*}
\frac{2^t\Gamma\left(\frac{t+1+x}{2}\right)\Gamma\left(\frac{t+1-x}{2}\right)}{\pi \Gamma(t+1)}
&\geq \frac{2^t\Gamma\left(t+\frac{1}{2}\right)\Gamma\left(\frac{1}{2}\right) \left(\frac{t+x}{2}\right)^\frac{t+x}{2}\left(\frac{t-x}{2}\right)^\frac{t-x}{2}}{\pi \Gamma(t+1) t^{t}}\\
&= \frac{\Gamma\left(t+\frac{1}{2}\right) \left(1+\frac{x}{t}\right)^\frac{t+x}{2}\left(1-\frac{x}{t}\right)^\frac{t-x}{2}}{\sqrt{\pi} \Gamma(t+1)}\\
&= \frac{\Gamma\left(t+\frac{1}{2}\right)\exp\left[ t \cdot \KL\left(\left[\frac12 + \frac{x}{2t},\frac12 - \frac{x}{2t}\right]^\top;\left[\frac12, \frac12\right]^\top \right)\right] }{\sqrt{\pi} \Gamma(t+1)}\\
&\geq \frac{\Gamma\left(t+\frac{1}{2}\right)\exp\left( \frac{x^2}{2t}\right) }{\sqrt{\pi} \Gamma(t+1)}
\geq \exp\left( \frac{x^2}{2t} - \frac12 \ln t-1\right),
\end{align*}
where in the second to last inequality we used~\eqref{eq:bregman_strongly_convex} coupled with the fact that the \ac{KL} divergence is 1-strongly convex\index{function!strongly convex} with respect to the L$_1$ norm (Lemma~\ref{lemma:entropy_strongly_convex}) for $x \in (-t,t)$, and the same inequality holds for $x \in \{-t,t\}$ and $t\geq 1$. Finally, Lemma~\ref{lemma:bound_regret_gamma} gives in the last inequality.
\end{proof}
\index{coin-betting game|)textbf}

So, we have introduced the coin-betting game, extended it to continuous coins, and presented a simple and optimal strategy to solve it.
In the next section, we show \emph{how to use the \ac{KT} bettor as a parameter-free one-dimensional \ac{OCO} algorithm!}
\index{Krichevsky--Trofimov!bettor|)textbf}

\index{parameter-free!online convex optimization|(}
\section{Parameter-free One-Dimensional Online Convex Optimization}

So, Theorem~\ref{thm:kt} tells us that we can win almost as much money as a strategy betting the optimal fixed fraction of money at each step. We only pay a logarithmic price in the log wealth, which corresponds to a $\frac{1}{\sqrt{T}}$ term in the actual wealth.

Now, let's see why this problem is interesting in \ac{OCO}. It turns out that \emph{solving the coin-betting game with continuous coins\index{continuous coin} is equivalent to solving a one-dimensional unconstrained online linear optimization problem}. That is, a coin-betting algorithm is equivalent to an online learning algorithm that produces a sequence of $x_t\in \R$ that minimizes the one-dimensional regret with linear losses:
\[
\Regret_T(u):=\sum_{t=1}^T g_t (x_t - u),
\]
where the $g_t$ are adversarial and bounded. Without loss of generality, we will assume $g_t \in [-L,L]$.
Also, remembering that \ac{OCO} games can be reduced to \ac{OLO} games, such a reduction would effectively reduce \ac{OCO} to coin betting!
Moreover, through online-to-batch conversion, any stochastic 1-d problem could be reduced to a coin-betting game!

The key theorem that allows the conversion between \ac{OLO} and coin betting is the following one.
\begin{theorem}[Regret-Reward Duality]
\label{thm:reward-regret}
\index{regret-reward duality}
Let $\phi:\R^d \to (-\infty,+\infty]$ be a proper closed convex function and let $\phi^\star:\R^d \to (-\infty,+\infty]$ be its Fenchel conjugate.
Consider two sequences $\bx_1, \bx_2, \dots, \bx_T \in \R^d$ and $\bg_1,\dots,\bg_T \in \R^d$. Then,
\[
-\sum_{t=1}^T \langle \bg_t, \bx_t \rangle \ge \phi\left( -\sum_{t=1}^T \bg_t \right)
\Leftrightarrow
\sum_{t=1}^T \langle \bg_t, \bx_t - \bu\rangle \le \phi^\star(\bu), \quad \forall \bu \in \R^d~.
\]
\end{theorem}
\begin{proof}
Let's prove the left-to-right implication. For any $\bu \in \R^d$, we have
\begin{align*}
\sum_{t=1}^T \langle \bg_t, \bx_t - \bu\rangle
\le -\sum_{t=1}^T \langle \bg_t, \bu \rangle - \phi\left(-\sum_{t=1}^T \bg_t\right)
\leq \sup_{\btheta \in \R^d} \ \langle \btheta, \bu \rangle - \phi\left( \btheta \right)
= \phi^\star\left(\bu\right)~.
\end{align*}

For the other implication, we have
\begin{align*}
-\sum_{t=1}^T \langle \bg_t, \bx_t \rangle
&= \sup_{\bu \in \R^d} \ -\sum_{t=1}^T \langle \bg_t, \bu \rangle - \sum_{t=1}^T \langle \bg_t, \bx_t - \bu \rangle \\
&\ge \sup_{\bu \in \R^d} \ -\sum_{t=1}^T \langle \bg_t, \bu \rangle - \phi^\star\left(\bu\right)
= \phi\left(-\sum_{t=1}^T \bg_t\right)~. \qedhere
\end{align*}
\end{proof}
To make sense of the above theorem, assume that we are considering a 1-d problem and $g_t \in [-L,L]$. Then, guaranteeing a lower bound to $-\sum_{t=1}^T g_t x_t$ can be done through a betting strategy that bets $x_t$ money on the continuous coins $c_t=-g_t/L \in [-1,1]$. So, the theorem implies that \emph{proving a lower bound for the wealth in a coin-betting game implies a regret upper bound for the corresponding one-dimensional \ac{OLO} game}.
However, proving a reward lower bound is easier because it does not depend on the competitor $\bu$. Indeed, not knowing the norm of the competitor is exactly the reason why tuning the learning rates in \ac{OMD} is hard!

This consideration immediately gives us the conversion between 1-d \ac{OLO} and coin betting: \emph{the outcome of the coin is the negative of the subgradient of the losses on the current prediction.}
Indeed, setting $c_t=-g_t/L$, we have that a coin-betting algorithm that bets $x_t$ would give us
\[
\Wealth_{T}
=\epsilon + \sum_{t=1}^T x_t c_t
=\epsilon - \sum_{t=1}^T x_t \frac{g_t}{L} ~.
\]
So, a lower bound on the wealth corresponds to a lower bound that can be used in Theorem~\ref{thm:reward-regret}.
To obtain a regret guarantee, we only need to calculate the Fenchel conjugate of the reward function, assuming it can be expressed as a function of $\sum_{t=1}^T c_t$.

The last step is to reduce 1-d \ac{OCO} to 1-d \ac{OLO}. But this is an easy step that we have done many times. Indeed, we have
\[
\Regret_T(u)
=\sum_{t=1}^T \ell_t(x_t) -\sum_{t=1}^T \ell_t(u)
\leq \sum_{t=1}^T x_t g_t - \sum_{t=1}^T g_t u, \quad \forall u \in \R, \ \forall g_t\in \partial \ell_t(x_t)~.
\]

So, to summarize, the Fenchel conjugate of the wealth lower bound for the coin-betting game becomes the regret guarantee for the \ac{OCO} game.
In the next section, we specialize all these considerations to the \ac{KT} algorithm.

\subsection{KT as a One-Dimensional Online Convex Optimization Algorithm}

\index{Krichevsky--Trofimov!online convex optimization algorithm|(textbf}
Here, we want to use the considerations in the above section to use \ac{KT} as a parameter-free 1-d \ac{OCO} algorithm.
First, let's see what such an algorithm looks like.
\ac{KT} bets $x_t=\beta_t \Wealth_{t-1}$, starting with $\epsilon$ money.
Now, set $c_t=-g_t/L$ where $g_t \in \partial \ell_t(x_t)$ and assume the losses $\ell_t$ are $L$-Lipschitz. So, we get
\[
x_t = -\frac{\sum_{i=1}^{t-1} g_i}{t L} \left(\epsilon - \sum_{i=1}^{t-1} \frac{g_i}{L} x_i\right)~.
\]
Algorithm~\ref{alg:kt_oco} shows the \textbf{Krichevsky--Trofimov algorithm for \ac{OCO}}.

\begin{algorithm}[t]
\caption{Krichevsky--Trofimov (KT) OCO Algorithm}
\label{alg:kt_oco}
\begin{algorithmic}[1]
{
    \REQUIRE{$\epsilon>0$, $L>0$}
    \FOR{$t=1$ {\bfseries to} $T$}
    \STATE{Predict $x_t=-\frac{\sum_{i=1}^{t-1} g_i}{t L}\left(\epsilon- \sum_{i=1}^{t-1} \frac{g_i}{L} x_i\right) \in \R$}
    \STATE{Pay the loss $\ell_t(x_t)$, where $\ell_t$ is subdifferentiable on $\R$}
    \STATE{Set $g_t \in \partial \ell_t(x_t)$ where $|g_t|\leq L$}
    \ENDFOR
}
\end{algorithmic}
\end{algorithm}

Let's now see what kind of regret we get. From Theorem~\ref{thm:kt2}, we have that the \ac{KT} bettor guarantees the following lower bound on the wealth when used with $c_t=-g_t/L$:
\[
\epsilon-\sum_{t=1}^T x_t \frac{g_t}{L} \geq \frac{\epsilon}{e\sqrt{T}}\exp\left(\frac{(\sum_{t=1}^T g_t)^2}{2T L^2}\right)~.
\]
So, we found the function $\phi$; we just need $\phi^\star$ or an upper bound to it, which can be found with the following lemma.
\begin{lemma}
\label{lemma:dual_exp_square}
Define $f(x)= \beta \exp\frac{x^2}{2 \alpha}$, for $\alpha,\beta>0$, $x \in \R$. Then
\begin{align*}
f^\star(y)
&=|y| \sqrt{\alpha W\left(\frac{\alpha y^2}{\beta^2}\right)} - \beta \exp\left(\frac{1}{2}W\left(\frac{\alpha y^2}{\beta^2}\right)\right)
\leq |y| \sqrt{\alpha W\left(\frac{\alpha y^2}{\beta^2}\right)} - \beta\\
&\leq |y| \sqrt{\alpha \ln\left(1+\frac{\alpha y^2}{\beta^2}\right)} - \beta
\leq |y| \sqrt{2\alpha \ln\left(1+\frac{\sqrt{\alpha} |y|}{\beta}\right)} - \beta,
\end{align*}
where $W$ is the Lambert function\index{Lambert function}, i.e., $W:\R_{\geq 0}\to \R_{\geq 0}$ satisfies $x=W(x) \exp(W(x))$.
\end{lemma}
\begin{proof}
From the definition of the Fenchel conjugate, we have
\begin{align*}
f^\star(y)
= \max_{x \in \R} \  x\, y - f(x)
= \max_{x} \  x\, y - \beta \exp\frac{x^2}{2 \alpha},
= x^\star\,y -\beta \exp\frac{(x^\star)^2}{2 \alpha},
\end{align*}
where $x^\star= \argmax_{x} x\, y - f(x)$. We now use the fact that $x^\star$ satisfies $y = f'(x^\star)$, to have $x^\star=\sign(y)\sqrt{\alpha W(\frac{\alpha y^2}{\beta^2})}$, where $W(\cdot)$ is the Lambert function\index{Lambert function}.
Using Lemma~\ref{thm:lambert_upper_lower} in the Appendix, we obtain the second-to-last bound. The last one is obtained by using $\sqrt{a^2+1}\leq |a|+1$ for all $a\in \R$.
\end{proof}

So, the regret guarantee of \ac{KT} used as a one-dimensional \ac{OCO} algorithm is
\[
\Regret_T(u)
=\sum_{t=1}^T (\ell_t(x_t) - \ell_t(u))
\leq |u| L \sqrt{2 T \ln \left(\frac{e |u| T}{\epsilon}+1\right)} + \epsilon L, \quad \forall u \in \R,
\]
where the only assumption was that the first derivatives (or subderivatives) of $\ell_t$ are bounded in absolute value by $L$.
The bound is almost optimal because the optimal one has $\sqrt{T}$ factor inside the logarithm, instead of $T$, see the lower bound in Theorem~\ref{thm:olo_coin_betting_lower_bound}.
Also, it is important to note that any setting of $\epsilon$ in $[1,\sqrt{T}]$ would not change the asymptotic rate.

To appreciate this guarantee, compare it to the one of \ac{OMD} with learning rate $\eta=\frac{\alpha}{L\sqrt{T}}$:
\[
\Regret_T(u)
=\sum_{t=1}^T \ell_t(x_t) -\sum_{t=1}^T \ell_t(u)
\leq \frac{L}{2}\left(\frac{u^2}{\alpha}+\alpha\right)\sqrt{T}, \quad \forall u \in \R~.
\]
Hence, the coin-betting approach allows us to get almost the optimal bound, without having to solve the impossible task to choose the correct learning rate for each competitor, that is, in a \emph{parameter-free} way. This motivates the following definition.

\index{parameter-free|(textbf}
\begin{definition}
\label{def:parameter-free}
We call \textbf{parameter-free} any online algorithm that satisfies the optimal regret bound with respect to $T$ and to all the comparators in the feasible set $\mathcal{V}$, up to polylogarithmic multiplicative factors.
\end{definition}
\index{parameter-free|)textbf}
We note explicitly that in this definition, the algorithm is allowed to know the Lipschitz constant of the losses. Hence, Algorithm~\ref{alg:kt_oco} is parameter-free.

\begin{figure}[t]
  \centering
    \begin{tikzpicture}
    \begin{axis}[axis line style = thick,
                width=7cm,
                domain = 0:15,
                samples = 200,
                axis x line = middle,
                axis y line = left,
                every axis x label/.style={at={(current axis.right of origin)},anchor=west},
                every axis y label/.style={at={(current axis.north west)},above=0mm},
                xlabel = {$x_t$},
                ylabel = {$\ell_t(x)$},
                ticks = none,
                grid = major
                ]
                \addplot[thick] {abs(x-10)} [yshift=3pt] node[pos=.95,left] {};
                \draw[-{Stealth[scale=0.8,angle'=30]},dashed, gray, thick](axis cs:0,10) to [out=0,in=100] (axis cs:0.5,9.5);
                \draw[-{Stealth[scale=0.8,angle'=30]},dashed, gray,thick](axis cs:0.5,9.5) to [out=0,in=100] (axis cs:1,9);
                \draw[-{Stealth[scale=0.8,angle'=30]},dashed, gray,thick](axis cs:1,9) to [out=0,in=80] (axis cs:1.875,8.125);
                \draw[-{Stealth[scale=0.8,angle'=30]},dashed, gray,thick](axis cs:1.875,8.125) to [out=0,in=80] (axis cs:3.5,6.5);
                \draw[-{Stealth[scale=0.8,angle'=30]},dashed, gray,thick](axis cs:3.5,6.5) to [out=355,in=85] (axis cs:6.5625,3.4375);
                \draw[-{Stealth[scale=0.8,angle'=30]},dashed, gray,thick](axis cs:6.5625,3.4375) to [out=10,in=130] (axis cs:12.3750,2.3750);
                \draw[-{Stealth[scale=0.8,angle'=30]},dashed, gray,thick](axis cs:12.3750,2.3750) to [out=90,in=0] (axis cs:1.2891,8.7109);
            \end{axis}
    \end{tikzpicture}
    \begin{tikzpicture}
    \begin{axis}[axis line style = thick,
                width=7cm,
                domain = 0:15,
                samples = 200,
                axis x line = middle,
                axis y line = left,
                every axis x label/.style={at={(current axis.right of origin)},anchor=west},
                every axis y label/.style={at={(current axis.north west)},above=0mm},
                xlabel = {$x_t$},
                ylabel = {$\ell_t(x)$},
                ticks = none,
                grid = major
                ]
                \addplot[thick] {abs(x-10)} [yshift=3pt] node[pos=.95,left] {};
                \draw[-{Stealth[scale=0.8,angle'=30]}, thick, gray](axis cs:0,10) to [out=0,in=90] (axis cs:0.5,9.5);
                \draw[-{Stealth[scale=0.8,angle'=30]},  thick, gray](axis cs:0.5,9.5) to [out=0,in=90] (axis cs:1,9);
                \draw[-{Stealth[scale=0.8,angle'=30]},  thick, gray](axis cs:1,9) to [out=0,in=90] (axis cs:1.5,8.5);
                \draw[-{Stealth[scale=0.8,angle'=30]},  thick, gray](axis cs:1.5,8.5) to [out=0,in=90] (axis cs:2,8);
                \draw[-{Stealth[scale=0.8,angle'=30]},  thick, gray](axis cs:2,8) to [out=0,in=90] (axis cs:2.5,7.5);
                \draw[-{Stealth[scale=0.8,angle'=30]},  thick, gray](axis cs:2.5,7.5) to [out=0,in=90] (axis cs:3,7);
                \draw[-{Stealth[scale=0.8,angle'=30]},  thick, gray](axis cs:3,7) to [out=0,in=90] (axis cs:3.5,6.5);

                \draw[-{Stealth[scale=0.8,angle'=30]}, densely dotted, thick, gray](axis cs:0,10) to [out=5,in=75] (axis cs:3,7);
                \draw[-{Stealth[scale=0.8,angle'=30]}, densely dotted, thick, gray](axis cs:3,7) to [out=5,in=75] (axis cs:6,4);
                \draw[-{Stealth[scale=0.8,angle'=30]}, densely dotted, thick, gray](axis cs:6,4) to [out=5,in=75] (axis cs:9,1);
                \draw[-{Stealth[scale=0.8,angle'=30]}, densely dotted, thick, gray](axis cs:9,1) to [out=40,in=120] (axis cs:12,2);
                \draw[-{Stealth[scale=0.8,angle'=30]}, densely dotted, thick, gray](axis cs:12,2) to [out=270,in=350] (axis cs:9,1);

                \draw[-{Stealth[scale=0.8,angle'=30]}, dashed, thick, gray](axis cs:0,10) to [out=0,in=80] (axis cs:2,8);
                \draw[-{Stealth[scale=0.8,angle'=30]}, dashed, thick, gray](axis cs:2,8) to [out=0,in=80] (axis cs:3.4142,6.58);
                \draw[-{Stealth[scale=0.8,angle'=30]}, dashed, thick, gray](axis cs:3.4142,6.58) to [out=0,in=80] (axis cs:4.5689,5.4311);
                \draw[-{Stealth[scale=0.8,angle'=30]}, dashed, thick, gray](axis cs:4.5689,5.4311) to [out=0,in=90] (axis cs:5.5689,4.4311);
                \draw[-{Stealth[scale=0.8,angle'=30]}, dashed, thick, gray](axis cs:5.5689,4.4311) to [out=0,in=90] (axis cs:6.4633,3.5367);
                \draw[-{Stealth[scale=0.8,angle'=30]}, dashed, thick, gray](axis cs:6.4633,3.5367) to [out=0,in=90] (axis cs:7.2798,2.7202);
                \draw[-{Stealth[scale=0.8,angle'=30]}, dashed, thick, gray](axis cs:7.2798,2.7202) to [out=0,in=90] (axis cs:8.0358,1.9642);
            \end{axis}
    \end{tikzpicture}
  \caption{Behaviour of \ac{KT} (left) and \ac{OSD} with various learning rates and the same number of steps (right) when $\ell_t(x)=|x-10|$ for all $t$.}
  \label{fig:comparison}
  \commentAlt{Figure~\ref{fig:comparison}. Two trajectory plots on the loss \ell_t(x)=|x-10|. The left panel shows KT iterates moving toward the minimizer and then jumping back. The right panel compares OSD trajectories with different learning rates, showing slow movement for small learning rates and oscillations for a larger one.}
\end{figure}

It is also interesting to look at what the algorithm does on an easy problem, where $\ell_t(x)=|x-10|$. In Figure~\ref{fig:comparison}, we show the different predictions that the \ac{KT} and \ac{OSD} algorithms make. The convergence rate of \ac{OSD} critically depends on the learning rate: too large does not give convergence, and too small slows down the convergence. On the other hand, \ac{KT} goes \emph{exponentially fast} towards the minimum, and then it automatically backtracks. This exponential growth effectively works like a line search procedure.
Later in the iterations, \ac{KT} will oscillate around the minimum, \emph{automatically shrinking its steps, without any parameter to tune.} Of course, this is a simplified example. In a truly \ac{OCO} game, the losses are different at each time step, and the intuition behind the algorithm becomes more difficult. Yet, the regret guarantee of \ac{KT} assures us that this strategy is principled.


\begin{remark}
Although we derived the \ac{KT} algorithm for \ac{OCO} via its connection to the coin-betting game, it remains an \ac{FTRL} algorithm, and with a more involved proof, one can prove the same regret guarantee. Problem~\ref{exercise:parameter-free_ftrl} shows a simplification of this idea, by designing a parameter-free algorithm directly from the \ac{FTRL} equality in Lemma~\ref{lemma:ftrl_equality}.
\end{remark}

Next, we reduce \ac{OCO} in $\R^d$ and \ac{LEA} to coin betting.

\section{Coordinate-wise Parameter-free Online Convex Optimization}
\label{sec:coordinate_kt}

\begin{algorithm}[h]
\caption{Coordinate-Wise KT for OCO}
\label{alg:kt_oco_coordinate}
\begin{algorithmic}[1]
{
    \REQUIRE{$\epsilon>0$, $L_\infty>0$}
    \FOR{$t=1$ {\bfseries to} $T$}
    \STATE{Output $\bx_t\in \R^d$ such that $x_{t,i} = -\frac{\sum_{j=1}^{t-1} g_{j,i}}{L_\infty t} \left(\epsilon - \sum_{j=1}^{t-1} \frac{g_{j,i}}{L_\infty} x_{j,i}\right)$ for $i=1, \dots, d$}
    \STATE{Pay the loss $\ell_t(\bx_t)$, where $\ell_t$ is subdifferentiable in $\R^d$}
    \STATE{Set $\bg_t \in \partial \ell_t(\bx_t)$ where $\|\bg_t\|_\infty\leq L_\infty$}
    \ENDFOR
}
\end{algorithmic}
\end{algorithm}

With AdaGrad (Section~\ref{sec:adagrad}), we have already seen that it is possible to decompose an \ac{OCO} problem over the coordinates and use a different one-dimensional \ac{OLO} algorithm on each coordinate.
In particular, we saw that
\begin{align*}
\Regret_T(\bu)
&= \sum_{t=1}^T \ell_t(\bx_t) - \sum_{t=1}^T \ell_t(\bu)
\leq \sum_{t=1}^T \langle \bg_t, \bx_t\rangle - \sum_{t=1}^T \langle \bg_t, \bu\rangle \\
&= \sum_{t=1}^T \sum_{i=1}^d g_{t,i}(x_{t,i}-u_i)
= \sum_{i=1}^d \sum_{t=1}^T g_{t,i}(x_{t,i}-u_i),
\end{align*}
where the $\sum_{t=1}^T g_{t,i}(x_{t,i}-u_i)$ is exactly the regret with respect to the linear losses constructed by the coordinate $i$ of the subgradient.

Hence, if we have a one-dimensional \ac{OLO} algorithm, we can use $d$ copies of it, each one fed with the coordinate $i$ of the subgradient.
In particular, we might think of using a \textbf{coordinate-wise \ac{KT}} algorithm. The pseudo-code of this procedure is in Algorithm~\ref{alg:kt_oco_coordinate}.

The regret bound we get is immediate: we just have to sum the regret over the coordinates.
\begin{theorem}
With the notation in Algorithm~\ref{alg:kt_oco_coordinate}, assume that $\|\bg_t\|_\infty\leq L_\infty$.
Then, $\forall \bu \in \R^d$, the following regret bounds hold
\begin{align*}
\sum_{t=1}^T (\ell_t(\bx_t)- \ell_t(\bu))
&\leq L_\infty \sum_{i=1}^d |u_i| \sqrt{2 T \ln \left(1+ e |u_i|  T/\epsilon\right)} + d \epsilon L_\infty\\
&\leq \|\bu\|_1 L_\infty \sqrt{2 T \ln \left(1+ e \|\bu\|_\infty T/\epsilon\right)} + d \epsilon L_\infty~.
\end{align*}
\end{theorem}

This theorem suggests that in high-dimensional settings $\epsilon$ should be proportional to $\frac{1}{d}$.
\index{Krichevsky--Trofimov!online convex optimization algorithm|)textbf}

\section{Parameter-free Algorithms in Any Norm}
\label{sec:parameterfree-direction-magnitude}

The above reduction works only in a finite-dimensional space. Moreover, it gives a dependence on the competitor with respect to the L$_1$ norm that might be undesirable. So, here we use the black-box magnitude/direction reduction from Section~\ref{sec:mag_dir}.

Assume the loss functions are $L$-Lipschitz with respect to the $\|\cdot\|_2$.
We can instantiate the above theorem using the \ac{KT} betting algorithm for the one-dimensional learner and \ac{OMD} for the direction learner.
For example, let $\mathscr{A}_{B}$ be \ac{OSD} with $\mathcal{V}=\{\bx \in \R^d : \|\bx\|_2\leq 1\}$, $\bx_1=\boldsymbol{0}$, and learning rate $\eta_t=\frac{\sqrt{2}}{2 L \sqrt{t}}$. Let $\mathscr{A}_{\text{1d}}$ be the \ac{KT} algorithm for one-dimensional \ac{OCO}. Then, using the construction in Algorithm~\ref{alg:onedimred}, we have
\begin{equation}
\label{eq:L_2_coin}
\Regret_T(\bu)
\leq L \|\bu\|_2\left( \sqrt{\ln(e \|\bu\|_2 T/\epsilon+1)} + 1 \right) \sqrt{2T} + L \epsilon, \quad \forall \bu \in \R^d~.
\end{equation}
Using an online-to-batch conversion, this algorithm is a stochastic gradient descent procedure without learning rates to tune.

Once again, to better appreciate this kind of guarantee, let's take a look at the one of \ac{FTRL} (remember that \ac{OSD} can be used in unbounded domains only with constant learning rates). With the regularizer $\psi_t(\bx)=\frac{L\sqrt{t}}{2\alpha}\|\bx\|^2_2$ and $L$-Lipschitz losses we get a regret of
\[
\Regret_T(\bu)
\leq L\sqrt{T}\left(\frac{\|\bu\|^2_2}{2\alpha}+\alpha\right)~.
\]
So, to get the right dependence on $\|\bu\|_2$, we need to tune $\alpha$, but we saw that this is impossible. On the other hand, the regret in \eqref{eq:L_2_coin} suffers from a logarithmic factor, which is the price to pay not to have to tune parameters.

In the same way, we can even have a parameter-free regret bound for L$_p$ norms. Assume that the loss functions are $L_q$-Lipschitz with respect to $\|\cdot\|_p$, where $1<p\leq 2$ and $q$ is such that $1/p+1/q=1$. Let $\mathscr{A}_{B}$ be \ac{OMD} with $\mathcal{V}=\{\bx \in \R^d : \|\bx\|_p\leq 1\}$ and learning rate $\eta_t=\frac{\sqrt{p-1}}{L_q\sqrt{2 t}}$ (see Section~\ref{sec:omd_pnorm}). Let $\mathscr{A}_{\text{1d}}$ be the \ac{KT} algorithm for one-dimensional \ac{OCO}.
Then, using the construction in Algorithm~\ref{alg:onedimred}, we have
\[
\Regret_T(\bu)
\leq L_q\left(\|\bu\|_p \sqrt{\ln(e \|\bu\|_p T/\epsilon+1)} + \frac{\|\bu\|_p}{\sqrt{p-1}}\right) \sqrt{2T} +L_q \epsilon, \quad \forall \bu \in \R^d~.
\]

Note that the regret against $\bu=\boldsymbol{0}$ of the parameter-free construction is \emph{constant} with respect to $T$. It is important to understand that there is nothing special in the origin in the unconstrained setting: we could translate the prediction by any offset and get a guarantee that treats the offset as the point with constant regret. This is shown in the next proposition.
\begin{proposition}
Let $\mathscr{A}$ be an \ac{OLO} algorithm that predicts $\bx_t$ and has linear regret $\Regret^{OLO}_T(\bu)$ for any $\bu \in \R^d$. We have that the regret of the predictions $\hat{\bx}_t=\bx_t+\bx_0$ for \ac{OCO} is
\[
\sum_{t=1}^T \ell_t(\hat{\bx}_t) - \sum_{t=1}^T \ell_t(\bu)
\leq \sum_{t=1}^T \langle \bg_t, \bx_t + \bx_0 - \bu\rangle
= \Regret^{OLO}_T(\bu-\bx_0), \quad \forall \bu \in \R^d,
\]
where each $\bg_t \in \partial \ell_t(\hat{\bx}_t)$.
\end{proposition}
\index{parameter-free!online convex optimization|)}

\section{Combining Parameter-free Algorithms Almost for Free}

We now show a useful application of the property of parameter-free \ac{OCO} algorithms of having constant regret against $\bu=\boldsymbol{0}$.

\begin{theorem}
Let $\mathscr{A}_1$ and $\mathscr{A}_2$ be two \ac{OLO} algorithms that produce the predictions $\bx_{t,1}$ and $\bx_{t,2}$, and have regrets equal to $\Regret^{\mathscr{A}_1}_T(\bu)$ and $\Regret^{\mathscr{A}_2}_T(\bu)$, respectively.
Then, predicting with $\bx_t=\bx_{t,1}+\bx_{t,2}$, we have for any $\bu \in \R^d$
\[
\sum_{t=1}^T \langle \bg_t, \bx_t\rangle - \sum_{t=1}^T \langle \bg_t, \bu\rangle
= \min_{\bu=\bu_1+\bu_2} \ \Regret^{\mathscr{A}_1}_T(\bu_1) + \Regret^{\mathscr{A}_2}_T(\bu_2)~.
\]
Moreover, if both algorithms have a constant regret of $\epsilon$ against $\bu=\boldsymbol{0}$, we have for any $\bu \in \R^d$
\[
\sum_{t=1}^T \langle \bg_t, \bx_t\rangle - \sum_{t=1}^T \langle \bg_t, \bu\rangle
\leq \epsilon + \min\left(\Regret^{\mathscr{A}_1}_T(\bu), \Regret^{\mathscr{A}_2}_T(\bu)\right)~.
\]
\end{theorem}
\begin{proof}
Set $\bu_1+\bu_2=\bu$. Then,
\[
\sum_{t=1}^T \langle \bg_t, \bx_t\rangle - \sum_{t=1}^T \langle \bg_t, \bu\rangle
=\sum_{t=1}^T \langle \bg_t, \bx_{t,1}\rangle - \sum_{t=1}^T \langle \bg_t, \bu_1\rangle
+\sum_{t=1}^T \langle \bg_t, \bx_{t,2}\rangle - \sum_{t=1}^T \langle \bg_t, \bu_2\rangle~.
\]
Taking the minimum between the bounds with $\bu_1=\boldsymbol{0}$ and $\bu_2=\boldsymbol{0}$ concludes the proof.
\end{proof}

In words, the above theorem allows us to combine online learning algorithms. If the algorithms we combine have constant regret against the null competitor, then we always get the best of the two guarantees.

\begin{example}
We can combine two parameter-free \ac{OCO} algorithms, one that gives a bound that depends on the L$_2$ norm of the competitor and subgradients, and another one specialized to the L$_1$/L$_\infty$ norm of competitor/subgradients. The above theorem assures us that we will also get the best guarantee between the two, paying only an additive constant factor in the regret.
\end{example}

Of course, upper-bounding the \ac{OCO} regret with the linear regret, the above theorem also upper bounds the \ac{OCO} regret.

\index{parameter-free!learning with experts|(}
\section{Reduction to Learning with Experts}
\label{sec:parameterfree_lea}

In this section, we consider again the \ac{LEA} setting from Section~\ref{sec:lea}.
First, remember that the regret we got from \ac{EG} in Section~\ref{sec:omd_eg} (and similarly for the \ac{FTRL} version in Section~\ref{sec:ftrl_eg}) is
\[
\Regret_T(\bu)
\leq \frac{\KL(\bu;\bpi)}{\eta} + \frac{\eta T L_\infty^2}{2}, \quad \forall \bu \in \Delta^{d-1},
\]
where $\bpi$ is the prior distribution on the experts and $\KL(\cdot; \cdot)$ is the \ac{KL} divergence\index{Kullback--Leibler divergence}.
As we reasoned in the \ac{OCO} case, to set the learning rate, we should know the value of $\KL(\bu;\bpi)$.
If we could set $\eta$ to $\sqrt{\frac{2 \KL(\bu;\bpi)}{L_\infty^2 T}}$, we would obtain a regret of $L_\infty \sqrt{2 T\, \KL(\bu; \bpi)}$.
However, given the adversarial nature of the game, this is impossible.
So, as we did in the \ac{OCO} case, we will see that even this problem can be reduced to betting on a continuous coin, obtaining optimal regret guarantees with a parameter-free algorithm.

\begin{remark}
\index{regret!$\epsilon$-quantile|(textbf}
There exists a different notion of regret for \ac{LEA}. Order the cumulative losses of all actions from lowest to highest and define the \textbf{$\epsilon$-quantile regret} to be the difference between the cumulative loss of the learner and the $\lceil \epsilon d\rceil$-th element in the sorted list. In formulas
\[
\Regret_T(\epsilon) = \sum_{t=1}^T \langle \bg_t, \bp_t \rangle - \sum_{t=1}^T g_{t,i_\epsilon},
\]
where $\epsilon \in [1/d, 1]$ and $i_\epsilon$ is the $\lceil \epsilon d\rceil$-th best-performing action.

This definition makes sense when the number of actions is very large, and there are many of them that are close to the optimal one. Note that the task of guaranteeing a small regret becomes easier with increasing $\epsilon$. So, we would like to design an algorithm whose regret depends on $\ln \frac{1}{\epsilon}$ and it does not depend on $d$ in any way.

We now show that a regret that depends on the \ac{KL} divergence\index{Kullback--Leibler divergence} between a generic competitor $\bu$ and $\bpi$ implies such regret guarantee. Define $\bu_\epsilon$ as the vector that has the coordinates corresponding to the best $\lceil \epsilon d\rceil$ experts equal to $\frac{1}{\lceil \epsilon d\rceil}$ and 0 in the other coordinates. Also, assume to have an algorithm that guarantees an upper bound $\Regret_T(\bu)$ equal to $F_T(\KL(\bu;\bpi))$ for a sequence of losses $\bg_t \in \R^d$, where $F_T$ is a non-decreasing function. Then, setting $\bpi=[1/d, \dots, 1/d]$, we have
\begin{align*}
\Regret_T(\epsilon)
&= \sum_{t=1}^T \langle \bg_t, \bp_t \rangle - \sum_{t=1}^T g_{t,i_\epsilon}
\leq \sum_{t=1}^T \langle \bg_t, \bp_t - \bu_\epsilon\rangle
\leq F_T(\KL(\bu_\epsilon; \bpi))\\
&= F_T \left( \ln \frac{d}{\lceil \epsilon d\rceil} \right)
\leq F_T \left( \ln \frac{1}{\epsilon} \right),
\end{align*}
where in the first inequality we used the fact that the average of a set of numbers is smaller than the biggest number in the set.
\index{regret!$\epsilon$-quantile|)textbf}
\end{remark}

First, let's introduce some notation.
Let $d \ge 2$ be the number of experts and $\Delta^{d-1}$ be the probability simplex\index{probability simplex}. Let $\bpi = [\pi_1, \pi_2, \dots, \pi_d] \in \Delta^{d-1}$ be any \emph{prior} distribution. Let $\mathscr{A}$ be a coin-betting algorithm. We will instantiate $d$ copies of $\mathscr{A}$, each of them with initial wealth $\pi_i$ for $i=1, \dots , d$.

Consider any round $t$. Let $x_{t,i} \in \R$ be the bet of the $i$-th copy of the coin-betting algorithm $\mathscr{A}$. The \ac{LEA} algorithm computes $\widehat \bp_t = [\widehat p_{t,1}, \widehat p_{t,2}, \dots, \widehat p_{t,d}] \in \R_{\geq 0}^d$ as
\begin{equation}
\label{eq:phat}
\widehat p_{t,i} = \max(x_{t,i},0)~.
\end{equation}
Then, the \ac{LEA} advice algorithm predicts $\bp_t = [p_{t,1}, p_{t,2}, \dots, p_{t,d}] \in \Delta^{d-1}$ as
\begin{equation}
\label{eq:preds_experts}
\bp_t = \begin{cases}
\tfrac{\widehat \bp_t}{\|\widehat \bp_t\|_1}, & \text{ if } \widehat \bp_t\neq \boldsymbol{0}, \\
\bpi, & \text{ otherwise}~.
\end{cases}
\end{equation}
After the prediction, the algorithm receives the vector of losses
$\bg_t = [g_{t,1}, g_{t,2}, \dots, g_{t,d}] \in [-L_\infty,L_\infty]^d$. From these losses, we construct the outcome of the continuous coin $c_{t,i} \in [-1,1]$ for the $i$-th copy of the coin-betting algorithm $\mathscr{A}$, defined as
\begin{equation}
\label{eq:gradients_experts_reduction}
c_{t,i} = \frac{1}{2L_\infty}
\begin{cases}
\langle \bg_t, \bp_t \rangle - g_{t,i}  & \text{if } x_{t,i} > 0, \\
\max\left(\langle \bg_t, \bp_t \rangle - g_{t,i} , 0\right) & \text{if } x_{t,i} \le 0~.
\end{cases}
\end{equation}

The construction above defines a \ac{LEA} algorithm that predicts $\bp_t$, based on the algorithm $\mathscr{A}$. We can prove the following regret bound for it.
\begin{theorem}[Regret Bound for Experts]
\label{theorem:regret-bound-experts}
Let $\mathscr{A}$ be a coin-betting algorithm that guarantees a wealth after $t$ rounds with initial money equal to $a>0$ of at least $a \exp(f_t(\sum_{j=1}^t c'_j))$ for any sequence of continuous coin outcomes $c'_1, \dots, c'_t \in [-1,1]$.
Assume each \ac{LEA} loss to satisfy $|\bg_{t,i}| \leq L_\infty$ for $i=1, \dots, d$. Then, the regret of the \ac{LEA} algorithm with prior $\bpi \in \Delta^{d-1}$ that predicts at each round with $\bp_t$ in \eqref{eq:preds_experts} satisfies
\[
\Regret_T(\bu)
= \sum_{t=1}^T \langle \bg_t, \bp_t -\bu\rangle
\le 2 L_\infty h\left( \KL(\bu;\bpi) \right), \quad \forall T \ge 1, \forall \bu \in \Delta^{d-1},
\]
for any $h:\R\to\R$ concave and non-decreasing such that $x\leq h(f_T(x))$.
\end{theorem}
\begin{proof}
We first prove that $\langle \bc_{t}, \bx_{t}\rangle \le 0$. Indeed,
\begin{align*}
\langle \bc_{t}, \bx_{t}\rangle
&=\sum_{i=1}^d c_{t,i} x_{t,i}\\
&= \sum_{i : x_{t,i} > 0} \frac{\max(x_{t,i}, 0) (\langle \bg_t, \bp_t \rangle-g_{t,i})}{2 L_\infty}  +  \sum_{i :  x_{t,i} \le 0} \frac{x_{t,i} \max(\langle \bg_t, \bp_t \rangle - g_{t,i}, 0)}{2 L_\infty} \\
& = \frac{1}{2 L_\infty}\|\widehat \bp_t\|_1 \sum_{i=1}^d p_{t,i} (\langle \bg_t, \bp_t \rangle - g_{t,i} )  +  \frac{1}{2 L_\infty}\sum_{i  :  x_{t,i} \le 0} x_{t,i} \max(\langle \bg_t, \bp_t\rangle - g_{t,i}, 0) \\
& = 0  + \frac{1}{2 L_\infty} \sum_{i \, : \, x_{t,i} \le 0} x_{t,i} \max(\langle \bg_t, \bp_t\rangle - g_{t,i}, 0)
\ \le 0 ~.
\end{align*}
The first equality follows from the definition of $c_{t,i}$. To see the second equality, consider two cases: if $x_{t,i} \le 0$ for all $i$, then $\|\widehat p_t\|_1 = 0$ and therefore both $\|\widehat \bp_t\|_1 \sum_{i=1}^d p_{t,i} (g_{t,i} - \langle \bg_t, \bp_t \rangle)$ and $\sum_{i \, : \, x_{t,i} > 0} \max(x_{t,i},0) (g_{t,i} - \langle \bg_t, \bp_t \rangle)$ are trivially zero. If $\|\widehat \bp_t\|_1 > 0$ then $ \max(x_{t,i},0) = \widehat p_{t,i} = \|\widehat \bp_t\|_1 p_{t,i}$ for all $i$.

From the assumption on the coin-betting algorithm $\mathscr{A}$, for the arm $i$, we have that
\begin{equation}
\label{equation:experts-one-dimensional-assumption}
\Wealth_{T,i} = \pi_i + \sum_{t=1}^T c_{t,i} x_{t,i}
\geq \pi_i\exp\left(f_T\left(\sum_{t=1}^T c_{t,i}\right)\right)~.
\end{equation}
So, inequality $\langle \bc_t, \bx_t \rangle \le 0$ and \eqref{equation:experts-one-dimensional-assumption} imply
\begin{equation}
\label{equation:bounded-potential}
\sum_{i=1}^d  \pi_i \exp\left(f_T \left(\sum_{t=1}^T c_{t,i} \right)\right)
\le 1 + \sum_{i=1}^d \sum_{t=1}^T  c_{t,i} x_{t,i}
\le 1~ .
\end{equation}
Now, for any competitor $\bu \in \Delta^{d-1}$, we derive
\begin{align*}
\Regret_T(\bu)
&= \sum_{t=1}^T \langle \bg_t, \bp_t - \bu  \rangle \\
&= \sum_{t=1}^T \sum_{i=1}^d u_i \left(\langle \bg_t, \bp_t \rangle - g_{t,i} \right) \\
& \le 2 L_\infty \sum_{t=1}^T \sum_{i=1}^d u_i c_{t,i} \quad \text{(definition of } c_{t,i} \text{)} \\
& \leq 2 L_\infty\sum_{i=1}^d u_i h\left(f_T\left( \sum_{t=1}^T c_{t,i}\right) \right)  \quad \text{(definition of the } h(x) \text{)} \\
& \le 2 L_\infty h\left(\sum_{i=1}^d u_i f_T\left( \sum_{t=1}^T c_{t,i}\right) \right) \quad \text{(Jensen's inequality\index{inequality!Jensen's})} \\
& \le 2 L_\infty h\left(\KL(\bu;\bpi)+\ln \sum_{i=1}^d \pi_i e^{f_T\left(\sum_{t=1}^T c_{t,i}\right)} \right) \quad \text{(Fenchel--Young\index{inequality!Fenchel--Young's})} \\
& \le 2 L_\infty h\left(\KL(\bu;\bpi)\right) \quad \text{(by \eqref{equation:bounded-potential})}~. \qedhere
\end{align*}
\end{proof}

Now, we could think of using the \ac{KT} bettor with this theorem, obtaining the Algorithm~\ref{alg:kt-experts}.

\begin{algorithm}[t]
\begin{algorithmic}[1]
\caption{Learning with Expert Advice based on \ac{KT} Bettors}
\label{alg:kt-experts}
{
\REQUIRE{Number of experts $d$, prior distribution $\bpi \in \Delta^{d-1}$, $L_\infty>0$}
\FOR{$t=1,2,\dots,T$}
\STATE{Set $x_{t,i} = \frac{\sum_{j=1}^{t-1} c_{j,i}}{t} \left(\pi_i + \sum_{j=1}^{t-1} c_{j,i} x_{j,i} \right)$ for $i=1,\dots,d$}
\STATE{Set $\widehat{p}_{t,i} = \max(x_{t,i},0)$ for $i=1,\dots,d$}
\STATE{Predict with $\bp_t =
\begin{cases}
\widehat{\bp}_t/\|\widehat{\bp}_t\|_1 & \text{if $\|\widehat \bp_t\|_1 > 0$} \\
\bpi & \text{if $\|\widehat \bp_t\|_1 = 0$}
\end{cases}$}
\STATE{Receive loss vector $\bg_t \in [-L_\infty,L_\infty]^d$}
\STATE{Set $c_{t,i} = \frac{1}{2 L_\infty}\begin{cases}
\langle \bg_t, \bp_t \rangle - g_{t,i},  & \text{if $x_{t,i} > 0$} \\
\max(\langle \bg_t, \bp_t \rangle - g_{t,i} ,0), & \text{if $x_{t,i} \le 0$}
\end{cases}$ for $i =1,\dots, d$}
\ENDFOR
}
\end{algorithmic}
\end{algorithm}
Unfortunately, this algorithm obtains a sub-optimal regret guarantee. In fact, remembering the lower bound on the wealth of \ac{KT} and setting $h(x)=\sqrt{2T(x+\frac12 \ln T + 1 )}$, we have
\[
\Regret_T(\bu)
\leq L_\infty \sqrt{8 T\left(\KL(\bu;\bpi)+\frac12\ln T+1\right)}~.
\]
We might think that the $\ln T$ is the price we have to pay to adapt to the unknown competitor $\bu$. However, it turns out it can be removed. In the next section, we see how to change the \ac{KT} strategy to obtain the optimal guarantee.

\section{Optimal Regret Bounds by Losing at Most a Constant Fraction of Money}
\label{sec:shifted_kt}

In the reduction before, if we use the \ac{KT} betting strategy, we would have a $\ln T$ term under the square root.
It turns out that we can avoid that term if we know the number of rounds beforehand. Then, in case $T$ is unknown, we can just use a doubling trick\index{doubling trick} (see Problem~\ref{exercise:doubling_trick_omd}), paying only a constant multiplicative factor in the regret.

More in detail, the logarithmic term in the regret comes from the $\frac{1}{\sqrt{T}}$ term in the lower bound on the wealth.
Note that in the case in which the number of heads in the sequence is equal to the number of tails, so that $\sum_{t=1}^T c_t=0$, the guaranteed wealth becomes proportional to $\frac{1}{\sqrt{T}}$. So, for $T$ that goes to infinity, the bettor will lose all of their money.

Instead, we need a more conservative strategy that, in the worst case, loses only a constant fraction of the initial wealth. In this case, the betting strategy has to pace its betting, possibly with the knowledge of the duration of the game. At the same time, it must still gain an exponential amount of money when the coin outcomes are biased towards one side.

We will prove that this is possible, designing a new betting strategy, through a simple reduction of the coin-betting problem to an \ac{OLO} one.
First of all, observe that $\ln(1+x)\geq x-x^2$ for $|x|\leq 1/2$. This means that the log wealth of a coin-betting strategy that in each round bets a signed fraction $\beta_t \in [-1/2, 1/2]$ on the continuous coins $c_t \in [-1,1]$ can be lower bounded as
\begin{equation}
\sum_{t=1}^T \ln(1+ c_t \beta_t)
\geq \sum_{t=1}^T c_t \beta_t - \sum_{t=1}^T c_t^2 \beta^2_t
\geq \sum_{t=1}^T c_t \beta_t - \sum_{t=1}^T \beta^2_t~.
\label{eq:shifted_kt_eq1}
\end{equation}
This suggests a very simple strategy: run \ac{FTRL} on the surrogate losses $\ell_t(\beta)= -c_t \beta$, with regularizer $\psi_t(\beta)=(t+a) \beta^2$ for $t\leq T$, $\psi_{T+1}=\psi_T$, and $\mathcal{V}=[-1/2,1/2]$, where $a> 0$ will be chosen in the following.
From Corollary~\ref{cor:ftrl_lip}, we have
\[
\sum_{t=1}^T (-c_t) (\beta_t - u)
\leq (T+a) u^2 + \sum_{t=1}^T \frac{c_t^2}{4(t+a)} - \sum_{t=1}^{T-1} \beta_{t+1}^2, \quad \forall u \in [-1/2, 1/2],
\]
and $\beta_t=\frac{\sum_{i=1}^{t-1} c_i}{2(t+a)}\in [-1/2,1/2]$.
Reordering, using \eqref{eq:shifted_kt_eq1} and the fact that $\beta_1=0$, we have
\begin{align*}
-\sum_{t=1}^T \ln (1+ c_t \beta_t)
&\leq \sum_{t=1}^T (-c_t \beta_t + \beta_t^2)
\leq (T+a) u^2 -\sum_{t=1}^T c_t u + \sum_{t=1}^T \frac{c_t^2}{4(t+a)}\\
&\leq (T+a) u^2 -\sum_{t=1}^T c_t u + \frac{T}{4a}~.
\end{align*}
Now, choose $u=\frac{\sum_{t=1}^T c_t}{2(T+a)}$, to have
\[
\sum_{t=1}^T \ln (1+ c_t \beta_t)
\geq \frac{1}{4 (T+a)}\left(\sum_{t=1}^T c_t\right)^2 - \frac{T}{4a}~.
\]
Exponentiating, we have
\[
\prod_{t=1}^T (1+c_t \beta_t)
\geq \exp\left[\frac{1}{4 (T+a)}\left(\sum_{t=1}^T c_t\right)^2 - \frac{T}{4a}\right]~.
\]

We can now use this betting strategy in the expert reduction in Theorem~\ref{theorem:regret-bound-experts}, setting $h(x)=\sqrt{4(T+a) (x+\frac{T}{4a})}$. Hence, choosing for example $a=\frac{T}{4}$, we have
\begin{equation}
\label{eq:kl_experts}
\Regret_T(\bu)
\leq L_\infty \sqrt{20 T (\KL(\bu;\bpi)+1)}, \quad \forall \bu \in \Delta^{d-1}~.
\end{equation}

\begin{remark}
The update we obtain is very similar to the \ac{KT} ones. The factor $2$ comes from the approximation of the logarithm and the fact that \ac{FTRL} requires bounded gradients, while we saw in Theorem~\ref{thm:kt} that \ac{KT} is based on an algorithmic framework that allows unbounded gradients and uses a prior over the possible betting fractions. So, the additive factor $a$ can also be obtained in \ac{KT} by putting more mass around the 0 betting fraction in its prior, but the proof is more complex.
\end{remark}

Note that this betting strategy can also be used in the \ac{OCO} reduction. Given that we removed the logarithmic term in the exponent, in the one-dimensional case, we obtain the optimal regret of
\begin{equation}
\label{eq:oco_betting_optimal}
\Regret_T(u)
\leq |u| L \sqrt{5 T \ln \left(\frac{|u| \sqrt{19 T}}{\epsilon}+1\right)}+L \epsilon, \quad \forall u \in \R,
\end{equation}
where we gained in the $\sqrt{T}$ term inside the logarithm, instead of the $T$ term of the \ac{KT} algorithm. This implies that now we can set $\epsilon$ to $\sqrt{T}$ and obtain an asymptotic rate of $\mathcal{O}(\sqrt{T})$ rather than $\mathcal{O}(\sqrt{T \ln T})$ for $T\to \infty$.

\index{parameter-free!learning with experts|)}

\index{parameter-free!sleeping experts|(}
\section{Example: A Parameter-free Algorithm for Sleeping Experts}
\label{sec:from_kt_to_sleeping_experts}

Consider the setting of learning with sleeping experts from Section~\ref{sec:sleeping_experts}. We will show how to easily design a parameter-free algorithm for this setting.


Consider using \ac{KT}\index{Krichevsky--Trofimov!online convex optimization algorithm} to solve \ac{OLO} in $\R$, but we slightly modify \ac{KT} to ignore the rounds where the gradients are zero:
\[
x_t
=\frac{-\sum_{j=1}^{t-1} g_j}{L(1+ \sum_{j=1}^{t-1} \indevent{\{g_j\neq0\}})} \left(\epsilon - \sum_{j=1}^{t-1} \frac{g_j}{L} x_j\right)~.
\]
It is easy to show---see Problem~\ref{exercise:kt_non_zero}---that this algorithm will guarantee the following regret upper bound:
\[
\sum_{t=1}^T g_t (x_t-u)
\leq |u| L \sqrt{2 \sum_{t=1}^T \indevent{\{g_t\neq0\}} \ln \left(1+\frac{e |u|\sum_{t=1}^T \indevent{\{g_t\neq0\}}}{\epsilon}\right)}+ L \epsilon, \ \forall u \in \R~.
\]

We now use \emph{three} black-box reductions in a row: first, we transform this \ac{OLO} algorithm for $\R$ to an \ac{OLO} algorithm for $\R_{\geq0}$. Second, we use $d$ of such algorithms to produce predictions in $\R^d_{\geq 0}$. Finally, we use the sleeping expert reductions from Section~\ref{sec:sleeping_experts}.

\begin{algorithm}[t]
\begin{algorithmic}[1]
\caption{Sleeping Expert Algorithm based on \ac{KT} Bettors}
\label{alg:kt-sleepingexperts}
{
\REQUIRE{Number of experts $d$, $\epsilon_i$ for $i=1, \dots,d$, $R>0$}
\FOR{$t=1,2,\dots,T$}
\STATE{Set $z_{t,i}=\frac{-\sum_{j=1}^{t-1} \tilde{g}_{j,i}}{R(1+\sum_{j=1}^{t-1} \indevent{\{\tilde{g}_{j,i}\neq0\}})} \left(\epsilon_i - \sum_{j=1}^{t-1} \frac{\tilde{g}_{j,i}}{R} z_{j,i}\right), \ i=1, \dots, d$}
\STATE{Receive the state of the experts $a_{t,i} \in \{0,1\}, i=1, \dots, d$}
\IF{$\sum_{j=1}^d a_{t,j} \max(z_{t,j},0)\neq0$}
\STATE{Predict with $x_{t,i} = \frac{a_{t,i} \max(z_{t,i},0)}{\sum_{j=1}^d a_{t,j} \max(z_{t,j},0)}$ for $i=1,\dots,d$}
\ELSE
\STATE{Predict with $\bx_t$ equal to the uniform distribution over the active experts}
\ENDIF
\STATE{Receive loss vector $\bg_t \in \R^d$ such that $\max_{i:a_{t,i}=1} g_{t,i}-\min_{i:a_{t,i}=1} g_{t,i}\leq R$}
\STATE{Set $\hat{g}_{t,i} =a_{t,i}(g_{t,i}-\langle \bg_t, \bx_t\rangle), \ i =1, \dots, d$}
\STATE{$\tilde{g}_{t,i} = \begin{cases} \min(\hat{g}_{t,i},0), & z_{t,i}<0\\ \hat{g}_{t,i}, & z_{t,i} \geq 0 \end{cases}, \ i=1, \dots, d$}
\ENDFOR
}
\end{algorithmic}
\end{algorithm}

Using a black-box reduction to constrain it to $\R_{\geq 0}$, as in Example~\ref{example:constrained_1d}, we have
\begin{align*}
\tilde{g}_t &= \begin{cases}
\min(g_t,0), & z_t<0\\
g_t, & z_t \geq 0
\end{cases}\\
z_t &=\frac{-\sum_{j=1}^{t-1} \tilde{g}_j}{L_\infty(1+\sum_{j=1}^{t-1} \indevent{\{\tilde{g}_j\neq0\}})} \left(\epsilon - \sum_{j=1}^{t-1} \frac{\tilde{g}_j}{L_\infty} z_j\right)\\
x_t &=\max(z_t,0)
\end{align*}
and obtain a regret of $u L_\infty \sqrt{2 T' \ln (e T' u /\epsilon+1)}+\epsilon L_\infty$, where $T'$ is the number of times that $\tilde{g}_{t}\neq 0$. Next, we use this algorithm to produce predictions over $\R^d_{\geq 0}$, using each one-dimensional algorithm on each coordinate, as seen in Section~\ref{sec:coordinate_kt}. So, we obtain a regret of
\[
\Regret_T(\bu)
\leq L_\infty \sum_{i=1}^d u_i \sqrt{2 T'_i \ln (e  u_i T'_i/\epsilon_i +1)} + L_\infty \sum_{i=1}^d \epsilon_i, \quad \forall \bu \in \R_{\geq 0}^d~.
\]
Finally, using the sleeping expert reduction from $\R^d_{\geq 0}$ from Section~\ref{sec:sleeping_experts}, we get Algorithm~\ref{alg:kt-sleepingexperts}.

Given that $a_{t,i}=0$ implies $\tilde{g}_{t,i}=0$, the upper bound on the sleeping expert regret of Algorithm~\ref{alg:kt-sleepingexperts} against any expert $i$ is $R \sqrt{2\sum_{t=1}^T a_{t,i} \ln (1+e \sum_{t=1}^T a_{t,i}/\epsilon_i) } + R \sum_{j=1}^d \epsilon_j$, that depends on the number of rounds that the expert $i$ was active.
\index{parameter-free!sleeping experts|)}

\begin{remark}
\label{remark:kt-sleepingexperts-countable}
Algorithm~\ref{alg:kt-sleepingexperts} extends verbatim to a countable set of
experts, provided that only finitely many experts are active on each round and
the parameters satisfy $\sum_{i=1}^{\infty}\epsilon_i<\infty$. In this case, the
same proof gives, for $\bu \in \R^{\Nat}_{\geq 0}$,
\[
\Regret_T^{\mathrm{sleeping}}(\bu)
\leq R \sum_{i=1}^\infty u_i \sqrt{2\sum_{t=1}^T a_{t,i} \ln\left(1 + u_i e\sum_{t=1}^T a_{t,i}/\epsilon_i\right)} + R \sum_{i=1}^{\infty}\epsilon_i~.
\]
\end{remark}

\section{From Portfolio Selection to Continuous-Coin Betting}
\label{sec:port_to_cb}

Consider the online coin-betting problem again. We said that it corresponds to $\mathcal{V}=[-1,1]$ and $\ell_t(x)=-\ln(1+c_t x)$, where the coin outcome $c_t \in \{-1,1\}$. We also considered an extension of this problem, the \emph{continuous} coin-betting problem, where $c_t \in [-1,1]$. Now, we show how to reduce the continuous coin-betting problem to the portfolio selection problem we saw in Chapter~\ref{ch:universal_portfolio}.

We consider portfolio selection with 2 stocks, and we set the market gains to $w_{t,1}=1+c_t$ and $w_{t,2}=1-c_t$. Note that $w_{t,1},w_{t,2}\geq 0$, so they are legal market gains. Let $[x_t,1-x_t]$ be the play of a 2-stock portfolio algorithm, where $0\leq x_t\leq1$.
Then, using $\beta_t = 2 x_t -1$ as the signed betting fraction of a continuous-coin-betting algorithm on $c_t$ ensures that the gain in the coin-betting problem coincides with the gain in the portfolio selection problem.

The proof is immediate. We have
\begin{align*}
w_{t,1} x_t + w_{t,2} (1-x_t)
&= x_t + x_t c_t + (1-c_t)(1-x_t)\\
&= x_t + x_t c_t + 1 - x_t - c_t + c_t x_t
= 1+ c_t \beta_t,
\end{align*}
where $\beta_t$ is the signed betting fraction equal to $2 x_t -1$.
Given that $x_t \in [0,1]$, the range of the betting fractions is in $[-1,1]$ as we wanted.

This implies that we can use the above reduction to transform any portfolio algorithm with 2 stocks into an algorithm for continuous-coin betting. In turn, given what we said in this chapter, this means that portfolio algorithms with 2 stocks can be used for \ac{OCO}. This does not make \ac{KT} and the optimal algorithm in Section~\ref{sec:shifted_kt} superfluous because the update rule of the universal portfolio algorithm, even with 2 stocks, is still linear in the iteration number.

\section{History Bits}
\index{parameter-free|(}
The keyword ``parameter-free'' was introduced in \citet{ChaudhuriYH09} for a similar strategy for the \ac{LEA} problem. It is now used as an umbrella term for all online algorithms that guarantee the optimal regret uniformly over the competitor class. Another less-used name to denote the same property is ``comparator-adaptive''~\citep[e.g.,][]{vanderHoevenCH20}, but in my opinion, it is a misnomer because it implicitly assumes the existence of an optimal comparator, which is false in the unbounded \ac{OLO} setting.
Note that, given the allure of the name ``parameter-free'', the term has been adopted in many other domains, with many different meanings. However, when used in the online learning literature, it has \emph{only} the meaning in Definition~\ref{def:parameter-free}. This means that in this literature ``parameter-free'' algorithms might still require the knowledge of other characteristics of the problem, for example, the Lipschitz constant of the losses or the number of rounds. This is fine: it is a technical term, and we can assign to it any definition we like, as for ``smooth'' or ``universal''.
\index{parameter-free|)}

\index{parameter-free!online convex optimization|(}
The first algorithm for 1-d parameter-free \ac{OCO} is from \citet{StreeterM12}, but the bound was suboptimal. The algorithm was then extended to Hilbert spaces in \citet{Orabona13}, still with a suboptimal bound. The optimal bound in Hilbert space, matching the one in~\eqref{eq:oco_betting_optimal}, was obtained in \citet{McMahanO14}.
This variant allows us to remove the factor $T$ in the logarithm by setting $\epsilon=\mathcal{O}(\sqrt{T})$, at the price of having a $\mathcal{O}(\sqrt{T})$ regret versus $\bu=\boldsymbol{0}$. The optimal constant for this kind of guarantee is obtained in \citet{ZhangCP22} through a PDE-based analysis, proving also a matching lower bound.

The idea of using coin betting to do parameter-free \ac{OCO} was introduced in \citet{OrabonaP16}. A different reduction from a betting algorithm to the specific case of linear regression with square loss was proposed by \citet{Vovk06}.
The \ac{KT} algorithm is from \citet{KrichevskyT81} and its extension to the ``continuous coin''\index{continuous coin} is from \citet{OrabonaP16}. The regret bound of the \ac{KT} algorithm is optimal up to an additive constant because its regret upper bound matches the one of the minimax strategy for known $T$, the so-called \emph{normalized maximum likelihood strategy}~\citep{Shtarkov87}\index{normalized maximum likelihood strategy}, up to an additive constant.
The regret/reward duality relationship was proved for the first time in \citet{McMahanO14}.
Theorem~\ref{thm:kt2} is new, and it is an improved version of a guarantee with a worse constant in \citet{OrabonaP16}.
There are also more refined betting algorithms that allow for obtaining parameter-free \ac{OCO} algorithms that depend on the sum of the squared norms of the subgradients, rather than time~\citep{CutkoskyO18}, also in a scale-free way~\citep{MhammediK20}\index{algorithm!scale-free}. Recently, parameter-free algorithms have also been extended to some non-convex functions in the stochastic setting~\citep{OrabonaP21}.

The approach of using a coordinate-wise version of the coin-betting algorithm was proposed in the first paper on parameter-free \ac{OLO} in \citet{StreeterM12}. Recently, the same approach with a special coin-betting algorithm was also used for optimization of deep neural networks~\citep{OrabonaT17}.
The idea of combining two parameter-free \ac{OLO} algorithms to obtain the best of the two guarantees is from \citet{Cutkosky19b}.

\citet{OrabonaP16} proposed a different way to transform a coin-betting algorithm into an \ac{OCO} algorithm that works in $\R^d$ or even in Hilbert spaces. However, that approach seems to work only for the L$_2$ norm, and it is not a black-box reduction. That said, the reduction in \citet{OrabonaP16} seems to have a better empirical performance compared to the one in Theorem~\ref{thm:reduction_direction_magnitude}.
\index{parameter-free!online convex optimization|)}

\index{regret!$\epsilon$-quantile|(}
\index{parameter-free!learning with experts|(}
The first parameter-free algorithm for experts is from \citet{ChaudhuriYH09}, named NormalHedge\index{NormalHedge algorithm}, where they introduced the concept of $\epsilon$-quantile regret and obtained a bound of $\mathcal{O}(\sqrt{(T+\ln^2 d)(1+\ln\frac{1}{\epsilon})})$ as $T\to\infty$, but without a closed-form update. Then, \citet{ChernovV10} removed the spurious dependence on $d$, again with an update without a closed form. \citet{OrabonaP16} showed that this guarantee can be efficiently obtained through the novel reduction to coin betting in Theorem~\ref{theorem:regret-bound-experts}. This kind of regret guarantee can be improved to depend on the sum of the squared losses rather than on time, but with an additional $\ln \ln T$ factor, as in the Squint algorithm~\citep{KoolenVE15}\index{Squint algorithm}. It is worth noting that the Squint algorithm can be interpreted exactly as a coin-betting algorithm plus a variant of the reduction in Theorem~\ref{theorem:regret-bound-experts}. The first lower bound for the $\epsilon$-quantile regret is proved in \citet{NegreaBCOR21}.
\index{regret!$\epsilon$-quantile|)}
\index{parameter-free!learning with experts|)}

The betting strategy in Section~\ref{sec:shifted_kt} is new, and it is a simplification of the methods in \citet{ChenCO22} and in \citet{OrabonaP21}.
The arXiv version of this book~\citep{Orabona19} contained yet another derivation, obtained from a distillation of the ideas of the ``shifted-KT'' potentials in \citet{OrabonaP16}, yet more complex than the presented one. See also \citet{Orabona26} for an explanation of the connection between shifted-KT potentials and gambling priors centered around the origin.

The variant of \ac{KT} in Section~\ref{sec:from_kt_to_sleeping_experts} that updates only when $g_t\neq 0$ appeared for the first time in \citet{ChenKLO22}.

The reduction in Section~\ref{sec:port_to_cb} is folklore, and it appears in \citet{OrabonaJ21}.

Recently, coin-betting algorithms have found numerous applications, such as the design of parameter-free online multi-task learning algorithms~\citep{DeneviPS20}, parameter-free particle-based algorithms~\citep{SharrockN23,SharrockDN23,SharrockMN23}, and the derivation of time-uniform concentration inequalities (see Chapter~\ref{ch:online_to_x}).

\section{Exercises}


\begin{exer}
\label{exercise:kt_equality}
Define $F_t(x):=\frac{2^t\Gamma\left(\frac{t+1+x}{2}\right)\Gamma\left(\frac{t+1-x}{2}\right)}{\pi \Gamma(t+1)}$ for $|x|\leq t$. Prove that $F_t(x+c) = F_{t-1}(x)(1+c x/t)$ for all $x \in[ -(t-1),t-1]$ and $c \in \{-1,1\}$.
\end{exer}

\begin{exer}
Prove that $\ell_t(x)=-\ln(1+ z_t\, x)$ with $\mathcal{V}=\{x \in \R: |x|\leq 1/2\}$ and $|z_t|\leq 1, \ t=1,\dots,T$ are exp-concave\index{function!exp-concave}. Then, using the \ac{ONS} algorithm, give an algorithm and a regret bound for a game with these losses. Finally, show a wealth guarantee of the corresponding coin-betting strategy. Hint: see \citet{CutkoskyO18}.
\end{exer}

\begin{exer}
Using the same proof technique in the section, find a betting strategy whose wealth depends on $\sum_{t=1}^T |c_t|$ rather than on $T$.
\end{exer}

\begin{exer}
\label{exercise:kt_non_zero}
Consider the variant of \ac{KT} for \ac{OLO} that ignores the rounds where the $g_t=0$, by predicting with $x_t=\frac{-\sum_{j=1}^{t-1} g_j}{1+\sum_{j=1}^{t-1} \indevent{\{g_j\neq0\}}} \left(\epsilon - \sum_{j=1}^{t-1} g_j x_j\right)$, where each $|g_t|\leq 1$. Prove that it guarantees a regret of $\mathcal{O}(|u|\sqrt{\sum_{t=1}^T \indevent{\{g_t\neq0\}} \ln \frac{1+|u|\sum_{t=1}^T \indevent{\{g_t\neq0\}}}{\epsilon}}+\epsilon)$ for any $u \in \R$ and the linear losses $\ell_t(x)= g_t x$.
\end{exer}

\begin{exer}
\label{exercise:parameter-free_ftrl}
We want to design a one-dimensional \ac{FTRL} parameter-free algorithm, just using Lemma~\ref{lemma:ftrl_equality}. Let $\epsilon>0$, set the number of rounds $T\geq 4$, $\mathcal{V}=\R_{\geq 0}$, and let $\psi_t(x)=\sqrt{T} x \ln \frac{x\sqrt{T}}{\epsilon}+\frac{t-1}{\sqrt{T}}x - x \sqrt{T}$. Show that the regret of this algorithm on linear losses $\ell_t(x)=g_t x$, where $|g_t|\leq 1$ is upper bounded by $u \sqrt{T} \ln \frac{u \sqrt{T}}{\epsilon}+ \epsilon$ for all $u \in \R_{\geq 0}$. Hint: You might need the elementary inequality $1+x\geq \exp(x-x^2)$ for all $|x|\leq 1/2$.
\end{exer}

\acresetall

\chapter{Dynamic, Strongly Adaptive, and Tracking Regret}
\label{ch:dynamic}

We saw that the definition of regret makes sense as a direct generalization of both the stochastic setting and the offline optimization. However, in some cases, we want to guarantee something stronger, that is, we would like the algorithm to have small regret with respect to multiple comparators. There are multiple ways to formalize this idea, and we will see three of these frameworks: \emph{Dynamic}, \emph{Strongly Adaptive}, and \emph{Tracking} regret.

\acresetall

\section{Dynamic Regret}
\index{regret!dynamic|(textbf}
Our first extension is to use multiple comparators, using the concept of \emph{dynamic regret}.
\begin{definition}
The \textbf{dynamic regret} is defined as
\[
\DRegret(\bu_1, \dots, \bu_T)
:=\sum_{t=1}^T \ell_t(\bx_t) - \sum_{t=1}^T \ell_t(\bu_t),
\]
where $\bu_1, \dots, \bu_T \in \mathcal{V}$ and $\mathcal{V} \subseteq \R^d$ is the feasible set.
\end{definition}
For additional clarity, we will also refer to the standard notion of regret as the \emph{static regret}.

The use of a sequence of competitors means that now our class of competitors is strictly larger than in the static case, so the cumulative loss of the competitors could be smaller. In turn, this means that the average loss of an algorithm with sublinear dynamic regret can converge to a potentially smaller average loss with respect to the static case. In this view, guarantees on the dynamic regret are strictly stronger than guarantees on the static regret.

Is it always possible to obtain sublinear dynamic regret? We already know that in the case that $\bu_1= \dots= \bu_T$ it is possible, because we recover the standard regret case. On the other hand, it should be intuitive that the problem becomes more and more difficult the more the sequence of $\bu_t$ changes over time. There are various ways to quantify this complexity, and a common one is the \textbf{path length}\index{path length|textbf} $P^{\phi}:\mathcal{V}^T \to \R_{\geq 0}$ defined as
\[
P^{\phi}(\bu_1, \dots, \bu_T)
:=\sum_{t=2}^T \phi(\bu_t-\bu_{t-1}),
\]
where $\phi:\R^d\to \R_{\geq 0}$ is a function that measures the shift from $\bu_{t-1}$ to $\bu_t$. In particular, we will instantiate $\phi$ to be a norm.
In the following, we will see how to design an online algorithm whose dynamic regret depends on the path length in the case that the feasible set is bounded.

\subsection{Dynamic Regret of Online Mirror Descent}

It turns out that some online learning algorithms already satisfy a dynamic regret bound without any additional change.
For \ac{OMD}, we can state the following theorem.
\begin{theorem}
\label{thm:dynamic}
Let $B_\psi$ be the Bregman divergence\index{Bregman divergence} with respect to $\psi: \mathcal{X} \to \R$ and assume $\psi$ to be closed and $\lambda$-strongly convex with respect to $\|\cdot\|$ in $\mathcal{V}\cap \interior \mathcal{X}$, where $\mathcal{V} \subseteq \mathcal{X}$ is a non-empty closed convex set. Assume that $\bx_t \in \interior \mathcal{X}$ for $t=1, \dots, T+1$. Then, $\forall \bu_1, \dots, \bu_T \in \mathcal{V}$, \ac{OMD} with constant learning rate $\eta$ in Algorithm~\ref{alg:omd} satisfies
\begin{align*}
\DRegret(\bu_1, \dots, \bu_T)
&\leq \frac{B_\psi(\bu_{T};\bx_1) + Q P^{\|\cdot\|}(\bu_1, \dots, \bu_T)}{\eta}+ \frac{\eta}{2\lambda} \sum_{t=1}^T \|\bg_t\|^2_\star\\
&\quad- \frac{B_\psi(\bu_{T}; \bx_{T+1})}{\eta},
\end{align*}
where $Q= \max_{t=1,\dots, T} \ \|\nabla \psi(\bx_t)-\nabla \psi(\bx_1)\|_\star$.
\end{theorem}
\begin{proof}
From the one-step inequality for \ac{OMD} with the competitor $\bu_t$ in Lemma~\ref{lemma:omd_one_step}, we have
\begin{align*}
\eta(\ell_t(\bx_t)-\ell_t(\bu_t))
&\leq B_\psi(\bu_t;\bx_t)-B_\psi(\bu_t;\bx_{t+1}) + \frac{\eta^2}{2 \lambda} \|\bg_t\|^2_\star\\
&=B_\psi(\bu_t;\bx_t)-B_\psi(\bu_{t+1};\bx_{t+1}) + B_\psi(\bu_{t+1};\bx_{t+1}) - B_\psi(\bu_t;\bx_{t+1}) \\
&\quad + \frac{\eta^2}{2 \lambda} \|\bg_t\|^2_\star~.
\end{align*}

Dividing by $\eta$ and summing over time, we have
\begin{align*}
\sum_{t=1}^T (\ell_t(\bx_t)-\ell_t(\bu_t))
&\leq \sum_{t=1}^T \frac{B_\psi(\bu_t;\bx_t)-B_\psi(\bu_{t+1};\bx_{t+1})}{\eta}\\
&\quad + \sum_{t=1}^T \frac{B_\psi(\bu_{t+1};\bx_{t+1}) - B_\psi(\bu_t;\bx_{t+1})}{\eta} + \sum_{t=1}^T \frac{\eta}{2 \lambda} \|\bg_t\|^2_\star \\
&= \frac{B_\psi(\bu_1;\bx_1)-B_\psi(\bu_{T+1};\bx_{T+1})}{\eta} \\
&\quad + \sum_{t=1}^T \frac{B_\psi(\bu_{t+1};\bx_{t+1}) - B_\psi(\bu_t;\bx_{t+1})}{\eta} + \frac{\eta}{2\lambda} \sum_{t=1}^T \|\bg_t\|^2_\star~.
\end{align*}
Now, observe that
\begin{align*}
B_\psi&(\bu_{t+1};\bx_{t+1}) - B_\psi(\bu_t;\bx_{t+1})\\
&=\psi(\bu_{t+1})-\psi(\bx_{t+1})-\langle \nabla \psi(\bx_{t+1}), \bu_{t+1}-\bx_{t+1}\rangle -\psi(\bu_t) +\psi(\bx_{t+1})\\
&\quad +\langle \nabla \psi(\bx_{t+1}), \bu_t-\bx_{t+1}\rangle\\
&=\psi(\bu_{t+1})-\psi(\bu_t)+\langle \nabla \psi(\bx_1), \bu_{t}-\bu_{t+1}\rangle + \langle \nabla \psi(\bx_{t+1})-\nabla\psi(\bx_1), \bu_{t}-\bu_{t+1}\rangle.
\end{align*}
Hence, we have
\begin{align*}
\sum_{t=1}^T &(B_\psi(\bu_{t+1};\bx_{t+1}) - B_\psi(\bu_t;\bx_{t+1}))\\
&=\psi(\bu_{T+1}) - \psi(\bu_1) + \langle \nabla \psi(\bx_1), \bu_1-\bu_{T+1}\rangle\\
&\quad+\sum_{t=1}^T \langle \nabla \psi(\bx_{t+1})-\nabla\psi(\bx_1), \bu_{t}-\bu_{t+1}\rangle\\
&=B_\psi(\bu_{T+1}; \bx_1)-B_\psi(\bu_1;\bx_1)+\sum_{t=1}^T \langle \nabla \psi(\bx_{t+1})-\nabla\psi(\bx_1), \bu_{t}-\bu_{t+1}\rangle~.
\end{align*}
Putting everything together, we have
\begin{align*}
\DRegret(\bu_1, \dots, \bu_T)
&\leq \frac{B_\psi(\bu_{T+1};\bx_1) - B_\psi(\bu_{T+1}; \bx_{T+1})}{\eta}+ \frac{\eta}{2\lambda} \sum_{t=1}^T \|\bg_t\|^2_\star\\
&\quad +\frac{1}{\eta}\sum_{t=1}^T \langle \nabla \psi(\bx_{t+1})-\nabla\psi(\bx_1), \bu_{t}-\bu_{t+1}\rangle~.
\end{align*}
Observe that the choice of $\bu_{T+1}$ does not affect the l.h.s. of the inequality, so we can set it equal to $\bu_T$.
Then, assuming that $\|\nabla \psi(\bx_t)-\nabla \psi(\bx_1)\|_\star \leq Q$ and using the definition of dual norm, we have the stated bound.
\end{proof}

Assuming $\|\bg_t\|_\star \leq L$ and selecting the usual learning rate $\eta=\frac{\alpha \sqrt{2\lambda}}{L\sqrt{T}}$, we have
\[
\DRegret(\bu_1, \dots, \bu_T)
\leq \frac{L}{\sqrt{2\lambda}} \left(\frac{B_\psi(\bu_{T};\bx_1) + Q P^{\|\cdot\|}(\bu_1, \dots, \bu_T)}{\alpha} + \alpha \right) \sqrt{T}~.
\]
In other words, we suffer an additional regret of
\[
L \frac{Q P^{\|\cdot\|}(\bu_1, \dots, \bu_T)}{\alpha \sqrt{2\lambda}} \sqrt{T}
\]
compared to the static case, for any $\bu_1, \dots, \bu_T \in \mathcal{V}$.

\begin{example}
Consider the case that the feasible set has diameter $D$ with respect to the L$_2$ norm, i.e., $D=\max_{\bx,\by \in \mathcal{V}} \|\bx-\by\|_2<\infty$. Set $\psi(\bx)=\frac12 \|\bx\|_2^2$ and assume that the subgradients satisfy $\|\bg_t\|_\star\leq L$ for $t=1, \dots, T$. In this case, we have that $Q\leq D$ and setting $\eta=\frac{\alpha}{L \sqrt{T}}$ we have
\[
\DRegret(\bu_1, \dots, \bu_T)
\leq \frac12 \left(\frac{D^2 + 2D \, P^{\|\cdot\|_2}(\bu_1, \dots, \bu_T)}{\alpha} + \alpha\right) L \sqrt{T}~.
\]
This means that we get sublinear regret with respect to any competitor sequence whose path length is $o(\sqrt{T})$ as $T$ goes to infinity.
\end{example}

Could we obtain a better regret guarantee? Setting $\eta=\frac{\sqrt{2\lambda}\sqrt{B_\psi(\bu_{T};\bx_1) + Q P^{\|\cdot\|}(\bu_1, \dots, \bu_T)}}{L\sqrt{T}}$, we could obtain the dynamic regret
\[
\DRegret(\bu_1, \dots, \bu_T)
\leq \frac{\sqrt{2}}{\sqrt{\lambda}} L \sqrt{B_\psi(\bu_{T};\bx_1) + Q P^{\|\cdot\|}(\bu_1, \dots, \bu_T)} \sqrt{T}~.
\]
This would give us a sublinear regret with respect to any competitor sequence whose path length is $o(T)$ as $T$ goes to infinity, which is a larger set than the one in the example.
However, assuming knowledge of the exact value of $P^{\|\cdot\|}$ is unreasonable because it violates the assumption of the adversarial nature of the problem. This mirrors the problem of tuning the learning rate with knowledge of $\|\bx_1-\bu\|$ in \ac{OSD}.
Fortunately, there is a solution, as we will see in the next section.

\index{ADER algorithm|(textbf}
\subsection{ADER: Optimal Dependence on the Path Length for Online Subgradient Descent}

In the previous section, we saw that the improved bound requires the knowledge of the path length of the competitor sequence.
Here, we show how to construct an online learning algorithm that achieves the same guarantees up to polylogarithmic terms. For simplicity, we will consider the Euclidean case, but it is easy to extend it with Bregman divergences\index{Bregman divergence}. Moreover, we assume the losses to be $L$-Lipschitz with respect to the L$_2$ norm.

We will use an approach similar to the one in Section~\ref{sec:easy_metagrad}: running a number of projected \ac{OSD} algorithms with different learning rates in parallel and using \ac{EG} to learn online the best combination of their iterates. In this way, we will see that the cumulative loss of the resulting algorithm is close to the cumulative loss of the best learning rate, which in turn will give us the right dependence on the path length.

Consider a feasible set $\mathcal{V}$ with bounded L$_2$ diameter $D$ and assume the losses $\ell_t$ to be $L$-Lipschitz on an open set containing $\mathcal{V}$. Using projected \ac{OSD} with learning rate $\eta$, for all $\bu_1, \dots, \bu_T\in \mathcal{V}$, we obtain the following regret upper bound
\[
\DRegret(\bu_1, \dots, \bu_T)
\leq \frac{D^2 + 2D \, P^{\|\cdot\|_2}(\bu_1, \dots, \bu_T)}{2\eta} + \frac{\eta L^2 T}{2}~.
\]
Using the fact that $P^{\|\cdot\|_2}(\bu_1, \dots, \bu_T)\leq D T$, the choice of $\eta^\star=\frac{\sqrt{D^2 + 2D \, P^{\|\cdot\|_2}(\bu_1, \dots, \bu_T)}}{L \sqrt{T}}$ satisfies
\[
\frac{D}{L\sqrt{T}}
\leq \eta^\star
\leq \frac{D\sqrt{1+2T}}{L\sqrt{T}}~.
\]

So, consider a grid of $N=1+\lceil\ln_2 \sqrt{4+8T}\rceil$ learning rates $\eta^{(i)}=\frac{D 2^{i-1}}{L\sqrt{T}}, i=1, \dots, N$, so that $\eta^{(1)}=\frac{D}{L\sqrt{T}}$ and $\eta^{(N)}\geq 2\frac{D\sqrt{1+2T}}{L\sqrt{T}}$. This implies that there exists $i^\star$ such that
\[
\eta^{(i^\star)}
\leq \eta^\star
\leq 2\eta^{(i^\star)}~.
\]

To combine the \ac{OSD} algorithms with different learning rates, we use the \ac{EG} algorithm where the \ac{OSD} algorithms are our experts.
Construct the loss vector for \ac{EG} as $\bz_t = [\ell_t(\bx^{(1)}_{t}), \dots, \ell_t(\bx^{(N)}_{t})]^\top$.
The \ac{EG} algorithm is invariant to adding the same constant to all coordinates of the loss vector (see Problem~\ref{exercise:eg_invariant_constant}), hence we can consider it as running on the losses $\ell_t(\bx^{(i)}_t)-\ell_t(\bx_1)$ which makes them bounded in absolute value by $D L$.
Hence, using a uniform prior and learning rate $\beta=\frac{\sqrt{2\ln N}}{L D \sqrt{T}}$, we obtain the regret
\[
\sum_{t=1}^T \langle \bp_t, \bz_t\rangle - \sum_{t=1}^T z_{t,i^\star}
\leq \sqrt{2} L\,D \sqrt{T \ln N}
\leq \sqrt{2} L\,D \sqrt{T \ln \left(2+\ln_2 \sqrt{4+8T}\right)} ~.
\]
Now, observe that by Jensen's inequality\index{inequality!Jensen's} (Theorem~\ref{thm:jensen}) and the convexity of $\ell_t$, we have
\[
\langle \bp_t, \bz_t\rangle
= \sum_{i=1}^N p_{t,i} \ell_t(\bx^{(i)}_t)
\geq \ell_t\left(\sum_{i=1}^N p_{t,i} \bx^{(i)}_t\right)
= \ell_t(\bx_t)~.
\]
This motivates the choice of using a convex combination of the predictions of the \ac{OSD} algorithms. Moreover, we have $\sum_{t=1}^T z_{t,i^\star}=\sum_{t=1}^T \ell_t(\bx^{(i^\star)}_t)$.

From the regret of \ac{OSD} and the choice of $\eta^{(i^\star)}$, for all $\bu_1, \dots, \bu_T\in \mathcal{V}$, we have
\[
\sum_{t=1}^T \ell_t(\bx^{(i^\star)}_t)
\leq \sum_{t=1}^T \ell_t(\bu_t) + \frac{3}{2} L \sqrt{T (D^2 + 2D \, P^{\|\cdot\|_2}(\bu_1, \dots, \bu_T))}~.
\]

Putting everything together, for all $\bu_1, \dots, \bu_T\in \mathcal{V}$, we have
\begin{align*}
&\DRegret(\bu_1,\dots, \bu_T)\\
&\quad = \sum_{t=1}^T (\ell_t(\bx_t)-\ell_t(\bu_t))\\
&\quad = \sum_{t=1}^T \left(\ell_t(\bx_t)-\ell_t\left(\bx^{(i^\star)}_t\right)\right) + \sum_{t=1}^T \left(\ell_t\left(\bx^{(i^\star)}_t\right)-\ell_t(\bu_t)\right)\\
&\quad \leq  \sqrt{2} L D \sqrt{T \ln \left(2+\ln_2 \sqrt{4+8T}\right)}+\frac32 L \sqrt{T (D^2 + 2D \, P^{\|\cdot\|_2}(\bu_1, \dots, \bu_T))}~.
\end{align*}

We are still not completely done: this construction queries $N = \mathcal{O}(\ln T)$ subgradients in each step, so now we show how to reduce it to only one subgradient per iteration.
This is easily achieved: it is enough to run the construction on the linear surrogate losses $\tilde{\ell}_t(\bx)=\langle \bg_t, \bx\rangle$, where $\bg_t\in \partial \ell_t(\bx_t)$. The advantage is that $\partial \tilde{\ell}_t(\bx) = \{\bg_t\}$ for all $\bx$, hence all the \ac{OSD} algorithms receive the same subgradient! Once again, we have that $|\tilde{\ell}_t(\bx_t^{(i)})-\tilde{\ell}_t(\bx_1)|\leq L \, D$ for all $i=1, \dots, N$. Moreover, we have
\[
\ell_t(\bx_t)-\ell_t(\bu_t)
\leq \langle \bg_t, \bx_t-\bu_t\rangle
= \tilde{\ell}_t(\bx_t)-\tilde{\ell}_t(\bu_t), \quad \forall \bu_t \in \mathcal{V}~.
\]
Hence, a dynamic regret guarantee on $\tilde{\ell}_t$ immediately translates to a dynamic regret guarantee on $\ell_t$.
The resulting algorithm is called \textbf{\ac{ADER}} and it is in Algorithm~\ref{alg:ader}.

\begin{algorithm}[t]
\caption{Adaptive learning for Dynamic EnviRonment (ADER)}
\label{alg:ader}
\begin{algorithmic}[1]
{
    \REQUIRE{Non-empty, closed, and convex feasible set $\mathcal{V}\subset \R^d$, number of iterations $T$, diameter of the feasible set $D<\infty$, Lipschitz constant $L$, initial point $\bx_1 \in \mathcal{V}$}
    \STATE{Set $N=1+\lceil\ln_2(\sqrt{4+8T})\rceil$}
    \STATE{Set $\eta^{(i)}=\frac{D 2^{i-1}}{L \sqrt{T}}, i = 1, \dots, N$}
    \STATE{Set $\beta = \frac{\sqrt{2\ln N}}{L D\sqrt{T}}$}
    \STATE{$\bp_1=[1/N, \dots, 1/N] \in \R^{N}$}
    \STATE{$\bx^{(i)}_1=\bx_1, i=1, \dots, N$}
    \FOR{$t=1$ {\bfseries to} $T$}
    \STATE{Output $\bx_t=\sum_{i=1}^{N} p_{t,i} \bx^{(i)}_t$ and pay $\ell_t(\bx_t)$}
    \STATE{Receive $\bg_t \in \partial \ell_t(\bx_t)$}
    \STATE{Update each \ac{OSD}: $\bx_{t+1}^{(i)} = \Pi_{\mathcal{V}}(\bx^{(i)}_t -\eta^{(i)} \bg_t), \ i=1, \dots, N$}
    \STATE{Update \ac{EG}: $p_{t+1,i} \propto p_{t,i}\exp\left(- \beta \langle \bg_t, \bx^{(i)}_t\rangle\right), \ i=1, \dots, N$}
    \ENDFOR
}
\end{algorithmic}
\end{algorithm}

Formally, we have the following theorem.
\begin{theorem}
Let $\mathcal{V}\subset \R^d$ be a non-empty closed convex set with bounded diameter with respect to the L$_2$ norm equal to $D$. Assume $\ell_t:\R^d \to (-\infty, +\infty]$ to be convex functions that are subdifferentiable on $\mathcal{V}$ and $L$-Lipschitz with respect to the L$_2$ norm on an open set containing $\mathcal{V}$. Then, for all $\bu_1, \dots, \bu_T \in \mathcal{V}$, Algorithm~\ref{alg:ader} satisfies
\begin{align*}
\DRegret(\bu_1,\dots, \bu_T)
&\leq \frac32 L \sqrt{T(D^2 + 2D \, P^{\|\cdot\|_2}(\bu_1, \dots, \bu_T))} \\
&\quad+ \sqrt{2} L D \sqrt{T \ln \left(2+\ln_2 \sqrt{4+8T}\right)}~.
\end{align*}
\end{theorem}

Note that while we query only one subgradient per round, the computational complexity of \ac{ADER} per round is still $\mathcal{O}(\ln T)$ when $T$ goes to infinity. This is due to the fact that in each round we need to update $\mathcal{O}(\ln T)$ different \ac{OSD} algorithms.

\index{ADER algorithm|)textbf}

\subsection{Black-box Reduction for Dynamic Regret in Unbounded Feasible Sets}

\index{black-box reduction!for dynamic regret in unbounded feasible sets|(textbf}
In the previous section, we showed how to obtain the optimal dynamic regret guarantee in bounded feasible sets. Here, we show a black-box reduction to obtain dynamic regret in unconstrained feasible sets, by using the black-box reduction to magnitude and direction of Section~\ref{sec:mag_dir}.

\begin{theorem}
\label{thm:dynamic_reduction_direction_magnitude}
Denote by $\DRegret^{\mathscr{A}_{B}}_T(\bu_1, \dots, \bu_T)$ the linear dynamic regret of algorithm $\mathscr{A}_{B}$ for any $\bu_1, \dots, \bu_T$ in the unit ball with respect to a norm $\|\cdot\|$, and $\Regret^{\mathscr{A}_{\text{1d}}}_T(u)$ the linear regret of algorithm $\mathscr{A}_{\text{1d}}$ for any competitor $u \in\R$. Then, for any sequence of convex losses $\ell_t:\R^d \to \R$ subdifferentiable in $\R^d$ for $t=1, \dots,T$ and any $\bu_1,\dots, \bu_T \in \R^d$, Algorithm~\ref{alg:onedimred} guarantees regret
\begin{align*}
\DRegret_T(\bu_1,\dots, \bu_T)
&= \sum_{t=1}^T \ell_t(\bx_t) - \sum_{t=1}^T \ell_t(\bu_t)
\leq \sum_{t=1}^T \langle \bg_t, \bx_t -\bu_t\rangle\\
&= \Regret^{\mathscr{A}_{\text{1d}}}_T(U) + U \cdot \DRegret^{\mathscr{A}_{B}}_T\left(\frac{\bu_1}{U}, \dots, \frac{\bu_T}{U}\right),
\end{align*}
where $U:=\max_{t=1,\dots, T} \|\bu_t\|$ and $U\neq 0$.
If $U=0$, then
\[
\DRegret_T(\boldsymbol{0},\dots, \boldsymbol{0})
=\Regret_T(\boldsymbol{0})
\leq \Regret^{\mathscr{A}_{\text{1d}}}_T(0)~.
\]
Further, the subgradients $s_t$ sent to $\mathscr{A}_{\text{1d}}$ satisfy $|s_t|\le \|\bg_t\|_\star$.
\end{theorem}
\begin{proof}
First, observe that $|s_t|\le \|\bg_t\|_\star \|\tilde{\bx}_t\| \le \|\bg_t\|_\star$ since $\|\tilde{\bx}_t\|\le 1$ for all $t$. Now, assuming $U \neq 0$ compute:
\begin{align*}
\DRegret_T(\bu_1, \dots, \bu_T)
&=\sum_{t=1}^T \ell_t(\bx_t) - \sum_{t=1}^T \ell_t(\bu_t)\\
&\leq \sum_{t=1}^T \langle \bg_t, \bx_t - \bu_t\rangle
= \sum_{t=1}^T \langle \bg_t, z_t \tilde{\bx}_t\rangle - \langle \bg_t, \bu_t\rangle\\
&= \underbrace{\sum_{t=1}^T \left(\langle \bg_t, \tilde{\bx}_t \rangle z_t - \langle \bg_t, \tilde{\bx}_t\rangle U\right)}_{\text{linear regret of }\mathscr{A}_{\text{1d}}\text{ at } U \in \R} + \sum_{t=1}^T \left(\langle \bg_t, \tilde{\bx}_t \rangle U  - \langle \bg_t, \bu_t \rangle \right)\\
&= \Regret^{\mathscr{A}_{\text{1d}}}_T(U) + \sum_{t=1}^T \left(\langle \bg_t, \tilde{\bx}_t\rangle U  - \langle \bg_t, \bu_t\rangle \right)\\
&= \Regret^{\mathscr{A}_{\text{1d}}}_T(U) +U \underbrace{\sum_{t=1}^T \left(\langle \bg_t, \tilde{\bx}_t\rangle - \left\langle \bg_t, \frac{\bu_t}{U}\right\rangle \right)}_{\text{dynamic regret of }\mathscr{A}_{B}}\\
&= \Regret^{\mathscr{A}_{\text{1d}}}_T(U) + U \cdot \DRegret^{\mathscr{A}_{B}}_T\left(\frac{\bu_1}{U},\dots,\frac{\bu_T}{U}\right)~.
\end{align*}
The case $U=0$ is similarly proven.
\end{proof}

\begin{example}
We can use the \ac{ADER} algorithm\index{ADER algorithm} in $\mathcal{V}=\{\bx \in \R^d: \|\bx\|_2\leq 1\}$ as our direction learner and the parameter-free algorithm for \ac{OCO} in Section~\ref{sec:shifted_kt} as our magnitude learner.
We assume that each $\ell_t$ is $L$-Lipschitz with respect to the L$_2$ norm on $\R^d$.
In this way, for all $\bu_1, \dots, \bu_T \in \R^d$, we obtain
\begin{align*}
\DRegret_T(\bu_1,\dots,\bu_T)
&\le U L\sqrt{5 T\ln\!\left(\frac{U \sqrt{19 T}}{\epsilon}+1\right)} +\epsilon L \\
&\quad + 2 U L\sqrt{2T\ln\!\left(2+\ln_2\!\sqrt{4+8T}\right)}\\
&\quad + 3 U L \sqrt{T\left(1+P^{\|\cdot\|_2}\!\left(\frac{\bu_1}{U},\dots,\frac{\bu_T}{U}\right)\right)},
\end{align*}
where $U=\max_{t=1, \dots, T} \ \|\bu_t\|_2$. The dependence on the path length is the same as for \ac{ADER}, hence optimal, while the dependence on $U$ is optimal.
\end{example}
\index{regret!dynamic|)textbf}
\index{black-box reduction!for dynamic regret in unbounded feasible sets|)textbf}

\section{Strongly Adaptive Regret}
\label{sec:strongly_adaptive_regret}
\index{regret!strongly adaptive|(textbf}
We now introduce the concept of \emph{strongly adaptive regret}.
\begin{definition}
For any $\bu \in \mathcal{V}$, we define the \textbf{strongly adaptive regret} as
\[
\SARegret_{T,\tau}(\bu)
:=\max_{[s,s+\tau-1] \subseteq \{1,2,\dots, T\}} \ \sum_{t=s}^{s+\tau-1} \ell_t(\bx_t) - \sum_{t=s}^{s+\tau-1} \ell_t(\bu)~.
\]
We say that the algorithm is \textbf{strongly adaptive} if, for any fixed $\bu \in \mathcal{V}$, the additional price we pay with respect to the optimal bound on any specific interval of length $\tau$ is at most polylogarithmic in $T$.
\end{definition}



This definition captures the fact that we want the performance of the algorithm to be good on any interval of length $\tau$.

\index{regret!adaptive|(textbf}
\begin{remark}
One might be tempted to remove the dependence on $\tau$ and consider instead $\max_\tau \ \SARegret_{T,\tau}(\bu)$.
This notion is known as \textbf{adaptive regret}\index{adaptive regret|textbf}. However, it is clear that the worst-case value of $\max_\tau \ \SARegret_{T,\tau}(\bu)$ is $\Omega(\sqrt{T})$ for bounded domains and Lipschitz losses (from the lower bound in Chapter~\ref{ch:lower}), which is meaningless for intervals of size $\mathcal{O}(\sqrt{T})$. Hence, the adaptive regret does not allow us to reason about the performance of the algorithm on small intervals.
\end{remark}
\index{regret!adaptive|)textbf}

In the next section, we show how to obtain strongly adaptive algorithms, using once again a combination of online learning algorithms.

\index{CBCE algorithm|(textbf}
\subsection{CBCE: A Meta Algorithm for Strongly Adaptive Regret}


We will introduce now the \textbf{Coin Betting for Changing Environment (CBCE)} algorithm, see Algorithm~\ref{alg:cbce}. We use $T$ different \ac{OLO} algorithms, each one starting at a different time step, and we combine them with a sleeping-experts algorithm over a countably infinite number of experts. In particular, $\bx_t^{(s)}$ is the output at time $t$ of the \ac{OLO} algorithm started at time $s$, while $\bx_t$ is the convex combination of $\bx^{(1)}_t, \dots, \bx^{(t)}_t$ using the probability distribution produced by the sleeping expert algorithm.
Using the sleeping expert algorithm in Section~\ref{sec:from_kt_to_sleeping_experts} immediately yields the following theorem.

\begin{algorithm}[h]
\begin{algorithmic}[1]
    \REQUIRE{Non-empty, closed, and convex feasible set $\mathcal{V}\subset \R^d$, OLO black-box algorithm $\mathscr{B}$, sleeping expert algorithm $\mathscr{M}$}
    \FOR{$t=1$ {\bfseries to} $T$}
    \STATE{Pass the active set to $\mathscr{M}$ and get $\bp_t$ from $\mathscr{M}$ such that $p_{t,i}=0$ for $i>t$}
    \STATE{Get $\bx^{(i)}_t$ from the copy $i$ of $\mathscr{B}$, for $i=1, \dots, t$}
    \STATE{Output $\bx_t=\sum_{i=1}^{t} p_{t,i} \bx^{(i)}_t$ and pay $\ell_t(\bx_t)$}
    \STATE{Receive $\bg_t \in \partial \ell_t(\bx_t)$}
    \STATE{Pass $\tilde{\ell}_t(\bx)=\langle\bg_t, \bx\rangle$ to the copy $i$ of $\mathscr{B}$, for $i=1,\dots,t$}
    \STATE{Let $\bv_t=[\langle \bg_t, \bx^{(1)}_t\rangle, \dots \langle \bg_t, \bx^{(t)}_t\rangle, 0, 0, \dots] \in \R^\infty$}
    \STATE{Pass the loss $\ell^\text{sleeping}_t(\bx)=\langle \bv_t, \bx\rangle$ to the sleeping expert algorithm}
    \ENDFOR
\end{algorithmic}
\caption{Coin Betting for Changing Environment (CBCE)}
\label{alg:cbce}
\end{algorithm}

\begin{theorem}
Let $\mathcal{V}\subset \R^d$ be a non-empty bounded closed convex set and define $D=\max_{\bx,\by \in \mathcal{V}} \ \|\bx-\by\|$. Assume the losses $\ell_t:\R^d \to (-\infty, +\infty]$ to be subdifferentiable and $L$-Lipschitz with respect to $\|\cdot\|$ on an open set containing $\mathcal{V}$.
Let $\mathscr{M}$ be the variant of Algorithm~\ref{alg:kt-sleepingexperts} with a countable number of experts in Remark~\ref{remark:kt-sleepingexperts-countable}, where $\epsilon_i=\frac{1}{i(i+1)}$ for $i=1,2,\dots$, and $\mathscr{B}$ an \ac{OLO} algorithm whose regret we denote by $\Regret^\text{OLO}_t(\bu)$ for all $t \in \Nat$. Let $I=[s,s+\tau-1]\subset [1,2,\dots, T]$.
Then, for all $\bu \in \mathcal{V}$, Algorithm~\ref{alg:cbce} satisfies
\begin{align*}
\Regret_I(\bu)
&:= \sum_{t\in I} (\ell_t(\bx_t) - \ell_t(\bu))\\
&\leq \Regret^\text{OLO}_{\tau}(\bu) + D L \left(\sqrt{2 \tau \ln (e \tau s (s+1) +1)}+1\right)~.
\end{align*}
\end{theorem}
\begin{proof}
Like in the case of \ac{ADER}, a simple implementation of this algorithm would require querying $t$ values of the function $\ell_t$ at time $t$. However, we run the algorithm on the linearized losses $\langle \bg_t, \cdot\rangle$, where $\bg_t \in \partial \ell_t(\bx_t)$, so that the subgradient is only asked once.
Hence, we upper bound the regret using the linearized losses:
\[
\sum_{t=s}^{s+\tau-1} (\ell_t(\bx_t) - \ell_t(\bu))
\leq \sum_{t=s}^{s+\tau-1} \langle \bg_t,\bx_t- \bu\rangle
= \sum_{t=s}^{s+\tau-1} (\tilde{\ell}_t(\bx_t) - \tilde{\ell}_t(\bu))~.
\]
Next, we decompose the regret in the contributions of $\mathscr{B}$ and $\mathscr{M}$ as
\[
\sum_{t=s}^{s+\tau-1} (\tilde{\ell}_t(\bx_t) - \tilde{\ell}_t(\bu))
= \sum_{t=s}^{s+\tau-1} (\tilde{\ell}_t(\bx_t) - \tilde{\ell}_t(\bx^{(s)}_t)) + \sum_{t=s}^{s+\tau-1}(\tilde{\ell}_t(\bx^{(s)}_t) - \tilde{\ell}_t(\bu) )~.
\]
The first sum can be written as
\[
\sum_{t=s}^{s+\tau-1} (\tilde{\ell}_t(\bx_t) - \tilde{\ell}_t(\bx^{(s)}_t))
= \Regret^\text{sleeping}_{s+\tau-1}(\be_s),
\]
because the $s$-th copy of the algorithm $\mathscr{B}$ has been active only on rounds $s$ to $s+\tau-1$.
The losses passed to the sleeping-experts algorithm have an active range at most $DL$, because for any two active experts $m,n$,
\[
|\langle \bg_t,\bx_t^{(m)}-\bx_t^{(n)}\rangle|
\leq \|\bg_t\|_\star \|\bx_t^{(m)}-\bx_t^{(n)}\|
\leq D L ~.
\]
Hence, with the choice of $\epsilon_i=\frac{1}{i(i+1)}$, Algorithm~\ref{alg:kt-sleepingexperts}, through Remark~\ref{remark:kt-sleepingexperts-countable}, achieves a sleeping regret against the $s$-th expert of
$D L \sqrt{2 \tau \ln (e \tau s (s+1) +1)}+DL$, because the expert $s$ has been active for $\tau$ rounds. For the second sum, we simply have $\Regret^\text{OLO}_{\tau}(\bu)$ because the $s$-th copy of algorithm $\mathscr{B}$ was started on round $s$. 
\end{proof}

Hence, assuming that $R_\tau(\bu)=\mathcal{O}(\sqrt{\tau})$, the CBCE algorithm is strongly adaptive because the regret on any interval of length $\tau$ depends on $\sqrt{\tau}$ and, upper bounding $s$ with $T$, only polylogarithmically on $T$.

However, while we ask only one subgradient per round, the computational complexity of CBCE in round $t$ is proportional to $t$, because we have to update $t$ \ac{OLO} algorithms. So, in the next section, we consider an efficient variant of the CBCE algorithm, whose update is proportional to $\ln t$, with the same strongly adaptive regret guarantee.

\subsection{Efficient Version of CBCE}

The previous algorithm has the disadvantage of requiring the start of a new copy of algorithm $\mathscr{B}$ in each iteration. Instead, here we will see how to reduce the number of copies to the logarithm of the number of rounds, using \textbf{geometric coverings}\index{geometric covering|(textbf}.

For $k=0,1,2,\dots$, define the intervals
\[
\mathcal{I}_k
:=\{[i 2^k,(i+1)2^k-1] : i \in \Nat\}~.
\]
That is, each $\mathcal{I}_k$ is a partition of $\Nat \backslash \{1, 2, 3,\dots, 2^k-1\}$ into consecutive intervals of length $2^k$.
Also, define
\[
\mathcal{I}:=\cup_{k=0, 1, 2, \dots} \mathcal{I}_k
\qquad \text{and} \qquad
ACTIVE(t)
:= \{I \in \mathcal{I} : t \in I\},
\]
the set of the ``active'' intervals at time $t$, that is, the intervals that contain $t$.
By the definition of $\mathcal{I}_k$, for every $t \leq 2^k-1$ we have that no interval in $\mathcal{I}_k$ contains $t$, while for every $t \geq 2^k$ we have that a single interval in $\mathcal{I}_k$ contains $t$. Therefore,
$|ACTIVE(t)| = \lfloor\log_2 t\rfloor + 1$.
We will run a copy of an \ac{OCO} algorithm for each active interval. This means that we have a number of algorithms at each time step $t$ that is logarithmic in $t$.
Given an interval $J$ of $\Nat$, we will also use the definition of $\mathcal{I}|_J$
\[
\mathcal{I}|_J:=\{ I \in \mathcal{I}: I \subseteq J \},
\]
to denote the set of the intervals in $\mathcal{I}$ that are contained in $J$.

\begin{figure}[t]
{
\centering
\begin{Verbatim}[commandchars=\\\{\},codes={\catcode`$=3\catcode`_=8}]
\;  1  2  3  4  5  6  7  8  9  10 11 12 13 14 15 16 17 18 ...
$\mathcal{I}_0$[ ][ ][ ][ ][ ][ ][ ][ ][ ][ ][ ][ ][ ][ ][ ][ ][ ][ ] ...
$\mathcal{I}_1$   [    ][    ][    ][    ][    ][    ][    ][    ][   ...
$\mathcal{I}_2$         [          ][          ][          ][         ...
$\mathcal{I}_3$                     [                      ][         ...
\end{Verbatim}
}
\caption{Geometric covering intervals. Each interval is denoted by {\tt [ ]}.}
\label{fig:gc}
\commentAlt{Figure~\ref{fig:gc}. Diagram of geometric covering intervals over time indices. Row I_0 contains unit intervals, row I_1 contains intervals of length two, row I_2 contains longer intervals, and row I_3 contains still longer intervals.}
\end{figure}

The next lemma allows us to decompose the regret over any interval using the regret over intervals in $\mathcal{I}$.
\begin{lemma}
\label{lem:geo-intv}
Let $J = [q, s] \subseteq \Nat$ be an arbitrary interval. Then, the interval $J$ can be partitioned into two finite sequences of disjoint and consecutive intervals,
\[
(J_{-k}, \dots , J_0) \subseteq \mathcal{I}|_J \quad
\text{ and }
\quad (J_1, J_2,\dots, J_p) \subseteq \mathcal{I}|_J,
\]
such that i) $|J_{-i}|/|J_{-i+1}| \leq 1/2, \forall i \geq 1$; ii) $|J_{i}|/|J_{i-1}| \leq 1/2, \forall i \geq 2$.
\end{lemma}
\begin{proof}
The intuition of the proof is the following one: first, we choose the interval $J_0$ to be the biggest leftmost one in $\mathcal{I}|_J$. Then, we show that the others are selected so that they decrease in size to the left. On the right, $J_1$ can have the same size as $J_0$, and the others will decrease in size.

Denote the size of $J_{i}$ by $b_{i}$ for $i=-k, \dots, 0, \dots, p$ and let $b_0 = \max\{|J'| : J' \in \mathcal{I}|_J\}$ be the maximal size of any interval $J' \in \mathcal{I}$ that is contained in $J$. Among all of these intervals, let $J_0$ be the leftmost interval, i.e., we define
\begin{align*}
q_0 &:= \argmin\{q': [q',q'+b_0-1]\in \mathcal{I}|_J\},\\
s_0 &:= q_0+b_0-1,\\
J_0 &:=[q_0,s_0]~.
\end{align*}
Starting from $q_0 - 1$, we now define a sequence of intervals (in reverse order), denoted
$(J_{-1}, \dots , J_{-k})$ to cover the interval $[q, q_0-1]$, as follows:
%
%
\begin{align*}
[q_{-1},s_{-1}]
&:=J_{-1}
:=\argmax_{J'=[q',s'] \in \mathcal{I}|_{[q,q_0-1]}: s'=q_0-1} |J'|\\
&\vdots\\
[q_{-i},s_{-i}]
&:=J_{-i}
:=\argmax_{J'=[q',s'] \in \mathcal{I}|_{[q,q_{-i+1}-1]}: s'=q_{-i+1}-1} |J'|\\
&\vdots
\end{align*}
Clearly, this sequence is finite and the left endpoint of the leftmost interval, $J_{-k}$, is $q$. Denote the size of $J_{-i}$ by $b_{-i}$. We next prove that for every $i \geq 1$, $b_{-i}/b_{-i+1} \leq 1/2 $.
Given that the interval lengths are powers of $2$, it suffices to show that $b_{-i} < b_{-i+1}$ for every $i \geq 1$. We use induction. The base case follows from the fact that $b_0 > q_0-q$, otherwise the interval $[q_0-b_0,q_0-1]$ would belong to $\mathcal{I}|_J$, have the same size as $J_0$, and lie strictly to the left of $J_0$, contradicting the maximality of $b_0$ and the fact that $J_0$ is the leftmost interval of maximal size.
We next assume that the claim holds for every $i \in \{1, \dots , n - 1\}$ and prove for $n$. Assume by contradiction that
$b_{-n} \geq b_{-n+1}$.
Consider the interval $\hat{J}_{-n+1}$ which is obtained by extending $J_{-n+1}$ to its left by an additional size of $b_{-n+1}$, that is, $\hat{J}_{-n+1}:=[q_{-n+1}-b_{-n+1},q_{-n+1}-1]\cup J_{-n+1}$. Hence, $\hat{J}_{-n+1}$ is an interval of size $2b_{-n+1}$ and it is contained in $J$.
According to the induction hypothesis, $|\hat{J}_{-n+1}| = 2b_{-n+1} = 2^j b_{-n+2}$ for some $j\leq 0$. Given that $\hat{J}_{-n+1}$ is adjacent to the interval $J_{-n+2}$ and its size is $2^j|J_{-n+2}|$ for some $j\leq 0$, from the definition of $\mathcal{I}$, we have that $\hat{J}_{-n+1}\in \mathcal{I}|_{J}$, contradicting the maximality of $J_{-n+1}$.

Similarly, we define a sequence of disjoint and consecutive intervals, denoted $(J_1, \dots , J_p)$ that covers $[s_0 + 1, s]$.
%
\begin{align*}
[q_{1},s_{1}]
&:=J_{1}
:=\argmax_{J'=[q',s'] \in \mathcal{I}|_{[s_0+1,s]}: q'=s_0+1} |J'|\\
&\vdots\\
[q_{i},s_{i}]
&:=J_{i}
:=\argmax_{J'=[q',s'] \in \mathcal{I}|_{[s_{i-1}+1,s]}: q'=s_{i-1}+1} |J'|\\
&\vdots
\end{align*}
Clearly, this sequence is finite, and the right endpoint of the rightmost interval, $J_p$, is $s$. Denote the size of $J_i$ by $b_i$. We next prove that for every $i \geq 2$, $b_i/b_{i-1} \leq 1/2$ which is equivalent to $b_i< b_{i-1}$.

We prove it again by induction.
We assume that $b_i/b_{i-1} \in \{2^j: j\leq 0\}$  for every $i \in \{1, \dots, n-1\}$ and prove that $b_n < b_{n-1}$ for $n\geq 2$.
For the base case, from the definition of $b_0$, we have that $b_1 \leq b_0$.
Now, assume by contradiction that $b_n \geq b_{n-1}$. Then, we can consider the interval $\hat{J}_{n-1}$ which is obtained by extending $J_{n-1}$ to its right by an additional size of $b_{n-1}$. It follows that $\hat{J}_{n-1}$ is an interval of size $2b_{n-1}$
and it is contained in $J$.
According to the induction hypothesis, $|\hat{J}_{n-1}| = 2b_{n-1} = 2^j  b_{n-2}$ for some $j\leq 1$. We need to consider the following two cases:
\begin{itemize}
\item Case $j \leq 0$ (i.e., $b_{n-1}/b_{n-2} \leq 1/2$). Then, $\hat{J}_{n-1}$ is consecutive to $J_{n-2} \in \mathcal{I}$ and its size is $2^j |J_{n-2}|$, hence $\hat{J}_{n-1} \in \mathcal{I}|_J$, contradicting the maximality of $J_{n-1}$.
\item Case $j = 1$ (i.e., $b_{n-1} = b_{n-2}$). Then, $|\hat{J}_{n-1}| = 2b_{n-2}$ and this contradicts the maximality of $J_{n-2}$. \qedhere
\end{itemize}
\end{proof}

As an example of the above Lemma, consider the interval $J=[1,30]$ that is partitioned into the intervals $J_{-3}=[1,1]$, $J_{-2}=[2,3]$, $J_{-1}=[4,7]$, $J_{0}=[8,15]$, $J_1=[16,23]$, $J_2=[24,27]$, $J_3=[28,29]$, $J_4=[30,30]$.
\index{geometric covering|)textbf}

We will use an expert algorithm over a countable number of experts, hence we also have a prior over the infinite-dimensional simplex $\Delta^\infty:=\{ \bx \in \R^\infty: x_i\geq 0, \sum_{i=1}^\infty x_i=1\}$.

\begin{algorithm}[t]
\begin{algorithmic}[1]
    \REQUIRE{Non-empty, closed, and convex feasible set $\mathcal{V}\subset \R^d$, OLO black-box algorithm $\mathscr{B}$, sleeping expert algorithm $\mathscr{M}$}
    \FOR{$t=1$ {\bfseries to} $T$}
    \STATE{Pass the active set $ACTIVE(t)$ to $\mathscr{M}$ and get $\bp_t$ from $\mathscr{M}$ such that $p_{t,J}\neq0$ only for $J \in ACTIVE(t)$}
    \STATE{Get $\bx^{(J)}_t$ from the copy of $\mathscr{B}$ for $J \in ACTIVE(t)$}
    \STATE{Output $\bx_t=\sum_{J \in ACTIVE(t)} p_{t,J} \bx^{(J)}_t$ and pay $\ell_t(\bx_t)$}
    \STATE{Receive $\bg_t \in \partial \ell_t(\bx_t)$}
    \STATE{Pass $\tilde{\ell}_t(\bx)=\langle\bg_t, \bx\rangle$ to the copy of $\mathscr{B}$ for $J\in ACTIVE(t)$}
    \STATE{Let $v_{t,J}=\begin{cases}\langle \bg_t, \bx^{(J)}_t\rangle, & J \in ACTIVE(t)\\ 0, & \text{otherwise}\end{cases}$}
    \STATE{Pass the loss $\ell^\text{sleeping}_t(\bx)=\langle \bv_t, \bx\rangle$ to the sleeping expert algorithm $\mathscr{M}$}
    \ENDFOR
\end{algorithmic}
\caption{Efficient Coin Betting for Changing Environment}
\label{alg:efficient_cbce}
\end{algorithm}

\begin{theorem}
Let $\mathcal{V}\subset \R^d$ be a non-empty bounded closed convex set and define $D=\max_{\bx,\by \in \mathcal{V}} \ \|\bx-\by\|$. Assume the losses $\ell_t:\R^d \to (-\infty, +\infty]$ to be subdifferentiable and $L$-Lipschitz with respect to $\|\cdot\|$ on an open set containing $\mathcal{V}$.
Let $\mathscr{M}$ be the variant of Algorithm~\ref{alg:kt-sleepingexperts} with a countable number of experts in Remark~\ref{remark:kt-sleepingexperts-countable}, where the copy associated to the interval $J=[i_1,i_2] \in \mathcal{I}$ has the parameter $\epsilon_{J}$ defined recursively as
\begin{align*}
\epsilon_{J}
=
\begin{cases}
\frac{1}{i_1 (i_1+1) (\lfloor\log_2 i_1\rfloor + 1)},  & J \in \mathcal{I}_0\\
\sum_{I \in \mathcal{I}_{0}|_J} \epsilon_{I}, & J \in \mathcal{I}_k, k\geq 1
\end{cases}~.
\end{align*}
Let $\mathscr{B}$ be an \ac{OLO} algorithm such that, for every $t\in\Nat$, every $\bu \in \mathcal{V}$, and every sequence of linear losses $\tilde{\ell}_s(\bx)=\langle \bg_s,\bx\rangle$ with $\|\bg_s\|_\star\le L$, its regret satisfies $\Regret_t(\bu)\leq c D L\, t^\alpha$, where $\alpha \in (0,1)$ and $c>0$. Let $I=[s,s+\tau-1]\subset [1,2,\dots,T]$.
Then, for all $\bu \in \mathcal{V}$, Algorithm~\ref{alg:efficient_cbce} satisfies
\begin{align*}
\Regret_I(\bu)
:= \sum_{t\in I} (\ell_t(\bx_t) -  \ell_t(\bu))
&\leq 2\frac{2^\alpha-(4 \tau^2)^{-\alpha}}{2^\alpha-1} c D L \tau^\alpha + D L \\
&\quad + 10 D L \sqrt{\tau \ln (e(s+\tau)^4 + 1)}~.
\end{align*}
\end{theorem}
\begin{proof}
First of all, we upper bound the regret over the interval $I$ with the one over linearized losses:
\[
\sum_{t\in I} (\ell_t(\bx_t) - \ell_t(\bu))
\leq \sum_{t\in I} \langle \bg_t, \bx_t-\bu \rangle
= \sum_{t\in I} (\tilde{\ell}_t (\bx_t)- \tilde{\ell}_t (\bu)),
\]
where $\tilde{\ell}_t(\bx)=\langle \bg_t, \bx\rangle$.

Now, suppose we denote by $\bx^{(J)}_t$ the decision from the black-box run $\mathscr{B}_J$ associated with the interval $J$ at time $t$ and by $\bx_t$ the combined decision of the meta algorithm at time $t$. Since the complete algorithm is a combination of a meta algorithm $\mathscr{M}$ and several black-box algorithms $\mathscr{B}$, its regret depends on both $\mathscr{M}$ and $\mathscr{B}$. We now decompose the two sources of regret additively through the geometric covering\index{geometric covering}.

Let $\mathcal{J}=\bigcup_{i=-a}^{b} J_i$ be the partition of $I$ obtained from Lemma~\ref{lem:geo-intv}. Then, the regret on $I$ can be decomposed as follows:
\begin{align}
\Regret_I(\bu)
&= \sum_{t\in I} \left(\ell_t(\bx_t) - \ell_t(\bu) \right)
\leq \sum_{t\in I} \left(\tilde{\ell}_t(\bx_t) - \tilde{\ell}_t(\bu) \right) \notag \\
&= \sum_{i=-a}^{b} \sum_{t\in J_i} \left(\tilde{\ell}_t(\bx_t) - \tilde{\ell}_t(\bx^{(J_i)}_t)
   + \tilde{\ell}_t(\bx^{(J_i)}_t) - \tilde{\ell}_t(\bu) \right) \notag\\
&= \sum_{i=-a}^{b} \underbrace{ \sum_{t\in J_i} \left(\tilde{\ell}_t(\bx_t) - \tilde{\ell}_t(\bx^{(J_i)}_t)\right) }_{\text{\small  meta regret on $J_i$}}
      + \sum_{i=-a}^{b} \underbrace{ \sum_{t\in J_i} \left(\tilde{\ell}_t(\bx^{(J_i)}_t) - \tilde{\ell}_t(\bu) \right) }_{\text{\small black-box regret on $J_i$}}~. \label{regret-decomposition}
\end{align}
The black-box regret on $J_i \in \mathcal{J}$ is exactly the regret of the black-box algorithm on $|J_i|$ rounds, since the black-box run $\mathscr{B}_{J_i}$ was started at the beginning of the interval $J_i$.
In particular, we have
\begin{align*}
\sum_{i=-a}^b \sum_{t\in J_i} (\tilde{\ell}_t(\bx^{(J_i)}_t)-\tilde{\ell}_t(\bu))
&= \sum_{i=-a}^b \sum_{t\in J_i} \langle\bg_t,\bx^{(J_i)}_t-\bu\rangle
\leq c D L \sum_{i=-a}^b |J_i|^\alpha\\
&\leq 2 c D L\sum_{k=0}^{a+b+1} (2^{-k} |I|)^\alpha
= 2 c |I|^\alpha D L\frac{2^\alpha-2^{-\alpha (a+b+1)}}{2^\alpha-1}\\
&\leq 2 c |I|^\alpha D L\frac{2^\alpha-(4 |I|^2)^{-\alpha}}{2^\alpha-1},
\end{align*}
where the second inequality is due to Lemma~\ref{lem:geo-intv}, and in the last one we used that the number of geometric cover intervals is less than $2(\lfloor \log_2 |I|\rfloor+1)$ (again from Lemma~\ref{lem:geo-intv}) and $2^{-2(\lfloor \log_2 |I|\rfloor+1)}\geq \frac{1}{4|I|^2}$.

Now, we bound the meta-regret term in \eqref{regret-decomposition}. We apply the sleeping-experts guarantee once with the fixed comparator that puts weight $1$ on the experts associated with $J_{-a},\dots,J_b$ and weight $0$ on all the others. Since these intervals are disjoint, at each time $t\in I$ exactly one of these selected experts is active.
Moreover, the losses passed to the sleeping-experts algorithm have an active range at most $DL$, because for any two active experts $J,J'$,
\[
|\langle \bg_t,\bx_t^{(J)}-\bx_t^{(J')}\rangle|
\leq \|\bg_t\|_\star \|\bx_t^{(J)}-\bx_t^{(J')}\|
\leq D L ~.
\]
Hence,
\[
\sum_{i=-a}^{b} \sum_{t\in J_i} (\tilde{\ell}_t(\bx_t)-\tilde{\ell}_t(\bx^{(J_i)}_t))
\leq DL\sum_{i=-a}^{b} \sqrt{2 |J_i| \ln\left(1+\frac{e |J_i|}{\epsilon_{J_i}}\right)}
+ DL \sum_{J\in\mathcal{I}}\epsilon_J~.
\]


Also, from Lemma~\ref{lem:geo-intv}, $|J_i|\leq 2^i \tau$ for $i\leq 0$.
This implies that
\[
|J_i|/\epsilon_{J_i}
\leq |J_i| (s+\tau-1)(s+\tau)(\log_2(s+\tau-1)+1)
\leq (s+\tau)^4, \forall i\leq 0~.
\]
Hence, we have
\begin{align*}
\sum_{i=-a}^{0} \sqrt{2 |J_i| \ln\left(1+\frac{e |J_i|}{\epsilon_{J_i}}\right)}
&\leq \sum_{i=-a}^{0} \sqrt{2^{i+1} \tau \ln\left(1+e(s+\tau)^4\right)}\\
&\leq \sqrt{\tau \ln (e(s+\tau)^4 + 1)} \sum_{k=0}^\infty  \sqrt{2^{-k+1}}\\
&\leq 5 \sqrt{ \tau \ln (e(s+\tau)^4 + 1)}
\end{align*}
%
%
Analogously, we have
\[
\sum_{i=1}^b \sqrt{2 |J_i| \ln\left(1+\frac{e |J_i|}{\epsilon_{J_i}}\right)}
\leq 5 \sqrt{ \tau \ln (e(s+\tau)^4 + 1)}~.
\]
Finally, observe that $\sum_{J \in \mathcal{I}} \epsilon_J = 1$.
Putting everything together, we have the stated bound.
\end{proof}

\begin{remark}
Observe that the limit for $\alpha \to 0$ of the first term of the upper bound of the regret is $2 c D L \log_2(8\tau^2)$. However, the second term is always of the order of $\sqrt{\tau}$.
\end{remark}

From the above theorem, it is immediate to get a guarantee on the strongly adaptive regret by observing that
\[
  \SARegret_{T,\tau}(\bu) = \max_{I\subseteq\{1,\dots,T\}:|I|=\tau} \ \Regret_I (\bu)~.
\]

\index{regret!strongly adaptive|)textbf}
\index{CBCE algorithm|)textbf}

\section{Tracking Regret}

\index{regret!tracking|(textbf}
We now see yet another way to measure the ability of an online algorithm to adapt to changes in the behaviour of the adversarial losses.
\begin{definition}
For $m \in \{0, \dots, T-1\}$, define the \textbf{tracking regret} as
\[
\TRegret^m_T(\bu_1, \dots, \bu_T)
:= \DRegret_T(\bu_1, \dots, \bu_T),
\]
where the sequence $\bu_1, \dots, \bu_T\in \mathcal{V}$ changes at most $m$ times. More formally, there exists an integer $K\leq m+1$, a partition of $\{1, 2, \dots, T\}$ into $K$ consecutive intervals $(\mathcal{I}_i)_{i=1}^K$, and $\tilde{\bu}_1, \dots,\tilde{\bu}_K \in \mathcal{V}$ such that $\bu_t=\tilde{\bu}_i$ for all $t\in \mathcal{I}_i$.
\end{definition}

The tracking regret can be upper-bounded both in terms of dynamic regret and strongly adaptive regret.
Indeed, if we assume the diameter $D$ of $\mathcal{V}$ to be bounded with respect to some norm, i.e., $D=\max_{\bu,\bv \in \mathcal{V}} \ \|\bu-\bv\|<\infty$, then for a sequence $\bu_1, \dots, \bu_T \in \mathcal{V}$ that changes at most $m$ times we have
\[
P^{\|\cdot\|}(\bu_1, \dots, \bu_T)
\leq m D~.
\]
Hence, any algorithm with a dynamic regret guarantee that depends on $P^{\|\cdot\|}$ can also be used to obtain a tracking regret guarantee.

For example, we can consider the following algorithm.
\begin{example}
\label{example:shifting_regret}
Consider the setting of \ac{LEA}. In this setting, the tracking regret makes more sense because we are typically interested in competitors that are corners of the simplex.

To tackle this problem, we will use \ac{OMD} with the entropic regularizer over the truncated simplex $\mathcal{V}=\{\bx \in \R^d: x_i\geq \alpha/d, \sum_{i=1}^d x_i =1\}$, where $0<\alpha\leq 1$ is a parameter that will be decided in the following. Hence, we have $x_{t,i}\geq \frac{\alpha}{d}$ and $(\nabla \psi(\bx))_i = 1+\ln x_i$. Hence, in the dynamic case, we have
\[
Q
= \max_t \|\nabla \psi(\bx_t)-\nabla \psi(\bx_1)\|_\infty
= \max_t \|\ln \bx_t-\ln \bx_1\|_\infty
\leq \ln \frac{d}{\alpha},
\]
where the logarithm of vectors is taken element-wise.

Given that we want to consider $\bu_t$ in the simplex rather than in the truncated simplex, we have
\begin{align*}
\langle \bg_t, \bx_t -\bu_t\rangle
&= \left\langle \bg_t, \bx_t - \left((1-\alpha)\bu_t+\frac{\ones_d \alpha}{d}\right)\right\rangle + \left\langle \bg_t,\alpha \left(\frac{\ones_d}{d}-\bu_t\right)\right\rangle \\
&\leq \left\langle \bg_t, \bx_t - \left((1-\alpha)\bu_t+\frac{\ones_d \alpha}{d}\right)\right\rangle + 2\alpha  \|\bg_t\|_\infty~.
\end{align*}
Denoting $\tilde{\bu}_t=(1-\alpha)\bu_t + \frac{\ones_d \alpha}{d} \in \mathcal{V}$, the bound will depend on
\[
\sum_{t=2}^T \|\tilde{\bu}_t-\tilde{\bu}_{t-1}\|_1
=(1-\alpha) \sum_{t=2}^T \|\bu_t-\bu_{t-1}\|_1~.
\]
Hence, using the uniform prior $\bx_1=\ones_d/d$, we have that the tracking regret is upper bounded by
\[
\frac{\ln d + (1-\alpha)P^{\|\cdot\|_1}(\bu_1, \dots, \bu_T) \ln \frac{d}{\alpha}}{\eta}+ \frac{\eta}{2}\sum_{t=1}^T \|\bg_t\|^2_\infty + 2\alpha \sum_{t=1}^T \|\bg_t\|_\infty~.
\]
The sequence $\bu_1,\dots, \bu_T$ changes at most $m$ times, so $P^{\|\cdot\|_1}(\bu_1, \dots, \bu_T)\leq 2 m$. Hence, assuming $\|\bg_t\|_\infty\leq G_\infty$ for $t=1, \dots, T$, we have
\[
\TRegret^m_T(\bu_1, \dots, \bu_T)
\leq \frac{\ln d + 2m \ln \frac{d}{\alpha}}{\eta} + \frac{\eta T G_\infty^2}{2}+2 \alpha T G_\infty~.
\]
Assuming $m\geq1$, choosing $\eta=\sqrt{\frac{2(\ln d + 2m \ln \frac{d}{\alpha})}{ G_\infty^2 T}}$ and $\alpha=\sqrt{\frac{m}{T}}$, we have
\begin{align*}
\TRegret^m_T(\bu_1, \dots, \bu_T)
&\leq G_\infty \sqrt{2 T} \left(\sqrt{\ln d + 2m\ln \frac{d \sqrt{T}}{\sqrt{m}}} + \sqrt{2m} \right)\\
&= G_\infty \sqrt{2 T m} \left(\sqrt{\left(\frac{1}{m}+2\right)\ln d + \ln \frac{T}{m} } + \sqrt{2} \right)~.
\end{align*}
\end{example}

We can also use the strongly adaptive regret guarantee to control the tracking regret, as detailed in the next theorem.
\begin{theorem}
\label{thm:sa_regret_to_tracking}
Consider an algorithm that guarantees $\SARegret_{T,\tau}(\bu) \leq c \tau^\alpha$ for $\alpha \in (0,1)$, for all $\bu \in \mathcal{V}$ and all $\tau \in \{1, \dots, T\}$. Then,
\[
\TRegret^m_T(\bu_1,\dots,\bu_T)
\leq c (m+1)^{1-\alpha} T^{\alpha}, \quad \forall m \in \{0, \dots, T-1\}~.
\]
\end{theorem}
\begin{proof}

\begin{align*}
\TRegret^m_T(\bu_1,\dots,\bu_T)
&= \sum_{i=1}^K \sum_{t \in \mathcal{I}_i} (\ell_t(\bx_t) -  \ell_t(\tilde{\bu}_i))
\leq \sum_{i=1}^K c |\mathcal{I}_i|^\alpha\\
&\leq c \left(\sum_{i=1}^K 1^\frac{1}{1-\alpha}\right)^{1-\alpha} \left(\sum_{i=1}^K |\mathcal{I}_i|\right)^\alpha
= c K^{1-\alpha} T^\alpha\\
&\leq c (m+1)^{1-\alpha} T^\alpha,\end{align*}
where in the first inequality we used the assumption on the strongly adaptive regret and in the second one we used H\"{o}lder's inequality.
\end{proof}
\index{regret!tracking|)textbf}

\section{History Bits}
\citet{LittlestoneW94} introduced the tracking regret\index{regret!tracking} in the \ac{LEA} game, but for upper bounding the number of mistakes, rather than the regret. The extension to arbitrary losses is in \citet{HerbsterW98}.
\citet{Zinkevich03} is often erroneously thought to have introduced the dynamic regret\index{regret!dynamic}, while it first appeared in \citet{HerbsterW01}, with upper bounds that depend on the drift $\sum_{t=2}^T \|\bu_t-\bu_{t-1}\|_p$.

Theorem~\ref{thm:dynamic} is a generalization of \citet[Theorem 11.4]{Cesa-BianchiL06} to arbitrary distance generating functions\index{distance generating function} and considering Lipschitz losses. Also, following \citet[Lemma 4]{JacobsenC22}, I have fixed the offset in the proof so that we do not have to assume that $\boldsymbol{0} \in \mathcal{V}$.


\citet{ZhangLZ18} designed the \ac{ADER}\index{ADER algorithm} algorithm, and they also proved that it is optimal in bounded domains. The idea of combining online algorithms with different learning rates comes directly from the MetaGrad\index{MetaGrad algorithm} algorithm~\citep{vanErvenK16,VanErvenKV21} that also showed how to query a single gradient per round. In turn, MetaGrad builds on prior work on using a grid of learning rates in \ac{EG}~\citep{KoolenvEG14}. By now, this is a well-known method that allows us to solve essentially all problems of tuning learning rates in bounded domains, at least theoretically.
It is worth also stressing that the general idea of combining the outputs of different online learning algorithms through another online learning algorithm is instead much older, and it goes back at least to \citet{BlumM05,BlumM07}.

I described a slightly simpler version of \ac{ADER}\index{ADER algorithm|(} with a flat prior, see Problem~\ref{exercise:ader} for the original bound. I also removed the unnecessary assumption that $\boldsymbol{0} \in \mathcal{V}$.
Recently, \citet{ZhaoZZZ20,ZhaoZZZ24} improved the guarantees of \ac{ADER} using optimistic algorithms, obtaining smaller bounds in easy environments.
\index{ADER algorithm|)}

Before ADER, there was also a line of work on the path length evaluated on $\bu^\star_t \in \argmin_{\bx \in \mathcal{V}} \ \ell_t(\bx)$ for $t=1, \dots, T$~\citep{JadbabaieRSS15}. This is an easier setting because one can evaluate the set of minimizers of $\ell_t$ after receiving the loss, so one can use a simple doubling trick on the learning rate\index{doubling trick}. Also, the resulting bound does not strictly generalize the static regret analysis. Hence, it is difficult to say if the resulting algorithm is better or worse than one with a static regret.

Theorem~\ref{thm:dynamic_reduction_direction_magnitude} is from \citet{JacobsenC22}\index{black-box reduction!for dynamic regret in unbounded feasible sets}.

The adaptive regret\index{regret!adaptive} is defined in \citet{HazanS07, HazanS09}. The strongly-adaptive regret\index{regret!strongly adaptive} was defined by \citet{AdamskiyKCV12,AdamskiyKCV16} (still calling it ``adaptive regret'') and then reinvented in \citet{DanielyGSS15} (that coined the name). Note that the strongly-adaptive regret is usually defined for bounded domains, taking the maximum with respect to $\bu \in \mathcal{V}$, whereas I defined it more generally for any competitor in the feasible set.
Lemma~\ref{lem:geo-intv} is from \citet{DanielyGSS15} as well.
The efficient version of the CBCE algorithm was proposed by \citet{JunOWW17,JunOWW17b} using a stronger version of the sleeping expert algorithm, still based on coin betting. Here, I simplified the sleeping expert algorithm for didactical reasons. CBCE improves over the guarantee in \citet{DanielyGSS15} by improving the logarithmic term. The idea of using linearized losses in CBCE to avoid querying more than one subgradient and the better behavior of the first term in the bound when $\alpha \to 0$ are new. The non-efficient version of CBCE is also new, but straightforward.

The algorithm in Example~\ref{example:shifting_regret} is introduced in \citet{HerbsterW01} and generalized to arbitrary distance generating functions\index{distance generating function} in \citet{Cesa-BianchiGLS12}.
Theorem~\ref{thm:sa_regret_to_tracking} is from \citet{DanielyGSS15}.

\section{Exercises}

\begin{exer}
\label{exercise:ader}
Change the prior in the \ac{EG} algorithm to obtain a dependence of $\ln \ln (1+P^{\|\cdot\|_2}(\bu_1, \dots, \bu_T))$ instead of $\ln \ln T$. In this way, if the path length is zero, that is, we are in the static case, the regret is $\mathcal{O}(\sqrt{T})$ instead of $\mathcal{O}(\sqrt{T \ln \ln T})$, when $T\to \infty$.
\end{exer}

\acresetall

\chapter{Saddle-Point Optimization and Online Algorithms}
\label{ch:saddle-point}

In this chapter, we talk about solving saddle-point problems with \ac{OCO} algorithms, and the connection with game theory.

\acresetall

\section{Saddle-Point Problems}

We want to solve the following saddle-point problem
\begin{equation}
\label{eq:saddle_point}
\inf_{\bx \in \mathcal{X}} \sup_{\by\in \mathcal{Y}} \ f(\bx,\by)~.
\end{equation}

Let's say from the beginning that we need $\inf$ and $\sup$ rather than $\min$ and $\max$ because the minimum or maximum might not exist. Every time we know for sure the $\inf$/$\sup$ are attained, we can substitute them with $\min$/$\max$.

While it is clear what it means to minimize a function, it might not be immediate to see the meaning of ``solving'' the saddle-point problem in \eqref{eq:saddle_point}.
It turns out that the proper notion we are looking for is the one of \emph{saddle point}.
\begin{definition}
Let $\mathcal{X} \subseteq \R^n$, $\mathcal{Y} \subseteq \R^m$, and $f: \mathcal{X} \times \mathcal{Y} \to \R$.
A point $(\bx^\star, \by^\star) \in \mathcal{X} \times \mathcal{Y}$ is a \textbf{saddle point}\index{saddle point|textbf} of $f$ in $\mathcal{X} \times\mathcal{Y}$ if
\[
f(\bx^\star, \by)
\leq f(\bx^\star, \by^\star)
\leq f(\bx, \by^\star), \quad \forall \bx \in \mathcal{X}, \by \in \mathcal{Y}~.
\]
\end{definition}

We will now state conditions under which there \emph{exists} a saddle point that solves \eqref{eq:saddle_point}.
First, we need an easy lemma.
\begin{lemma}
\label{lemma:min_max_less_max_min}
Let $f$ be a function from a non-empty product set $\mathcal{X} \times \mathcal{Y}$ to $\R$. Then,
\[
\inf_{\bx \in \mathcal{X}} \sup_{\by \in \mathcal{Y}} \ f(\bx,\by)
\geq \sup_{\by \in \mathcal{Y}} \inf_{\bx \in \mathcal{X}} \ f(\bx,\by)~.
\]
\end{lemma}
\begin{proof}
For any $\bx' \in \mathcal{X}$ and $\by \in \mathcal{Y}$ we have that $f(\bx',\by) \geq \inf_{\bx \in \mathcal{X}} \ f(\bx,\by)$. This implies that
$\sup_{\by \in \mathcal{Y}} \ f(\bx', \by) \geq \sup_{\by \in \mathcal{Y}} \inf_{\bx \in \mathcal{X}} \ f(\bx,\by)$ for all $\bx' \in \mathcal{X}$, which gives the stated inequality.
\end{proof}

We can now state the following theorem.
\begin{theorem}
\label{thm:saddle_equiv_minmax}
Let $f$ be any function from a non-empty product set $\mathcal{X} \times \mathcal{Y}$ to $\R$. A point $(\bx^\star, \by^\star)$ is a saddle point of $f$ if and only if the supremum in
\begin{equation}
\label{eq:saddle_equiv_minmax_1}
\sup_{\by \in \mathcal{Y}} \inf_{\bx \in \mathcal{X}} \ f(\bx, \by)
\end{equation}
is attained at $\by^\star$, the infimum in
\begin{equation}
\label{eq:saddle_equiv_minmax_2}
\inf_{\bx \in \mathcal{X}} \sup_{\by \in \mathcal{Y}} \ f(\bx, \by)
\end{equation}
is attained at $\bx^\star$, and these two expressions are equal.
\end{theorem}
\begin{proof}
If $(\bx^\star,\by^\star)$ is a saddle point, then we have
\begin{align*}
f(\bx^\star,\by^\star) &= \inf_{\bx \in \mathcal{X}} \ f(\bx, \by^\star) \leq \sup_{\by \in \mathcal{Y}} \inf_{\bx \in \mathcal{X}} \ f(\bx, \by)\\
f(\bx^\star,\by^\star) &= \sup_{\by \in \mathcal{Y}} \ f(\bx^\star, \by) \geq \inf_{\bx \in \mathcal{X}} \sup_{\by \in \mathcal{Y}} \ f(\bx, \by)~.
\end{align*}
From Lemma~\ref{lemma:min_max_less_max_min}, we have that these quantities must be equal, so that the three conditions in the theorem are satisfied.

For the other direction, if the conditions are satisfied, then we have
\[
f(\bx^\star, \by^\star)
\leq \sup_{\by \in \mathcal{Y}} \ f(\bx^\star, \by)
= \inf_{\bx \in \mathcal{X}} \ f(\bx, \by^\star)
\leq f(\bx^\star, \by^\star)~.
\]
Hence, $(\bx^\star,\by^\star)$ is a saddle point.
\end{proof}

\begin{remark}
The above theorem implies that the set of saddle points, when nonempty, is the Cartesian product $\mathcal{X}^\star \times \mathcal{Y}^\star$, where $\mathcal{X}^\star$ and $\mathcal{Y}^\star$ are the sets of optimal solutions of the optimization problems \eqref{eq:saddle_equiv_minmax_2} and \eqref{eq:saddle_equiv_minmax_1}, respectively. In other words, $\bx^\star$ and $\by^\star$ can be independently chosen from the sets $\mathcal{X}^\star$ and $\mathcal{Y}^\star$, respectively, to form a saddle point.
\end{remark}

\begin{remark}
The above theorem tells a surprising thing: if a saddle point exists, then there might be multiple ones, and all of them must have the same minimax value. This might seem surprising, but it is due to the fact that the definition of saddle point is a global and not a local property. Moreover, if the $\inf\sup$ and $\sup\inf$ problem have different values, no saddle point exists.
\end{remark}



\begin{figure}
\centering
\begin{tikzpicture}
\begin{axis}[
  width=7cm,
  view={-35.7}{39},
  xlabel={$x$},
  ylabel={$y$},
  zlabel={$f(x,y)$},
  xtick={-0.5, 0, 0.5, 1},
  ytick={-1,-0.5, 0, 0.5, 1},
  ztick={0,1,2,3,4},
  tick style={draw=none},
  grid,
  xmin=-1, xmax=1,
  ymin=-1, ymax=1,
  zmin=0, zmax=4,
  colormap={bw}{gray(0cm)=(0.9); gray(1cm)=(0.1)},
]
\addplot3[
  surf,
  mesh/rows=41,
] table[
  x=x, y=y, z=z,
  col sep=comma
] {code_for_figs/no_saddle.csv};
\end{axis}
\end{tikzpicture}
\caption{The function $f(x,y)=(x-y)^2$ does not have a saddle point.}
\label{fig:no_saddle_point}
\commentAlt{Figure~\ref{fig:no_saddle_point}. Three-dimensional surface plot of f(x,y)=(x-y)^2 over a square domain, illustrating that the surface has no saddle point.}
\end{figure}

Let's show a couple of examples that show that the conditions above are indeed necessary.
\begin{example}
Let $f(x,y)=(x-y)^2$, $\mathcal{X} =[-1,1]$, and $\mathcal{Y} =[-1,1]$. Then, we have
\[
\inf_{x \in \mathcal{X}} \sup_{y \in \mathcal{Y}} \ (x-y)^2
= \inf_{x \in \mathcal{X}} \ (1+|x|)^2
= 1,
\]
while
\[
\sup_{y \in \mathcal{Y}} \inf_{x \in \mathcal{X}} \ (x-y)^2
= \sup_{y \in \mathcal{Y}} 0
= 0~.
\]
Indeed, from Figure~\ref{fig:no_saddle_point} we can see that there is no saddle point.
\end{example}

\begin{example}
\label{example:no_saddle}
Let $f(x,y)=x y $, $\mathcal{X} =(0,1]$, and $\mathcal{Y} =(0,1]$. Then, we have
\[
\inf_{x \in \mathcal{X}} \sup_{y \in \mathcal{Y}} \ x y
= \inf_{x \in \mathcal{X}} \ x
= 0
\]
and
\[
\sup_{y \in \mathcal{Y}} \inf_{x \in \mathcal{X}} \ xy
= 0~.
\]
Here, even if inf sup is equal to sup inf, the saddle point does not exist because the inf in the first expression is not attained at a point of $\mathcal{X}$.
\end{example}

Theorem~\ref{thm:saddle_equiv_minmax} also tells us that to find a saddle point of a function $f$, we need to find the minimizer in $\bx$ of $\sup_{\by \in \mathcal{Y}} \ f(\bx,\by)$ and the maximizer in $\by$ of $\inf_{\bx \in \mathcal{X}} \ f(\bx,\by)$.
Let's now use this knowledge to design a proper measure of progress towards the saddle point.

We might be tempted to use $f(\bx', \by')-f(\bx^\star,\by^\star)$ as a measure of suboptimality of $(\bx',\by')$ with respect to the saddle point $(\bx^\star,\by^\star)$. Unfortunately, this quantity can be negative or equal to zero for an infinite number of points $(\bx',\by')$ that are not saddle points. We might then think to use some notion of distance to the saddle point, like $\|\bx'-\bx^\star\|^2_2+\|\by'-\by^\star\|^2_2$, but this quantity in general can go to zero at an arbitrarily slow rate. To see why, consider the case that $f(\bx,\by)=h(\bx)$, so that the saddle-point problem reduces to minimizing a convex function. So, assuming only convexity, the rate of convergence to a minimizer of $f$ can be arbitrarily slow because the function can be arbitrarily close to a flat surface, but not flat, around the minimizer.
Hence, we need something different.

Observe that Theorem~\ref{thm:saddle_equiv_minmax} says one of the problems we should solve is
\[
\inf_{\bx \in \mathcal{X}} \ h(\bx),
\]
where $h(\bx) = \sup_{\by\in \mathcal{Y}} \ f(\bx,\by)$. In this view, the problem looks like a standard offline convex optimization problem, where the objective function has a particular structure. Moreover, in this view, we only focus on the variables $\bx$.
The standard measure of convergence in this case, for a point $\bx'$, the \emph{suboptimality gap}\index{suboptimality gap}, can be written as
\[
h(\bx') - \inf_{\bx \in \mathcal{X}} h(\bx)
= \sup_{\by\in \mathcal{Y}} \ f(\bx',\by) - \inf_{\bx \in \mathcal{X}} \sup_{\by\in \mathcal{Y}} \ f(\bx,\by)~.
\]
We also have to find the maximizer with respect to $\by$ of the function $\inf_{\bx \in \mathcal{X}} \ f(\bx,\by)$, hence we have
\[
\sup_{\by \in \mathcal{Y}} \ g(\by),
\]
where $g(\by) = \inf_{\bx\in \mathcal{X}} \ f(\bx,\by)$. This also implies another measure of convergence in which we focus only on the variable $\by$:
\[
\sup_{\by \in \mathcal{Y}} \ g(\by)  - g(\by')
= \sup_{\by\in \mathcal{Y}} \inf_{\bx \in \mathcal{X}} \ f(\bx,\by) - \inf_{\bx\in \mathcal{X}} \ f(\bx,\by') ~.
\]
Finally, in case we are interested in studying the quality of a joint solution $(\bx',\by')$, a natural measure is the sum of the two measures above:
\begin{align*}
&\sup_{\by\in \mathcal{Y}} \ f(\bx',\by) - \inf_{\bx \in \mathcal{X}} \sup_{\by\in \mathcal{Y}} \ f(\bx,\by) + \sup_{\by\in \mathcal{Y}} \inf_{\bx \in \mathcal{X}} \ f(\bx,\by) - \inf_{\bx\in \mathcal{X}} \ f(\bx,\by')\\
&\quad= \sup_{\by\in \mathcal{Y}} \ f(\bx',\by) - \inf_{\bx \in \mathcal{X}} \ f(\bx, \by'),
\end{align*}
where we assumed the existence of a saddle point to say that $\inf_{\bx \in \mathcal{X}} \sup_{\by\in \mathcal{Y}} \ f(\bx,\by) = \sup_{\by\in \mathcal{Y}} \inf_{\bx \in \mathcal{X}} \ f(\bx,\by)$ from Theorem~\ref{thm:saddle_equiv_minmax}.
This measure is called \emph{duality gap}.
\begin{definition}
For a function $f:\mathcal{X} \times \mathcal{Y} \to \R$, define the \textbf{duality gap}\index{duality gap|textbf} on $(\bx', \by') \in \mathcal{X}\times \mathcal{Y}$ as
\[
\sup_{\by\in \mathcal{Y}} \ f(\bx',\by) - \inf_{\bx \in \mathcal{X}} \ f(\bx, \by')~.
\]
\end{definition}
The duality gap\index{duality gap} is always non-negative even when the saddle point does not exist, since $\sup_{\by\in \mathcal{Y}} \ f(\bx',\by)\geq f(\bx',\by')\geq \inf_{\bx \in \mathcal{X}} \ f(\bx, \by')$, for all $\bx' \in \mathcal{X}$ and $\by' \in \mathcal{Y}$.

Let's add even more intuition of the duality gap\index{duality gap} definition, using the notion of \emph{$\epsilon$-saddle point}.
\begin{definition}
Let $\epsilon\geq0$, $\mathcal{X} \subseteq \R^n$, $\mathcal{Y} \subseteq \R^m$, and $f: \mathcal{X} \times \mathcal{Y} \to \R$.
A point $(\bx^\star, \by^\star) \in \mathcal{X} \times \mathcal{Y}$ is an \textbf{$\epsilon$-saddle point}\index{saddle point!$\epsilon$-|textbf} of $f$ in $\mathcal{X} \times\mathcal{Y}$ if
\[
f(\bx^\star,\by) -\epsilon
\leq f(\bx^\star,\by^\star)
\leq f(\bx,\by^\star) + \epsilon, \quad \forall \bx \in \mathcal{X}, \by \in \mathcal{Y}~.
\]
\end{definition}
This definition is useful because we cannot expect to numerically calculate a saddle point with infinite precision, but we can find something that satisfies the saddle point definition up to an $\epsilon$.
Obviously, any saddle point is also an $\epsilon$-saddle point, yet we can have an $\epsilon$-saddle point even when the saddle point does not exist. Indeed, in Example~\ref{example:no_saddle} the point $(\delta,\delta)$, where $\delta \in (0,1]$, is a $\max\{\delta(1-\delta),\delta^2\}$-saddle point, even if the saddle-point does not exist.

Now, the notion of $\epsilon$-saddle point is equivalent up to a multiplicative constant to the $\epsilon$-duality gap, as detailed in the next lemma. The proof is left as an exercise (see Problem~\ref{exercise:duality_gap}).
\begin{lemma}
\label{lemma:equivalence_saddle_point}
If $(\bx^\star,\by^\star)$ is an $\epsilon$-saddle point then its duality gap\index{duality gap} is upper bounded by $2\epsilon$.
On the other hand, a duality gap at most $\epsilon$ implies that the point is a $\epsilon$-saddle point.
\end{lemma}

The above reasoning shows that finding the saddle point of the function $f$ is equivalent to solving both a maximization and a minimization problem. However, as we said above, the saddle point might not exist. So, let's now move to easily checkable sufficient conditions for the existence of a saddle point. For this, we can state the following theorem, which is a weaker version of Sion's minimax theorem~\citep{Sion58}.
\index{minimax theorem|(textbf}
\begin{theorem}
\label{thm:minimax}
Let $\mathcal{X}$, $\mathcal{Y}$ be compact convex subsets of $\R^n$ and $\R^m$ respectively. Let $f : \mathcal{X} \times \mathcal{Y} \to \R$ be a continuous function, convex in its first argument, and concave in its second, and with bounded subgradients with respect to both variables. Then, we have that
\[
\min_{\bx \in \mathcal{X}} \max_{\by\in \mathcal{Y}} \ f(\bx,\by)
=  \max_{\by\in \mathcal{Y}} \min_{\bx \in \mathcal{X}} \ f(\bx,\by)~.
\]
\end{theorem}
\index{minimax theorem|)textbf}
This theorem gives us sufficient conditions to have the min-max problem equal to the max-min one. So, for example, thanks to the Weierstrass theorem (Theorem~\ref{thm:weierstrass})\index{Weierstrass theorem for extended-real-valued functions}, the assumptions in Theorem~\ref{thm:minimax}, in light of Theorem~\ref{thm:saddle_equiv_minmax}, are sufficient conditions for the existence of a saddle point.

We defer the proof of this theorem for a bit, and we now turn to solving the saddle-point problem in \eqref{eq:saddle_point}.

\section{Solving Saddle-Point Problems with Online Convex Optimization}

\begin{algorithm}[h]
\caption{Solving Saddle-Point Problems with \acl{OCO}}
\label{alg:minmax_to_oco}
\begin{algorithmic}[1]
{
    \REQUIRE{$\bx_1 \in \mathcal{X}$, $\by_1 \in \mathcal{Y}$}
    \FOR{$t=1, \dots, T$}
    \STATE{$\mathcal{X}$-Learner and $\mathcal{Y}$-Learner simultaneously decide their outputs $\bx_t \in \mathcal{X}$ and $\by_t \in \mathcal{Y}$}
    \STATE{$\mathcal{X}$-Learner receives $\ell_t(\bx)=f(\bx,\by_t)$}
    \STATE{$\mathcal{Y}$-Learner receives $h_t(\by)=-f(\bx_t,\by)$}
    \ENDFOR
    \RETURN{$\bar{\bx}_T=\frac{1}{T}\sum_{t=1}^T \bx_t$, $\bar{\by}_T=\frac{1}{T}\sum_{t=1}^T \by_t$}
}
\end{algorithmic}
\end{algorithm}

Let's show how to use \ac{OCO} algorithms to solve saddle-point problems. We will state a procedure that is a direct generalization of the online-to-batch conversion we saw in Chapter~\ref{ch:o2b}.

Suppose we use an \ac{OCO} algorithm fed with losses $\ell_t(\bx)= f(\bx,\by_t)$ that produces the iterates $\bx_t$ and another \ac{OCO} algorithm fed with losses $h_t(\by)=-f(\bx_t,\by)$ that produces the iterates $\by_t$. Then, we can state the following theorem.
\begin{theorem}
\label{thm:minmax_to_oco}
Let $\mathcal{X}$ and $\mathcal{Y}$ be non-empty closed convex sets.
Let $f:\mathcal{X}\times \mathcal{Y}\to \R$. Then, with the notation in Algorithm~\ref{alg:minmax_to_oco}, for any $\bx \in \mathcal{X}$, we have
\[
\frac{1}{T}\sum_{t=1}^T f(\bx_t, \by_t) - \frac{1}{T}\sum_{t=1}^T f(\bx,\by_t)
= \frac{\Regret^{\mathcal{X}}_T(\bx)}{T},
\]
where $\Regret^{\mathcal{X}}_T(\bx)=\sum_{t=1}^T \ell_t(\bx_t) - \sum_{t=1}^T \ell_t(\bx)$.

Moreover, for any $\by \in \mathcal{Y}$, we have
\[
\frac{1}{T}\sum_{t=1}^T f(\bx_t,\by) - \frac{1}{T}\sum_{t=1}^T f(\bx_t, \by_t)
= \frac{\Regret^{\mathcal{Y}}_T(\by)}{T},
\]
where $\Regret^{\mathcal{Y}}_T(\by)=\sum_{t=1}^T h_t(\by_t) - \sum_{t=1}^{T} h_t(\by)$.

Also, if $f$ is convex in the first argument, concave in the second, and $\arg\max_{\by\in \mathcal{Y}}f(\bar{\bx}_T,\by)$ and $\arg\min_{\bx\in \mathcal{X}}f(\bx,\bar{\by}_T)$ are non-empty, then we have
\[
\max_{\by\in \mathcal{Y}} f(\bar{\bx}_T, \by) - \min_{\bx\in \mathcal{X}} f(\bx,\bar{\by}_T)
\leq \frac{\Regret^{\mathcal{X}}_T(\bx'_T)+\Regret^{\mathcal{Y}}_T(\by'_T)}{T},
\]
for any $\bx_T'\in\arg\min_{\bx\in \mathcal{X}} f(\bx,\bar{\by}_T)$ and $\by_T'\in\arg\max_{\by\in \mathcal{Y}} f(\bar{\bx}_T,\by)$.
\end{theorem}
\begin{proof}
The first two equalities are obtained by simply observing that $\ell_t(\bx)=f(\bx,\by_t)$ and $h_t(\by)=-f(\bx_t,\by)$.

For the stated inequality, using Jensen's inequality\index{inequality!Jensen's} (Theorem~\ref{thm:jensen}), we obtain
\begin{align*}
f(\bar{\bx}_T,\by)- f(\bx,\bar{\by}_T)
&\leq \frac{1}{T}\sum_{t=1}^{T}f(\bx_t,\by)- \frac{1}{T} \sum_{t=1}^{T}f(\bx,\by_t)~.
\end{align*}
Summing the first two equalities, using the above inequality, and taking $\bx=\bx'_T$ and $\by=\by'_T$, we get the stated inequality.
\end{proof}

From this theorem, we can immediately prove the following corollary.
\begin{corollary}
Let $\mathcal{X}$ and $\mathcal{Y}$ be non-empty closed convex sets.
Let $f:\mathcal{X} \times\mathcal{Y}\to \R$ be continuous, convex in the first argument, and concave in the second one. Assume that $\mathcal{X}$ and $\mathcal{Y}$ are compact.
Consider Algorithm~\ref{alg:minmax_to_oco} and assume that the two online algorithms guarantee that their maximum regret over competitors in their feasible set is sublinear in $T$. Then, we have
\[
\lim_{T\to\infty}\max_{\by\in \mathcal{Y}} f(\bar{\bx}_T, \by) - \min_{\bx\in \mathcal{X}} f(\bx,\bar{\by}_T)=0~.
\]
\end{corollary}

\begin{figure}[t]
\centering
\begin{tikzpicture}
\begin{axis}[
    name=left,
    width=6cm,
    xlabel={$x$}, ylabel={$y$},
    ylabel style={at={(axis description cs:-0.08,0.5)},anchor=south},
    title={Iterates and Running Averages},
    grid=major,
    xmin=-10.2, xmax=10.2,
    ymin=-10.2, ymax=10.2,
    axis equal,
    legend style={font=\tiny, at={(0,0.2)},anchor=north west},
    legend image post style={
        scale=0.5,
    },
    legend cell align={left},
]
\addplot[gray, thick] table[x=x,y=y] {code_for_figs/iterdata.dat};
\addlegendentry{Iterates $(x_t, y_t)$}
\addplot[thick] table[x=xbar,y=ybar] {code_for_figs/avgdata.dat};
\addlegendentry{Running average $(\bar{x}_t,\bar{y}_t)$}
\addplot[black, mark=asterisk, thick, mark size=3pt, only marks] coordinates {(5,5)};
\end{axis}
\end{tikzpicture}
\hspace{1cm}
\begin{tikzpicture}
\begin{axis}[
    at={(left.east)}, anchor=west,
    width=6cm,
    xlabel={$t$}, ylabel={Duality Gap},
    ylabel style={at={(axis description cs:-0.1,0.5)},anchor=south},
    title={Duality gap for $(\bar{x}_t, \bar{y}_t)$},
    grid=major,
    xmode=log, ymode=log,
    xmin=1, xmax=1000,
    legend style={font=\tiny},
    legend image post style={
        scale=0.5,
    },
    legend cell align={left},
]
\addplot[thick] table[x=t,y=vav] {code_for_figs/avgdata.dat};
\addplot[gray, thick] table[x=t,y=v] {code_for_figs/iterdata.dat};
\end{axis}
\end{tikzpicture}
\caption{Left: behaviour of the iterates and running averages of the iterates for Example~\ref{example:saddle_point_ogd}. The saddle point is marked with an asterisk. Right: the duality gap\index{duality gap} on a log-log plot for the same problem evaluated on the iterates and running averages of the iterates.}
\label{fig:saddle_point_ogd}
\commentAlt{Figure~\ref{fig:saddle_point_ogd}. Two-panel plot for a saddle-point OGD example. The left panel shows raw iterates and their running averages in the (x,y) plane, with the saddle point marked by an asterisk. The right panel shows the duality gap on log-log axes for iterates and running averages.}
\end{figure}

\begin{example}
\label{example:saddle_point_ogd}
Consider the saddle-point problem
\[
\min_{|x|\leq 10} \max_{|y|\leq 10} \ (x-5) (y-5)~.
\]
The saddle point of this problem is $(x,y)=(5,5)$. We can find it using, for example, projected \ac{OGD} with stepsizes $\eta_t=\frac{1}{\sqrt{t}}$. So, setting $\bx_1=\by_1=0$, we have the iterations
\begin{align*}
x_{t+1} &= \max\left(\min\left(x_t - \frac{1}{\sqrt{t}} (y_t-5),10\right),-10\right)\\
y_{t+1} &= \max\left(\min\left(y_t + \frac{1}{\sqrt{t}} (x_t-5),10\right),-10\right)~.
\end{align*}
According to Theorem~\ref{thm:minmax_to_oco}, the duality gap\index{duality gap} in $(\frac1T \sum_{t=1}^T x_t, \frac1T \sum_{t=1}^T y_t)$ converges to 0. Hence, given that the saddle point is unique, $(\frac1T \sum_{t=1}^T x_t, \frac1T \sum_{t=1}^T y_t)$ converges to the saddle point of this problem.
However, the iterates $(x_t,y_t)$ will never converge, see Figure~\ref{fig:saddle_point_ogd}.
\end{example}

Surprisingly, we can even prove the simpler version of the minimax theorem in Theorem~\ref{thm:minimax} from the above result!
\begin{proof}[Proof of Theorem~\ref{thm:minimax}]
From Lemma~\ref{lemma:min_max_less_max_min}, we have one inequality. Hence, we now have to prove the other inequality.

We will use a constructive proof.
From the assumptions of the theorem, we know that we can use, for example, two projected \ac{OSD} algorithms on the losses $\ell_t$ and $h_t$.
Let's use Algorithm~\ref{alg:minmax_to_oco} and Theorem~\ref{thm:minmax_to_oco}. For the first player, for any $\bx \in \mathcal{X}$ we have
\[
\frac{1}{T}\sum_{t=1}^T f(\bx_t, \by_t)
= \frac{1}{T}\sum_{t=1}^T f(\bx,\by_t) + \frac{\Regret^{\mathcal{X}}_T(\bx)}{T}
\leq f\left(\bx,\frac{1}{T}\sum_{t=1}^T \by_t\right) + \frac{\Regret^{\mathcal{X}}_T(\bx)}{T}~.
\]
Observe that
\[
\min_{\bx \in \mathcal{X}} f\left(\bx,\frac{1}{T}\sum_{t=1}^T \by_t\right)
\leq \max_{\by \in \mathcal{Y}} \min_{\bx \in \mathcal{X}} f(\bx,\by)~.
\]
Given that projected \ac{OSD} has $o(T)$ regret for each $\bx \in \mathcal{X}$, we have
\[
\frac{1}{T}\sum_{t=1}^T f(\bx_t, \by_t)
\leq \max_{\by \in \mathcal{Y}} \min_{\bx \in \mathcal{X}} f(\bx,\by)+o(1)~.
\]
In the same way, we have
\[
-\frac{1}{T}\sum_{t=1}^T f(\bx_t, \by_t)
\leq -\min_{\bx \in \mathcal{X}} \max_{\by \in \mathcal{Y}} f(\bx,\by)+o(1)~.
\]
Summing the two inequalities, taking $T\to \infty$, and using the sublinear regret assumption, we have
\[
\min_{\bx \in \mathcal{X}} \max_{\by \in \mathcal{Y}} f(\bx,\by)
\leq \max_{\by \in \mathcal{Y}} \min_{\bx \in \mathcal{X}} f(\bx,\by)~.
\]
Together with Lemma~\ref{lemma:min_max_less_max_min}, this implies the stated equality.
\end{proof}

\subsection{Variations with Best Response and Alternation}

In some cases, it is easy to compute the max with respect to $\by \in \mathcal{Y}$ of $f(\bx_t,\by)$ for a given $\bx_t$. For example, this is trivial for bilinear games over the probability simplex\index{probability simplex} that we will see in Section~\ref{sec:game_theory}. In these cases, we can remove the second learner and just use its \textbf{best response}\index{best response|(textbf} in each round, that is, the vector that minimizes the current loss on the feasible set. Note that in this way we are making one of the two players ``stronger'' through the knowledge of their loss in the next round. However, this is perfectly fine: the proof in Theorem~\ref{thm:minmax_to_oco} is still perfectly valid.

\begin{algorithm}[t]
\caption{Saddle-Point Optimization with OCO and $\mathcal{Y}$-Best Response}
\label{alg:minmax_to_oco2}
\begin{algorithmic}[1]
{
    \REQUIRE{$\bx_1 \in \mathcal{X}$}
    \FOR{$t=1, \dots, T$}
    \STATE{Set $\by_{t} \in \argmax_{\by \in \mathcal{Y}} \ f(\bx_{t},\by)$}
    \STATE{$\mathcal{X}$-Learner receives $\ell_t(\bx) = f(\bx,\by_t)$ and produces $\bx_{t+1} \in \mathcal{X}$}
    \ENDFOR
    \RETURN{$\bar{\bx}_T=\frac{1}{T}\sum_{t=1}^T \bx_t$, $\bar{\by}_T=\frac{1}{T}\sum_{t=1}^T \by_t$}
}
\end{algorithmic}
\end{algorithm}

In this case, the $\mathcal{Y}$-player has an easy life: it knows the loss before making the prediction, hence it can just output the minimizer of the loss in $\mathcal{Y}$. Hence, we also have that the regret of the $\mathcal{Y}$-player will be non-positive, and it will not show up in Theorem~\ref{thm:minmax_to_oco}. Putting everything together, we can state the following corollary.
\begin{corollary}
Let $\mathcal{X}$ and $\mathcal{Y}$ be non-empty closed convex sets.
Let $f:\mathcal{X} \times\mathcal{Y}\to \R$ be convex in the first argument, and concave in the second.  With the notation in Algorithm~\ref{alg:minmax_to_oco2}, assume that $\mathcal{X}$ is compact, the argmax of the $\mathcal{Y}$-player is never empty, and $\arg\min_{\bx\in \mathcal{X}}f(\bx,\bar{\by}_T)$ is non-empty. Then, we have
\[
\sup_{\by\in \mathcal{Y}} f(\bar{\bx}_T, \by) - \min_{\bx\in \mathcal{X}} f(\bx,\bar{\by}_T)
\leq \frac{\Regret^{\mathcal{X}}_T(\bx'_T)}{T},
\]
for any $\bx_T'\in\arg\min_{\bx\in \mathcal{X}}f(\bx,\bar{\by}_T)$ and where $\Regret^{\mathcal{X}}_T(\bx)=\sum_{t=1}^T \ell_t(\bx_t) - \sum_{t=1}^{T} \ell_t(\bx)$.
\end{corollary}
This alternative seems interesting from a theoretical point of view because it allows us to avoid the complexity of learning in the $\mathcal{Y}$ space, for example, removing the dependence on its dimension.

\begin{remark}
\label{remark:best-response}
The best-response algorithm requires stronger access to the function $f$ because we need to solve an optimization problem exactly. Hence, for example, we cannot just have access to subgradients of $f$ to use it.
\end{remark}

Of course, an analogous result can be stated using best response for the $\mathcal{X}$-player and an \ac{OCO} algorithm for the $\mathcal{Y}$-player, as in Algorithm~\ref{alg:minmax_to_oco3}.

\begin{algorithm}[h]
\caption{Saddle-Point Optimization with OCO and $\mathcal{X}$-Best Response}
\label{alg:minmax_to_oco3}
\begin{algorithmic}[1]
{
    \REQUIRE{$\by_1 \in \mathcal{Y}$}
    \FOR{$t=1, \dots, T$}
    \STATE{Set $\bx_{t} \in \argmin_{\bx \in \mathcal{X}} \ f(\bx,\by_t)$}
    \STATE{$\mathcal{Y}$-Learner receives $h_t(\by) = - f(\bx_t,\by)$ and produces $\by_{t+1} \in \mathcal{Y}$}
    \ENDFOR
    \RETURN{$\bar{\bx}_T=\frac{1}{T}\sum_{t=1}^T \bx_t$, $\bar{\by}_T=\frac{1}{T}\sum_{t=1}^T \by_t$}
}
\end{algorithmic}
\end{algorithm}
\index{best response|)textbf}

There is a third variant, very common in empirical implementations, especially of Counterfactual Regret Minimization\index{Counterfactual Regret Minimization algorithm} (CFR)~\citep{ZinkevichJBP07}. It is called \textbf{alternation}\index{alternation|(textbf} and it breaks the simultaneous reveal of the actions of the two players. Instead, we use the updated prediction of the first player to construct the loss of the second player.
Empirically, this variant seems to greatly speed up the convergence of the duality gap\index{duality gap}.

\begin{algorithm}[h]
\caption{Saddle-Point Optimization with OCO and Alternation}
\label{alg:minmax_to_oco_alternation}
\begin{algorithmic}[1]
{
    \REQUIRE{$\by_1 \in \mathcal{Y}$}
    \FOR{$t=1, \dots, T$}
    \STATE{$\mathcal{X}$-Learner receives $\ell_t(\bx)=f(\bx,\by_t)$ and produces $\bx_{t+1} \in \mathcal{X}$}
    \STATE{$\mathcal{Y}$-Learner receives $h_t(\by)=-f(\bx_{t+1},\by)$ and produces $\by_{t+1} \in \mathcal{Y}$}
    \ENDFOR
    \RETURN{$\bar{\bx}_T=\frac{1}{T}\sum_{t=1}^T \bx_{t+1}$, $\bar{\by}_T=\frac{1}{T}\sum_{t=1}^T \by_t$}
}
\end{algorithmic}
\end{algorithm}

For this version, Theorem~\ref{thm:minmax_to_oco} does not hold anymore because the terms $f(\bx_t,\by_t)$ and $f(\bx_{t+1},\by_t)$ are now different. However, we can prove a similar guarantee.
\begin{theorem}
\label{thm:minmax_to_oco_alternation}
Let $\mathcal{X}$ and $\mathcal{Y}$ be non-empty closed convex sets.
Let $f:\mathcal{X} \times\mathcal{Y}\to \R$ be convex in the first argument and concave in the second. With the notation in Algorithm~\ref{alg:minmax_to_oco_alternation}, assume that $\arg\min_{\bx\in \mathcal{X}}f(\bx,\bar{\by}_T)$ and $\arg\max_{\by\in \mathcal{Y}}f(\bar{\bx}_T,\by)$ are non-empty.
Then, for any $\bx_T'\in\arg\min_{\bx\in \mathcal{X}}f(\bx,\bar{\by}_T)$ and any $\by_T'\in\arg\max_{\by\in \mathcal{Y}}f(\bar{\bx}_T,\by)$, we have
\begin{align*}
&\max_{\by\in \mathcal{Y}} f(\bar{\bx}_T, \by) - \min_{\bx\in \mathcal{X}} f(\bx,\bar{\by}_T)\\
&\quad \leq \frac{\Regret^{\mathcal{X}}_T(\bx'_T)+\Regret^{\mathcal{Y}}_T(\by'_T)+\sum_{t=1}^{T} (f(\bx_{t+1},\by_t)-f(\bx_{t},\by_t))}{T},
\end{align*}
where $\Regret^{\mathcal{Y}}_T(\by)=\sum_{t=1}^T (h_t(\by_t) - h_t(\by))$, $\Regret^{\mathcal{X}}_T(\bx)=\sum_{t=1}^T (\ell_t(\bx_t) - \ell_t(\bx))$, where $\bx_1$ is the prediction of the $\mathcal{X}$-learner before seeing $\ell_1$.
\end{theorem}
\begin{proof}
Note that $\ell_t(\bx)=f(\bx,\by_t)$, and $h_t(\by)=-f(\bx_{t+1},\by)$. By Jensen's inequality\index{inequality!Jensen's} (Theorem~\ref{thm:jensen}), we have
\begin{align*}
T&(f(\bar{\bx}_T, \by) - f(\bx,\bar{\by}_T))\\
&\leq \sum_{t=1}^{T}f(\bx_{t+1},\by)- \sum_{t=1}^{T}f(\bx,\by_t)\\
&=\sum_{t=1}^{T}f(\bx_{t+1},\by)-\sum_{t=1}^{T}f(\bx_{t+1},\by_t)
+\sum_{t=1}^{T}f(\bx_{t},\by_t)-\sum_{t=1}^{T}f(\bx,\by_t)\\
&\quad +\sum_{t=1}^{T}f(\bx_{t+1},\by_t)-\sum_{t=1}^{T}f(\bx_{t},\by_t)\\
&= \sum_{t=1}^{T} (h_t(\by_t)-h_t(\by))+\sum_{t=1}^{T}(\ell_t(\bx_t)-\ell_t(\bx)) + \sum_{t=1}^{T} (f(\bx_{t+1},\by_t)-f(\bx_{t},\by_t))~.
\end{align*}
Taking $\bx=\bx'_T\in\arg\min_{\bx\in \mathcal{X}} f(\bx,\bar{\by}_T)$ and $\by=\by'_T\in\arg\max_{\by\in \mathcal{Y}} f(\bar{\bx}_T,\by)$, we get the stated result.
\end{proof}

\begin{remark}
In some cases, it is possible to show that the additional term in Theorem~\ref{thm:minmax_to_oco_alternation} is negative, showing a (marginal) improvement to the convergence rate. For example, using \ac{OMD} for the $\mathcal{X}$-player and assuming that the losses it receives are linear, we have that $f(\bx_t,\by_t) = \ell_t(\bx_t) \geq \ell_t(\bx_{t+1})=f(\bx_{t+1},\by_t)$.
\end{remark}
\index{alternation|)textbf}

Next, we will see how to connect saddle-point problems with game theory.

\section{Game-Theory Interpretation of Saddle-Point Problems}
\label{sec:game_theory}
\index{game!two-player zero-sum|(}
An instantiation of a saddle-point problem also has an interpretation in game theory as a \textbf{two-player zero-sum game}. Note that game theory is a vast field, and two-person zero-sum games are only a very small subset of the problems in this domain, and what I describe here is an even smaller subset of this subset of problems.

Game theory studies what happens when self-interested agents interact. By self-interested, we mean that each agent has an ideal state of things they want to reach, which can include a description of what should happen to other agents as well, and works towards this goal.
In two-person games, the players act simultaneously, and then they receive their losses. In particular, the $\mathcal{X}$-player chooses the play $\bx$ and the $\mathcal{Y}$-player chooses the play $\by$, the $\mathcal{X}$-player suffers the loss $f(\bx,\by)$ and the $\mathcal{Y}$-player the loss $-f(\bx,\by)$. It is important to understand that this is only one round, that is, there is only one play for each player.

\begin{remark}
The standard game-theoretic terminology uses payoffs instead of losses, but we will keep using losses for coherence with the \ac{OCO} notation used in this book.
\end{remark}

We consider the so-called \textbf{two-person normal-form games}\index{normal-form game}, that is, when the first player has $n$ possible actions and the second player $m$. A player can use a \textbf{pure strategy}\index{pure strategy|textbf}, that is, a single fixed action, or randomize over a set of actions according to some probability distribution, a so-called \textbf{mixed strategy}\index{mixed strategy|textbf}. In this case, we consider $\mathcal{X} =\Delta^{n-1}$ and $\mathcal{Y} =\Delta^{m-1}$ and they are known as the \textbf{mixed-strategy spaces}\index{mixed-strategy space|textbf} for the two players.
In this setting, for a pair of pure strategies $(\be_i,\be_j)$, the first player receives the loss $f(\be_i,\be_j)$ and the second player $-f(\be_i,\be_j)$, where $\be_i$ is the vector with all zeros but a `1' in position $i$.
The goal of each player is to minimize the received loss. Given the discrete nature of this game, the function $f(\bx,\by)$ is the bilinear function $\bx^\top \bM \by$, where $\bM$ is a matrix with $n$ rows and $m$ columns. Hence, for a pair of mixed strategies $(\bx,\by)$, the \emph{expected} loss of the first player is $\bx^\top \bM \by$ and the one of the second player is $-\bx^\top \bM \by$.

A fundamental concept in game theory is the notion of \textbf{Nash equilibrium}\index{Nash equilibrium|(textbf}. We have a Nash equilibrium if all players are playing their best strategy against the other players' strategies. That is, none of the players has an incentive to change their strategy if the other player does not change it.
For the zero-sum two-person game, this can be formalized by saying that $(\bx^\star,\by^\star)$ is a Nash equilibrium if
\[
(\bx^\star)^\top \bM \by
\leq (\bx^\star)^\top \bM \by^\star
\leq \bx^\top \bM \by^\star, \quad \forall \bx \in \Delta^{n-1}, \by \in \Delta^{m-1}~.
\]
This is \emph{exactly} the definition of saddle point for the function $f(\bx,\by)=\bx^\top \bM \by$ that we gave in the previous section. Given that $f(\bx,\by)=\bx^\top \bM \by$ is continuous, convex in the first argument and concave in the second one, the sets $\mathcal{X} =\Delta^{n-1}$ and $\mathcal{Y} =\Delta^{m-1}$ are convex and compact, we can deduce from Theorem~\ref{thm:minimax} and Theorem~\ref{thm:saddle_equiv_minmax} that a saddle point always exists. Hence, there is always at least one (possibly mixed) Nash equilibrium in two-person zero-sum games. The common value of the minimax and maxmin problem is called \textbf{value of the game}\index{value of the game|textbf} and we will denote it by $v^\star$.

For a zero-sum two-person game, the Nash equilibrium has an immediate interpretation: from the definition above, if the first player uses the strategy $\bx^\star$ then their loss is at most the value of the game $v^\star$, regardless of the strategy of the second player. Analogously, if the second player uses the strategy $\by^\star$ then their loss is at most $-v^\star$, regardless of the strategy of the first player. Both players achieve the value of the game if they both play the Nash strategy. Moreover, even if one of the players announced their strategy in advance to the other player, they would not increase their loss in expectation.

\begin{example}[Cake cutting]
Suppose there is a game between two people: the first player cuts the cake in two, and the second one chooses a piece; the first player receives the piece that was not chosen.
We can formalize it with the following matrix:
\[
\begin{tabular}{c|cc}
 & larger piece & smaller piece\\
\hline
cut evenly & 0 & 0 \\
cut unevenly & 10 & -10
\end{tabular}
\]
When the first player plays action $i$ and the second player action $j$, the first player receives the loss $M_{i,j}$ and the second player receives $-M_{i,j}$.
The losses represent how much less in percentage compared to half of the cake that the first player is receiving. The second player receives the negative of the same number. It should be easy to convince oneself that the strategy pair $(\text{cut evenly}, \text{larger piece})$ is an equilibrium with value of the game of 0.
However, perhaps counterintuitively, this is not the only Nash equilibrium strategy. In fact, for $0.5\leq a\leq 1$, one can verify numerically that any pair $\bx^\star=[1,0]^\top$, $\by^\star=[a,1-a]$ is optimal, where the first player uses a pure strategy and the second player a mixed one.
\end{example}

\begin{example}[Rock-Paper-Scissors]
Let's consider the game of Rock-Paper-Scissors. We describe it with the following matrix:
\[
\begin{tabular}{c|ccc}
 & Rock & Paper & Scissors\\
\hline
Rock & 0 & 1 & -1\\
Paper &-1 & 0 & 1\\
Scissors & 1 & -1 & 0
\end{tabular}
\]
It should be immediate to realize that there are no pure Nash equilibria for this game. However, there is a mixed Nash equilibrium when each player randomizes the action with a uniform probability distribution over the three actions, and the value of the game equals 0.
\end{example}



\begin{figure}[t]
\centering
\begin{tikzpicture}
\begin{axis}[
  width=7cm,
  view={155}{10},
  xlabel={Player 1 $\Pr(\text{head})$},
  ylabel={Player 2 $\Pr(\text{head})$},
  zlabel={Player 1 expected loss},
  xtick={0, 0.5, 1},
  ytick={0, 0.5, 1},
  ztick={-1, -0.5, 0, 0.5, 1},
  grid,
  xmin=0, xmax=1,
  ymin=0, ymax=1,
  zmin=-1, zmax=1,
  colormap={bw}{gray(0cm)=(0.9); gray(1cm)=(0.1)},
  tick style={draw=none}
]
\addplot3[
  surf,
  mesh/rows=21,
] table[
  x=x, y=y, z=z,
  col sep=comma
] {code_for_figs/penny_matching_saddle.csv};
\end{axis}
\end{tikzpicture}
\caption{The saddle point in the Matching Pennies game.}
\label{fig:matching_pennies}
\commentAlt{Figure~\ref{fig:matching_pennies}. Three-dimensional surface for the Matching Pennies game. The horizontal axes are the two players' probabilities of    playing heads, and the vertical axis is Player 1's expected loss, with the saddle point at the mixed equilibrium.}
\end{figure}

\begin{example}[Matching Pennies]
\label{ex:matching_pennies}
In this game, both players show a face of a penny. If the two faces are the same, the first player wins both, otherwise the second player wins both. The associated matrix $\bM$ is
\[
\begin{tabular}{c|cc}
 & head & tail\\
\hline
head & -1 & 1 \\
tail & 1 & -1
\end{tabular}
\]
It is easy to see that the Nash equilibrium is when both players randomize the face to show with equal probability.

In this simple case, we can visualize the saddle point associated with this problem in Figure~\ref{fig:matching_pennies}.
\end{example}

Unless the game is very small, we find Nash equilibria using numerical procedures that typically give us only approximate solutions. Hence, as for $\epsilon$-saddle points, for $\epsilon\geq0$ we also define an \textbf{$\epsilon$-Nash equilibrium}\index{Nash equilibrium!$\epsilon$-} for a zero-sum two-person game when $\bx^\star$ and $\by^\star$ satisfy
\[
(\bx^\star)^\top \bM \by -\epsilon
\leq (\bx^\star)^\top \bM \by^\star
\leq \bx^\top \bM \by^\star + \epsilon, \quad \forall \bx \in \Delta^{n-1}, \by \in \Delta^{m-1}~.
\]
Obviously, any Nash equilibrium is also an $\epsilon$-Nash equilibrium.

From what we said in the previous section, it should be immediate to see how to numerically calculate the Nash equilibrium of a two-person zero-sum game.
In fact, we know that we can use online convex optimization algorithms to find $\epsilon$-saddle points, so we can do the same for an $\epsilon$-Nash equilibrium of two-person zero-sum games. Assuming that the average regret of both players is $\epsilon_T$, Theorem~\ref{thm:minmax_to_oco} says that $\left(\frac{1}{T}\sum_{t=1}^T \bx_t, \frac{1}{T}\sum_{t=1}^T \by_t\right)$ is a $2\epsilon_T$-Nash equilibrium.
\index{Nash equilibrium|)textbf}
\index{game!two-player zero-sum|)}

\subsection{Reducing Extensive to Normal-Form Zero-Sum Games}
\label{sec:extensive_to_normal}

The connection between saddle-point optimization and two-person zero-sum games extends beyond the static case. Many strategic interactions are \emph{sequential}: players move in turns, possibly with knowledge of previous actions. These games are naturally represented as \textbf{extensive-form games}\index{game!extensive form|(}, whose solution concepts can be related to the saddle-point structure of their normal forms.


A \textbf{perfect-information extensive-form zero-sum game} is described by a game tree in which nodes correspond to decision points, edges correspond to actions, and leaves specify the losses. Each non-terminal node is assigned to one of the two players, and all players observe all previous moves. The game is \emph{zero-sum} if the sum of the players' losses at every terminal node is zero.

For example, consider the simple two-level perfect-information game in Figure~\ref{fig:extensive_tree}. Player~1 moves first by choosing between $A$ and $B$. Then, Player~2 observes the choice and selects $L$ or $R$ in one case and $P$ and $Q$ in another case. Each terminal leaf specifies the loss for Player~1 (Player~2's loss is its negation).

\begin{figure}[h]
\centering
\begin{tikzpicture}[
    level distance=1.5cm,
    level 1/.style={sibling distance=3.5cm},
    level 2/.style={sibling distance=1.5cm},
    every node/.style={draw=none, fill=none},
    edge from parent/.style={draw,-latex,thick}
]
\node (root) {}
    child { node (p2a) {}
        child { node[rectangle, draw, fill=gray!10, minimum size=6mm] {$\ell_{A,L}$} edge from parent node[left] {$L$} }
        child { node[rectangle, draw, fill=gray!10, minimum size=6mm] {$\ell_{A,R}$} edge from parent node[right] {$R$} }
        edge from parent node[left] {$A$} }
    child { node (p2b) {}
        child { node[rectangle, draw, fill=gray!10, minimum size=6mm] {$\ell_{B,P}$} edge from parent node[left] {$P$} }
        child { node[rectangle, draw, fill=gray!10, minimum size=6mm] {$\ell_{B,Q}$} edge from parent node[right] {$Q$} }
        edge from parent node[right] {$B$} };

\node at ($(root)+(0,0.01)$) {\small Player 1};
\node at ($(p2a)+(0,0.01)$) {\small Player 2};
\node at ($(p2b)+(0,0.01)$) {\small Player 2};
\end{tikzpicture}
\caption{Example of a perfect-information extensive-form zero-sum game.}
\label{fig:extensive_tree}
\commentAlt{Figure~\ref{fig:extensive_tree}. Game tree for a perfect-information zero-sum game. Player 1 chooses A or B; after observing this, Player 2 chooses between two actions, leading to terminal losses l_{A,L}, l_{A,R}, l_{B,P}, and l_{B,Q}.}
\end{figure}

We can always convert such a game into the normal form by listing all the players' \emph{pure strategies}---that is, complete contingent plans specifying what a player would do at every node where it could move.

In the example above, Player~1 has two possible pure strategies: $\{A,B\}$.
Player~2, however, must decide what to do after each of Player~1's actions, hence has $2 \times 2 = 4$ pure strategies:
\[
\mathcal{Y} = \{(L,P), (L,Q), (R,P), (R,Q)\},
\]
where $(L,P)$ means ``play $L$ after $A$ and $P$ after $B$," etc.

The extensive game can then be rewritten as a normal-form loss matrix\index{normal-form game}
\[
\begin{array}{c|cccc}
& (L,P) & (L,Q) & (R,P) & (R,Q)\\
\hline
A & \ell_{A,L} & \ell_{A,L} & \ell_{A,R} & \ell_{A,R} \\
B & \ell_{B,P} & \ell_{B,Q} & \ell_{B,P} & \ell_{B,Q}
\end{array}
\]
Each entry corresponds to the leaf reached by the combination of both players' strategy profiles. Once this conversion is performed, standard saddle-point and minimax results (like Theorem~\ref{thm:minimax}) apply directly.

It should also be clear from the above example that the conversion from extensive form to the normal form can result in an exponential blowup of the representation of the game.

A foundational result on perfect-information extensive-form games is due to \citet{Zermelo13}.
\begin{theorem}[Zermelo's Theorem]
\label{thm:zermelo}
\index{Zermelo's theorem}
Every finite perfect-information game in extensive form has a pure-strategy Nash equilibrium.
\end{theorem}



When players move without observing all past actions, the game has \textbf{imperfect information}. In this case, decision nodes are grouped into \textbf{information sets}, and a player must take the same action at all nodes within an information set. This also means that we will have the same set of actions in all nodes within an information set.
A classic example is poker, where a player knows their own cards but not the opponent's.

Even imperfect-information extensive-form games can still be expressed as a normal-form game\index{normal-form game}: each player's pure strategies again correspond to complete contingent plans, including choices at all information sets.

\begin{figure}[h]
\centering
\begin{tikzpicture}[
    level distance=1.5cm,
    level 1/.style={sibling distance=3.5cm},
    level 2/.style={sibling distance=1.5cm},
    every node/.style={draw=none, fill=none},
    edge from parent/.style={draw,-latex,thick}
]
\node (root) {}
    child { node (p2a) {}
        child { node[rectangle, draw, fill=gray!10, minimum size=6mm] {$\ell_{L,A}$} edge from parent node[left] {$A$} }
        child { node[rectangle, draw, fill=gray!10, minimum size=6mm] {$\ell_{L,B}$} edge from parent node[right] {$B$} }
        edge from parent node[left] {$L$} }
    child { node (p2b) {}
        child { node[rectangle, draw, fill=gray!10, minimum size=6mm] {$\ell_{R,A}$} edge from parent node[left] {$A$} }
        child { node[rectangle, draw, fill=gray!10, minimum size=6mm] {$\ell_{R,B}$} edge from parent node[right] {$B$} }
        edge from parent node[right] {$R$} };

\node at ($(root)+(0,0.00)$) {\small Player 1};
\node (labA) at ($(p2a)+(0,0.0)$) {\small Player 2};
\node (labB) at ($(p2b)+(0,0.0)$) {\small Player 2};

\draw[dashed, thick]
    ($(labA.east)+(0.05,0)$) -- ($(labB.west)+(-0.05,0)$)
    node[midway, above] {\scriptsize same information set};

\end{tikzpicture}
\caption{An imperfect-information game: Player 2 does not observe Player 1's action ($L$ or $R$).}
\label{fig:imperfect_tree}
\commentAlt{Figure~\ref{fig:imperfect_tree}. Game tree for an imperfect-information game. Player 1 chooses L or R; Player 2 then chooses A or B, but the two Player 2 decision nodes are connected by a dashed information set, indicating that Player 2 does not observe Player 1's action.}
\end{figure}

In Figure~\ref{fig:imperfect_tree}, Player~2's two decision nodes are connected by a dashed line, indicating that they form the same information set. Player~2 must choose the same move ($A$ or $B$) regardless of whether the game reached the left or right subtree.

The corresponding normal-form\index{normal-form game} representation again lists all pure strategies. However, now these plans must respect the information constraints. The resulting loss matrix is
\[
\begin{array}{c|cc}
& A & B\\
\hline
L & \ell_{L,A} & \ell_{L,B}\\
R & \ell_{R,A} & \ell_{R,B}
\end{array}
\]
where Player~2's choice ($A$ or $B$) must be the same across both branches.

\index{game!extensive form|)}

\section{Application: Boosting as a Two-Person Game}

\index{boosting|(textbf}
We will now show that the \emph{boosting} problem can also be seen as the solution of a zero-sum two-person game.

Let $\mathcal{S}=\{\bz_j,y_j\}_{j=1}^m$ be a training set of $m$ feature/label pairs, where $\bz_j \in \R^d$ and $y_j \in \{-1,1\}$.
Let $\mathcal{H} = \{h_i\}_{i=1}^n$ be a set of $n$ functions $h_i:\R^d \to \{-1,1\}$.
The aim of boosting is to find a combination of the functions in $\mathcal{H}$ that has arbitrarily low misclassification error on $\mathcal{S}$. Of course, this is not always possible. However, we will make a (strong) assumption: we assume that it is always possible to find a function $h_i \in \mathcal{H}$ such that its misclassification error over $\mathcal{S}$ weighted by any probability distribution is better than chance by a constant $\gamma>0$.
We now show that this assumption guarantees that the boosting problem is solvable!

First, we construct a matrix of the misclassifications for each function: $\bM \in \R^{n \times m}$ where
\[
M_{i,j}=\begin{cases} 1, & \text{if } h_i(\bz_j)\neq y_j\\ 0, & \text{otherwise.}\end{cases}
\]
Setting $\mathcal{X} =\Delta^{n-1}$ and $\mathcal{Y} =\Delta^{m-1}$, we write the saddle-point problem/two-person zero-sum game
\[
\min_{\bp \in \Delta^{n-1}} \max_{\bq \in \Delta^{m-1}} \ \bp^\top \bM \bq~.
\]
Given the definition of the matrix $\bM$, this is equivalent to
\[
\min_{\bp \in \Delta^{n-1}} \max_{\bq \in \Delta^{m-1}} \ \sum_{i=1}^n \sum_{j=1}^m p_i q_j \indevent{\{h_i(\bz_j)\neq y_j\}}~.
\]

Let's now formalize the assumption on the functions: we assume the existence of a \textbf{weak learning oracle}\index{weak learning oracle|textbf} that, for any $\bq \in \Delta^{m-1}$, returns $i^\star$ such that
\begin{equation}
\label{eq:weak_oracle_assumption}
\be_{i^\star}^\top \bM \bq
= \sum_{j=1}^m q_j \indevent{\{h_{i^\star}(\bz_j)\neq y_j\}}
\leq \frac{1}{2} - \gamma,
\end{equation}
where $\gamma>0$. In words, $i^\star$ is the index of the function in $\mathcal{H}$ that gives a $\bq$-weighted error better than chance.
Moreover, given that
\[
\min_{\bp \in \Delta^{n-1}} \bp^\top \bM \bq
\leq \be_{i^\star}^\top \bM \bq
\leq \frac{1}{2} - \gamma
\]
and
\[
v^\star
= \max_{\bq \in \Delta^{m-1}} \min_{\bp \in \Delta^{n-1}} \bp^\top \bM \bq,
\]
we have that the value of the game satisfies $v^\star \leq \frac12 - \gamma < \frac12$.
Using the inequality on the value of the game and the fact that the Nash equilibrium exists, we obtain that there exists $\bp^\star \in \Delta^{n-1}$ such that
\begin{equation}
\label{eq:boosting_eq1}
\sum_{i=1}^n p^\star_i \indevent{\{h_i(\bz_j) \neq y_j\}}
= (\bp^\star)^\top \bM \be_j
\leq v^\star
\leq \frac12 - \gamma
< \frac12, \quad \forall j=1,\dots,m~.
\end{equation}
In words, this means that every sample $\bz_j$ is misclassified by less than half of the functions $h_i$ when weighted by $\bp^\star$. Hence, \emph{we can correctly classify all the samples using a weighted majority vote rule where the weights over the function $h_i$ are $\bp^\star$.}
This means that we can learn a perfect classifier rule using weak learners, through the solution of a minimax game. So, our job is to find a way to calculate this optimal distribution $\bp^\star$ on the functions.

Given what we have said so far, a natural strategy is to use \ac{OCO} algorithms.
In particular, we can use Algorithm~\ref{alg:minmax_to_oco3}, where in each round the $\mathcal{X}$-player is the weak learning oracle, that knows the play $\bq_t$ by the $\mathcal{Y}$-Learner and produces a best response\index{best response} or an approximate best response that returns a weighted classification error less than or equal to $\frac12 -\gamma$. In particular, from the assumption on the weak-learnability oracle in \eqref{eq:weak_oracle_assumption}, the $\mathcal{X}$-learner can always find a classifier with small enough $\bq_t$-weighted training error. Instead, the $\mathcal{Y}$-player is an \ac{OCO} algorithm.
Specialized to our setting, we have Algorithm~\ref{alg:boosting_to_oco}. In words, the $\mathcal{X}$-player looks for the function that has a small enough weighted misclassification loss, where the weights are constructed by the $\mathcal{Y}$-player.

\begin{algorithm}[t]
\caption{Boosting through Online Convex Optimization}
\label{alg:boosting_to_oco}
\begin{algorithmic}[1]
{
    \STATE{$q_{1,j}=1/m, j=1,\dots,m$}
    \FOR{$t=1, \dots, T$}
    \STATE{$\mathcal{X}$-Learner has two possible strategies:\\
    \hspace{1em} (i) (Approximate best response) Set $i_t$ such that $\be_{i_t}^\top \bM \bq_t = \sum_{j=1}^m q_{t,j} \indevent{\{h_{i_t}(\bz_j)\neq y_j\}}\leq 1/2-\gamma$\\
    \hspace{1em} (ii) (Best response\index{best response}) Set $i_t=\argmin_{i =1, \dots, n} \ \be_{i}^\top \bM \bq_t = \argmin_{i =1, \dots, n} \ \sum_{j=1}^m q_{t,j} \indevent{\{h_{i}(\bz_j)\neq y_j\}}$}
    \STATE{$\mathcal{Y}$-Learner receives $\ell_t(\bq) =-\be_{i_t}^\top \bM \bq = - \sum_{j=1}^m q_{j} \indevent{\{h_{i_t}(\bz_j)\neq y_j\}}$ and produces $\bq_{t+1} \in \Delta^{m-1}$}
    \ENDFOR
    \RETURN{$\bar{h}_T(\bz)=\frac{1}{T}\sum_{t=1}^T h_{i_t}(\bz)$}
}
\end{algorithmic}
\end{algorithm}

Let's show a guarantee on the misclassification error of this algorithm.
From the second equality of Theorem~\ref{thm:minmax_to_oco}, for any $\bq \in \Delta^{m-1}$, we have
\[
\frac{1}{T} \sum_{t=1}^T \be_{i_t}^\top \bM \bq
= \frac{1}{T} \sum_{t=1}^T \be_{i_t}^\top \bM \bq_t + \frac{\Regret^{\mathcal{Y}}_T(\bq)}{T}~.
\]
From both strategies of the $\mathcal{X}$-player, we have $\frac{1}{T} \sum_{t=1}^T \be_{i_t}^\top \bM \bq_t\leq \frac12 - \gamma$.
Moreover, choosing $\bq=\be_j$ we have $\frac{1}{T} \sum_{t=1}^T \be_{i_t}^\top \bM \bq = \frac1T \sum_{t=1}^T \indevent{\{h_{i_t}(\bz_j) \neq y_j\}}$. Putting everything together, we have
\[
\frac1T \sum_{t=1}^T \indevent{\{h_{i_t}(\bz_j) \neq y_j\}}
\leq \frac12 - \gamma + \frac{\Regret^{\mathcal{Y}}_T(\be_j)}{T}, \quad \forall j=1,\dots,m~.
\]
If $\frac{\Regret^{\mathcal{Y}}_T(\be_j)}{T}< \gamma$, less than half of the functions selected by the boosting procedure will make a mistake on $(\bz_j,y_j)$. Using the majority rule of the selected functions as a predictor, we will have 0 mistakes on the training samples.

Observe that in this scheme, we approximate $p^\star_i$ with the frequency with which $i_t$ is equal to $i$.

\begin{remark}
The use of a best response\index{best response} or approximate best response for the $\mathcal{X}$-player avoids the presence of its regret in the guarantee. In turn, this makes the number of rounds to reach training error 0 independent of the number of functions in $\mathcal{H}$.
\end{remark}

Let's now instantiate this framework with a specific \ac{OCO} algorithm.
For example, using \ac{EG} as algorithm for the $\mathcal{Y}$-player, we have that $\Regret^{\mathcal{Y}}_T(\be_j)= \mathcal{O}(\sqrt{T \ln m})$ as $T\to \infty$ for any $j=1,\dots,m$, that implies that after $T = \mathcal{O}(\frac{\ln m}{\gamma^2})$ rounds the error on the training samples of the classifier that uses the sign of $\bar{h}_T$ is exactly 0. This is exactly the same guarantee achieved by AdaBoost~\citep{FreundS95,FreundS97}.

\noindent\textbf{Boosting and Margins.}
What happens if we keep boosting after the training error reaches 0? It turns out we maximize the \textbf{margin}\index{margin!in boosting|textbf}, defined as $y_j \frac{1}{T}\sum_{t=1}^T h_{i_t}(\bz_j)$. In fact, given that $\indevent{\{h_{i_t}(\bz_j)\neq y_j\}} = \frac{1-y_j h_{i_t}(\bz_j)}{2}$, we have for any $j =1, \dots, m$
\[
2 \gamma - \frac{2 \Regret^{\mathcal{Y}}_T(\be_j)}{T}
\leq y_j \frac{1}{T} \sum_{t=1}^T h_{i_t}(\bz_j)
= y_j \bar{h}_T(\bz_j)~.
\]
Hence, when the number of rounds goes to infinity, the minimum margin on the training samples reaches at least $2 \gamma$.
The property of boosting of guaranteeing a minimum margin has been used as an explanation of the fact that in boosting additional rounds after the training error reaches 0 often keep improving the test error on test samples coming i.i.d. from the same distribution that generated the training samples.

The above reduction does not tell us how the training error precisely behaves. However, we can get this information by changing the \ac{LEA} algorithm. Indeed, we have seen in Chapter~\ref{ch:parameterfree} that there are \ac{LEA} algorithms provably better than \ac{EG}. We can use the parameter-free \ac{LEA} algorithm in Section~\ref{sec:shifted_kt} that guarantees a regret $\mathcal{O}(\sqrt{T \cdot \KL(\bq;\bpi)})$ as $T\to\infty$ for a prior $\bpi$, where we set $\bpi$ equal to the uniform prior. These kinds of algorithms allow us to upper bound the fraction of mistakes after any $T$ iterations. Denote by $\mathcal{I}_T$ the set of samples misclassified by $\sign(\bar{h}_T)$ after $T$ iterations of boosting, let $k=|\mathcal{I}_T|$, and, if $k>0$, set $\bq_k$ as the vector whose coordinate $j$ is $1/k$ if $j\in \mathcal{I}_T$ and $0$ otherwise.
Hence, we have
\begin{equation}
\label{eq:better_bound_boosting}
2 \gamma - \frac{2 \Regret^{\mathcal{Y}}_T(\bq_k)}{T}
\leq \frac{1}{k}\sum_{j\in \mathcal{I}_T} y_j \bar{h}_T(\bz_j)
\leq 0~.
\end{equation}
Using the expression of the regret, we have that the fraction of misclassified samples $\frac{k}{m}$ is bounded by $\mathcal{O}(\exp(-\gamma^2 T))$ as $T\to\infty$. That is, the fraction of misclassification error goes to zero exponentially fast in the number of boosting rounds. Again, a similar guarantee was proved for AdaBoost, but here we quickly derived it using a reduction from boosting to \ac{LEA}, passing through zero-sum two-person games.
\index{boosting|)textbf}

\section{Faster Rates Through Optimism}
\label{sec:saddle_point_optimism}

We saw that it is possible to solve convex/concave saddle-point optimization problems using two \ac{OCO} algorithms playing against each other. We obtained a rate of convergence for the duality gap\index{duality gap} of $\mathcal{O}(1/\sqrt{T})$.
Now, we show that if the function is smooth, we can achieve a faster rate using \emph{optimistic} algorithms.

Assume that $f:\mathcal{X} \times\mathcal{Y} \to \R$ is smooth\index{function!smooth} in an open set containing its domain, in the sense that for any $\bx,\bx'\in \mathcal{X}$ and $\by,\by' \in \mathcal{Y}$, we have
\begin{align}
\|\nabla_{\bx} f(\bx,\by)-\nabla_{\bx} f(\bx',\by)\|_{\mathcal{X},\star} &\leq L_{\mathcal{X}\mathcal{X}} \|\bx-\bx'\|_{\mathcal{X}} \label{eq:minmax_smooth_1},\\
\|\nabla_{\bx} f(\bx,\by)-\nabla_{\bx} f(\bx,\by')\|_{\mathcal{X},\star} &\leq L_{\mathcal{X}\mathcal{Y}} \|\by-\by'\|_{\mathcal{Y}} \label{eq:minmax_smooth_2},\\
\|\nabla_{\by} f(\bx,\by)-\nabla_{\by} f(\bx',\by)\|_{\mathcal{Y},\star} &\leq L_{\mathcal{X}\mathcal{Y}} \|\bx-\bx'\|_{\mathcal{X}} \label{eq:minmax_smooth_3},\\
\|\nabla_{\by} f(\bx,\by)-\nabla_{\by} f(\bx,\by')\|_{\mathcal{Y},\star} &\leq L_{\mathcal{Y}\mathcal{Y}} \|\by-\by'\|_{\mathcal{Y}}, \label{eq:minmax_smooth_4}
\end{align}
where $\nabla_{\bx}$ and $\nabla_{\by}$ denote the gradients with respect to the first and second variable respectively, and we have denoted by $\|\cdot\|_{\mathcal{X}}$ and $\|\cdot\|_\mathcal{Y}$ the norms on the ambient spaces of $\mathcal{X}$ and $\mathcal{Y}$ respectively, while the norms with the $\star$ are their duals.
\begin{remark}
At this point, one might be tempted to consider the maximum between the three quantities ``to simplify the math'', but the units are different, and it does not make any sense to take the maximum between quantities with different units.
\end{remark}

Let's use two online algorithms again to solve the saddle-point problem.
However, instead of using two standard no-regret algorithms, we will use two optimistic ones, as the ones we saw in Sections~\ref{sec:optimistic_omd} and~\ref{sec:optimistic_ftrl}.
We will use the same strategy and proof of the algorithm in Section~\ref{sec:oftrl_gradual_variations}, that is, we will use the previous observed gradient as a prediction for the next one.

For example, use two Optimistic \ac{FTRL} algorithms with fixed strongly convex\index{function!strongly convex} regularizers and hint at time $t$ constructed using the previous observed gradient: $\tilde{\ell}_t(\bx)=\langle \bg_{t-1}, \bx\rangle$ where we set $\bg_0=0$, see Algorithm~\ref{alg:minmax_to_optimistic_ftrl}. We now show that these hints allow us to cancel out terms when we consider the sum of the regrets and obtain a faster rate of $\mathcal{O}(1/T)$ rather than just $\mathcal{O}(1/\sqrt{T})$.

\begin{algorithm}[t]
\caption{Solving Saddle-Point Problems with Optimistic \ac{FTRL}}
\label{alg:minmax_to_optimistic_ftrl}
\begin{algorithmic}[1]
{
    \REQUIRE{$\mathcal{X}$ and $\mathcal{Y}$ non-empty closed convex sets, regularizers  $\psi_\mathcal{X}$ and $\psi_\mathcal{Y}$, $\lambda_{\mathcal{X}}> 0$, $\lambda_{\mathcal{Y}}> 0$}
    \STATE{$\bg_{\mathcal{X},0}=\boldsymbol{0}, \bg_{\mathcal{Y},0}=\boldsymbol{0}$}
    \FOR{$t=1, \dots, T$}
    \STATE{$\bx_t = \argmin_{\bx \in \mathcal{X}} \  \psi_\mathcal{X}(\bx) + \langle\bg_{\mathcal{X},t-1},\bx\rangle+\sum_{i=1}^{t-1} \langle\bg_{\mathcal{X},i}, \bx\rangle$}
    \STATE{$\by_t = \argmin_{\by \in \mathcal{Y}} \  \psi_\mathcal{Y}(\by) + \langle\bg_{\mathcal{Y},t-1},\by\rangle+\sum_{i=1}^{t-1} \langle \bg_{\mathcal{Y},i}, \by\rangle$}
    \STATE{Set $\bg_{\mathcal{X},t} = \nabla_{\bx} f(\bx_t,\by_t)$}
    \STATE{Set $\bg_{\mathcal{Y},t} = -\nabla_{\by} f(\bx_t,\by_t)$}
    \ENDFOR
    \RETURN{$\bar{\bx}_T=\frac{1}{T}\sum_{t=1}^T \bx_t$, $\bar{\by}_T=\frac{1}{T}\sum_{t=1}^T \by_t$}
}
\end{algorithmic}
\end{algorithm}

From the regret of Optimistic \ac{FTRL} in Theorem~\ref{thm:ftrl_optimistic}, for the $\mathcal{X}$-player, for any $\bu \in \mathcal{X}$, we have
\begin{align*}
\sum_{t=1}^T \langle \bg_{\mathcal{X},t}, \bx_t-\bu\rangle
&\leq \psi_\mathcal{X}(\bu) - \psi_\mathcal{X}(\bx_1) \\
&\quad + \sum_{t=1}^T \left(\langle \bg_{\mathcal{X},t} - \bg_{\mathcal{X},t-1}, \bx_t - \bx_{t+1}\rangle -\frac{\lambda_{\mathcal{X}}}{2} \|\bx_t-\bx_{t+1}\|^2_\mathcal{X}\right)~.
\end{align*}
From the Fenchel--Young's inequality\index{inequality!Fenchel--Young's}, we have
\begin{align*}
&\langle \bg_{\mathcal{X},t} - \bg_{\mathcal{X},t-1}, \bx_t - \bx_{t+1}\rangle
- \frac{\lambda_{\mathcal{X}}}{2} \|\bx_t-\bx_{t+1}\|^2_\mathcal{X}\\
&\quad\leq \frac{\lambda_{\mathcal{X}}}{4}\|\bx_t - \bx_{t+1} \|^2_\mathcal{X}+ \frac{1}{\lambda_{\mathcal{X}}} \|\bg_{\mathcal{X},t} - \bg_{\mathcal{X},t-1}\|^2_{\mathcal{X},\star} - \frac{\lambda_{\mathcal{X}}}{2} \|\bx_t-\bx_{t+1}\|^2_\mathcal{X}\\
&\quad =\frac{1}{\lambda_{\mathcal{X}}}\|\bg_{\mathcal{X},t} - \bg_{\mathcal{X},t-1}\|^2_{\mathcal{X},\star} - \frac{\lambda_{\mathcal{X}}}{4} \|\bx_t-\bx_{t+1}\|^2_\mathcal{X}~.
\end{align*}
Note that there are multiple choices of the coefficient in the Fenchel--Young's inequality\index{inequality!Fenchel--Young's}, but without additional information, all choices are equally good.

Now, using the smoothness assumption, for $t\geq 2$ we have
\begin{align*}
&\|\bg_{\mathcal{X},t} - \bg_{\mathcal{X},t-1}\|^2_{\mathcal{X},\star}\\
&= \|\nabla_{\bx} f(\bx_t,\by_t) - \nabla_{\bx} f(\bx_{t-1}, \by_{t-1})\|^2_{\mathcal{X},\star} \\
&\leq \left(\|\nabla_{\bx} f(\bx_t,\by_t) - \nabla_{\bx} f(\bx_{t-1}, \by_{t})\|_{\mathcal{X},\star} + \|\nabla_{\bx} f(\bx_{t-1},\by_t) - \nabla_{\bx} f(\bx_{t-1}, \by_{t-1})\|_{\mathcal{X},\star}\right)^2 \\
&\leq 2L_{\mathcal{X}\mathcal{X}}^2 \|\bx_{t-1}-\bx_{t}\|^2_\mathcal{X} + 2L^2_{\mathcal{X}\mathcal{Y}} \|\by_{t-1} - \by_{t}\|^2_\mathcal{Y}~.
\end{align*}
We can proceed in the exact same way for the $\mathcal{Y}$-player too.

Summing the regret of the two algorithms, we have
\begin{align*}
\sum_{t=1}^T (f(\bx_t,\by) -  f(\bx,\by_t))
&\leq \psi_\mathcal{X}(\bx) - \psi_\mathcal{X}(\bx_1) + \psi_\mathcal{Y}(\by) - \psi_\mathcal{Y}(\by_1)\\
&\quad+ \frac{\|\bg_{\mathcal{X},1}\|^2_{\mathcal{X},\star}}{\lambda_{\mathcal{X}}} + \frac{\|\bg_{\mathcal{Y},1}\|^2_{\mathcal{Y},\star}}{\lambda_{\mathcal{Y}}}\\
&\quad + \sum_{t=2}^T \left(\frac{2L_{\mathcal{X}\mathcal{X}}^2}{\lambda_{\mathcal{X}}}+\frac{2L^2_{\mathcal{X}\mathcal{Y}}}{\lambda_{\mathcal{Y}}}-\frac{\lambda_{\mathcal{X}}}{4}\right) \|\bx_t-\bx_{t-1}\|^2_\mathcal{X}\\
&\quad + \sum_{t=2}^T \left(\frac{2L^2_{\mathcal{Y}\mathcal{Y}}}{\lambda_{\mathcal{Y}}}+\frac{2L^2_{\mathcal{X}\mathcal{Y}}}{\lambda_{\mathcal{X}}}-\frac{\lambda_{\mathcal{Y}}}{4}\right) \|\by_t-\by_{t-1}\|^2_\mathcal{Y}~.
\end{align*}
Choosing $\lambda_\mathcal{X} \geq 2 \sqrt{2} (L_{\mathcal{X}\mathcal{X}}+L_{\mathcal{X}\mathcal{Y}} \alpha)$ and $\lambda_{\mathcal{Y}}\geq 2 \sqrt{2}(L_{\mathcal{Y}\mathcal{Y}}+L_{\mathcal{X}\mathcal{Y}}/\alpha)$ for any $\alpha>0$ kills all the terms in the sum.
In fact, we have
\[
\frac{2L_{\mathcal{X}\mathcal{X}}^2}{\lambda_{\mathcal{X}}}+\frac{2L^2_{\mathcal{X}\mathcal{Y}}}{\lambda_{\mathcal{Y}}}
\leq \frac{2L_{\mathcal{X}\mathcal{X}}^2}{2\sqrt{2} L_{\mathcal{X}\mathcal{X}}}+\frac{2L^2_{\mathcal{X}\mathcal{Y}} \alpha}{2 \sqrt{2} L_{\mathcal{X}\mathcal{Y}}}
\leq \frac{\lambda_{\mathcal{X}}}{4},
\]
and similarly for the other term.
One might wonder why we need to introduce $\alpha$ and if it can be just set to 1. However, $\alpha$ has units, and it allows the sum of the smoothness coefficients with different units, so it is better to keep it around to remember it.

Assuming that the regularizers are bounded over $\mathcal{X}$ and $\mathcal{Y}$ and using the usual online-to-batch conversion, we have that the duality gap\index{duality gap} evaluated at the pair $\left(\frac{1}{T}\sum_{t=1}^T \bx_t, \frac{1}{T}\sum_{t=1}^T \by_t\right)$ goes to zero as $\mathcal{O}(1/T)$ when $T\to\infty$.

Overall, we can state the following theorem.
\begin{theorem}
\label{thm:minmax_to_optimistic_ftrl}
Let $\mathcal{X}$ and $\mathcal{Y}$ be non-empty closed convex sets.
With the notation in Algorithm~\ref{alg:minmax_to_optimistic_ftrl}, let $f:\mathcal{X} \times\mathcal{Y}\to \R$ be convex in the first argument and concave in the second, satisfying assumptions~\eqref{eq:minmax_smooth_1}-\eqref{eq:minmax_smooth_4}. For a fixed $\alpha>0$, let $\lambda_\mathcal{X} \geq 2 \sqrt{2} (L_{\mathcal{X}\mathcal{X}}+L_{\mathcal{X}\mathcal{Y}} \alpha)$ and $\lambda_{\mathcal{Y}}\geq 2 \sqrt{2}(L_{\mathcal{Y}\mathcal{Y}}+L_{\mathcal{X}\mathcal{Y}}/\alpha)$. Let $\psi_\mathcal{X}:\mathcal{X}\to\R$ be $\lambda_{\mathcal{X}}$-strongly convex with respect to $\|\cdot\|_\mathcal{X}$ and $\psi_\mathcal{Y}:\mathcal{Y}\to\R$ be $\lambda_{\mathcal{Y}}$-strongly convex with respect to $\|\cdot\|_\mathcal{Y}$.
Assume that $\arg\max_{\by\in \mathcal{Y}}f(\bar{\bx}_T,\by)$ and $\arg\min_{\bx\in \mathcal{X}}f(\bx,\bar{\by}_T)$ are non-empty.
Then, we have
\begin{align*}
&\max_{\by\in \mathcal{Y}} f(\bar{\bx}_T, \by) - \min_{\bx\in \mathcal{X}} f(\bx,\bar{\by}_T)\\
&\quad\leq \frac{\psi_\mathcal{X}(\bx'_T)-\psi_\mathcal{X}(\bx_1)+\psi_\mathcal{Y}(\by'_T)-\psi_\mathcal{Y}(\by_1)+\frac{\|\bg_{\mathcal{X},1}\|^2_{\mathcal{X},\star}}{\lambda_{\mathcal{X}}}+\frac{\|\bg_{\mathcal{Y},1}\|^2_{\mathcal{Y},\star}}{\lambda_{\mathcal{Y}}}}{T},
\end{align*}
for any $\bx_T'\in\arg\min_{\bx\in \mathcal{X}}f(\bx,\bar{\by}_T)$ and $\by_T'\in\arg\max_{\by\in \mathcal{Y}}f(\bar{\bx}_T,\by)$.
\end{theorem}

Looking back at the proof of the algorithm, we have a faster convergence because the regret of one player depends on the ``stability'' of the other player, measured by the terms $\|\bx_t- \bx_{t-1}\|_\mathcal{X}^2$ and $\|\by_t- \by_{t-1}\|_\mathcal{Y}^2$. Hence, we have a sort of stabilization loop in which the stability of one algorithm makes the other more stable, which in turn stabilizes the first one even more. Indeed, we can also show that the regret of the two algorithms is not growing over time. Note that such a result cannot be obtained just by looking at the fact that the sum of the regret does not grow over time.

In fact, setting for example $\lambda_\mathcal{X} \geq 4 \sqrt{2} (L_{\mathcal{X}\mathcal{X}}+L_{\mathcal{X}\mathcal{Y}} \alpha)$ and $\lambda_{\mathcal{Y}}\geq 4 \sqrt{2}(L_{\mathcal{Y}\mathcal{Y}}+L_{\mathcal{X}\mathcal{Y}}/\alpha)$, we have that \[
\frac{2L^2_{\mathcal{Y}\mathcal{Y}}}{\lambda_{\mathcal{Y}}}+\frac{2L^2_{\mathcal{X}\mathcal{Y}}}{\lambda_{\mathcal{X}}}-\frac{\lambda_{\mathcal{Y}}}{4}\leq -\frac{\lambda_{\mathcal{Y}}}{8}
\]
and
\[
\frac{2L_{\mathcal{X}\mathcal{X}}^2}{\lambda_{\mathcal{X}}}+\frac{2L^2_{\mathcal{X}\mathcal{Y}}}{\lambda_{\mathcal{Y}}}-\frac{\lambda_{\mathcal{X}}}{4} \leq -\frac{\lambda_{\mathcal{X}}}{8}~.
\]
Hence, using the fact that the existence of a saddle point $(\bx^\star, \by^\star)$ guarantees that $f(\bx_t,\by^\star) - f(\bx^\star,\by_t)\geq 0$, we have
\begin{align}
&\sum_{t=2}^T\left(\frac{\lambda_{\mathcal{X}}}{8} \|\bx_t-\bx_{t-1}\|^2_\mathcal{X} + \frac{\lambda_{\mathcal{Y}}}{8} \|\by_t-\by_{t-1}\|^2_\mathcal{Y}\right) \nonumber\\
&\quad\leq \psi_\mathcal{X}(\bx^\star)-\psi_\mathcal{X}(\bx_1)+\psi_\mathcal{Y}(\by^\star)-\psi_\mathcal{Y}(\by_1) +\frac{\|\bg_{\mathcal{X},1}\|^2_{\mathcal{X},\star}}{\lambda_{\mathcal{X}}}+\frac{\|\bg_{\mathcal{Y},1}\|^2_{\mathcal{Y},\star}}{\lambda_{\mathcal{Y}}}~. \label{eq:oftrl_minmax_bounded_sum}
\end{align}
Plugging this guarantee back into the regret of each algorithm, we have that their regret is bounded and independent of $T$.
From \eqref{eq:oftrl_minmax_bounded_sum}, we also have that $\|\bx_t-\bx_{t-1}\|^2_\mathcal{X}$ and $\|\by_t-\by_{t-1}\|^2_\mathcal{Y}$ converge to 0. Hence, the algorithms are getting more and more stable over time, even if they use fixed regularizers.

\begin{remark}
Note that the fact that $\lim_{t\to \infty} \|\bx_t-\bx_{t-1}\|_\mathcal{X} =0$ is not enough to infer that $\bx_t$ is converging. In fact, it is enough to consider the case of $x_t = \ln t$ on an unbounded domain.
\end{remark}

\noindent\textbf{Version with Optimistic \ac{OMD}.}
The exact same reasoning holds for Optimistic \ac{OMD}, because the key terms of its regret bound are exactly the same as those of Optimistic \ac{FTRL}.
To better show this fact, we instantiate the Optimistic \ac{OMD}\index{Online Mirror Descent algorithm!optimistic} with stepsizes equal to $\frac{1}{\lambda_{\mathcal{X}}}$ and $\frac{1}{\lambda_{\mathcal{Y}}}$ for $\mathcal{X}$-player and $\mathcal{Y}$-player respectively.
Following the same reasoning as above and the regret bound of Optimistic \ac{OMD}, we obtain the following theorem.

\begin{theorem}
\label{thm:minmax_to_optimistic_omd}
Let $\mathcal{X}$ and $\mathcal{Y}$ be non-empty closed convex sets.
With the notation in Algorithm~\ref{alg:minmax_to_optimistic_omd}, let $f:\mathcal{X} \times\mathcal{Y}\to \R$ be convex in the first argument and concave in the second, satisfying assumptions~\eqref{eq:minmax_smooth_1}-\eqref{eq:minmax_smooth_4}. For a fixed $\alpha>0$, let $\lambda_\mathcal{X} \geq 2 \sqrt{2} (L_{\mathcal{X}\mathcal{X}}+L_{\mathcal{X}\mathcal{Y}} \alpha)$ and $\lambda_{\mathcal{Y}}\geq 2 \sqrt{2}(L_{\mathcal{Y}\mathcal{Y}}+L_{\mathcal{X}\mathcal{Y}}/\alpha)$. Let $\psi_\mathcal{X}:\mathcal{X}\to\R$ be $1$-strongly convex with respect to $\|\cdot\|_\mathcal{X}$ and $\psi_\mathcal{Y}:\mathcal{Y}\to\R$ be $1$-strongly convex with respect to $\|\cdot\|_\mathcal{Y}$. Also, assume that $\psi_\mathcal{X}$ and $\psi_\mathcal{Y}$ are differentiable in the interior of their domains.
Assume that $\arg\max_{\by\in \mathcal{Y}}f(\bar{\bx}_T,\by)$ and $\arg\min_{\bx\in \mathcal{X}}f(\bx,\bar{\by}_T)$ are non-empty.
Then, we have
\[
\max_{\by\in \mathcal{Y}} f(\bar{\bx}_T, \by) - \min_{\bx\in \mathcal{X}} f(\bx,\bar{\by}_T)
\leq \frac{B_{\psi_\mathcal{X}}(\bx'_T;\bx_1)+B_{\psi_\mathcal{Y}}(\by'_T;\by_1)+\frac{\|\bg_{\mathcal{X},1}\|^2_{\mathcal{X},\star}}{\lambda_{\mathcal{X}}}+\frac{\|\bg_{\mathcal{Y},1}\|^2_{\mathcal{Y},\star}}{\lambda_{\mathcal{Y}}}}{T},
\]
for any $\bx_T'\in\arg\min_{\bx\in \mathcal{X}}f(\bx,\bar{\by}_T)$ and $\by_T'\in\arg\max_{\by\in \mathcal{Y}}f(\bar{\bx}_T,\by)$.
\end{theorem}

\begin{algorithm}[t]
\caption{Solving Saddle-Point Problems with Optimistic \ac{OMD}}
\label{alg:minmax_to_optimistic_omd}
\begin{algorithmic}[1]
{
    \REQUIRE{$\lambda_{\mathcal{X}}> 0, \lambda_{\mathcal{Y}}> 0, \bx_1 \in \mathcal{X}, \by_1 \in \mathcal{Y}$}
    \STATE{$\bg_{\mathcal{X},0}=\boldsymbol{0}, \bg_{\mathcal{Y},0}=\boldsymbol{0}$}
    \FOR{$t=1, \dots, T$}
    \STATE{Set $\bg_{\mathcal{X},t} = \nabla_{\bx} f(\bx_t,\by_t)$}
    \STATE{Set $\bg_{\mathcal{Y},t} = -\nabla_{\by} f(\bx_t,\by_t)$}
    \STATE{$\bx_{t+1} = \argmin_{\bx \in \mathcal{X}} \  \langle 2\bg_{\mathcal{X},t}-\bg_{\mathcal{X},t-1},\bx\rangle+\lambda_\mathcal{X} B_{\psi_\mathcal{X}}(\bx;\bx_t)$}
    \STATE{$\by_{t+1} = \argmin_{\by \in \mathcal{Y}} \  \langle 2\bg_{\mathcal{Y},t}-\bg_{\mathcal{Y},t-1},\by\rangle+\lambda_{\mathcal{Y}} B_{\psi_\mathcal{Y}}(\by;\by_t)$}
    \ENDFOR
    \RETURN{$\bar{\bx}_T=\frac{1}{T}\sum_{t=1}^T \bx_t$, $\bar{\by}_T=\frac{1}{T}\sum_{t=1}^T \by_t$}
}
\end{algorithmic}
\end{algorithm}

\begin{example}
Consider the bilinear saddle-point problem
\[
\min_{\bx \in \mathcal{X}} \max_{\by \in \mathcal{Y}} \ \bx^\top \bA \by~.
\]
In this case, we have that $\nabla_{\bx} f(\bx,\by)=\bA \by$, $\nabla_{\by} f(\bx,\by)= \bA^\top \bx$, $L_{\mathcal{X}\mathcal{X}}=0$, $L_{\mathcal{Y}\mathcal{Y}}=0$, and $L_{\mathcal{X}\mathcal{Y}}=\|\bA\|_\text{op}$ where $\|\cdot\|_\text{op}$ is the operator norm of the matrix $\bA$. The specific shape of the operator norm depends on the norms we use on $\mathcal{X}$ and $\mathcal{Y}$. For example, if we choose the Euclidean norm on both $\mathcal{X}$ and $\mathcal{Y}$, the operator norm of $\bA$ is the largest singular value of $\bA$.
On the other hand, if $\mathcal{X} =\Delta^{n-1}$ and $\mathcal{Y}=\Delta^{m-1}$, as in the two-person zero-sum games, and we use the L$_1$ norm on both, then the operator norm of a matrix $\bA$ is the maximum absolute value of the entries of $\bA$.
\end{example}

\subsection{Prescient Online Mirror Descent and Be-The-Regularized-Leader}
\label{sec:prescient_omd}

The results in the previous section are interesting from a game-theoretic point of view, because they show that two players can converge to an equilibrium without any ``communication''. However, if we only care about converging to the saddle point without worrying about communication, we can easily do better.
For example, we can use the fact that it is fine if one of the two players ``cheats'' by looking at the loss at the beginning of each round, making its regret non-positive, while the other uses optimistic updates. However, as discussed in Remark~\ref{remark:best-response}, this will change the oracle, because now one of the algorithms needs to access the functions rather than only their gradients.

For example, we saw the use of best response\index{best response}. However, best response only guarantees non-positive regret, while for the optimistic proof above we need some specific negative terms. It turns out we can achieve them with another algorithm: \textbf{Prescient Online Mirror Descent}\index{Online Mirror Descent algorithm!prescient|(textbf}, that predicts in each round with $\bx_t \in \argmin_{\bx \in \mathcal{V}} \ \ell_t(\bx) + \frac{1}{\eta_t}B_\psi(\bx;\bx_{t-1})$, see Algorithm~\ref{alg:rbr}.

\begin{algorithm}[t]
\caption{Prescient Online Mirror Descent}
\label{alg:rbr}
\begin{algorithmic}[1]
{
    \REQUIRE{Non-empty closed convex $\mathcal{V} \subset \mathcal{X}\subseteq \R^d$, $\psi: \mathcal{X} \to \R$ strictly convex and differentiable on $\interior \mathcal{X}$, $\bx_0 \in \interior \mathcal{X}$, $\eta_1,\dots,\eta_T>0$}
    \FOR{$t=1$ {\bfseries to} $T$}
    \STATE{Receive $\ell_t: \R^d \to (-\infty, +\infty]$, where $\mathcal{V} \subseteq \dom \ell_t$}
    \STATE{$\bx_{t} \in \argmin_{\bx \in \mathcal{V}} \ \ell_t(\bx) + \frac{1}{\eta_t}B_\psi(\bx; \bx_{t-1})$}
    \STATE{Pay the loss $\ell_t(\bx_t)$}
    \ENDFOR
}
\end{algorithmic}
\end{algorithm}

\begin{theorem}
\label{thm:pomd}
Let $\psi: \mathcal{X} \to \R$ differentiable in $\interior \mathcal{X}$, closed, and strictly convex. Let $\mathcal{V} \subseteq \mathcal{X}$ be a non-empty closed convex set. Assume $\bx_t \in \interior \mathcal{X}$, $\ell_t$ subdifferentiable on $\mathcal{V}$, and $\eta_{t+1}\leq \eta_{t}$, for $t=1, \dots, T$. Then, $\forall \bu \in \mathcal{V}$, Algorithm~\ref{alg:rbr} satisfies
\[
\sum_{t=1}^T \ell_t(\bx_{t}) - \sum_{t=1}^T  \ell_t(\bu)
\leq \max_{0\leq t \leq T-1} \frac{B_\psi(\bu;\bx_t)}{\eta_{T}} - \sum_{t=1}^T \frac{1}{\eta_t} B_\psi(\bx_{t}; \bx_{t-1} ) ~.
\]
Moreover, if $\eta_t$ is constant, i.e., $\eta_t=\eta \ \forall t=1,\dots,T$, we have
\[
\sum_{t=1}^T (\ell_t(\bx_t)- \ell_t(\bu))
\leq \frac{B_\psi(\bu;\bx_0)}{\eta} - \frac{1}{\eta}\sum_{t=1}^T B_\psi(\bx_{t}; \bx_{t-1} )~.
\]
\end{theorem}
\begin{proof}
From the first-order optimality condition of Theorem~\ref{thm:constr_opt_condition} on the update, we have that there exists $\bg_t \in \partial \ell_t(\bx_t)$ such that
\[
\langle \eta_t \bg_t + \nabla \psi(\bx_{t}) - \nabla \psi(\bx_{t-1}), \bu - \bx_{t}  \rangle \geq 0, \quad \forall \bu \in \mathcal{V}~.
\]
Hence, we have
\begin{align*}
\eta_t(\ell_t(\bx_{t}) - \ell_t(\bu))
& \leq \langle \eta_t \bg_t, \bx_{t} - \bu \rangle
\leq  \langle \nabla \psi(\bx_{t-1}) - \nabla \psi(\bx_{t}), \bx_{t} - \bu \rangle \\
& = B_\psi (\bu; \bx_{t-1} ) - B_\psi(\bu; \bx_{t}) - B_\psi(\bx_{t}; \bx_{t-1} ),
\end{align*}
where in the last equality we used Lemma~\ref{lemma:bregman_3_points}.
Dividing by $\eta_t$ and summing over $t=1, \dots, T$, we have
\begin{align*}
\sum_{t=1}^T &(\ell_t(\bx_t) - \ell_t(\bu))
\leq \sum_{t=1}^T \left(\frac{B_\psi(\bu;\bx_{t-1})}{\eta_t} - \frac{B_\psi(\bu;\bx_{t})}{\eta_t}\right) - \sum_{t=1}^T \frac{B_\psi(\bx_{t}; \bx_{t-1})}{\eta_t} \\
&= \frac{B_\psi(\bu;\bx_{0})}{\eta_1} - \frac{B_\psi(\bu;\bx_{T})}{\eta_T}  + \sum_{t=1}^{T-1} \left(\frac{1}{\eta_{t+1}}-\frac{1}{\eta_t}\right)B_\psi(\bu;\bx_{t}) - \sum_{t=1}^T \frac{B_\psi(\bx_{t}; \bx_{t-1} )}{\eta_t} \\
&\leq \frac{D^2}{\eta_1} + D^2 \sum_{t=1}^{T-1} \left(\frac{1}{\eta_{t+1}}-\frac{1}{\eta_{t}}\right) - \sum_{t=1}^T \frac{B_\psi(\bx_{t}; \bx_{t-1} )}{\eta_t} \\
&= \frac{D^2}{\eta_1}  + D^2 \left(\frac{1}{\eta_{T}}-\frac{1}{\eta_1}\right) - \sum_{t=1}^T \frac{B_\psi(\bx_{t}; \bx_{t-1} )}{\eta_t} \\
&= \frac{D^2}{\eta_{T}} - \sum_{t=1}^T \frac{B_\psi(\bx_{t}; \bx_{t-1})}{\eta_t},
\end{align*}
where we denoted by $D^2=\max_{0\leq t\leq T-1} B_\psi(\bu;\bx_t)$.

The second statement is left as an exercise.
\end{proof}

The regret of Prescient Online Mirror Descent contains the negative terms we needed from the optimistic algorithms.\index{Online Mirror Descent algorithm!prescient|)textbf}

Analogously, we can obtain a version of \ac{FTRL} that uses the knowledge of the current loss: \textbf{Be-The-Regularized-Leader (BTRL)}\index{Be-the-Regularized-Leader algorithm|(textbf}, that predicts in each time step with $\bx_t \in \argmin_{\bx \in \mathcal{V}} \ \psi_t(\bx)+\sum_{i=1}^t \ell_i(\bx)$, see Algorithm~\ref{alg:btrl}.

\begin{algorithm}[t]
\caption{Be-the-Regularized-Leader Algorithm}
\label{alg:btrl}
\begin{algorithmic}[1]
{
    \REQUIRE{Non-empty closed convex $\mathcal{V} \subset \mathcal{X}\subseteq \R^d$, $\psi_1, \dots, \psi_T$ regularizers}
    \FOR{$t=1$ {\bfseries to} $T$}
    \STATE{Receive $\ell_t: \R^d \to (-\infty, +\infty]$, where $\mathcal{V} \subseteq \dom \ell_t$}
    \STATE{$\bx_{t} \in \argmin_{\bx \in \mathcal{V}} \ \psi_t(\bx)+\sum_{i=1}^t \ell_i(\bx)$}
    \STATE{Pay the loss $\ell_t(\bx_t)$}
    \ENDFOR
}
\end{algorithmic}
\end{algorithm}

If $\psi_t\equiv 0$, then Be-The-Regularized-Leader becomes the \textbf{Be-The-Leader} algorithm\index{Be-the-Leader!algorithm|textbf}.
BTRL can be thought of as Optimistic \ac{FTRL} where $\tilde{\ell}_t=\ell_t$.
Hence, from the regret of Optimistic \ac{FTRL} in Theorem~\ref{thm:ftrl_optimistic}, we immediately have the following theorem.
\begin{theorem}
Let $\mathcal{V} \subset \R^d$ be convex, closed, and non-empty.
Assume for $t=1, \dots, T$ that $\psi_{t} + \sum_{i=1}^{t} \ell_i$ is proper, closed, and $\lambda_t$-strongly convex with respect to $\|\cdot\|$.
Then, for all $\bu \in \mathcal{V}$, Algorithm~\ref{alg:btrl} satisfies
\begin{align*}
&\sum_{t=1}^T \ell_t(\bx_t) - \sum_{t=1}^T \ell_t(\bu)\\
&\quad \leq \psi_{T}(\bu) - \psi_{1}(\bx_1) - \sum_{t=1}^T \frac{\lambda_t}{2} \|\bx_t-\bx_{t+1}\|^2 +\sum_{t=1}^{T-1}(\psi_t(\bx_{t+1}) - \psi_{t+1}(\bx_{t+1})) ~.
\end{align*}
\end{theorem}

\begin{remark}
In the Be-The-Leader algorithm, if all the $\psi_t\equiv0$, then the theorem states that the regret is non-positive.
\end{remark}

Notably, the non-negative gradient terms are missing in the bound of BTRL, but we still have the negative ones associated with the change in $\bx_t$.

Using, for example, BTRL for the $\mathcal{X}$-player and Optimistic \ac{FTRL} for the $\mathcal{Y}$-player, we have
\begin{align*}
\sum_{t=1}^T &f(\bx_t,\by) - \sum_{t=1}^T f(\bx,\by_t)
\leq \psi_\mathcal{X}(\bx) + \psi_\mathcal{Y}(\by) +\frac{\|\bg_{\mathcal{Y},1}\|^2_{\mathcal{Y},\star}}{\lambda_{\mathcal{Y}}}\\
&+ \sum_{t=2}^T \left(\left(\frac{2L^2_{\mathcal{X}\mathcal{Y}}}{\lambda_{\mathcal{Y}}}-\frac{\lambda_{\mathcal{X}}}{4}\right) \|\bx_t-\bx_{t-1}\|^2_\mathcal{X} + \left(\frac{2L^2_{\mathcal{Y}\mathcal{Y}}}{\lambda_{\mathcal{Y}}}-\frac{\lambda_{\mathcal{Y}}}{4}\right) \|\by_t-\by_{t-1}\|^2_\mathcal{Y}\right)~.
\end{align*}
\index{Be-the-Regularized-Leader algorithm|)textbf}

\section{History Bits}
\label{sec:saddle-point_history}

Theorem~\ref{thm:saddle_equiv_minmax} is \citet[Lemma~36.2]{Rockafellar70}.
The proof of Theorem~\ref{thm:minmax_to_oco} is from \citet{LiuO21} and, in turn, is based on the one in \citet{AbernethyW17}: \citet{LiuO21} stressed the dependence of the regret on a competitor that can be useful for refined bounds. Different variants of this theorem are known in the game theory community as ``Folk Theorems'', because such a result was widely known among game theorists in the 1950s, even though no one had published it.

\index{minimax theorem|(}
The celebrated minimax theorem for zero-sum two-person games was first discovered by John von Neumann in the 1920s~\citep{Neumann28,NeumannM44}. The version stated here is a simplification of the generalization due to \citet{Sion58}. The proof here is from \citet{AbernethyW17}. A similar proof is in \citet{Cesa-BianchiL06} based on a discretization of the space that in turn is based on the one in \citet{FreundS96,FreundS99b}.\index{minimax theorem|)}

Algorithms~\ref{alg:minmax_to_oco2} and \ref{alg:minmax_to_oco3} are a generalization of the algorithm for boosting\index{boosting} in \citet{FreundS96,FreundS99b}. Algorithm~\ref{alg:minmax_to_oco2} was also used in \citet{AbernethyW17} to recover variants of the Frank--Wolfe algorithm\index{Frank--Wolfe algorithm}~\citep{FrankW56}.

It is not clear who invented alternation\index{alternation|(}: it was a known trick used in implementations of CFR\index{Counterfactual Regret Minimization algorithm} for the computer poker competition from 2010 or so\footnote{Christian Kroer, 2021, personal communication.}. Note that in CFR, the method of choice is Regret Matching~\citep{HartMC00}. However, \citet{Kroer20} empirically shows that alternation improves even \ac{OGD} for solving bilinear games.
\citet{TammelinBJB15} explicitly include this trick in their implementation of an improved version of CFR called CFR+, claiming that it would still guarantee convergence.
However, \citet{FarinaKS19} pointed out that averaging of the iterates in alternation might not produce a solution to the min-max problem, providing a counterexample. Theorem~\ref{thm:minmax_to_oco_alternation} is from \citet{BurchMS19}.
Recently, \citet{NanGIK25} showed that alternating gradient descent-ascent, without the use of optimism, gives a $\mathcal{O}(1/T)$ convergence for the duality gap\index{duality gap} of two-player zero-sum games when the saddle-point is in the interior of the set, confirming the empirical observation of \citet{Kroer20}.

There is also a complementary view on alternation: \citet{ZhangWLG21} link alternating updates to Gauss-Seidel methods in numerical linear algebra, in contrast to the simultaneous updates of the Jacobi method. Also, they provide a good review of the optimization literature on the advantages of alternation, but this paper and the papers they cite do not seem to be aware of the use of alternation in CFR.
\index{alternation|)}

The reduction from boosting\index{boosting} to \ac{LEA} is from \citet{FreundS96}. It seems\footnote{I could not verify this claim because I could not find the paper anywhere.} that the question if a weak learner can be boosted into a strong learner was originally posed by \citet{KearnsV88} (see also \citet{Kearns88}). It was answered in the positive by \citet{Schapire90}. The AdaBoost algorithm is from \citet{FreundS95,FreundS97}.
The idea of using algorithms to guarantee a regret bound that depends on the \ac{KL} divergence, as in \eqref{eq:better_bound_boosting}, is from \citet{HaipengS14}.

\citet{DaskalakisDK11} proposed the first no-regret algorithm that achieved a rate of $\mathcal{O}(\frac{\ln T}{T})$ for the duality gap\index{duality gap} when used by the two players of a zero-sum game without any communication between the players.
However, the algorithm was rather complex, and they posed the problem of obtaining the same or faster rate with a simpler algorithm. \citet{RakhlinS13} solved this problem, showing that two Optimistic \ac{OMD} algorithms can solve the problem in a simpler way. Theorems~\ref{thm:minmax_to_optimistic_ftrl} and~\ref{thm:minmax_to_optimistic_omd} derive directly from \citet[Corollary 5]{RakhlinS13}. For some reason, \citet[Corollary 5]{RakhlinS13} was missed in recent years, so the $\mathcal{O}(1/T)$ convergence for smooth saddle-point problems using optimistic gradient descent/ascent has been rediscovered several times~\citep[e.g.,][]{HsiehIMM19,MokhtariOP20}.
However, the optimistic gradient descent/ascent for saddle-point problems is much older: it was proposed for the first time by \citet{Popov80}, as a modification of the Arrow-Hurwicz method~\citep{ArrowHU58}. Roughly 30 years later, the optimistic algorithms were rediscovered, first as pure online learning algorithms~\citep{Chiang12,RakhlinS13b} and then used to solve saddle-point problems~\citep{RakhlinS13}.

The observation after Theorem~\ref{thm:minmax_to_optimistic_ftrl} that it is possible to achieve constant regret for each player is from \citet{Luo22}. For general games, it is possible to achieve logarithmic regret for each player by using modified versions of optimistic algorithms~\citep[see, e.g.,][and references therein]{SoleymaniPF25}.

The use of Prescient Online Mirror Descent\index{Online Mirror Descent algorithm!prescient} in saddle-point optimization is from \citet{WangAL21}, but when renaming $\bx_{t}$ to $\bx_{t+1}$ it is also equivalent to implicit online mirror descent~\citep{KivinenW97,KulisB10,CampolongoO20}. Theorem~\ref{thm:pomd} is from the guarantee of implicit online mirror descent in \citet{CampolongoO20}.

There is also a tight connection between optimistic updates using the previous gradients and classic approaches to solving saddle-point optimization. In fact, \citet{GidelBVVLJ18} showed that using two optimistic gradient descent algorithms to solve a saddle-point problem can be seen as a variant of the Extra-gradient updates~\citep{Korpelevich76}, while \citet{MokhtariOP20} show that they can be interpreted as an approximated proximal point algorithm.

For generic saddle-point problems, \citet{Popov80} proved the asymptotic convergence of the iterates when using two optimistic \ac{OGD} algorithms. This old result was unknown to the majority of the community until recently, and it implies some later weaker results \citep[e.g.,][]{DaskalakisLSZ18}. The extension to the Mirror Descent case was done by \citet{Semenov17} to solve the more general problem of variational inequalities, but only for distance generating functions\index{distance generating function} that are differentiable on the entire feasible set.\footnote{Equation 17 in \citet{Semenov17} needs the fact that the distance generating function\index{distance generating function} is continuously differentiable everywhere for a limit operation. This assumption is not stated, making the proof essentially wrong, for example, for the negative entropy function. As far as I know, no one had realized this issue in that paper before.} This means that this proof does not cover the optimistic \ac{EG}.
In turn, this little-known result was also recently rediscovered~\citep[e.g.,][Theorem 4]{LeeKL21}.
\citet[Theorem 7]{HsiehAM21} proved the asymptotic convergence of the last iterate for Optimistic \ac{FTRL} with linear losses and an adaptive regularization weight, assuming either strict convexity/concavity or that the regularizer is differentiable on the entire domain. Note that the proof in \citet[Theorem 7]{HsiehAM21} can be easily adapted to prove the same result for optimistic \ac{OMD} with a fixed and small enough learning rate. \citet{LeiNPW21} prove the asymptotic convergence of optimistic \ac{EG} for saddle-point problems, under the assumption that some of the optimality conditions are satisfied in a strict way.

For the specific case of bilinear games, stronger results can be proven on the last-iterate convergence.
\citet{LiangS19} proved that if the matrix $\bA$ is square, full-rank, and the problem is unconstrained, then the iterates of two Optimistic \ac{OGD} will converge exponentially fast to the saddle point in the origin. \citet{DaskalakisP19} proved the asymptotic convergence of optimistic \ac{OMD}/\ac{FTRL} \ac{EG} with fixed stepsize for bilinear games over probability simplices, assuming a unique saddle point. \citet{WeiLZL21} proved an exponential rate for the same algorithm under the same assumptions. \citet[Theorem 8]{HsiehAM21} removed the assumption of a unique saddle point, proving asymptotic convergence for Optimistic \ac{FTRL} with entropic regularization and an adaptive regularization weight. Once again, this proof can be easily modified to prove the same result for optimistic \ac{OMD} with a fixed and small enough learning rate.

\section{Exercises}

\begin{exer}
\label{exercise:duality_gap}
Prove Lemma~\ref{lemma:equivalence_saddle_point}.
\end{exer}

\begin{exer}
Let $f(\bx,\by)$ be a convex-concave function, Lipschitz with respect to both variables. Use two \ac{FTRL} algorithms to find the saddle-point, using as regularizers $\psi_t(\bx) \propto \sqrt{t} \psi(\bx)$, where $\psi$ is differentiable and $1$-strongly convex with respect to a norm $\|\cdot\|$. Then, using the inequality in Problem~\ref{exercise:ftrl_last_bregman}, show that the trajectory of the iterates of the above strategy is bounded on any convex-concave saddle-point problem with at least one saddle-point, even in unbounded domains and any number of dimensions.
\end{exer}

\acresetall

\chapter{From Online Learning to X}
\label{ch:online_to_x}

In this chapter, we show that online learning is much more general than one might think. We already saw that online learning algorithms can be used for stochastic convex optimization (Chapter~\ref{ch:o2b}) and for saddle-point optimization (Chapter~\ref{ch:saddle-point}). Yet, these were still convex optimization problems. Instead, here we will show that online learning results can be used to prove generalization guarantees, design algorithms for the optimization of non-convex non-smooth functions, and construct confidence intervals.

\acresetall

\section{From Online Learning to Rademacher Complexity}
\label{sec:online_to_rademacher}

Here, we will quantify the statistical difficulty of a hypothesis class\index{hypothesis class} through its \emph{Rademacher complexity}, which measures how well the functions in the class can correlate with pure noise. Rademacher complexity is the key quantity behind many uniform convergence and generalization results.
In this section, \emph{we show a simple reduction that upper bounds the Rademacher complexity of a class using the regret guarantee of an online learning algorithm.}

\noindent\textbf{Rademacher Complexity.}
We consider again Vapnik's general setting of learning\index{Vapnik's general setting of learning} that we saw in Section~\ref{sec:agnostic_pac}, this time for generic function classes. Hence, we seek to minimize
\begin{equation}
\label{eq:vapnik_general_obj}
\argmin_{f \in \mathcal{H}} \ \E_{\bxi\sim\rho} [f(\bxi)]
\end{equation}
where $\mathcal{H}$ is a class of real-valued functions on $\mathcal{Z}$ and $\rho$ is a distribution over $\mathcal{Z}$.
One common way to solve this problem is through an \acl{ERM} process. That is, we gather a sample of size $T$, $\{\bxi_1, \dots, \bxi_T\}$, drawn i.i.d. from $\rho$, then we solve the empirical version of \eqref{eq:vapnik_general_obj}:
\[
f_\mathcal{S}
:=\argmin_{f \in \mathcal{H}} \ \frac{1}{T} \sum_{t=1}^T f(\bxi_t)~.
\]
It is now natural to ask if $f_\mathcal{S}$ will also minimize the objective function in \eqref{eq:vapnik_general_obj}.

One way to study this problem is through the concept of \emph{Rademacher complexity} of a function class.
\begin{definition}
Let $\mathcal{S}=\{\bxi_1,\dots,\bxi_T\}$ be an arbitrary set of $T$ vectors in some domain $\mathcal{Z}$.
Let $\mathcal{H}$ be a class of real-valued functions on $\mathcal{Z}$.
Let $\epsilon_1,\dots,\epsilon_T$ be i.i.d. Rademacher random variables\index{random variable!Rademacher}, that is, $\Pr\{\epsilon_t=1\}=\Pr\{\epsilon_t=-1\}=1/2$.
The \textbf{empirical Rademacher complexity}\index{Rademacher complexity!empirical|textbf} of $\mathcal{H}$ on $\mathcal{S}$ is defined as
\begin{equation}
\label{eq:emp_rad}
\wh{\mathfrak{R}}_{\mathcal{S}}(\mathcal{H})
:= \frac{1}{T}\, \E_{\epsilon_1, \dots, \epsilon_T}\left[ \sup_{f\in\mathcal{H}} \ \sum_{t=1}^T \epsilon_t f(\bxi_t)\right]~.
\end{equation}
\end{definition}

Intuitively, if $\wh{\mathfrak{R}}_{\mathcal{S}}(\mathcal{H})$ is small, then no function in the class can fit random signs too well on the sample. Hence, we can expect the minimizers of the empirical risk to be close to the minimizers of the true risk.

This idea can be made formal and yields generalization guarantees; for example, bounds of the form
\[
\sup_{f\in\mathcal{H}} \ \left| \E[f(\bxi)] - \frac{1}{T}\sum_{t=1}^T f(\bxi_t) \right|
\lesssim \wh{\mathfrak{R}}_{\mathcal{S}}(\mathcal{H}) + \sqrt{\frac{\ln(1/\delta)}{T}}
\]
with probability at least $1-\delta$ under mild boundedness assumptions. Given that the above bound applies uniformly over $\mathcal{H}$, it also applies to the empirical risk minimizer $f_\mathcal{S}$.

\begin{remark}
One can easily obtain a generalization guarantee that depends on $f_\mathcal{S}$, rather than being uniform over a class. It is enough to consider a countable nested family $(\mathcal{H}_k)_{k=1}^\infty$ whose union covers the entire space of possible outcomes of the empirical risk minimization process. Then, apply the uniform bound to each $\mathcal{H}_k$ with a union bound\index{union bound}, and choose the class with the smallest Rademacher complexity containing the empirical risk minimizer.
\end{remark}

We will not prove the generalization bounds here; instead, we focus on the following question:
\emph{how can we upper bound $\wh{\mathfrak{R}}_{\mathcal{S}}(\mathcal{H})$ for interesting classes?}

We consider function classes given by composing Lipschitz functions with linear predictors.
Given a prediction function class $\{\phi_{\bx}:\bx\in\mathcal{V}\}$ and a loss $\ell(\hat{y},y)$,
we obtain the induced function class
\[
\mathcal{H}_{\ell}
:= \left\{\bxi=(\bz,y)\mapsto \ell(\phi_{\bx}(\bz),y)\ :\ \bx\in\mathcal{V}\right\},
\]
where $(\bz,y) \in \mathcal{Z} \times \mathcal{Y}$.
Hence, we would like to control $\wh{\mathfrak{R}}_{\mathcal{S}}(\mathcal{H}_{\ell})$.
A common situation is that $\ell(\cdot,y)$ is Lipschitz, say $L$-Lipschitz for all $y$.
In this case, it is intuitive that a Lipschitz loss cannot increase the ability to correlate with noise by more than a factor $L$.
This is formalized by contraction inequalities, as in the next lemma.
\begin{lemma}[Contraction Lemma]
\index{contraction lemma|(textbf}
\label{lemma:contraction}
Let $h:\R \to \R$ be $L$-Lipschitz. Consider the function class $h \circ \mathcal{H}=\{h \circ f: f \in \mathcal{H}\}$. Then, for any sample $\mathcal{S}$, we have $\wh{\mathfrak{R}}_{\mathcal{S}}(h\circ \mathcal{H}) \leq L \, \wh{\mathfrak{R}}_{\mathcal{S}}(\mathcal{H})$.
\index{contraction lemma|)textbf}
\end{lemma}
Hence, in the following, we will focus on controlling the complexity of linear functions.

\noindent\textbf{Rademacher Complexity of Linear Classes through Regret Guarantees.}
We now state the core reduction. Let $\mathcal{V}\subset\R^d$ be a non-empty, bounded, closed set, and consider the linear class
\[
\mathcal{H}_{\mathrm{lin}}
:= \left\{ \bz \mapsto \langle \bx,\bz\rangle \ :\ \bx\in\mathcal{V} \right\}~.
\]
Fix a sample $\mathcal{S}=\{\bz_1,\dots,\bz_T\}$ and define the empirical Rademacher complexity of $\mathcal{H}_{\mathrm{lin}}$ on $\mathcal{S}$ as in \eqref{eq:emp_rad}. The following theorem shows how to use the regret of an online learning algorithm to upper bound such empirical Rademacher complexity.
\begin{theorem}
\label{thm:rad_from_regret}
Let $\mathcal{Z}\subset\R^d$ such that for any $\bz \in \mathcal{Z}$ we also have $-\bz \in \mathcal{Z}$.
Fix any sequence $\bz_1,\dots,\bz_T\in\mathcal{Z}$ and any non-empty, bounded, closed set $\mathcal{V}\subset\R^d$.
Assume that there exists an online algorithm that, on any sequence of linear losses
$\ell_t(\bx)=\langle \bg_t,\bx\rangle$, where $\bg_t \in \mathcal{Z}$, produces $\bx_t\in\mathcal{V}$ and satisfies the regret guarantee
\begin{equation}
\label{eq:regret_generic}
\sum_{t=1}^T \langle \bg_t,\bx_t\rangle
- \min_{\bu\in\mathcal{V}} \ \sum_{t=1}^T \langle \bg_t,\bu\rangle
\leq R(T)~.
\end{equation}
Then, the empirical Rademacher complexity of the linear class satisfies
\begin{equation}
\label{eq:rad_bound_linear}
\wh{\mathfrak{R}}_{\mathcal{S}}(\mathcal{H}_{\mathrm{lin}})
= \frac{1}{T} \E_{\epsilon_1, \dots, \epsilon_T}\left[ \sup_{\bx\in\mathcal{V}}\  \sum_{t=1}^T \epsilon_t \langle \bx,\bz_t\rangle \right]
\leq \frac{R(T)}{T}~.
\end{equation}
\end{theorem}
\begin{proof}
Fix $\bz_1,\dots,\bz_T$ and draw $\epsilon_1,\dots,\epsilon_T$.
Consider the online linear game with gradients $\bg_t:= -\epsilon_t \bz_t$.
Let $\bx_1,\dots,\bx_T$ be the iterates produced by the online algorithm when fed with $\bg_1,\dots,\bg_T$.
By the regret guarantee \eqref{eq:regret_generic} we have, for every realization of the signs,
\begin{align}
\label{eq:rad_regret_step}
\sum_{t=1}^T \langle -\epsilon_t \bz_t,\bx_t\rangle - \min_{\bu\in\mathcal{V}} \ \sum_{t=1}^T \langle -\epsilon_t \bz_t,\bu\rangle
&\leq R(T)~.
\end{align}
Rearranging, we get
\[
\max_{\bu\in\mathcal{V}} \ \sum_{t=1}^T \epsilon_t \langle \bu,\bz_t\rangle
\leq R(T) + \sum_{t=1}^T \epsilon_t \langle \bx_t,\bz_t\rangle~.
\]
Taking expectation with respect to $\epsilon_1,\dots,\epsilon_T$ and dividing by $T$ yields
\begin{equation}
\label{eq:rad_exp_split}
\wh{\mathfrak{R}}_{\mathcal{S}}(\mathcal{H}_{\mathrm{lin}})
\leq \frac{R(T)}{T} + \frac{1}{T}\E_{\epsilon_1, \dots, \epsilon_T}\left[\sum_{t=1}^T \epsilon_t \langle \bx_t,\bz_t\rangle\right]~.
\end{equation}

It remains to show that the second term is $0$.
The key observation is that $\bx_t$ is measurable with respect to the past signs $\epsilon_1,\dots,\epsilon_{t-1}$ and possibly its internal randomization,
while $\epsilon_t$ is independent of the past and has mean zero.
Hence, conditioning on the past,
\[
\E_{\epsilon_1, \dots, \epsilon_T}\!\left[\epsilon_t \langle \bx_t,\bz_t\rangle \middle| \epsilon_1,\dots,\epsilon_{t-1}, \text{internal randomness}\right]
= \langle \bx_t,\bz_t\rangle\,\E[\epsilon_t]
=0,
\]
and by taking expectation again we obtain $\E[\epsilon_t \langle \bx_t,\bz_t\rangle]=0$.
Summing over $t$ makes the last term in~\eqref{eq:rad_exp_split} equal to 0, so we obtain \eqref{eq:rad_bound_linear}.
\end{proof}

\begin{remark}
The proof of Theorem~\ref{thm:rad_from_regret} has the same flavor as the probabilistic method argument used in lower bounds in Chapter~\ref{ch:lower}: we introduce Rademacher signs, feed them to an online algorithm, and then exploit the fact that the algorithm cannot correlate with the \emph{fresh} randomness at time $t$.
The entire complexity term is paid by the regret upper bound $R(T)$.
\end{remark}

We can instantiate the above theorem using the \ac{OMD} algorithms with $p$-norms, see Section~\ref{sec:omd_pnorm}.
\begin{corollary}
Let $q\in [2, \infty)$ and $p\in (1,2]$ such that $1/p+1/q=1$. For $U_p>0$, let $\mathcal{V}_p:=\{\bx \in \R^d: \|\bx\|_p\leq U_p\}$.
Let $\mathcal{S}:=\{\bz_1, \dots, \bz_T\}$ where $\bz_t\in \R^d$ and $\|\bz_t\|_q\leq G_q$ for $t=1, \dots, T$.
Let $\mathcal{H}_{p} := \left\{ \bz \mapsto \langle \bx,\bz\rangle \ :\ \bx\in\mathcal{V}_p \right\}$. Then,
\[
\frac{1}{T} \E_{\epsilon_1, \dots, \epsilon_T}\left[ \sup_{\bx\in\mathcal{V}_p} \ \sum_{t=1}^T \epsilon_t \langle \bx,\bz_t\rangle \right]
\leq \frac{U_p G_q}{\sqrt{(p-1)T}}~.
\]
\end{corollary}
\begin{proof}
We instantiate Theorem~\ref{thm:rad_from_regret} with \ac{OMD} with $p$-norms, learning rate $\eta=\frac{U_p\sqrt{p-1}}{G_q \sqrt{T}}$ and $\bx_1=\boldsymbol{0}$.
\end{proof}

We can also consider the case of classes of functions with finite cardinality.
\index{Massart's lemma|(textbf}
\begin{corollary}[Massart's Lemma]
Let $\mathcal{S}:=\{\bz_1, \dots, \bz_T\}$ where $\bz_t\in \R^d$ and $\|\bz_t\|_\infty\leq G_\infty$ for $t=1, \dots, T$.
Let $\mathcal{H} := \left\{ \bz \mapsto \langle \bx,\bz\rangle \ :\ \bx\in\Delta^{d-1} \right\}$. Then,
\[
\frac{1}{T} \E_{\epsilon_1,\dots, \epsilon_T}\left[ \max_{i=1,\dots, d} \ \sum_{t=1}^T \epsilon_t z_{t,i}\right]
\leq \frac{G_\infty \sqrt{2 \ln d}}{\sqrt{T}}~.
\]
\end{corollary}
\begin{proof}
We instantiate Theorem~\ref{thm:rad_from_regret} with \ac{OMD} with entropic regularizer, learning rate $\eta=\frac{\sqrt{2 \ln d}}{G_\infty \sqrt{T}}$ and $\bx_1=[1/d, \dots, 1/d]^\top$.
\end{proof}
\index{Massart's lemma|)textbf}

\section{From Online Learning to PAC-Bayes}
\label{sec:online_to_pacbayes}
\index{PAC-Bayes generalization bound|(}

In this section, we show that one can directly obtain generalization bounds for any machine learning algorithm from upper bounds on linear regret.

We will consider the same setting we used in the online-to-batch reduction (Chapter~\ref{ch:o2b}), where one aims at minimizing the risk of a function $f:\mathcal{Z} \to [0,1]$, defined as
\[
\Risk(f)=\E_{\bxi\sim \rho} [f(\bxi)]~.
\]
For example, in a regression setting, $f(\bxi)$ is the composition of a loss with a prediction function, evaluated on the data $\bxi=(\bz,y)$.

Given a training set $\mathcal{S}=\{\bxi_1, \dots, \bxi_T\}$ drawn i.i.d. from $\rho$, here we are interested in the \textbf{generalization gap}\index{generalization gap|(}, defined as the difference between the risk and the training error of a function $f$:
\[
\Gen(f,\mathcal{S})
:= \Risk(f) - \frac{1}{T}\sum_{t=1}^T f(\bxi_t)~.
\]
We can also consider the case where the predictor is randomized, in the sense that we draw $f$ according to a probability distribution $Q$ in $\Delta(\mathcal{H})$, that is the set of probability distributions over a set of functions $\mathcal{H}:=\{f \mid f:\mathcal{Z} \to [0,1]\}$. In this case, we study
\[
\barGen(Q,\mathcal{S})
:= \E_{f \sim Q}[\Gen(f,\mathcal{S})]~.
\]
In particular, we are interested in upper bounding $\barGen(Q_\mathcal{S},\mathcal{S})$ in high probability, where $Q_\mathcal{S}$ is selected from $\Delta(\mathcal{H})$ after looking at $\mathcal{S}$.
\index{generalization gap|)}

We now describe the reduction from this problem to an online learning one.
Consider an online algorithm that at each round $t$ produces a distribution $P_t \in \Delta(\mathcal{H})$ after observing the training samples $\bxi_1, \dots, \bxi_{t-1}$.
Define $g_t(f)=f(\bxi_t) - \E_{\bxi\sim\rho} [f(\bxi)]$ and $\ell_t(P)=\E_{f\sim P}[g_t(f)]$.
So, for any $Q\in \Delta(\mathcal{H})$, we have that
\begin{align*}
\barGen(Q,\mathcal{S})
&= \E_{f \sim Q}[\E_{\bxi\sim\rho}[f(\bxi)]] - \frac{1}{T}\sum_{t=1}^T \E_{f \sim Q}[f(\bxi_t)]
= \frac{1}{T}\sum_{t=1}^T -\ell_t(Q) \\
&= \frac{1}{T}\sum_{t=1}^T (\ell_t(P_t)-\ell_t(Q)) - \frac{1}{T}\sum_{t=1}^T \ell_t(P_t)
= \frac{\Regret_T(Q)}{T} - \frac{1}{T}\sum_{t=1}^T \ell_t(P_t)~.
\end{align*}

We use $\mathscr{F}_t$ to denote the $\sigma$-algebra generated by the data points
$\bxi_1,\dots,\bxi_t$ and by all the random variables generated by the online learning algorithm up to the end of round $t$.
Now, given that $P_t$ is $\mathscr{F}_{t-1}$-measurable and $\bxi_t$ is independent of $\mathscr{F}_{t-1}$, we have
\[
\E[\ell_t(P_t)|\mathscr{F}_{t-1}]
=\E[\E_{f \sim P_t}[f(\bxi_t)-\E_{\bxi\sim\rho}[f(\bxi)]]|\mathscr{F}_{t-1}]
=0~.
\]
Since $f(\bxi)\in[0,1]$, we have
\[
g_t(f)=f(\bxi_t)-\E_{\bxi\sim\rho}[f(\bxi)] \in [-1,1],
\]
and therefore
\[
\ell_t(P_t)=\E_{f\sim P_t}[g_t(f)] \in [-1,1]~.
\]
Hence, $\ell_1(P_1),\ell_1(P_1)+\ell_2(P_2), \dots, \sum_{t=1}^T\ell_t(P_t)$ is a martingale\index{martingale} with bounded increments in $[-1,1]$, so we can apply the Hoeffding--Azuma inequality\index{inequality!Hoeffding--Azuma} in Theorem~\ref{thm:azuma}. This concentration does not depend on $Q$, so it holds simultaneously for all $Q \in \Delta(\mathcal{H})$.

We are done! We can now put everything together to have the following theorem.
\begin{theorem}
Let $\mathcal{H}$ be a measurable space of measurable functions $f:\mathcal{Z}\to [0,1]$, $\delta \in (0,1)$, and $\pi \in \Delta(\mathcal{H})$.
Let $\mathcal{S}=\{\bxi_1,\dots, \bxi_T\}$ be drawn i.i.d. from a distribution $\rho$ over $\mathcal{Z}$.
Consider any online learning algorithm that outputs a distribution over $\mathcal{H}$ and is fed with the linear losses $\ell_t(P)=\E_{f \sim P}[f(\bxi_t) - \E_{\bxi\sim\rho} [f(\bxi)]]$.
Then, with probability at least $1-\delta$, for all $Q\in \Delta(\mathcal{H})$ such that $Q \ll \pi$, even selected with the knowledge of $\mathcal{S}$, we have
\[
\barGen(Q,\mathcal{S})
= \frac{\Regret_T(Q)}{T} - \frac{1}{T} \sum_{t=1}^T \ell_t(P_t)
\leq \frac{\Regret_T(Q)}{T} + \sqrt{\frac{2\ln \frac{1}{\delta}}{T}}~.
\]
\end{theorem}

We can now instantiate this theorem with the regret of any online learning algorithm. For example, we could use the \ac{EG} algorithm, where we think of each $f \in \mathcal{H}$ as an expert. There are only two caveats: first, we would have to extend it to the case where the number of experts is infinite, possibly continuous; second, we would have to select an appropriate learning rate.

The first issue is easy to deal with: roughly speaking, it is enough to substitute any sum over the experts with integrals. Using the fact that $g_t(f) \in [-1,1]$,  the regret guarantee of the continuous version of \ac{EG} is
\[
\frac{\KL(Q;\pi)}{\eta} + \frac{\eta T}{2},
\]
where $\pi$ is the prior distribution over the infinite experts and $Q$ is the competitor distribution. We might not know how to run this algorithm because of the integrals, but we do not need to! We only need to know that such an algorithm and its regret bound exist.

The second problem, instead, is a difficult one: the optimal learning rate depends on $Q$. However, we want generalization guarantees that hold uniformly for any $Q$.  This is \emph{exactly} the problem we saw many times in online learning when the optimal choice of the learning rate depends on the unknown comparator. In standard \ac{EG}, we upper bounded the \ac{KL} divergence\index{Kullback--Leibler divergence} term for the case of a uniform prior with $\ln d$, but here we cannot do it because the number of experts is infinite.
One could construct a grid of learning rates, instantiate the bound for each of them, and use a union bound, but this approach could introduce additional poly-logarithmic terms in the final bound. However, we already know how to solve this problem in online learning, by simply using \emph{parameter-free} algorithms.

\index{parameter-free!learning with continuous experts|(}
So, we can consider a continuous version of the parameter-free algorithm for learning with expert advice in Section~\ref{sec:shifted_kt}, and we can show this regret bound.
\begin{theorem}
\label{thm:continuous_lea_kt}
Let $\mathcal{H}$ be a measurable space of measurable functions $f:\mathcal{Z}\to [0,1]$, and $\pi \in \Delta(\mathcal{H})$.
At each round $t$, the adversary reveals a measurable loss function $g_t:\mathcal{H}\to [-1,1]$.
There exists an online learning algorithm that depends on $\pi$, outputs a distribution $P_t \in \Delta(\mathcal{H})$, and incurs the loss $\ell_t(P)=\E_{f \sim P}[g_t(f)]$.
For any competitor distribution $Q \in \Delta(\mathcal{H})$ such that $Q \ll \pi$, its regret satisfies
\[
\Regret_T(Q)
:=\sum_{t=1}^T \left(\E_{f\sim P_t}[g_t(f)]-\E_{f\sim Q}[g_t(f)]\right)
\leq \sqrt{20 T\left(\KL(Q;\pi)+1\right)}~.
\]
\end{theorem}
The proof is a straightforward adaptation of the finite-expert case, but for completeness, we leave it as an exercise.
\index{parameter-free!learning with continuous experts|)}

Combining the two previous theorems,
we obtain the following so-called \textbf{PAC-Bayes} bound.
\begin{corollary}[PAC-Bayes Bound]
\label{cor:online_to_pacbayes}
Let $\mathcal{H}$ be a measurable space of measurable functions $f:\mathcal{Z}\to [0,1]$, $\delta \in (0,1)$, and $\pi \in \Delta(\mathcal{H})$.
Let $\mathcal{S}=\{\bxi_1,\dots, \bxi_T\}$ be drawn i.i.d. from a distribution $\rho$ over $\mathcal{Z}$.
Then, with probability at least $1-\delta$, for all $Q\in \Delta(\mathcal{H})$ such that $Q \ll \pi$, even selected with the knowledge of $\mathcal{S}$, we have
\[
\barGen(Q,\mathcal{S})
\leq \frac{\sqrt{20(\KL(Q;\pi)+1)}}{\sqrt{T}} + \sqrt{\frac{2\ln \frac{1}{\delta}}{T}}~.
\]
\end{corollary}

The role of $\pi$ is to allow the bound to hold simultaneously for all posterior distributions $Q$.
Indeed, the theorem guarantees that, with probability at least $1-\delta$ over the draw of the sample $\mathcal{S}$, the inequality holds for every $Q \in \Delta(\mathcal{H})$ such that $Q \ll \pi$, even if $Q$ is selected after observing the data.
This uniformity is possible because $\pi$ acts as a continuous analogue of the union bound\index{union bound}. The divergence term $\KL(Q;\pi)$ plays the role of the logarithmic penalty that appears in the discrete union bound. Hence, choosing $\pi$ corresponds to specifying how the bound is distributed over the different functions in $\mathcal{H}$.

\begin{remark}
As in the previous section, the \emph{existence} of an online learning algorithm with a regret upper bound is enough to prove our bound. We never need to actually run the online algorithm.
Moreover, in this case, we could not run the reduction even if we wanted to, since the linear losses $\ell_t$ depend on the unknown distribution $\rho$.
\end{remark}
\index{PAC-Bayes generalization bound|)}

\section{From Online Learning to Non-Convex Non-Smooth Optimization}
\label{sec:olo-to-nonsmooth-nonconvex}

This section describes a simple (and surprisingly sharp) reduction from OLO to the task of finding approximate stationary points of \emph{non-convex, non-smooth} objectives.
The key idea is to feed stochastic gradients to an online algorithm, which then decides the \emph{updates} rather than the \emph{iterates}.

We consider the objective function $F:\R^d\to\R$, which is differentiable but not necessarily smooth, i.e., the gradient map might not be Lipschitz.

\begin{example}
Consider $F:\R \to \R$ defined as $F(x)=\frac{2}{3}|x|^\frac{3}{2}$. Then, $F'(x)=\sign(x) \sqrt{|x|}$, where $\sign(0):=0$. Hence, $F$ is differentiable everywhere, but its derivative is not Lipschitz because $F''(x)$ is unbounded when $x \to 0$.
\end{example}

To connect function values to gradients without convexity or smoothness, we isolate the only calculus identity needed by the reduction. So, we will define \emph{well-behaved} functions as those that satisfy the specific equality we need.
\begin{definition}
\label{def:well-behaved}
\index{function!well-behaved|(textbf}
A differentiable function $F:\R^d\to\R$ is \textbf{well-behaved} if for every $\bx,\by\in\R^d$,
\begin{equation}
\label{eq:well-behaved}
F(\by)-F(\bx)
= \int_0^1 \! \langle \nabla F(\bx+t(\by-\bx)), \by-\bx\rangle \, \mathrm{d} t~.
\end{equation}
\index{function!well-behaved|)textbf}
\end{definition}

\begin{remark}
\label{rem:well-behaved}
Identity \eqref{eq:well-behaved} is immediate for smooth\index{function!smooth} functions (by the fundamental theorem of calculus applied to $t\mapsto F(\bx+t(\by-\bx))$), but it can hold well beyond smooth objectives.
In particular, \citet{CutkoskyMO23} show that if $F$ is locally Lipschitz, then an arbitrarily small randomized smoothing yields a differentiable, well-behaved surrogate with a natural stochastic gradient oracle.
Here, we focus on the differentiable, well-behaved case to keep the presentation simple.
\end{remark}

We now define our notion of optimality. Since $F$ is non-convex and not necessarily smooth, small gradients are not obviously implied by function decrease; we instead certify a local averaged stationarity notion that measures how close we are to a local minimum by considering \emph{local averages} of gradients.
\begin{definition}
\label{def:delta-grad-norm}
\index{barycentric $(\delta,\epsilon)$-stationary point|(textbf}
Let $F$ be differentiable almost everywhere, and let $\delta>0$.
Let $\mathcal{Q}(\bx,\delta)$ be the set of random variables $\bQ$ with finite support in the ball of radius $\delta$ around $\bx$, such that $\nabla F(\by)$ exists for all $\by$ in the support of $\bQ$.
Define
\[
\|\nabla F(\bx)\|_\delta
:= \inf_{\bQ \in \mathcal{Q}(\bx,\delta)} \ \left\{ \left\| \E\left[\nabla F(\bQ)\right]\right\|_2 : \E[\bQ]=\bx \right\}~.
\]
We say that $\bx$ is a \textbf{barycentric $(\delta,\epsilon)$-stationary point} if $\|\nabla F(\bx)\|_\delta\le \epsilon$.
\index{barycentric $(\delta,\epsilon)$-stationary point|)textbf}
\end{definition}

\begin{remark}
This notion is closely related to Goldstein-type $\delta$-subdifferential stationarity, but we use gradients, and it is \emph{barycentric}: the same convex weights used to average gradients must also average the sampled points back to $\bx$.
\end{remark}

\begin{algorithm}[t]
\caption{Online-to-Non-Convex Conversion}
\label{alg:online-to-nonconvex}
\begin{algorithmic}[1]
{
\REQUIRE{Well-behaved $F:\R^d \to \R$, OLO algorithm $\mathscr{A}$ over $\mathcal{V}=\{\bx\in \R^d:\|\bx\|_2\le D\}$, cycle length $K$, horizon $T$ as a multiple of $K$, initial point $\bx_0$}
\STATE{$j = 0$}
\FOR{$t=1$ {\bfseries to} $T$}
    \IF{$t \equiv 1 \!\!\!\pmod K$}
        \STATE{Reset $\mathscr{A}$}
        \STATE{$j = j+1$,\quad $\bar{\bx}_j = \boldsymbol{0}$}
    \ENDIF
    \STATE{Receive update $\bm_t\in\mathcal{V}$ from $\mathscr{A}$}
    \STATE{$\bx_{t} = \bx_{t-1} - \bm_t$}
    \STATE{Sample $s_t\sim \mathrm{Unif}[0,1]$ and set $\bx'_{t} = \bx_{t-1} - s_t \bm_t$}
    \STATE{Query a stochastic gradient $\bg_t$ at $\bx'_t$}
    \STATE{Define loss $\ell_t(\bm) := -\langle \bg_t,\bm\rangle$ and pass it to $\mathscr{A}$}
    \STATE{$\bar{\bx}_j = \bar{\bx}_j + \bx'_{t}/K$}
\ENDFOR
\STATE{Return $\bar{\bx}_J$ where $J$ is uniform on $\{1,\dots,T/K\}$}
}
\end{algorithmic}
\end{algorithm}

\noindent\textbf{The Reduction Algorithm.}
We now present the reduction from non-convex optimization of well-behaved functions to online linear optimization.

Fix a cycle length $K\in\Nat$ and a radius $D>0$.
We will run an OLO algorithm $\mathscr{A}$ on the Euclidean ball $\mathcal{V} = \{\bx\in\R^d:\|\bx\|_2\le D\}$,
where the OLO algorithm outputs \emph{updates} $\bm_t\in\mathcal{V}$.
The iterates are defined by $\bx_{t}=\bx_{t-1}-\bm_t$.
Then, at each step, we draw a random point $\bx'_{t}$ on the segment from $\bx_{t-1}$ to $\bx_{t}$ and query a stochastic gradient there.
The final output $\bar{\bx}_J$ is the average of $K$ points $\bx'_{t}$ inside a random cycle $J$.
The complete procedure is given in Algorithm~\ref{alg:online-to-nonconvex}.

\begin{theorem}
\label{thm:non_convex_conversion}
Let $F$ be well-behaved \index{function!well-behaved}(Definition~\ref{def:well-behaved}).
Let $\mathscr{F}_t$ be the sigma-field generated by all randomness up to round $t$ before querying $\bg_t$, so that $\bx'_t$ is $\mathscr{F}_t$-measurable. With the notation in Algorithm~\ref{alg:online-to-nonconvex}, we assume
\begin{equation}
\label{eq:stoch-oracle}
\E[\bg_t\mid \mathscr{F}_t]=\nabla F(\bx'_t),
\qquad
\E[\|\bg_t\|_2^2\mid \mathscr{F}_t]\le G^2,
\end{equation}
for some $G>0$.
Let $T\in \Nat$ be a multiple of the cycle length $K$.
Run Algorithm~\ref{alg:online-to-nonconvex} for $T$ steps with cycle length $K$, and instantiate $\mathscr{A}$ as projected \ac{OGD} over $\mathcal{V}=\{\bx\in\R^d:\|\bx\|_2\le D\}$ with learning rate $\eta=D/(G\sqrt{K})$ and initial point equal to $\boldsymbol{0}$ on each cycle (i.e., after each reset).
Then,
\[
\E\left[ \frac{1}{T/K}\sum_{j=1}^{T/K} \left\|\frac{1}{K}\sum_{t=(j-1)K+1}^{jK}\nabla F(\bx'_t)\right\|_2 \right]
\le \frac{F(\bx_0)-\inf_{\bx} F(\bx)}{D\,T} + \frac{2G}{\sqrt{K}}~.
\]
In particular, for $\delta>0$, choose $D=\delta/K$ and then set
\[
K \asymp \left(\frac{G \, T \, \delta}{F(\bx_0)-\inf_{\bx} F(\bx)}\right)^{2/3}~,
\]
rounded to an integer.
Then,
\[
\E\big[\|\nabla F(\bar{\bx}_J)\|_\delta\big]
= \mathcal{O}\left(G^\frac{2}{3}\,\left(\frac{F(\bx_0)-\inf_{\bx} F(\bx)}{T\delta}\right)^{1/3}\right)~.
\]
\end{theorem}
\begin{proof}
We first prove the key identity linking change in function value to an \emph{expected} gradient.
By Definition~\ref{def:well-behaved}, we have
\begin{align*}
F(\bx_{t})-F(\bx_{t-1})
&= \int_0^1 \! \langle\nabla F(\bx_{t-1}+a(\bx_{t}-\bx_{t-1})),\bx_{t}-\bx_{t-1}\rangle\,\mathrm{d}a\\
&= \int_0^1 \! \langle\nabla F(\bx_{t-1}-a \bm_{t}),-\bm_{t}\rangle\,\mathrm{d}a\\
&=-\left\langle \E_{s_t}\big[\nabla F(\bx_{t-1}-s_t \bm_{t})\big],\bm_{t}\right\rangle
=-\left\langle \E_{s_t}\big[\nabla F(\bx'_{t})\big],\bm_{t}\right\rangle,
\end{align*}
where the second-to-last equality is due to the fact that $s_t$ is uniform on $[0,1]$.

Taking expectation over all randomness up to time $t$ (including $s_t$) and for any $\bu_j \in \mathcal{V}$, we have
\begin{align}
\E\left[F(\bx_t)-F(\bx_{t-1})\right]
&= -\E\left[\langle\nabla F(\bx'_t),\bm_t\rangle\right] \nonumber \\
&= \E\left[\langle-\bg_t,\bm_t-\bu_j\rangle\right]
+\E\left[\langle\bg_t-\nabla F(\bx'_t),\bm_t\rangle\right]
-\E\left[\langle\bg_t,\bu_j\rangle\right] \nonumber \\
&= \E\left[\langle-\bg_t,\bm_t-\bu_j\rangle\right]
-\E\left[\langle\bg_t,\bu_j\rangle\right] \nonumber\\
&= \E\left[\langle-\bg_t,\bm_t-\bu_j\rangle\right]
+\E\left[\langle-\nabla F(\bx'_t),\bu_j\rangle\right] \nonumber\\
&\quad +\E\left[\langle\nabla F(\bx'_t)-\bg_t,\bu_j\rangle\right],
\label{eq:ftc-step}
\end{align}
where in the second-to-last equality we used that $\bm_t$ is determined before querying $\bg_t$, together with the unbiasedness condition $\E[\bg_t \mid \mathscr{F}_t] = \nabla F(\bx'_t)$.

We now analyze one cycle, and then sum over cycles.
Fix a cycle $j$ and let its time indices be $t=(j-1)K+1, \dots, jK$.
Summing \eqref{eq:ftc-step} over $t=(j-1)K+1, \dots, jK$, we obtain
\begin{align}
\E\left[F(\bx_{jK})-F(\bx_{(j-1)K})\right]
&= \E\left[\sum_{t=(j-1)K+1}^{jK}\langle -\bg_t,\bm_t-\bu_j\rangle\right] \nonumber\\
&\quad - \E\left[\sum_{t=(j-1)K+1}^{jK} \langle\nabla F(\bx'_t),\bu_j\rangle\right] \nonumber \\
&\quad + \E\left[\sum_{t=(j-1)K+1}^{jK} \langle\nabla F(\bx'_t)-\bg_t,\bu_j\rangle\right]~. \label{eq:cycle-ftc}
\end{align}
We now focus on each term on the r.h.s. of this equality.

For the first term on the r.h.s. of~\eqref{eq:cycle-ftc}, for any $\bu_j\in\mathcal{V}$, from the regret guarantee of projected \ac{OGD} with learning rate $\eta=D/(G\sqrt{K})$ and initial point equal to $\boldsymbol{0}$, we have
\begin{align*}
\sum_{t=(j-1)K+1}^{jK}\langle -\bg_t,\bm_t-\bu_j\rangle
&\leq \frac{\|\bu_j\|_2^2}{2\eta} + \frac{\eta}{2} \sum_{t=(j-1)K+1}^{jK} \|\bg_t\|_2^2
\leq \frac{D^2}{2\eta} + \frac{\eta}{2} \sum_{t=(j-1)K+1}^{jK} \|\bg_t\|_2^2~.
\end{align*}
Since this inequality holds for every $\bu_j$, we can use a hindsight choice depending on the gradients in cycle $j$.
Moreover, taking expectation and using the definition of $\eta$, we have
\begin{equation}
\E\left[\sum_{t=(j-1)K+1}^{jK}\langle -\bg_t,\bm_t-\bu_j\rangle\right]
\leq \frac{D^2}{2\eta} + \frac{\eta}{2} K G^2
= D G \sqrt{K}~. \label{eq:proof_non_convex_conversion_eq1}
\end{equation}

Now, we choose $\bu_j$ as
\[
\bu_j
:=
D\,\frac{\sum_{t=(j-1)K+1}^{jK}\nabla F(\bx'_t)}{\left\|\sum_{t=(j-1)K+1}^{jK}\nabla F(\bx'_t)\right\|_2}
\quad \text{(and } \bu_j=\boldsymbol{0} \text{ if the numerator is } \boldsymbol{0}\text{)}~.
\]

Our choice of $\bu_j$ guarantees $\|\bu_j\|_2\le D$, and the second term of the r.h.s. of \eqref{eq:cycle-ftc} becomes
\[
\E\left[\sum_{t=(j-1)K+1}^{jK}\langle\nabla F(\bx'_t),\bu_j\rangle\right]
= D\E\left[\left\|\sum_{t=(j-1)K+1}^{jK}\nabla F(\bx'_t)\right\|_2\right]~.
\]

Now, we upper bound the last term in \eqref{eq:cycle-ftc}. First, observe that
\[
\E[\|\bg_t-\nabla F(\bx'_t)\|_2^2|\mathscr{F}_t]
= \E[\|\bg_t\|_2^2|\mathscr{F}_t]-\|\nabla F(\bx'_t)\|_2^2
\leq G^2,
\]
and
\[
\E[\langle\bg_t-\nabla F(\bx'_t),\bg_n-\nabla F(\bx'_n)\rangle |\mathscr{F}_t]=0, \quad \forall n< t~.
\]
Hence, we have
\begin{align*}
\E\left[\sum_{t=(j-1)K+1}^{jK} \langle\nabla F(\bx'_t)-\bg_t,\bu_j\rangle\right]
&\leq D \E\left[\left\|\sum_{t=(j-1)K+1}^{jK} (\bg_t-\nabla F(\bx'_t))\right\|_2\right] \\
&\leq D\sqrt{\E\left[\left\|\sum_{t=(j-1)K+1}^{jK} (\bg_t-\nabla F(\bx'_t))\right\|^2_2\right]}\\
&= D\sqrt{\E\left[\sum_{t=(j-1)K+1}^{jK} \|\bg_t-\nabla F(\bx'_t)\|^2_2\right]}\\
&\leq G D\sqrt{K},
\end{align*}
where we used the Cauchy--Schwarz inequality\index{inequality!Cauchy--Schwarz} and Jensen's inequality\index{inequality!Jensen's} (Theorem~\ref{thm:jensen}).

Putting everything together, we have
\[
D \, \E\left[\left\|\sum_{t=(j-1)K+1}^{jK}\nabla F(\bx'_t)\right\|_2\right]
\le 2DG\sqrt{K} + \E\left[F(\bx_{(j-1)K})-F(\bx_{jK})\right]~.
\]
Summing over $j=1,\dots,T/K$ telescopes the function values:
\begin{align*}
D\sum_{j=1}^{T/K} \E\left[\left\|\sum_{t=(j-1)K+1}^{jK}\nabla F(\bx'_t)\right\|_2\right]
&\leq 2\frac{T}{K}\,DG\sqrt{K} + F(\bx_0)-\E[F(\bx_T)]\\
&\leq 2\frac{T}{K}\,DG\sqrt{K} + F(\bx_0)-\inf_{\bx} F(\bx)~.
\end{align*}
Dividing by $DT$ gives the stated bound.

Now observe that within each cycle we have $\|\bx_t-\bx_{t-1}\|_2=\|\bm_t\|_2\le D$.
Hence all points $\bx'_t$ lie in the convex hull\index{convex hull} of $\{\bx_{(j-1)K},\bx_{(j-1)K+1},\dots,\bx_{jK}\}$ whose diameter is at most $KD$ (triangle inequality over at most $K$ steps).
Since $\bar{\bx}_j=\frac1K\sum_{t=(j-1)K+1}^{jK}\bx'_t$, every $\bx'_t$ lies at a distance at most the diameter of the set from $\bar{\bx}_j$, i.e., $KD$.
Therefore, if $D=\delta/K$, then for all $t$ in cycle $j$ we have $\bx'_t \in \mathcal{B}(\bar{\bx}_j,\delta)$.
By Definition~\ref{def:delta-grad-norm},
\[
\|\nabla F(\bar{\bx}_j)\|_\delta
\le \left\|\frac{1}{K}\sum_{t=(j-1)K+1}^{jK}\nabla F(\bx'_t)\right\|_2~.
\]
Sampling $J$ uniformly over cycles and taking expectation yields the stated barycentric $(\delta,\epsilon)$-stationarity bound.
Choosing $K$ to balance the two terms gives the stated rate.
\end{proof}

It is interesting to compute the update $\bm_t$ in Algorithm~\ref{alg:online-to-nonconvex} when $\mathscr{A}$ is projected OGD:
\[
\bm_{t+1} = \Pi_{\mathcal{V}}(\bm_{t} + \eta \bg_t)~.
\]
This is reminiscent of the SGD update with momentum and clipping, which are common heuristics for optimizing non-convex objectives in deep learning. Yet, here the update comes naturally from the theory.

We can also prove that this bound is optimal. To see this, we consider smooth functions and the following lemma.
\begin{lemma}
\label{lemma:smooth_non_convex_reduction}
Suppose that $F:\R^d \to \R$ is $H$-smooth\index{function!smooth} and $\bx$ satisfies $\|\nabla F (\bx)\|_\delta \leq \epsilon$. Then, $\|\nabla F(\bx)\|_2 \leq \epsilon + H\delta$.
\end{lemma}
\begin{proof}
Using the assumption that $\|\nabla F(\bx)\|_\delta\leq \epsilon$, for any $p>0$ there exists $Q\in\mathcal{Q}(\bx,\delta)$ such that $\left\|\E_{\by\sim Q}[\nabla F(\by)]\right\|_2 \leq \epsilon + p$.
By definition of $\mathcal{Q}(\bx,\delta)$, we have $\text{supp}(Q)\subseteq \mathcal{B}(\bx,\delta)$. So, by $H$-smoothness, for every $\by\in\text{supp}(Q)$, we have $\|\nabla F(\by)-\nabla F(\bx)\|_2 \leq H\|\by-\bx\|_2 \leq H\delta$.
Hence,
\begin{align*}
\epsilon + p
& \geq \left\|\E_{\by\sim Q}[\nabla F(\by)]\right\|_2
= \left\|\nabla F(\bx) + \E_{\by\sim Q}\big[\nabla F(\by)-\nabla F(\bx)\big]\right\|_2 \\
&\geq \|\nabla F(\bx)\|_2 - \left\|\E_{\by\sim Q}\big[\nabla F(\by)-\nabla F(\bx)\big]\right\|_2 \\
&\geq \|\nabla F(\bx)\|_2 - \E_{\by\sim Q}\left[\|\nabla F(\by)-\nabla F(\bx)\|_2\right]
\geq \|\nabla F(\bx)\|_2 - H\delta~.
\end{align*}
Therefore, $\|\nabla F(\bx)\|_2 \leq \epsilon + H\delta + p$. Since this holds for every $p>0$, we conclude that $\|\nabla F(\bx)\|_2 \leq \epsilon + H\delta$.
\end{proof}

Now, recall that Theorem~\ref{thm:non_convex_conversion} shows that we can find a $(\delta, \epsilon)$ barycentric stationary point in $\mathcal{O}(\epsilon^{-3} \delta^{-1})$ iterations. Thus, Lemma~\ref{lemma:smooth_non_convex_reduction} implies that by setting $\delta = \epsilon/H$, we can find a $2\epsilon$-stationary point of an $H$-smooth\index{function!smooth} objective $F$ in $\mathcal{O}(\epsilon^{-4})$ iterations, which matches the optimal guarantee of standard SGD.
Hence, the $\mathcal{O}(\epsilon^{-3}\delta^{-1})$ rate in Theorem~\ref{thm:non_convex_conversion} is optimal for all $\epsilon$ of the order of $\delta H$.

\section{From Online Learning to Time-Uniform Concentration Inequalities}

In this section, we will use the existence of a universal portfolio algorithm to derive time-uniform concentration inequalities.

For a sequence of random variables $Z_1, Z_2, \dots$, assume that $\E[Z_t| Z_1, \dots, Z_{t-1}]=\mu$ for all $t$. Also, assume that $Z_t$ takes values in $[0,1]$. We want to estimate $\mu$, giving confidence intervals $[l_t, u_t]$ that are valid for any $t$ with probability at least $1-\delta$, that is, $\Pr\{\exists t : \mu \notin [l_t, u_t]\} \leq \delta$. This kind of confidence intervals are usually called \emph{confidence sequences}\index{confidence sequence}.
The classic approach is to estimate $\mu$ with $\hat{\mu}_t = \frac{1}{t} \sum_{i=1}^t Z_i$, then invoke a concentration inequality that holds uniformly over time to obtain the confidence intervals. However, most of the well-known concentration inequalities produce \emph{vacuous} confidence intervals when $t$ is small, that is, $u_t-l_t>1$ for all $t$ smaller than some constant.

Here, we will see how a portfolio algorithm immediately gives rise to non-vacuous time-uniform confidence intervals, even with a single sample!

The idea is the following: we will construct a continuous coin-betting game on a fair coin from the problem of estimating the unknown mean. Then, we will use the following theorem, which says that the probability of making a large amount of money at any moment in time is small.
\begin{theorem}[Ville's inequality]
\index{inequality!Ville's|textbf}
Let $Y_0, Y_1, \dots,$ be a non-negative supermartingale\index{supermartingale} and $\delta \in (0,1]$. Then, we have
\[
\Pr\left\{\max_t \ Y_t \geq \frac{1}{\delta}\right\}\leq \E[Y_0]\delta~.
\]
\end{theorem}

\noindent\textbf{Warm-up: From \ac{KT} to a concentration inequality.}
As a warm-up example, consider the \ac{KT} algorithm that bets $\beta_t \Wealth_{t-1}$ in round $t$ on the outcome of the continuous coin $c_t=Z_t-\mu$. Let $\mathscr{F}_t:=\sigma(Z_1,\dots,Z_t)$. Then, $(\Wealth_t)_{t\ge0}$ is a martingale\index{martingale}, and hence also a supermartingale\index{supermartingale}, with respect to the filtration $(\mathscr{F}_t)_{t\ge0}$.
As in Example~\ref{example:wealth_martingale}, we have
\begin{align*}
\E[\Wealth_t\mid \mathscr{F}_{t-1}]
&= \E[\Wealth_{t-1}(1+\beta_t c_t)\mid \mathscr{F}_{t-1}] \\
&= \Wealth_{t-1}\E[1+\beta_t (Z_t-\mu)\mid \mathscr{F}_{t-1}]
= \Wealth_{t-1}~.
\end{align*}
Moreover, the wealth is non-negative because \ac{KT} guarantees a non-negative wealth on any sequence of coins.

So, starting with \$1, using Ville's inequality\index{inequality!Ville's} and Theorem~\ref{thm:kt2}, we obtain
\begin{align*}
\delta
&\geq \Pr\left\{\max_t \Wealth_t \geq \frac{1}{\delta}\right\}
= \Pr\left\{\max_t \ln \Wealth_t \geq \ln\frac{1}{\delta}\right\}\\
&\geq \Pr\left\{\max_t \frac{1}{2t}\left(\sum_{i=1}^t c_i\right)^2 - \frac{1}{2}\ln (e^2 t)\geq \ln \frac{1}{\delta}\right\}~.
\end{align*}
Equivalently, with probability at least $1-\delta$, we have uniformly over $t$ that
\[
\left|\frac{1}{t}\sum_{i=1}^t Z_i-\mu\right|
= \frac{1}{t}\left|\sum_{i=1}^t c_i\right|
\leq \sqrt{\frac{2}{t}\ln \frac{\sqrt{e^2 t}}{\delta}}~.
\]
Observe that $Y_t=\sum_{i=1}^t (Z_i-\mu)$ is a martingale\index{martingale} with respect to the filtration $\mathscr{F}_t:=\sigma(Z_1,\dots,Z_t)$.
So, let's compare this concentration to Hoeffding--Azuma inequality\index{inequality!Hoeffding--Azuma} in Theorem~\ref{thm:azuma}.
We almost get the same thing, but here we have an additional $\ln\sqrt{t}$ term. However, this concentration is uniform over time, which justifies the additional term in the logarithm. To summarize, the regret/reward guarantee of \ac{KT} implies a concentration inequality.

\noindent\textbf{From Universal Portfolio to a concentration inequality.}
\index{Universal Portfolio algorithm|(}
Let's now improve this reasoning using a universal portfolio algorithm to bet on the same outcomes.
Similarly to Section~\ref{sec:port_to_cb}, we will transform the random variables $Z_t$ to a sequence of two market gains. Set $w_{t,1}=1+\frac{Z_t-\mu}{\mu}$ and $w_{t,2} = 1 - \frac{Z_t-\mu}{1-\mu}$. Given that $Z_t - \mu \in [-\mu, 1-\mu]$, we have that $w_{t,1}$ and $w_{t,2}$ are non-negative. As before, we have that the wealth will be a martingale\index{martingale}.
So, we have
\[
\max_{\bu \in \Delta} \ \sum_{i=1}^t \ln \langle \bw_i,\bu\rangle
= \max_{\beta \in [-\frac{1}{1-\mu},\frac{1}{\mu}]} \ \sum_{i=1}^t \ln\left(1+(Z_i-\mu)\beta\right)~.
\]
Reasoning as above, we have
\[
\Pr\left\{\max_t \max_{\beta \in [-\frac{1}{1-\mu},\frac{1}{\mu}]} \ \sum_{i=1}^t \ln\left(1+(Z_i-\mu)\beta\right) - \Regret_t\geq \ln \frac{1}{\delta}\right\}
\leq \delta,
\]
where $\Regret_t$ is the regret of the portfolio algorithm with 2 stocks.
If we use a portfolio algorithm with logarithmic regret, given that $[-1,1]\subset [-\frac{1}{1-\mu},\frac{1}{\mu}]$, this expression always gives a bigger wealth than the \ac{KT} strategy we have just seen. In turn, a bigger wealth corresponds to a tighter concentration.

Let's be more precise by instantiating the above idea with an $F$-weighted portfolio algorithm from Chapter~\ref{ch:universal_portfolio}, with $F$ equal to the Dirichlet(1/2,1/2) distribution.
\begin{theorem}
\label{thm:portfolio_to_ci}
Let $\delta \in (0,1)$. Assume $Z_1, Z_2, \dots$ are a sequence of random variables such that for each $i$ we have $0 \leq Z_i \leq 1$ and $\E[Z_i| Z_1, \dots, Z_{i-1}]=\mu$ almost surely and $\mu \in (0,1)$.
Let $G_t(\beta,\mu) := \sum_{i=1}^t \ln\left(1+ \beta (Z_i-\mu)\right)$, $R_t := \ln\frac{\sqrt{\pi}\Gamma(t+1)}{\Gamma(t + \frac 1 2)}$, and
\[
\mathcal{S}_t:=\left\{m \in (0,1): \max_{\beta \in [-\frac{1}{1-m},\frac{1}{m}]} G_t(\beta,m) - R_t \leq  \ln\frac{1}{\delta}\right\}~.
\]
Then, with probability at least $1-\delta$ and uniformly over $t$, after observing $t$ random variables we have $\mu \in \cap_{i=1}^t \mathcal{S}_i$.

Moreover, we have that $\mathcal{S}_t$ is an interval $[l_t,u_t] \subseteq [0,1]$ for all $t\geq 1$.
\end{theorem}
\begin{proof}
Following the reasoning above and the fact that the regret of the $F$-weighted portfolio algorithm with 2 stocks and $F=Dirichlet(1/2,1/2)$ is upper bounded by $R_t$ using Theorem~\ref{thm:universal_portfolio}, we get that $\mu \in \mathcal{S}_t$. From the fact that this holds with probability $1-\delta$ uniformly over time, we get that at time $t$ $\mu$ must be in the intersection of the sets $\mathcal{S}_1, \dots, \mathcal{S}_t$.

For the second claim, first denote by $\beta^\star(m):=\argmax_{\beta \in [-\frac{1}{1-m},\frac{1}{m}]}\ G_t(\beta,m)$ and $\hat{G}_t(m)=\max_{\beta \in [-\frac{1}{1-m},\frac{1}{m}]} G_t(\beta, m)$.

Observe that the derivative of $G_t$ with respect to its first argument is
\[
G'_t(\beta,m)= \sum_{i=1}^t \frac{Z_i-m}{1+\beta (Z_i -m)}~.
\]
So, we have that $G'_t(0,\hat{\mu}_t)=0$. Given that $G_t(\beta,m)$ is concave in $\beta$, then $G_t(\beta,\hat{\mu}_t)$ has a maximum with respect to the first argument in $\beta=0$ and the value of the function is 0.

For $m'>\hat{\mu}_t$, $G'_t(0,m')<0$.
Since $G_t(\beta,m')$ is concave in $\beta$, we have $\beta^\star(m')<0$.
In the same way, for $m'<\hat{\mu}_t$ we have $\beta^\star(m')>0$.

Now, let us start with $m'>\hat{\mu}_t$ and prove that $\hat{G}_t(m)$ is nondecreasing; the other side is analogous. Consider $m_1>m_2>\hat{\mu}_t$.
Given that $\beta^\star(m_2)<0$, we have
\[
\hat{G}_t(m_2)
= G_t(\beta^\star(m_2),m_2)
\leq G_t(\beta^\star(m_2),m_1)
\leq G_t(\beta^\star(m_1),m_1)
= \hat{G}_t(m_1),
\]
where the first inequality holds because $G_t(\beta,m)$ is nondecreasing in $m$ when $\beta<0$ and the second inequality holds because the negative part of the interval $[-\frac{1}{1-m_1},\frac{1}{m_1}]$ contains the negative part of the interval $[-\frac{1}{1-m_2},\frac{1}{m_2}]$ and we know the maximum $\beta$ is negative.
Hence, $\hat{G}_t(m)$ is a quasiconvex function of $m$, and hence $\mathcal{S}_t$ is an interval.
\index{Universal Portfolio algorithm|)}
\end{proof}
\begin{remark}
Given that $\mathcal{S}_t$ is an interval, we can find $\mathcal{S}_t=[l_t, u_t]$ efficiently using the bisection algorithm.
\end{remark}

Now, we gather some more intuition on the inequality of Theorem~\ref{thm:portfolio_to_ci}.
It may not seem obvious if the maximum log wealth is a better candidate for constructing a confidence sequence than the standard ones like Bernoulli \ac{KL}-divergence-based bound~\cite[e.g.,][Theorem 10]{GarivierC11}, which works for random variables supported in $[0,1]$:
\begin{align*}
  \Pr\left\{\max_{t} \ t\cdot \KLBern(\hat{\mu}_t; \mu) - \ln f(t) \ge \ln \frac{1}{\delta}\right\} \le \delta,
\end{align*}
where $\hat{\mu}_t=\frac{1}{t} \sum_{i=1}^t Z_i$, $\KLBern(p;q):=p \ln \frac{p}{q}+(1-p)\ln\frac{1-p}{1-q}$\index{Kullback--Leibler divergence!between Bernoulli distributions}, and $f(t)$ grows polynomially in $t$ or slower.
So, in the following proposition we show that the maximum log wealth is never worse than the \ac{KL} divergence\index{Kullback--Leibler divergence}, which supports a viewpoint that the \ac{KL} divergence is a special case of the maximum log wealth and that confidence bounds constructed with the maximum wealth are never worse than those with \ac{KL} divergence, ignoring the minor difference in $\ln f(t)$.
\begin{proposition}
\label{prop:kl}
Let $Z_1,\dots,Z_t \in [0,1]$, $\hat{\mu}_t = \frac1t \sum_{i=1}^t Z_i$, and $\mu \in (0,1)$.
Then,
\[
\max_{\beta \in [-\frac{1}{1-\mu},\frac{1}{\mu}]} \ \sum_{i=1}^t \ln\left(1+ \beta (Z_i-\mu)\right)
\geq t\cdot \KLBern\left(\hat{\mu}_t;\mu\right),
\]
where we achieve the equality if $Z_1,\dots,Z_t \in \{0,1\}$ almost surely.
\end{proposition}
\begin{proof}
Using Jensen's inequality\index{inequality!Jensen's} (Theorem~\ref{thm:jensen}), we have for any $Z\in[0,1]$ that
\begin{align*}
\ln(1 + \beta(Z - \mu))
&= \ln[Z(1 + \beta(1-\mu)) + (1-Z) (1+\beta(0-\mu))]
\\&\ge Z \ln(1 + \beta(1-\mu)) + (1-Z)\ln (1 + \beta(0-\mu))~.
\end{align*}
Note that we achieve equality when $Z = 1$ or $Z = 0$.
Then, we have
\begin{align*}
&\max_{\beta \in [-\frac{1}{1-\mu},\frac{1}{\mu}]} \ \sum_{i=1}^t \ln(1 + \beta (Z_i - \mu))\\
&\quad\ge \max_{\beta \in [-\frac{1}{1-\mu},\frac{1}{\mu}]} \ \sum_i Z_i \ln(1 + \beta(1-\mu)) + (1-Z_i)\ln (1 - \beta \mu)
\\
&\quad= \max_{\beta \in [-\frac{1}{1-\mu},\frac{1}{\mu}]} \ t [ \hat{\mu}_t \ln(1 + \beta(1-\mu)) + (1-\hat{\mu}_t)\ln (1 - \beta\mu) ]~.
\end{align*}
As the r.h.s. is concave in $\beta$, it remains to maximize the r.h.s. over $\beta$.
The solution is $\beta = \frac{\hat{\mu}_t - \mu}{\mu(1-\mu)}$ with which the maximum becomes $t \cdot \KLBern(\hat{\mu}_t, \mu)$.
\end{proof}

Second, we show that the confidence intervals obtained by the numerical inversion of Theorem~\ref{thm:portfolio_to_ci} are \emph{never vacuous}.
\begin{theorem}
\label{thm:never_vacuous}
Under the assumptions of Theorem~\ref{thm:portfolio_to_ci}, we have that $u_1-l_1=1-\frac{\delta}{2}$.
\end{theorem}
\begin{proof}
With only one sample, the upper and lower bounds have a closed form.
Indeed, the argmax of $G_t(\beta,m)$ with respect to $\beta$ over $[-\frac{1}{1-m},\frac{1}{m}]$ is achieved in $\beta=\frac{1}{m}$ if $Z_1-m>0$ and in $\beta=-\frac{1}{1-m}$ for $Z_1-m<0$. This implies that
\begin{align*}
l_1 = \frac{Z_1}{\exp(R_1+ \ln \tfrac{1}{\delta})}
&& \text{and} &&
u_1 = 1-\frac{1-Z_1}{\exp(R_1+ \ln \tfrac{1}{\delta})}~.
\end{align*}
Given that $R_1=\ln 2$, subtracting the lower bound from the upper bound, we get the stated bound for any $Z_1$.
\end{proof}
Given that we can return the intersection of all intervals $[l_t, u_t]$, the above theorem implies the same bound for all $t$.

Finally, besides implying the \ac{KL} bound above, the implicit concentration in Theorem~\ref{thm:portfolio_to_ci} also implies an empirical Bernstein time-uniform concentration. It is worth stressing that the intersection of the numerically evaluated intervals $[l_i, u_i]$ from $i=1, \dots, t$ is strictly smaller than the upper bound in the following theorem, given that Theorem~\ref{thm:never_vacuous} tells us that the widths of numerically calculated intervals are always strictly smaller than 1.
\begin{theorem}
\label{thm:approx_inversion}
Under the assumptions of Theorem~\ref{thm:portfolio_to_ci},
denote by $\hat{\mu}_i = \frac{1}{i}\sum_{j=1}^i Z_j$, $V_i = \sum_{j=1}^i (Z_j - \hat{\mu}_i)^2$, $\hat{R}_i =  \ln\frac{\sqrt{\pi}\Gamma(i+1)}{\delta\Gamma(i + \frac 1 2)}$, and
\[
\epsilon_i = \frac{(4/3) i \hat{R}_i +\sqrt{16/9 i^2 \hat{R}_i^2+8 V_i \hat{R}_i(i^2-2 i \hat{R}_i)}}{2 i^2-4i \hat{R}_i}~.
\]
Then, with probability at least $1-\delta$ uniformly for all $t$ such that $t> 2 \hat{R}_t$, we have
\[
\max_{i=1,\dots,t} \ \hat{\mu}_i - \epsilon_i \leq \mu \leq \min_{i=1,\dots,t} \ \hat{\mu}_i + \epsilon_i~.
\]
\end{theorem}
For sufficiently large $t$ the deviation is roughly $\frac{(4/3) \ln(\sqrt{t}/\delta)}{t}+\frac{\sqrt{2 V_t \ln(\sqrt{t}/\delta)}}{t}$, similarly to the inequalities first proposed in \citet{AudibertMC09,MaurerP09}.
\begin{remark}
One can easily see that, in terms of the scaling with $\ln(1/\delta)$, the factor $\sqrt{2(\frac1tV_t)/t}$ is the optimal one due to the central limit theorem.
For the scaling with $t$, by changing the weight distribution $F$ it is also possible to obtain $\sqrt{\frac{2 (\frac1t V_t)\ln(\ln(V_t))+o(\ln \ln t)}{t}}$ as $t\to \infty$, which matches the law of the iterated logarithm (thus asymptotically optimal), see \citet{OrabonaJ21}.
\end{remark}

We can see the performance of this approach compared to a state-of-the-art closed-form concentration in Figures~\ref{fig:precise1}-\ref{fig:precise4}. We repeat each experiment 10 times and use the first 5 in which all the algorithms do not fail. In fact, all the algorithms/inequalities will fail with probability $\delta$, and averaging over their failing runs would result in confidence sequences with smaller widths. We set $\delta=0.05$ in all the experiments.


\begin{figure}[t]
\centering
\begin{tikzpicture}
\begin{axis}[
    width=6cm,
    xlabel={Number of samples}, ylabel={Confidence sets},
    grid=both,
    xmode=log,
    xmin=1,
    xmax=100000,
    xtick={1,10,100,1000,10000,100000},
    ytick={0, 0.2, 0.4, 0.6, 0.8, 1},
    ymin=0,
    ymax=1,
    legend style={font=\tiny},
    legend image post style={
        scale=0.5,
    },
    legend cell align={left},
]
\addplot[thick] table[col sep=comma, x=t, y=pu] {code_for_figs/Bernoulli_0.1.csv};
\addplot[gray, thick] table[col sep=comma, x=t, y=cu] {code_for_figs/Bernoulli_0.1.csv};
\addlegendentry{Theorem~\ref{thm:portfolio_to_ci}}
\addlegendentry{Howard et al. (2021), equation (4.2)}
\addplot[thick] table[col sep=comma, x=t, y=pl] {code_for_figs/Bernoulli_0.1.csv};
\addplot[gray, thick] table[col sep=comma, x=t, y=cl] {code_for_figs/Bernoulli_0.1.csv};
\end{axis}
\end{tikzpicture}
\hspace{1cm}
\begin{tikzpicture}
\begin{axis}[
    width=6cm,
    xlabel={Number of samples}, ylabel={Confidence sets},
    grid=both,
    xmode=log,
    xmin=1,
    xmax=100000,
    xtick={1,10,100,1000,10000,100000},
    ytick={0, 0.2, 0.4, 0.6, 0.8, 1},
    ymin=0,
    ymax=1,
    legend style={font=\tiny},
    legend image post style={
        scale=0.5,
    },
    legend cell align={left},
]
\addplot[thick] table[col sep=comma, x=t, y=pw] {code_for_figs/Bernoulli_0.1_width.csv};
\addplot[gray, thick] table[col sep=comma, x=t, y=cw] {code_for_figs/Bernoulli_0.1_width.csv};
\addlegendentry{Theorem~\ref{thm:portfolio_to_ci}}
\addlegendentry{Howard et al. (2021), equation (4.2)}
\end{axis}
\end{tikzpicture}
\caption{Confidence intervals for Theorem~\ref{thm:portfolio_to_ci} and \citet{HowardRMS21} for a sequence of random variables drawn i.i.d. from Bernoulli(0.1).}
\label{fig:precise1}
\commentAlt{Figure~\ref{fig:precise1}. Two log-scale plots for Bernoulli(0.1) data comparing Theorem~\ref{thm:portfolio_to_ci} confidence intervals with the Howard et al. (2021) intervals. The left panel shows upper and lower confidence bounds over sample size; the right panel shows confidence-set widths.}
\end{figure}


\begin{figure}[t]
\centering
\begin{tikzpicture}
\begin{axis}[
    width=6cm,
    xlabel={Number of samples}, ylabel={Confidence sets},
    grid=both,
    xmode=log,
    xmin=1,
    xmax=100000,
    xtick={1,10,100,1000,10000,100000},
    ytick={0, 0.2, 0.4, 0.6, 0.8, 1},
    ymin=0,
    ymax=1,
    legend style={font=\tiny},
    legend image post style={
        scale=0.5,
    },
    legend cell align={left},
]
\addplot[thick] table[col sep=comma, x=t, y=pu] {code_for_figs/Bernoulli_0.5.csv};
\addlegendentry{Theorem~\ref{thm:portfolio_to_ci}}
\addlegendentry{Howard et al. (2021), equation (4.2)}
\addplot[gray, thick] table[col sep=comma, x=t, y=cu] {code_for_figs/Bernoulli_0.5.csv};
\addplot[thick] table[col sep=comma, x=t, y=pl] {code_for_figs/Bernoulli_0.5.csv};
\addplot[gray, thick] table[col sep=comma, x=t, y=cl] {code_for_figs/Bernoulli_0.5.csv};
\end{axis}
\end{tikzpicture}
\hspace{1cm}
\begin{tikzpicture}
\begin{axis}[
    width=6cm,
    xlabel={Number of samples}, ylabel={Confidence sets},
    grid=both,
    xmode=log,
    xmin=1,
    xmax=100000,
    xtick={1,10,100,1000,10000,100000},
    ytick={0, 0.2, 0.4, 0.6, 0.8, 1},
    ymin=0,
    ymax=1,
    legend style={font=\tiny},
    legend image post style={
        scale=0.5,
    },
    legend cell align={left},
]
\addplot[thick] table[col sep=comma, x=t, y=pw] {code_for_figs/Bernoulli_0.5_width.csv};
\addplot[gray, thick] table[col sep=comma, x=t, y=cw] {code_for_figs/Bernoulli_0.5_width.csv};
\addlegendentry{Theorem~\ref{thm:portfolio_to_ci}}
\addlegendentry{Howard et al. (2021), equation (4.2)}
\end{axis}
\end{tikzpicture}
\caption{Confidence intervals for Theorem~\ref{thm:portfolio_to_ci} and \citet{HowardRMS21} for a sequence of random variables drawn i.i.d. from Bernoulli(0.5).}
\label{fig:precise2}
\commentAlt{Figure~\ref{fig:precise2}. Two log-scale plots for Bernoulli(0.5) data comparing Theorem~\ref{thm:portfolio_to_ci} confidence intervals with the Howard et al. (2021) intervals. The left panel shows upper and lower confidence bounds over sample size; the right panel shows confidence-set widths.}
\end{figure}

\begin{figure}[t]
\centering
\begin{tikzpicture}
\begin{axis}[
    width=6cm,
    xlabel={Number of samples}, ylabel={Confidence sets},
    grid=both,
    xmode=log,
    xmin=1,
    xmax=100000,
    xtick={1,10,100,1000,10000,100000},
    ytick={0, 0.2, 0.4, 0.6, 0.8, 1},
    ymin=0,
    ymax=1,
    legend style={font=\tiny},
    legend image post style={
        scale=0.5,
    },
    legend cell align={left},
]
\addplot[thick] table[col sep=comma, x=t, y=pu] {code_for_figs/Beta_1,_1.csv};
\addplot[gray, thick] table[col sep=comma, x=t, y=cu] {code_for_figs/Beta_1,_1.csv};
\addlegendentry{Theorem~\ref{thm:portfolio_to_ci}}
\addlegendentry{Howard et al. (2021), equation (4.2)}
\addplot[thick] table[col sep=comma, x=t, y=pl] {code_for_figs/Beta_1,_1.csv};
\addplot[gray, thick] table[col sep=comma, x=t, y=cl] {code_for_figs/Beta_1,_1.csv};
\end{axis}
\end{tikzpicture}
\hspace{1cm}
\begin{tikzpicture}
\begin{axis}[
    width=6cm,
    xlabel={Number of samples}, ylabel={Confidence sets},
    grid=both,
    xmode=log,
    xmin=1,
    xmax=100000,
    xtick={1,10,100,1000,10000,100000},
    ytick={0, 0.2, 0.4, 0.6, 0.8, 1},
    ymin=0,
    ymax=1,
    legend style={font=\tiny},
    legend image post style={
        scale=0.5,
    },
    legend cell align={left},
]
\addplot[thick] table[col sep=comma, x=t, y=pw] {code_for_figs/Beta_1,_1_width.csv};
\addplot[gray, thick] table[col sep=comma, x=t, y=cw] {code_for_figs/Beta_1,_1_width.csv};
\addlegendentry{Theorem~\ref{thm:portfolio_to_ci}}
\addlegendentry{Howard et al. (2021), equation (4.2)}
\end{axis}
\end{tikzpicture}
\caption{Confidence intervals for Theorem~\ref{thm:portfolio_to_ci} and \citet{HowardRMS21} for a sequence of random variables drawn i.i.d. from Beta(1,1).}
\label{fig:precise3}
\commentAlt{Figure~\ref{fig:precise3}. Two log-scale plots for Beta(1,1) data comparing Theorem~\ref{thm:portfolio_to_ci} confidence intervals with the Howard et al. (2021) intervals. The left panel shows upper and lower confidence bounds over sample size; the right panel shows confidence-set widths.}
\end{figure}


\begin{figure}[h]
\centering
\begin{tikzpicture}
\begin{axis}[
    width=6cm,
    xlabel={Number of samples}, ylabel={Confidence sets},
    grid=both,
    xmode=log,
    xmin=1,
    xmax=100000,
    xtick={1,10,100,1000,10000,100000},
    ytick={0, 0.2, 0.4, 0.6, 0.8, 1},
    ymin=0,
    ymax=1,
    legend style={font=\tiny},
    legend image post style={
        scale=0.5,
    },
    legend cell align={left},
]
\addplot[thick] table[col sep=comma, x=t, y=pu] {code_for_figs/Beta_10,_30.csv};
\addplot[gray, thick] table[col sep=comma, x=t, y=cu] {code_for_figs/Beta_10,_30.csv};
\addlegendentry{Theorem~\ref{thm:portfolio_to_ci}}
\addlegendentry{Howard et al. (2021), equation (4.2)}
\addplot[thick] table[col sep=comma, x=t, y=pl] {code_for_figs/Beta_10,_30.csv};
\addplot[gray, thick] table[col sep=comma, x=t, y=cl] {code_for_figs/Beta_10,_30.csv};
\end{axis}
\end{tikzpicture}
\hspace{1cm}
\begin{tikzpicture}
\begin{axis}[
    width=6cm,
    xlabel={Number of samples}, ylabel={Confidence sets},
    grid=both,
    xmode=log,
    xmin=1,
    xmax=100000,
    xtick={1,10,100,1000,10000,100000},
    ytick={0, 0.2, 0.4, 0.6, 0.8, 1},
    ymin=0,
    ymax=1,
    legend style={font=\tiny},
    legend image post style={
        scale=0.5,
    },
    legend cell align={left},
]
\addplot[thick] table[col sep=comma, x=t, y=pw] {code_for_figs/Beta_10,_30_width.csv};
\addplot[gray, thick] table[col sep=comma, x=t, y=cw] {code_for_figs/Beta_10,_30_width.csv};
\addlegendentry{Theorem~\ref{thm:portfolio_to_ci}}
\addlegendentry{Howard et al. (2021), equation (4.2)}
\end{axis}
\end{tikzpicture}
\caption{Confidence intervals for Theorem~\ref{thm:portfolio_to_ci} and \citet{HowardRMS21} for a sequence of random variables drawn i.i.d. from Beta(10,30).}
\label{fig:precise4}
\commentAlt{Figure~\ref{fig:precise4}. Two log-scale plots for Beta(10,30) data comparing Theorem~\ref{thm:portfolio_to_ci} confidence intervals with the Howard et al. (2021) intervals. The left panel shows upper and lower confidence bounds over sample size; the right panel shows confidence-set widths.}
\end{figure}


To prove Theorem~\ref{thm:approx_inversion}, we first need a technical lemma.
\begin{lemma}
  \label{lemma:max_approx_wealth}
  Let $f(x)=a x + b ( \ln(1-|x|)+|x|)$, where $a \in \R$ and $b\geq 0$. Then, $\argmax_{x \in [-1,1]} \ f(x) = \frac{a}{|a|+b}$ and $\max_{x \in [-1,1]} \ f(x) = b\psi(\frac{a}{b}) \geq \frac{a^2}{(4/3)|a|+2b}$, where $\psi(x)=|x|-\ln(|x|+1)$.
\end{lemma}
\begin{proof}
If $b=0$, we have that argmax is $\sign(a)$. If $a=0$, the argmax is 0. Hence, in the following, we can assume $a$ and $b$ are nonzero.

We can rewrite the maximization problem as
\begin{align*}
\argmax_{x} \ f(x)
= b \argmax_{x} \ \frac{a}{b}x + \ln(1-|x|)+|x|~.
\end{align*}
From the optimality condition, we have that
$\frac{a}{b} - \frac{\sign(x^\star)}{1-|x^\star|}+\sign(x^\star) = 0$,
which implies $x^\star=\frac{a}{|a|+b}$.
Substituting this expression in $f$, we obtain the stated expression.
The inequality is obtained by the elementary inequality $\ln(1+x) \le x\cdot\frac{6+x}{6+4x}$ for $x\ge 0$.
\end{proof}

We can now prove Theorem~\ref{thm:approx_inversion}.
\begin{proof}[Proof of Theorem~\ref{thm:approx_inversion}]
For a given $t$, set $\epsilon_t$ equal to $\mu-\hat{\mu}_t$, so that $\epsilon_t+\hat{\mu}_t \in [0,1]$.

Consider the function $f(a) = \frac{\ln(1 + a) - a}{a^2/2}$ for $a>-1$.
Set $|x| \leq 1$ and $|\beta| < 1$, so we have $\beta x \geq -|\beta| > -1$. From the sign of the first derivative, we have that $f(a)$ is increasing. Hence, we have
\[
\ln(1 + \beta x)
= \beta x + \frac12 (\beta x)^2 f(\beta x)
\geq \beta x + \frac12 (\beta x)^2 f(-|\beta|)
= \beta x + x^2(|\beta| + \ln(1 -|\beta|))~.
\]
Define $\phi(\beta):=\ln(1-|\beta|)+|\beta|$, so that, for any $\beta \in (-1,1)$, we have
\begin{align*}
&\sum_{i=1}^t \ln(1+\beta (Z_i-\mu) )\\
&\quad = \sum_{i=1}^t \ln(1+\beta (Z_i-\hat{\mu}_t-\epsilon_t) ) \\
&\quad \geq \beta \sum_{i=1}^t (Z_i-\hat{\mu}_t -\epsilon_t) + \phi(\beta) \left(\sum_{i=1}^t (Z_i-\hat{\mu}_t)^2 + \epsilon_t^2 t - 2 \epsilon_t \sum_{i=1}^t (Z_i-\hat{\mu}_t)\right) \\
&\quad = -\epsilon_t \beta t + \phi(\beta) \left(\sum_{i=1}^t (Z_i-\hat{\mu}_t)^2 +  \epsilon_t^2 t\right) ~.
\end{align*}
Hence, we have
\begin{align*}
&\max_{\beta \in [-1,1]} \  \sum_{i=1}^t \ln(1+\beta (Z_i-\hat{\mu}_t-\epsilon_t)) \\
&\quad= \left(\sum_{i=1}^t(Z_i-\hat{\mu}_t)^2 +  \epsilon_t^2 t\right) \psi\left(\frac{|\epsilon_t| t}{\sum_{i=1}^t(Z_i-\hat{\mu}_t)^2 +  \epsilon_t^2 t }\right),
\end{align*}
where $\psi(x) = |x| - \ln(|x|+1)$ and the equality is due to Lemma~\ref{lemma:max_approx_wealth}. From the inequality in Lemma~\ref{lemma:max_approx_wealth} we also obtain
\[
\max_{\beta \in [-1,1]}\  \sum_{i=1}^t \ln(1+\beta (Z_i-\hat{\mu}_t-\epsilon_t))
\geq \frac{\epsilon_t^2 t^2}{(4/3) |\epsilon_t| t + 2 \sum_{i=1}^t(Z_i-\hat{\mu}_t)^2 +  2 \epsilon_t^2 t}~.
\]

Now, note that for any $\mu \in (0,1)$ the interval $[-1,1]$ is contained in $[-\frac{1}{1-\mu},\frac{1}{\mu}]$.
Hence, from Theorem~\ref{thm:portfolio_to_ci}, uniformly on all $t$ with probability at least $1-\delta$, we have
\begin{align*}
\frac{\epsilon_t^2 t^2}{(4/3) |\epsilon_t| t + 2 \sum_{i=1}^t (Z_i-\hat{\mu}_t)^2 +  2 \epsilon_t^2 t}
\leq R_t + \ln \frac{1}{\delta}
= \hat{R}_t~.
\end{align*}
Assuming $\epsilon_t$ is positive and solving for it, we have the stated upper bound.
By the symmetry of the formula, the expression for negative $\epsilon_t$ has the opposite sign.
\end{proof}

\section{History Bits}

The contraction lemma (Lemma~\ref{lemma:contraction})\index{contraction lemma} appears in many places in the literature~\citep[see, e.g.,][]{LedouxT91,BartlettM01,BartlettM02,MeirZ03}.
\citet{KakadeST08} proved Theorem~\ref{thm:rad_from_regret} using strong convexity, while \citet{KakadeSST09} proved it through the proof of \ac{FTRL}. Here, I simply used a black-box conversion to the linear regret upper bound for any online learning algorithm.
Massart's lemma\index{Massart's lemma} was originally proved by \citet{Massart00}.

\index{PAC-Bayes generalization bound|(}
PAC-Bayes bounds were first proposed by \citet{McAllester98} and, as explained by \citet{vanErven14}, they can be seen as a continuous generalization of the union bound\index{union bound}. There is now a vast literature on this subject, and we refer the reader to the recent review by \citet{Alquier24} for an introduction. Recently, PAC-Bayes bounds gained popularity because they can yield non-vacuous generalization bounds for deep neural networks. Yet, one should not confuse a certificate of generalization with an ``explanation of generalization'' in deep neural networks~\citep[see, e.g.,][]{Picard-WeibelCMG25}.
The reduction from online learning to PAC-Bayes bounds, as well as the idea of using parameter-free algorithms, is from \citet{LugosiN23}. Despite what is claimed in \citet{LugosiN23}, the PAC-Bayes bound in Corollary~\ref{cor:online_to_pacbayes} is not new; it has been obtained by \citet{KakadeST08}. Such a bound shaves off a $\ln T$ term under the square root compared to previous PAC-Bayes guarantees. \citet{LugosiN23} also present many more results based on the same reduction.
Independently, \citet{JangJKO23} proposed another reduction from coin betting to PAC-Bayes. While less general than the one presented here, it allows one to prove tighter results by changing the surrogate loss $\ell_t$ to a non-linear one.
Later, \citet{KuzborskijJWJO24} used a similar approach to show the first PAC-Bayes bound with a better-than-$\KL$ divergence\index{Kullback--Leibler divergence}.
\index{PAC-Bayes generalization bound|)}

\citet{ZhangLJSJ20} proposed to use the Goldstein stationarity condition~\citep{Goldstein77} for non-convex non-smooth objectives. They also proved a suboptimal bound of $\mathcal{O}(\epsilon^{-4}\delta^{-1})$. The barycentric definition, Theorem~\ref{thm:non_convex_conversion}, and Lemma~\ref{lemma:smooth_non_convex_reduction} are from \citet{CutkoskyMO23}. Note that I changed Definition~\ref{def:delta-grad-norm} slightly because Algorithm~\ref{alg:online-to-nonconvex} may return repeated points, which would not be allowed by the original definition in \citet{CutkoskyMO23}.
The barycentric part of the definition is not strictly necessary for the results I presented, yet it is necessary for the additional results in \citet{CutkoskyMO23} on functions whose Hessian is Lipschitz. A nonsmooth generalization of the identity used in the definition of well-behaved functions is given by \citet{BolteP21}, who replace gradients with conservative set-valued fields satisfying an analogous integration identity along absolutely continuous paths.
\citet{ZhangC24} improved the reduction in Algorithm~\ref{alg:online-to-nonconvex} using an exponentially distributed $s_t$, which allows one to query the gradient at $\bx_t$ rather than at the intermediate point $\bx'_t$, and removes the feasible set $\mathcal{V}$.
\citet{JordanKLSZ23} proved that the randomization is essential in this class of problems to achieve dimension-free rates.

The results and proofs from Section~\ref{sec:port_to_cb} are taken from \citet{OrabonaJ21}.
Note that the computational complexity to calculate $\mathcal{S}_t$ in Theorem~\ref{thm:portfolio_to_ci} is $\mathcal{O}(t^2)$. Looser confidence intervals from portfolio algorithms but with complexity $\mathcal{O}(t)$ have been proposed in \citet{OrabonaJ21} and \citet{RyuB22}.
Recently, \citet{VoracekO25} obtained state-of-the-art fixed-time confidence intervals as well, using a betting strategy that takes explicitly into account the temporal horizon and the threshold the algorithm's wealth must pass to test each value of the unknown mean.

Ville's inequality\index{inequality!Ville's} is proved in \citet[Page 84]{Ville39}, where its meaning is exactly associated with betting. In fact, \citet{Ville39} also introduced the concept of martingale\index{martingale} as the wealth of a betting strategy on a fair coin.
The ideas of Ville were later used to design ideal tests for randomness of infinite sequences~\citep{Schnorr71,Levin76,Gacs05}, ``ideal'' because none of these tests is computable.

There are two papers that, at the same time and independently, explicitly link hypothesis testing on a finite sequence of outcomes to betting. One is \citet[Example 3]{Cover74}, where an optimal betting strategy is defined in terms of the null hypothesis.
The other one is \citet{RobbinsS74} that constructs confidence sequences from novel betting schemes and explicitly recognizes the connection between the sequential probability ratio test~\citep{Wald45} and betting. Very surprisingly, \citet[Section 9]{RobbinsS74} seem to have proposed and analyzed the famous strategy of \citet{KrichevskyT81} 7 years before them.
However, while in the information theory literature these ideas flourished and gave birth to results on coding, compression, minimum description length, and gambling, they seem to have disappeared from the statistics community for 30 years. In this view, it is remarkable that \citet{Cover74} was submitted to Annals of Statistics and probably rejected, as it can be inferred from the footnote on its first page.
In fact, in the statistics literature, gambling strategies reappeared only between the 1990s and 2000s, thanks to the book and papers by Shafer and Vovk. In particular, \citet{Vovk93,ShaferV05} aimed to found probabilities on a game-theoretic ground through betting schemes.
However, the foundational approach in \citet{ShaferV05} also means that all the betting strategies they propose do not have closed-form expressions, and they cannot be easily implemented. In addition, they do not use the regret analysis for the study of the proposed betting schemes.
Finally, \citet{ShaferV05} does not contain any explicit concentration inequality, while the first concentration for game-theoretic probability derived by a betting scheme is in \citet{Vovk07}, which derives a game-theoretic Hoeffding's inequality.

As far as I know, the first work to derive an explicit new concentration from the finite-time worst-case guarantee of an online gambling algorithm is in an unpublished work of mine from 2017, shared with a number of researchers in 2017--2018, and later published on arXiv~\citep{Orabona26b}.
This work was based on the seminal work of \citet{RakhlinS17}, which showed an \emph{equivalence} between the regret guarantees of online learning algorithms with linear losses and concentration inequalities. However, the proof technique in \citet{Orabona26b} is different from the one in \citet{RakhlinS17}, and it is specific to online algorithms that guarantee a non-negative exponential wealth for biased inputs. In particular, it allows us to derive time-uniform concentrations, like the law of the iterated logarithm, that are not possible with the method in \citet{RakhlinS17}.

The first paper to consider an implementable strategy for testing through betting is by \citet{Hendriks18}. Directly building on \citet{ShaferV05} and \citet{ShaferSVS11}, \citet{Hendriks18} proposes to construct testing martingales\index{martingale} and confidence sequences for bounded random variables as uniform mixtures of constant betting strategies, effectively a $F$-weighted portfolio with the uniform distribution. \citet{Hendriks18} also showed the good performance of the proposed approach empirically in a simple statistical test. \citet{Waudby-SmithR21} seem to follow the same approach as \citet{Hendriks18} but propose several heuristic betting algorithms to maximize the wealth, as well as a discrete version of the uniform mixture that appeared in \citet{Hendriks18}.
\citet{JunO19}, based on the ideas in \citet{Orabona26b}, show how to easily derive a Law of Iterated Logarithm\index{law of iterated logarithm} for sub-Gaussian random vectors in Banach spaces from the regret analysis of a one-dimensional betting algorithm and the direction/magnitude reduction in Section~\ref{sec:parameterfree-direction-magnitude}.
More recently, \citet{JangJKO23} showed how to derive tighter PAC-Bayes bounds from the regret of portfolio algorithms.

\acresetall

\backmatter
\oneappendix

\chapter{Appendix}

\section{More Convex Analysis}

%
%
%


\begin{theorem}[{Fundamental Theorem of Calculus for Extended-real-valued Convex Functions~\citep[Theorem 2.3.4, page 179]{Hiriart-UrrutyL04}}]
\label{thm:ftc}
\index{fundamental theorem of calculus for extended-real-valued convex functions|textbf}
Let $f : \R^d \to (-\infty, +\infty]$ be proper and convex. For $\bx, \by \in \interior \dom f$ and $t\in [0,1]$, define $\bx_t= t \by + (1-t) \bx$. Then, for any $\bg_t \in \partial f(\bx_t),$ we have
\[
f(\by)-f(\bx)
= \int_{0}^1 \! \langle \bg_t , \by - \bx\rangle \, \mathrm{d}t~.
\]
\end{theorem}
The above theorem hinges on the fact that the derivative of $\phi(t)=f(t \by + (1-t) \bx)$ exists except possibly for a countable set.

%

\begin{definition}
\label{def:hausdorff}
A topological space $\mathcal{X}$ is a \textbf{Hausdorff space}\index{Hausdorff space|textbf} if any two distinct points can be separated by disjoint open sets.
That is, for any $\bx,\by\in\mathcal{X}$ with $\bx\neq \by$, there exist open sets $\mathcal{U},\mathcal{V}\subseteq\mathcal{X}$ such that $\bx\in \mathcal{U}$, $\by\in \mathcal{V}$, and $\mathcal{U}\cap \mathcal{V}=\emptyset$.
\end{definition}

Note that Euclidean spaces are Hausdorff spaces.

\begin{theorem}[Weierstrass theorem for extended-real-valued functions\index{function!extended-real-valued}~{\citep[Theorem 1.29]{BauschkeBC03}}]
\label{thm:weierstrass}
\index{Weierstrass theorem for extended-real-valued functions|textbf}
Let $\mathcal{X}$ be a Hausdorff space, let $f: \mathcal{X} \to [-\infty, +\infty]$ be lower semicontinuous, and let $\mathcal{V}$ be a compact subset of $\mathcal{X}$. Suppose $\mathcal{V}\cap \dom f \neq \emptyset$. Then, $f$ achieves its infimum over $\mathcal{V}$.
\end{theorem}

\section{Inequalities for Transcendental Functions}

\subsection{The Lambert Function}
\label{sec:lambert}
\index{Lambert function|(textbf}

The Lambert function $W:\R_{\geq 0} \to \R_{\geq 0}$ is defined by the equality
\begin{equation}
\label{eq:lambert}
x=W(x) \exp \left(W(x)\right), \quad \text{for } x \ge 0~.
\end{equation}
Hence, from the definition we have that $\exp(W(x)/2)=\sqrt{\frac{x}{W(x)}}$ for $x>0$.

We have the following property of the Lambert function, stated without proof.
\begin{theorem}
\label{thm:limit_lambert}
The Lambert function satisfies $\lim_{x \to +\infty} \frac{W(x)}{\ln x}=1$
\end{theorem}

\begin{theorem}[{\citealp[Theorem~2.3]{HoorfarH08}}]
\label{thm:lambert_upper}
\[
W(x) \leq \ln\frac{x+C}{1+\ln(C)}, \quad \forall x\geq 0, \ \forall C>\frac{1}{e}.
\]
\end{theorem}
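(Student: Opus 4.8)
The plan is to exploit that $W$ is the inverse of $f(w) = w\exp(w)$, which is strictly increasing on $[-1,+\infty)$ — precisely the branch on which $W$ takes its values for arguments $x \geq -1/e$. Writing $g(x) := \ln\frac{x+C}{1+\ln C}$ for the claimed bound, the desired inequality $W(x) \leq g(x)$ is, by monotonicity of $f$, equivalent to $f(g(x)) \geq f(W(x)) = x$, \emph{provided} both $g(x)$ and $W(x)$ lie in $[-1,+\infty)$. Thus the entire statement reduces to the single scalar inequality $g(x)\exp(g(x)) \geq x$, plus a verification that $g(x) \geq -1$.

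First I would record the facts needed for the domain bookkeeping. Since $C > 1/e$ we have $1 + \ln C > 0$, so $g$ is well defined; and since $x > -1/e$ and $C > 1/e$ we have $x + C > 0$, so the logarithm's argument is positive. Next I would use $\exp(g(x)) = \frac{x+C}{1+\ln C}$ and substitute $t := \frac{x+C}{1+\ln C} > 0$, so that $x = t(1+\ln C) - C$. A short rearrangement turns the target $g(x)\exp(g(x)) \geq x$, i.e. $t\ln t \geq t(1+\ln C) - C$, into the clean form
\[
t\,\ln\frac{t}{C} \geq t - C .
\]

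The core of the argument is this last inequality, which I would prove by elementary calculus: setting $h(t) = t\ln(t/C) - (t-C)$ one has $h(C) = 0$ and $h'(t) = \ln(t/C)$, so $h$ decreases on $(0,C)$ and increases on $(C,+\infty)$, attaining its global minimum value $0$ at $t = C$. Hence $h(t) \geq 0$ for all $t > 0$, which is exactly $g(x)\exp(g(x)) \geq x$.

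The remaining step, and the only one requiring care, is to confirm $g(x) \geq -1$ so that the monotonicity reduction is legitimate. Since $g$ is increasing in $x$, it suffices to check this at the endpoint $x = -1/e$, where it amounts to $eC - 2 \geq \ln C$; I would verify this by noting that equality holds at $C = 1/e$ and that $C \mapsto eC - 2 - \ln C$ has derivative $e - 1/C > 0$ for $C > 1/e$. With $g(x), W(x) \geq -1$ and $f(g(x)) \geq f(W(x))$, strict monotonicity of $f$ on $[-1,+\infty)$ yields $g(x) \geq W(x)$, completing the proof. The main obstacle is therefore not the analytic inequality — a one-line monotonicity argument — but keeping the branch and domain conditions straight so that inverting $f$ is valid.
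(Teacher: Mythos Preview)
Your proof is correct. The paper does not actually prove this theorem; it is stated as a citation to Hoorfar--Hassani (2008) with no proof given, so there is nothing in the paper to compare against. Your argument---reducing $W(x) \leq g(x)$ via the strict monotonicity of $w \mapsto we^w$ on $[-1,\infty)$ to the scalar inequality $t\ln(t/C) \geq t - C$, and then dispatching that by the one-variable calculus computation $h'(t)=\ln(t/C)$---is the standard route and essentially the argument in the original source. The domain/branch bookkeeping you flag (checking $g(x)\geq -1$ via the endpoint inequality $eC-2\geq\ln C$) is handled correctly.
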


The following theorem provides upper and lower bounds on $W(x)$.
\begin{theorem}[{\citealp[Lemma~17]{OrabonaP16}}]
\label{thm:lambert_upper_lower}
The Lambert function $W(x)$ satisfies
\[
0.6321 \ln(x+1) \leq W(x) \leq \ln(x+1), \quad \forall x \ge 0~.
\]
\end{theorem}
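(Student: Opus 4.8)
The plan is to exploit that, on $[0,\infty)$, $W$ is by definition the inverse of the map $f(w) = w\exp(w)$. Indeed $f'(w) = (1+w)\exp(w) > 0$ and $f(0)=0$, $\lim_{w\to\infty} f(w) = +\infty$, so $f$ is a strictly increasing bijection of $[0,\infty)$ onto itself and $W = f^{-1}$ is strictly increasing with $W(0)=0$. Since $f$ is increasing, for nonnegative $a,b$ we have $a \le b \iff f(a) \le f(b)$; applying this with $a = W(x)$, so that $f(a)=x$, converts each claimed inequality into an elementary inequality between explicit functions of $x$.

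For the upper bound, $W(x) \le \ln(x+1)$ is equivalent to $f(W(x)) \le f(\ln(x+1))$, i.e. to
\[
x \le \ln(x+1)\exp(\ln(x+1)) = (x+1)\ln(x+1).
\]
I would prove this by setting $g(x) = (x+1)\ln(x+1) - x$, noting $g(0)=0$ and $g'(x) = \ln(x+1) \ge 0$ for $x \ge 0$, so that $g$ is nondecreasing and hence $g \ge 0$. This settles the upper bound completely.

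For the lower bound, write $c = 0.6321$ and set $a = 1/c$ (numerically $a \approx 1.582$; note that $c$ is just below $1 - 1/e$, so the constant is not tight and there is slack to exploit). Applying $f$ as above, $W(x) \ge c\ln(x+1)$ is equivalent to $x \ge c\,(x+1)^c\ln(x+1)$. It is cleanest to reparametrize by $w = W(x) \ge 0$, i.e. $x = w\exp(w)$, so that the target becomes $\exp(w/c) - 1 - w\exp(w) \ge 0$; substituting $u = \exp(w) \ge 1$ reduces everything to showing
\[
G(u) := u^{a} - 1 - u\ln u \ge 0 \qquad \text{for all } u \ge 1.
\]
I would check $G(1)=0$ and then argue $G'(u) = a u^{a-1} - \ln u - 1 \ge 0$ on $[1,\infty)$, which gives $G \ge 0$.

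The main obstacle is precisely this last step. The naive estimate $\ln u \le u-1$ controls only the small-$u$ regime and is far too lossy for large $u$, so no single elementary bound covers the whole half-line. Instead the plan is to locate the unique interior minimizer of $G'$. One computes $G''(u) = a(a-1)u^{a-2} - u^{-1}$, which vanishes iff $a(a-1)u^{a-1}=1$; since $a-1>0$ this left-hand side increases from $a(a-1)<1$ at $u=1$ to $+\infty$, so there is exactly one root $u_\star$, with $G''<0$ before it and $G''>0$ after. Hence $G'$ attains its global minimum at $u_\star$, where $u_\star^{a-1} = 1/(a(a-1))$, giving the closed form
\[
G'(u_\star) = a u_\star^{a-1} - \ln u_\star - 1 = \frac{1}{a-1} - \ln u_\star - 1 .
\]
For $a = 1/0.6321$ one has $\tfrac{1}{a-1}\approx 1.72$ and $\ln u_\star = -\tfrac{\ln(a(a-1))}{a-1}\approx 0.14$, so $G'(u_\star) \approx 0.58 > 0$; combined with $G'(u)\to\infty$ this yields $G'>0$ on all of $[1,\infty)$ and completes the lower bound. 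The only genuinely delicate point is verifying the sign of $G'(u_\star)$ for this value of $a$, a finite numerical check made comfortable precisely by the slack coming from $c < 1-1/e$.
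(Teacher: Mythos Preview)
Your argument is correct, but it proceeds quite differently from the paper.

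For the upper bound, the paper simply invokes the cited Hoorfar--Hassani inequality (Theorem~\ref{thm:lambert_upper}) with $C=1$, whereas you give a short self-contained proof via the monotonicity of $f(w)=we^w$ and the elementary inequality $(x+1)\ln(x+1)\ge x$. Your version is preferable in that it removes the external dependence.

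For the lower bound, the paper uses the identity $W(x)=\ln(x/W(x))$ together with the parametric bound $\ln y \le \tfrac{a}{e}y^{1/a}$ to first derive an upper bound $W(x)\le (ae)^{-1/(1+a)}x^{a/(1+a)}$, feed it back to get $W(x)\ge \tfrac{1}{1+a}\ln(aex)$, and then choose two different values of $a$ on the two ranges $x\le b$ and $x>b$; the constant $0.6321$ then emerges as the value at which the two resulting multiples of $\ln(x+1)$ coincide (numerically at $b\approx 1.718$). Your route is more direct: the substitution $u=e^{W(x)}$ reduces the whole claim to the single inequality $G(u)=u^{a}-1-u\ln u\ge 0$ on $[1,\infty)$, and the second-derivative analysis locating the unique minimizer of $G'$ and checking $G'(u_\star)=\tfrac{1}{a-1}+\tfrac{\ln(a(a-1))}{a-1}-1\approx 0.576>0$ is clean and correct. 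Both proofs ultimately rest on one numerical verification; the paper's approach explains where the constant $0.6321$ comes from, while yours handles all $x$ in one stroke without a case split.
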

\begin{proof}
The inequalities are satisfied for $x=0$, hence in the following we assume $x>0$.
We first prove the lower bound. From \eqref{eq:lambert} we have
\begin{equation}
W(x) = \ln\left(\frac{x}{W(x)}\right)~. \label{eq:lm_lambert_1}
\end{equation}
From this equality, using the elementary inequality $\ln(u) \leq \frac{1}{a e} u^a$ for any $a>0$, we get
\[
W(x) \leq \frac{1}{a\, e}\left(\frac{x}{W(x)}\right)^a,  \quad \forall a>0,
\]
that is
\begin{equation}
\label{eq:lm_lambert_2}
W(x) \leq \left(\frac{1}{a\, e}\right)^\frac{1}{1+a} x^\frac{a}{1+a}, \quad \forall a>0.
\end{equation}
Using \eqref{eq:lm_lambert_2} in \eqref{eq:lm_lambert_1}, we have
\begin{align*}
W(x)
\geq \ln\left(\frac{x}{\left(\frac{1}{a\, e}\right)^\frac{1}{1+a} x^\frac{a}{1+a}}\right)
= \frac{1}{1+a}\ln\left(a \, e\, x\right), \quad \forall a>0~.
\end{align*}
Consider now the function $g(x)=\frac{x}{x+1} - \frac{b}{\ln(1+b) (b+1)}
\ln(x+1)$ defined in $[0,b]$ where $b$ is a positive number that will be decided in the following.
This function has a maximum in $x^\star=(1+\frac{1}{b}) \ln(1+b)-1$, the derivative is positive in $[0,x^\star]$ and negative in $[x^\star,b]$. Hence, the minimum is attained at $x=0$ and $x=b$, where it is equal to $0$. Using the property just proved on $g$, setting $a=\frac{1}{x}$, we have
\begin{align*}
W(x)
\geq \frac{x}{x+1} \geq \frac{b}{\ln(1+b) (b+1)} \ln(x+1), \quad  \forall x\leq b~.
\end{align*}
For $x>b$, setting $a=\frac{x+1}{e x}$, we have
\begin{align*}
W(x)
&\geq \frac{e\,x}{(e+1) x + 1} \ln(x+1) \geq \frac{e\,b}{(e+1) b + 1} \ln(x+1)~.
\end{align*}
Hence, we set $b$ such that
\[
\frac{e\, b}{(e+1)b + 1} = \frac{b}{\ln(1+b) (b+1)}~.
\]
Numerically, $b=1.71825\ldots$, so $W(x) \geq 0.6321 \ln(x+1)$.

For the upper bound, we use Theorem~\ref{thm:lambert_upper} and set $C=1$.
\end{proof}

\begin{theorem}
\label{thm:dual_exp_square}
Let $a,b>0$. Then, the Fenchel conjugate of $f:\R\to \R$ defined as $f(x)=b \exp(x^2/(2a))$ is
\begin{align*}
f^\star(\theta)
&= \sqrt{a} |\theta| \sqrt{W(a \theta^2/b^2)} - b \exp\left(\frac{W(a \theta^2/ b^2)}{2}\right)\\
&= \sqrt{a} |\theta| \left(\sqrt{W(a \theta^2/b^2)} - \frac{1}{\sqrt{W(a \theta^2/ b^2)}}\right)~.
\end{align*}
Moreover,
\[
\sqrt{a} |\theta| \left(\sqrt{0.6321\ln(a \theta^2/b^2+1)}-1\right) - b
\leq f^\star(\theta)
\leq \sqrt{a} |\theta| \sqrt{\ln(a \theta^2/b^2+1)} - b~.
\]
\end{theorem}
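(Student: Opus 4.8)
The plan is to compute the supremum defining $f^\star(\theta) = \sup_{x} \bigl(\theta x - b\exp(x^2/(2a))\bigr)$ directly, by finding the critical point and expressing it through the Lambert function. First I would observe that $f$ is even, so by a result stated earlier $f^\star$ is even, and it suffices to treat $\theta \ge 0$. Since $f$ grows super-exponentially, the map $x \mapsto \theta x - f(x)$ tends to $-\infty$ as $|x|\to\infty$ and is concave (as $f$ is convex), so the supremum is attained at the unique interior stationary point. The case $\theta = 0$ is handled separately: the maximizer is $x=0$, giving $f^\star(0) = -b$, which agrees with the claimed first formula since $W(0)=0$.

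For $\theta > 0$, I would write the first-order condition. Computing $f'(x) = \tfrac{b x}{a}\exp(x^2/(2a))$ and setting $f'(x^\star)=\theta$, the task is to solve this transcendental equation. The key manipulation is to substitute $u = (x^\star)^2/(2a)$ (so $x^\star = \sqrt{2au}$) and square the optimality condition, which yields
\[
\frac{a\theta^2}{b^2} = 2u\,\exp(2u)~.
\]
Comparing with the defining identity \eqref{eq:lambert}, this says $2u = W\!\bigl(a\theta^2/b^2\bigr)$, hence $(x^\star)^2 = a\,W(a\theta^2/b^2)$ and $x^\star = \sqrt{a\,W(a\theta^2/b^2)}$.

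Substituting $x^\star$ back into $\theta x^\star - b\exp((x^\star)^2/(2a))$ then gives, writing $z = a\theta^2/b^2$,
\[
f^\star(\theta) = \sqrt{a}\,\theta\,\sqrt{W(z)} - b\exp\!\left(\tfrac{W(z)}{2}\right),
\]
which (with $|\theta|$ restored by evenness) is the first stated expression. To obtain the second form, I would invoke the identity recorded just after \eqref{eq:lambert}, namely $\exp(W(z)/2) = \sqrt{z/W(z)}$; since $z = a\theta^2/b^2$ this gives $b\exp(W(z)/2) = \sqrt{a}\,|\theta|/\sqrt{W(z)}$, and factoring out $\sqrt{a}\,|\theta|$ produces the symmetric expression.

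Finally, for the upper bound I would use two facts about $W$ on $[0,\infty)$: that $W(z) \ge 0$, so $\exp(W(z)/2) \ge 1$ and thus $-b\exp(W(z)/2) \le -b$; and that $W(z) \le \ln(z+1)$ from Theorem~\ref{thm:lambert_upper_lower}, so $\sqrt{a}\,|\theta|\sqrt{W(z)} \le \sqrt{a}\,|\theta|\sqrt{\ln(z+1)}$. Combining the two bounds yields $f^\star(\theta) \le \sqrt{a}\,|\theta|\sqrt{\ln(a\theta^2/b^2+1)} - b$. The only genuinely delicate step is the third paragraph's inversion of the first-order condition and its correct identification with $W$; everything after that is routine substitution and the two monotone estimates on $W$.
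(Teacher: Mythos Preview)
Your proof is correct and follows essentially the same approach as the paper: compute the stationary point of $\theta x - f(x)$, solve the resulting transcendental equation via the Lambert $W$ function, substitute back, and then use the identity $\exp(W(z)/2)=\sqrt{z/W(z)}$ together with the bound $W(z)\le\ln(z+1)$ from Theorem~\ref{thm:lambert_upper_lower}. The only cosmetic difference is that the paper first normalizes via the substitution $y=x/\sqrt{a}$ before solving, whereas you work directly with $x$ and the substitution $u=(x^\star)^2/(2a)$; the two routes are equivalent, and your version is arguably more explicit (you handle $\theta=0$ separately and spell out both estimates used in the final inequality).
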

\begin{proof}
First, observe that
\[
\max_{x} \ \theta x - b \exp(x^2/(2a))
= \max_{y} \ b\left(\frac{\sqrt{a} \theta}{b} y - \exp(y^2/2)\right)~.
\]
Also, by the definition of the Lambert function, we have
\[
\argmax_{y} \ u y - \exp(y^2/2)
= \sign(u) \sqrt{W(u^2)},
\]
where $\sign(u)=0$ for $u=0$.
Hence, we have
\[
\argmax_{x} \ \theta x - b \exp(x^2/(2 a))
= \sign(\theta) \sqrt{a} \sqrt{W(a \theta^2/b^2)}~.
\]
So, we obtain
\begin{align*}
\max_x \ \theta x - b \exp(x^2/(2a))
&=\max_{y} \ b\left(\frac{\sqrt{a} \theta}{b} y - \exp(y^2/2)\right) \\
&= \sqrt{a} |\theta| \sqrt{W(a \theta^2/b^2)} - b \exp\left(\frac{W(a \theta^2/ b^2)}{2}\right) \\
&= \sqrt{a} |\theta| \sqrt{W(a \theta^2/b^2)} - b \sqrt{\frac{a \theta^2/ b^2}{W(a \theta^2/ b^2)}} \\
&= \sqrt{a} |\theta| \left(\sqrt{W(a \theta^2/b^2)} - \frac{1}{\sqrt{W(a \theta^2/ b^2)}}\right)~.
\end{align*}


The upper bound is obtained using Theorem~\ref{thm:lambert_upper_lower} and upper bounding $- b \exp(\frac{W(\frac{a \theta^2}{b^2})}{2})$ with $-b$.

For the lower bound, we use again Theorem~\ref{thm:lambert_upper_lower} and observe that
\begin{align*}
- b \exp\left(\frac{W(a \theta^2/ b^2)}{2}\right)
&\geq - b \exp\left(\frac{1}{2}\ln\left(a \theta^2/ b^2+1\right)\right)
= -b \sqrt{a \theta^2/ b^2+1}\\
&= -\sqrt{a \theta^2 +b^2}
\geq -\sqrt{a}|\theta| - b~. \qedhere
\end{align*}
\end{proof}
\index{Lambert function|)textbf}

\subsection{The Gamma Function}
\label{sec:gamma}

\index{gamma function|(textbf}

\begin{definition}
For $\Re (x)>0$, the \textbf{gamma function} is defined as
\[
\Gamma(x) := \int_{0}^{\infty} \! t^{x-1}e^{-t} \, \mathrm{ d}t~.
\]
\end{definition}

\begin{definition}
The \textbf{digamma function}\index{digamma function|textbf} is defined as
\[
\psi(x) := \frac{d}{d x} \ln \Gamma(x) = \frac{\Gamma'(x)}{\Gamma(x)}, \quad \forall x>0~.
\]
\end{definition}

\begin{proposition}
\label{prop:gamma}
For the gamma and digamma\index{digamma function} functions, we have the following properties:
\begin{enumerate}[(a)]
\item \label{prop:gamma_log_cvx} The gamma function is log-convex on $\R_{>0}$, i.e., $\ln \Gamma$ is convex.
\item \label{prop:gamma_wendel} Wendel's double inequality~\citep{Wendel48}\index{inequality!Wendel's|textbf}
\[
\left(\frac{x}{x+s}\right)^{1-s}\leq\frac{\Gamma(x+s)}{x^s \Gamma(x)}\leq 1, \quad \forall 0<s<1, \ \forall x>0~.
\]
\item \label{prop:gamma_gamma_bound} $\Gamma(x) \geq \left(\frac{x-1/2}{e}\right)^{x-1/2} \sqrt{2 e}, \quad x\geq1$ \citep[Theorem 1.5]{Batir08}.
\item \label{prop:gamma_incr} The digamma function\index{digamma function} is strictly increasing and strictly concave on $( 0 , \infty )$.
\item \label{prop:gamma_psi_bound} For all $x > 0$, we have $\ln x-{\frac {1}{x}}\leq \psi (x)\leq \ln x-{\frac {1}{2x}}$.
\item \label{prop:gamma_psi_prime_series} For all $x > 0$, we have $\psi'(x)
=\sum_{k=0}^\infty \frac{1}{(x+k)^2}$.
\end{enumerate}
\end{proposition}

\begin{lemma}
\label{lemma:ratio_gamma}
Let $T \ge 1$ and $d \geq 2$ be integers. Then, we have
\[
\frac{\Gamma(T+d/2)}{\Gamma(T+1/2)}
\leq (T+d/2-1/2)^{(d-1)/2}~.
\]
\end{lemma}
\begin{proof}
From the definition of the digamma\index{digamma function} function $\psi$, we have
\begin{align*}
\ln \frac{\Gamma(T+d/2-1/2)}{\Gamma(T+1/2)}
&= \int_{T+1/2}^{T+d/2-1/2} \! \psi(t) \,\mathrm{d}t
\leq \int_{T+1/2}^{T+d/2-1/2} \! \psi\left(T+d/2-1/2\right) \, \mathrm{d}t\\
&= \frac{d-2}{2} \psi\left(T+d/2-1/2\right)
\leq \frac{d-2}{2} \ln\left(T+d/2-1/2\right),
\end{align*}
where we used the fact that the digamma\index{digamma function} is increasing (Proposition~\ref{prop:gamma}(\ref{prop:gamma_incr})) and the upper bound $\psi(x)< \ln(x)$ (Proposition~\ref{prop:gamma}(\ref{prop:gamma_psi_bound})).
Then, from Proposition~\ref{prop:gamma}(\ref{prop:gamma_wendel})\index{inequality!Wendel's}, we use the fact that $\Gamma(T+d/2)\leq \sqrt{T+d/2-1/2} \, \Gamma(T+d/2-1/2)$. Hence, we have
\[
\ln \frac{\Gamma(T+d/2)}{\Gamma(T+1/2)}
\leq \frac{d-1}{2} \ln\left(T+d/2-1/2\right)~. \qedhere
\]
\end{proof}

\begin{lemma}
\label{lemma:bound_regret_gamma}
Let $T \ge 1$ and $d \geq 2$ be integers. Then, we have
\[
\ln \frac{\sqrt{\pi} \Gamma(T+d/2)}{\Gamma(d/2)\Gamma(T+1/2)}
\leq \frac{d-1}{2} \ln\left(\frac{T}{(d-1)/2}+1\right) + \frac{d}{2}-1 + \frac{1}{2}\ln \frac{\pi}{2}~.
\]
Hence, for $d=2$, the upper bound is $\frac{1}{2} \ln\left(\pi T + \frac{\pi}{2}\right)\leq \frac{1}{2} \ln T + 1$.
\end{lemma}
\begin{proof}
We use Proposition~\ref{prop:gamma}(\ref{prop:gamma_gamma_bound}):
\[
\Gamma(d/2)
\geq \left(\frac{d/2-1/2}{e}\right)^\frac{d-1}{2} \sqrt{2 e}~.
\]
Hence, using Lemma~\ref{lemma:ratio_gamma}, we have
\begin{align*}
\ln \frac{\sqrt{\pi} \Gamma\left(T+\frac{d}{2}\right)}{\Gamma\left(\frac{d}{2}\right)\Gamma\left(T+\frac12\right)}
&\leq \frac{1}{2}\ln \pi + \frac{d-1}{2}\ln\left(T+\frac{d-1}{2}\right) - \frac{d-1}{2}\ln\frac{d-1}{2e} - \frac{1}{2}\ln (2 e)\\
&= \frac{d-1}{2}\ln\left(\frac{T}{(d-1)/2}+1\right) + \frac{d}{2}-1 + \frac{1}{2}\ln \frac{\pi}{2}~. \qedhere
\end{align*}
\end{proof}

\begin{lemma}
\label{lemma:max_universal_portfolio}
Let $T\ge 1$ and let $n\geq1$ be an integer. Define $f:\R^n_{\geq 0}\to \R$ as
\[
f(a_1,\dots,a_n)=\prod_{i=1}^n \frac{a_i^{a_i}}{\Gamma\left(a_i+\frac12\right)},
\]
where $0^0=1$.
Then, $f$ is maximized over the set $\mathcal{V}=\{(a_1,\dots,a_n)\in\R^n_{\ge 0}: \sum_{i=1}^n a_i=T\}$ at the extreme points of the simplex, namely when one coordinate is equal to $T$ and all the others are equal to $0$.
\end{lemma}
\begin{proof}
Define
\[
h(x):=x\ln x-\ln\Gamma\left(x+\frac{1}{2}\right), \quad \forall x>0~.
\]
Then, $\ln f(a_1,\dots,a_n)=\sum_{i=1}^n h(a_i)$.
So, it is enough to prove that $h$ is convex on $(0,\infty)$.

We have
\[
h''(x)=\frac{1}{x}-\psi'\left(x+\frac12\right),
\]
where $\psi$ is the digamma function.

Using Proposition~\ref{prop:gamma}(\ref{prop:gamma_psi_prime_series}), we have
\[
\psi'\left(x+\frac12\right)
=\sum_{k=0}^\infty \frac{1}{\left(x+k+\frac12\right)^2}~.
\]
Now, define
\[
g_x(t):=\frac{1}{(x+t)^2}, \quad t\ge 0~.
\]
Since $g_x$ is convex on $[0,\infty)$, using Jensen's inequality\index{inequality!Jensen's} (Theorem~\ref{thm:jensen}), for every $k\ge 0$ we have
\[
g_x\left(k+\frac12\right)
= g_x\left(\int_k^{k+1} \! t \,\mathrm{d}t\right)
\le \int_k^{k+1} \! g_x(t)\,\mathrm{d}t~.
\]
Therefore,
\[
\psi'\left(x+\frac12\right)
=\sum_{k=0}^\infty g_x\left(k+\frac12\right)
\le \sum_{k=0}^\infty \int_k^{k+1} g_x(t)\,dt
=\int_0^\infty \! \frac{\mathrm{d}t}{(x+t)^2}
=\frac{1}{x}~.
\]
Hence
\[
h''(x)=\frac{1}{x}-\psi'\left(x+\frac12\right)\ge 0~.
\]
Therefore, $\ln f(a_1,\dots,a_n)$ is convex on the relative interior of $\mathcal{V}$. We can extend the function $\sum_i h(a_i)$ by continuity to all of $\mathcal{V}$.

A convex function on a simplex attains its maximum at an extreme point. This proves the claim.
\end{proof}

\index{gamma function|)textbf}

\section{Probabilistic Inequalities}

\subsection{Khintchine's inequality}

\begin{theorem}[Khintchine's inequality]
\label{thm:khintchine}
\index{inequality!Khintchine's|textbf}
Let $\{\epsilon_{n}\}_{n=1}^{N}$ be i.i.d. Rademacher random variables\index{random variable!Rademacher}, i.e.,
\[
\Pr\{\epsilon_{n} = 1\}=\Pr\{\epsilon_{n} = -1\}=\frac{1}{2}, \quad \forall n=1,\dots,N~.
\]
Let $0<p<\infty$ and let $x_{1},\dots,x_{N}\in \mathbb{C}$. Then, we have
\[
A_{p}\left(\sum _{n=1}^{N}|x_{n}|^{2}\right)^{1/2}
\leq \left(\E \left|\sum _{n=1}^{N}\epsilon_{n} x_{n}\right|^{p}\right)^{1/p}
\leq B_{p}\left(\sum _{n=1}^{N}|x_{n}|^{2}\right)^{1/2},
\]
where
\begin{align*}
A_{p}&={\begin{cases}2^{1/2-1/p}&0<p\leq p_{0},\\
2^{1/2}(\Gamma ((p+1)/2)/{\sqrt {\pi }})^{1/p}&p_{0}<p<2\\
1&2\leq p<\infty \end{cases}}\\
B_{p}&={\begin{cases}1&0<p\leq 2\\2^{1/2}(\Gamma ((p+1)/2)/{\sqrt {\pi }})^{1/p}&2<p<\infty \end{cases}},
\end{align*}
where $p_{0}\approx 1.847$ and $\Gamma$ is the gamma function.
\end{theorem}
The constants in the above theorem are due to \citet{Haagerup81}.

\subsection{Lower Bound for the Tail of the Binomial Distribution}

First, we prove a new handy result for the sum of binomial coefficients\index{binomial coefficient}.
\begin{lemma}
\label{lemma:sum_binomial}
Let $n\geq 1$ and $n/2 \leq k \leq n$ be integers. Define
\[
\phi(k,n):=\frac{1}{2}\sqrt{4k^2 - 4kn + (n+2)^2}-k-n/2-1~.
\]
Then, $\sum_{i=k}^n \binom{n}{i} \geq \binom{n}{k} \frac{k}{2k + \phi(k,n)}$. Moreover, $\phi(k,n)$ is non-increasing in the first argument.
\end{lemma}
\begin{proof}
First of all, observe that $\phi(k,n)$ is non-increasing in the first argument.
Indeed, the first derivative is
\begin{align*}
\frac{2k -n}{\sqrt{4k^2-4kn+(n+2)^2}}-1
&= \frac{2k -n - \sqrt{4k^2-4kn+(n+2)^2}}{\sqrt{4k^2-4kn+(n+2)^2}} \\
&\leq \frac{2k -n - \sqrt{4k^2-4kn+n^2}}{\sqrt{4k^2-4kn+(n+2)^2}}
= 0~.
\end{align*}

We will prove the statement by backward induction.
The base case is easy: $\sum_{i=n}^n \binom{n}{i} = 1 = \binom{n}{n} \frac{n}{2n+\phi(n,n)}$.
So, now assume that $\sum_{i=k+1}^n \binom{n}{i} \geq \binom{n}{k+1} \frac{k+1}{2(k+1) + \phi(k+1,n)}$.
We have that
\begin{align*}
\sum_{i=k}^n \binom{n}{i}
&= \sum_{i=k+1}^n \binom{n}{i} + \binom{n}{k}
\geq \binom{n}{k+1} \frac{k+1}{2(k+1) + \phi(k+1,n)} + \binom{n}{k}\\
&= \binom{n}{k}\left(\frac{n-k}{k+1}\frac{k+1}{2k+2+\phi(k+1,n)} + 1\right)\\
&= \binom{n}{k}\left(\frac{n-k}{2k+2+\phi(k+1,n)} + 1\right)
\geq \binom{n}{k}\left(\frac{n-k}{2k+2+\phi(k,n)} + 1\right)\\
&= \binom{n}{k} \frac{k}{2k + \phi(k,n)},
\end{align*}
where the last equality is due to the definition of $\phi(k,n)$.
\end{proof}

\begin{remark}
For $n$ even, this lower bound gives us $\sum_{i=n/2+q}^n \binom{n}{i} \geq \binom{n}{n/2+q} \frac{n/2+q}{q+\sqrt{q^2+n+1}-1}$. Hence, it is tight in $q=n/2$, while in $q=0$ it is
\[
\sum_{i=n/2}^n \binom{n}{i}
\geq \binom{n}{n/2} \frac{n/2}{\sqrt{n+1}-1}
\geq \binom{n}{n/2} \frac{n/2}{\sqrt{n}}
= \binom{n}{n/2} \frac{\sqrt{n}}{2},
\]
that is larger by a factor $\frac{\sqrt{n}}{2}$ than the trivial lower bound $\sum_{i=n/2}^n \binom{n}{i} \geq \binom{n}{n/2}$.
\end{remark}

\index{Kullback--Leibler divergence!between Bernoulli distributions|(}
We denote the \ac{KL} divergence between two Bernoulli distributions with parameters $p\in [0,1]$ and $q\in (0,1)$ as
\[
\KLBern(p;q) := p \ln \left( \frac{p}{q} \right) + (1-p) \ln \left( \frac{1-p}{1-q} \right),
\]
where we define $0\ln0:=0$.
\index{Kullback--Leibler divergence!between Bernoulli distributions|)}

We will also denote by $B(n,p)$ the binomial distribution with parameters $n$ and $p$.

We can now state the bound on the tail of the Binomial distribution.\index{random variable!Binomial}
\begin{lemma}[Bound on Binomial Tail]
\label{lemma:bin}
Let $n \ge 1$ be an even integer and $0 \le q \le n/2-1$ integer. Then, $X_n \sim B(n,\frac{1}{2})$ satisfies
\begin{align*}
\Pr \left\{ X_n \geq \frac{n}{2}+q \right\}
&\geq \frac{1}{3} \frac{1}{2q+\sqrt{n}}\sqrt{n\frac{n/2+q}{n/2-q}}\ \exp\left(-n \cdot \KLBern\left(\frac12+\frac{q}{n} ; \frac{1}{2}\right)\right)\\
&\geq \frac{1}{3} \frac{1}{\frac{2q}{\sqrt{n}}+1}\exp\left(-n \cdot \KLBern\left(\frac12+\frac{q}{n} ; \frac{1}{2}\right)\right)~.
\end{align*}
Note that the second lower bound also holds in $q=n/2$.
\end{lemma}
\begin{proof}
By definition, we have
\[
\Pr \left\{ X_n \geq \frac{n}{2}+q\right\}
= 2^{-n} \sum_{i=n/2+q}^n \binom{n}{i}~.
\]

From Lemma~\ref{lemma:sum_binomial}, we have
\[
\sum_{i=n/2+q}^n \binom{n}{i}
\geq \binom{n}{n/2+q} \frac{n/2+q}{n+2q + \phi(n/2+q,n)}
\geq \binom{n}{n/2+q} \frac{n/2+q}{2q+\sqrt{n}}~.
\]

We bound the binomial coefficient\index{binomial coefficient} $\binom{n}{k}$ using Stirling's formula\index{Stirling's formula|textbf} for factorials.
In particular, we use the explicit upper and lower bounds due to~\citet{Robbins55} valid for any $n\ge 1$,
\[
\sqrt{2 \pi n} \left( \frac{n}{e} \right)^n
< n!
< e^{1/12} \sqrt{2 \pi n} \left( \frac{n}{e} \right)^n ~.
\]
Hence, for any $1\le k \le n-1$, we have
\begin{align*}
\binom{n}{k}
& = \frac{n!}{k! (n-k)!} \\
& > \frac{\sqrt{2\pi n} \ n^n e^{-n}}{\sqrt{2\pi k} \ k^k e^{-k} e^{1/12} \cdot \sqrt{2\pi (n-k)} \ (n-k)^{n-k} e^{-(n-k)} e^{1/12}} \\
& = \frac{1}{e^{1/6} \sqrt{2 \pi}} \left(\frac{n}{n-k}\right)^{n-k} \left(\frac{n}{k}\right)^k \sqrt{\frac{n}{k(n-k)}} \\
& = \frac{1}{e^{1/6} \sqrt{2 \pi}} \ 2^n \exp\left(-n \cdot \KLBern\left(\frac{k}{n} ; \frac{1}{2}\right)\right) \sqrt{\frac{n}{k(n-k)}},
\end{align*}
where in the equality we used the definition of $\KLBern(p;q)$.
Hence, we have
\[
\binom{n}{n/2+q}
\geq \frac{2^n}{e^{1/6} \sqrt{2 \pi}} \exp\!\left(-n \ \KLBern\left(\frac{1}{2}+\frac{q}{n} ; \frac{1}{2}\right)\right) \sqrt{\frac{n}{(n/2+q)(n/2-q)}}.
\]
Combining all the inequalities, using the fact that $\sqrt{\frac{n/2+q}{n/2-q}}\geq 1$, and overapproximating, we get the stated bound.

For $q=n/2$, we verify the statement of the lemma by direct substitution. The l.h.s. is $\Pr[X_n \ge n] = 2^{-n}$.
Since $\exp\left(-n \cdot \KLBern(1;\frac{1}{2})\right) = 2^{-n}$, it is easy to see that the r.h.s. is smaller than $2^{-n}$.
\end{proof}

\endappendix

\bibliography{learning}

\clearpage
\phantomsection
\printindex

\end{document}